\pdfoutput=1
\documentclass{article}

\usepackage{iclr2027_conference,times}
\iclrfinalcopy
\usepackage{etoolbox}

\usepackage[utf8]{inputenc}
\usepackage[T1]{fontenc}
\usepackage{amsmath,amssymb,amsthm}
\usepackage{graphicx}
\usepackage{booktabs}
\usepackage{xcolor}
\usepackage{enumitem,needspace}
\usepackage{hyperref}
\usepackage{url}
\usepackage{caption}
\usepackage{subcaption}
\usepackage{microtype}
\usepackage{longtable}
\usepackage{array,mathtools,multicol,placeins}
\definecolor{PaperLink}{HTML}{59616A}
\definecolor{PaperCite}{HTML}{235C59}
\definecolor{PaperExternal}{HTML}{254C6F}
\hypersetup{
  colorlinks=true,
  linkcolor=PaperLink,
  citecolor=PaperCite,
  urlcolor=PaperExternal,
  filecolor=PaperExternal,
  pdfborder={0 0 0},
  breaklinks=true,
  bookmarksnumbered=true,
  bookmarksopen=false,
  pdfdisplaydoctitle=true,
  pdftitle={Can Representation Learning Decouple from Loss Minimization? Polar Updates Have an Answer},
  pdfauthor={Akash Kumar}
}
\graphicspath{{figures/}{code/figs/}}

\newtheorem{theorem}{Theorem}
\newtheorem{proposition}[theorem]{Proposition}
\newtheorem{lemma}[theorem]{Lemma}
\newtheorem{corollary}[theorem]{Corollary}
\newtheorem{definition}[theorem]{Definition}
\newtheorem{remark}[theorem]{Remark}

\newtheorem{conjecture}[theorem]{Conjecture}

\newcommand{\R}{\mathbb{R}}
\newcommand{\E}{\mathbb{E}}
\newcommand{\sgn}{\operatorname{sgn}}
\newcommand{\Polar}{\operatorname{Polar}}

\newcommand{\Tr}{\operatorname{tr}}
\newcommand{\diag}{\operatorname{diag}}
\providecommand{\linspan}{\operatorname{span}}
\newcommand{\rank}{\operatorname{rank}}

\let\origeqref\eqref
\renewcommand{\eqref}[1]{Eq.~\origeqref{#1}}

\title{Can Representation Learning Decouple\\ from Loss Minimization?\\ Polar Updates Have an Answer}
\author{Akash Kumar\\
Department of Computer Science and Engineering\\
University of California San Diego}
\date{}

\begin{document}
\maketitle
\lhead{}
\renewcommand{\headrulewidth}{0pt}

\begin{abstract}
Does representation learning stop when the training loss stops improving? We study this question for matrix Muon, whose polar-normalised updates have a step length set by the gradient's rank rather than its norm. Near the edge of stability, full-batch Muon on teacher-student problems enters approximately period-$2$ loss oscillations that persist for thousands of steps: the cycle-mean loss stays flat or rises, yet the weights keep moving and the learned features continue to align with the teacher subspace. For linear teacher-student learning toys, we derive explicit cycle and alignment formulas and conditional plateau and decay bounds. For a population mean-field ReLU model, we prove that, under stated dimension, initialisation and small-head conditions, the leading eigenspace of the average gradient outer product (AGOP) recovers the teacher subspace exactly during a loss plateau, before the loss later drops. In all $33$ ReLU, GELU and SiLU teacher configurations we study, direction-only alignment metrics show the student AGOP aligned with, or still aligning to, the teacher subspace during the period-$2$ oscillations; projected head refitting on selected configurations shows that the learned directions are useful for prediction, and further measurements distinguish AGOP alignment from weight-mass concentration. In deep residual ReLU students, freezing the downstream layers while the first layer trains with full-batch exact polar updates recreates a nearly flat cycle-mean loss with improving input-AGOP alignment; freezing and unfreezing switch between this plateau and loss decrease, and the effect is sensitive to momentum and to the choice of orthogonaliser.
\end{abstract}

\section{Introduction}\label{sec:intro}

A training curve is the most common readout of learning, yet large-step training is rarely
monotone. Gradient descent near the edge of stability oscillates around sharp
directions~\citep{cohen2021eos,arora2022eos,li2022progressive,damian2023selfstabilization}, and
large-learning-rate stochastic gradient descent produces transient loss spikes, or
\emph{catapults}~\citep{lewkowycz2020catapult,zhu2024catapults}. Such curves raise a basic question
of interpretation: when the training loss stops improving, has the network also stopped learning
useful representations? The loss measures the error of the current predictor, whereas representation
(or feature) learning concerns the input directions on which that predictor depends. Nothing forces
the two to move together. Kernel alignment can precede appreciable loss reduction from small
initialisation~\citep{atanasov2022silent}, teacher directions can be recovered during oscillatory
training in a single-neuron model~\citep{chen2023twostep}, and catapults coincide with improved
alignment of the student's and the teacher's average gradient outer products
(AGOPs)~\citep{zhu2024catapults}, the matrices that summarise which input directions a predictor
depends on and that also underlie the Neural Feature
Ansatz~\citep{radhakrishnan2024science,beaglehole2024alignment}. In these works, however, the separation
is tied to early training, a single neuron or a loss spike. We ask \textit{whether representation learning can continue through a
\emph{sustained} plateau, thousands of steps during which the loss is flat or even rising, and what in
the training dynamics produces such a separation.}

We answer this question for \emph{polar updates} of matrix
Muon~\citep{jordan2024muon}. For a trainable hidden-weight matrix $W$, training loss $L$ and step
size $\eta>0$, the update is
\[
W_{t+1}=W_t-\eta\,\Polar(\nabla_W L(W_t)),
\qquad
\Polar(g)=AB^\top
\quad\text{for the compact SVD }g=A\Sigma B^\top.
\]
Polar normalisation sets every nonzero singular value of the gradient to one, so a full-rank
$p\times r_s$ gradient, with input dimension $p$ and hidden width $r_s$, gives a step of Frobenius
length $\eta\sqrt{M}$, $M=\min(p,r_s)$, whatever its norm. Near a loss floor the update therefore does
not shrink: the weights keep moving by a fixed amount. A fixed step length alone neither creates an
oscillation nor implies learning, but it makes polar updates a natural place to look for learning that
continues while the loss stalls.

\paragraph{Our answer.}
For polar updates the answer the posed questionis yes, and we establish it at three levels. In \emph{learning toys},
linear regression trained by normalised gradient descent or by matrix Muon, we prove that under
explicit phase and margin conditions the dominant residual approaches a nearly symmetric two-cycle
whose loss floor scales as $\eta^2$, while smaller components drain and teacher alignment can keep
improving; for matrix Muon we give exact weight cycles, their loss floors and teacher-alignment
formulas (Section~\ref{sec:toys}). In a \emph{population mean-field model} of a one-hidden-layer ReLU
network, we prove that, under stated dimension, initialisation and small-head conditions, the leading
AGOP eigenspace recovers the teacher subspace exactly while the loss stays within $1\%$ of its initial
value, and that the loss drops only later (Section~\ref{sec:1NN-mf}). In \emph{finite networks} we
measure the separation directly. In all $33$ ReLU, GELU and SiLU teacher configurations
(Table~\ref{tab:all33}), full-batch polar updates produce thousands of steps of approximately
period-$2$ loss oscillations, during which the student AGOP is aligned with, or keeps aligning to, the
teacher subspace (Figures~\ref{fig:1hl-main} and~\ref{fig:theory-vs-exp}); in Figure~\ref{fig:1hl-main}
the cycle-mean loss stays nearly constant or rises while the alignment grows, and
Section~\ref{sec:1NN-phenomenon} gives the per-configuration outcomes. The period-$2$ signature concerns the loss: consecutive loss increments have a
second-half seed-median correlation of $-1.0000$ in each of the $15$ configurations of our diagnostic
suite, while the weights keep moving, which is what allows the alignment to continue.

\paragraph{Measuring learning without the loss.}
Because the loss hides this progress, we measure the representation directly. We consider the \textit{principal angles}
between the leading student-AGOP eigenspace (average gradient outer product) and the teacher subspace to track the recovery of directions
separately from changes in scale. A no-oracle diagnostic, $L_{\mathrm{AGOP\text{-}opt}}$, asks whether
the recovered directions support a better predictor: it projects the hidden weights onto their own
leading AGOP eigenspace and refits the head, without using the teacher. On the six cleanest ReLU
configurations the cycle-mean training loss is about $5$--$11$ times this diagnostic over the last
$30\%$ of training (seed medians). Because the projection also changes the features, this shows that
the learned directions are useful, not that the head alone limits the loss. Separate measurements show
that AGOP alignment need not come with concentration of weight mass in the teacher subspace.

\paragraph{Where the separation comes from.}
The learning toys isolate a simple mechanism. With a fixed Hessian the plateau is a floor set by the
step size rather than by the problem: plain gradient descent with a stable step reaches zero loss on
the same objectives, whereas the normalised updates settle on nonzero cycles, and misalignment carried
by small-eigenvalue directions costs so little loss that it drains slowly while the loss hardly changes
(Section~\ref{sec:toys}). In deep residual ReLU students the same separation appears when the first
layer trains against a fixed downstream map: freezing every later layer while the first layer takes
exact polar steps keeps the cycle-mean loss nearly flat as input-AGOP alignment rises, and freezing or
unfreezing mid-run switches between this regime and loss reduction (Figure~\ref{fig:freeze-switch}).
The effect depends on the optimiser. Muon's quintic Newton--Schulz approximation and the tested
momentum settings disrupt it in deep students, while the small-head plateau of the one-hidden-layer
student is more robust to both (Appendix~\ref{app:fp-ns}).

\paragraph{Contributions.}
\begin{enumerate}[leftmargin=*]
\item \emph{Measurements and interventions:} we use principal-angle AGOP metrics, a
no-oracle head-refit diagnostic, ReLU weight-mass measurements, and freezing
interventions whose effect depends on momentum and the orthogonaliser
(Sections~\ref{sec:1NN-phenomenon} and~\ref{sec:deep-main};
Appendix~\ref{app:fp-scope}).
\item \emph{Learning toys with closed-form theory:} for normalised-GD linear
regression (M1/M2) and matrix-Muon multi-output regression (M3), we derive conditional
plateau and drain bounds, exact weight cycles, loss floors and
teacher-alignment formulas (Section~\ref{sec:toys}). Symmetric-cycle rates are
linearised references, not convergence guarantees from arbitrary starts.
\item \emph{A population mean-field separation:} we prove that, under stated dimension,
initialisation and small-head conditions, a Gaussian ReLU mean-field model
recovers the teacher subspace exactly in its leading AGOP eigenspace during a
$1\%$ loss plateau, followed by a quantified loss decrease
(Section~\ref{sec:1NN-mf}).
\item \emph{Structural and conditional results:} we present exact polar step-length and
update-mass identities, a plateau-height conjecture and a conditional
drift proposition whose assumptions remain unverified for our trajectories and
which does not derive the fitted exponent
(Sections~\ref{sec:1NN-identities} and~\ref{sec:1NN-claims}).
\end{enumerate}

\paragraph{Organisation.}
Section~\ref{sec:related} reviews related work and Section~\ref{sec:setup} sets up the models, the
optimiser and the alignment metrics. Section~\ref{sec:toys} treats the learning toys, and
Section~\ref{sec:1NN} the one-hidden-layer and deep residual networks, including the mean-field
theorem. Section~\ref{sec:discuss} discusses the results. Proofs, additional
experiments and the numerical certificates of the mean-field proof are in the appendices.

\section{Related work}\label{sec:related}
\paragraph{Feature learning and oscillatory dynamics.}
Silent alignment describes early NTK alignment before appreciable loss
reduction from small initialisation~\citep{atanasov2022silent}, with related
directional changes in ReLU gradient flow~\citep{boursier2025early}.
Grokking separates training-data fit from later
generalisation~\citep{kumar2024grokking}.
Edge-of-stability analyses study non-monotone GD
dynamics~\citep{cohen2021eos,li2022progressive,damian2023selfstabilization};
normalised GD admits quadratic two-cycles and, under additional assumptions,
slow sharpness-reducing drift~\citep{arora2022eos}.
\citet{chen2023twostep} prove teacher-direction recovery with convergence to an
oscillatory orbit in a single-neuron ReLU model under specified initialisation and
step-size conditions. Catapult experiments link loss spikes to improved
student--teacher AGOP alignment~\citep{zhu2024catapults}.
We examine AGOP alignment during sustained exact-polar loss plateaus;
our supporting mean-field theory uses distinct population and small-head
assumptions and does not assert period-two dynamics.

\paragraph{AGOP and weight geometry.}
Gradient-based dimension reduction has statistical
antecedents~\citep{hardle1989ade,trivedi2014egop}.
The Neural Feature Ansatz relates weight geometry to a predictor's own
AGOP~\citep{radhakrishnan2024science,beaglehole2024alignment}.
Exact linear-network identities and nonlinear counterexamples delimit this
relationship~\citep{tansley2026nfa}; FACT instead derives a weight-geometry
identity at stationary points with positive weight
decay~\citep{boixadsera2026fact}.
We measure alignment to the teacher subspace along the training trajectory.
Its separation from weight-mass concentration distinguishes directional
from spectral information, but does not itself contradict an identity
between weight geometry and the student's own AGOP.

\paragraph{Muon and spectral learning.}
Practical Muon orthogonalises momentum approximately~\citep{jordan2024muon};
we isolate its exact-polar, zero-momentum update.
Recent work studies balanced spectral acquisition~\citep{vasudeva2026spectral},
feature diversity in a stylised sequential model~\citep{tian2026li2},
and robustness and transfer of learned representations~\citep{ruan2026features}.
AMUSE includes an exact-polar quadratic cycle~\citep{kim2026amuse}, while
river-valley theory characterises rapid signal progress and local
overshoot~\citep{shen2026river}.
Our focus is the timing of AGOP alignment relative to loss reduction.
The polar map and its norm-constrained optimisation interpretation are
background~\citep{higham1986polar,gawlik2017frechet,pethick2025scion};
fixed step length alone does not establish a cycle.

\section{Problem setup}\label{sec:setup}

\paragraph{Conventions.}
Expectations use $\E[Z]$ or $\E[Z\mid B]$, with variable/law subscripts.
Vector $\|\cdot\|$ is Euclidean; matrix $\|\cdot\|_{\mathrm{op}},\|\cdot\|_F$ are spectral/Frobenius norms.
$\Tr,\rank,\linspan,\diag$ mean trace, rank, column span and diagonal matrix;
$I_k$ is the identity and $e_k$ a coordinate vector. $P_S$ projects orthogonally onto $S$,
$P_S^\perp=I-P_S$; $A\succ B$ means $A-B$ positive definite.
Singular values $\sigma_i$ decrease; $\cos^2(v,w)=(v^\top w)^2/(\|v\|^2\|w\|^2)$ for $v,w\ne0$.

We study learning toys (linear regression trained by (normalized) gradient descent: \textbf{M1}, $p=2$;
\textbf{M2}, general $p$; or matrix Muon: \textbf{M3}, multi-output) and a 1-HL
teacher--student network (\textbf{M4}). The toys isolate a fixed-Hessian mechanism;
M4 adds an activation-dependent Hessian. Throughout, $\eta$ is fixed near the edge of
stability, $p$ is the input dimension, $r_t$ the teacher rank, $r_s$ the student width, and
\begin{equation}
M := \min(p,r_s)
\label{eq:M-defn}
\end{equation}
is the rank scale that sets the Frobenius length of a full-rank matrix-Muon step.

\subsection{Data and teacher--student model}\label{sec:data}

For M4, the $n$ training inputs are i.i.d.\ Gaussian, $x_i\sim\mathcal N(0,I_p)$,
with labels $y_i=f^\star(x_i)$ and $y=(y_1,\ldots,y_n)^\top$. The
teacher is a one-hidden-layer network
\begin{equation}
f^\star(x) = \sum_{j=1}^{r_t}\sigma(u_j^\top x),
\qquad
U=[u_1,\ldots,u_{r_t}]\in\mathbb R^{p\times r_t},
\qquad
U^\top U=I_{r_t},
\label{eq:teacher}
\end{equation}
with teacher subspace $V:=\linspan(U)$, $P_V:=UU^\top$, $P_\perp:=I_p-P_V$,
and $\sigma\in\{\mathrm{ReLU},\mathrm{GELU},\mathrm{SiLU}\}$.
The student is a 1-HL (hidden-layer) ReLU network with trainable hidden weights
$W=[w_1|\cdots|w_{r_s}]\in\mathbb R^{p\times r_s}$ and \emph{fixed} head
$a_s=p^{-1/2}\mathbf 1_{r_s}$:
\begin{equation}
f_W(x) = \sum_{k=1}^{r_s} a_{s,k}\,\mathrm{ReLU}(w_k^\top x),
\qquad
L_{\mathrm{train}}(W) = \tfrac1n\sum_{i=1}^n\bigl(f_W(x_i)-f^\star(x_i)\bigr)^2.
\label{eq:student-loss}
\end{equation}
Freezing $a_s$ forces all representation changes through $W$. We use
full-batch gradients throughout, and write $\mathcal L:=L_{\mathrm{train}}$ in M4.

\subsection{Optimiser: matrix Muon}\label{sec:optim}

Matrix Muon updates the hidden weights by the polar factor of the
gradient:
\begin{equation}
W_{t+1} = W_t-\eta\,\Polar(\nabla_W L(W_t)),
\qquad
\Polar(g):=AB^\top \text{ for the compact SVD } g=A\Sigma B^\top
\label{eq:muon}
\end{equation}
($\Polar(g)=g(g^\top g)^{-1/2}$ when $g$ has full column rank). The
practical optimiser approximates this via Newton--Schulz iterations; we
analyse the exact polar update, with $\Polar(0):=0$. Its M4 iteration map is
\begin{equation}
\Phi(W):=W-\eta\,\Polar(\nabla\mathcal L(W)).
\label{eq:Phi-def}
\end{equation}
For $g:=\nabla\mathcal L(W)$ of rank $M$, let $\ell_1,\ldots,\ell_M$ be its left singular vectors.
The leakage quantities are $\varepsilon_V^2:=\sum_{i\le r_t}\|P_\perp\ell_i\|^2$ and
$\varepsilon_\perp^2:=\sum_{i=r_t+1}^{M}\|P_V\ell_i\|^2$ (assuming $r_t\le M$).
Two properties drive the analysis. All
nonzero singular values of $\Polar(g)$ are one, so
\begin{equation}
\|\Polar(g)\|_F^2=\rank(g),
\qquad
\|W_{t+1}-W_t\|_F=\eta\sqrt{M}
\text{ when }\rank(\nabla_W L(W_t))=M,
\label{eq:polar-frob}
\end{equation}
i.e.\ step length is independent of the gradient norm. And $\Polar$ is
left/right orthogonal-equivariant: $\Polar(QgR^\top)=Q\,\Polar(g)\,R^\top$
for orthogonal $Q,R$.
When $r_s=1$ the polar factor reduces to vector normalization
$\Polar(g)=g/\|g\|_2$, and matrix Muon becomes NGD (normalized gradient descent),
\begin{equation}
\theta_{t+1} = \theta_t-\eta\,\nabla L(\theta_t)/\|\nabla L(\theta_t)\|_2,
\label{eq:ngd}
\end{equation}
which is the update used in M1 and M2.

\subsection{Alignment metrics for M4}\label{sec:metrics}

We measure feature alignment through the student AGOP
\begin{equation}
G_s(W) := \tfrac1n\sum_{i=1}^n \nabla_x f_W(x_i)\nabla_x f_W(x_i)^\top
\in\mathbb R^{p\times p}.
\label{eq:student-agop}
\end{equation}
Let $\bar V_k=[v_1,\ldots,v_k]$ collect the top $k$ orthonormal eigenvectors
of $G_s(W)$, and let $\theta_1,\ldots,\theta_{r_t}$ be the principal
angles, in increasing order, between $\linspan(\bar V_{r_t})$ and $V$ ($\cos\theta_i =
\sigma_i(\bar V_{r_t}^\top U)$). We use the direction-only summaries
\begin{equation}
\overline{\cos^2}(W) = \tfrac1{r_t}\sum_{i=1}^{r_t}\cos^2\theta_i,
\qquad
\cos^2_{\min}(W) = \min_{i\le r_t}\cos^2\theta_i ,
\label{eq:cos-metrics}
\end{equation}
which separate subspace recovery from AGOP spectrum, and the weight
Frobenius mass-in-$V$
\begin{equation}
A_{\mathrm{sub}}(W) = \|P_VW\|_F^2 / \|W\|_F^2.
\label{eq:Asub}
\end{equation}
Our no-oracle head-refit diagnostic projects $W$ onto its top-$\tilde r$ AGOP eigenspace:
$\widehat W:=\bar V_{\tilde r}\bar V_{\tilde r}^\top W$, with the rank rule in
Appendix~\ref{app:additional}. For projected features
$\Psi_{ik}:=a_{s,k}\mathrm{ReLU}(\widehat w_k^\top x_i)$ and
$a^\star:=\arg\min_a\|\Psi a-y\|_2^2$, define
\begin{equation}
L_{\mathrm{AGOP\text{-}opt}}(W) := \tfrac1n\|\Psi a^\star-y\|_2^2.
\label{eq:LAGOPopt}
\end{equation}
This minimizes training loss over heads on the student's AGOP-projected features;
the oracle $L_{V,\mathrm{opt}}$ replaces $\bar V_{\tilde r}$ by $U$.
For a loss sequence $L_t$, its cycle mean is $(L_t+L_{t+1})/2$;
$\Delta L_t:=L_{t+1}-L_t$ and $\rho_2:=-\mathrm{Corr}(\Delta L_t,\Delta L_{t+1})$,
with Pearson correlation over the stated time window.

For the conditional drift result, set $A_t:=\cos^2_{\min}(W_t)$,
$q:=r_t/M$, $e_k:=1-A_{2k}$ and
$v_k:=(A_{2k+2}-A_{2k})/2$; $k$ indexes pairs of steps, so $v_k$
is the alignment gain per optimisation step along the even subsequence.
The symbol $\theta$ collects the remaining fixed family parameters,
including the learning-rate rule.
We use $\nu>0$ for the expansion scale $q^\nu$ and an integer $j\ge1$
for its first active dissipative order, with local drift exponent
$\alpha=\nu j$ (distinct from the mean-field head scale denoted by
$\alpha$ below).
For positive quantities, $f\asymp_\theta g$ means
$c_\theta g\le f\le C_\theta g$ for positive constants depending only
on $\theta$, uniformly over the stated parameter range and trajectory window.

\subsection{Learning toys}\label{sec:toy-setup}

The learning toys keep samples, a teacher and the empirical MSE, but use a
linear student, so the Hessian is the fixed data matrix $2\hat\Sigma$. Draw
$x_1,\ldots,x_n\overset{\text{iid}}{\sim}\mathcal N(0,\Sigma_x)$ with
$\Sigma_x\succ0$ and $n\ge p\ge2$, let $X\in\R^{n\times p}$ have rows $x_i^\top$, and let
$\hat\Sigma:=X^\top X/n=Q\,\diag(\mu_1,\ldots,\mu_p)\,Q^\top$ with
$\mu_1>\mu_2>\cdots>\mu_p>0$ (almost surely) and eigenvectors $q_k=Qe_k$.

\paragraph{M2 (M1 for $p=2$): NGD on linear regression.}
Fix $\eta>0$. Teacher $u\in\R^p\setminus\{0\}$, labels $y_i=u^\top x_i$, student $f_w(x)=w^\top x$,
trained by NGD (cf.~\eqref{eq:ngd}) on
\begin{equation}
\hat{\mathcal L}(w)=\tfrac1n\sum_{i=1}^n\bigl(w^\top x_i-y_i\bigr)^2
=(w-u)^\top\hat\Sigma(w-u)=\sum_{k=1}^p\mu_kz_k^2,
\qquad z:=Q^\top(w-u).
\label{eq:lin-loss}
\end{equation}
We track the teacher alignment $\cos^2(w_t,u)$ -- the input AGOP of a
linear student is $G_s=ww^\top$, so this is the $r_t=1$ case of
$\cos^2_{\min}$ in~\eqref{eq:cos-metrics} -- and the residual alignment
$\cos^2(w_t-u,q_1)$; the remaining toy notation is in Appendix~\ref{app:lin-proofs}.

\paragraph{M3: matrix Muon on multi-output regression.}
Teacher $U_\star\in\R^{p\times r}$ with orthonormal columns, labels
$y_i=U_\star^\top x_i\in\R^r$, student $f_W(x)=W^\top x$ with
$W\in\R^{p\times r}$ with $1\le r<p$ (so $M=r$), trained by~\eqref{eq:muon} on
$\hat{\mathcal L}(W)=\tfrac1n\|XW-XU_\star\|_F^2=\Tr(R^\top\hat\Sigma R)$,
$R:=W-U_\star$. The data have a planted block,
$\Sigma_x=\lambda_DP_D+\lambda_\perp P_D^\perp$ with $\dim D=r$ and
$\lambda_D\gg\lambda_\perp$. The AGOP of $f_W$ (summed over outputs) is
$WW^\top$, so the M4 metric becomes
$\cos^2_{\min}(\mathrm{range}\,W_t,\mathrm{range}\,U_\star)$. We place the
teacher either in the high-variance block, (A) $\mathrm{range}\,U_\star=D$,
or adversarially in the low-variance directions, (B)
$\mathrm{range}\,U_\star\subset D^\perp$ (requiring $2r\le p$). These are population blocks; their empirical counterparts need not coincide at finite $n$.
Write $\hat D$ for the top-$r$ eigenspace of $\hat\Sigma$,
$R_{\hat D}:=P_{\hat D}R$ and $R_\perp:=(I-P_{\hat D})R$.

\begin{figure}[t]
\centering
\makebox[\linewidth][c]{\includegraphics[trim=0 5bp 0 0,clip]{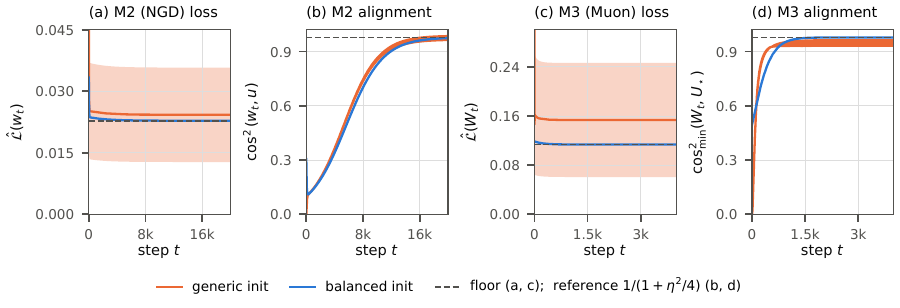}}%
\caption{\textbf{Learning toys: nearly flat cycle-mean loss with improving teacher alignment.}
(a,b) M2: NGD; (c,d) M3: matrix Muon; $\eta=0.3$, low-variance teachers.
Blue: balanced leading components with nonzero off-mode residuals; orange: generic starts.
Bands show adjacent-step loss ranges; dashed lines give symmetric-cycle references.
These finite-sample illustrations do not verify all theorem hypotheses.
Settings and endpoints: Appendix~\ref{app:lin-verify}; optimiser comparison: Figure~\ref{fig:lin-optimizer}.}
\label{fig:lin-main}
\end{figure}

\subsection{Population mean-field model}\label{sec:variants}
We use the Gaussian inputs and teacher directions of Section~\ref{sec:data},
set $r=r_t$, and take $\sigma=\mathrm{ReLU}$ in~\eqref{eq:teacher}.
For a fresh input $X$, let $Y=\gamma f^\star(X)$, where $\gamma>0$ makes
$\E[Y^2]=1$, and write $\sigma_Y:=\sqrt{\mathrm{Var}(Y)}$.
The student in~\eqref{eq:student-loss} now uses fixed head
$a_{s,j}=\alpha/r_s$, $\alpha>0$, and an added intercept $b$ refitted at every
step so that $\E[f_W(X)]=\E[Y]$.
It takes the polar update~\eqref{eq:muon} on the population loss
$L(W):=\E[(Y-f_W(X))^2]$, starting from
$w_j(0)\overset{\mathrm{iid}}{\sim}\nu_0:=\mathcal N(0,\sigma_0^2I_p/p)$,
with $\sigma_0,\bar\eta>0$, $\eta=\bar\eta\sqrt{r_s}$ and
$\kappa:=\sigma_0/(\bar\eta\sqrt p)$.

At infinite width $r_s\to\infty$ with $\alpha,\bar\eta,\sigma_0$ fixed,
the \emph{mean-field model} describes the hidden weights by a probability
law $\nu_t$. Starting from $\nu_0$, it follows the distributional polar
update in Appendix~\ref{app:mf}, with output
$f_t(x)=b_t+\alpha\E_{w\sim\nu_t}[\mathrm{ReLU}(w^\top x)]$,
intercept refitted as above, and loss $L(t):=\E[(Y-f_t(X))^2]$.
Let $G_s(t)$ be the population version of~\eqref{eq:student-agop} for $f_t$,
with decreasing eigenvalues $\lambda_i(t)$ and any top-$r$ spectral
projector $P_r(t)$. Then $A_r(t):=r^{-1}\Tr(P_VP_r(t))$ is the population
analogue of $\overline{\cos^2}$ in~\eqref{eq:cos-metrics}.

\paragraph{Deep residual students.}
For depth $L\ge2$ and width $m$, the bias-free residual ReLU student is
\[
\begin{aligned}
 (2\le\ell\le L),\,\, h_1&=\mathrm{ReLU}(x^\top W_1),\,\, h_\ell = h_{\ell-1}+\frac{\mathrm{ReLU}(h_{\ell-1}W_\ell)}{L-1},\quad f_W(x)=h_La.
\end{aligned}
\]
Here $W_1\in\R^{p\times m}$, $W_\ell\in\R^{m\times m}$ for $\ell\ge2$, and the head
$a=(c_a/\sqrt p)\mathbf1_m$ is frozen, with $c_a>0$.
Every unfrozen hidden matrix takes the full-batch update
$W_\ell\leftarrow W_\ell-\eta_\ell\Polar(\nabla_{W_\ell}\mathcal L)$, where
$\eta_\ell=\eta_{\mathrm{deep}}$ for $\ell\ge2$ and $\mathcal L$ is the squared training loss.
These runs use all thin-SVD columns in the polar factor; Appendix~\ref{app:deep} gives
that convention and the settings. Input-AGOP alignment uses~\eqref{eq:cos-metrics};
Appendices~\ref{app:freeze}--\ref{app:freeze-principle} specify the freezing interventions.

\section{Learning toys with closed-form theorems}\label{sec:toys}

The toys' empirical losses are shifted quadratics. Appendix~\ref{app:lin-proofs} gives
proofs, exact residual recursions (Lemma~\ref{lem:lin-reduction}) and the period-$2$
theory with teacher alignment; Appendix~\ref{app:lin-verify} gives numerical checks.

\paragraph{Scalar regression (M2).}
The scalar normalised map has a continuum of period-$2$ cycles $\{a,a-\eta\}$, so the M2 theorem is
stated from a phase-selected handoff time $T_1$. Write $z_{t,\ge2}:=(z_{t,2},\ldots,z_{t,p})$,
$\rho_t^2:=\sum_{k\ge2}\mu_k^2z_{t,k}^2/(\mu_1\eta)^2$ and $q:=\max_{k\ge2}|1-2\mu_k/\mu_1|<1$, and fix
$C_H>0$. With a constant $c_\star>0$ that depends only on $C_H$ and the ratios $\mu_k/\mu_1$, the
hypotheses are phase selection and a contraction margin,
\begin{equation}
\textnormal{(H)}\ \ \bigl|z_{T_1,1}-\tfrac{\eta}{2}\bigr|\le C_H\eta\,\varepsilon_{\mathrm{phase}},
\qquad
\textnormal{(C)}\ \ \varepsilon_{\mathrm{phase}}+\rho_{T_1}^2+\rho_t^2\le c_\star(1-q)\quad(t\ge T_1),
\label{eq:M2-hyp-main}
\end{equation}
for some $\varepsilon_{\mathrm{phase}}\ge0$. These are the hypotheses of Lemma~\ref{thm:single}
(Appendix~\ref{app:lin-ngd}) with $\lambda_k=\mu_k$, and the $O(\cdot)$ constants below depend only on
$C_H$ and the ratios $\mu_k/\mu_1$.

\begin{theorem}[M2: loss plateau with teacher alignment]\label{thm:M2prime}
Assume (H) and (C) in~\eqref{eq:M2-hyp-main}. Then for all $t\ge T_1$:
\begin{enumerate}[leftmargin=*, label=\textnormal{(\roman*)}]
\item \emph{Cycle:} $z_{t,1}=(-1)^{t-T_1}\frac{\eta}{2}+\xi_\infty+O(\eta\rho_t^2)$
for a constant $\xi_\infty=O\bigl(\eta(\varepsilon_{\mathrm{phase}}+\rho_{T_1}^2)\bigr)$.
\item \emph{Plateau:}
$\hat{\mathcal L}(w_t)=\frac{\mu_1\eta^2}{4}+\sum_{k\ge2}\mu_kz_{t,k}^2
+O\bigl(\mu_1\eta^2(\varepsilon_{\mathrm{phase}}+\rho_{T_1}^2+\rho_t^2)\bigr)$.
\item \emph{Drain:} for $k\ge2$,
$z_{t+1,k}=\bigl(1-\frac{2\mu_k}{\mu_1}\bigr)z_{t,k}
+O\bigl(\frac{\mu_k}{\mu_1}(\varepsilon_{\mathrm{phase}}+\rho_{T_1}^2+\rho_t^2)|z_{t,k}|\bigr)$;
thus $\|z_{t,\ge2}\|$ decays geometrically with some factor $\bar q\in(q,1)$; $q$ is the symmetric-cycle linearised reference.
\item \emph{Teacher alignment:} with the cycle point
$w_t^{\mathrm{cyc}}:=u+z_{t,1}q_1$ (assumed nonzero, as is $w_t$),
\begin{equation}
\bigl|\cos^2(w_t,u)-\cos^2(w_t^{\mathrm{cyc}},u)\bigr|
\;\le\;\|z_{t,\ge2}\|/\|w_t^{\mathrm{cyc}}\|,
\label{eq:M2p-align}
\end{equation}
where $\cos^2(w_t^{\mathrm{cyc}},u)=1$ if $u\parallel q_1$, and
$\cos^2(w_t^{\mathrm{cyc}},u)=\|u\|^2/(\|u\|^2+z_{t,1}^2)$ if $u\perp q_1$,
which for a unit teacher is the \emph{cycle cap} $(1+\eta^2/4)^{-1}$ up to
$O(\eta|\xi_\infty|+\eta^2\rho_t^2)$.
\end{enumerate}
\end{theorem}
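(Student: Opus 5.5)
The plan is to reduce everything to the NGD recursion in the eigen-coordinates $z=Q^\top(w-u)$ and then read off items (i)--(iii) from Lemma~\ref{thm:single}, which the text states has exactly the hypotheses (H) and (C) with $\lambda_k=\mu_k$. Since $\nabla\hat{\mathcal L}(w)=2\hat\Sigma(w-u)$, the exact residual recursion (Lemma~\ref{lem:lin-reduction}) is
\[
z_{t+1,k}=z_{t,k}-\eta\,\frac{\mu_kz_{t,k}}{\bigl(\sum_j\mu_j^2z_{t,j}^2\bigr)^{1/2}} .
\]
I would first write the normaliser as $\mu_1|z_{t,1}|\sqrt{1+\rho_t^2\eta^2/z_{t,1}^2}$. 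As long as $|z_{t,1}|$ stays within a constant factor of $\eta/2$, which (H) gives at $T_1$ and an induction maintains, this equals $\mu_1|z_{t,1}|(1+O(\rho_t^2))$.

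For the leading coordinate this gives $z_{t+1,1}=z_{t,1}-\eta\,\sgn(z_{t,1})(1+O(\rho_t^2))$. The unperturbed scalar map has the two-cycle $\{a,a-\eta\}$, so I set $z_{t,1}=(-1)^{t-T_1}\eta/2+\xi_t$ and show that $\xi_t$ moves by $O(\eta\rho_t^2)$ per step. Summing these increments, using the geometric decay of $\rho_t$ from the drain step below, shows that $\xi_t$ converges to a limit $\xi_\infty=O(\eta(\varepsilon_{\rm phase}+\rho_{T_1}^2))$ with $|\xi_t-\xi_\infty|=O(\eta\rho_t^2)$. This is (i).

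For $k\ge2$ the recursion reads $z_{t+1,k}=z_{t,k}\bigl(1-\eta\mu_k/(\mu_1|z_{t,1}|(1+O(\rho_t^2)))\bigr)$. Substituting $|z_{t,1}|=\eta/2\,(1+O(\varepsilon_{\rm phase}+\rho_{T_1}^2+\rho_t^2))$ gives the factor $1-2\mu_k/\mu_1$ plus the stated $O(\mu_k/\mu_1\cdot(\cdots))$ error, which is (iii). By (C) with $c_\star$ small, every factor is bounded in modulus by $\bar q:=q+C c_\star(1-q)<1$, so $\|z_{t,\ge2}\|$ decays geometrically. Since $\rho_t^2$ is a weighted sum of the $z_{t,k}^2$, $\rho_t$ decays as well, which closes the induction used for (i).

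For (ii) I would expand the exact loss $\hat{\mathcal L}=\mu_1z_{t,1}^2+\sum_{k\ge2}\mu_kz_{t,k}^2$ and insert (i). This gives $\mu_1z_{t,1}^2=\mu_1\eta^2/4+O(\mu_1\eta|\xi_t|+\mu_1|\xi_t|^2)$, and the error bounds on $\xi_t$ convert this to the stated $O(\mu_1\eta^2(\cdots))$.

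For (iv) I would write $w_t=w_t^{\rm cyc}+\delta$, where $\delta=\sum_{k\ge2}z_{t,k}q_k$ and $\|\delta\|=\|z_{t,\ge2}\|$. The map $w\mapsto\cos^2(w,u)$ is the squared cosine of the angle between $w$ and $u$. That angle moves by at most $\arcsin(\|\delta\|/\|w^{\rm cyc}\|)$ when $w$ moves by $\delta$, and $\cos^2$ is $1$-Lipschitz in the angle, since its derivative is $\sin2\phi$. This gives the bound, with the degenerate case $\|\delta\|\ge\|w^{\rm cyc}\|$ trivial because $\cos^2\in[0,1]$. The two explicit cases are direct computations with $w^{\rm cyc}=u+z_{t,1}q_1$, using Pythagoras when $u\perp q_1$. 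Inserting $z_{t,1}^2=\eta^2/4+O(\eta|\xi_\infty|+\eta^2\rho_t^2)$ then yields the cycle cap up to the stated error.

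The main obstacle is the bootstrapping between (i) and (iii). The leading-coordinate drift needs $\rho_t$ to be small and summable, while the drain needs $|z_{t,1}|\approx\eta/2$. I would handle this with a joint induction on $t$, with the hypothesis $|\xi_t|\le C\eta(\varepsilon_{\rm phase}+\rho_{T_1}^2)$ and $\rho_t\le\bar q^{\,t-T_1}\rho_{T_1}$, choosing $c_\star$ small enough that the constants close. This is exactly what Lemma~\ref{thm:single} provides, so I would invoke it rather than redo the induction.
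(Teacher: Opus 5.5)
Your treatment of (i)--(iii) matches the paper. Both transport Lemma~\ref{thm:single} through the exact reduction of Lemma~\ref{lem:lin-reduction}, and your direct expansion of $\mu_1z_{t,1}^2$ for (ii) is the lemma's Step~D multiplied by $2$.

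\textbf{The gap is in (iv).} Your argument uses two facts: $\cos^2$ is $1$-Lipschitz in the angle, and moving $w_t^{\mathrm{cyc}}$ by $\delta$ turns its direction by at most $\arcsin x$, where $x:=\|z_{t,\ge2}\|/\|w_t^{\mathrm{cyc}}\|$. Together these prove only
$|\cos^2(w_t,u)-\cos^2(w_t^{\mathrm{cyc}},u)|\le\arcsin x$.
Since $\arcsin x>x$ for every $x\in(0,1]$, this is strictly weaker than \eqref{eq:M2p-align}. The two bounds agree only to leading order as $x\to0$. At $x=0.9$ your bound exceeds $1$ and is vacuous, while the stated one is not. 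The loss comes from bounding the change by the angle itself rather than by its sine.

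\textbf{How to fix it: keep the sine.} Write $A:=\angle(w_t,u)$ and $B:=\angle(w_t^{\mathrm{cyc}},u)$. The identity $\cos^2A-\cos^2B=\sin(A+B)\sin(B-A)$ gives
$|\cos^2A-\cos^2B|\le|\sin(A-B)|\le\sin\angle(w_t,w_t^{\mathrm{cyc}})$.
The last step holds because $|A-B|\le\angle(w_t,w_t^{\mathrm{cyc}})\le\arcsin x<\pi/2$ when $x<1$; the case $x\ge1$ is trivial, as you note.

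The paper reaches the same quantity without a case split. For nonzero $a,b$,
$|\cos^2(a,u)-\cos^2(b,u)|=|\hat u^\top(\hat a\hat a^\top-\hat b\hat b^\top)\hat u|\le\|\hat a\hat a^\top-\hat b\hat b^\top\|_{\mathrm{op}}=\sin\angle(a,b)$.

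In either version, finish with
$\sin\angle(a,b)=\|\Pi_a^\perp b\|/\|b\|=\|\Pi_a^\perp(b-a)\|/\|b\|\le\|a-b\|/\|b\|$,
where $\Pi_a^\perp$ is the orthogonal projector onto $a^\perp$. Taking $a=w_t$ and $b=w_t^{\mathrm{cyc}}$, so that $\|a-b\|=\|z_{t,\ge2}\|$, gives exactly the claimed bound.

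Your evaluation of $\cos^2(w_t^{\mathrm{cyc}},u)$ in the two explicit cases, and the cycle-cap error, are correct as written.
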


The leading residual coordinate thus settles into a nearly symmetric two-cycle whose floor
$\mu_1\eta^2/4$ dominates the loss, while the other coordinates drain geometrically and teacher
alignment keeps improving; the proof is in Appendix~\ref{app:lin-M2p}. Small-eigenvalue modes can
carry substantial misalignment but little loss: at $\eta=0.3$, a unit residual hidden below $5\%$ of
the floor requires at least $443$ steps per e-fold to leading order (Corollary~\ref{cor:invisible}).
Generic starts can select asymmetric cycles; their sustained plateaus in Figure~\ref{fig:lin-main} are
numerical observations, not consequences of the conditional theorem.

\begin{figure}[t]
\centering
\includegraphics{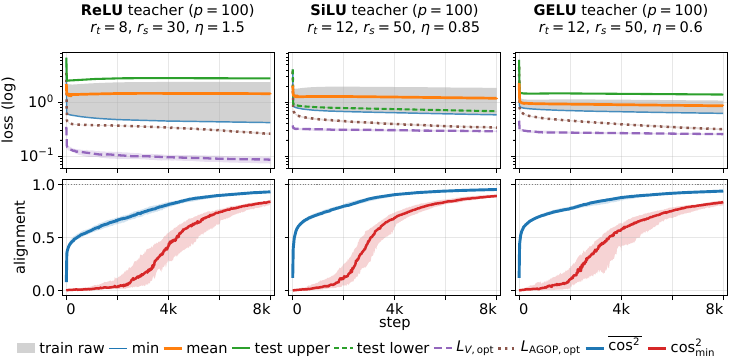}%

\caption{\textbf{Loss plateaus while AGOP alignment improves.}
One ReLU, SiLU and GELU teacher configuration; $8{,}000$ matrix-Muon steps.
Top: training/test losses and head-refit diagnostics. Bottom: direction-only AGOP alignment.
Medians over $53$ seeds, shaded interquartile ranges.
Full metrics: Figure~\ref{fig:three-best}; selection: Appendix~\ref{app:selection}.}
\label{fig:1hl-main}
\end{figure}

\begin{proposition}[M3: exact weight $2$-cycles, floor, drain and teacher alignment]\label{prop:M3prime}\label{prop:M3-main}
In M3, let $U_\star^\top U_\star=I_r$, $R_t:=W_t-U_\star$ and $\eta>0$, and let $\hat\Sigma\succ0$
have eigenvalues $\mu_1\ge\cdots\ge\mu_p>0$ with $\mu_r>\mu_{r+1}$. Then:
\begin{enumerate}[leftmargin=*, label=\textnormal{(\roman*)}]
\item \emph{Step and loss:} $\|W_{t+1}-W_t\|_F=\eta\sqrt r$ whenever $R_t$ has
full column rank, and $\hat{\mathcal L}(W_t)=\Tr(R_{\hat D,t}^\top\hat\Sigma
R_{\hat D,t})+\Tr(R_{\perp,t}^\top\hat\Sigma R_{\perp,t})$.
\item \emph{Exact weight $2$-cycles:} suppose $R_T$ has full column rank and columns in
$\hat D$; write $\hat\Sigma R_T=OP$ with $O^\top O=I_r$ and $P\succ0$, and set
$K:=O^\top\hat\Sigma O$ and $P':=\eta K-P$. If $P'\succ0$, then $W_{t+2}=W_t$ for all
$t\ge T$, with $W_{T+1}=W_T-\eta O$. The two phase losses are $\Tr(PK^{-1}P)$ and
$\Tr(P'K^{-1}P')$ (possibly equal); their mean is at least
$\frac{\eta^2}{4}\sum_{i\le r}\mu_i$, with equality iff
$W_T=U_\star+\frac{\eta}{2}O$ and $W_{T+1}=U_\star-\frac{\eta}{2}O$
(the symmetric residual cycle, with all $\sigma_i(R_T)=\eta/2$).
\item \emph{Drain of the complement (linearised):} to first order in $R_\perp$,
around a cycle of (ii) each row $j>r$ of $R_\perp$ in the eigenbasis of
$\hat\Sigma$ is multiplied per two steps by
$(I-\eta\mu_jP^{-1})(I-\eta\mu_jP'^{-1})$, a strict Euclidean contraction on that row when
$P,P'\succ\frac{\eta\mu_{r+1}}{2}I$, while the cycle moves only at
$O(\|R_\perp\|_F^2)$. On the symmetric cycle the per-step map is
$R_\perp\mapsto R_\perp-2\hat\Sigma R_\perp K^{-1}$: the $(j,i)$ coefficient in
eigenbases ($j>r\ge i$) is multiplied by $1-2\mu_j/\mu_i\in(-1,1)$.
\item \emph{Teacher alignment:} let $R$ have columns in $\hat D$. If
$\mathrm{range}\,U_\star\perp\hat D$, the multiset of principal-angle cosines between
$\mathrm{range}\,W$ and $\mathrm{range}\,U_\star$ is exactly
$\{(1+\sigma_i(R)^2)^{-1/2}:1\le i\le r\}$, so $\cos^2_{\min}=(1+\eta^2/4)^{-1}$ on the
symmetric cycle; if $\mathrm{range}\,U_\star=\hat D$ and $\|R\|_{\mathrm{op}}<1$, then
$\cos^2_{\min}=1$.
\end{enumerate}
\end{proposition}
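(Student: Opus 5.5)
The plan is to work throughout with the residual $R_t=W_t-U_\star$. The gradient is $\nabla\hat{\mathcal L}(W)=2\hat\Sigma R$, so $\Polar(\nabla\hat{\mathcal L})=\Polar(\hat\Sigma R)$, because the positive factor $2$ does not change the polar factor. The Muon recursion is then $R_{t+1}=R_t-\eta\,\Polar(\hat\Sigma R_t)$.

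\emph{Part (i).} If $R_t$ has full column rank $r$ and $\hat\Sigma\succ0$, then $\hat\Sigma R_t$ has rank $r$. The step identity therefore follows from \eqref{eq:polar-frob}. For the loss split, write $R=R_{\hat D}+R_\perp$. Since $\hat D$ is an eigenspace of $\hat\Sigma$, the cross term $\Tr(R_{\hat D}^\top\hat\Sigma R_\perp)$ vanishes.

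\emph{Part (ii).} Write $\hat\Sigma R_T=OP$ with $O$ the polar factor, so that $\Polar(\hat\Sigma R_T)=O$ and $R_{T+1}=R_T-\eta O$. Because $R_T$ has columns in $\hat D$ and $\hat D$ is invariant under $\hat\Sigma$, the columns of $O$ also lie in $\hat D$. We have $R_T=\hat\Sigma^{-1}OP$. On $\hat D$, the matrix $O$ is square orthogonal ($r\times r$ in a basis of $\hat D$), so $OO^\top=P_{\hat D}$ and $\hat\Sigma^{-1}O=OK^{-1}$. Hence $R_T=OK^{-1}P$ and
\[
\hat\Sigma R_{T+1}=OP-\eta\hat\Sigma O=O(P-\eta K)=(-O)P'.
\]
When $P'\succ0$, this is a polar decomposition with factor $-O$, so $R_{T+2}=R_{T+1}+\eta O=R_T$, and induction gives the period. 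The losses are $\Tr(R^\top\hat\Sigma R)=\Tr(PK^{-1}O^\top O P)=\Tr(PK^{-1}P)$, and similarly for $P'$.

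For the lower bound I will use $P+P'=\eta K$ and convexity of $X\mapsto\Tr(XK^{-1}X)$:
\[
\tfrac12\bigl[\Tr(PK^{-1}P)+\Tr(P'K^{-1}P')\bigr]\ge\Tr\bigl(\tfrac{\eta K}{2}K^{-1}\tfrac{\eta K}{2}\bigr)=\tfrac{\eta^2}{4}\Tr K=\tfrac{\eta^2}{4}\sum_{i\le r}\mu_i.
\]
Equality holds iff $P=P'=\eta K/2$, because $K^{-1}\succ0$ makes the function strictly convex. This is equivalent to $R_T=OK^{-1}\eta K/2=\tfrac{\eta}{2}O$.

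\emph{Part (iii), the main obstacle.} This requires the Fr\'echet derivative of the polar map at $OP$ in the direction $\hat\Sigma R_\perp$, which has columns in $\hat D^\perp$. For a perturbation $E$ orthogonal to the range of $O$, I expect the first-order polar correction to be $EP^{-1}$. The intended check is $\Polar(OP+E)=(OP+E)(P^2+O(E^2))^{-1/2}$, using $O^\top E=0$. So to first order $R_\perp\mapsto R_\perp-\eta\hat\Sigma R_\perp P^{-1}$, while the $\hat D$ component changes only at second order. In the eigenbasis row $j$ this is $x\mapsto x(I-\eta\mu_jP^{-1})$. Over two steps the map is the product with $P'$, with the row vector acting on the right. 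Contraction follows because $P\succ\eta\mu_{r+1}I/2$ gives $\eta\mu_jP^{-1}\prec2I$, so each symmetric factor has spectrum in $(-1,1)$ and operator norm below $1$. On the symmetric cycle $P=\eta K/2$, and the per-step map becomes $R_\perp-2\hat\Sigma R_\perp K^{-1}$. Diagonalising $K$ with eigenvalues $\mu_i$ gives the factors $1-2\mu_j/\mu_i$, and these lie in $(-1,1)$ because $\mu_j<\mu_i$.

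\emph{Part (iv).} If $\mathrm{range}\,U_\star\perp\hat D$, take the SVD $R=A\Sigma B^\top$ with $A$ in $\hat D$. Then $WB=U_\star B+A\Sigma$ has orthogonal blocks, and its normalised columns $(U_\star b_i+\sigma_i a_i)/\sqrt{1+\sigma_i^2}$ are orthonormal and span $\mathrm{range}\,W$. Projecting them onto $\mathrm{range}\,U_\star$ gives the diagonal cosines $(1+\sigma_i^2)^{-1/2}$. The symmetric cycle has all $\sigma_i=\eta/2$, which gives the stated $\cos^2_{\min}$. If instead $\mathrm{range}\,U_\star=\hat D$, then $W=U_\star(I+U_\star^\top R)$ with $\|U_\star^\top R\|_{\mathrm{op}}<1$. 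The factor $I+U_\star^\top R$ is therefore invertible, so the ranges coincide and $\cos^2_{\min}=1$.
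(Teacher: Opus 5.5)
Your proposal is correct and follows the paper's proof essentially step for step. It uses the same rank argument and invariance of $\hat D$ in (i). In (ii) it uses the same identities $\hat\Sigma O=OK$, $R_T=OK^{-1}P$ and $\hat\Sigma R_{T+1}=-OP'$, together with uniqueness of the polar factor and strict convexity of $X\mapsto\Tr(XK^{-1}X)$ under $P+P'=\eta K$. In (iii) it uses the same expansion $(OP+E)^\top(OP+E)=P^2+E^\top E$, which follows from $O^\top E=0$. In (iv), your explicit SVD basis $(U_\star b_i+\sigma_i a_i)/\sqrt{1+\sigma_i^2}$ is a coordinate form of the paper's computation $U_\star^\top W(W^\top W)^{-1/2}=(I_r+R^\top R)^{-1/2}$. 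One wording point: $\hat D$ is a $\hat\Sigma$-invariant spectral subspace rather than a single eigenspace, and invariance is all your argument uses.
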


Part~(ii) holds on any $r$-dimensional $\hat\Sigma$-invariant subspace $S$, with floor
$\frac{\eta^2}{4}\Tr(\hat\Sigma|_S)$, for every positive-definite quadratic.
For block-isotropic $\hat\Sigma$ ($K=\lambda_DI_r$), the cycle condition is
$\sigma_i(R_T)\in(0,\eta)$ and the floor is $\lambda_D\eta^2r/4$.
On an exact weight cycle, loss and teacher alignment (when defined) repeat every two steps.
Continued alignment change concerns off-cycle trajectories; part~(iii) gives only
their local transverse linearisation, not a convergence theorem.
The proof is in Appendix~\ref{app:lin-M3p}; numerical checks are in Appendix~\ref{app:lin-verify}.

\paragraph{The plateau belongs to the optimiser.}
All three losses have minimum $0$, which plain gradient descent with a stable step reaches
(Figure~\ref{fig:lin-optimizer}); in our runs the normalised updates approach nonzero cycles instead, with floors
that scale as $\eta^2$ (M2 means within $0$--$4\%$ of $\mu_1\eta^2/4$ for $\eta\in[0.01,0.2]$; M3
floors $1.05$--$1.39\times\lambda_D\eta^2r/4$, reflecting sampling and the asymmetric-cycle excess of
Proposition~\ref{prop:M3prime}(ii)). In M3 the fixed Hessian $2\hat\Sigma$ determines which invariant
subspaces can support cycles; in M4 the Hessian also depends on the activation pattern and hence on
the teacher.

\begin{figure}[!t]
\centering
\includegraphics[width=\linewidth]{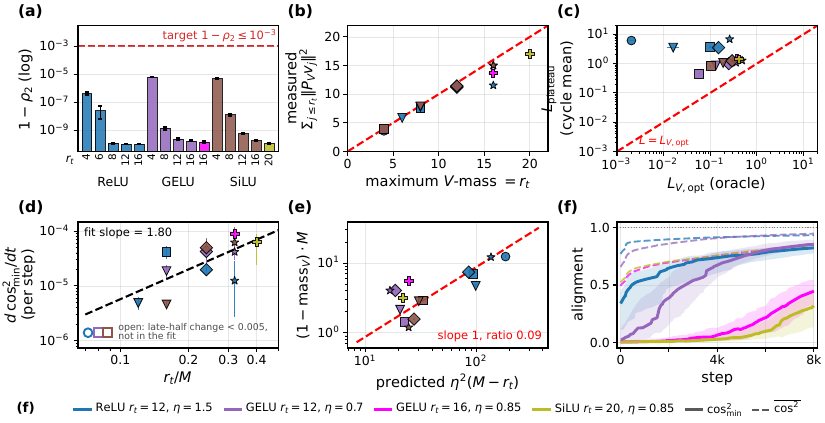}
\caption{\textbf{Theory--experiment diagnostics across 15 ReLU, GELU and SiLU configurations.}
Teachers: ReLU (blue), GELU (purple; magenta $r_t{=}16$, $\eta{=}0.85$) and SiLU (brown; olive $r_t{=}20$);
$p=100$, $r_s=50$, $r_t\in\{4,6,8,12,16,20\}$, $\eta\in\{0.7,0.85,1.5,2\}$; marker shape indexes $r_t$
(full list: Appendix~\ref{app:tests}). Medians and interquartile ranges over $53$ seeds.
(a) Period-$2$ loss signature; (b) top-$r_t$ AGOP $V$-mass versus its maximum;
(c) plateau loss versus head oracle; (d) late-half alignment rate versus $r_t/M$;
(e) off-$V$ AGOP-mass proxy with a unit-slope reference line; (f) alignment trajectories.
Panel (b) does not measure gradient leakage; (d) is descriptive, not a test of
Proposition~\ref{claim:drift}. Definitions, fits and ranges: Appendix~\ref{app:tests}.}
\label{fig:theory-vs-exp}
\end{figure}

\section{The one-hidden-layer ReLU NN under matrix Muon}\label{sec:1NN}

We study the one-hidden-layer ReLU student defined in Section~\ref{sec:data}.

\definecolor{mplblue}{HTML}{1F77B4}
\definecolor{mplorange}{HTML}{FF7F0E}
\definecolor{mplgreen}{HTML}{2CA02C}
\definecolor{mplred}{HTML}{D62728}
\definecolor{mplpurple}{HTML}{9467BD}
\definecolor{mplgrey}{HTML}{555555}
\newcommand{\legline}[1]{{\color{#1}\rule[0.55ex]{9pt}{0.9pt}}}
\newcommand{\legdash}[1]{{\color{#1}\rule[0.55ex]{2.4pt}{0.9pt}\hspace{1.1pt}\rule[0.55ex]{2.4pt}{0.9pt}\hspace{1.1pt}\rule[0.55ex]{2.4pt}{0.9pt}}}
\newcommand{\legdot}[1]{{\color{#1}\rule[0.55ex]{1pt}{0.9pt}\hspace{1pt}\rule[0.55ex]{1pt}{0.9pt}\hspace{1pt}\rule[0.55ex]{1pt}{0.9pt}\hspace{1pt}\rule[0.55ex]{1pt}{0.9pt}\hspace{1pt}\rule[0.55ex]{1pt}{0.9pt}}}
\newcommand{\legband}{{\color{black!22}\rule[0.05ex]{9pt}{1.5ex}}}
\newcommand{\legsep}{\hspace{5.5pt}}
\newcommand{\legbar}{\hspace{5pt}{\color{black!45}\rule[-0.25ex]{0.4pt}{1.7ex}}\hspace{5pt}}
\newcommand{\legtext}[1]{\hspace{2.5pt}#1}
\newcommand{\legendOneHLLegacy}{%
\par\vspace{2pt}\centerline{\fontsize{6.4}{7.5}\selectfont
\legband\legtext{$L_{\mathrm{train}}$ raw}\legsep
\legline{mplblue}\legtext{min}\legsep
\legline{mplorange}\legtext{$L_{\mathrm{train}}$ mean}\legsep
\legline{mplgreen}\legtext{test (upper)}\legsep
\legdash{mplgreen}\legtext{test (lower)}\legsep
\legdash{mplpurple}\legtext{$L_{V,\mathrm{opt}}$}\legsep
\legdot{mplgrey}\legtext{$L_{\mathrm{AGOP},\mathrm{opt}}$}\legbar
\legline{mplblue}\legtext{$\overline{\cos^2}$}\legsep
\legline{mplred}\legtext{$\cos^2_{\min}$}}\par}
\newcommand{\legendDeepLegacy}{%
\par\vspace{2pt}\centerline{\fontsize{6.4}{7.5}\selectfont
\legband\legtext{$L_{\mathrm{train}}$ raw}\legsep
\legline{mplblue}\legtext{min}\legsep
\legline{mplorange}\legtext{$L_{\mathrm{train}}$ mean}\legbar
\legline{mplblue}\legtext{$\overline{\cos^2}$}\legsep
\legline{mplred}\legtext{$\cos^2_{\min}$}}\par}

\newcommand{\legendOneHL}{%
\par\vspace{3pt}{\centering\fontsize{7}{9}\selectfont
\legband\legtext{$L_{\mathrm{train}}$ raw}\legsep
\legline{mplblue}\legtext{min}\legsep
\legline{mplorange}\legtext{$L_{\mathrm{train}}$ mean}\legsep
\legline{mplgreen}\legtext{test (upper)}\legsep
\legdash{mplgreen}\legtext{test (lower)}\legsep
\legdash{mplpurple}\legtext{$L_{V,\mathrm{opt}}$}\legsep
\legdot{mplgrey}\legtext{$L_{\mathrm{AGOP},\mathrm{opt}}$}\legbar
\legline{mplblue}\legtext{$\overline{\cos^2}$}\legsep
\legline{mplred}\legtext{$\cos^2_{\min}$}\par}%
}
\newcommand{\legendDeep}{%
\par\vspace{3pt}{\centering\fontsize{8}{10}\selectfont
\legband\legtext{$L_{\mathrm{train}}$ raw}\legsep
\legline{mplblue}\legtext{min}\legsep
\legline{mplorange}\legtext{$L_{\mathrm{train}}$ mean}\legbar
\legline{mplblue}\legtext{$\overline{\cos^2}$}\legsep
\legline{mplred}\legtext{$\cos^2_{\min}$}\par}%
}

\newcommand{\legendCommonSteps}{%
\begingroup\setlength{\unitlength}{\linewidth}\fontfamily{phv}\fontsize{6}{7}\selectfont
\begin{picture}(1,0.025)
\put(0.0731679,0.009){\makebox(0,0){0}}
\put(0.2148245,0.009){\makebox(0,0){4k}}
\put(0.3526516,0.009){\makebox(0,0){8k}}
\put(0.3869245,0.009){\makebox(0,0){0}}
\put(0.5285811,0.009){\makebox(0,0){4k}}
\put(0.6664083,0.009){\makebox(0,0){8k}}
\put(0.7006811,0.009){\makebox(0,0){0}}
\put(0.8423377,0.009){\makebox(0,0){4k}}
\put(0.9801649,0.009){\makebox(0,0){8k}}
\end{picture}\par
{\fontsize{7}{8}\selectfont step\par}\endgroup}

\subsection{Empirical phenomenology}\label{sec:1NN-phenomenon}

Figure~\ref{fig:1hl-main} illustrates the behaviour.
Across the $33$ ReLU/SiLU/GELU teacher configurations listed in Table~\ref{tab:all33}
(Appendix~\ref{app:grid}), with step sizes tuned near the edge of stability, we observe two features:

\begin{enumerate}[leftmargin=*, label=(P\arabic*)]
\item \label{P:cycle} \emph{Period-$2$ loss signature with fixed step length.}
After a short transient, loss increments alternate almost perfectly:
$\rho_2=1.0000$ to four decimal places (second-half seed medians, all $15$ configurations
of Figure~\ref{fig:theory-vs-exp}(a)). Each full-rank step has length $\eta\sqrt M$;
the weights keep moving, allowing continued alignment rather than an exact weight two-cycle.
\item \label{P:align} \emph{AGOP-into-$V$ rotation during the plateau.}
Both $\overline{\cos^2}$ and $\cos^2_{\min}$ show large net increases, including configurations
with increasing training loss (Figure~\ref{fig:1hl-main}). Over the second half of training
$\cos^2_{\min}$ still rises in $29$ of the $33$ configurations (Table~\ref{tab:all33}); it has
settled by the midpoint in the other four, and in the three ReLU configurations with $r_t\ge16$
it rises but ends below $0.2$, so the weakest teacher direction is not recovered.

\end{enumerate}

\subsection{Universal polar-update identities}\label{sec:1NN-identities}\label{sec:1NN-theory}

\begin{proposition}[polar-update identities]\label{prop:polar-identities}\label{prop:polar}\label{lem:displ}\label{prop:split-correction}\label{thm:split}
For any iterate $W$ with full-column-rank gradient $g$ and $r_t\le M$, the matrix
Muon update satisfies the three exact identities below
(proofs in Appendix~\ref{app:1NN-proofs}):
\begin{enumerate}[leftmargin=*, label=\textnormal{(\roman*)}]
\item \emph{Polar Frobenius identity.}
$\|\Polar(g)\|_F^2 = M$.
\item \emph{Per-step displacement.}
$\|\Phi(W) - W\|_F = \eta\,\sqrt{M}$, independent of the loss
landscape and of the iterate.
\item \emph{$V/V^\perp$ leakage decomposition.}
$\|P_V \Polar(g)\|_F^2 = r_t - \varepsilon_V^2 + \varepsilon_\perp^2$
and
$\|P_\perp \Polar(g)\|_F^2 = (M - r_t) + \varepsilon_V^2 -
\varepsilon_\perp^2$.
In the small-leakage limit, the polar update splits its squared
Frobenius mass between $V$ and $V^\perp$ in the ratio
$r_t : (M - r_t)$.
\end{enumerate}
\end{proposition}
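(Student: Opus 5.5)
All three identities follow from writing the polar factor in its compact SVD, $\Polar(g)=AB^\top$ with $A=[\ell_1,\ldots,\ell_M]\in\R^{p\times M}$ and $B\in\R^{r_s\times M}$ both having orthonormal columns. The gradient $g\in\R^{p\times r_s}$ is assumed to have full column rank; more generally, "full rank" here means rank $M=\min(p,r_s)$, and this is the convention I will use. Then $A^\top A=B^\top B=I_M$.

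For (i), compute $\|\Polar(g)\|_F^2=\Tr(BA^\top AB^\top)=\Tr(BB^\top)=\Tr(B^\top B)=M$. This is the same fact already recorded in \eqref{eq:polar-frob}: every nonzero singular value of $\Polar(g)$ equals one, and there are $\rank(g)=M$ of them.

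For (ii), the definition \eqref{eq:Phi-def} gives $\Phi(W)-W=-\eta\,\Polar(g)$. Taking norms and applying (i) yields $\|\Phi(W)-W\|_F=\eta\sqrt M$. The value depends only on $\rank(g)$, and hence not on $W$ or on the loss.

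For (iii), use the same trace trick with a projector inserted. For any orthogonal projector $P$,
\[
\|P\,\Polar(g)\|_F^2=\Tr(BA^\top PAB^\top)=\Tr(A^\top PA)=\sum_{i=1}^M\|P\ell_i\|^2,
\]
where the middle equality uses $B^\top B=I_M$ and $P^2=P$. With $P=P_V$, split the sum at $r_t$:
\begin{itemize}
\item For $i\le r_t$, $\|P_V\ell_i\|^2=1-\|P_\perp\ell_i\|^2$. Summing gives $r_t-\varepsilon_V^2$.
\item For $i>r_t$, the terms sum to $\varepsilon_\perp^2$ by definition.
\end{itemize}
Hence $\|P_V\Polar(g)\|_F^2=r_t-\varepsilon_V^2+\varepsilon_\perp^2$. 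The $P_\perp$ identity follows from Pythagoras, $\|P_V X\|_F^2+\|P_\perp X\|_F^2=\|X\|_F^2$, together with (i). Setting $\varepsilon_V=\varepsilon_\perp=0$ gives the ratio $r_t:(M-r_t)$.

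There is no real obstacle here. The only care needed is bookkeeping: the identities require rank exactly $M$, and the leakage sums are invariant under rotations within repeated singular subspaces. The latter holds because the decomposition only involves $\Tr(A^\top PA)$ split along the fixed index sets $\{1,\ldots,r_t\}$ and $\{r_t+1,\ldots,M\}$, so any fixed choice of the $\ell_i$ makes the identity hold.
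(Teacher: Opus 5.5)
Your proof is correct and follows the paper's argument essentially step for step. Both compute the Frobenius norm by a trace with the orthonormal SVD factors for (i), read off (ii) directly from (i), and for (iii) expand the projected trace $\sum_i\|P_V\ell_i\|^2$, split it at index $r_t$, and obtain the $P_\perp$ identity from the total mass $M$.

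One small correction to your closing remark. Only the combination $\varepsilon_\perp^2-\varepsilon_V^2$ is invariant under rotations within repeated singular subspaces. The individual sums can change when a repeated singular value straddles index $r_t$. Nothing in your proof depends on this, since, as you note, the identity holds for any fixed choice of the $\ell_i$.
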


These identities give the fixed step length in~\ref{P:cycle} and the update-mass split;
they do not imply AGOP alignment or a periodic network trajectory.
The network experiments exhibit approximately period-$2$ loss oscillations while the
weights keep moving. The separate existence result for exact weight cycles concerns
only an isotropic quadratic (Theorem~\ref{thm:p2existence}, Appendix~\ref{app:thm1-proof}).
Figure~\ref{fig:theory-vs-exp} summarises the $15$-configuration diagnostics;
Appendix~\ref{app:tests} separates the exact identities from the empirical comparisons.

\subsection{Plateau height and alignment drift}\label{sec:1NN-claims}

Proposition~\ref{prop:polar-identities} establishes the update length and
its decomposition between the teacher subspace and its orthogonal
complement. Theorem~\ref{thm:p2existence} supplies a separate example
of an exact two-cycle on an isotropic quadratic. These results do not
establish the observed network oscillations or AGOP alignment.
The plateau-height statement below is an untested conjecture;
Proposition~\ref{claim:drift} gives sufficient conditions for alignment
drift, with its proof in Appendix~\ref{app:drift-floquet}.

\begin{conjecture}[plateau-height decomposition]\label{claim:plateau}
During the period-$2$ loss plateau of matrix Muon on the 1-HL ReLU
teacher--student loss, the plateau loss admits the leading-order
decomposition
\begin{equation}
\mathcal{L}(W_t) \;\approx\;
\mathcal{L}\bigl(P_V W_t;\,a_s\,\mathrm{fixed}\bigr)
\;+\; c_2\,\eta^2\,(M - r_t),
\label{eq:plateau-height}
\end{equation}
with a configuration-dependent constant $c_2>0$. Test~C (Appendix~\ref{app:tests}) compares the plateau loss with the head-refitted oracle
$L_{V,\mathrm{opt}}$ rather than with the fixed-head term in~\eqref{eq:plateau-height}, so it does not
test the decomposition directly; the polar identities alone do not establish a loss law.
\end{conjecture}

But we can give a conditional statement about the alignment drift. The following proposition
establishes a uniform bound on the alignment drift rate under the stated assumptions.

\begin{proposition}[Conditional secular drift]
\label{prop:conditional-secular-drift}\label{claim:drift}
Let $A_t=\cos^2_{\min}(W_t)$, $q=r_t/M$, $e_k=1-A_{2k}$ and
$v_k=\tfrac12(A_{2k+2}-A_{2k})$, and fix the remaining family
parameters $\theta$, including the learning-rate rule. Suppose that, on
a trajectory window, the leading AGOP subspace is spectrally separated
and near the teacher subspace, and its two-step transverse dynamics
satisfy the uniform expansion of Appendix~\ref{app:drift-floquet}:
positive operator-norm dissipation of finite order $q^{\nu j}$ with a
correspondingly scaled nonlinear remainder. Then, for sufficiently
small $q>0$ and $e_k>0$, uniformly on the window,
\begin{equation}
 v_k\asymp_\theta q^{\alpha}e_k,\qquad \alpha=\nu j>0.
 \label{eq:drift-rate-main}
\end{equation}
Under the additional stationary slow-mode conditions there,
$v_k=C(\theta)q^\alpha e_k(1+o(1))$ with $C(\theta)>0$ as $q,e_k\to0$ and
$kq^\alpha\to\infty$ within the window.
\end{proposition}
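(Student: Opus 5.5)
The plan is to reduce the alignment dynamics on the window to a scalar, one-dimensional recursion for $e_k$, read off from the two-step map $\Phi^2$ linearised around the near-teacher configuration. The assumed spectral separation lets us parametrise the leading AGOP subspace as a graph over $V$: there is a transverse matrix $Z_k\in\R^{(p-r_t)\times r_t}$ with $\linspan(\bar V_{r_t})=\linspan(U+P_\perp Z_k)$. Principal-angle calculus (Davis--Kahan type) then gives
\[
e_k=1-\cos^2_{\min}=\frac{\sigma_1(Z_k)^2}{1+\sigma_1(Z_k)^2},
\]
so $e_k\asymp\|Z_k\|_{\mathrm{op}}^2$ uniformly when $e_k$ is small. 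The problem therefore becomes controlling $\|Z_{k+1}\|_{\mathrm{op}}$ in terms of $\|Z_k\|_{\mathrm{op}}$.

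Next I write the transverse two-step dynamics as
\[
Z_{k+1}=(I-\mathcal D_k)Z_k+\mathcal N_k(Z_k).
\]
The uniform expansion assumed in Appendix~\ref{app:drift-floquet} supplies three things. First, $\mathcal D_k$ is an operator expansion in $q^\nu$ whose first $j-1$ orders are non-dissipative (norm-preserving or zero on the relevant modes). Second, the order-$j$ term is positive with $c_\theta q^{\nu j}\le \mathcal D_k\le C_\theta q^{\nu j}$ in operator norm on the slow transverse modes. Third, the remainder satisfies $\|\mathcal N_k(Z)\|\le C_\theta q^{\nu j}o(1)\|Z\|$ as $q,\|Z\|\to0$.

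From these, I take the contraction factor to be $1-\Theta(q^{\alpha})$ with $\alpha=\nu j$, uniformly on the window. This gives
\[
\|Z_{k+1}\|^2=\|Z_k\|^2\bigl(1-\Theta_\theta(q^\alpha)\bigr)
\]
after absorbing cross terms, which are $O(q^{2\alpha})$ and $o(q^\alpha)$ for small $q$. Translating back, $e_k-e_{k+1}\asymp_\theta q^\alpha e_k$. Since $v_k=\tfrac12(A_{2k+2}-A_{2k})=\tfrac12(e_k-e_{k+1})$, the two-sided bound \eqref{eq:drift-rate-main} follows. The fact that the nonlinear relation between $e$ and $\|Z\|$ only changes constants by factors $1+O(e_k)$ is what the "sufficiently small $e_k$" hypothesis is used for.

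For the refined statement, the stationary slow-mode conditions say that the leading dissipative operator has a fixed top slow eigenvalue $C(\theta)q^\alpha$, with a spectral gap to the other transverse modes. The argument then has three steps:
\begin{enumerate}[leftmargin=*]
\item Decompose $Z_k$ along the slow mode and its complement.
\item Show the complement decays relative to the slow mode by a factor $(1-\mathrm{gap}\cdot q^\alpha)^k$, which vanishes once $kq^\alpha\to\infty$. This is exactly why that condition appears.
\item Conclude that $\sigma_1(Z_k)$ is asymptotically carried by the slow mode, giving $v_k=C(\theta)q^\alpha e_k(1+o(1))$.
\end{enumerate}

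The main obstacle is the first reduction. Degenerate principal angles can make $\sigma_1(Z_k)$ non-smooth, and operator-norm dissipation of $Z$ does not automatically control the smallest cosine in a two-sided way. My first approach is to work with $\|Z_k\|_{\mathrm{op}}$ directly and use only submultiplicativity for the upper bound. For the lower bound I would use the lower operator bound on $\mathcal D_k$ restricted to the top singular pair, since Weyl perturbation keeps that pair's displacement within the remainder bound.
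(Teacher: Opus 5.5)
For the two-sided law, your route is the paper's. It uses the graph chart $\operatorname{range}(U+U_\perp Z_k)$ (write $U_\perp$, not $P_\perp$, so the dimensions match), a one-step sandwich for $x_+=\|Z_{k+1}\|_{\mathrm{op}}$ in terms of $x=\|Z_k\|_{\mathrm{op}}$, and the exact identity $v_k/e_k=(x^2-x_+^2)/\bigl(2x^2(1+x_+^2)\bigr)$. The caution concerns the form of the hypothesis. The paper assumes the normal form $Z_{k+1}=Q_k\{Z_k-\varepsilon(L_kZ_k+Z_kR_k)+E_k\}S_k^\top$, where $\varepsilon=q^{\nu j}$ and $z=q^\nu$. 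Here $Q_k,S_k$ are exactly orthogonal, $L_k,R_k$ are positive semidefinite with $\lambda_{\min}(L_k)+\lambda_{\min}(R_k)\ge a$ and $\|L_k\|+\|R_k\|\le b$, and $\|E_k\|\le K\varepsilon(z\|Z_k\|+\|Z_k\|^2)$. Both sides of the sandwich then follow at once from $\sigma_{\min}(P)\sigma_{\min}(T)\|Z\|\le\|PZT\|\le\|P\|\,\|T\|\,\|Z\|$; no top-singular-pair or Weyl argument is needed. Your additive reconstruction does not deliver this by itself, for two reasons. First, positivity of $\mathcal D_k$ as a quadratic form does not give operator-norm contraction. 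Second, lower-order terms that are only norm-preserving to first order are not harmless: an additive skew term $q^\nu WZ$ can enlarge $\|Z\|_{\mathrm{op}}$ by up to the factor $(1+q^{2\nu}\|W\|_{\mathrm{op}}^2)^{1/2}$, which beats $q^{\nu j}$ dissipation once $j\ge3$. The rotations must sit in exact orthogonal factors, with the lower orders vanishing in the moving frame.

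The genuine gap is in the refined statement, at your step 3.
\begin{itemize}
\item Since $v_k/(q^\alpha e_k)\approx(1-x_+/x)/\varepsilon$, you need $x_+/x=\mu_0+o(\varepsilon)$, where $\mu_0=(1-\varepsilon\lambda_0)(1-\varepsilon\rho_0)$. Knowing only that $\|Z_k\|_{\mathrm{op}}$ is carried by the slow mode up to a relative error $\eta_k\lesssim e^{-\omega\varepsilon k}$ gives, in the linear case, $x_+/x=\mu_0(1+\eta_{k+1})/(1+\eta_k)$. Then $|\eta_k-\eta_{k+1}|/\varepsilon$ is small only when $e^{-\omega kq^\alpha}=o(q^\alpha)$, i.e.\ $kq^\alpha\gg\log(1/q)$, not merely $kq^\alpha\to\infty$.
\item The missing ingredient is that $\mathcal T_qY=(I-\varepsilon L)Y(I-\varepsilon R)$ equals $\mu_0Y$ on the slow block $P_LYP_R$, and $\mathcal T_q-\mu_0I$ has operator-norm-to-operator-norm bound $O(\varepsilon)$. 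Hence $\bigl|\|Y_{k+1}\|_{\mathrm{op}}-\mu_0\|Y_k\|_{\mathrm{op}}\bigr|\le C\varepsilon\|Y_k\|_{\mathrm{op}}\bigl(e^{-\omega\varepsilon k}+\|Y_k\|_{\mathrm{op}}\bigr)$. The complement and the nonlinearity thus enter the one-step change with an extra factor $\varepsilon$, and that is what makes $kq^\alpha\to\infty$ sufficient.
\item You need a quantitative non-degeneracy $\|P_LY_0P_R\|_{\mathrm{op}}\ge\delta\|Y_0\|_{\mathrm{op}}$ of the initial slow component; this is part of the paper's slow-mode conditions. Without it, ``the complement decays relative to the slow mode'' has no uniform starting ratio.
\item You need a Duhamel bound $\|\mu_0^{-k}P_LY_kP_R-P_LY_0P_R\|_{\mathrm{op}}\le Cx_0^2$. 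It holds because each forcing term $\mu_0^{-n-1}P_LN_q(Y_n)P_R$ carries a factor $\varepsilon$ that cancels the $O(\varepsilon^{-1})$ geometric sum. This is what gives $\|Y_k\|_{\mathrm{op}}\ge c_\delta x_0\mu_0^k$ over the whole window, so the nonlinearity cannot erase the slow component.
\item The dimension-free operator-norm decay $3\mu_0^ne^{-c\gamma\varepsilon n}$ of the complement relies on splitting it into the three blocks $(I-P_L)YP_R$, $P_LY(I-P_R)$, $(I-P_L)Y(I-P_R)$ of the two-sided map. A spectral gap for a general operator on matrices would not supply it.
\end{itemize}
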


\subsection{A mean-field theorem: plateau, then feature learning, then loss drop}\label{sec:1NN-mf}

With the mean-field model as defined in Section~\ref{sec:variants}, ordinary initialisation means $\kappa\gg1$:
the initial weight scale is large relative to the effective step size.
The theorem separates the feature-learning time $t_F$, a chosen plateau horizon $t_P$,
and the later loss-drop time $t_2$. All times count discrete updates.

\begin{theorem}[Plateau, then feature learning, then loss drop; informal version of
Theorem~\ref{mfl:intro-thm:main}]\label{thm:mf}
Let $r\ge2$, $p\ge64r$ and $p-r\ge400$, with head scale $\alpha>0$. There are $\tau_F\in(0,0.101)$, $\kappa_{\min}<\infty$
and $\delta>0$, depending only on $(r,p)$, such that the following holds for every
$\kappa\ge\kappa_{\min}$, with $t_F:=\lfloor\tau_F\kappa\rfloor$, any integer $t_P\ge t_F$ and
$\alpha^*(t_P):=0.00845\,\sigma_Y/(\bar\eta(\kappa+t_P))$.
There is a threshold $a_0=a_0(r,p,\kappa)>0$ such that:
\begin{itemize}
\item[(P)] \emph{Plateau.} If $\alpha\le\alpha^*(t_P)$, the loss stays within $1\%$ of its
initial value, and above $0.99\,\sigma_Y^2$, for $0\le t\le t_P$.
\item[(F)] \emph{Feature learning.} The AGOP is isotropic at $t=0$. If $\alpha\bar\eta\le a_0$, then at $t_F$ its top-$r$ eigenspace is exactly $V$, so $A_r(t_F)=1$, with
$\lambda_r(t_F)\ge(1+\delta/2)\lambda_{r+1}(t_F)$.
\item[(R)] \emph{Loss drop.} If $\alpha\le\alpha^*(t_P)$, then
$L(0)-L(t_2)\ge0.03\,\sigma_Y^2$ at
$t_2:=\lceil0.05086\,\sigma_Y/(\alpha\bar\eta)\rceil>t_P$; at
$\alpha=\alpha^*(t_P)$, $t_2\le6.02(\kappa+t_P)+1$.
\end{itemize}
All three hold together for $0<\alpha\le\min\{\alpha^*(t_P),\,a_0/\bar\eta\}$.
\end{theorem}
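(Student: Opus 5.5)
The plan rests on one structural fact: in the mean-field limit the polar update of a law $\nu_t$ is determined by a single $p\times p$ symmetric positive semidefinite operator. Write the per-particle gradient as $g(w)=-2\alpha\,\E[(Y-f_t(X))\mathbf 1\{w^\top X>0\}X]$ and the empirical gradient matrix as $G=[g(w_1)|\cdots|g(w_{r_s})]/r_s$. The polar factor $G(G^\top G)^{-1/2}$ acts on column $j$ as $(GG^\top)^{-1/2}g(w_j)$, up to the width scaling absorbed into $\eta=\bar\eta\sqrt{r_s}$. So the distributional update is $w\mapsto w-\bar\eta\,S_t^{-1/2}g_t(w)$ with $S_t:=\E_{w\sim\nu_t}[g_t(w)g_t(w)^\top]$.

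By rotational invariance of $\nu_0$ and of the Gaussian input, and because the teacher is invariant under $O(r)\times O(p-r)$ acting on $V\oplus V^\perp$ up to permutations of the $u_j$, I will propagate the following invariant by induction. The law $\nu_t$ is invariant under $O(p-r)$ on $V^\perp$ and under the teacher's permutation symmetry. Hence $S_t$, the AGOP $G_s(t)$ and the residual covariance structure are all block-diagonal on $V\oplus V^\perp$, with a scalar block on $V^\perp$ and an $r\times r$ block on $V$ of the form $aI+b\mathbf 1\mathbf 1^\top$ in the $u$-basis. Consequently $G_s(t)$ has eigenvalues $\lambda_V^{(1)},\lambda_V^{(2)}$ (multiplicities $1$ and $r-1$) on $V$ and $\lambda_\perp$ on $V^\perp$. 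Statement (F), namely $A_r(t_F)=1$ with a spectral gap, reduces to showing $\min(\lambda_V^{(1)},\lambda_V^{(2)})\ge(1+\delta/2)\lambda_\perp$ at $t_F$. At $t=0$ isotropy of $\nu_0$ gives $G_s(0)\propto I_p$.

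The dynamics I will analyze in the small-head regime $\alpha\bar\eta\le a_0$. There the output barely changes, so the residual satisfies $Y-f_t\approx Y-\E Y$ to first order in $\alpha$. This makes $g_t(w)\approx -2\alpha\,\E[(Y-\E Y)\mathbf 1\{w^\top X>0\}X]$, which is given by explicit Gaussian arc-cosine-type formulas in $w$. The normalisation by $S_t^{-1/2}$ cancels $\alpha$, so each particle moves at step $\bar\eta$ times a unit-scale, $\alpha$-independent direction. Rescaling time by $\kappa=\sigma_0/(\bar\eta\sqrt p)$, the weights (of norm $\approx\sigma_0$) rotate by $O(1/\kappa)$ per step. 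Over $t_F=\lfloor\tau_F\kappa\rfloor$ steps this yields an $O(1)$ deterministic flow in the rescaled variable $\tau=t/\kappa$, with $O(1/\kappa)$ discretisation error; that error is the source of $\kappa_{\min}$.

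The main obstacle is the next step. I must show that under this limiting flow the $V$-components of the AGOP $G_s=\alpha^2\E_X[\E_{w,w'}(\mathbf 1\{w^\top X>0\}\mathbf 1\{w'^\top X>0\})\,ww'^\top]$ grow strictly faster than the $V^\perp$ component, uniformly up to $\tau_F$. The difficulty is that the drift into $V$ is of order $\sqrt{r/p}$ relative to the isotropic mass, and the $S_t^{-1/2}$ normalisation amplifies the small $V$-block of $S_t$. The conditions $p\ge64r$ and $p-r\ge400$ are presumably what make a crude lower bound on this amplification beat the concentration terms. I would first reduce everything to the one-particle statistics $(P_Vw,\|P_\perp w\|)$, which is enough because of the symmetry. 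I would then derive a closed finite-dimensional ODE for the needed Gaussian moments, expanding the ReLU Hermite coefficients, and bound its solution on $[0,0.101]$ with interval-arithmetic numerical certificates. This is where the specific constants ($\tau_F<0.101$, $\delta$) come from; I expect it to be the hardest and most computational part. The linear-in-$\alpha$ corrections are then absorbed by taking $a_0$ small.

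For (P), I bound the output change directly. Each particle moves by at most $\bar\eta$ in norm per step and starts at norm $\sigma_0$, so $|f_t(x)-f_0(x)|\le\alpha(\sigma_0+t\bar\eta)\|x\|$-type bounds hold. After intercept refitting, this gives $\|f_t-\E f_t\|_{L^2}\le C\alpha\bar\eta(\kappa+t)$, and then $L(t)=\sigma_Y^2-2\langle Y,f_t\rangle+\|f_t\|^2$ lies within $1\%$ of $\sigma_Y^2$ once $\alpha\le0.00845\,\sigma_Y/(\bar\eta(\kappa+t_P))$, with the constant tuned from the Gaussian ReLU norm. For (R), after $t_F$ the particles have aligned, and I lower-bound the descent per step: $L(t)-L(t+1)\gtrsim\alpha\bar\eta\,\langle Y-f_t,\partial f\rangle$, using a lower bound on $\Tr S_t^{1/2}$ that holds while the correlation of the features with $Y$ stays bounded below. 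Summing over steps gives a loss drop of $0.03\sigma_Y^2$ by $t_2=\lceil0.05086\,\sigma_Y/(\alpha\bar\eta)\rceil$. Finally, $t_2>t_P$ and $t_2\le6.02(\kappa+t_P)+1$ follow by substituting $\alpha=\alpha^*(t_P)$, since $0.05086/0.00845\approx6.02$.
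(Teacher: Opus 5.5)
Your set-up for (F) matches the paper's. That includes the whitened mean-field map $w\mapsto w+\bar\eta\,\Sigma_\nu^{-1/2}F(w;\nu)$, the $S_r\times O(p-r)$ symmetry giving the three-block form of $\Sigma_\nu$ and of the AGOP, and the passage from $\alpha=0$ to small $\alpha\bar\eta$ by continuity.

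The gap is in the step you call the main obstacle. The reduced law of $(P_Vw,\|P_\perp w\|)$ does not obey a closed finite-dimensional ODE for moments. Every particle's step is divided by global normalizations such as $\E_\nu\|P_c\,g(\hat w)\|^2$ and $\E_\nu[s^2(1-\|\rho\|^2)]$, with $\rho=P_Vw/\|w\|$ and $s=\sum_i(1-\sqrt{1-\rho_i^2})$. These are non-polynomial functionals of the law, and the AGOP blocks are pair integrals against the arc-cosine kernel, so a Hermite expansion closes nothing. Moreover, a certified integration can only treat finitely many $(r,p)$, whereas the theorem covers all $r\ge2$, $p\ge64r$, $p-r\ge400$.

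What is missing is the chain of estimates that makes the problem finite.
(i) Test-function bounds reduce the eigenvalue comparison to two scalars: $\lambda_c\ge\Gamma_U^2/(2\pi)$ with $\Gamma_U=\E_\nu[\|P_cw\|^2/\|w\|]/(r-1)$, where $P_c$ projects onto $V\cap\bar u^\perp$ and $\bar u:=r^{-1/2}\sum_iu_i$; and $\lambda_\perp\le\Gamma_\perp^2/(2\pi)$ with $\Gamma_\perp=\E_\nu\|P_\perp w\|/\sqrt{(p-r)(p-r-\frac12)}$. The mean direction needs its own bound, $\lambda_1\ge\frac14(\E_\nu\bar u^\top w)^2$.
(ii) Each particle's $V^\perp$ component is only rescaled, by a factor $1-c(w)$ with $c(w)\ge0$. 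So $\|P_\perp w\|$ can grow only for overshooting particles ($c>2$), and $\Gamma_\perp$ grows only through the overshoot mass.
(iii) Per-particle envelopes on a fixed good set of initial conditions are closed by super-solution inequalities; an explicit lower bound on $\kappa$ is what rules out overshoot on that set. They give the whitening-driven growth $\Phi_G(t+1)\ge\Phi_G(t)\bigl(1+2\beta'/(\kappa+t)\bigr)$, with an explicit $\beta'>0$, of the good-set part $\Phi_G$ of $\Phi=(r-1)\Gamma_U$ (in unit-step coordinates). This growth is what must overcome the unfavourable initial ratio $\Gamma_U(0)/\Gamma_\perp(0)\approx\sqrt{(p-r)/p}<1$.
(iv) All certificate slots are monotone in $p-r$, and there is a closed-form region for large $p-r$, so finitely many interval certificates cover the whole range.

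Your (P) and (R) arguments have separate problems. Particles do not move by at most $\bar\eta$ per step; the same misconception appears in your (F) paragraph. Whitening controls only the population second moment, $\E_\nu[\delta w\,\delta w^\top]=\bar\eta^2P_{\mathrm{range}(\Sigma_\nu)}$. So a typical particle already moves by about $\bar\eta\sqrt p$, and in the weak contrast and $V^\perp$ directions, where $\Sigma_\nu^{-1/2}$ is huge, individual particles jump much further. A per-particle bound $\alpha(\sigma_0+t\bar\eta)\|x\|$ would also give $\|f_t-\E f_t\|_{L^2}$ of order $\alpha(\sigma_0+t\bar\eta)\sqrt p$, not $\alpha\bar\eta(\kappa+t)=\alpha(\sigma_0/\sqrt p+t\bar\eta)$.

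The correct route is directional. Minkowski and the whitening identity give $\|\E_{\nu_t}[ww^\top]\|_{\mathrm{op}}^{1/2}\le\bar\eta(\kappa+t)$. Also $\mathrm{Var}(f_t(X))\le\frac{\pi-1}{2\pi}\alpha^2\|\E_{\nu_t}[ww^\top]\|_{\mathrm{op}}$, because the ReLU covariance kernel has nonnegative power-series coefficients summing to $\frac{\pi-1}{2\pi}$. Write $y(t):=\alpha\|\E_{\nu_t}[ww^\top]\|_{\mathrm{op}}^{1/2}/\sigma_Y$.

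Even then, a symmetric Cauchy--Schwarz bound on the covariance term yields only about half of $0.00845$. The stated constant needs the one-sided fact that $\mathrm{Cov}(Y,f_t(X))$ stays nonnegative on the window. Its nonlinear part is nonnegative. Its linear part is proportional to $x(t):=\alpha\E_{\nu_t}[\bar u^\top w]/\sigma_Y$, which starts at $0$ and is nondecreasing. This holds because the teacher's mean force $\bar u^\top g(\hat w)$ is bounded below by a positive constant that dominates the student's own contribution to the residual while $y(t)$ stays below roughly $0.59$.

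For (R), you rely on alignment after $t_F$. But (R) is asserted for every $\alpha\le\alpha^*(t_P)$ without the condition $\alpha\bar\eta\le a_0$, and at $t_2\ge6(\kappa+t_P)\gg t_F$. (F) holds only at the single time $t_F$ and concerns the AGOP, not the particles. Nothing gives persistence of alignment or the feature--label correlation your descent step needs.

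The paper's loss drop uses no alignment at all. The mean mode $x(t)$ grows per step by at least $p_{\mathrm{MF}}(y(t))\,\alpha\bar\eta/\sigma_Y$, where $p_{\mathrm{MF}}$ is an explicit lower bound on the whitened mean increment. One compares $L(0)/\sigma_Y^2\ge1-e_{\mathrm{nl}}$ (no linear term since $x(0)=0$, and $e_{\mathrm{nl}}=O(\sqrt{r/p}\,y(0))$) with $L(t)/\sigma_Y^2\le1-c_xx(t)+\frac{\pi-1}{2\pi}y(t)^2$, where $c_x\approx0.856$. This gives a concave lower bound on $(L(0)-L(t))/\sigma_Y^2$ in the clock $\alpha\bar\eta t/\sigma_Y$. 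It is certified to be at least $0.03$ on $[0.05086,0.1]$, an interval containing the clock value at $t_2$.
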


The plateau and loss-drop constants are explicit; $a_0$ is not, since (F) transfers from
zero to small positive heads by continuity of the normalized AGOP $G_s/\alpha^2$. Its proof is analytic except on a finite $(r,p)$ region
covered by $80$ outward-rounded interval certificates. Appendix~\ref{app:mf} gives the
precise theorem and complete proof.

\subsection{Deep residual students}\label{sec:deep-main}

Depth-$6$--$16$ residual ReLU students show the same pattern with a large first-layer and small
downstream exact-polar steps (Figure~\ref{fig:deep-depth}; Appendix~\ref{app:deep}).

\begin{figure}[t]
\centering
\includegraphics[width=\linewidth]{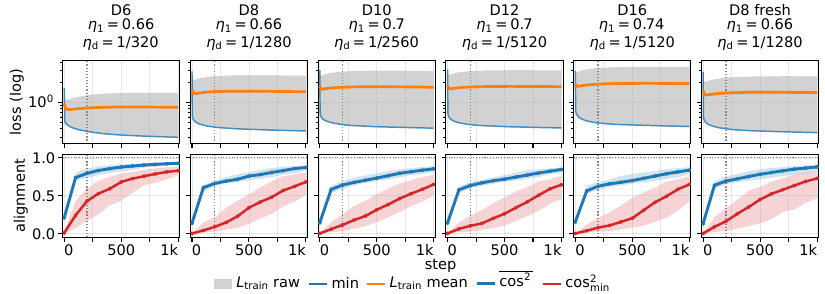}%

\caption{\textbf{Deep residual students: oscillating loss and rising alignment.}
Medians/IQRs over $100$ runs per column ($20$ problems $\times$ $5$ initialisations).
Dotted lines mark scoring windows; across columns, the median per-run
cycle-mean loss change is at most about $2.6\%$.
Settings and metrics: Appendix~\ref{app:deep}.}
\label{fig:deep-depth}
\end{figure}

\paragraph{A freezing principle.} In these students, continued feature learning on a loss plateau
appears when $W_1$ trains against a fixed downstream map
(Appendices~\ref{app:freeze} and~\ref{app:freeze-principle}). Across depths $4$, $8$, $16$ and $24$,
freezing every layer after $W_1$ lets exact polar steps keep the cycle-mean loss within $5\%$ in
almost all runs while input-AGOP $\cos^2_{\min}$ rises by $0.38$--$0.53$ over the window.
At depth $8$, training the downstream layers at $\eta_1/53$ to $\eta_1/3.3$ lowers the loss by $8$--$14\%$,
and at the full step $\eta_1$ by a median $4.7\%$; mid-run freezing or unfreezing switches the plateau
on or off (Figure~\ref{fig:freeze-switch}), with alignment still rising at the slower downstream step.
After partial pretraining with slow GD or Adam, training $W_1$ alone usually aligns features rapidly;
GD or Adam then lower the cycle-mean loss, whereas a large polar step holds it near a floor (Appendix~\ref{app:fp-retrofit}).
In the deep student, Muon's Newton--Schulz quintic and the tested momentum settings disrupt the
combination of flat loss and continued alignment, whereas a convergent iteration preserves it. The one-hidden-layer
student's small-head plateau is more robust to both (Appendix~\ref{app:fp-ns}). Outcomes for
transformers, vision transformers and a CNN remain limited (Appendix~\ref{app:fp-scope}).

\begin{figure}[t]
\centering
\includegraphics[width=\linewidth,trim=0 2bp 0 4bp,clip]{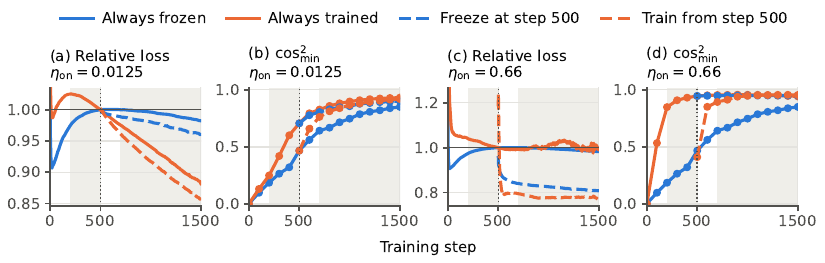}
\caption{\textbf{Freezing downstream layers can restore a loss plateau.}
Depth $8$; the first layer always trains. Dashed curves switch the deep layers at step $500$;
solid controls keep their state. Their step while training is $\eta_{\mathrm{on}}$.
Panels (a,c): relative cycle-mean loss; (b,d): $\cos^2_{\min}$.
Pooled medians over $48$ runs; protocol: Appendix~\ref{app:freeze}.}
\label{fig:freeze-switch}
\end{figure}

\section{Discussion and conclusion}\label{sec:discuss}\label{sec:1NN-discussion}\label{sec:concl}

\paragraph{A plateau is not a pause.}
The training loss and the learned representation answer different questions. The loss reports how
well the current predictor fits the data; AGOP alignment reports which input directions the predictor
has come to depend on. Under polar updates the two can separate for thousands of steps: the loss sits
on a floor set by the step size while the weights keep moving and the leading AGOP eigenspace rotates
into the teacher subspace. The learning toys show that no nonlinearity is needed for this. With a
fixed step length the dominant residual approaches a two-cycle whose loss floor scales as $\eta^2$,
and misalignment carried by small-eigenvalue directions can be nearly invisible in the loss: at
$\eta=0.3$, a unit residual hidden below $5\%$ of the floor needs at least $443$ steps per e-fold to
drain (Corollary~\ref{cor:invisible}). The mean-field theorem gives the same ordering in a nonlinear
network: under its conditions, exact recovery of the teacher subspace precedes the drop in the loss.

\paragraph{What the plateau is made of.}
In the one-hidden-layer experiments the plateau loss stays well above what the learned directions
could support. The no-oracle diagnostic $L_{\mathrm{AGOP\text{-}opt}}$ lies about $5$--$11$ times
below the cycle-mean training loss on the six cleanest ReLU configurations (seed medians over the last
$30\%$ of training; Appendix~\ref{app:additional}), and the teacher-subspace oracle
$L_{V,\mathrm{opt}}$ lies well below the plateau loss across the diagnostic suite
(Figure~\ref{fig:theory-vs-exp}(c)). Both diagnostics change the features as well as the head, so they
show that useful directions are present, not that the head alone limits the loss. How the remaining
loss splits is open: Conjecture~\ref{claim:plateau} proposes a fixed-head projection term plus an
off-subspace term of order $\eta^2(M-r_t)$, but testing it requires the fixed-head projected loss,
which we have not measured.

\paragraph{Directions without weight mass.}
At long-run ReLU checkpoints, $83$--$90\%$ of the AGOP eigenvalue mass lies in $V$
($\overline{\cos^2}\in[0.85,0.96]$), but only $29$--$50\%$ of the weight Frobenius mass does
(Appendix~\ref{app:nfa}). The weight Gram matrix is therefore not proportional to the AGOP, the
simplest form of the Neural Feature Ansatz~\citep{radhakrishnan2022nfa}; power-law variants are
untested. In the small-leakage limit, polar updates split their squared mass between $V$ and its
complement in the ratio $r_t:(M-r_t)$ (Proposition~\ref{prop:polar-identities}). This is consistent
with directions aligning while weight mass does not concentrate, although it does not by itself account
for the measured fractions.

\paragraph{Relation to catapults and edge-of-stability dynamics.}
Catapult experiments connect loss spikes to improvements in AGOP
alignment~\citep{lewkowycz2020catapult,zhu2024catapults}; here alignment improves over thousands of
plateau steps, including intervals in which the loss rises. Gradient descent at the edge of stability
self-stabilises around sharp directions~\citep{cohen2021eos,damian2023selfstabilization}; polar updates
differ in that their step length does not shrink with the gradient, which sustains motion at the
floor. The optimiser matters: in the deep students, gradient descent and Adam on the first layer lower
the loss instead of holding it at a floor (Appendix~\ref{app:fp-retrofit}), and the Newton--Schulz
quintic and the tested momentum settings disrupt the combination of a flat loss and continued alignment
(Appendix~\ref{app:fp-ns}).

\paragraph{Practical implications.}
A flat training curve might hide a latent representation (feature) learning process. In our settings,
direction-only AGOP metrics and head-refit probes reveal progress that the loss hides, and the
freezing interventions show that when the loss falls can be controlled separately from when features
are learned (Figure~\ref{fig:freeze-switch}). Whether such probes are informative for practical Muon
training at scale is an open question.

\paragraph{Open directions.}
Three questions seem most pressing: a finite-width, finite-sample version of the mean-field
separation; an account of how Newton--Schulz approximation, momentum and minibatch noise interact with
the plateau in practical Muon; and a quantitative theory of the plateau height and the alignment drift,
starting with a test of Conjecture~\ref{claim:plateau}.

\paragraph{Conclusion.}
Representation learning can decouple from loss minimization. Under polar updates, a loss plateau need
not be a pause in learning: the separation is proved, under stated conditions, for learning toys and a
mean-field model, measured in one-hidden-layer and deep residual students, and switched on and off by
freezing downstream layers. Training loss alone can therefore understate how much a network has
learned, and direction-only measurements of the representation are a useful complement.

\subsection*{AI use statement}
We used large language model assistants to support the development of the
learning-toy models, the formulation of mathematical claims and hypotheses,
and aspects of proof development and writing. These tools also assisted with
experimental design, implementation (including synthetic-data sampling),
and interpretation of results.
Additionally, we used generative AI tools to create and edit software code and
scientific figures, to check literature and references, and to draft and edit parts of the paper. We have reviewed
all AI-assisted work.
Numerical consistency checks for the learning toys accompany the analytic
proofs (Appendix~\ref{app:lin-verify}); they do not substitute for those proofs.
We take responsibility for the final content of this work, including text,
claims or artifacts produced with the aid of generative AI.

\bibliographystyle{iclr2027_conference}
\bibliography{references}

\appendix
\clearpage

\section*{Appendix: Contents}
\addcontentsline{toc}{section}{Appendix Contents}
\noindent\begin{minipage}{\linewidth}\small
\renewcommand{\arraystretch}{1.15}
\begin{tabular}{@{}p{0.85\linewidth}@{\hfill}p{0.10\linewidth}@{}}
\textbf{A.\ Learning toys: proofs, cycle classification and additional results}
  & p.~\pageref{app:lin-proofs}\\
\quad A.1\ Proof of Lemma~\ref{lem:lin-reduction} (exact reduction)
  & p.~\pageref{app:lin-reduction}\\
\quad A.2\ Proof of Theorem~\ref{thm:M2prime} and Corollary~\ref{cor:invisible}
  & p.~\pageref{app:lin-M2p}\\
\quad A.3\ The normalised-gradient recursion on a quadratic
  & p.~\pageref{app:lin-ngd}\\
\quad A.4\ The scalar cycle family and generic initialisation
  & p.~\pageref{app:lin-family}\\
\quad A.5\ Proof of Proposition~\ref{prop:M3prime} (matrix Muon)
  & p.~\pageref{app:lin-M3p}\\
\quad A.6\ Loss envelopes
  & p.~\pageref{app:lin-envelope}\\
\quad A.7\ Numerical verification of the statements
  & p.~\pageref{app:lin-verify}\\
\quad A.8\ Additional figures
  & p.~\pageref{app:lin-figs}\\[2pt]
\textbf{B.\ Proofs and supporting derivations for Section~\ref{sec:1NN}}
  & p.~\pageref{app:1NN-proofs}\\
\quad B.1\ Proof of Proposition~\ref{prop:polar-identities} (polar-update identities)
  & p.~\pageref{app:prop1-proof}\\
\quad B.2\ Period-$2$ orbits on the isotropic quadratic (Theorem~\ref{thm:p2existence})
  & p.~\pageref{app:thm1-proof}\\
\quad B.3\ Block-orthogonal equivariance
  & p.~\pageref{app:propnoV-proof}\\
\quad B.4\ Assumptions and proof of conditional secular drift
  & p.~\pageref{app:drift-floquet}\\[2pt]
\textbf{C.\ Additional experiments and theory--experiment alignment}
  & p.~\pageref{app:additional}\\
\quad C.1\ Plateau-search grid
  & p.~\pageref{app:grid}\\
\quad C.2\ Per-teacher configurations and the full ReLU grid
  & p.~\pageref{app:per-teacher-extras}\\
\quad C.3\ Configuration selection
  & p.~\pageref{app:selection}\\
\quad C.4\ Theory--experiment overlay and test-by-test detail
  & p.~\pageref{app:tests}\\
\quad C.5\ How theory and experiment align
  & p.~\pageref{app:align-discussion}\\[2pt]
\textbf{D.\ Deep Gaussian teacher--student experiments}
  & p.~\pageref{app:deep}\\
\quad D.1\ Data, teacher and student
  & p.~\pageref{app:deep-model}\\
\quad D.2\ Metrics and scoring protocol
  & p.~\pageref{app:deep-protocol}\\
\quad D.3\ Configurations and seeds
  & p.~\pageref{app:deep-configs}\\
\quad D.4\ Depth results
  & p.~\pageref{app:deep-results}\\[2pt]
\textbf{E.\ Freezing the deep layers switches the plateau on}
  & p.~\pageref{app:freeze}\\
\quad E.1\ Setup
  & p.~\pageref{app:freeze-setup}\\
\quad E.2\ Results
  & p.~\pageref{app:freeze-results}\\
\quad E.3\ Mechanism
  & p.~\pageref{app:freeze-mechanism}\\[2pt]
\textbf{F.\ A freezing principle for the plateau with feature learning}
  & p.~\pageref{app:freeze-principle}\\
\quad F.1\ Depth and width
  & p.~\pageref{app:fp-depth}\\
\quad F.2\ Training only $W_1$ after other schemes
  & p.~\pageref{app:fp-retrofit}\\
\quad F.3\ Newton--Schulz and momentum: two regimes
  & p.~\pageref{app:fp-ns}\\
\quad F.4\ Other architectures and scope
  & p.~\pageref{app:fp-scope}\\[2pt]
\textbf{G.\ The mean-field theorem: complete proof}
  & p.~\pageref{app:mf}\\
\quad G.1\ Main result
  & p.~\pageref{mfl:sec:intro}\\
\quad G.2\ The mean-field map
  & p.~\pageref{mfl:sec:pc}\\
\quad G.3\ The plateau and the loss drop
  & p.~\pageref{mfl:sec:tools}\\
\quad G.4\ The \texorpdfstring{$\alpha=0$}{alpha = 0} trajectory from the Gaussian initialization
  & p.~\pageref{mfl:sec:rb}\\
\quad G.5\ The good-set certificate at \texorpdfstring{$\alpha=0$}{alpha = 0}
  & p.~\pageref{mfl:sec:mff}\\
\quad G.6\ Evaluating the certificate
  & p.~\pageref{mfl:sec:eval}\\
\quad G.7\ Proof of the main theorem
  & p.~\pageref{mfl:sec:main}\\
\quad G.8\ Finite verification details
  & p.~\pageref{mfl:sec:repro}\\
\quad G.9\ Complete certificate data
  & p.~\pageref{mfl:sec:app}\\
\end{tabular}
\end{minipage}
\bigskip
\newpage

\section{Learning toys: proofs, cycle classification and additional results}\label{app:lin-proofs}

\paragraph{Notation.}
We use the learning models of Sections~\ref{sec:toy-setup} and~\ref{sec:toys}.
The sample matrix $X\in\mathbb R^{n\times p}$ has rows $x_i^\top$, and
$\hat\Sigma:=X^\top X/n=Q\diag(\mu_1,\ldots,\mu_p)Q^\top$, where
$Q=[q_1,\ldots,q_p]$ is orthogonal and $\mu_1>\cdots>\mu_p>0$.
The step size is $\eta>0$. For M1/M2 the teacher is $u\ne0$, the label
vector is $y=Xu$, and $z_t:=Q^\top(w_t-u)$;
$z_{t,\ge2}:=(z_{t,2},\ldots,z_{t,p})$ and
$\hat G_t:=(\sum_j\mu_j^2z_{t,j}^2)^{1/2}$.
For M3, $1\le r<p$ and $W_t,U_\star\in\mathbb R^{p\times r}$ with
$U_\star^\top U_\star=I_r$, $Y=XU_\star$, and $R_t:=W_t-U_\star$.
Write $\hat D:=\mathrm{span}(q_1,\ldots,q_r)$,
$R_{\hat D}:=P_{\hat D}R$ and $R_\perp:=(I_p-P_{\hat D})R$, where $P_S$
is the orthogonal projector onto a subspace $S$.
When $R$ has full column rank with columns in $\hat D$, write
$\hat\Sigma R=OP$ ($O^\top O=I_r$, $P\succ0$) and $K:=O^\top\hat\Sigma O$;
$K$ has eigenvalues $\mu_1,\ldots,\mu_r$.
The empirical losses are $\hat{\mathcal L}(w)=n^{-1}\|Xw-y\|_2^2$ and
$\hat{\mathcal L}(W)=n^{-1}\|XW-Y\|_F^2$; their symmetric-cycle floors are
$\mu_1\eta^2/4$ and $\frac{\eta^2}{4}\sum_{i\le r}\mu_i$, respectively.
Here $\|\cdot\|_2$ on vectors is the Euclidean norm, $\|\cdot\|_{\mathrm{op}}$
on matrices is the spectral norm, and $\|\cdot\|_F$ is the Frobenius norm.
We write $\sigma_i(B)$ for the singular values of $B$, $\Tr(B)$ for its trace,
and $B\succ0$ for positive definiteness; $B\succ C$ means $B-C\succ0$.
For a compact SVD $B=U_B\Sigma_BV_B^\top$, the polar factor is
$\Polar(B):=U_BV_B^\top$.
For nonzero vectors, $\cos^2(a,b):=(a^\top b)^2/(\|a\|_2^2\|b\|_2^2)$;
for equal-dimensional subspaces, $\cos^2_{\min}$ is the smallest squared
principal-angle cosine, as in~\eqref{eq:cos-metrics}.

\subsection{Exact reduction and its proof}\label{app:lin-reduction}

\begin{lemma}[exact reduction]\label{lem:lin-reduction}
In M2 the residual coordinates $z_t=Q^\top(w_t-u)$ obey, under NGD,
\begin{equation}
z_{t+1,k}=z_{t,k}\Bigl(1-\frac{\eta\mu_k}{\hat G_t}\Bigr),
\qquad
\hat G_t=\Bigl(\textstyle\sum_{j}\mu_j^2z_{t,j}^2\Bigr)^{1/2},
\label{eq:lin-recursion}
\end{equation}
i.e.\ the normalised-gradient recursion~\eqref{eq:M2-update-setup} on a diagonal
quadratic with $\lambda_k=\mu_k$, and $\hat{\mathcal L}(w_t)=\sum_k\mu_kz_{t,k}^2$ at every
step. In M3 the residual obeys $R_{t+1}=R_t-\eta\,\Polar(\hat\Sigma R_t)$.
\end{lemma}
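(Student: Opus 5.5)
The argument is a direct substitution into the two update rules; no estimates are needed. I will handle M2 and M3 in parallel, because both rest on the same two facts. First, the empirical loss is a shifted quadratic whose Hessian is the fixed matrix $2\hat\Sigma$. Second, the optimiser is invariant under rescaling of the gradient: NGD normalises the gradient, and $\Polar(cg)=\Polar(g)$ for $c>0$.

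\emph{Step 1 (loss and gradient).} Since $y_i=u^\top x_i$, we have $\hat{\mathcal L}(w)=n^{-1}\|X(w-u)\|_2^2=(w-u)^\top\hat\Sigma(w-u)$. Hence $\nabla\hat{\mathcal L}(w)=2\hat\Sigma(w-u)$. Substituting $w-u=Qz$ and $\hat\Sigma=Q\diag(\mu)Q^\top$ gives $\hat{\mathcal L}(w_t)=\sum_k\mu_kz_{t,k}^2$ at every step, as in~\eqref{eq:lin-loss}. It also gives $\|\nabla\hat{\mathcal L}(w_t)\|_2=2\|\diag(\mu)z_t\|_2=2\hat G_t$, using that $Q$ is orthogonal.

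\emph{Step 2 (M2 recursion).} The teacher $u$ is fixed, so $w_{t+1}-u=w_t-u-\eta\,\nabla\hat{\mathcal L}(w_t)/\|\nabla\hat{\mathcal L}(w_t)\|_2$. Multiplying on the left by $Q^\top$ gives $z_{t+1}=z_t-\eta\,\diag(\mu)z_t/\hat G_t$, because the factor $2$ cancels between numerator and denominator. Reading this coordinatewise is exactly~\eqref{eq:lin-recursion}. This is the normalised-gradient recursion on the diagonal quadratic $\sum_k\lambda_kz_k^2$ with $\lambda_k=\mu_k$. The map is well defined only when $\hat G_t\neq0$, that is, when $w_t\neq u$. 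I would note explicitly that the step is undefined at $\hat G_t=0$, or else use the convention $\Polar(0)=0$ from the M4 setup.

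\emph{Step 3 (M3).} Similarly, $\hat{\mathcal L}(W)=\Tr(R^\top\hat\Sigma R)$, so the gradient is $2\hat\Sigma R$ with $R=W-U_\star$. From the compact SVD one sees $\Polar(2\hat\Sigma R)=\Polar(\hat\Sigma R)$: the positive scale changes only the singular values, not the singular vectors. Since $U_\star$ is constant, $R_{t+1}=W_{t+1}-U_\star=R_t-\eta\Polar(\hat\Sigma R_t)$.

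I expect no real obstacle. The only points needing care are the cancellation of the factor $2$, which uses positive homogeneity of degree zero for both normalisations, and the degenerate zero-gradient case.
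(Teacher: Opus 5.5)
Your proof is correct and follows the paper's own argument essentially step for step. You write the empirical loss as the shifted quadratic $(w-u)^\top\hat\Sigma(w-u)$, cancel the factor $2$ in the gradient by the scale invariance of NGD and of $\Polar$, and rotate by the orthogonal $Q^\top$; for M3 you subtract the fixed $U_\star$. Your remark that the step is undefined when $\hat G_t=0$ is a harmless addition that the paper leaves implicit.
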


Positive gradient rescaling leaves NGD and $\Polar$ unchanged, so the gradient's factor $2$
only doubles the reference quadratic loss $\tfrac12\sum_k\lambda_ky_k^2$.

\begin{proof}
Since $y_i=u^\top x_i$, $Xw-y=X(w-u)$, hence
$\hat{\mathcal L}(w)=\frac1n\|X(w-u)\|_2^2=(w-u)^\top\hat\Sigma(w-u)
=\sum_k\mu_kz_k^2$ and $\nabla\hat{\mathcal L}(w)=2\hat\Sigma(w-u)$. The NGD
step is therefore $w_{t+1}-u=(w_t-u)-\eta\,\hat\Sigma(w_t-u)/\|\hat\Sigma(w_t-u)\|_2$;
multiplying by the orthogonal $Q^\top$ gives
$z_{t+1,k}=z_{t,k}-\eta\mu_kz_{t,k}/(\sum_j\mu_j^2z_{t,j}^2)^{1/2}$,
which is~\eqref{eq:lin-recursion} and coincides with~\eqref{eq:M2-update-setup}
for $\lambda_k=\mu_k$. The loss
$\frac12\sum_k\lambda_ky_k^2$ evaluated at $y=z$ equals
$\frac12\hat{\mathcal L}$. For M3, $XW-Y=X(W-U_\star)$, so
$\hat{\mathcal L}(W)=\Tr(R^\top\hat\Sigma R)$, $\nabla\hat{\mathcal L}=2\hat\Sigma R$
and $\Polar(2\hat\Sigma R)=\Polar(\hat\Sigma R)$.
\end{proof}

\subsection{Proof of Theorem~\ref{thm:M2prime} (scalar regression)}\label{app:lin-M2p}

Theorem~\ref{thm:M2prime} is stated in Section~\ref{sec:toys}. Items (i)--(iii) follow from
Lemmas~\ref{thm:single} and~\ref{lem:lin-reduction}; (iv) converts residual drain into teacher
alignment. A small-$\mu_k$ mode can carry substantial misalignment but little loss, at the cost of
slow decay:

\begin{corollary}[invisible misalignment drains slowly]\label{cor:invisible}
In the setting of Theorem~\ref{thm:M2prime}, let $k\ge2$ and $f\in(0,1)$. If
off-mode $k$ carries amplitude $|z_{t,k}|=Z>0$ while contributing at most a
fraction $f$ of the floor, $\mu_kZ^2\le f\mu_1\eta^2/4$, then its
leading-order contraction factor in (iii) is at least $1-f\eta^2/(2Z^2)$, so
to leading order it needs at least $2Z^2/(f\eta^2)-1$ steps to shrink by a
factor $e$ (a vacuous bound when $f\eta^2\ge2Z^2$).
\end{corollary}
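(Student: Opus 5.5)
The plan is to read the leading-order contraction factor off Theorem~\ref{thm:M2prime}(iii) and turn the loss-fraction hypothesis into an upper bound on $\mu_k/\mu_1$. Part (iii) gives, to leading order (dropping the $O(\cdot)$ correction, which is controlled by (C)), the per-step map $z_{t+1,k}=(1-2\mu_k/\mu_1)\,z_{t,k}$. So the leading-order factor is $\lambda_k:=1-2\mu_k/\mu_1$.

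\emph{Step 1 (ratio bound).} The hypothesis $\mu_kZ^2\le f\mu_1\eta^2/4$ with $Z>0$ is equivalent to
$\mu_k/\mu_1\le f\eta^2/(4Z^2)$. Therefore
\[
1-\frac{2\mu_k}{\mu_1}\;\ge\;1-\frac{f\eta^2}{2Z^2},
\]
which is the claimed lower bound on the contraction factor. When $f\eta^2<2Z^2$ the right-hand side lies in $(0,1)$. Since $\mu_k>0$ we also have $\lambda_k<1$, so $0<1-f\eta^2/(2Z^2)\le\lambda_k<1$.

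\emph{Step 2 (e-fold count).} A mode multiplied by $\lambda_k\in(0,1)$ per step needs $N\ge 1/(-\ln\lambda_k)$ steps to shrink by a factor $e$. Set $x:=f\eta^2/(2Z^2)\in(0,1)$. We have $\lambda_k\ge 1-x$, so $-\ln\lambda_k\le-\ln(1-x)$. Use the elementary inequality $-\ln(1-x)\le x/(1-x)$, which follows from $\ln(1+y)\le y$ with $y=x/(1-x)$. This gives
\[
N\;\ge\;\frac{1-x}{x}\;=\;\frac1x-1\;=\;\frac{2Z^2}{f\eta^2}-1 .
\]
If $f\eta^2\ge2Z^2$, the lower bound on $\lambda_k$ is nonpositive and the count $2Z^2/(f\eta^2)-1\le0$ is trivially true, so the statement is vacuous, as asserted.

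\emph{Main obstacle.} The only delicate point is what ``leading-order'' means. Part (iii) carries a multiplicative error $O\bigl(\tfrac{\mu_k}{\mu_1}(\varepsilon_{\mathrm{phase}}+\rho_{T_1}^2+\rho_t^2)\bigr)$. I will state the corollary for the symmetric-cycle reference factor $1-2\mu_k/\mu_1$, exactly as (iii) designates it. I will also remark that the correction changes $\lambda_k$ only by a relative $O(c_\star(1-q))$ fraction of $\mu_k/\mu_1$. Consequently the bound holds with $x$ replaced by $x(1+O(c_\star))$.

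I will not treat negative $\lambda_k$, i.e.\ $\mu_k>\mu_1/2$. The hypothesis rules this out whenever $x<1$, since $\mu_k/\mu_1\le x/2<1/2$.
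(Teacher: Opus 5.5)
Your proposal is correct and matches the paper's proof. The hypothesis gives $2\mu_k/\mu_1\le f\eta^2/(2Z^2)$, the e-fold count then follows from $-\log(1-x)\le x/(1-x)$, and the case where the bound is nonpositive is vacuous; the only differences are cosmetic (the paper applies the logarithm inequality at $\delta=2\mu_k/\mu_1$ and then uses $1/\delta-1\ge 2Z^2/(f\eta^2)-1$, whereas you first pass to $x$ by monotonicity), and your remark on the $O(\cdot)$ correction in (iii) goes beyond what the leading-order claim needs.
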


At $\eta=0.3$, a unit misalignment hidden below $5\%$ of the floor needs at
least $443$ steps per e-fold: on the symmetric cycle, feature learning that the
loss cannot see is necessarily a long phase, and it happens \emph{inside} the
plateau (Figure~\ref{fig:lin-main}a,b). Without (H) the cycle can settle on
another member $\{a,a-\eta\}$ of the period-$2$ family:
Proposition~\ref{prop:lin-family} gives its plateau, raised by
$2(a^2+(\eta-a)^2)/\eta^2$, and its first-order drain (modes with
$\eta\mu_k/\mu_1<\min(a,\eta-a)$ drain faster, e-fold ratio
$4a(\eta-a)/\eta^2$), but that the plateau then stays flat is a numerical
finding (orange in Figure~\ref{fig:lin-main}).

Here $\rho_t^2:=\sum_{k\ge2}\mu_k^2z_{t,k}^2/(\mu_1\eta)^2$ and
$q:=\max_{k\ge2}|1-2\mu_k/\mu_1|$. The handoff time $T_1$ and
phase tolerance $\varepsilon_{\mathrm{phase}}$ are those of hypotheses
\eqref{eq:M2-phase-hyp} and~\eqref{eq:M2-margin-hyp} in
Lemma~\ref{thm:single}; $\xi_\infty$ is its limiting phase shift.
The comparison point is $w_t^{\mathrm{cyc}}:=u+z_{t,1}q_1$.

\begin{proof}[Proof of Theorem~\ref{thm:M2prime}]
By Lemma~\ref{lem:lin-reduction}, $(z_t)$ is a trajectory of~\eqref{eq:M2-update-setup} for the
diagonal quadratic with $\lambda_k=\mu_k$ (the rotation $Q$ is irrelevant to
Lemma~\ref{thm:single}, which is stated in eigen-coordinates), and
$\hat{\mathcal L}(w_t)=2L(z_t)$. Items (i) and (iii) are
Lemma~\ref{thm:single}(i),(iii) verbatim; (ii) is
Lemma~\ref{thm:single}(ii) multiplied by $2$. By (iii) and the margin
hypothesis (C), $|z_{t+1,k}|\le\bar q|z_{t,k}|$ for some
$\bar q\in(q,1)$, uniformly over $k\ge2$ and $t\ge T_1$.
The factor $q$ describes the symmetric-cycle linearisation, not an
exact asymptotic rate for every phase-shifted trajectory.

For (iv), $w_t-w_t^{\mathrm{cyc}}=\sum_{k\ge2}z_{t,k}q_k$ has norm
$\|z_{t,\ge2}\|_2$. For nonzero $a,b,u$ write $\hat a=a/\|a\|_2$ and let
$\Pi_a^\perp$ be the orthogonal projector onto $a^\perp$. Since
$\cos^2(a,u)=\hat u^\top\hat a\hat a^\top\hat u$,
\[
\begin{aligned}
|\cos^2(a,u)-\cos^2(b,u)|
&\le\|\hat a\hat a^\top-\hat b\hat b^\top\|_{\mathrm{op}}=\sin\angle(a,b)\\
&=\frac{\|\Pi_a^\perp b\|_2}{\|b\|_2}
=\frac{\|\Pi_a^\perp(b-a)\|_2}{\|b\|_2}
\le\frac{\|a-b\|_2}{\|b\|_2}.
\end{aligned}
\]
which gives~\eqref{eq:M2p-align} with $a=w_t$, $b=w_t^{\mathrm{cyc}}$. If
$u\parallel q_1$ then $w_t^{\mathrm{cyc}}\parallel u$ and the cosine is $1$
(when $w_t^{\mathrm{cyc}}\neq0$). If $u\perp q_1$ then
$\langle w_t^{\mathrm{cyc}},u\rangle=\|u\|_2^2$ and
$\|w_t^{\mathrm{cyc}}\|_2^2=\|u\|_2^2+z_{t,1}^2$, so
$\cos^2(w_t^{\mathrm{cyc}},u)=\|u\|_2^2/(\|u\|_2^2+z_{t,1}^2)$; inserting (i),
$z_{t,1}^2=\eta^2/4+O(\eta|\xi_\infty|+\eta^2\rho_t^2)$.
\end{proof}

\begin{proof}[Proof of Corollary~\ref{cor:invisible}]
The hypothesis gives $\delta:=2\mu_k/\mu_1\le f\eta^2/(2Z^2)$, so the
leading-order factor $1-2\mu_k/\mu_1=1-\delta$ of Theorem~\ref{thm:M2prime}(iii)
is at least $1-f\eta^2/(2Z^2)$. If $\delta<1$, the number of steps to shrink by
$e$ at the constant factor $1-\delta\in(0,1)$ is $1/(-\log(1-\delta))$, and
$-\log(1-\delta)\le\delta/(1-\delta)$ gives
$1/(-\log(1-\delta))\ge1/\delta-1\ge2Z^2/(f\eta^2)-1$; if $\delta\ge1$ the bound
is $\le0$ and there is nothing to prove.
\end{proof}

\subsection{The normalised-gradient recursion on a quadratic}\label{app:lin-ngd}

Theorem~\ref{thm:M2prime} is proved by transporting Lemma~\ref{thm:single} below through
Lemma~\ref{lem:lin-reduction}. This appendix states the lemma and proves it.

\paragraph{Setting.} Let
$w\in\mathbb R^p$, $L(w)=\tfrac12 w^\top A w$ with $A=QDQ^\top$,
$D=\diag(\lambda_1,\ldots,\lambda_p)$, $\lambda_1>\lambda_2\ge\cdots\ge\lambda_p>0$.
Here $Q$ is orthogonal, $q_k:=Qe_k$ is the $k$th eigenvector, and $e_k$
is the $k$th standard coordinate vector. The step size is $\eta>0$;
all vector norms in this subsection are Euclidean. This local $D$ is the
diagonal eigenvalue matrix, not the population data subspace used for M3.
In eigenbasis coordinates $y_t:=Q^\top w_t$, NGD decouples component-wise:
\begin{equation}
y_{t+1,k} = y_{t,k}\bigl(1-\tfrac{\eta\lambda_k}{G_t}\bigr),
\qquad G_t = \bigl(\textstyle\sum_{j=1}^p\lambda_j^2y_{t,j}^2\bigr)^{1/2}.
\label{eq:M2-update-setup}
\end{equation}
The gradient norm is $G_t=\|\nabla L(w_t)\|_2=\|Dy_t\|_2$.
We write $y_{t,\ge2}:=(y_{t,2},\ldots,y_{t,p})$ for the off-leading coordinates.
The target direction is the leading eigenvector $q_1=Qe_1$, and
\begin{equation}
\mathrm{align}_t = \cos^2\angle(w_t,q_1) = y_{t,1}^2/\|y_t\|_2^2.
\label{eq:M2-align}
\end{equation}
Lemma~\ref{thm:single} is stated conditionally from a phase-selected
handoff $T_1$, because the scalar normalized map has a continuum of
period-$2$ cycles $\{a,a-\eta\}$.

\paragraph{Statement.}\label{sec:M2}

With this notation, define
\begin{equation}
\rho_t^2
:=
\frac{\sum_{k\ge2}\lambda_k^2y_{t,k}^2}{(\lambda_1\eta)^2}.
\label{eq:rho-M2}
\end{equation}
We state the result conditionally from a phase-selected
handoff time \(T_1\). This is necessary because the scalar normalized
map has a continuum of period-$2$ cycles.

\begin{lemma}[Conditional plateau with alignment for normalised gradient descent on a quadratic]\label{thm:single}
Let
\begin{equation}
q \;:=\; \max_{k\ge 2}\left|1-\tfrac{2\lambda_k}{\lambda_1}\right|
\;<\; 1
\label{eq:M2-q-defn}
\end{equation}
(strict from the spectral gap $\lambda_1>\lambda_2$ and $\lambda_k>0$
for $k\ge 2$). Fix $C_H>0$. There exists $c_\star>0$, depending only
on $C_H$ and the fixed eigenvalue ratios $\lambda_k/\lambda_1$, for
which the following holds. Suppose at some time $T_1$, for
$\varepsilon_{\mathrm{phase}}\ge0$,
\begin{equation}
\textnormal{(H)}\qquad
\left|y_{T_1,1}-\frac{\eta}{2}\right|
\le C_H\eta\varepsilon_{\mathrm{phase}},
\label{eq:M2-phase-hyp}
\end{equation}
and that the perturbation lies inside the contraction margin,
\begin{equation}
\textnormal{(C)}\qquad
\varepsilon_{\mathrm{phase}}+\rho_{T_1}^2+\rho_t^2
\;\le\;
c_\star(1-q)
\qquad\text{for all }t\ge T_1.
\label{eq:M2-margin-hyp}
\end{equation}
All $O(\cdot)$ constants below may depend on $C_H$ and these eigenvalue
ratios, but not on $t$ or $\eta$. Then for all $t\ge T_1$:
\begin{enumerate}[leftmargin=*, label=\textnormal{(\roman*)}]
\item There exists a constant phase shift $\xi_\infty
=
O\!\left(\eta(\varepsilon_{\mathrm{phase}}+\rho_{T_1}^2)\right)$ such that
\begin{equation}
y_{t,1}
=
(-1)^{t-T_1}\frac{\eta}{2}
+
\xi_\infty
+
O(\eta\rho_t^2).
\label{eq:M2-thm-cycle}
\end{equation}

\item The loss satisfies
\begin{equation}
L(w_t)
=
\frac{\lambda_1\eta^2}{8}
+
\frac12\sum_{k\ge2}\lambda_k y_{t,k}^2
+
O\!\left(
\lambda_1\eta^2
(\varepsilon_{\mathrm{phase}}+\rho_{T_1}^2+\rho_t^2)
\right).
\label{eq:M2-thm-loss}
\end{equation}

\item For each $k\ge2$,
\begin{equation}
y_{t+1,k}
=
y_{t,k}
\left(1-\frac{2\lambda_k}{\lambda_1}\right)
+
O\!\left(
(\lambda_k/\lambda_1)
(\varepsilon_{\mathrm{phase}}+\rho_{T_1}^2+\rho_t^2)
|y_{t,k}|
\right).
\label{eq:M2-thm-offmode}
\end{equation}

\item For some $\bar q\in(q,1)$,
\[
\|y_{t,\ge2}\|_2\le \bar q^{\,t-T_1}\|y_{T_1,\ge2}\|_2,\qquad
1-\frac{y_{t,1}^2}{\|y_t\|_2^2}
\le
\frac{16\|y_{T_1,\ge2}\|_2^2}{\eta^2}\,
\bar q^{\,2(t-T_1)}.
\]
The worst squared multiplier of the symmetric-cycle linearisation is
$q^2$; this need not be the exact asymptotic rate of an individual
trajectory.
\end{enumerate}
\end{lemma}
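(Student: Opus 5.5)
The plan is to write the recursion~\eqref{eq:M2-update-setup} as a perturbation of the scalar normalised map. Let $\beta_t:=\lambda_1\eta/G_t$, so that $y_{t+1,k}=y_{t,k}(1-\beta_t\lambda_k/\lambda_1)$. Since $G_t^2=\lambda_1^2y_{t,1}^2+(\lambda_1\eta)^2\rho_t^2$, we have $G_t=\lambda_1|y_{t,1}|\sqrt{1+\eta^2\rho_t^2/y_{t,1}^2}$. Whenever $|y_{t,1}|$ is of order $\eta/2$, this gives
\[
\beta_t=\eta/|y_{t,1}|+O(\rho_t^2)\cdot\eta/|y_{t,1}|.
\]
Hence the leading coordinate obeys $y_{t+1,1}=y_{t,1}-\eta\,\sgn(y_{t,1})+O(\eta\rho_t^2)$, which is the scalar map $y\mapsto y-\eta\sgn(y)$ plus a remainder of size $O(\eta\rho_t^2)$.

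The first step is to track the phase. Write $y_{t,1}=(-1)^{t-T_1}\tfrac\eta2+\xi_t$. The scalar map sends $\tfrac\eta2+\xi$ to $-\tfrac\eta2+\xi$ and $-\tfrac\eta2+\xi$ to $\tfrac\eta2+\xi$ as long as the sign is preserved, that is, as long as $|\xi|<\eta/2$. Consequently $\xi_{t+1}=\xi_t+O(\eta\rho_t^2)$, with $\xi_{T_1}=O(\eta\varepsilon_{\mathrm{phase}})$ by (H).

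The second step is an induction bootstrap carried out jointly with the off-modes. Assume inductively that $|\xi_s|\le\eta/8$ for $s\le t$. Then $\beta_s=2+O(\varepsilon_{\mathrm{phase}}+\rho_{T_1}^2+\rho_s^2)$. Substituting this into the $k$-th coordinate gives (iii) directly: the multiplier is $1-2\lambda_k/\lambda_1+O\bigl((\lambda_k/\lambda_1)(\cdots)\bigr)$. Choosing $c_\star$ small in (C), the perturbation of the multiplier is at most $(1-q)/2$. So every off-mode contracts by $\bar q:=(1+q)/2<1$, which gives the first half of (iv) and shows that $\rho_s^2\le\bar q^{2(s-T_1)}\rho_{T_1}^2$ decays geometrically.

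This geometric decay is what closes the bootstrap. Summing the phase increments,
\[
\sum_{s\ge T_1}|\xi_{s+1}-\xi_s|\le C\eta\sum_s\rho_s^2\le C\eta\rho_{T_1}^2/(1-\bar q^2).
\]
To control this we need either a margin assumption giving $\rho_{T_1}^2/(1-q)\lesssim c_\star$, or else we absorb the factor $(1-q)^{-1}$ into the $O(\cdot)$ constants, which depend on the eigenvalue ratios. Either way, $\xi_t$ stays within $O(\eta(\varepsilon_{\mathrm{phase}}+\rho_{T_1}^2))\le\eta/8$ for $c_\star$ small. The partial sums also converge to a limit $\xi_\infty$, and the tail satisfies $\xi_t-\xi_\infty=O(\eta\sum_{s\ge t}\rho_s^2)=O(\eta\rho_t^2)$, again by geometric decay. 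This proves (i).

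I expect this step to be the main obstacle. The sign-preservation and summability need to be closed simultaneously with the off-mode contraction, and the tail estimate must be expressed through $\rho_t^2$ rather than $\rho_{T_1}^2$. The fix for the latter is to use the ratio $\rho_{s}\le\bar q^{s-t}\rho_t$ for $s\ge t$.

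The remaining items follow by direct computation.

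\textbf{(ii).} Expand $L=\tfrac12\lambda_1y_{t,1}^2+\tfrac12\sum_{k\ge2}\lambda_ky_{t,k}^2$ and insert (i). This gives $\tfrac12\lambda_1 y_{t,1}^2=\lambda_1\eta^2/8+O\bigl(\lambda_1\eta(|\xi_\infty|+\eta\rho_t^2)\bigr)$, which lies within the stated error.

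\textbf{(iv), alignment half.} Since $|y_{t,1}|\ge\eta/4$ by the bootstrap,
\[
1-\frac{y_{t,1}^2}{\|y_t\|^2}=\frac{\|y_{t,\ge2}\|^2}{\|y_t\|^2}\le\frac{\|y_{t,\ge2}\|^2}{y_{t,1}^2}\le\frac{16\|y_{t,\ge2}\|^2}{\eta^2},
\]
and the geometric bound on $\|y_{t,\ge2}\|$ finishes. Finally, the remark about $q^2$ follows because at $\beta\equiv2$ each mode is multiplied by exactly $1-2\lambda_k/\lambda_1$.
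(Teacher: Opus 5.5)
Your argument is essentially the paper's proof. Both write $y_{t,1}=(-1)^{t-T_1}\frac{\eta}{2}+\xi_t$, show the phase increment is $O(\eta\rho_t^2)$, close a bootstrap jointly with the off-mode contraction supplied by (C), and telescope using the geometric decay of $\rho_t$. The tail estimate via $\rho_s\le\bar q^{\,s-t}\rho_t$ is also the same. The only methodological difference is minor. The paper isolates the leading drift $2s_t\eta\rho_t^2$ and uses its alternating sign, whereas you sum absolute values. This costs you a factor $(1-q)^{-1}$, which the allowed constants absorb.

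One step needs correcting. The inductive hypothesis $|\xi_s|\le\eta/8$ does not give $\beta_s=2+O(\varepsilon_{\mathrm{phase}}+\rho_{T_1}^2+\rho_s^2)$. It only gives $\beta_s\in[1.6,\tfrac83]$ up to $O(\rho_s^2)$, and then $|1-\beta_s\lambda_2/\lambda_1|$ can exceed $1$ when $\lambda_2/\lambda_1$ is close to $1$, so the off-mode contraction fails.

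Induct instead on $|\xi_s|\le B\eta(\varepsilon_{\mathrm{phase}}+\rho_{T_1}^2)$. Fix $B:=C_H+C/(1-\bar q^2)$ first, where $C$ is your increment constant. Then choose $c_\star$ so that $Bc_\star\le\tfrac18$ and the multiplier perturbation $(\lambda_k/\lambda_1)|\beta_s-2|$ is at most $(1-q)/2$. This closes without circularity: $C$ depends only on the lower bound $|y_{s,1}|\ge3\eta/8$, not on $B$, and your summation returns exactly the quantitative bound.
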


The contraction-margin hypothesis~\eqref{eq:M2-margin-hyp} ensures the
\emph{perturbed} per-step factor $|1-2\lambda_k/\lambda_1| +
O(\varepsilon_{\mathrm{phase}}+\rho_{T_1}^2+\rho_t^2)$ stays uniformly
below $1$; the strict spectral gap $q<1$ is necessary, the
margin-relative smallness of the perturbation in (C) is what makes the
geometric off-mode decay (used both in the bootstrap of (i)--(iii) and
the conclusion of (iv)) uniform in $t$. The absolute value in the definition of $q$
accounts for sign-flipping off-modes with $\lambda_k>\lambda_1/2$.
The handoff hypothesis (H) is assumed; no entry-time result is asserted.

\paragraph{Proof of Lemma~\ref{thm:single}.}\label{app:M2}

\paragraph{Setup.}
Work in the eigenbasis of \(A\). With \(y_t:=Q^\top w_t\), the NGD update is
\begin{equation}
y_{t+1,k}
=
y_{t,k}
\left(1-\frac{\eta\lambda_k}{G_t}\right),
\qquad
G_t
:=
\|Dy_t\|_2
=
\left(\sum_j \lambda_j^2 y_{t,j}^2\right)^{1/2}.
\label{eq:M2-app-update}
\end{equation}
Recall that
\begin{equation}
\rho_t^2
=
\frac{\sum_{k\ge 2}\lambda_k^2 y_{t,k}^2}{(\lambda_1\eta)^2}.
\label{eq:M2-rho-proof}
\end{equation}
The proof starts from the handoff hypothesis~\eqref{eq:M2-phase-hyp};
it does not require or assert convergence into this neighbourhood from
arbitrary initialisation.

\paragraph{Remark on phase selection.}
If \(\rho_t=0\), the leading-mode map reduces to
\begin{equation}
y_{t+1,1}=y_{t,1}-\eta\,\sgn(y_{t,1}).
\label{eq:M2-scalar-map}
\end{equation}
This scalar map has the continuum of period-\(2\) cycles
\(\{a,a-\eta\}\), \(a\in(0,\eta)\). Only the symmetric cycle \(a=\eta/2\)
has constant leading loss \(\lambda_1\eta^2/8\). For a generic cycle
\(\{a,a-\eta\}\), the leading loss alternates between
\begin{equation}
\frac12\lambda_1a^2
\qquad\text{and}\qquad
\frac12\lambda_1(a-\eta)^2.
\label{eq:M2-generic-cycle-loss}
\end{equation}
Thus hypothesis~\eqref{eq:M2-phase-hyp} is a phase-selection condition.

\paragraph{Step A: Leading-mode residual recurrence.}
Let
\begin{equation}
s_t:=(-1)^{t-T_1},
\qquad
y_{t,1}=s_t\frac{\eta}{2}+\sigma_t.
\label{eq:M2-sigma-def}
\end{equation}
By~\eqref{eq:M2-phase-hyp},
\begin{equation}
\sigma_{T_1}
=
O(\eta\varepsilon_{\mathrm{phase}}).
\label{eq:M2-sigma-init}
\end{equation}
Assume temporarily that \(|\sigma_t|/\eta\ll1\), so that \(\sgn(y_{t,1})=s_t\). Then
\begin{equation}
y_{t,1}
=
s_t\frac{\eta}{2}
\left(1+2s_t\sigma_t/\eta\right).
\label{eq:M2-leading-factor}
\end{equation}
Using~\eqref{eq:M2-rho-proof}, factor \(G_t\) as
\begin{equation}
G_t
=
\frac{\lambda_1\eta}{2}
\left(1+2s_t\sigma_t/\eta\right)
\sqrt{1+v_t},
\qquad
v_t
:=
\frac{4\rho_t^2}{(1+2s_t\sigma_t/\eta)^2}.
\label{eq:M2-Gt-factored}
\end{equation}
The cancellation used below is exact:
\begin{equation}
\frac{y_{t,1}}{1+2s_t\sigma_t/\eta}
=
\frac{s_t\eta}{2}.
\label{eq:M2-exact-identity}
\end{equation}
Hence
\begin{equation}
\frac{\eta\lambda_1 y_{t,1}}{G_t}
=
\frac{s_t\eta}{\sqrt{1+v_t}}.
\label{eq:M2-normalized-leading-step}
\end{equation}
Substituting into the leading-coordinate update gives
\begin{align}
y_{t+1,1}
&=
y_{t,1}
-
\frac{s_t\eta}{\sqrt{1+v_t}}
\notag\\
&=
s_t\frac{\eta}{2}+\sigma_t
-
s_t\eta
\left(1-\frac12v_t+O(v_t^2)\right)
\notag\\
&=
-s_t\frac{\eta}{2}
+
\sigma_t
+
\frac{s_t\eta}{2}v_t
+
O(\eta v_t^2).
\label{eq:M2-leading-one-step}
\end{align}
Since
\begin{equation}
v_t
=
4\rho_t^2
+
O\!\left(\rho_t^2|\sigma_t|/\eta\right),
\label{eq:M2-v-expansion}
\end{equation}
and \(s_{t+1}=-s_t\), identifying
\(y_{t+1,1}=s_{t+1}\eta/2+\sigma_{t+1}\) yields
\begin{equation}
\sigma_{t+1}
=
\sigma_t
+
2s_t\eta\rho_t^2
+
E_t,
\qquad
E_t
=
O\!\left(\eta\rho_t^4+\rho_t^2|\sigma_t|\right).
\label{eq:M2-sigma-recurrence}
\end{equation}
Here $E_t$ denotes the deterministic remainder in this recurrence.
Every error term carries a factor of \(\rho_t^2\). In particular, if \(\rho_t=0\), then
\(\sigma_{t+1}=\sigma_t\) exactly.

\paragraph{Step B: Bootstrap and convergence of the phase shift.}
Set
\begin{equation}
\varepsilon_\star
:=
\varepsilon_{\mathrm{phase}}+\rho_{T_1}^2.
\label{eq:M2-eps-star}
\end{equation}
We show that \(|\sigma_t|=O(\eta\varepsilon_\star)\). Assume on an interval
\(T_1\le s\le T\) that
\begin{equation}
|\sigma_s|\le B\eta\varepsilon_\star.
\label{eq:M2-bootstrap-assumption}
\end{equation}
Then
\begin{equation}
G_s
=
\frac{\lambda_1\eta}{2}
\left(1+O(\varepsilon_\star+\rho_s^2)\right).
\label{eq:M2-Gt-bootstrap}
\end{equation}
Therefore, for each \(k\ge2\),
\begin{align}
y_{s+1,k}
&=
y_{s,k}
\left(
1-\frac{\eta\lambda_k}{G_s}
\right)
\notag\\
&=
y_{s,k}
\left(
1-\frac{2\lambda_k}{\lambda_1}
+
O\!\left((\lambda_k/\lambda_1)(\varepsilon_\star+\rho_s^2)\right)
\right).
\label{eq:M2-offmode-bootstrap}
\end{align}
Under the contraction-margin
hypothesis~\eqref{eq:M2-margin-hyp}, the perturbed per-step factor in
the off-mode update satisfies $|1 - 2\lambda_k/\lambda_1| +
O(\varepsilon_\star + \rho_s^2) \le q'$ for some $q' \in (q, 1)$
uniformly over $k \ge 2$ and $s \ge T_1$, hence
\begin{equation}
\rho_{s+1}\le q'\rho_s.
\label{eq:M2-rho-contraction}
\end{equation}
Consequently,
\begin{equation}
\sum_{s=T_1}^{T}\rho_s^2
\le
C\rho_{T_1}^2,
\qquad
\sum_{s=T_1}^{T}\rho_s^4
\le
C\rho_{T_1}^4,
\label{eq:M2-rho-sums}
\end{equation}
where \(C\) may depend on the spectral contraction margin but not on \(T\).

Telescoping~\eqref{eq:M2-sigma-recurrence},
\begin{equation}
\sigma_t
=
\sigma_{T_1}
+
2\eta\sum_{s=T_1}^{t-1}s_s\rho_s^2
+
\sum_{s=T_1}^{t-1}E_s.
\label{eq:M2-sigma-telescope}
\end{equation}
The main sum is alternating with geometrically decreasing magnitudes, hence
\begin{equation}
\left|
\sum_{s=T_1}^{t-1}s_s\rho_s^2
\right|
\le
C\rho_{T_1}^2.
\label{eq:M2-alternating-bound}
\end{equation}
Using~\eqref{eq:M2-bootstrap-assumption} and~\eqref{eq:M2-rho-sums},
\begin{align}
\sum_{s=T_1}^{t-1}|E_s|
&\le
C\eta\sum_{s=T_1}^{t-1}\rho_s^4
+
C\sum_{s=T_1}^{t-1}\rho_s^2|\sigma_s|
\notag\\
&\le
C\eta\rho_{T_1}^4
+
CB\eta\varepsilon_\star\rho_{T_1}^2
\notag\\
&\le
C_B\eta\varepsilon_\star^2.
\label{eq:M2-error-sum-bound}
\end{align}
Combining
\eqref{eq:M2-sigma-init},
\eqref{eq:M2-sigma-telescope},
\eqref{eq:M2-alternating-bound}, and
\eqref{eq:M2-error-sum-bound},
\begin{equation}
|\sigma_t|
\le
C_1\eta\varepsilon_\star
+
C_2B\eta\varepsilon_\star^2.
\label{eq:M2-bootstrap-close-pre}
\end{equation}
Choose \(B>2C_1\), and then take \(\varepsilon_\star\) sufficiently small
so that \(C_2\varepsilon_\star\le 1/2\). The usual first-exit argument
closes the bootstrap:
\begin{equation}
|\sigma_t|
\le
B\eta\varepsilon_\star
\qquad
\text{for all }t\ge T_1.
\label{eq:M2-sigma-bound}
\end{equation}

The same estimates show that \(\sum_{s\ge T_1}E_s\) converges absolutely.
Define
\begin{equation}
\xi_\infty
:=
\sigma_{T_1}
+
2\eta\sum_{s=T_1}^{\infty}s_s\rho_s^2
+
\sum_{s=T_1}^{\infty}E_s.
\label{eq:M2-xi-infty}
\end{equation}
Then
\begin{equation}
\xi_\infty
=
O(\eta\varepsilon_{\mathrm{phase}})
+
O(\eta\rho_{T_1}^2)
=
O(\eta\varepsilon_\star).
\label{eq:M2-xi-bound}
\end{equation}
Applying the same argument to the tail from \(t\) to \(\infty\) gives
\begin{equation}
|\sigma_t-\xi_\infty|
=
O(\eta\rho_t^2).
\label{eq:M2-tail-bound}
\end{equation}
Therefore
\begin{equation}
y_{t,1}
=
(-1)^{t-T_1}\frac{\eta}{2}
+
\xi_\infty
+
O(\eta\rho_t^2),
\label{eq:M2-cycle-final}
\end{equation}
which proves~\eqref{eq:M2-thm-cycle}.

\paragraph{Step C: Off-mode update.}
From~\eqref{eq:M2-cycle-final},
\begin{equation}
G_t
=
\frac{\lambda_1\eta}{2}
\left(1+O(\varepsilon_\star+\rho_t^2)\right).
\label{eq:M2-Gt-cycle}
\end{equation}
Substituting into~\eqref{eq:M2-app-update}, for \(k\ge2\),
\begin{align}
y_{t+1,k}
&=
y_{t,k}
\left(
1-\frac{\eta\lambda_k}{G_t}
\right)
\notag\\
&=
y_{t,k}
\left(
1-\frac{2\lambda_k}{\lambda_1}
\right)
+
O\!\left(
(\lambda_k/\lambda_1)(\varepsilon_\star+\rho_t^2)|y_{t,k}|
\right).
\label{eq:M2-offmode-final}
\end{align}
This proves~\eqref{eq:M2-thm-offmode}.

\paragraph{Step D: Loss expansion.}
Using~\eqref{eq:M2-cycle-final},
\begin{equation}
y_{t,1}^2
=
\frac{\eta^2}{4}
+
O\!\left(\eta^2(\varepsilon_\star+\rho_t^2)\right).
\label{eq:M2-leading-square}
\end{equation}
Hence
\begin{align}
L(w_t)
&=
\frac12\lambda_1y_{t,1}^2
+
\frac12\sum_{k\ge2}\lambda_k y_{t,k}^2
\notag\\
&=
\frac{\lambda_1\eta^2}{8}
+
\frac12\sum_{k\ge2}\lambda_k y_{t,k}^2
+
O\!\left(\lambda_1\eta^2(\varepsilon_\star+\rho_t^2)\right).
\label{eq:M2-loss-final}
\end{align}
This proves~\eqref{eq:M2-thm-loss}.

\paragraph{Step E: Alignment bound and linearised rate.}
By~\eqref{eq:M2-sigma-bound}, taking $c_\star$ sufficiently small ensures
$|y_{t,1}|\ge\eta/4$ for all $t\ge T_1$. The componentwise bound used
in~\eqref{eq:M2-rho-contraction} also gives
$\|y_{t,\ge2}\|_2\le\bar q^{\,t-T_1}\|y_{T_1,\ge2}\|_2$
for some $\bar q\in(q,1)$. Hence
\begin{equation}
1-\frac{y_{t,1}^2}{\|y_t\|_2^2}
=
\frac{\|y_{t,\ge2}\|_2^2}{y_{t,1}^2+\|y_{t,\ge2}\|_2^2}
\le
\frac{16}{\eta^2}\,
\bar q^{\,2(t-T_1)}\|y_{T_1,\ge2}\|_2^2.
\label{eq:M2-align-error}
\end{equation}
This proves (iv). On the exactly symmetric limiting cycle, the
linearised $k$-th off-mode multiplier is
$1-2\lambda_k/\lambda_1$, whose worst squared magnitude is $q^2$.
A nonzero limiting phase shift instead gives alternating multipliers;
even on the symmetric cycle, a mode absent from the trajectory need not
determine its asymptotic rate. The rigorous conclusion here is the
uniform geometric bound above. \qed

\paragraph{Numerical verification.}
\(p = 8\), \(\lambda_1 = 1\), off-mode tail
\([0.30, 0.10, 0.05, 0.02, 0.01, 0.005, 0.001]\),
symmetric init \(y_{0,1} = (k + \tfrac12)\eta\),
off-modes \(y_{0,j} = c\,\eta\) for \(j \ge 2\),
\(n_{\mathrm{steps}} = 4000\).
The measured loss is the mean of the final $200$ iterates; decay$_8$
is $\exp(b)$, where $b$ is the least-squares slope of
$\log|y_{t,8}|$ over $t=2000,\ldots,4000$.

\begin{table}[h]
\caption{Finite-run comparison with the symmetric-cycle reference:
the averaged loss is within $1.1\%$ of the reference floor, and the fitted
slow-mode multiplier is within $10^{-4}$ of $0.998$.
These initialisations give $\rho_0^2\le0.092$ and the displayed starting
alignments; they do not certify the conditional handoff hypotheses.
Final residual alignment rounds to $1.000$. Small nonzero phase shifts
remain, so $0.998^2$ is a linearised squared-rate reference, not an exact
rate established by this table.}
\label{tab:M2-verify}
\centering
\makebox[\linewidth][c]{\small
\begin{tabular}{cccc|cc|cc}
\toprule
$\eta$ & $k$ & $c$ & $\mathrm{align}(0)$
& $\lambda_1\eta^2/8$ pred & $L_{\mathrm{late}}$ measured
& $1 - 2\lambda_8/\lambda_1$ pred & decay$_8$ measured \\
\midrule
0.10 & 50 & 0.70 & 0.999 & 0.001250 & 0.001251 & 0.998 & 0.998 \\
0.20 &  8 & 0.94 & 0.921 & 0.005000 & 0.005033 & 0.998 & 0.998 \\
0.50 &  2 & 0.94 & 0.503 & 0.031250 & 0.031583 & 0.998 & 0.998 \\
\bottomrule
\end{tabular}}
\end{table}

\subsection{The scalar cycle family and generic initialisation}\label{app:lin-family}

\begin{proposition}[the scalar cycle family]\label{prop:lin-family}
In M2 (or M1):
\begin{enumerate}[leftmargin=*, label=\textnormal{(\alph*)}]
\item If $z_{T,k}=0$ for all $k\ge2$, the off-modes stay zero and
$z_{t+1,1}=z_{t,1}-\eta\,\sgn(z_{t,1})$. Every orbit that does not hit $0$ is,
after at most $\lceil|z_{T,1}|/\eta\rceil$ steps, the period-$2$ orbit
$\{a,a-\eta\}$ for some $a\in(0,\eta)$. The loss alternates between
$\mu_1a^2$ and $\mu_1(\eta-a)^2$, and its two-step mean is the floor
$\mu_1\eta^2/4$ times $2(a^2+(\eta-a)^2)/\eta^2\ge1$, with equality iff
$a=\eta/2$.
\item Once mode $1$ is cycling, i.e.\ along any stretch of trajectory with
$a_t:=|z_{t,1}|\in[c\eta,(1-c)\eta]$ for some $c>0$, write
$\chi_k:=\eta\mu_k/\mu_1$ and
$\tilde\rho_t:=(\sum_{j\ge2}\mu_j^2z_{t,j}^2)^{1/2}/(\mu_1a_t)$.
As $\tilde\rho_t\to0$ with $c$ fixed,
\[
z_{t+2,k}=\Bigl(1-\frac{\chi_k}{a_t}\Bigr)\Bigl(1-\frac{\chi_k}{\eta-a_t}\Bigr)z_{t,k}
+O_c\bigl(\tilde\rho_t^2|z_{t,k}|\bigr),\qquad k\ge2 .
\]
If $\chi_k<\min(a_t,\eta-a_t)$, the displayed leading-order factor lies in
$\bigl(0,(1-2\mu_k/\mu_1)^2\bigr]$, with equality iff $a_t=\eta/2$.
Thus the linearised map at fixed cycle amplitude drains these modes at
least as fast as the symmetric-cycle linearisation; the e-fold time is
shortened by $4a_t(\eta-a_t)/\eta^2$ to leading order in $\chi_k$. If $\chi_k\ge\min(a_t,\eta-a_t)$ the factor can be
negative, or exceed $1$ in absolute value.
\item (Selection at frozen off-mode content.) For a fixed
$g\in(0,\mu_1\eta/2)$, the map
$T_g(y)=y-\eta\mu_1y/\sqrt{\mu_1^2y^2+g^2}$ (the mode-$1$ update with the
off-mode gradient norm frozen at $g$) has the unique symmetric $2$-cycle
$\{a_g,-a_g\}$, $a_g^2=(\eta/2)^2-g^2/\mu_1^2$, which is locally attracting
with two-step multiplier $(1-s'(a_g))^2$, $s'(a_g)=8g^2/(\mu_1^2\eta^2)\in(0,2)$.
\end{enumerate}
\end{proposition}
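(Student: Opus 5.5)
All three parts reduce to the exact recursion of Lemma~\ref{lem:lin-reduction}, $z_{t+1,k}=z_{t,k}(1-\eta\mu_k/\hat G_t)$. Parts (a) and (c) are then explicit scalar computations. Part (b) is a two-step product in which the normalising gradient norm is frozen at its leading-mode value.

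For (a), I first show that the off-modes stay zero: if $z_{T,k}=0$ for $k\ge2$, each off-mode is multiplied by a finite factor and so remains zero. We then have $\hat G_t=\mu_1|z_{t,1}|$, and the leading update becomes $z_{t+1,1}=z_{t,1}-\eta\,\sgn(z_{t,1})$. While $|z_{t,1}|\ge\eta$ the magnitude decreases by exactly $\eta$ and the sign is kept. After at most $\lceil|z_{T,1}|/\eta\rceil$ steps the orbit therefore enters $(-\eta,\eta)\setminus\{0\}$. A point $a\in(0,\eta)$ maps to $a-\eta\in(-\eta,0)$, which maps back to $a$; the negative case is symmetric and gives the same unordered pair up to relabelling. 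On the cycle the loss is $\mu_1a^2$ and $\mu_1(\eta-a)^2$. The two-step mean divided by $\mu_1\eta^2/4$ is $2(a^2+(\eta-a)^2)/\eta^2$, which is at least $1$ by $a^2+b^2\ge(a+b)^2/2$, with equality iff $a=\eta-a$.

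For (b), I write $\hat G_t=\mu_1a_t\sqrt{1+\tilde\rho_t^2}=\mu_1a_t(1+O(\tilde\rho_t^2))$. The one-step factor for mode $k$ is then $1-\chi_k/a_t+O_c(\tilde\rho_t^2)$, using $a_t\ge c\eta$ and $\chi_k\le\eta$ to bound the error. The leading mode moves to $|z_{t+1,1}|=\eta-a_t+O_c(\eta\tilde\rho_t^2)$, by the same computation as Step A of Lemma~\ref{thm:single}. We also need $\tilde\rho_{t+1}=O_c(\tilde\rho_t)$, which holds because all factors are bounded. Multiplying the two one-step factors gives the displayed product, with remainder $O_c(\tilde\rho_t^2|z_{t,k}|)$.

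For the comparison in (b), set $x=\chi_k$, $b=a_t$ and $b'=\eta-b$. The product is $1-x\eta/(bb')+x^2/(bb')$. It is positive when $x<\min(b,b')$, since each factor is then positive. For fixed $x<\eta/2$ the expression $1-x(\eta-x)/(bb')$ is increasing in $bb'$. The product $bb'$ is maximised at $b=\eta/2$, where the expression equals $(1-2x/\eta)^2=(1-2\mu_k/\mu_1)^2$. Equality therefore holds iff $a_t=\eta/2$. For the e-fold ratio, the log of the factor is $-x\eta/(bb')+O(x^2)$ against $-4x/\eta+O(x^2)$ on the symmetric cycle, so the ratio of e-fold times is $4bb'/\eta^2$. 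When $x\ge\min(b,b')$ one factor is nonpositive, and simple examples show the sign flip or the modulus exceeding $1$.

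For (c), I look for a symmetric 2-cycle by solving $T_g(a)=-a$, that is, $2a=\eta\mu_1a/\sqrt{\mu_1^2a^2+g^2}$. For $a>0$ this gives $\mu_1^2a^2+g^2=\mu_1^2\eta^2/4$, so $a_g^2=(\eta/2)^2-g^2/\mu_1^2$, which is positive because $g<\mu_1\eta/2$. Uniqueness among symmetric cycles follows because $y/\sqrt{\mu_1^2y^2+g^2}$ is strictly increasing.

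For stability, $T_g$ is odd, so $T_g'$ is even and the two-step multiplier is $T_g'(a_g)^2$. Writing $T_g(y)=y-s(y)$, we have $s'(y)=\eta\mu_1g^2/(\mu_1^2y^2+g^2)^{3/2}$. Substituting $\mu_1^2a_g^2+g^2=\mu_1^2\eta^2/4$ gives $s'(a_g)=8g^2/(\mu_1^2\eta^2)$. This lies in $(0,2)$ because $0<g^2<\mu_1^2\eta^2/4$, so $|1-s'(a_g)|<1$ and the cycle is locally attracting.

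The main obstacle is the bookkeeping in (b): the $O_c$ remainder must be uniform. This means checking that the second-step normaliser uses $\tilde\rho_{t+1}$ with $a_{t+1}\approx\eta-a_t$ still bounded below by $c\eta$ up to $O(\tilde\rho_t^2)$, and that cross terms are absorbed into the stated remainder.
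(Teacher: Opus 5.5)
Your proposal is correct and follows the paper's proof essentially step for step. It uses the same exact recursion, the same sum rule $|z_{t,1}|+|z_{t+1,1}|=\eta(1+\tilde\rho_t^2)^{-1/2}$, the same identity $(1-\chi/a_t)(1-\chi/(\eta-a_t))=1-\chi(\eta-\chi)/(a_t(\eta-a_t))$, and the same derivative computation in (c). Your appeal to Step~A amounts to that sum rule, and it holds for any $a_t\le(1-c)\eta$ once $\tilde\rho_t$ is small, not only near the symmetric phase.

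One small fix in (c): uniqueness already follows from your explicit solution $\mu_1^2a^2+g^2=\mu_1^2\eta^2/4$, or equivalently from $s(a)/a$ being strictly decreasing, as the paper argues. The strict monotonicity of $y/\sqrt{\mu_1^2y^2+g^2}$ that you cite does not by itself give uniqueness, since both sides of $2a=\eta\mu_1 a/\sqrt{\mu_1^2a^2+g^2}$ increase in $a$.
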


\begin{proof}
(a) With $z_{t,\ge2}=0$ the gradient is $2\mu_1z_{t,1}q_1$ and its
normalisation is $\sgn(z_{t,1})q_1$; the off-mode coordinates are multiplied by
$1-\eta\mu_k/\hat G_t$ and stay $0$. The sign map decreases $|z|$ by $\eta$
until $|z|<\eta$ (unless it lands on $0$), after which $a\mapsto a-\eta\mapsto a$.
The loss values follow from $\hat{\mathcal L}=\mu_1z_1^2$, and
$a^2+(\eta-a)^2=\eta^2/2+2(a-\eta/2)^2$.
(b) Exactly, $z_{t+1,k}=z_{t,k}(1-\eta\mu_k/\hat G_t)$ with
$\hat G_t=\mu_1a_t(1+\tilde\rho_t^2)^{1/2}$, and
$z_{t+1,1}=z_{t,1}-\eta\,\sgn(z_{t,1})(1+\tilde\rho_t^2)^{-1/2}$; for
$a_t<\eta(1+\tilde\rho_t^2)^{-1/2}$ the sign flips, which gives the exact
\emph{sum rule} $|z_{t,1}|+|z_{t+1,1}|=\eta(1+\tilde\rho_t^2)^{-1/2}$, hence
$a_{t+1}=\eta-a_t+O(\eta\tilde\rho_t^2)$. Multiplying the two one-step factors
gives the display. For sufficiently small $\tilde\rho_t$ depending on
$c$, the sign-flip condition holds and the next amplitude remains
bounded below by $c\eta/2$; the off-mode multipliers are then bounded in
terms of $c$, so $\tilde\rho_{t+1}=O_c(\tilde\rho_t)$. This also justifies
the second-step expansion. Writing $\chi=\chi_k$ and $b=\eta-a_t$,
$(1-\frac{\chi}{a_t})(1-\frac{\chi}{b})=1-\frac{\chi(\eta-\chi)}{a_tb}$ because
$a_t+b=\eta$; since $a_tb\le\eta^2/4$ with equality iff $a_t=b$, this is at
most $1-4\chi(\eta-\chi)/\eta^2=(1-2\chi/\eta)^2$, and it is positive when
$\chi<\min(a_t,b)$. For small $\chi$ its logarithm is
$-\chi\eta/(a_tb)+O(\chi^2)$ against $-4\chi/\eta+O(\chi^2)$ on the symmetric
cycle, which gives the e-fold ratio $4a_tb/\eta^2$.
(c) A symmetric $2$-cycle requires $T_g(a)=-a$, i.e.\ $s(a)=2a$ with
$s(y)=\eta\mu_1y/\sqrt{\mu_1^2y^2+g^2}$; $s(a)/a$ decreases strictly from
$\eta\mu_1/g>2$ to $0$, so the solution is unique and
$\mu_1^2a_g^2+g^2=\mu_1^2\eta^2/4$. Since $s$ is odd, $T_g'(\pm a_g)=1-s'(a_g)$
with $s'(y)=\eta\mu_1g^2/(\mu_1^2y^2+g^2)^{3/2}$, which at $a_g$ equals
$8g^2/(\mu_1^2\eta^2)$; it lies in $(0,2)$ iff $g<\mu_1\eta/2$.
\end{proof}

Part~(c) explains the two behaviours of Figure~\ref{fig:lin-main}. The off-mode
gradient norm $g_t^2=\sum_{k\ge2}\mu_k^2z_{t,k}^2$ carries a \emph{second}
power of $\mu_k$, so a mode that is invisible in the loss (small $\mu_kz_k^2$)
exerts almost no symmetrising pressure, and a generic start freezes an
asymmetric member of the family of~(a); loss-visible slow modes, by contrast,
symmetrise the cycle on their way out (the $\eta=0.1$ and M1 ``slow'' runs,
Figure~\ref{fig:lin-M1p}). Writing $m_t=(z_{t,1}+z_{t+1,1})/2$ for the cycle midpoint, summing its frozen-gradient
linearised contraction $1-s'_t$ over a symmetric-rate drain gives, in the
additional slow-mode regime $\mu_k/\mu_1\ll1$ for the modes carrying the
off-mode loss, the heuristic \emph{energy budget}
$-\log(m_\infty/m_{t_0})\approx\sum_{t\ge t_0}s'_t\approx
E_{\mathrm{off}}(t_0)/(2L_{\mathrm{floor}})$, with
$E_{\mathrm{off}}(t_0)=\sum_{k\ge2}\mu_kz_{t_0,k}^2$ and
$L_{\mathrm{floor}}=\mu_1\eta^2/4$: the cycle symmetrises by the off-mode loss it
still has to dissipate, in floor units. Here the logarithm concerns nonzero midpoints of the same sign.
This uses the symmetric-cycle rate, the frozen-$g$ multiplier and the
slow-mode approximation; a hidden-eigenvalue sweep gives fitted slope
$1.09$, a numerical comparison rather than a proof.

\subsection{Proof of Proposition~\ref{prop:M3prime}}\label{app:lin-M3p}

At the cycle point $R_T$, write the full-column-rank polar decomposition
$\hat\Sigma R_T=OP$, with $O^\top O=I_r$ and
$P=(R_T^\top\hat\Sigma^2R_T)^{1/2}\succ0$.
Set $K:=O^\top\hat\Sigma O$ and $P':=\eta K-P$.
In part~(iii), a prime on a residual denotes its one-step update, and
$e_i$ denotes a standard coordinate vector. All matrix remainders there
are measured in Frobenius norm with the cycle point fixed.

\begin{proof}
(i) $\rank(\hat\Sigma R_t)=\rank(R_t)=r$ because $\hat\Sigma\succ0$, so
Proposition~\ref{prop:polar-identities}(ii) gives
$\|W_{t+1}-W_t\|_F=\eta\sqrt r$. Since $\hat D$ and $\hat D^\perp$ are
$\hat\Sigma$-invariant and orthogonal, $R_{\hat D}^\top\hat\Sigma R_\perp=0$ and
the loss splits.

(ii) The columns of $\hat\Sigma R_T$ lie in $\hat D$ and $P\succ0$, so $O$ is an
orthonormal basis of $\hat D$ (here $\dim\hat D=r$ is used), $OO^\top$ is the
projector onto $\hat D$, and invariance gives $\hat\Sigma O=OO^\top\hat\Sigma O=OK$.
Hence $R_{T+1}=R_T-\eta O$ has columns in $\hat D$ and
$\hat\Sigma R_{T+1}=OP-\eta OK=-O(\eta K-P)=-OP'$ with $P'\succ0$. By uniqueness
of the polar factor of a full-column-rank matrix, $\Polar(\hat\Sigma R_{T+1})=-O$,
so $R_{T+2}=R_{T+1}+\eta O=R_T$. The hypothesis is symmetric in $(P,P')$ and
$(O,-O)$ (note $(-O)^\top\hat\Sigma(-O)=K$), so the argument repeats and the
residual orbit is $2$-periodic. Since $W_t=U_\star+R_t$ with fixed $U_\star$,
$W_{t+2}=W_t$ as well. Its phases are distinct because
$\|W_{T+1}-W_T\|_F=\eta\|O\|_F=\eta\sqrt r>0$.
From $O^\top\hat\Sigma R_T=P$ and
$O^\top\hat\Sigma R_T=O^\top\hat\Sigma OO^\top R_T=KO^\top R_T$ we get
$R_T=OK^{-1}P$, hence $\hat{\mathcal L}(W_T)=\Tr(PK^{-1}O^\top\hat\Sigma OK^{-1}P)
=\Tr(PK^{-1}P)$, and likewise $R_{T+1}=-OK^{-1}P'$ with loss $\Tr(P'K^{-1}P')$.
The map $X\mapsto\Tr(XK^{-1}X)=\|K^{-1/2}X\|_F^2$ is strictly convex on
symmetric matrices and $P+P'=\eta K$, so the two-step mean is at least its
value at $X=\eta K/2$, namely $\frac{\eta^2}{4}\Tr(K)=\frac{\eta^2}{4}
\sum_{i\le r}\mu_i$ ($O$ spans the eigenspace of $\mu_1,\ldots,\mu_r$), with
equality iff $P=P'=\eta K/2$, i.e.\ $R_T=\frac{\eta}{2}O$, in which case
$R_{T+1}=-\frac{\eta}{2}O=-R_T$. Thus the symmetric weight cycle is
$W_T=U_\star+\frac{\eta}{2}O$, $W_{T+1}=U_\star-\frac{\eta}{2}O$:
it is centred at the teacher, and its loss is constant at the floor.
The singular values $\eta/2$ belong to $R_T$, not generally to $W_T$.
For $K=\lambda_DI_r$,
$P=\lambda_D(R_T^\top R_T)^{1/2}$ and the condition is
$\sigma_i(R_T)\in(0,\eta)$. The argument uses only that $\hat D$ is an
$r$-dimensional $\hat\Sigma$-invariant subspace; for an invariant subspace of
dimension larger than $r$, $\mathrm{range}\,O$ need not be invariant and the
conclusion can fail.

(iii) Write $G=\hat\Sigma R=G_{\hat D}+G_\perp$ with $G_{\hat D}=\hat\Sigma R_{\hat D}$
and $G_\perp=\hat\Sigma R_\perp$; invariance gives $G_{\hat D}^\top G_\perp=0$,
so $G^\top G=G_{\hat D}^\top G_{\hat D}+G_\perp^\top G_\perp$. On the phase of the
cycle where $R_{\hat D}=R_T$, $G_{\hat D}=OP$ and $G_{\hat D}^\top G_{\hat D}=P^2\succ0$.
The map $S\mapsto S^{-1/2}$ is smooth at positive-definite $S$, so
$(G^\top G)^{-1/2}=P^{-1}+O(\|R_\perp\|_F^2)$ and $\Polar(G)=G(G^\top G)^{-1/2}$
gives
\[
R_\perp'=R_\perp-\eta\hat\Sigma R_\perp P^{-1}+O(\|R_\perp\|_F^3),\qquad
R_{\hat D}'=R_{\hat D}-\eta O+O(\|R_\perp\|_F^2).
\]
On the next phase $G_{\hat D}=-OP'$ and the same computation gives the factor
$P'^{-1}$. In the eigenbasis of $\hat\Sigma|_{\hat D^\perp}=\sum_{j>r}\mu_jq_jq_j^\top$
the row $q_j^\top R_\perp$ is therefore multiplied by $I-\eta\mu_jP^{-1}$ and then
by $I-\eta\mu_jP'^{-1}$. Both are symmetric with eigenvalues
$1-\eta\mu_j/\pi$, $\pi$ an eigenvalue of $P$ (resp.\ $P'$), which lie in
$(-1,1)$ when $\pi>\eta\mu_j/2$; since $\mu_j\le\mu_{r+1}$, the condition
$P,P'\succ\frac{\eta\mu_{r+1}}{2}I$ makes each row multiplier have spectral norm below one. Thus the
full linearised complement map contracts in Frobenius norm. On the symmetric cycle $P=P'=\frac{\eta}{2}K$, the per-step
map is $R_\perp\mapsto R_\perp-2\hat\Sigma R_\perp K^{-1}$, and with
$K=V\diag(\mu_1,\ldots,\mu_r)V^\top$ the coefficients $c_{ji}=q_j^\top R_\perp Ve_i$
are multiplied by $1-2\mu_j/\mu_i$, which lies in $(-1,1)$ because
$0<\mu_j<\mu_i$.

(iv) Let $R$ have columns in $\hat D$. If $\hat D\perp\mathrm{range}\,U_\star$,
then $U_\star^\top R=0$, so $W^\top W=I_r+R^\top R$ and $U_\star^\top W=I_r$. The
columns of $W(W^\top W)^{-1/2}$ are an orthonormal basis of
$\mathrm{range}\,W$, and the principal-angle cosines are the singular values
of $U_\star^\top W(W^\top W)^{-1/2}=(I_r+R^\top R)^{-1/2}$, i.e.\
the multiset $\{(1+\sigma_i(R)^2)^{-1/2}:1\le i\le r\}$.
In particular, $\cos^2_{\min}=1/(1+\|R\|_{\mathrm{op}}^2)$; on the symmetric cycle
all $\sigma_i=\eta/2$. If
$\mathrm{range}\,U_\star=\hat D$, then $R=U_\star U_\star^\top R$ and
$W=U_\star(I_r+U_\star^\top R)$ with $\|U_\star^\top R\|_{\mathrm{op}}=\|R\|_{\mathrm{op}}<1$, so
$\mathrm{range}\,W=\hat D$ and all cosines equal $1$.
\end{proof}

\paragraph{Exact cycles and changing alignment.}
The exact cycles in part~(ii) repeat the full weight matrix, hence any fixed
function of it, every two steps. In particular, teacher alignment cannot
continue improving along either parity subsequence after exact cycle entry.
Part~(iii) instead describes perturbations off the cycle family, where a
nonzero complementary residual can decay while alignment changes. Its local
linearisation does not prove convergence from arbitrary initialisation.

\paragraph{The minimum is not the plateau.}
$\hat{\mathcal L}\ge0$ with equality at $W=U_\star$, and gradient descent with
step $0<\ell<1/\mu_1$ contracts $R$ by $\max_k|1-2\ell\mu_k|<1$ per step. The
floor of part~(ii) is therefore a property of the polar update: every
full-column-rank matrix-Muon $2$-cycle in $\hat D$ has two-step mean loss at least $\frac{\eta^2}{4}\sum_{i\le r}\mu_i$.
For completeness, every such full-rank $2$-cycle has the form in
part~(ii): returning after two steps forces its polar factors to be
$O$ and $-O$. Writing
$\hat\Sigma R_T=OP$ and $\hat\Sigma R_{T+1}=-OP'$ with $P,P'\succ0$,
the update and $\hat\Sigma O=OK$ give $P+P'=\eta K$.
Thus $0\prec P\prec\eta K$. Rank-deficient $2$-cycles use fewer than
$r$ directions and can sit lower.

\subsection{Loss envelopes: where the feature-learning signal sits in the loss}\label{app:lin-envelope}

Write $H_t:=\sum_{k\ge2}\mu_kz_{t,k}^2$ for the loss held by the draining modes.
The lower phase means the parity class with the smaller leading-coordinate
loss in the limiting scalar cycle; the exact identity below holds for any two times.
Lemma~\ref{lem:lin-reduction} gives, for any two times $t,T$ on the lower phase
of the cycle, the exact identity
\begin{equation}
\hat{\mathcal L}(w_t)-\hat{\mathcal L}(w_T)=(H_t-H_T)+\mu_1\bigl(z_{t,1}^2-z_{T,1}^2\bigr).
\label{eq:lin-envelope}
\end{equation}
For the frozen-gradient symmetric cycle of
Proposition~\ref{prop:lin-family}(c), one has exactly
$\mu_1a_g^2=\mu_1\eta^2/4-g^2/\mu_1$.
That identity does not hold instantaneously for a trajectory with
changing off-mode content: the sum rule constrains two adjacent
amplitudes, not either amplitude separately.

Under the hypotheses of Lemma~\ref{thm:single}, its phase-convergence
estimate instead gives, for $t,T\ge T_1$ of the same parity,
\[
\left|\mu_1\bigl(z_{t,1}^2-z_{T,1}^2\bigr)\right|
\le C\,\frac{g_t^2+g_T^2}{\mu_1},
\qquad
g_t^2:=\sum_{k\ge2}\mu_k^2z_{t,k}^2,
\]
where $C$ may depend on the fixed spectral contraction margin.
For a single off-mode $k$, $g_t^2/\mu_1=(\mu_k/\mu_1)H_t$.
Thus slow modes can make the offset small compared with the hidden-loss
level, but this estimate does not control the ratio of changes between
arbitrarily close times.

For the auxiliary run initialised with $z_0=1.5\eta e_1+3e_8$,
the measured ratio of the lower-envelope
loss change to the hidden-loss change is $1.000$.
For the generic asymmetric showcase it is $0.797$--$0.798$ from
$t=300$ to $6000$: in that run the changing cycle offset
cancels about $20\%$ of the hidden-loss change. These are
configuration-specific observations. The apparent plateau can therefore
retain a small, measurable loss signal while alignment improves.

\subsection{Numerical consistency checks}\label{app:lin-verify}

\paragraph{Reading Figure~\ref{fig:lin-main}.}
\textbf{Learning toys: nearly flat cycle-averaged loss while teacher alignment improves.}
\textbf{(a,b)} M2: NGD, $p=8$, $n=6000$, $\eta=0.3$, low-variance teacher.
\textbf{(c,d)} M3: matrix Muon, $p=30$, $r=5$, $n=5000$, $\lambda_D=1$,
$\lambda_\perp=10^{-3}$, $\eta=0.3$, $\mathrm{range}\,U_\star\subset D^\perp$.
Loss panels: adjacent-step bands and means, symmetric-cycle floors dashed; alignment panels:
orthogonal-teacher reference $(1+\eta^2/4)^{-1}=0.978$ dashed. \emph{Blue}: balanced leading
components with nonzero off-mode residuals; the final means match the floors to relative
$10^{-4}$ while alignment rises $0.13\to0.97$ (M2) and $0.49\to0.978$ (M3). \emph{Orange}: generic
starts; final means $1.06$ and $1.35$ times the floors. These finite-sample runs are numerical
illustrations, not verified instances of all theorem hypotheses (details in
Appendix~\ref{app:lin-verify}; Figure~\ref{fig:lin-optimizer} compares optimisers).

We checked the formulas proved above against finite runs: the coordinate
reduction and loss identities of Lemma~\ref{lem:lin-reduction}, the scalar
cycle relations and drain rates of Proposition~\ref{prop:lin-family}, the
alignment bounds of Theorem~\ref{thm:M2prime} and
Corollary~\ref{cor:invisible}, the envelope identity of
Section~\ref{app:lin-envelope}, and the matrix cycle relations of
Proposition~\ref{prop:M3prime}. The coordinate/loss identities, constructed matrix cycles and exact
alignment formulas have numerical residuals below $10^{-12}$.
Approximate comparisons use check-specific tolerances, including $2\%$
for the slow-mode log-drain rates and $5\%$ for the sample-to-population
floor comparison; all $50$ checks meet their numerical tolerances.
These tolerances are not interval-arithmetic certificates. These checks illustrate the proved statements and their
approximations rather than supply the proofs, and they describe particular
trajectories: they do not establish convergence from arbitrary
initialisation. The configurations are as follows.

\paragraph{Configurations.}
Here $\Sigma_x$ denotes the population input covariance. In M3, $D$ is its
planted $r$-dimensional high-variance block, with covariance eigenvalues
$\lambda_D$ on $D$ and $\lambda_\perp$ on $D^\perp$. Thus $D$ and the empirical
subspace $\hat D$ need not coincide. Each $e_k$ is a standard coordinate vector.
For M2, $p=8$, $n=6000$, $\eta=0.3$, and
$\Sigma_x=\diag(1,0.6,0.3,0.15,0.07,0.03,0.01,10^{-4})$, with independent
Gaussian samples. Case (A) uses $u=e_1$ and $w_0=-e_1+3e_8$; case (B) uses
$u=e_7$ and $w_0=2e_1+3e_8$. Both are run for $20000$ steps. The symmetric
initialisation check instead uses $z_0=1.5\eta e_1+3e_8$ and $6001$ steps.
In these finite samples, alignment of the teacher with a population
eigendirection need not imply exact alignment with an empirical eigenvector.

For M3, $p=30$, $r=5$, $n=5000$, $\eta=0.3$, and
$\Sigma_x=P_D+10^{-3}P_{D^\perp}$. The constructed-cycle checks use an
orthonormal basis $O$ of $\hat D$, $K=O^\top\hat\Sigma O$, and
\[
P=\frac\eta2K^{1/2}(I+\tfrac12T)K^{1/2},\qquad
R_0=OK^{-1}P,\qquad T=T^\top,\quad\|T\|_{\mathrm{op}}=1.
\]
Thus $\eta K/4\preceq P\preceq3\eta K/4$, so the exact cycle condition is
satisfied. The cycle is followed for $2000$ steps. Complement perturbations
are of the form $\varepsilon F\Xi$, with $F$ an orthonormal basis of
$\hat D^\perp$ and $\Xi$ a Gaussian matrix: $\varepsilon=10^{-6}$ for the
one- and two-step checks, and $10^{-3}$ for the $1200$-step comparison.
The M3 trajectory checks use a teacher in $D^\perp$, a Gaussian initial
matrix projected onto $D$, and $4000$ steps, matching the orange run in
Figure~\ref{fig:lin-main}. For the blue M2 run, instead,
$z_0=(\eta/2)e_1-e_7+3e_8$ in empirical coordinates.
The blue M3 run uses
$W_0=U_\star+(\eta/2)U_D+U_{\rm junk}$, where $U_D$ is an orthonormal
basis of the population block $D$ and $U_{\rm junk}$ is an orthonormal
$r$-frame perpendicular to both $D$ and the teacher. Thus this
initialisation has a nonzero complement residual; it is not an exact cycle.
The plotted blue runs do not establish the conditional NGD margin or
nonlinear convergence of the matrix complement. The M1 runs use the configurations of
Figure~\ref{fig:lin-M1p}. The frozen-gradient scalar check uses
$\mu_1=1$, $\eta=0.3$, $g=0.05$, and $4000$ iterations; its two-step
multiplier is estimated by a centred difference with increment $10^{-7}$.

For Figure~\ref{fig:lin-main}, the settling time is the first reported
step after the last adjacent-step loss mean that differs by more than
$5\%$ from the average of the final $200$ adjacent-step means.
In Figure~\ref{fig:lin-main}(a,b) the hidden residual contributes about $4\%$ of the floor
(symmetric-cycle e-fold $\approx5152$ steps). The final adjacent-step bands of the blue runs are
below $1\%$, and the blue M2 alignment is quoted from step $90$. From the settled steps $89$ (M2)
and $17$ (M3) of the orange runs, alignment rises $0.07\to0.96$--$0.99$ and
$0.04\to0.93$--$0.96$.
The population teacher directions need not be orthogonal to the
empirical cycle subspace, so the dashed alignment level is a reference.

\subsection{Additional figures}\label{app:lin-figs}

The additional figures use the following notation: $\lambda_k$
for the eigenvalues of $\hat\Sigma$ (our $\mu_k$; ``EoS floor
$2\lambda_1\eta^2/8$'' is our $\mu_1\eta^2/4$), $y_{t,k}$ and $\tilde w$ for the
residual coordinates and residual (our $z_{t,k}$, $w-u$), $v_1$ for $q_1$, $V$
for the planted data block (our $D$), $U_t$ for the teacher (our $U_\star$),
$M=r$, and ``$V$-block floor'' for the population value $\lambda_D\eta^2r/4$.

\begin{figure}[htbp]
\centering
\includegraphics[width=\linewidth]{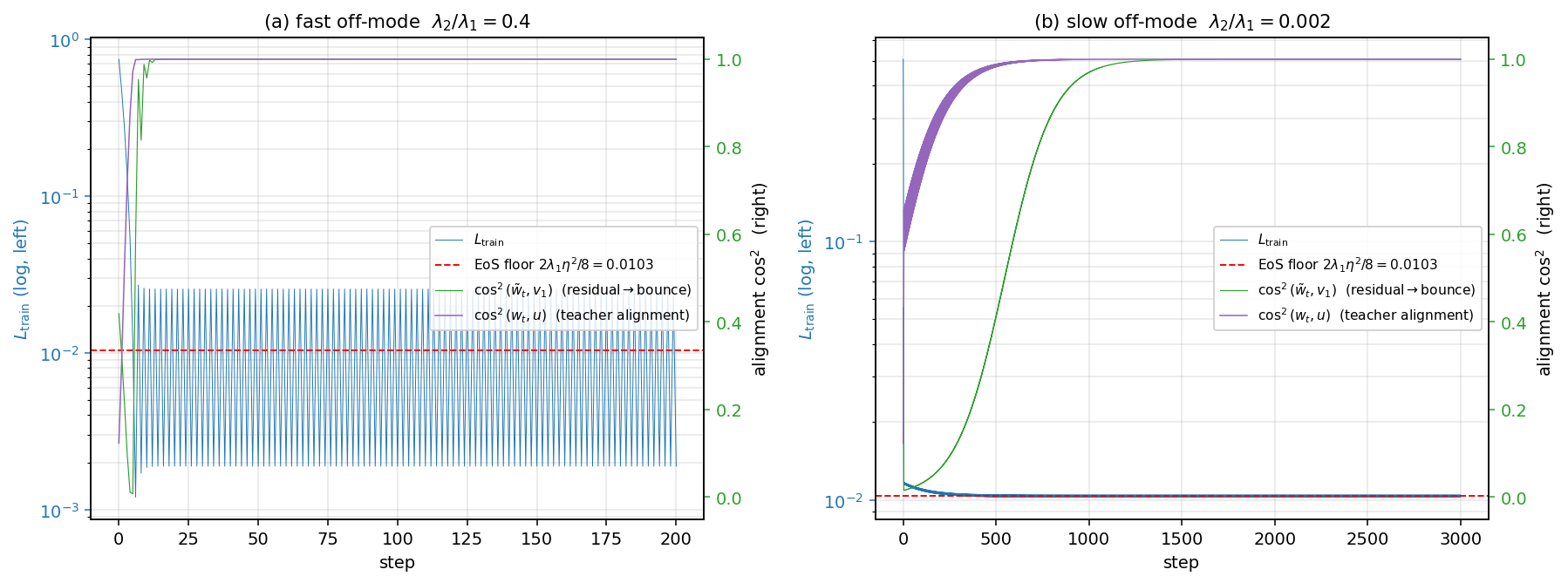}
\caption{\textbf{M1: two-dimensional NGD on linear regression.} $n=3000$,
$\Sigma_x=\diag(1,\lambda_2)$ with $\lambda_2=0.4$ (a) or $0.002$ (b), teacher
$u=e_1$, $\eta=0.2$, $w_0=(0.3,0.8)$. Loss (log, left axis) and
alignments (right axis). (a) The fast off-mode leaves the
cycle on the asymmetric member $\{0.157,-0.043\}$, so the plateau is
$1.33\times$ the floor, as Proposition~\ref{prop:lin-family}(a) predicts
($2(a^2+(\eta-a)^2)/\eta^2=1.328$). (b) The slow, loss-visible off-mode
symmetrises the cycle onto $\pm\eta/2$ (plateau/floor $=1.000$): the loss
settles at step $120$ and stays on the floor for $2900$ further steps while the
residual alignment climbs $0.04\to1.00$ and the teacher alignment
$0.82\to1.00$. Here the misalignment sits in a loss-visible mode; the
invisible regime is shown with a dedicated hidden mode in M2.}
\label{fig:lin-M1p}
\end{figure}
\begin{figure}[p]
\centering
\includegraphics[width=0.9\linewidth]{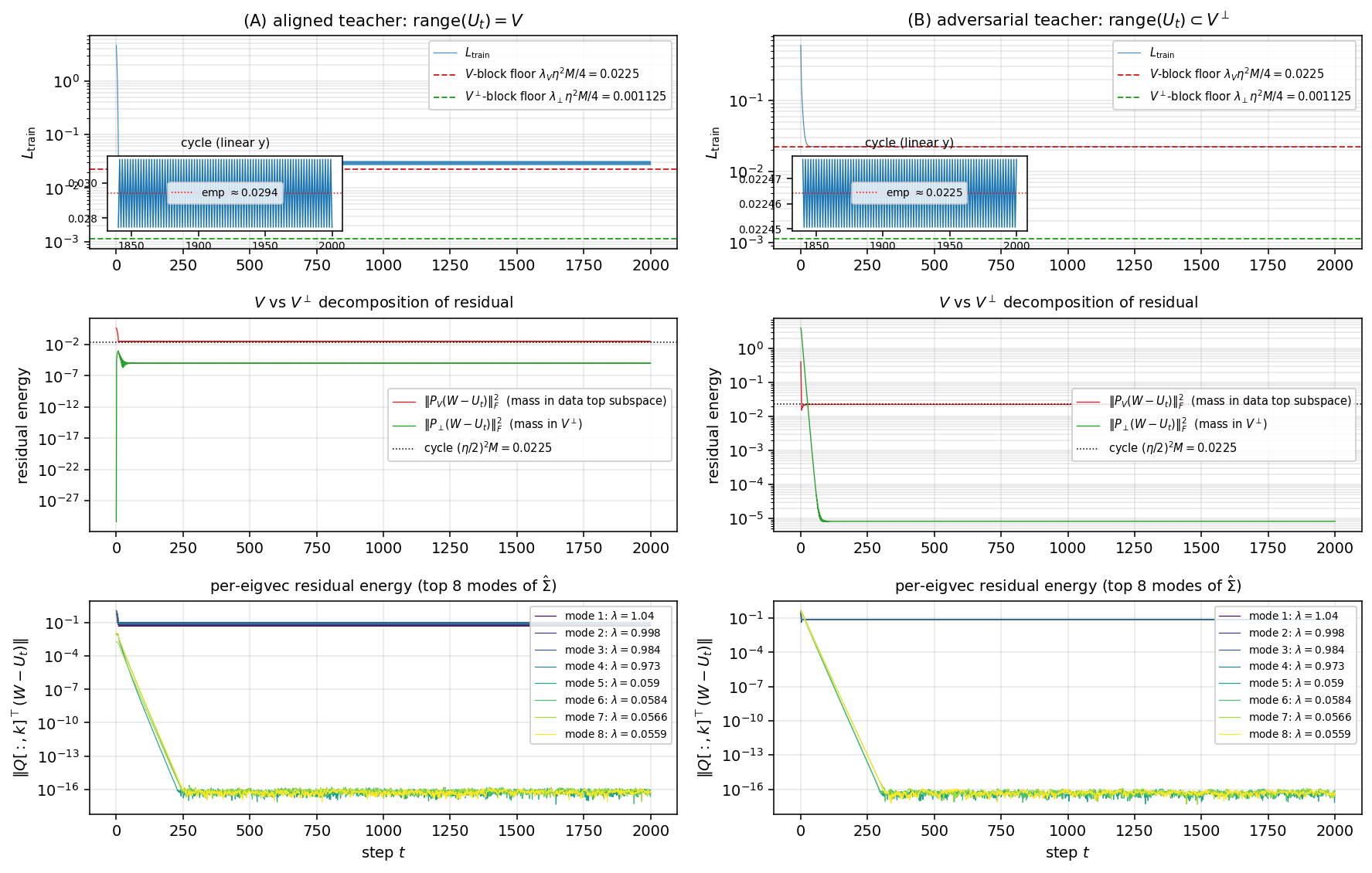}
\caption{\textbf{M3: matrix Muon on multi-output regression.} $p=30$,
$r=4$, $n=4000$, $\eta=0.15$, $\lambda_D=1$, $\lambda_\perp=0.05$, both runs
initialised in $D$; teacher (A) $=D$, (B) $\subset D^\perp$. \textbf{Top:} loss with the
population floors $\lambda_D\eta^2r/4=0.0225$ and $\lambda_\perp\eta^2r/4$
(insets: the cycle; (A) sits on an asymmetric member, $1.31\times$, (B) nearly symmetric).
\textbf{Middle:} residual energy in the population blocks $D$, $D^\perp$ (the
$D^\perp$ part levels off near $10^{-5}$ only because $\hat D$ is slightly
rotated from $D$). \textbf{Bottom:} residual energy in the eigenbasis of
$\hat\Sigma$: the four displayed complement modes
($\mu\approx0.056$--$0.059$) decay toward numerical precision; the factor
$1-2\mu_j/\mu_i\approx0.89$ is the symmetric-cycle linearised
reference from Proposition~\ref{prop:M3prime}(iii), and asymmetric cycles
have the two-phase multipliers stated there. The late teacher alignment is approximately $1$ (A) and
$0.9944$ (B), close to the ideal symmetric-cycle reference
$(1+\eta^2/4)^{-1}$ (the exact formula in
Proposition~\ref{prop:M3prime}(iv) requires empirical-subspace
orthogonality), and the step length $\eta\sqrt r$ holds to four
digits (Proposition~\ref{prop:M3prime}(i)).}
\label{fig:lin-M3p-basic}
\vspace{8pt}
\includegraphics[width=0.9\linewidth]{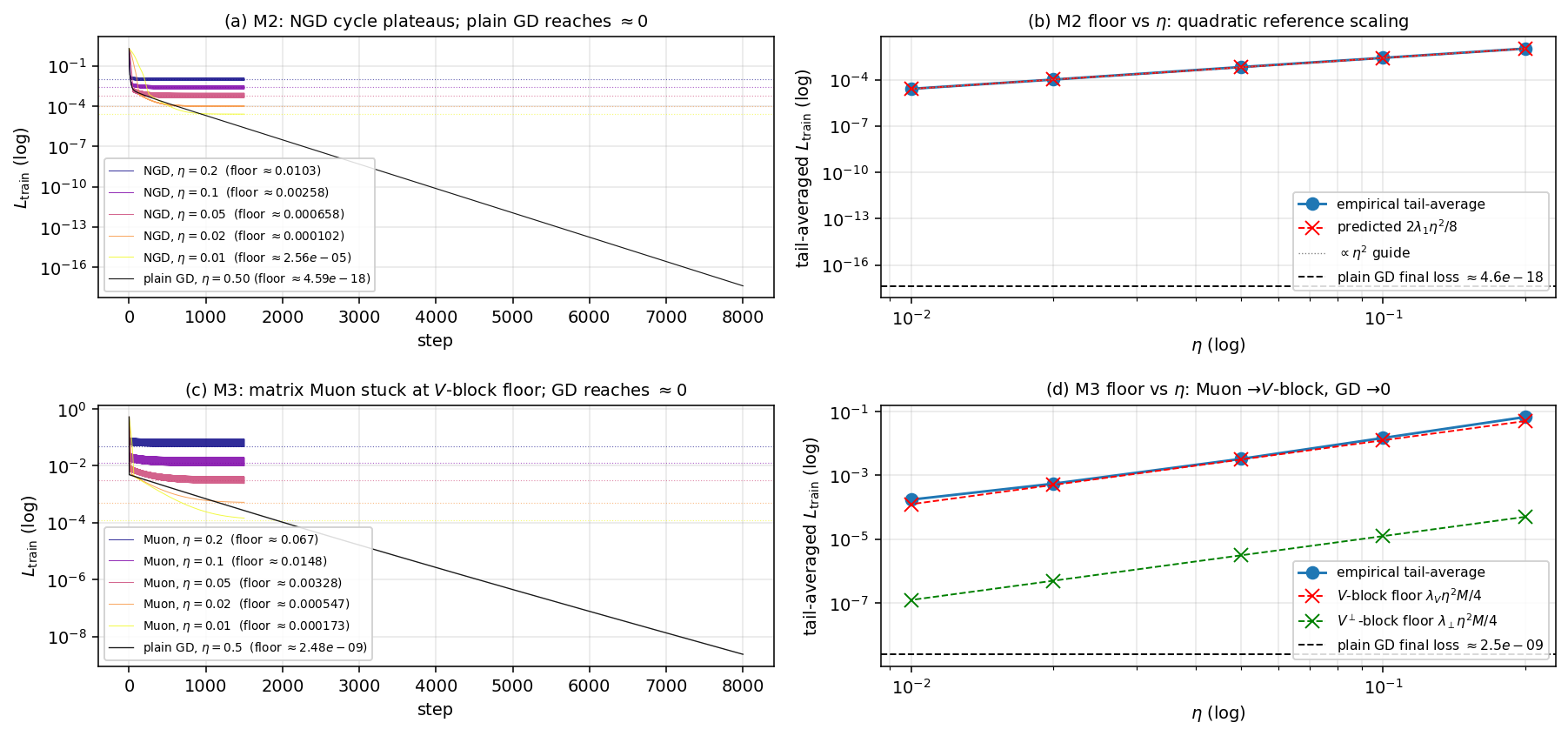}
\caption{\textbf{The plateau belongs to the optimiser.} (a,b) M2 ($p=9$,
$n=2000$, $\Sigma_x=\diag(1,0.6,0.3,0.15,0.07,0.03,0.012,0.005,0.002)$, teacher
$e_1$): NGD at $\eta\in\{0.2,0.1,0.05,0.02,0.01\}$ stops at floors within
$0$--$4\%$ of $\mu_1\eta^2/4$; plain GD (step $0.5$) reaches
$4.6\times10^{-18}$. (c,d) M3 ($p=30$, $r=5$, $n=2000$, $\lambda_D=1$,
$\lambda_\perp=10^{-3}$, adversarial teacher): matrix Muon stops at
$1.05$--$1.39\times\lambda_D\eta^2r/4$ (a population reference;
both sampling and the asymmetric-cycle excess of
Proposition~\ref{prop:M3prime}(ii) affect the ratio), far above the $D^\perp$-scale value
(green); plain GD reaches $2.5\times10^{-9}$.}
\label{fig:lin-optimizer}
\end{figure}
\clearpage

\section{Proofs and supporting derivations for
Section~\ref{sec:1NN}}\label{app:1NN-proofs}

This appendix collects the proofs of
Proposition~\ref{prop:polar-identities},
Theorem~\ref{thm:p2existence}, and
Proposition~\ref{prop:matrix-muon-noV} (Sections~\ref{app:prop1-proof},
\ref{app:thm1-proof}, \ref{app:propnoV-proof}), together with the full assumptions
and proof of the conditional drift result, Proposition~\ref{claim:drift},
stated in Section~\ref{sec:1NN-claims}
(Section~\ref{app:drift-floquet} below).

\paragraph{Notation for these proofs.}
We use $W\in\R^{p\times r_s}$, input dimension $p$, student width $r_s$,
teacher subspace $V$ of dimension $r_t$, and $M=\min(p,r_s)$.
The matrices $P_V$ and $P_\perp=I_p-P_V$ are the orthogonal projectors onto
$V$ and $V^\perp$; $I_k$ is the $k\times k$ identity.
For vectors, $\|\cdot\|=\|\cdot\|_2$; for matrices, $\|\cdot\|_F$ and
$\|\cdot\|_{\rm op}$ are the Frobenius and operator norms.
The map $\Phi(W)=W-\eta\Polar(\nabla\mathcal L(W))$ uses step coefficient $\eta>0$.
Here $\mathcal L$ is the loss specified in each statement, and
$r^{\mathrm{grad}}=\rank(\nabla\mathcal L(W))$.
We write $A_{\mathrm{sub}}(W)=\|P_VW\|_F^2/\|W\|_F^2$ for $W\ne0$.
For an empirical student AGOP $G_s(W)$, $\cos^2_{\min}(W)$ denotes the smallest
squared cosine of its leading $r_t$-dimensional eigenspace with $V$
(\eqref{eq:student-agop}--\eqref{eq:cos-metrics}).
Singular values are ordered from largest to smallest.
The symbols $\Tr$ and $\rank$ denote trace and matrix rank;
$O(k)=\{Q\in\R^{k\times k}:Q^\top Q=I_k\}$, and
$\mathrm{St}(p,k)=\{Q\in\R^{p\times k}:Q^\top Q=I_k\}$.

In the first three subsections, $O(\cdot)$ denotes the standard asymptotic notation:
$f(\eta) = O(g(\eta))$ as $\eta \to 0$ means
$\limsup_{\eta\to 0}|f(\eta)/g(\eta)| < \infty$. Constants in
$O(\cdot)$ may depend on $p$, $r_t$, $r_s$, the spectrum of the
Hessian along the trajectory, and the teacher activation; they
never depend on $\eta$ itself. Section~\ref{app:drift-floquet} instead uses
bounds uniform across its stated family and trajectory window, with the
learning-rate rule included among the fixed family parameters.

\subsection{Proof of Proposition~\ref{prop:polar-identities}
(polar-update identities)}\label{app:prop1-proof}

We prove parts (i), (ii), (iii) of the proposition, using only the
orthonormality of the SVD factors and the definition of the polar
factor $\Polar(g) := UV_g^\top$ for a compact SVD $g = U\Sigma
V_g^\top$. The leakage scalars used in part~(iii) are recorded
formally for reference:

\begin{definition}[leakage scalars]\label{def:leakage}
Let $g \in \R^{p \times r_s}$ have $\rank(g) = r\ge r_t$ and compact SVD
$g = U\Sigma V_g^\top$ with $U \in \R^{p\times r}$ orthonormal columns
$u_1, \ldots, u_r$. Given the teacher subspace
$V \subseteq \R^p$ from~\eqref{eq:teacher},
\begin{equation}
\varepsilon_V^2 \;:=\; \sum_{i=1}^{r_t} \|P_\perp u_i\|^2,
\qquad
\varepsilon_\perp^2 \;:=\; \sum_{i=r_t+1}^{r} \|P_V u_i\|^2.
\label{eq:leakage}
\end{equation}
$\varepsilon_V^2$ measures how much of the squared norm of the first
$r_t$ left singular vectors of $g$ falls outside $V$;
$\varepsilon_\perp^2$ measures how much of the squared norm of the
remaining left singular vectors falls inside $V$. Both vanish iff the
left singular vectors of $g$ split exactly into a $V$-block and a
$V^\perp$-block.
\end{definition}

\paragraph{Part (i): $\|\Polar(g)\|_F^2 = r$.}
Since $U \in \R^{p\times r}$ has orthonormal columns
($U^\top U = I_r$),
\begin{equation}
\|UV_g^\top\|_F^2 \;=\; \Tr\bigl(V_g\,U^\top U\,V_g^\top\bigr)
\;=\; \Tr(V_g V_g^\top) \;=\; \Tr(I_r) \;=\; r.
\end{equation}

\paragraph{Part (ii): $\|\Phi(W) - W\|_F = \eta\sqrt{r^{\mathrm{grad}}}$.}
By definition, $\Phi(W) - W = -\eta\,\Polar(\nabla\mathcal{L}(W))$,
so $\|\Phi(W) - W\|_F = \eta\,\|\Polar(\nabla\mathcal{L}(W))\|_F =
\eta\sqrt{r^{\mathrm{grad}}}$ by part (i). Generically along a
trajectory $r^{\mathrm{grad}} = M$, giving the stated $\eta\sqrt{M}$.

\paragraph{Part (iii): leakage decomposition.}
Here $r=M\ge r_t$, as in the full-rank statement of the proposition.
Writing $\Polar(g) = UV_g^\top$ and using
$V_g^\top V_g = I_r$,
\begin{equation}
\|P_V \Polar(g)\|_F^2 \;=\;
\Tr\bigl(V_g (P_V U)^\top (P_V U) V_g^\top\bigr)
\;=\; \Tr\bigl((P_V U)^\top (P_V U)\bigr)
\;=\; \sum_{i=1}^{r} \|P_V u_i\|^2,
\end{equation}
where $u_i$ is the $i$-th column of $U$. The $V/V^\perp$ split
$\|P_V u_i\|^2 = 1 - \|P_\perp u_i\|^2$ then gives
\begin{align}
\|P_V\Polar(g)\|_F^2
&= \sum_{i=1}^{r_t}\bigl(1 - \|P_\perp u_i\|^2\bigr)
+ \sum_{i=r_t+1}^{r}\|P_V u_i\|^2 \nonumber\\
&= r_t - \varepsilon_V^2 + \varepsilon_\perp^2,
\end{align}
with $\varepsilon_V^2$, $\varepsilon_\perp^2$ as in
Definition~\ref{def:leakage}. The complementary statement
$\|P_\perp \Polar(g)\|_F^2 = (M - r_t) + \varepsilon_V^2 -
\varepsilon_\perp^2$ follows from
$\|\Polar(g)\|_F^2 = \|P_V\Polar(g)\|_F^2 +
\|P_\perp\Polar(g)\|_F^2 = M$ (part (i) at $r = M$). The identities
are exact under no further smallness assumption on the leakage.

\paragraph{Connection to data.}
The identities hold exactly for every gradient of full column rank; how close
$\|P_V\Polar(g)\|_F^2$ is to $r_t$ depends on the leakage scalars, which we did
not measure along the trajectories. The polar-mass check of
Appendix~\ref{app:tests} (Table~\ref{tab:polar-mass}) finds the $V$-mass of the
update approximately $2$--$5\%$ below $r_t$ on three ReLU checkpoints.
Figure~\ref{fig:theory-vs-exp}(b) reports a different quantity, the $V$-mass of
the top AGOP eigenvectors.

\subsection{Period-$2$ orbits on the isotropic quadratic}\label{app:thm1-proof}\label{sec:1NN-toys}

Proposition~\ref{prop:polar-identities} constrains the \emph{shape} of any period-$2$ orbit of the
polar map but not its existence. On the isotropic quadratic there is an explicit family.

\begin{theorem}[period-$2$ cycle on the isotropic quadratic]\label{thm:p2existence}
Let $\mathcal{L}(W) = \tfrac{\lambda}{2}\,\|W\|_F^2$ on
$\R^{p \times r_s}$ with $\lambda > 0$ and $r_s \le p$. The matrix
Muon iteration $\Phi$ admits a continuum of \emph{symmetric}
period-$2$ orbits $\{W^*, -W^*\}$ of Frobenius radius
$\|W^*\|_F = \tfrac{\eta}{2}\sqrt{M}$. Concretely,
$W^* = \tfrac{\eta}{2}\,U_0 V_0^\top$ for any orthonormal columns
$U_0 \in \R^{p\times r_s}$, $V_0 \in \R^{r_s \times r_s}$.
\end{theorem}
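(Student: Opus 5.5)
The argument is a direct computation of one step of $\Phi$ from the candidate point, using only the definition of the polar factor and its equivariance. Since $\nabla\mathcal L(W)=\lambda W$ and $\lambda>0$, positive rescaling leaves the polar factor unchanged: $\Polar(\lambda W)=\Polar(W)$. The iteration therefore reduces to $\Phi(W)=W-\eta\,\Polar(W)$ with no dependence on $\lambda$, just as in the M3 reduction of Lemma~\ref{lem:lin-reduction}.

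\emph{Step 1: compute $\Polar(W^*)$.} Take $W^*=\tfrac{\eta}{2}U_0V_0^\top$ with $U_0^\top U_0=I_{r_s}$ and $V_0\in O(r_s)$. Then $W^*$ has full column rank $r_s=M$, since $r_s\le p$. Its compact SVD is $U_0\,(\tfrac{\eta}{2}I_{r_s})\,V_0^\top$, so
\[
\Polar(W^*)=U_0V_0^\top .
\]
The SVD is not unique here because all singular values are equal. Uniqueness of the polar factor for a full-column-rank matrix still settles the value: either use $\Polar(g)=g(g^\top g)^{-1/2}$, which gives $\tfrac{\eta}{2}U_0V_0^\top\cdot(\tfrac{\eta^2}{4}I)^{-1/2}=U_0V_0^\top$, or invoke the equivariance stated after \eqref{eq:polar-frob}.

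\emph{Step 2: the two-step return.} One step gives
\[
\Phi(W^*)=\tfrac{\eta}{2}U_0V_0^\top-\eta U_0V_0^\top=-\tfrac{\eta}{2}U_0V_0^\top=-W^*.
\]
Because $\Polar$ is odd, $\Polar(-W^*)=-U_0V_0^\top$, and so $\Phi(-W^*)=-\tfrac{\eta}{2}U_0V_0^\top+\eta U_0V_0^\top=W^*$. Hence $\{W^*,-W^*\}$ is a period-$2$ orbit. It is genuinely of period two rather than a fixed point, because $W^*\neq0$.

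\emph{Step 3: radius and the continuum.} The Frobenius norm is
\[
\|W^*\|_F=\tfrac{\eta}{2}\|U_0V_0^\top\|_F=\tfrac{\eta}{2}\sqrt{r_s}=\tfrac{\eta}{2}\sqrt M,
\]
using Proposition~\ref{prop:polar-identities}(i) or the direct trace computation. The orbits form a continuum because $U_0V_0^\top$ ranges over the whole Stiefel manifold $\mathrm{St}(p,r_s)$ as $U_0,V_0$ vary; this is a connected, positive-dimensional set, and distinct matrices give distinct orbits.

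The only point needing care is Step 1, namely well-definedness of $\Polar$ when the singular values are degenerate. Full column rank makes the polar factor unique, which resolves it; no genuine obstacle is expected.
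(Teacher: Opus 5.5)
Your proof is correct and is essentially the paper's argument: both observe that every singular value of $W^*$ equals $\eta/2$, so $\eta\,\Polar(\lambda W^*)=\eta\,U_0V_0^\top=2W^*$ and hence $\Phi(W^*)=-W^*$, with the continuum coming from the free choice of $U_0V_0^\top\in\mathrm{St}(p,r_s)$; your explicit check that $\Phi(-W^*)=W^*$, using that $\Polar$ is odd, spells out a step the paper's symmetric ansatz leaves implicit. Step~3 contains two small inaccuracies that do not affect the result: $Q$ and $-Q$ give the same orbit, so matrices correspond to orbits two-to-one rather than one-to-one, and $\mathrm{St}(p,p)=O(p)$ is not connected, though it is still positive-dimensional whenever $(p,r_s)\neq(1,1)$.
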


Existence on arbitrary $A\succ0$ holds on every $r_s$-dimensional $A$-invariant subspace
(Proposition~\ref{prop:M3prime}(ii)). On the 1-HL ReLU loss we observe only approximately period-$2$
loss oscillations, while the parameters continue to evolve (Appendix~\ref{app:tests}, Test~A).

\begin{proof}[Proof of Theorem~\ref{thm:p2existence}]
We use the symmetric ansatz $W_B = -W_A$ for the period-$2$
condition $\Phi(W_A) = W_B$:
\begin{equation}
W_A - \eta\,\Polar(\lambda W_A) \;=\; -W_A
\quad\Longleftrightarrow\quad
2 W_A \;=\; \eta\,\Polar(W_A),
\label{eq:p2-symmetric-ansatz}
\end{equation}
where we used $\Polar(c X) = \Polar(X)$ for any $c > 0$.

We verify that $W^* := (\eta/2)\,U_0 V_0^\top$, for any orthonormal
$U_0 \in \R^{p \times r_s}$ and $V_0 \in \R^{r_s \times r_s}$,
satisfies~\eqref{eq:p2-symmetric-ansatz}. The compact SVD of $W^*$
is $W^* = (\eta/2)\,U_0 \Sigma V_0^\top$ with $\Sigma = I_{r_s}$
(both $U_0, V_0$ are orthonormal), so all singular values equal
$\eta/2$ and $\Polar(W^*) = U_0 V_0^\top$. Therefore
$\eta\,\Polar(W^*) = \eta\,U_0 V_0^\top = 2 W^*$, which is the
right-hand side of~\eqref{eq:p2-symmetric-ansatz}.

The orbit is genuinely period-$2$ because a fixed point of $\Phi$
would require $\Polar(\lambda W) = 0$, contradicting
$\|\Polar(g)\|_F^2 = M > 0$ at full column rank
(Proposition~\ref{prop:polar-identities}(i)). The continuum of
solutions is parameterised by the product of Stiefel manifolds
$(U_0, V_0) \in \mathrm{St}(p, r_s) \times O(r_s)$, modulo the
right-action of $O(r_s)$.
\end{proof}

\begin{remark}[extension to general $A \succ 0$]\label{rem:p2-perturbation}
For $\mathcal{L}(W) = \tfrac12\Tr(W^\top A W)$ with $A \succ 0$
general, the symmetric ansatz becomes $2W = \eta\,\Polar(AW)$.
For arbitrary $A \succ 0$ the symmetric ansatz is solved
exactly by $W^* = \tfrac{\eta}{2}O$ whenever the columns of $O$ are an
orthonormal basis of an $r_s$-dimensional $A$-invariant subspace, since then
$AW^* = \tfrac{\eta}{2}O(O^\top AO)$ with $O^\top AO \succ 0$ and
$\Polar(AW^*) = O$ (Proposition~\ref{prop:M3prime}(ii) gives the whole
family of such orbits); stability for general $A$ and orbits outside invariant
subspaces remain open. In the network experiments only the loss has a
period-$2$ signature (Test~A, Appendix~\ref{app:tests}); the weights keep
moving, so they do not sit on an orbit of this kind.
\end{remark}

\subsection{Block-orthogonal equivariance on a block-isotropic
quadratic}\label{app:propnoV-proof}\label{app:matrix-muon-noV}

\begin{proposition}[block-orthogonal equivariance]\label{prop:matrix-muon-noV}
Let $\mathcal{L}(W) = \tfrac12\Tr(W^\top A W)$ with
$A = \diag(\lambda I_{r_t}, \mu I_{p - r_t})$ for some
$\lambda, \mu > 0$. The matrix Muon iteration map $\Phi$
defined in~\eqref{eq:Phi-def} commutes with the block-orthogonal
group $G = \mathrm{blk\text{-}diag}(Q, R) \in O(r_t) \times O(p - r_t)$
acting on $W$ by left multiplication: $\Phi(GW) = G\,\Phi(W)$ for
every $W$ at which $\Polar(AW)$ is defined. Consequently, if the
law of $W_0$ is invariant under $O(r_t) \times O(p - r_t)$, the
law of $W_t$ is invariant under the same group action for every
$t \ge 0$.
\end{proposition}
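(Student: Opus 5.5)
The plan is to verify the equivariance directly from the two structural facts already recorded in Section~\ref{sec:optim}: that $\Polar$ is left/right orthogonal-equivariant, and that $G$ commutes with $A$. Then the distributional statement follows by induction on $t$.

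\textbf{Step 1: $G$ commutes with $A$.} Write $G=\mathrm{blk\text{-}diag}(Q,R)$ with $Q\in O(r_t)$ and $R\in O(p-r_t)$. Then $G$ is orthogonal. Because $A=\diag(\lambda I_{r_t},\mu I_{p-r_t})$ is a scalar multiple of the identity on each block, we get $GA=\diag(\lambda Q,\mu R)=AG$.

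\textbf{Step 2: equivariance of $\Phi$.} The gradient is $\nabla\mathcal L(W)=AW$, so $\nabla\mathcal L(GW)=AGW=G(AW)$. Apply left equivariance $\Polar(Qg)=Q\,\Polar(g)$, with the right factor taken to be the identity. To check this, take a compact SVD $g=U\Sigma V^\top$. Then $Gg=(GU)\Sigma V^\top$ is a compact SVD, because $GU$ still has orthonormal columns, so $\Polar(Gg)=GUV^\top=G\,\Polar(g)$. The polar factor is well defined independently of the choice of SVD: it equals $g(g^\top g)^{+1/2}$, with the pseudo-inverse square root on the range. Hence this identity also holds in the rank-deficient case, and also for $g=0$ under the convention $\Polar(0)=0$. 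Combining,
\[
\Phi(GW)=GW-\eta\,\Polar(G\,AW)=G\bigl(W-\eta\,\Polar(AW)\bigr)=G\,\Phi(W).
\]
The only point needing care is that $\Polar(AW)$ is defined exactly when $\Polar(AGW)$ is, since the two arguments differ by an orthogonal factor. I expect the well-definedness of $\Polar$ independent of the SVD choice to be the sole subtle step. It is handled by the $g(g^\top g)^{+1/2}$ formula, where uniqueness of the positive semidefinite square root does the work.

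\textbf{Step 3: invariance of the law.} Induct on $t$. Suppose $W_t\overset{d}{=}GW_t$ for every $G$ in the group. Then
\[
GW_{t+1}=G\,\Phi(W_t)=\Phi(GW_t)\overset{d}{=}\Phi(W_t)=W_{t+1}.
\]
The middle equality in distribution holds because $\Phi$ is a fixed measurable map applied to identically distributed inputs. Measurability holds because $\Polar$ is Borel: it is continuous on each fixed-rank stratum, and there are finitely many strata. The base case $t=0$ is the hypothesis. If $\Phi$ is only defined off a null set, the same argument applies on that full-measure set, which is itself $G$-invariant by Step 2.
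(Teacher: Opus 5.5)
Your proof is correct and follows the paper's argument. You show $GA=AG$, use left orthogonal-equivariance of $\Polar$ via the compact SVD $(GU)\Sigma V^\top$, and push $\Phi(GW)=G\,\Phi(W)$ through the iteration to get invariance of the law of $W_t$. Your extra care with rank-deficient gradients (via $g(g^\top g)^{\dagger1/2}$) and with Borel measurability of $\Phi$ fills in details that the paper leaves implicit, since its proof writes out only the full-column-rank case.
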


\begin{proof}[Proof of Proposition~\ref{prop:matrix-muon-noV}]
Let $G \in O(r_t) \times O(p - r_t) \subset O(p)$, written in block
form as $G = \mathrm{blk\text{-}diag}(Q, R)$ with $Q \in O(r_t)$,
$R \in O(p - r_t)$. Since $A = \diag(\lambda I_{r_t},
\mu I_{p - r_t})$ has the same block structure, $G^\top A G = A$
(equivalently $A G = G A$). For any $W \in \R^{p \times r_s}$,
\begin{equation}
\nabla \mathcal{L}(GW) \;=\; A\,GW \;=\; G\,AW \;=\;
G\,\nabla\mathcal{L}(W).
\end{equation}
The polar factor is left-equivariant under orthogonal
transformations: for any $G \in O(p)$ and full-column-rank
$X \in \R^{p \times r_s}$, the compact SVD of $GX$ is
$GX = (GU)\Sigma V^\top$, so $\Polar(GX) = (GU) V^\top =
G\,\Polar(X)$. Combining the two equivariances gives
\begin{equation}
\Phi(GW) \;=\; GW - \eta\,\Polar(A G W) \;=\; GW - \eta\,G\,\Polar(AW)
\;=\; G\,\Phi(W),
\end{equation}
which is the equivariance claim. Iterating, $\Phi^t(GW_0) =
G\,\Phi^t(W_0)$ for every $t$. If the law of $W_0$ is
$O(r_t)\times O(p-r_t)$-invariant, the law of $W_t = \Phi^t(W_0)$
is invariant under the same group action.
\end{proof}

\paragraph{Caveat.}
Block-orthogonal invariance does not pin down the relative energy
of $W_t$ between the $V$- and $V^\perp$-blocks: an
$O(r_t)\times O(p-r_t)$-invariant distribution can place arbitrary
relative mass in the two blocks. In particular,
$\E_{W_0}[A_{\mathrm{sub}}(W_t)] = r_t/p$, where the expectation is over
initialisation, does not follow from the
symmetry alone. Block-orthogonal equivariance neither establishes
nor excludes a preference for the teacher subspace under a fixed
Hessian. Identifying the cause of the observed preference in the
1-HL ReLU network requires an additional dynamical argument; the
proposition does not attribute it to changing activation patterns.

\phantomsection\label{fig:matrix-muon-noV}\label{obs:matrix-muon-noV-empirical}

\subsection{Assumptions and proof of conditional secular drift}
\label{app:drift-floquet}\label{app:conditional-secular-drift}
This subsection proves Proposition~\ref{prop:conditional-secular-drift}.
Write $M=\min(p,r_s)$, $q=r_t/M$, $A_t=\cos^2_{\min}(W_t)$,
$e_k=1-A_{2k}$ and $v_k=(A_{2k+2}-A_{2k})/2$.
Fix the remaining family parameters $\theta$, including the learning-rate rule.
Here $k$ indexes pairs of optimisation steps, $\asymp_\theta$ means upper and
lower bounds by positive constants depending only on $\theta$, and
$\lesssim_\theta$ has the corresponding one-sided meaning. All matrix norms
in this subsection are operator norms, including those in remainder bounds.
Unadorned constants $C,K$ in bounds are finite and uniform over the stated
family and window, and may change between estimates; $C(\theta)$ denotes the
fixed leading prefactor specified below.
The base assumptions yield the two-sided bound
\eqref{eq:drift-rate-main}. Under the additional stationary
slow-mode conditions specified below, the sharper conclusion is
\begin{equation}
 v_k=C(\theta)q^\alpha e_k(1+o(1)),
 \label{eq:drift-mode-main}
\end{equation}
with explicit uniform control
\begin{equation}
 \left|\frac{v_k}{C(\theta)q^\alpha e_k}-1\right|
 \le K_\theta\left(q^\nu+\sqrt{e_k}+e^{-\omega kq^\alpha}\right).
 \label{eq:drift-mode-error}
\end{equation}
The exponent and prefactor depend on the specified family, including
its learning-rate convention. These statements concern same-parity
alignment drift; an exact repeating weight orbit is not assumed.

\paragraph{Feature coordinates and residence.}
Assume that, on a trajectory window \(W_0,\ldots,W_{2N}\), the leading
rank-\(r_t\) student AGOP subspace is separated from the remaining
spectrum. Near the teacher subspace it has a graph representation
\[
 \mathcal S(W_{2k})=\operatorname{range}(U+U_\perp Z_k),
\]
All two-step conclusions below apply for \(0\le k<N\).
Here \(U\in\R^{p\times r_t}\) and \(U_\perp\in\R^{p\times(p-r_t)}\)
are orthonormal frames for the teacher subspace and its complement, so
\(Z_k\in\R^{(p-r_t)\times r_t}\). For the paper's worst-principal-angle metric,
\begin{equation}
 A_{2k}=\frac1{1+\|Z_k\|_{\rm op}^2},
 \qquad e_k=\frac{\|Z_k\|_{\rm op}^2}{1+\|Z_k\|_{\rm op}^2}.
 \label{eq:drift-mild-graph}
\end{equation}
Orthogonal changes of these frames preserve the formula. Residence in
the stated window and any near period-two loss condition are separate
assumptions; the alignment calculation does not establish them.

\paragraph{Finite-order transverse dynamics.}
Put \(z=q^\nu\), with a prescribed \(\nu>0\).
For an integer \(j\ge1\), suppose that \(\varepsilon=z^j\) and
\begin{equation}
 Z_{k+1}=Q_k\{Z_k-\varepsilon(L_kZ_k+Z_kR_k)+E_k\}S_k^\top ,
 \label{eq:drift-mild-nf}
\end{equation}
where \(Q_k,L_k\in\R^{(p-r_t)\times(p-r_t)}\),
\(S_k,R_k\in\R^{r_t\times r_t}\), and \(E_k\) has the same shape as \(Z_k\).
The matrices \(Q_k,S_k\) are orthogonal, \(L_k,R_k\) are symmetric positive
semidefinite, and, uniformly across the family and window,
\begin{align}
 \lambda_{\min}(L_k)+\lambda_{\min}(R_k)&\ge a>0,\\
 \|L_k\|_{\rm op}+\|R_k\|_{\rm op}&\le b<\infty,\\
 \|E_k\|_{\rm op}
 &\le K\varepsilon\bigl(z\|Z_k\|_{\rm op}+\|Z_k\|_{\rm op}^2\bigr).
 \label{eq:drift-mild-rem}
\end{align}
These are sufficient conditions for dissipation in the operator norm,
which is the norm relevant to the worst principal angle. Positive
definiteness of an arbitrary generator in Frobenius norm alone does
not imply them.

\begin{proof}[Proof of the two-sided law]
Write \(x=\|Z_k\|_{\rm op}\) and \(x_+=\|Z_{k+1}\|_{\rm op}\).
The linear term can be replaced by
\((I-\varepsilon L_k)Z_k(I-\varepsilon R_k)\), with an additional
error at most \(b^2\varepsilon^2x\).
This is absorbed in~\eqref{eq:drift-mild-rem}, since
\(\varepsilon=z^j\le z\) for \(z\le1\).
For positive definite \(P,T\),
\[
 \sigma_{\min}(P)\sigma_{\min}(T)\|Z\|_{\rm op}
 \le\|PZT\|_{\rm op}\le\|P\|_{\rm op}\|T\|_{\rm op}\|Z\|_{\rm op}.
\]
It follows that
\[
 [1-\varepsilon b-C\varepsilon(z+x)]x
 \le x_+\le
 [1-\varepsilon a+C\varepsilon(z+x)]x .
\]
The exact graph identity gives
\[
 \frac{v_k}{\varepsilon e_k}
 =\frac{x^2-x_+^2}{2\varepsilon x^2(1+x_+^2)}.
\]
Hence
\begin{equation}
 a-C(z+\sqrt{e_k})
 \le \frac{v_k}{q^{\nu j}e_k}
 \le b+C(z+\sqrt{e_k}).
 \label{eq:drift-mild-bounds}
\end{equation}
For sufficiently small \(q,e_k\), this proves
\eqref{eq:drift-rate-main} with \(\alpha=\nu j\).
The upper bound on \(x_+\) also keeps the graph error small.
\end{proof}

\paragraph{Obtaining the exponent without analyticity.}
The expansion above can be obtained from a finite Taylor expansion of
the entire two-step graph map in moving coordinates.
Existence of such a graph map is an additional closure hypothesis:
the AGOP subspace alone need not determine its next iterate.
Other state coordinates may be included as parameters if the stated
estimates hold uniformly in them. Derivative bounds below use norms
induced by the operator norm of graph matrices, not dimension-dependent
conversions from the Frobenius norm.
When dimensions vary, this means either a common reduced coordinate
space or a smooth parameter extension on each fixed-dimensional graph
space, evaluated at its actual \(z=q^\nu\); all stated derivative and
remainder bounds must then be uniform across those extensions.
It is not a derivative of matrix dimensions.
Specifically, suppose \(F_z(0)=0\), \(F_0(Z)=Z\), and
\(\partial_z^\ell F_z(Z)|_{z=0}=0\) for \(1\le\ell<j\), throughout
the local graph neighborhood. If
\[
 \frac1{j!}\partial_z^jF_z(Z)\big|_{z=0}
 =-LZ-ZR+O(\|Z\|_{\rm op}^2)
\]
and the mixed derivative \(D_Z\partial_z^{j+1}F\) is uniformly bounded,
Taylor's theorem gives~\eqref{eq:drift-mild-nf}--\eqref{eq:drift-mild-rem}.
Only finite smoothness and a nonzero dissipative term are used.
The integer \(j\) is the first active damping order; the rank-ratio
exponent is \(\alpha=\nu j\).

Checking these derivatives only for the linearization at \(Z=0\) is
weaker. Ordinary uniform \(C^2\) control then gives an unscaled
\(O(\|Z\|_{\rm op}^2)\) nonlinear remainder. Writing \(e=e_k\) and \(v=v_k\),
in that case the same proof gives
\[
 a-C\left(z+\frac{\sqrt e}{q^\alpha}\right)
 \le\frac{v}{q^\alpha e}\le
 b+C\left(z+\frac{\sqrt e}{q^\alpha}\right).
\]
Thus a small fixed relative-error bound holds only in a shrinking
neighborhood \(e\lesssim q^{2\alpha}\).
A uniform small-error neighborhood requires the scaled nonlinear
control above or another condition that supplies it.

\paragraph{One sufficient set of slow-mode conditions.}
For the sharper result, write $Y_k=O_k^\top Z_k T_k$ for orthogonal
changes of the left and right frames, so $\|Y_k\|_{\rm op}=\|Z_k\|_{\rm op}$.
Suppose that in these coordinates the exact transverse dynamics are
stationary and have the form
\begin{equation}
 Y_{k+1}=(I-\varepsilon L(q))Y_k(I-\varepsilon R(q))+N_q(Y_k),
 \qquad
 \|N_q(Y)\|_{\rm op}\le K\varepsilon\|Y\|_{\rm op}^2 .
 \label{eq:drift-mild-stationary}
\end{equation}
Assume the preceding bounds on \(L,R\). Write their smallest eigenvalues
as \(\lambda_0(q),\rho_0(q)\), and their corresponding orthogonal
projectors as \(P_L(q),P_R(q)\).
Assume that these slow eigenspaces are separated from the other
eigenvalues by a uniform gap \(\gamma>0\), and
\[
 \lambda_0(q)+\rho_0(q)=C(\theta)+O(q^\nu),\qquad C(\theta)>0.
\]
Finally, for a fixed \(\delta>0\), assume
\[
 \|P_LY_0P_R\|_{\rm op}\ge\delta\|Y_0\|_{\rm op},
\]
with \(\|Y_0\|_{\rm op}\) sufficiently small depending on \(\delta\).
The quantitative projection condition prevents nonlinear forcing
from canceling the slow component.
These conditions are additional to the finite-order expansion.

\begin{proof}[Proof of the shared-prefactor law]
Set \(x_0=\|Y_0\|_{\rm op}\),
\(\mathcal P Y=P_LYP_R\), and
\[
 \mu_0=(1-\varepsilon\lambda_0)(1-\varepsilon\rho_0)
       =1-C(\theta)\varepsilon+O(\varepsilon q^\nu).
\]
The linear map equals \(\mu_0 I\) on the slow subspace.
Its complement splits into the three blocks
\((I-P_L)YP_R\), \(P_LY(I-P_R)\), and
\((I-P_L)Y(I-P_R)\).
The \(n\)-step linear action on the complement has norm at most
\(3\mu_0^n e^{-c\gamma\varepsilon n}\); this estimate is independent
of matrix dimensions.

For sufficiently small \(x_0\), global contraction gives
\(\|Y_k\|_{\rm op}\le x_0r^k\), with
\(r=\mu_0+K\varepsilon x_0<1\) and
\(r^2/\mu_0\le1-c\varepsilon\).
For every \(k\) in the window, write
\[
 D_k=\mathcal P Y_0+
 \sum_{n=0}^{k-1}\mu_0^{-n-1}\mathcal P N_q(Y_n),
 \qquad \mathcal P Y_k=\mu_0^kD_k.
\]
Each nonlinear term carries \(\varepsilon\), which cancels the
\(O(\varepsilon^{-1})\) geometric sum, so that
\(\|D_k-\mathcal P Y_0\|_{\rm op}\le Cx_0^2\), uniformly in \(k\).
Choosing \(x_0\) small relative to \(\delta\) gives
\(\|D_k\|_{\rm op}\ge\delta x_0/2\).
The same sums, projected onto the complementary blocks, give
\[
 \|(I-\mathcal P)Y_k\|_{\rm op}
 \le Cx_0\mu_0^k e^{-\omega\varepsilon k},
 \qquad
 \|Y_k\|_{\rm op}\ge c_\delta x_0\mu_0^k .
\]
These estimates use only iterates within the stated window.

Let \(\mathcal T_qY=(I-\varepsilon L)Y(I-\varepsilon R)\).
The operator \(\mathcal T_q-\mu_0I\) vanishes on the slow subspace
and has operator-norm-to-operator-norm bound \(O(\varepsilon)\).
Consequently
\[
 \left|\|Y_{k+1}\|_{\rm op}-\mu_0\|Y_k\|_{\rm op}\right|
 \le C\varepsilon\|Y_k\|_{\rm op}
 \left(e^{-\omega\varepsilon k}+\|Y_k\|_{\rm op}\right).
\]
This extra factor of \(\varepsilon\) is essential when converting
mode selection to a normalized drift rate.
Substitution in the exact graph identity, together with
\[
 \frac{1-\mu_0^2}{2}
 =C(\theta)\varepsilon+O(\varepsilon q^\nu),
\]
proves~\eqref{eq:drift-mode-main}.
\end{proof}

\paragraph{Interpretation and limits.}
A fixed-factor error reduction takes
\(\Theta_\theta(q^{-\alpha})\) optimization steps, provided the plateau
window lasts long enough. The shared-prefactor relative error vanishes
when \(q\to0\), \(e_k\to0\), and \(kq^\alpha\to\infty\).
The slow-mode hypotheses need not follow from a leading Taylor expansion:
they must hold for the actual linearization.
Likewise, arbitrary perturbations of a multiple slow eigenvalue need
not preserve its gap.

Finite smoothness alone is insufficient: a positive damping factor
\(\exp(-1/q^2)\), extended by zero at \(q=0\), is smooth but has no
finite nonzero power-law term. A finite-order nondegeneracy condition
is therefore necessary for this route.
If contraction remains nonzero as \(q\to0\), an exponent-zero variant
uses the exact discrete coefficient \((1-\mu(0)^2)/2\).
None of these statements identifies the exponent for the full empirical
Muon family without checking its transverse dynamics.

\phantomsection\label{app:open-claims}\label{app:status-summary}

\section{Additional experiments and theory-vs-experiment alignment}\label{app:additional}

This appendix collects the multi-activation extension of the experiments
in Section~\ref{sec:1NN-phenomenon} (the GELU and SiLU teacher grids) and the
per-config quantitative detail for each of the seven theory tests on the
long-run ReLU checkpoints. The tables in Section~\ref{app:tests} are
computed from the saved long-run ReLU checkpoints. Figures~\ref{fig:silu-top6} and~\ref{fig:gelu-top6} show the smooth-teacher grids, and
Table~\ref{tab:all33} summarises all $33$ ReLU, GELU and SiLU configurations over five seeds.

\paragraph{Notation and diagnostics.}
Here $p,r_t,r_s,n,\eta$ denote input dimension, teacher rank, student width,
training-set size and the polar-update coefficient; $M=\min(p,r_s)$.
The symbol $\sigma_t$ in table headers names the teacher activation (the
subscript means ``teacher'', not optimisation time).
Write $V=\linspan(U)$, $P_V=UU^\top$, $P_\perp=I_p-P_V$, and
$L_t=L_{\mathrm{train}}(W_t)$. For the empirical input AGOP
$G_s(W)=n^{-1}\sum_{i=1}^n\nabla_xf_W(x_i)\nabla_xf_W(x_i)^\top$,
let $\lambda_1\ge\cdots\ge\lambda_p\ge0$ and $v_1,\ldots,v_p$ be its
ordered eigenvalues and orthonormal eigenvectors.
The two alignment summaries are the mean and minimum squared cosines of the
principal angles between $\linspan(v_1,\ldots,v_{r_t})$ and $V$
(\eqref{eq:cos-metrics}). We distinguish the direction-only mass
$\sum_{j\le r_t}\|P_Vv_j\|_2^2$ from the eigenvalue-weighted fraction
\[
\mathrm{mass}_V(G_s):=\frac{\Tr(P_VG_s)}{\Tr(G_s)}
=\frac{\sum_j\lambda_j\|P_Vv_j\|_2^2}{\sum_j\lambda_j},
\qquad
A_{\mathrm{sub}}(W):=\frac{\|P_VW\|_F^2}{\|W\|_F^2}.
\]
The first fraction requires $\Tr(G_s)>0$ and the second $W\ne0$.
The head-fit losses $L_{V,\mathrm{opt}}$ and $L_{\mathrm{AGOP\text{-}opt}}$
refit the head after projecting $W$ onto $V$ or the student's selected AGOP
subspace, respectively, as in~\eqref{eq:LAGOPopt}.
Empirical means, standard deviations and correlations below are taken over
the stated training-step window; aggregation over seeds is specified separately.

\subsection{Plateau-search grid: ReLU, GELU, and SiLU teachers}\label{app:grid}

We sweep $(p, r_t, r_s, \eta)$ at fixed seed $5$, full batch, $n
= 15{,}000$, and $8{,}000$--$10{,}000$ training steps. The ReLU sweep
covers $p \in \{100, 150, 200\}$, $r_t \in \{4, 6, 8, 10, 12\}$, $r_s
\in \{30, 50, 80, 100\}$, $\eta \in \{1.0, 1.25, 1.5, 2.0, 2.5\}$;
the smooth-teacher sweep adds large-$r_t$ amplification configs ($r_t
\in \{12, 16, 20\}$) at the activation's EoS learning rate ($\eta
\approx 0.7$ for GELU, $\eta \approx 0.85$ for SiLU). The cleanest
\emph{rising-loss-with-learning} regimes come from the smooth-teacher
amplification configs: SiLU $r_t = 20$, $\eta = 0.85$ exhibits a loss
that climbs from $1.92$ to $2.06$ while $\cos^2_{\min}$ grows by
$+0.43$ over the second half of training; GELU $r_t = 16$, $\eta =
0.85$ shows the analogous picture on a smaller amplitude. The
cleanest-by-score configs from each sweep populate
Figures~\ref{fig:relu-top6}, \ref{fig:silu-top6},
and~\ref{fig:gelu-top6}.
Table~\ref{tab:all33} lists all $33$ configurations with their outcomes.

\begin{table}[h]
\centering
\caption{\textbf{All $33$ one-hidden-layer teacher configurations} ($p=100$, $n=15{,}000$, $8{,}000$ steps, $5$ seeds each: $5,11,23,37,41$; medians over seeds).
$1-\rho_2$: second-half period-$2$ signature (values below $10^{-10}$ shown as $10^{-10}$); $L/L_{V,\mathrm{opt}}$: cycle-mean training loss over the last $30\%$ of steps divided by the head oracle;
$\Delta\cos^2_{\min}$: change from step $4{,}000$ to the end; final $\cos^2_{\min}$ at the last step.
Outcome, from the medians: \emph{not recovered} if final $\cos^2_{\min}<0.2$; otherwise \emph{aligned by midpoint} if
$\Delta\cos^2_{\min}<0.005$ and final $\cos^2_{\min}\ge0.95$; \emph{no late gain} if $\Delta\cos^2_{\min}<0.005$ otherwise;
\emph{late gain} if $\Delta\cos^2_{\min}\ge0.005$. This five-seed grid is distinct from the
single-seed search above and from the $53$-seed runs of Figures~\ref{fig:1hl-main} and~\ref{fig:theory-vs-exp}.}
\label{tab:all33}
\scriptsize
\setlength{\tabcolsep}{3.5pt}
\begin{tabular}{lrrrcrrrl}
\toprule
Teacher & $r_t$ & $r_s$ & $\eta$ & $1-\rho_2$ & $L/L_{V,\mathrm{opt}}$ & $\Delta\cos^2_{\min}$ & final $\cos^2_{\min}$ & Outcome\\
\midrule
ReLU & 4 & 50 & 2 & $4.6\!\times\!10^{-7}$ & 3045 & $-0.028$ & 0.670 & no late gain \\
ReLU & 6 & 50 & 1.5 & $9.2\!\times\!10^{-9}$ & 173 & $+0.016$ & 0.955 & late gain \\
ReLU & 8 & 30 & 1.5 & $1.0\!\times\!10^{-10}$ & 16.2 & $+0.532$ & 0.831 & late gain \\
ReLU & 8 & 50 & 1 & $1.0\!\times\!10^{-10}$ & 13.3 & $+0.289$ & 0.857 & late gain \\
ReLU & 8 & 50 & 1.5 & $1.2\!\times\!10^{-10}$ & 40.9 & $+0.189$ & 0.907 & late gain \\
ReLU & 12 & 50 & 1.5 & $1.0\!\times\!10^{-10}$ & 23.1 & $+0.077$ & 0.820 & late gain \\
ReLU & 16 & 50 & 1.5 & $1.9\!\times\!10^{-10}$ & 11.3 & $+0.079$ & 0.096 & not recovered \\
ReLU & 16 & 50 & 2 & $1.0\!\times\!10^{-10}$ & 26.7 & $+0.093$ & 0.104 & not recovered \\
ReLU & 20 & 50 & 2 & $1.8\!\times\!10^{-10}$ & 16.1 & $+0.052$ & 0.065 & not recovered \\
\midrule
GELU & 4 & 50 & 0.7 & $6.4\!\times\!10^{-6}$ & 7.8 & $+0.001$ & 0.995 & aligned by midpoint \\
GELU & 6 & 50 & 0.7 & $1.3\!\times\!10^{-7}$ & 7.5 & $+0.012$ & 0.990 & late gain \\
GELU & 8 & 30 & 0.7 & $2.6\!\times\!10^{-9}$ & 3.0 & $+0.015$ & 0.977 & late gain \\
GELU & 8 & 50 & 0.5 & $5.5\!\times\!10^{-9}$ & 4.2 & $+0.038$ & 0.967 & late gain \\
GELU & 8 & 50 & 0.7 & $1.7\!\times\!10^{-9}$ & 6.6 & $+0.066$ & 0.960 & late gain \\
GELU & 12 & 50 & 0.6 & $3.9\!\times\!10^{-10}$ & 3.3 & $+0.447$ & 0.758 & late gain \\
GELU & 12 & 50 & 0.7 & $2.3\!\times\!10^{-10}$ & 4.1 & $+0.242$ & 0.823 & late gain \\
GELU & 12 & 50 & 0.85 & $2.1\!\times\!10^{-10}$ & 4.8 & $+0.069$ & 0.886 & late gain \\
GELU & 12 & 80 & 0.7 & $1.8\!\times\!10^{-10}$ & 8.4 & $+0.405$ & 0.557 & late gain \\
GELU & 16 & 50 & 0.7 & $1.7\!\times\!10^{-10}$ & 3.0 & $+0.100$ & 0.809 & late gain \\
GELU & 16 & 50 & 0.85 & $1.5\!\times\!10^{-10}$ & 4.0 & $+0.359$ & 0.517 & late gain \\
GELU & 20 & 50 & 0.7 & $2.4\!\times\!10^{-10}$ & 1.9 & $+0.242$ & 0.556 & late gain \\
\midrule
SiLU & 4 & 50 & 0.85 & $4.9\!\times\!10^{-6}$ & 8.4 & $-0.000$ & 0.989 & aligned by midpoint \\
SiLU & 6 & 50 & 0.85 & $1.8\!\times\!10^{-6}$ & 5.6 & $+0.004$ & 0.987 & aligned by midpoint \\
SiLU & 8 & 30 & 0.85 & $7.2\!\times\!10^{-9}$ & 2.4 & $+0.017$ & 0.977 & late gain \\
SiLU & 8 & 50 & 0.6 & $1.1\!\times\!10^{-8}$ & 3.2 & $+0.015$ & 0.977 & late gain \\
SiLU & 8 & 50 & 0.85 & $1.3\!\times\!10^{-8}$ & 5.6 & $+0.016$ & 0.975 & late gain \\
SiLU & 12 & 50 & 0.7 & $5.8\!\times\!10^{-10}$ & 2.7 & $+0.279$ & 0.894 & late gain \\
SiLU & 12 & 50 & 0.85 & $6.1\!\times\!10^{-10}$ & 4.1 & $+0.312$ & 0.889 & late gain \\
SiLU & 12 & 50 & 1 & $6.6\!\times\!10^{-10}$ & 5.2 & $+0.176$ & 0.916 & late gain \\
SiLU & 12 & 80 & 0.85 & $5.3\!\times\!10^{-10}$ & 11.5 & $+0.211$ & 0.905 & late gain \\
SiLU & 16 & 50 & 0.85 & $1.9\!\times\!10^{-10}$ & 2.6 & $+0.277$ & 0.845 & late gain \\
SiLU & 16 & 50 & 1 & $1.3\!\times\!10^{-10}$ & 3.9 & $+0.376$ & 0.793 & late gain \\
SiLU & 20 & 50 & 0.85 & $1.0\!\times\!10^{-10}$ & 3.3 & $+0.384$ & 0.415 & late gain \\
\bottomrule
\end{tabular}
\end{table}

\subsection{Per-teacher top-$6$ configurations and the full ReLU
10-config grid}\label{app:per-teacher-extras}

The effective-rank curves count, among the 12 recorded leading AGOP eigenvalues,
those at least $0.05\lambda_1$ or at least $0.5\lambda_2$, respectively.
The latter threshold also defines the selected rank $\tilde r$ used by the
head-refitting diagnostic (over the spectrum used for that diagnostic).

Figure~\ref{fig:relu-top6} shows six ReLU configurations and
Figures~\ref{fig:silu-top6}--\ref{fig:gelu-top6} six SiLU and six GELU configurations,
selected on the seed-$5$ sweep with the scores of Section~\ref{app:selection} and shown with
all seeds; the selection and the column order are those of the seed-$5$ ranking.
Figure~\ref{fig:relu-grid10} reports the underlying full 10-config
ReLU plateau-search grid from which Figure~\ref{fig:relu-top6}
selects.

\begin{figure}[t]
\centering
\includegraphics[width=\linewidth]{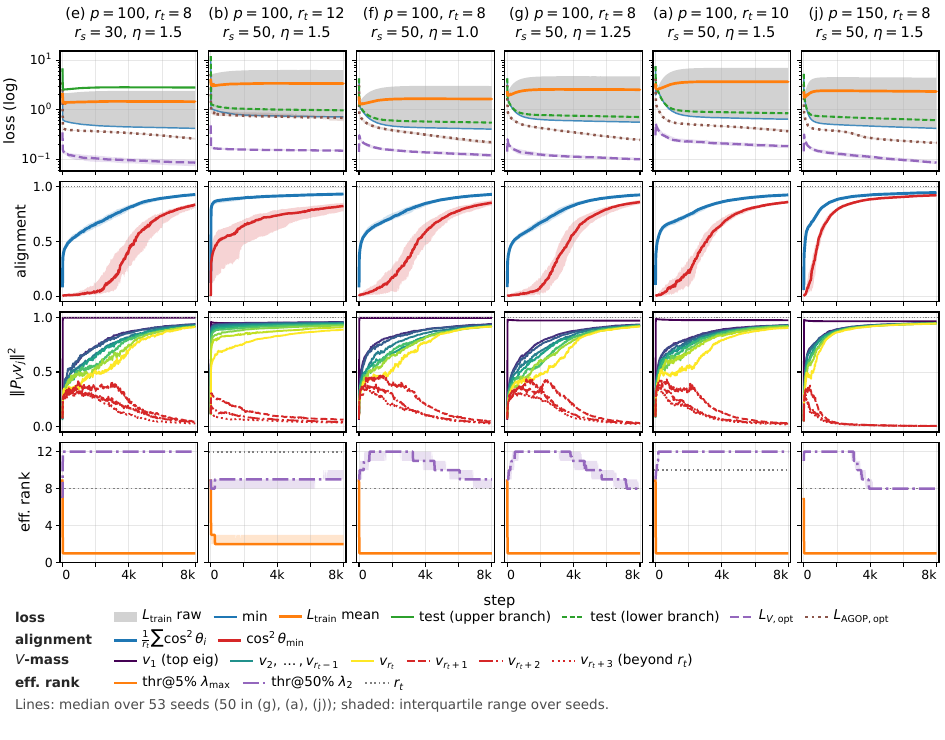}
\caption{\textbf{Top-$6$ ReLU configurations from the 10-configuration plateau-search grid:
features \ref{P:cycle} and \ref{P:align}.} Columns are configurations, lettered as in
Figure~\ref{fig:relu-grid10}; lines are medians over 53 seeds (50 for (g), (a), (j)) and shading is
the interquartile range. \textbf{Row 1} training loss (per-step range, running minimum, running mean),
test loss and the oracles $L_{V,\mathrm{opt}}$, $L_{\mathrm{AGOP\text{-}opt}}$
(Section~\ref{sec:setup}). Test loss and oracles are logged every 20 steps, so they sit on one branch
of the period-2 loss oscillation, the same for every seed; the legend names the branch. Over the last $30\%$ of
training the cycle-mean training loss is $14$--$26\times L_{V,\mathrm{opt}}$ and
$5$--$11\times L_{\mathrm{AGOP\text{-}opt}}$. \textbf{Row 2} AGOP direction-only alignment
$\overline{\cos^2}$, $\cos^2_{\min}$ (verifies \ref{P:align}). \textbf{Row 3} per-eigenvector
$V$-mass $\|P_V v_j\|^2$ for the top $r_t+3$ AGOP eigenvectors. \textbf{Row 4} AGOP effective rank
from the top 12 eigenvalues; dotted line at $r_t$.}
\label{fig:relu-top6}
\end{figure}

\begin{figure}[t]
\centering
\includegraphics[width=\linewidth]{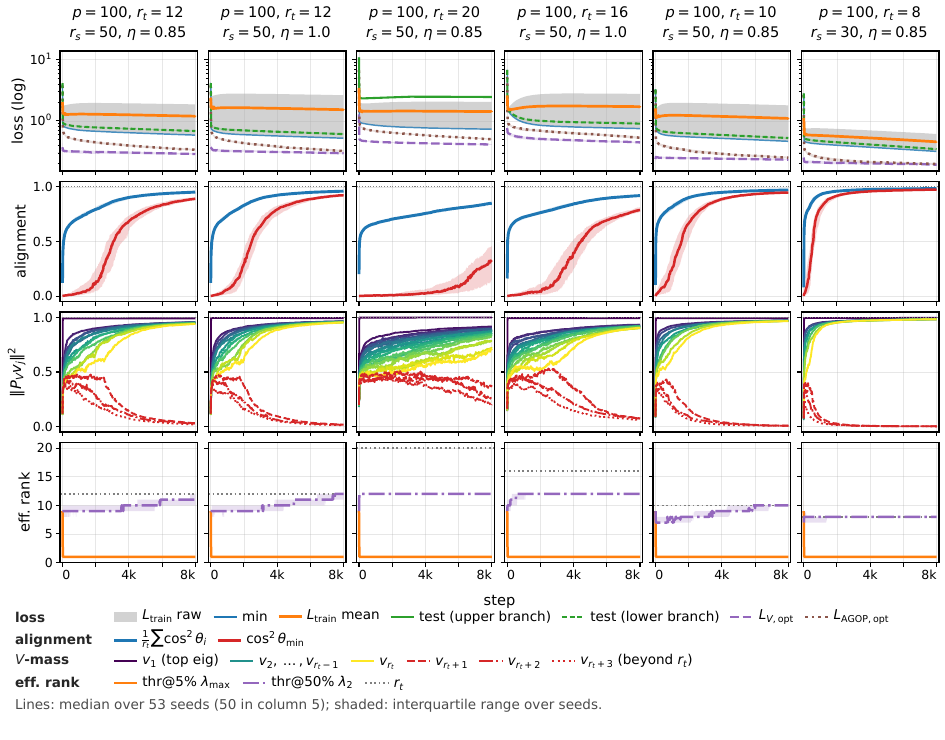}
\caption{Six SiLU configurations, selected on seed $5$ with the smooth-teacher score of
Section~\ref{app:selection}. Same four-row layout and statistics as
Figure~\ref{fig:relu-top6}: medians over 53 seeds (50 in column 5), interquartile range shaded.}
\label{fig:silu-top6}
\end{figure}

\begin{figure}[t]
\centering
\includegraphics[width=\linewidth]{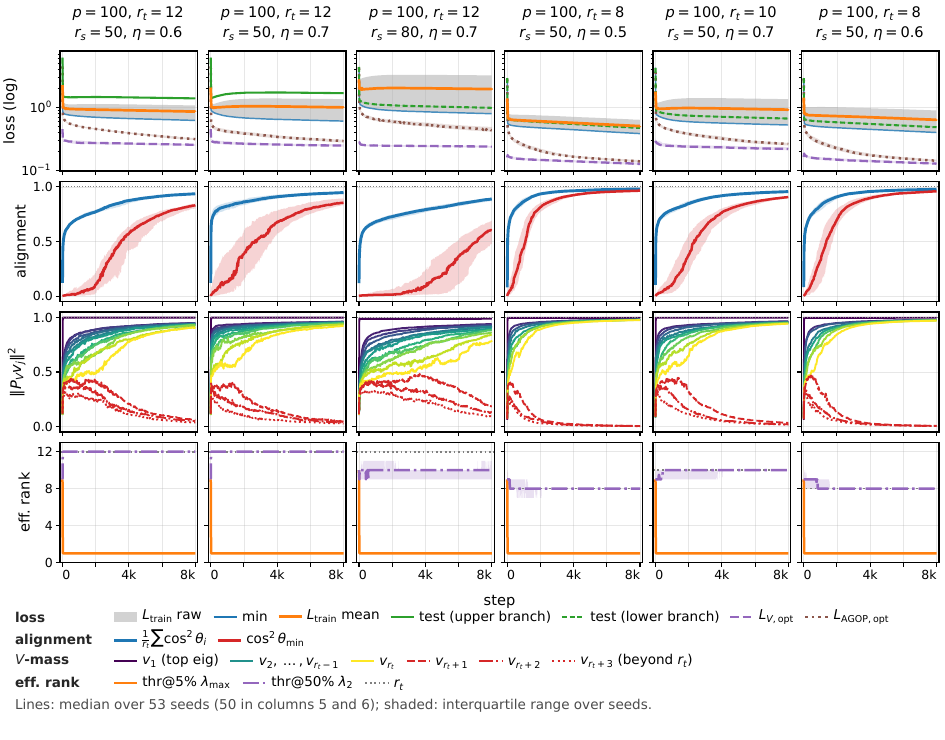}
\caption{Six GELU configurations, selected on seed $5$ with the smooth-teacher score of
Section~\ref{app:selection}. Same four-row layout and statistics as
Figure~\ref{fig:relu-top6}: medians over 53 seeds (50 in columns 5 and 6), interquartile range shaded.}
\label{fig:gelu-top6}
\end{figure}

\begin{figure}[t]
\centering
\includegraphics[width=\linewidth]{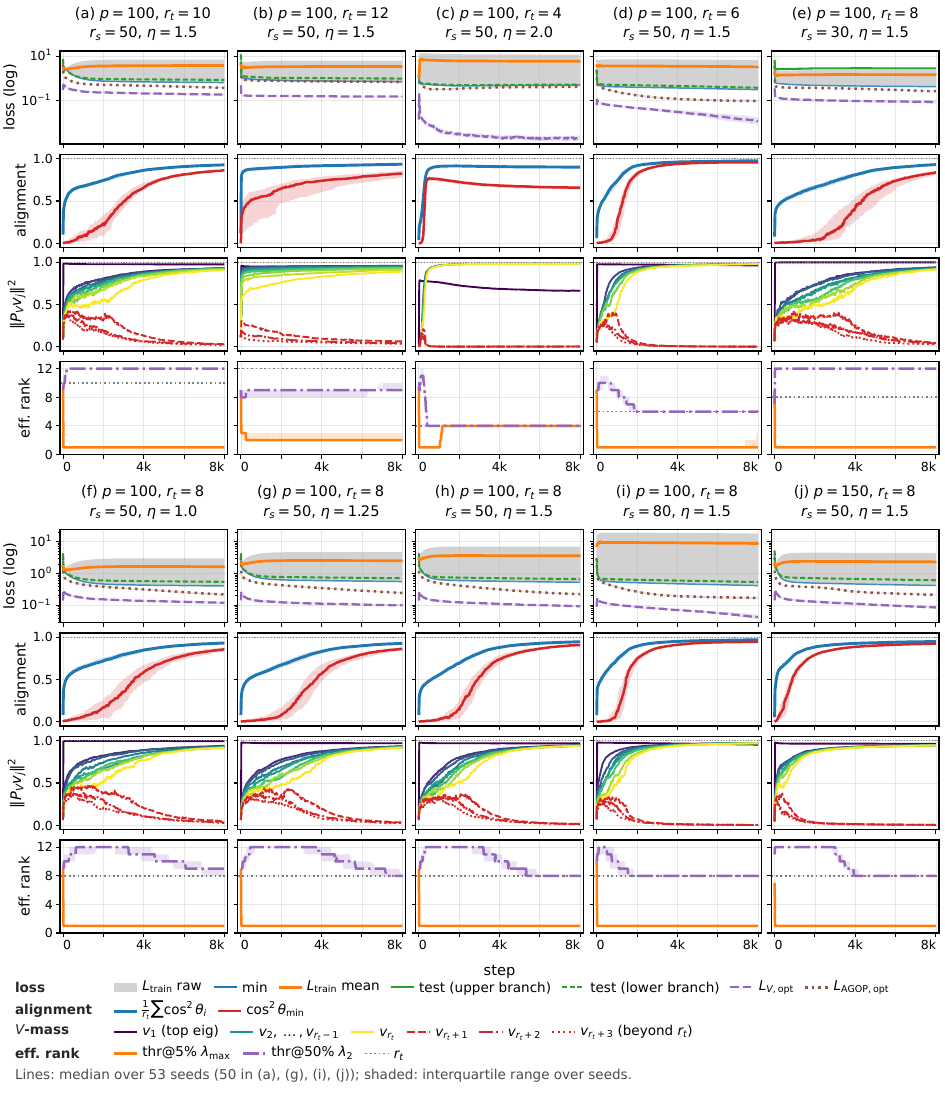}
\caption{Full 10-configuration plateau-search grid for the ReLU teacher (superset of
Figure~\ref{fig:relu-top6}). Each column is one $(p, r_t, r_s, \eta)$ configuration; rows and statistics
as in Figure~\ref{fig:relu-top6} (53 seeds; 50 in (a), (g), (i), (j)). Configurations span
$r_t \in \{4,6,8,10,12\}$, $r_s \in \{30, 50, 80\}$, $p \in \{100, 150\}$ and
$\eta \in \{1.0, 1.25, 1.5, 2.0\}$.}
\label{fig:relu-grid10}
\end{figure}

\begin{figure}[t!]
\centering
\includegraphics[width=\linewidth]{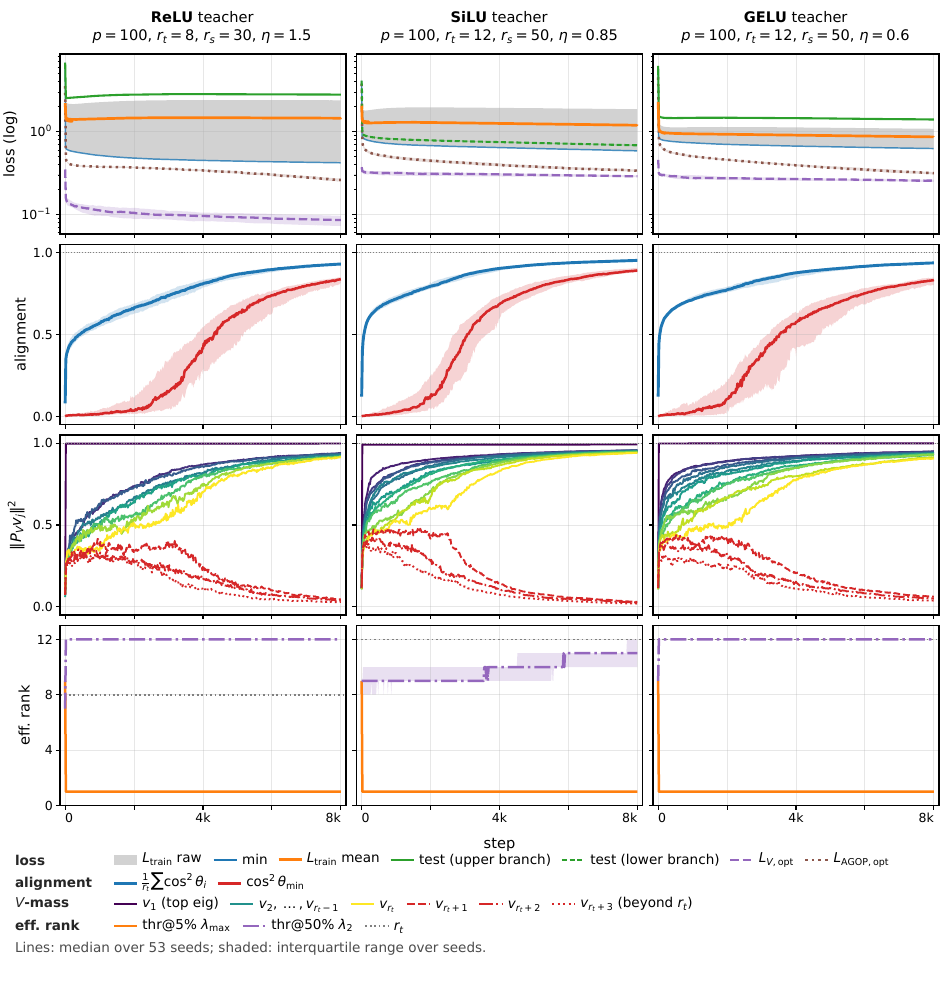}
\caption[Cleanest configuration per teacher activation: ReLU, SiLU, GELU.]{\textbf{Cleanest configuration per teacher
activation: ReLU, SiLU, GELU.} The four-row diagnostic (loss
decomposition, AGOP direction-only alignment, per-eigvec $V$-mass,
AGOP effective rank) reproduces features \ref{P:cycle} and
\ref{P:align} for all three teacher activations. Per-activation
configurations are the seed-$5$ top choices of Section~\ref{app:selection},
kept for the multi-seed runs (with all seeds, the SiLU and GELU choices rank third
among the six shown).
Per-teacher 10-config grids and 6-config rankings are in
Figures~\ref{fig:silu-top6}--\ref{fig:relu-grid10} above. Lines are medians over 53 seeds; shading is the interquartile range.}
\label{fig:three-best}
\end{figure}

\paragraph{Reading Figure~\ref{fig:1hl-main}.}
\textbf{Loss plateau with continued feature learning in the 1-HL student (M4).}
One configuration per ReLU, SiLU and GELU teacher activation; matrix Muon, $8{,}000$ steps.
\textbf{Top:} training loss (per-step range, running minimum and mean), test loss and oracles
$L_{V,\mathrm{opt}}$, $L_{\mathrm{AGOP\text{-}opt}}$.
\textbf{Bottom:} AGOP alignment $\overline{\cos^2}$, $\cos^2_{\min}$ rises during the period-$2$ loss plateau.
Medians over $53$ seeds; shaded interquartile range. Figure~\ref{fig:three-best} shows the same columns with all four rows.

\subsection{Configuration selection}\label{app:selection}

The configurations shown in Figures~\ref{fig:relu-top6}--\ref{fig:three-best} were chosen before
the multi-seed runs, on the seed-$5$ sweep described at the start of this appendix, with the
following per-trajectory scores; the multi-seed figures keep those choices and their order.
The original seed-$5$ figures span $10{,}000$ steps; the retrospective rankings below use
the $8{,}000$-step trajectories of the multi-seed replication.
For a run of $T$ steps with per-step loss $L_1,\dots,L_T$, let $\bar L$ be its running mean over windows of
$\max(3,\lfloor T/30\rfloor)$ steps, $s=\lfloor T/2\rfloor$, $L_{\mathrm{init}}$ the mean of the first
five losses and $L_p$ the mean of $\bar L$ over steps $s$ to $T$. Here $T$ counts training steps,
whereas $n$ counts training examples. The slice $[s:T]$ restricts a sequence to that window.
Both $\mathrm{std}$ and $\mathrm{sd}$ denote its empirical standard deviation. With
\begin{gather*}
\mathrm{flat}=\max\Bigl\{0,\,1-\frac{\mathrm{std}(\bar L_{[s:T]})}{L_p}\Bigr\},\qquad
\mathrm{grow}=\max\{0,\,\cos^2_{\min}(W_T)-\cos^2_{\min}(W_s)\},\\
\mathrm{drop}=\min\Bigl\{1,\max\Bigl\{0,\log_{10}\frac{L_{\mathrm{init}}}{L_p}\Bigr\}\Bigr\},
\end{gather*}
the ReLU score is $\min(\mathrm{flat},1)\cdot\min(\mathrm{grow},1)\cdot\mathrm{drop}$: a flat second
half, alignment growth after the midpoint, and loss reduction from initialisation, capped at one
decade. For SiLU and GELU it is multiplied by $e^{-|\bar k-r_t|}/(1+\mathrm{sd}(k))$, where $k_t$ is the
AGOP effective rank at the $50\%$ threshold (the number of eigenvalues at least half the second
largest) over the second half, with mean $\bar k$ and standard deviation $\mathrm{sd}(k)$. The scores
reward a flat loss, alignment growth and some loss reduction; they do not involve the head-fit
oracle. Ranked instead by the median score over all seeds, the six ReLU configurations are the same,
in the order (e), (f), (b), (a), (g), (j) rather than the plotted (e), (b), (f), (g), (a), (j);
among the six SiLU and the six GELU configurations shown, the configurations of
Figure~\ref{fig:three-best} rank third (first are $r_t=20$, $\eta=0.85$ for SiLU and $r_t=10$,
$\eta=0.7$ for GELU).

\subsection{Theory--experiment overlay and test-by-test detail}\label{app:tests}

Figure~\ref{fig:theory-vs-exp} overlays theory and experiment for the $15$ multi-activation
configurations; the tests below give the per-configuration evidence.
All use $p=100$ and $r_s=50$. ReLU (blue): $(r_t,\eta)=(4,2)$ circle, $(6,1.5)$ down-triangle,
$(8,1.5)$ square, $(12,1.5)$ diamond, $(16,2)$ star. GELU, $\eta=0.7$ (purple):
$r_t=4,8,12,16$ as square, down-triangle, diamond, star; GELU $r_t=16$, $\eta=0.85$: magenta plus.
SiLU, $\eta=0.85$ (brown): $r_t=4,8,12,16$ with the same shapes; SiLU $r_t=20$, $\eta=0.85$: olive plus.

\paragraph{Reading Figure~\ref{fig:theory-vs-exp}.}
\textbf{Theory--experiment overlay: 15 ReLU, GELU and SiLU teacher configurations.}
Medians over 53 seeds; interquartile ranges as error bars or shading.
\textbf{(a)} Second-half per-step loss period-$2$ signature $1-\rho_2$: at most about $6.3\times10^{-6}$, with values below $10^{-10}$ floored for display (supports \ref{P:cycle}).
\textbf{(b)} Final top-$r_t$ AGOP $V$-mass $\sum_{j\le r_t}\|P_V v_j\|^2$ against its maximum $r_t$
(AGOP alignment; this is not the gradient leakage of Proposition~\ref{prop:polar-identities}(iii)):
within $7\%$ for $r_t\le12$, except ReLU $r_t{=}4$, $\eta{=}2$ ($10\%$); $6$--$28\%$ below for $r_t\ge16$.
\textbf{(c)} Plateau loss (cycle mean, last $30\%$ of steps) vs.\ head oracle $L_{V,\mathrm{opt}}$:
$23$--$3000\times$ above diagonal for ReLU, $2.7$--$8.4\times$ for GELU and SiLU.
\textbf{(d)} Late-half $\cos^2_{\min}$ secant vs.\ $r_t/M$: log-log slope $1.8$ across the 12 configurations with
late-half gain $>0.005$; three saturated $r_t{=}4$ configurations are open markers.
This descriptive raw-gain fit, not the normalised rate $v_k/e_k$, does not test Proposition~\ref{claim:drift}.
\textbf{(e)} Off-$V$ AGOP-mass proxy $(1-\mathrm{mass}_V)\cdot M$ vs.\ $\eta^2(M-r_t)$,
with a unit-slope reference line whose prefactor is the median ratio
(approximately $0.09$). The slope is fixed, not fitted. This comparison
does not measure the fixed-head loss residual in Conjecture~\ref{claim:plateau}.
\textbf{(f)} $\cos^2_{\min}$ (solid), $\overline{\cos^2}$ (dashed): four selected trajectories (illustrates \ref{P:align}).

\paragraph{Test A: period-$2$ loss signature.}
Over the second half of training, the seed median of $1-\rho_2$, with
$\rho_2 := -\mathrm{Corr}(\Delta L_t, \Delta L_{t+1})$ and
$\Delta L_t:=L_{t+1}-L_t$, is at most about $6.3\times10^{-6}$
in all $15$ configurations of Figure~\ref{fig:theory-vs-exp}(a); values below $10^{-10}$
are floored for display. The loss
increments therefore alternate almost perfectly; this does not establish
an exact two-cycle of the weights (Section~\ref{sec:1NN-phenomenon}).

\paragraph{Test B: V/V$^\perp$ polar mass split
(Proposition~\ref{prop:polar-identities}).}
Computed by taking the polar update on the saved final $W$ and splitting
into $\|P_V\Polar(g)\|_F^2$ and $\|P_\perp\Polar(g)\|_F^2$, where
$g$ is the gradient reconstructed as specified in the table caption.

\begin{table}[h]
\caption{Polar mass split on three long-run ReLU checkpoints. The total equals $M$ to
numerical precision (Proposition~\ref{prop:polar-identities}(i)); the $V$-mass is approximately $2$--$5\%$
below $r_t$. The gradient is recomputed from the saved weights with a unit output head, and the
leakage scalars of Proposition~\ref{prop:polar-identities}(iii) were not measured separately.}
\label{tab:polar-mass}
\centering
\begingroup
\small
\renewcommand{\arraystretch}{1.2}
\setlength{\tabcolsep}{5pt}
\begin{tabular}{lrrrrrr}
\toprule
& \multicolumn{2}{c}{$V$-mass} & \multicolumn{2}{c}{$V^\perp$-mass} & \multicolumn{2}{c}{Total mass} \\
\cmidrule(lr){2-3}\cmidrule(lr){4-5}\cmidrule(lr){6-7}
Config $(\sigma_t,r_t,r_s)$ & Predicted & Measured & Predicted & Measured & $M$ & Measured \\
\midrule
ReLU, $r_t=4$, $r_s=50$ & 4 & 3.79 & 46 & 46.21 & 50 & 50.00 \\
ReLU, $r_t=8$, $r_s=50$ & 8 & 7.86 & 42 & 42.14 & 50 & 50.00 \\
ReLU, $r_t=12$, $r_s=50$ & 12 & 11.76 & 38 & 38.24 & 50 & 50.00 \\
\bottomrule
\end{tabular}

\endgroup
\end{table}

\paragraph{Test C: plateau height vs head-fit limit.}
For a trajectory of $T$ steps, measure
$L_{\mathrm{plateau}} := \mathrm{mean}(L_{[\lfloor0.7T\rfloor:T]})$ and
compare to the head-fit oracle $L_{V,\mathrm{opt}}$, which optimises
the head amplitudes after projecting $W$ onto $V$. Because the head is refitted, this comparison
does not test Conjecture~\ref{claim:plateau}, which keeps it fixed.

\begin{table}[h]
\caption{Plateau height vs head-fit oracle. The plateau loss sits
$10\times$--$36\times$ above $L_{V,\mathrm{opt}}$, the loss after projecting $W$
onto $V$ and refitting the head: the representation supports a much better
predictor than the trained network realises (Test~E).}
\centering
\makebox[\linewidth][c]{\small\begin{tabular}{lrrr}
\toprule
$(\sigma_t, p, r_t, r_s, \eta)$ & $L_{\mathrm{plateau}}$ & $L_{V,\mathrm{opt}}$ & ratio $L_{\mathrm{plateau}} / L_{V,\mathrm{opt}}$ \\
\midrule
ReLU, $(100, 8, 50, 1.0)$ & 1.70 & 0.18 & 9.6 \\
ReLU, $(100, 8, 50, 1.5)$ & 3.74 & 0.10 & 36.0 \\
ReLU, $(100, 12, 50, 1.5)$ & 3.48 & 0.16 & 21.5 \\
\bottomrule
\end{tabular}
}
\end{table}

\paragraph{Test D: drift growth rate.}
Measure the empirical growth rate of $\cos^2_{\min}$ over the second
half of training, and compare to the naive scale $\eta\,r_t/M$ (per
step). In Table~\ref{tab:drift}, the measured rate is the finite secant
$[\cos^2_{\min}(W_{t_b})-\cos^2_{\min}(W_{t_a})]/(t_b-t_a)$
from the first logged checkpoint in the second half, $t_a$, to the final
checkpoint $t_b$, rather than a continuous-time derivative.

\begin{table}[h]
\caption{Drift rate scaling. The measured per-step rate is
$O(10^{-4})$ across configs, while the comparison scale $\eta r_t / M$ is
$O(10^{-1})$; the table reports their ratio (Test~G). This raw rate is not the normalised rate of
Proposition~\ref{claim:drift}, whose conditions we have not verified
here. The table covers $r_t \in \{6,8,10,12\}$ at fixed $\eta = 1.5$.}
\label{tab:drift}
\centering
\makebox[\linewidth][c]{\small\begin{tabular}{lrrr}
\toprule
$(\sigma_t, p, r_t, r_s, \eta)$ & predicted $\eta r_t/M$ & measured secant & ratio \\
\midrule
ReLU, $(100, 6, 50, 1.5)$ & $0.18$ & $2.0\times10^{-5}$ & $+1.13\times10^{-4}$ \\
ReLU, $(100, 8, 50, 1.5)$ & $0.24$ & $2.2\times10^{-4}$ & $+9.16\times10^{-4}$ \\
ReLU, $(100, 10, 50, 1.5)$ & $0.30$ & $2.6\times10^{-4}$ & $+8.62\times10^{-4}$ \\
ReLU, $(100, 12, 50, 1.5)$ & $0.36$ & $4.0\times10^{-5}$ & $+1.10\times10^{-4}$ \\
\bottomrule
\end{tabular}
}
\end{table}

\paragraph{Test E: $L_{\mathrm{AGOP\text{-}opt}}$ below the training loss.}
On the long-run ReLU configs in Table~\ref{tab:nfa},
$L_{\mathrm{AGOP\text{-}opt}}$ lies well below $L_{\mathrm{train}}$ over
most of the plateau, though not at every checkpoint; on the six cleanest
ReLU configurations of Figure~\ref{fig:relu-top6} the seed-median ratio of the
cycle-mean training loss to $L_{\mathrm{AGOP\text{-}opt}}$ over the last $30\%$ of
training is about $5$--$11$, and the same ordering is visible in the loss panels of
Figures~\ref{fig:silu-top6} and~\ref{fig:gelu-top6}. The diagnostic uses no
teacher information. Because it projects $W$ before refitting the head, it
shows that the learned directions support a better predictor, not that the
head alone limits the loss.

\paragraph{Test F: weight mass versus AGOP mass.}\label{app:nfa}
Across the long-run ReLU configs, top-$r_t$ AGOP eigvecs align with
$V$ ($\overline{\cos^2} \in [0.85, 0.96]$) and the AGOP eigenvalue mass
concentrates ($\mathrm{mass}_V \in [0.83, 0.90]$), but the weight Frobenius
mass fraction $A_{\mathrm{sub}}$ stays in $[0.29, 0.50]$. If the weight Gram matrix
$WW^\top$ were proportional to the AGOP, the simplest form of the Neural Feature Ansatz, the two
mass fractions would be equal; they differ by a large margin. Power-law variants of the ansatz
(e.g.\ with the square root of the AGOP) are not tested here.

\begin{table}[h]
\caption{Weight mass versus AGOP mass. Direction-only alignment $\overline{\cos^2}$ lies in
$[0.85, 0.96]$ and eigenvalue-weighted AGOP mass-in-$V$ in $[0.83, 0.90]$;
the weight Frobenius mass-in-$V$ $A_{\mathrm{sub}}$
caps at $\approx 0.50$ across $r_t \in \{6,8,10,12\}$ at fixed
$\eta = 1.5$.}
\label{tab:nfa}
\centering
\makebox[\linewidth][c]{\small\begin{tabular}{lccc}
\toprule
$(\sigma_t, p, r_t, r_s, \eta)$ & $\overline{\cos^2}$ & $\mathrm{mass}_V(G_s)$ & $A_{\mathrm{sub}}$ \\
\midrule
ReLU, $(100, 6, 50, 1.5)$ & 0.96 & 0.90 & 0.35 \\
ReLU, $(100, 8, 50, 1.5)$ & 0.86 & 0.85 & 0.29 \\
ReLU, $(100, 10, 50, 1.5)$ & 0.85 & 0.83 & 0.29 \\
ReLU, $(100, 12, 50, 1.5)$ & 0.86 & 0.84 & 0.50 \\
\bottomrule
\end{tabular}
}
\end{table}

\paragraph{Test G: comparison with the naive per-step scale.}
The directly measured statistic is the late-window alignment secant of Test D.
Its ratio to $\eta r_t/M$ lies between $1.10\times10^{-4}$ and $9.16\times10^{-4}$
on the four configurations in Table~\ref{tab:drift}. Thus this scale substantially
exceeds the observed alignment drift. This comparison does not measure a one-step
weight-update signal-to-noise ratio or verify the conditional drift hypotheses.

\subsection{How theory and experiment align: a discussion}\label{app:align-discussion}

Combining the long-run ReLU tables in Section~\ref{app:tests} with the
top-$6$ figures for ReLU, SiLU, and GELU
(Figures~\ref{fig:relu-top6}, \ref{fig:silu-top6}, \ref{fig:gelu-top6})
lets us distinguish exact identities from empirical comparisons. The qualitative
loss/alignment separation occurs across the three teacher activations; the polar-mass and
weight-versus-AGOP-mass tables are restricted to ReLU.

\paragraph{Exact identities (machine precision).}
The polar Frobenius identity (Proposition~\ref{prop:polar-identities})
holds to machine precision in every run. It is an algebraic fact about
the polar map, independent of the activation, the data distribution and
the learning rate. The period-$2$ loss signature (Test A) is an
empirical observation, not an algebraic fact.

\paragraph{Structural identities.}
The polar Frobenius identity and the $V/V^\perp$ decomposition are exact
(Proposition~\ref{prop:polar-identities}). The measured $V$-mass of the polar update (Test B,
Table~\ref{tab:polar-mass}) is approximately $2$--$5\%$ below $r_t$ on three ReLU checkpoints; since the
leakage scalars were not measured, we do not attribute this gap quantitatively. The AGOP $V$-mass
of Figure~\ref{fig:theory-vs-exp}(b) is a separate, dynamical quantity.

\paragraph{Drift rate and plateau height.}
The measured drift rate (Test D) is about $10^{-3}$ of $\eta r_t/M$
per step (Table~\ref{tab:drift}; see Test~G for the range of ratios);
Proposition~\ref{claim:drift} gives sufficient conditions for a
power-law normalised rate, which we have not verified along these
trajectories. The plateau height (Test C) sits $10$--$36\times$ above the head-fit
oracle $L_{V,\mathrm{opt}}$; the fixed-head decomposition of Conjecture~\ref{claim:plateau}
and its off-$V$ prefactor remain untested.

\paragraph{Diagnostic gap.}
Over the last $30\%$ of training the seed-median ratio of the cycle-mean training
loss to $L_{\mathrm{AGOP\text{-}opt}}$ is about $5$--$11$ on the six cleanest ReLU
configurations (Test E): the learned directions support a better
predictor than the trained network realises. The mass gap of Test F is similarly clear:
$\overline{\cos^2}$ saturates above $0.85$ while $A_{\mathrm{sub}}$
caps in the range $[0.29, 0.50]$ across the long-run ReLU configs in
Table~\ref{tab:nfa}. The per-eigenvector $V$-mass rows of Figures~\ref{fig:silu-top6}
and~\ref{fig:gelu-top6} show AGOP directional alignment for the smooth teachers;
they do not measure weight-mass concentration.

\paragraph{Comparison across teacher activations.}
Smooth teachers (GELU, SiLU) exhibit the qualitative loss/alignment separation at tested
learning rates roughly half those of the ReLU examples (GELU: $\eta \approx 0.7$;
SiLU: $\eta \approx 0.85$; ReLU: $\eta \approx 1.5$). Training loss
\emph{rising} during alignment growth is especially visible for SiLU $r_t = 20$
(Figure~\ref{fig:silu-top6}). The polar-mass and weight-versus-AGOP-mass tables here
are restricted to ReLU and do not establish the same quantitative comparisons for
the smooth teachers.

\section{Deep Gaussian teacher--student experiments}\label{app:deep}

This appendix specifies the deep-network experiments: data, teacher, student, optimiser,
metrics, the scoring protocol, the configurations, and the depth results.

\paragraph{Notation.}
Here $p$ is the input dimension, $r_t$ the teacher-subspace dimension, $m$ the
hidden width, $L$ the number of hidden layers and $T$ the number of training steps.
We write $W=(W_1,\ldots,W_L)$ for all hidden matrices, reserving $L_t$ for the
scalar training loss at step $t$. The training, test and AGOP-subset sizes are
$n$, $n_{\mathrm{test}}$ and $n_m$, respectively.
The norms $\|\cdot\|_1$, $\|\cdot\|_2$ and $\|\cdot\|_F$ denote vector
$\ell_1$, Euclidean and matrix Frobenius norms; $I_p$ is the identity matrix.
The learning rates $\eta_1$ and $\eta_{\mathrm{deep}}$ are coefficients of the
polar factor, not the Frobenius lengths of the resulting updates.

\subsection{Data, teacher and student}\label{app:deep-model}

\paragraph{Data and teacher.}
Inputs are $x\sim\mathcal N(0,I_p)$ and labels are the noise-free outputs of a fixed teacher.
Two seeds fix all randomness: the \emph{problem seed} $s$ fixes the teacher, the training and
test inputs and the AGOP subset (generators seeded with $10000+s$, $20000+s$, $30000+s$ and
$50000+s$), and the \emph{initialisation seed} $s'$ fixes the student's initial weights
(generator seeded with $40000+s'$). The
teacher subspace is $V=\linspan(U)$, where $U\in\R^{p\times r_t}$ has orthonormal columns
(QR factor of a Gaussian matrix). The default teacher has two layers,
\begin{equation}
f^\star(x)\;=\;\sum_{j=1}^{2r_t}\Bigl[\mathrm{ReLU}\bigl(\mathrm{ReLU}(x^\top U)\,[\,Q,\,-Q\,]\bigr)\Bigr]_j
\;=\;\bigl\|Q^\top\mathrm{ReLU}(U^\top x)\bigr\|_1 ,
\label{eq:deep-teacher}
\end{equation}
with $Q\in\R^{r_t\times r_t}$ a random orthogonal matrix, so its input-relevant subspace is
again $V$. The labels are rescaled by one constant per problem so that their root mean square
on the training inputs equals that of the one-layer teacher
$f^\star_1(x)=\sum_{j\le r_t}\mathrm{ReLU}(u_j^\top x)$, which one breadth arm uses instead.

\paragraph{Student.}
The student is a residual ReLU network of depth $L$ and width $m$, without biases:
\begin{equation}
h_1=\mathrm{ReLU}(x^\top W_1),\qquad
h_\ell=h_{\ell-1}+\alpha\,\mathrm{ReLU}(h_{\ell-1}W_\ell)\quad(\ell=2,\dots,L),\qquad
f_W(x)=h_L\,a,
\label{eq:deep-student}
\end{equation}
with $W_1\in\R^{p\times m}$, $W_\ell\in\R^{m\times m}$ and residual scale $\alpha=1/(L-1)$
unless stated otherwise. The head $a=(c_a/\sqrt p)\,\mathbf 1_m$ is frozen. At
initialisation $W_1$ has i.i.d.\ $\mathcal N(0,1/p)$ entries and each $W_\ell$, $\ell\ge2$,
i.i.d.\ $\mathcal N(0,1/m)$ entries, all multiplied by a gain $g$. One arm, used in Appendix~\ref{app:freeze}, replaces the
residual network by a plain MLP, $h_\ell=\mathrm{ReLU}(h_{\ell-1}W_\ell)$, with He
initialisation ($g=\sqrt2$), $\eta_1=0.6$ and $\eta_{\mathrm{deep}}=0.02$.

\paragraph{Optimiser.}
Every hidden matrix is trained by the exact matrix-Muon step on the full-batch loss
$\mathcal L(W)=\frac1n\sum_{i\le n}\bigl(f_W(x_i)-y_i\bigr)^2$,
\begin{equation}
W_1\leftarrow W_1-\eta_1\,\Polar\bigl(\nabla_{W_1}\mathcal L\bigr),\qquad
W_\ell\leftarrow W_\ell-\eta_{\mathrm{deep}}\,\Polar\bigl(\nabla_{W_\ell}\mathcal L\bigr)\quad(\ell\ge2),
\label{eq:deep-muon}
\end{equation}
where $\Polar(G)=U_GV_G^\top$ is formed from the thin SVD $G=U_G\Sigma V_G^\top$. There is
no momentum, Newton--Schulz iteration, weight decay or learning-rate schedule; arithmetic is
float64. Runs last $T=1000$ steps.

\subsection{Metrics and scoring protocol}\label{app:deep-protocol}

\paragraph{Reading Figure~\ref{fig:deep-depth}.}
\textbf{Deep residual students under exact-polar Muon: cycling loss, rising alignment.}
Columns: Table~\ref{tab:deep-depth}'s depth arms and D8 on fresh seeds.
\textbf{Row 1:} training loss: per-step range between period-$2$ branches (grey), running minimum, the lower branch (blue),
$20$-step running mean (orange).
\textbf{Row 2:} top-$r_t$ input-AGOP eigenspace alignment with the teacher subspace,
$\overline{\cos^2}$ and $\cos^2_{\min}$, averaged over each two-step checkpoint pair.
Median lines with interquartile shading over $100$ runs per column ($20$ problems $\times$ $5$ initialisations).
The dotted line starts the scoring window. Across columns, the median
per-run cycle-mean loss change is at most about $2.6\%$, with falling
lower branch and rising alignment measures. Table~\ref{tab:deep-results}
instead reports medians over problems of the per-problem medians; their
largest cycle-mean loss change is approximately $2.62\%$.

\paragraph{Metrics.}
The end-to-end input AGOP
$G_s=\frac1{n_m}\sum_{i\le n_m}\nabla_x f_W(x_i)\nabla_x f_W(x_i)^\top$ is computed on a
fixed random subset of $n_m$ training inputs; its top-$r_t$ eigenspace is compared with $V$
through principal angles, giving the mean alignment $\overline{\cos^2}$ and the
weakest-direction alignment $\cos^2_{\min}$ of~\eqref{eq:cos-metrics}.
Explicitly, if $B$ has orthonormal columns spanning the top-$r_t$ eigenspace
of $G_s$, these are $r_t^{-1}\|U^\top B\|_F^2$ and
$\sigma_{\min}(U^\top B)^2$, where $\sigma_{\min}$ is the smallest singular value.
Both are evaluated at
pairs of consecutive steps $(t,t+1)$, every $100$ steps, and at $(999,1000)$. The loss cycle is summarised by
$\rho_2=-\mathrm{Corr}(\Delta L_t,\Delta L_{t+1})$ with $\Delta L_t=L_{t+1}-L_t$, by the
two-step ratio $R_2=\mathrm{median}_t\,|L_{t+2}-L_t|/|L_{t+1}-L_t|$, and by the cycle mean
$\bar L_t=\tfrac12(L_t+L_{t+1})$. The held-out refit loss
replaces the frozen head by a least-squares head on the last hidden layer $h_L$, fitted on the
$n_m$ AGOP inputs (ridge $10^{-8}\operatorname{tr}(H^\top H)/m$), where
$H\in\R^{n_m\times m}$ has rows $h_L(x_i)$, and evaluates it on the
$n_{\mathrm{test}}$ test inputs.

\paragraph{Window and parity.}
All statistics use the window from step $200$ to step $1000$. Because the loss alternates
between two branches, the gain of a metric is its change between the window ends computed on
each parity, even steps $200\to1000$ and odd steps $201\to999$, and averaged over the two.

\paragraph{Strict run test.}
A run passes if, on the window, all of the following hold:
(i)~$\rho_2\ge 0.99$;
(ii)~$R_2\le 0.01$;
(iii)~the relative change of $\bar L_t$ between the window ends, divided by the window length in
hundreds of steps, is at most $0.02$ in absolute value;
(iv)~$\overline{\cos^2}$ gains at least $0.05$;
(v)~$\cos^2_{\min}$ gains a positive amount;
(vi)~both gains are positive on each parity separately.

\paragraph{Aggregation.}
The problem seed, not the run, is the unit of evidence. In the $20\times5$ campaigns a
problem succeeds if at least $4$ of its $5$ initialisations pass, and an arm is robust if at
least $15$ of its $20$ problems succeed. The $12\times4$ campaign of
Appendix~\ref{app:freeze} uses $3$ of $4$ and $9$ of $12$; for the $12\times2$ follow-ups of
Appendix~\ref{app:freeze-principle} we report run counts only.

\subsection{Configurations and seeds}\label{app:deep-configs}

Table~\ref{tab:deep-common} lists the settings shared by every run and
Table~\ref{tab:deep-depth} the depth arms. The depth campaigns ($20$ problems $\times$ $5$ initialisations for D6--D12, for D16 and for
the fresh-D8 block) and the freezing study of Appendix~\ref{app:freeze} ($12$ problems $\times$ $4$
initialisations) use disjoint blocks of problem and initialisation seeds. The follow-ups of
Appendix~\ref{app:freeze-principle} reuse the $12$ problems of Appendix~\ref{app:freeze} and two of
its four initialisations ($24$ runs per arm), so they are paired with it rather than independent.

\begin{table}[ht]
\centering\small
\caption{Settings shared by every deep run unless an arm overrides them.}
\label{tab:deep-common}
\begin{tabular}{llll}
\toprule
input dimension $p$ & $64$ & teacher rank $r_t$ & $4$\\
student width $m$ & $32$ & teacher depth & $2$~\eqref{eq:deep-teacher}\\
training inputs $n$ & $4096$ & test inputs $n_{\mathrm{test}}$ & $8192$\\
AGOP inputs $n_m$ & $2048$ & steps $T$ & $1000$\\
initial gain $g$ & $1$ & head scale $c_a$ & $1$\\
residual scale $\alpha$ & $1/(L-1)$ & architecture & residual~\eqref{eq:deep-student}\\
\bottomrule
\end{tabular}
\end{table}

\begin{table}[ht]
\centering\small
\caption{Depth arms. The fresh-D8 block reuses the D8 configuration with new seeds.}
\label{tab:deep-depth}
\begin{tabular}{lrrrrr}
\toprule
arm & $L$ & $\eta_1$ & $\eta_{\mathrm{deep}}$ & $c_a$ & $\alpha$\\
\midrule
D6  & 6  & 0.66 & $3.125\times10^{-3}$  & 0.75 & 1/5\\
D8  & 8  & 0.66 & $7.8125\times10^{-4}$ & 1    & 1/7\\
D10 & 10 & 0.70 & $3.90625\times10^{-4}$ & 1   & 1/9\\
D12 & 12 & 0.70 & $1.953125\times10^{-4}$ & 1  & 1/11\\
D16 & 16 & 0.74 & $1.953125\times10^{-4}$ & 1  & 1/15\\
\bottomrule
\end{tabular}
\end{table}

\subsection{Depth results}\label{app:deep-results}
Figure~\ref{fig:deep-depth} shows the trajectories and Table~\ref{tab:deep-results} the
scores.

\begin{table}[ht]
\centering\small
\caption{Depth campaigns under the strict protocol (Appendix~\ref{app:deep-protocol}): runs
passing, problems succeeding, and, as medians over problems of the per-problem median over
initialisations, the window gains of $\overline{\cos^2}$ and $\cos^2_{\min}$ and the relative
change over the window of the cycle-mean training loss and of the held-out refit loss.}
\label{tab:deep-results}
\begin{tabular}{lrrrrrr}
\toprule
arm & strict runs & problems & $\overline{\cos^2}$ gain & $\cos^2_{\min}$ gain & cycle mean & held-out refit\\
\midrule
D6  & 94/100 & 20/20 & $+0.120$ & $+0.384$ & $+2.62\%$ & $-6.15\%$\\
D8  & 97/100 & 20/20 & $+0.206$ & $+0.515$ & $+0.62\%$ & $-6.88\%$\\
D10 & 96/100 & 19/20 & $+0.200$ & $+0.441$ & $+1.10\%$ & $-5.38\%$\\
D12 & 95/100 & 20/20 & $+0.193$ & $+0.436$ & $+1.55\%$ & $-5.20\%$\\
D16 & 96/100 & 19/20 & $+0.187$ & $+0.418$ & $+1.75\%$ & $-4.60\%$\\
Fresh D8 & 91/100 & 20/20 & $+0.153$ & $+0.396$ & $+1.24\%$ & $-6.58\%$\\
\bottomrule
\end{tabular}
\end{table}

\paragraph{Scope of these runs.}
The step sizes are on two time scales ($\eta_1\approx0.7$, $\eta_{\mathrm{deep}}\le3\times10^{-3}$)
and the residual branches are scaled by $1/(L-1)$, so we describe these runs as feature
learning in a deep trainable network, not as every layer learning the teacher directions.
A period-$2$ loss is not a period-$2$ parameter orbit: the two loss branches keep drifting
inside the window while the cycle mean stays nearly flat. The polar step uses every column
of the thin SVD, and deep-layer gradients can be numerically rank-deficient; runs reproduce
exactly in the pinned software stack. The data are synthetic Gaussian, training is full
batch with the exact polar factor, no momentum, a frozen head and fixed step sizes.

\section{Freezing the deep layers switches the plateau on}\label{app:freeze}

In the deep runs of Appendix~\ref{app:deep} the deeper layers take steps $200$ to $3{,}800$ times
smaller than the first layer (Table~\ref{tab:deep-depth}). This appendix turns that observation
into an experiment. Keeping everything else fixed, we change how
much of the network between $W_1$ and the head may train, and ask when the plateau with continued
feature learning appears:

\begin{itemize}
\item with the deep layers frozen or training slowly, the cycle-mean loss stays flat while the
alignment keeps rising, and the learned alignment sits in $W_1$;
\item with the deep layers training at a rate comparable to the first layer, the alignment still
rises, but the loss decreases through the window and part of the alignment moves into the deep
layers;
\item freezing the deep layers in the middle of a run makes a decreasing loss flat, and unfreezing
them makes a flat loss decrease, while the alignment keeps rising in both cases;
\item a single trainable layer directly after $W_1$ is enough to remove the plateau, while a single
trainable layer next to the head is not.
\end{itemize}

\subsection{Setup}\label{app:freeze-setup}

All runs use the problems, student, head, optimiser, metrics and scoring protocol of
Appendix~\ref{app:deep} and one $12\times4$ seed block ($12$ problems $\times$ $4$
initialisations, $48$ runs per arm); only the per-layer step sizes change. There are $1{,}872$ runs, none of
which diverged.

\paragraph{Notation.}
The first hidden matrix is $W_1$ and the deeper matrices are $W_2,\ldots,W_L$;
$\eta_1$ and $\eta_{\mathrm{deep}}$ multiply their polar updates. The width is $m$,
the total number of training steps is $T$, and $\bar L_t=(L_t+L_{t+1})/2$
is the cycle mean of the training loss $L_t$.
For a teacher basis $U$ and a leading input-AGOP basis $B$, both with orthonormal
columns, $\overline{\cos^2}=r_t^{-1}\|U^\top B\|_F^2$ and
$\cos^2_{\min}=\sigma_{\min}(U^\top B)^2$.
Writing $\widehat{\mathbb P}_n$ for the uniform distribution on the training
pairs, the ``offset'' is the square of the mean residual,
$\bigl(\E_{(x,y)\sim\widehat{\mathbb P}_n}[f_W(x)-y]\bigr)^2$;
it is distinct from the MSE
$L_t=\E_{(x,y)\sim\widehat{\mathbb P}_n}[(f_{W_t}(x)-y)^2]$.

\paragraph{Step-size ladder.} At depth $8$ we keep $\eta_1=0.66$ and set
$\eta_{\mathrm{deep}}\in\{0,\allowbreak\,7.8\times10^{-4},\allowbreak\,3.1\times10^{-3},\allowbreak\,0.0125,\allowbreak\,0.02,\allowbreak\,0.05,\allowbreak\,0.2,\allowbreak\,0.66\}$,
from frozen deep layers through the D8 value of Table~\ref{tab:deep-depth} ($\eta_1/845$) to the
same step for every layer. The same ladder is run for the plain network of
Appendix~\ref{app:deep-model} ($\eta_1=0.6$, He initialisation). At depths $6$ and $16$ we run four points: frozen, the
value of Table~\ref{tab:deep-depth}, $\eta_1/50$ and $\eta_1$.

\paragraph{Partial freezing.} At depth $8$ the first layer and a block of $j$ of the seven deep
layers train with $\eta_1=0.66$ and the other deep layers are frozen, for $j\in\{1,2,4,6\}$, with
the block either next to $W_1$ ($W_2,\dots,W_{j+1}$) or next to the head ($W_{9-j},\dots,W_8$).
The end points $j=0$ and $j=7$ are the ladder arms with $\eta_{\mathrm{deep}}=0$ and
$\eta_{\mathrm{deep}}=\eta_1$.

\paragraph{Switching during a run.} At depth $8$ with $T=1500$, the deep layers train with
$\eta_{\mathrm{on}}$ until step $500$ and are frozen afterwards, or are frozen until step $500$ and
train with $\eta_{\mathrm{on}}$ afterwards, for $\eta_{\mathrm{on}}\in\{0.0125,\,0.66\}$. The
controls keep the deep layers frozen, or training with $\eta_{\mathrm{on}}$, for all $1500$ steps.
The first layer always trains with $\eta_1=0.66$. The switching runs follow the same protocol
step for step (with a constant rate they reproduce the ladder runs' losses exactly) and score
two windows with the strict test: steps $200\to500$ before the switch and
$700\to1500$ after it.

\paragraph{Flatness and localisation.} The strict test of Appendix~\ref{app:deep-protocol} allows
the cycle-mean loss to change by up to $2\%$ per hundred steps, about $\pm16\%$ over the window.
We therefore also count runs that pass it with the cycle mean changing by at most $5\%$. To see
where the learned alignment lives, we recompute $\cos^2_{\min}$ at the end of the window with the
deep layers reset to their initial values and report the fraction of the trained network's
$\cos^2_{\min}$ that remains.

\subsection{Results}\label{app:freeze-results}

\begin{figure}[t]
\centering
\includegraphics[width=\linewidth]{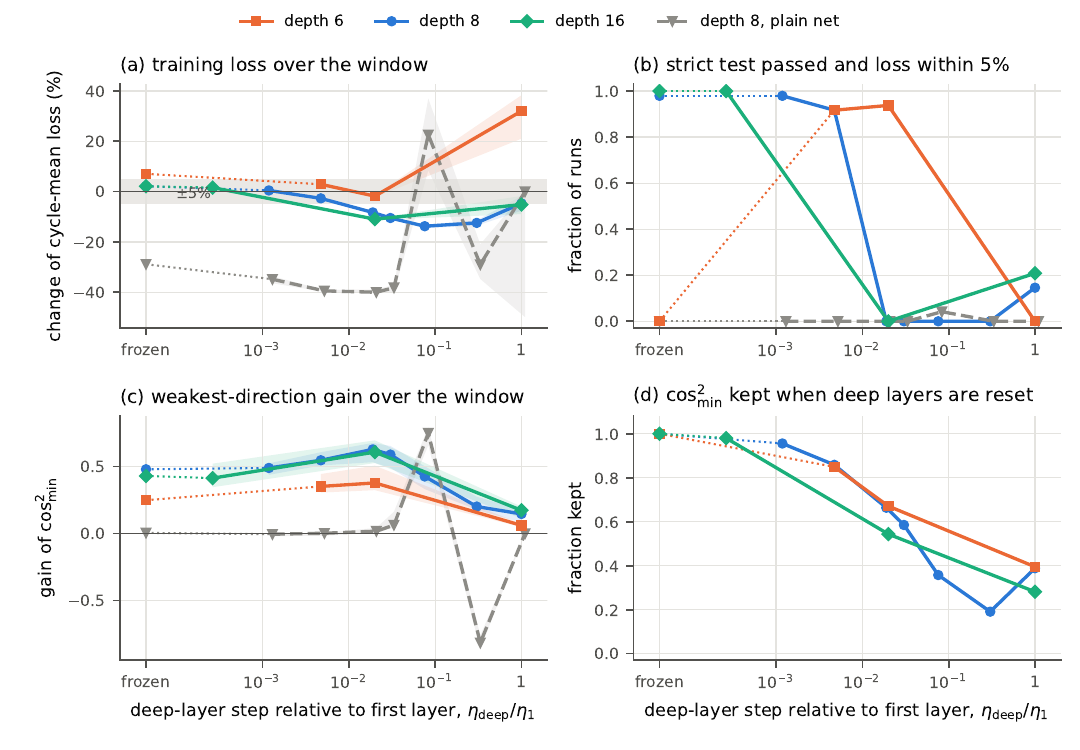}
\caption{\textbf{Slowing the deep layers makes the loss flat; training them makes it fall.}
Residual students of depth $6$, $8$ and $16$ and the depth-$8$ plain network, against the
deep-layer step relative to the first layer's. \textbf{(a)}~Change of the cycle-mean training loss
over the window (grey band: $\pm5\%$). \textbf{(b)}~Fraction of runs that pass the strict test with
the cycle mean within $5\%$. \textbf{(c)}~Window gain of $\cos^2_{\min}$. \textbf{(d)}~Fraction of
the final $\cos^2_{\min}$ that remains when the deep layers are reset to their initialisation.
Medians over problems of the per-problem median over initialisations; shading is the interquartile
range over problems; the frozen point is joined to the others by a dotted segment.}
\label{fig:freeze-dose}
\end{figure}

\paragraph{Step size.} Figure~\ref{fig:freeze-dose} gives the ladders. At depth $8$, with the deep layers frozen, at the D8 step ($\eta_1/845$) or at
$\eta_1/211$, $142$ of $144$ runs pass the strict test and $138$ keep the cycle mean within $5\%$;
the weakest direction gains $+0.48$ to $+0.55$, and resetting the deep layers keeps $86$--$100\%$
of the final $\cos^2_{\min}$, so the alignment sits in $W_1$. From $\eta_1/53$ to $\eta_1/3.3$ the alignment
still rises (gains $+0.20$ to $+0.63$), but the cycle mean falls by $8$--$14\%$ over the window, no
run stays within $5\%$, the held-out refit loss falls by $22$--$43\%$, and the deep layers carry a
growing part of the alignment (only $19$--$66\%$ remains when they are reset). The strict test
still passes most of these runs because of its $\pm16\%$ tolerance. With the same step for every
layer ($\eta_1$) most of the alignment forms before the window; the cycle mean falls by a median
$4.7\%$, $15$ of $48$ runs stay within $5\%$, $19$ pass the strict test and $7$ do both. Depth $16$
behaves like depth $8$: frozen or at the D16 step, all $96$ runs pass and stay within $5\%$; at
$\eta_1/50$ none stays within $5\%$ and the cycle mean falls by $10.9\%$. At depth $6$ the cycle
mean drifts up by $7\%$ when the deep layers are frozen entirely, mostly through the offset part
$\bigl(\E_{\widehat{\mathbb P}_n}[f_W(x)-y]\bigr)^2$, and the loss is flattest at small non-zero steps ($\eta_1/211$ and
$\eta_1/50$: $89$ of $96$ runs within $5\%$); the same step for every layer makes the depth-$6$
loss rise by $32\%$. No step makes the plain network robust across problems: at deep-layer step $0.05$, $7$ of $48$
runs pass the strict test and $2$ of them also stay within $5\%$ (median loss change $+22.6\%$),
and with its deep layers frozen the cycle mean falls by $29\%$. Figure~\ref{fig:freeze-ladder-traj} shows the depth-$8$ trajectories.

\begin{figure}[t]
\centering
\includegraphics[width=\linewidth, trim=0 0 0 30bp, clip]{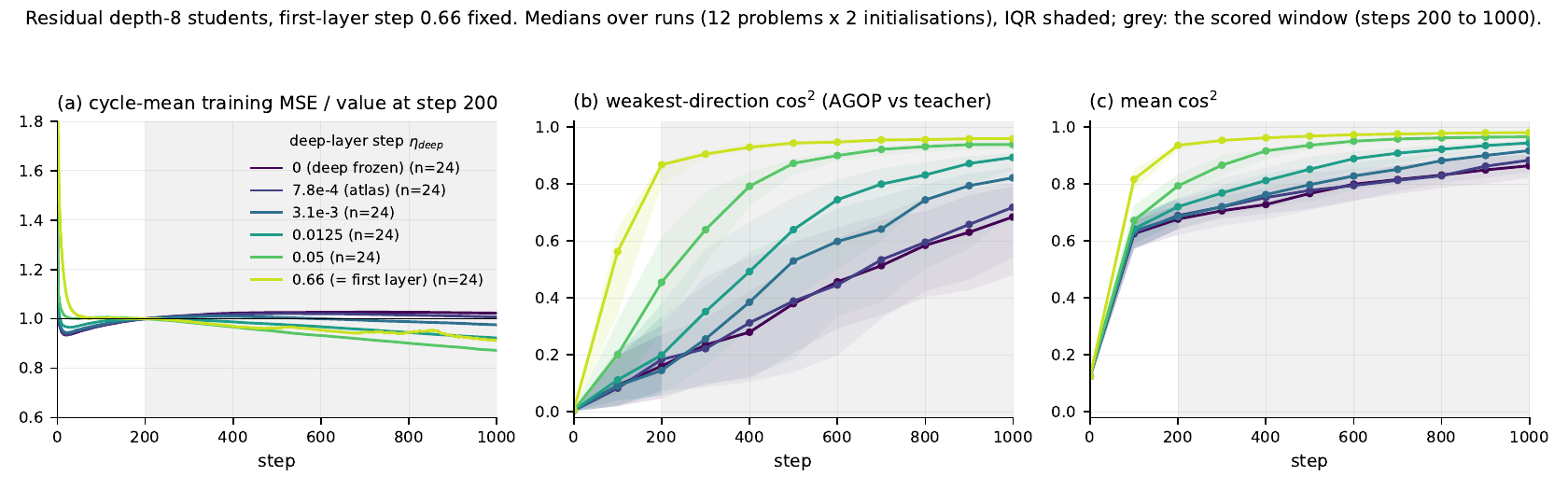}
\caption{\textbf{The deep-layer step ladder at depth $8$ over time.} Residual students, first-layer step
$0.66$ fixed, deep-layer step from $0$ (frozen) to $0.66$ (the same step for every layer); medians over
$24$ runs ($12$ problems $\times$ $2$ initialisations), interquartile range shaded; grey: the scored
window. \textbf{(a)}~Cycle-mean training loss divided by its value at step $200$.
\textbf{(b)}~$\cos^2_{\min}$. \textbf{(c)}~$\overline{\cos^2}$. With the deep layers frozen or slow
(up to $\eta_1/211$) the loss stays flat while the weakest direction aligns; from $\eta_1/53$ on the
loss falls through the window and the alignment forms earlier.}
\label{fig:freeze-ladder-traj}
\end{figure}

\paragraph{Reading Figure~\ref{fig:freeze-switch}.}
The main-text figure shows depth-$8$ students trained for $T=1500$ steps, with the deep
layers switched at step $500$ (dotted line). The first layer always trains with
$\eta_1=0.66$. Colour gives the current state of the deep layers: blue is frozen and
orange is training. Dashed curves change state at step $500$ and are drawn only after
the switch, since before it they coincide with the control they start from.
Panels (a,b) use deep-layer step $0.0125$ while training, and (c,d) use $0.66$.
Panels (a,c) show cycle-mean training loss divided by its last value before the switch;
(b,d) show $\cos^2_{\min}$. Curves are medians over
$48$ runs; the shaded bands mark the two scoring windows, not uncertainty intervals.

\paragraph{Switching.} Figure~\ref{fig:freeze-switch} switches the deep layers
during a run. With $\eta_{\mathrm{on}}=0.0125$ the loss responds at once. Freezing
the deep layers at step $500$ stops the decrease: over steps $700$--$1500$ the cycle mean changes
by $-3.0\%$ and stays within $5\%$ in all $48$ runs, against $-9.7\%$ and none of $48$ in the
control that keeps training them, while $\cos^2_{\min}$ rises by the same amount as in the control
($+0.12$). Unfreezing them at step $500$ does the opposite: the flat loss starts to fall
($-10.8\%$, no run within $5\%$) while the alignment keeps rising ($+0.18$). Only $19$ of the $48$
runs frozen at step $500$ pass the strict test, because by then the features are nearly aligned
and $\overline{\cos^2}$ gains less than the test's threshold of $0.05$ ($+0.04$). With
$\eta_{\mathrm{on}}=\eta_1$ the features are aligned before the switch ($\cos^2_{\min}\approx0.95$).
Freezing then removes the part of the loss oscillation driven by the deep layers' own steps, which
lowers the loss by about
$15\%$ within $100$ steps, after which the loss stays flat ($40$ of $48$ runs within $5\%$) with no
further alignment gain; unfreezing lowers the loss by about $20\%$ within $100$ steps while
$\cos^2_{\min}$ jumps from about $0.45$ to $0.85$.

\begin{figure}[t]
\centering
\includegraphics[width=\linewidth]{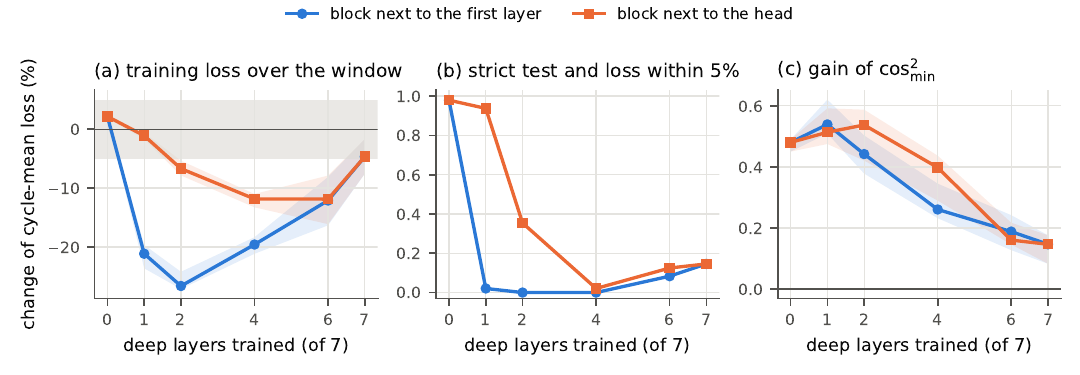}
\caption{\textbf{Which deep layers may train.} Depth $8$; the first layer and $j$ deep layers train
with $\eta_1=0.66$ and the others are frozen; the trained block sits next to $W_1$ (blue) or next
to the head (orange). \textbf{(a)}~Window change of the cycle-mean loss (grey band: $\pm5\%$).
\textbf{(b)}~Fraction of runs passing the strict test with the loss within $5\%$.
\textbf{(c)}~Window gain of $\cos^2_{\min}$. Medians over problems; shading is the interquartile
range.}
\label{fig:freeze-partial}
\end{figure}

\paragraph{Partial freezing.} Which deep layers train matters more than how many
(Figure~\ref{fig:freeze-partial}). Training only the deep layer next
to the head, with the first layer's step, keeps the plateau: $47$ of $48$ runs pass the strict test,
$45$ stay within $5\%$, and $\cos^2_{\min}$ gains $+0.51$. Training only the deep layer next to
$W_1$ removes it: the cycle mean falls by $21\%$ and $1$ of $48$ runs stays within $5\%$, even
though the alignment still sits mostly in $W_1$ ($90\%$ remains when the deep layers are reset).
Larger trained blocks remove the plateau from either end: for $j\ge4$ at most $7$ of $48$ runs stay
within $5\%$.

\subsection{Mechanism}\label{app:freeze-mechanism}

With the deep layers frozen the student is $x\mapsto g\bigl(\mathrm{ReLU}(W_1^\top x)\bigr)$ for a
fixed map $g$ (the residual stack at its initialisation followed by the frozen head), and only
$W_1$ trains, with the large polar step. This is the one-hidden-layer setting of the main text with
a fixed nonlinear readout in place of a fixed linear one, and it behaves the same way. The loss
locks into the period-$2$ cycle, whose high branch is $83$--$92\%$ offset (a shift of the mean
prediction); the cycle mean stays flat; and the polar step keeps rotating the dominant input
directions of $W_1$ towards the teacher subspace. When the deep layers also train, they keep
fitting the targets with whatever features $W_1$ currently provides, so the loss decreases, and
they come to carry part of the alignment (Figure~\ref{fig:freeze-dose}d). The partial-freezing arms
place the effect: the layer directly after $W_1$ is enough to turn the plateau into a steady
decrease, even while the alignment stays in $W_1$, and the layer next to the head is not. The
plateau with continued feature learning is therefore a property of the first layer trained against
a fixed, or slowly moving, downstream map; freezing the downstream layers isolates it, and training
them, starting with the layer right after $W_1$, hides it.

\paragraph{Scope.} All of this is for residual MLP students of a Gaussian teacher, trained full
batch with the exact polar step and a frozen head. At depth $6$ the flattest loss needs slow but
non-zero deep steps rather than frozen ones. The plain network does not show the effect at the
first-layer step of the plain arm, even with its deep layers frozen; we did not tune
that step for it. We have not established the effect for other architectures or data.

\section{A freezing principle for the plateau with feature learning}\label{app:freeze-principle}

Appendix~\ref{app:freeze} showed that, in deep residual students, freezing the layers after $W_1$
switches the plateau with continued feature learning on and training them switches it off. Here we
state the observation as a working principle and test it in three more ways: at other depths and widths, after other training
schemes, and under Muon as it is used in practice (Newton--Schulz and momentum).

\paragraph{Notation.}
For the deep-network experiments, $W_1$ is the first hidden matrix, $L$ is depth,
$m$ is width, $\eta_1$ is the coefficient of the first-layer polar update, and
$T_0$ is the number of pretraining steps. The fixed downstream map $g$ includes
all later layers and the head. The teacher basis $U$ and the leading input-AGOP
basis $B$ have orthonormal columns; the reported alignment metrics are
$\overline{\cos^2}=r_t^{-1}\|U^\top B\|_F^2$ and
$\cos^2_{\min}=\sigma_{\min}(U^\top B)^2$.
Cycle means, scoring windows and held-out head refits are as defined in
Appendix~\ref{app:deep-protocol}. The small-head experiment in
Section~\ref{app:fp-ns} uses the separate mean-field notation stated there.

\paragraph{Working principle.} Write the student as $x\mapsto g\bigl(\mathrm{ReLU}(W_1^\top
x)\bigr)$, where $g$ collects every layer after the first and the frozen head.
\begin{enumerate}
\item[(i)] \emph{Isolation.} With $g$ held fixed, normalised (polar) steps on $W_1$ keep improving the
input-AGOP alignment with the teacher subspace, while the training loss stays on the floor set
by the $W_1$ step: the period-$2$ cycle of Appendix~\ref{app:deep}, whose cycle mean stays flat.
\item[(ii)] \emph{Conversion.} Letting $g$ train, with a step of about $\eta_1/50$ or more, turns that
feature progress into loss progress: the loss falls, and $g$ comes to carry part of the alignment. The
layer directly after $W_1$ matters most (Figure~\ref{fig:freeze-partial}).

\end{enumerate}
The plateau is thus the loss floor of the normalised $W_1$ step, which better features do not lower
while $g$ is fixed; training $g$, or training $W_1$ alone by gradient descent or Adam, lowers the loss. The plateau is not needed for the feature learning itself (Section~\ref{app:fp-retrofit}).

\subsection{Depth and width}\label{app:fp-depth}

The depth-$8$ and depth-$16$ ladders of Appendix~\ref{app:freeze} extend to depths $4$ and $24$
($24$ runs per arm, first-layer steps $0.66$ and $0.74$). With the
deep layers frozen, $24$ of $24$ and $23$ of $24$ runs pass the strict test with the cycle mean
within $5\%$ ($+2\%$ in both). Over the window $\cos^2_{\min}$ rises from $0.18$ to $0.71$ at
depth $4$ and from $0.22$ to $0.60$ at depth $24$; with the deep layers frozen, this change is due to
$W_1$ alone. With the deep layers at
$\eta_1/50$ the alignment still rises ($0.26\to0.89$ and $0.26\to0.88$), but the cycle mean falls by
$7\%$ and $10\%$, $2$ and $0$ of $24$ runs stay within $5\%$, and the held-out refit loss falls by
$23\%$ and $26\%$ against $6\%$ and $4\%$ with the deep layers frozen. Depth $6$ remains the
exception noted in Appendix~\ref{app:freeze}, with the flattest loss at small non-zero deep steps.

Width is less forgiving at the depth-$8$ first-layer step, which we did not retune. At width $64$
and $\eta_1=0.66$ the frozen student still learns ($\cos^2_{\min}$ $0.04\to0.87$, $24$ of $24$
strict), but the cycle mean falls by $10\%$, so no run stays within $5\%$. At width $64$ with
$\eta_1=1$, and at width $128$ with $\eta_1=0.66$, the weakest teacher direction is never learned
($\cos^2_{\min}\le0.01$). The plain (non-residual) network fails at every first-layer step from
$0.05$ to $1$, with the cycle mean falling by $27$--$41\%$.

\subsection{Training only $W_1$ after other schemes}\label{app:fp-retrofit}

\begin{figure}[t]
\centering
\includegraphics[width=\linewidth]{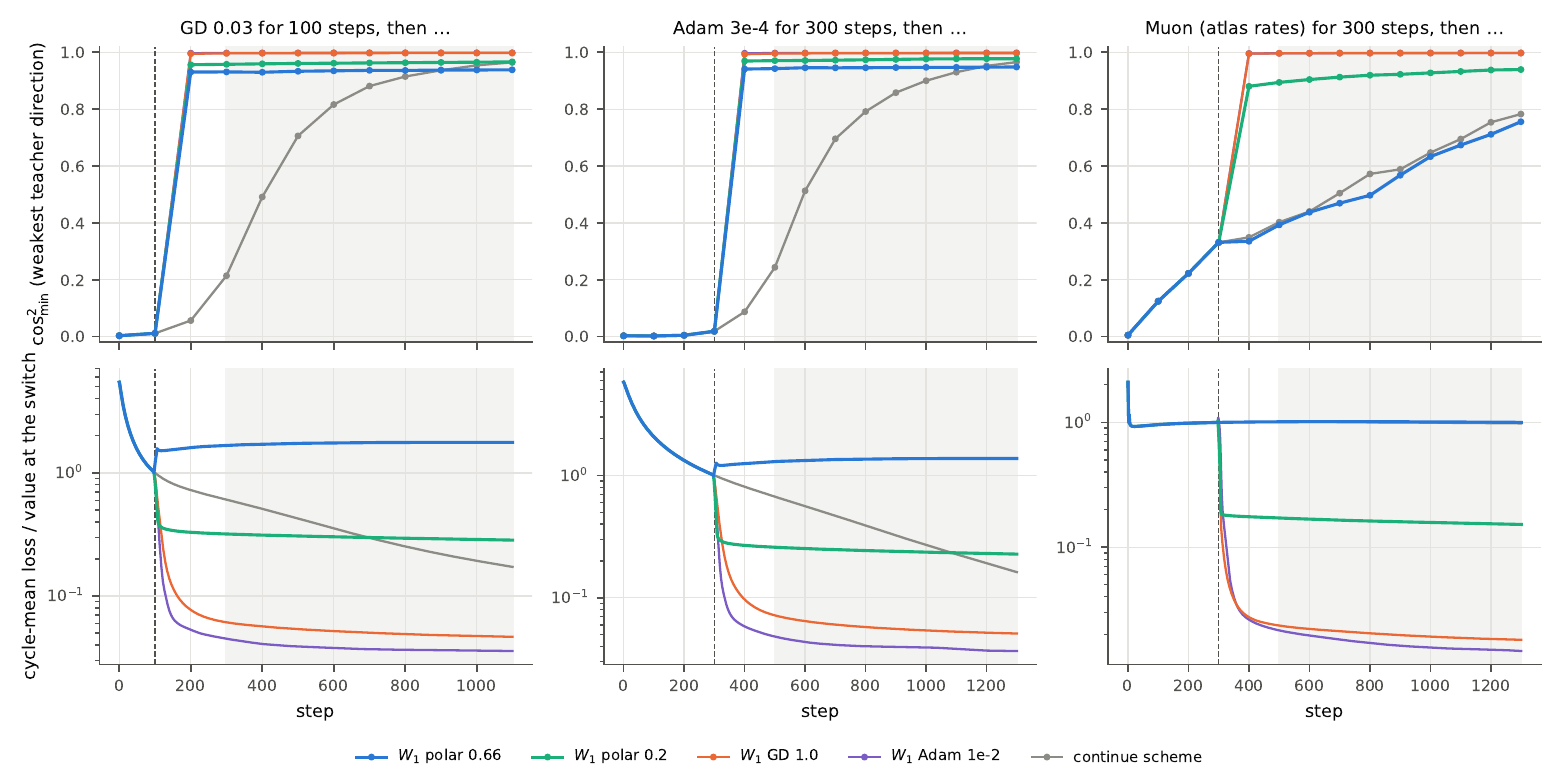}
\caption{\textbf{Training only $W_1$ after partial pretraining.} The depth-$8$ residual student is
trained with GD ($0.03$, $100$ steps), Adam ($3\times10^{-4}$, $300$ steps) or Muon at the rates of
Appendix~\ref{app:deep} ($300$ steps). At the switch (dashed) the deep layers are frozen and only
$W_1$ trains, with a polar step of $0.66$ or $0.2$, GD or Adam, or the scheme continues on every
layer. \textbf{Top}~$\cos^2_{\min}$. \textbf{Bottom}~cycle-mean training loss divided by its value
at the switch. Medians over problems of the per-problem median over initialisations; grey: the
scored window.}
\label{fig:fp-retrofit}
\end{figure}

We train the whole depth-$8$ student with a scheme $S$ for $T_0$ steps and then train only $W_1$,
with the deep layers frozen, for $1000$ steps (Figure~\ref{fig:fp-retrofit}). $S$ is GD or Adam at a fast or a slow rate (tuned on two development
problems), or Muon with one step for every layer or with the two rates of
Appendix~\ref{app:deep}; $T_0\in\{100,300,1000\}$.

\paragraph{The features.} After $100$ or $300$ steps of slow GD or Adam, $\cos^2_{\min}$ is
$0.00$--$0.21$ at the switch. Two hundred steps later, $15$ of the $16$ combinations of these four
starts with the four $W_1$ optimisers (polar $0.66$, polar $0.2$, GD, Adam) have reached
$0.93$--$1.00$; the exception is polar $0.66$ after $100$ steps of Adam ($0.45$, and $0.73$ at the
end of the window). Continuing $S$ on the whole network reaches only $0.02$--$0.71$ in the same
time, and it takes about $700$--$1000$ steps to catch up. Training $W_1$ against a fixed downstream
map is therefore a fast feature learner in its own right; the comparison also changes the optimiser
acting on $W_1$, so it does not isolate freezing as the cause of the speed-up.

\paragraph{The loss.} The optimiser on $W_1$ decides what the loss does. A polar step of $0.66$
moves the loss to the floor set by the step and holds it there: after slow GD or Adam the cycle
mean first rises by a factor $1.3$--$2.3$ (after $100$ steps of Adam it falls by about $20\%$
instead) and then changes by $+1\%$ to $+11\%$ over the window; after fast GD or Adam, when the loss is
already small, it rises by a factor $7$ to $2{,}000$. GD and Adam on $W_1$ keep lowering the loss
($21$--$30\%$ over the window after $100$ or $300$ steps of slow GD or Adam; W$_1$-only Adam after
$1000$ steps lowers it by $10$--$11\%$).

\paragraph{When the plateau carries learning.} A long plateau during which the features keep
improving appears only when the deep layers are still close to their initialisation: after Muon
at the two rates of Appendix~\ref{app:deep} for $T_0=100$ or $300$ ($24$ and $23$ of $24$ runs
strict and within $5\%$, $\cos^2_{\min}$ $0.30\to0.66$ and $0.39\to0.76$ over the window), or with
the deep layers frozen from initialisation ($23$ of $24$, $0.17\to0.60$). After $100$ steps of GD
or Adam the learning is mostly finished before the scored window begins and the loss is then flat,
with little left to learn; the exception is polar $0.66$ after $100$ steps of Adam, where
$\cos^2_{\min}$ still rises from $0.45$ to $0.73$ while the cycle mean rises by $10.5\%$ and no run
stays within $5\%$. The one diverging branch is $W_1$-only GD after $1000$ steps of Muon with
one step for every layer ($12$ of $24$ runs).

\subsection{Newton--Schulz and momentum: two regimes}\label{app:fp-ns}

\begin{figure}[t]
\centering
\includegraphics[width=\linewidth]{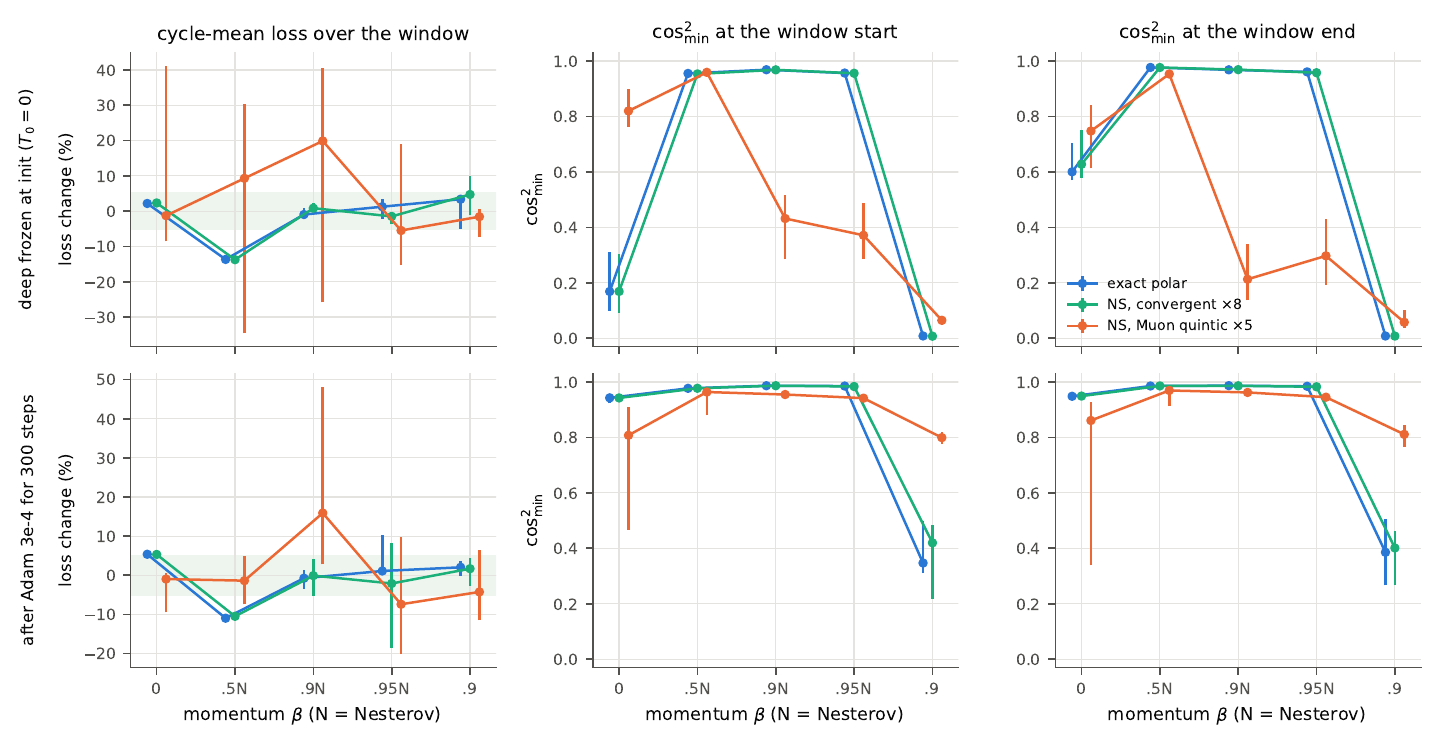}
\caption{\textbf{Muon as used in practice, depth $8$.} Only $W_1$ trains (step $0.66$, deep layers
frozen), from initialisation (top) or after $300$ steps of Adam (bottom). Colours: the exact polar
factor, a convergent quintic iteration ($8$ steps; \eqref{eq:fp-convergent-quintic}) and Muon's quintic ($5$ steps);
$x$-axis: momentum (EMA buffer as in Muon; N: Nesterov). \textbf{Left}~cycle-mean loss change over
the window. \textbf{Middle, right}~$\cos^2_{\min}$ at the start and at the end of the window.
Medians over problems; bars: interquartile range.}
\label{fig:fp-ns}
\end{figure}

The convergent comparison uses $X_0=G/(\|G\|_F+10^{-7})$ and
eight iterations of the quintic map
\begin{equation}
X_{j+1}=2X_j-\frac32(X_jX_j^\top)X_j
+\frac12(X_jX_j^\top)^2X_j.
\label{eq:fp-convergent-quintic}
\end{equation}
Tall matrices are transposed before iteration, and the result is
transposed back. This specifies the coefficients $(2,-1.5,0.5)$ of
the convergent comparison, distinct from Muon's quintic approximation.

Whether the plateau survives Muon's approximations depends on what holds the loss on its floor.

\paragraph{Small-head notation.}
For this experiment only, $d$ is the input dimension, $r$ the teacher rank,
$D=d-r$, and $m$ the number of hidden units. The balanced ReLU teacher and
student are those of Appendix~\ref{mfl:sec:setting}: the fixed head coefficient
is $\alpha/m$, the intercept is refitted, and
$\kappa=\sigma_0/(\bar\eta\sqrt d)$ for initial columns
$N(0,\sigma_0^2I_d/d)$ and polar step $\eta=\bar\eta\sqrt m$.
Here $\alpha$ denotes head scale, not the residual-branch scale of the deep
network. For orthonormal teacher directions $u_1,\ldots,u_r$, let
$\mathcal U=\mathrm{span}(u_1,\ldots,u_r)$ be their subspace,
with projector $P_U$. If $P_r$ projects onto the leading
$r$ AGOP eigenvectors, $A_r=r^{-1}\Tr(P_UP_r)$.
Write $\bar u=r^{-1/2}\sum_{i=1}^r u_i$;
$\lambda_c$ and $\lambda_\perp$ are the mean-field AGOP eigenvalues on the
contrast space $\mathcal U\cap\bar u^\perp$ and the orthogonal complement $\mathcal U^\perp$,
respectively (the common head factor cancels in their ratio).
Neuron teacher energy is $\|P_Uw_j\|_2^2/\|w_j\|_2^2$; its median is over neurons.
In the momentum experiments, $\beta$ is the exponential-moving-average
coefficient and ``Nesterov'' specifies the look-ahead buffer combination.

\paragraph{Small fixed head: the plateau survives.} In the one-hidden-layer student with a small
fixed head (mean-field scale, a balanced two-direction ReLU teacher, population loss;
$d\in\{64,402\}$, $3$ seeds each), the loss plateau is produced by the head scale, and the
orthogonaliser mainly rescales how fast each direction moves. Muon's Newton--Schulz quintic keeps
the plateau with feature learning at the level of the AGOP and the teacher subspace: the AGOP aligns with the teacher subspace ($A_r\ge0.9$) within
$120$--$280$ steps for the exact polar factor, for the quintic with $2$ or $5$ steps and for $7$
steps of a convergent quintic. The speed of the loss is set by the iteration's gain on the mean
direction ($0.70$ for an odd and $1.1$ for an even number of Muon steps), and the specialisation of
individual neurons by its gain on the remaining directions, so that more steps are needed at larger
width. In the population, momentum leaves the alignment and the plateau unchanged and delays
specialisation slightly; with minibatches it shortens the plateau and speeds up specialisation. Figures~\ref{fig:ns-vs-polar}, \ref{fig:ns-traj} and~\ref{fig:ns-momentum} show these
results.

\begin{figure}[t]
\centering
\includegraphics[width=\linewidth]{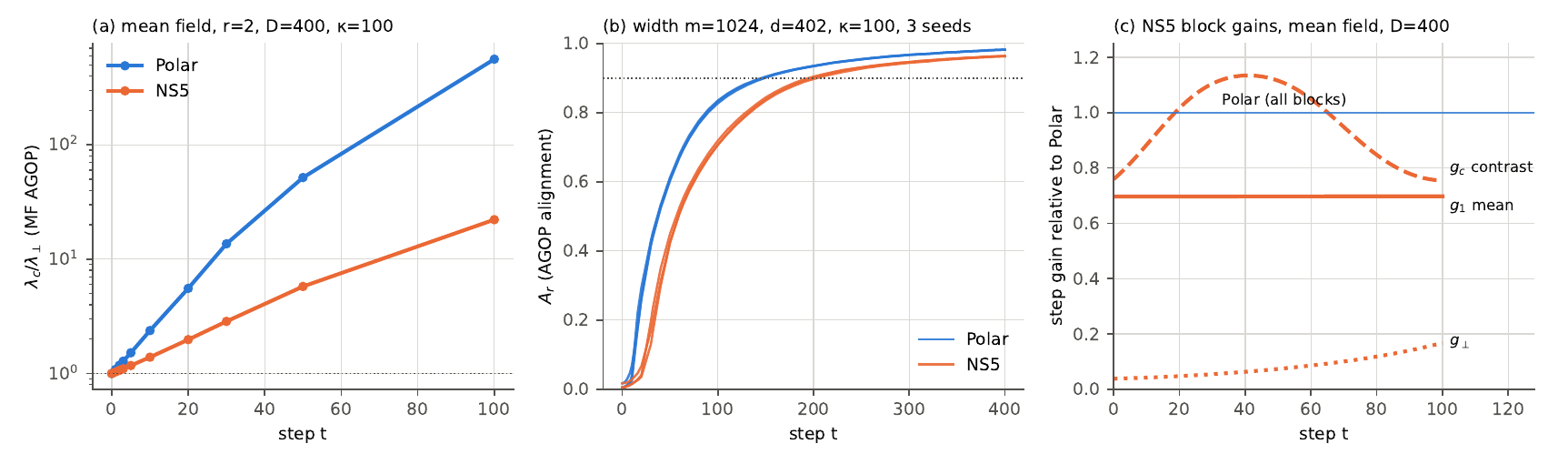}
\caption{\textbf{Newton--Schulz (NS5) against exact Polar in the small-head setting} ($r=2$,
$\kappa=100$). \textbf{(a)}~Mean-field AGOP ratio $\lambda_c/\lambda_\perp$ at $D=400$ (computed):
both optimisers put the teacher subspace on top from the first step, and the gap grows more slowly
under NS5. \textbf{(b)}~AGOP alignment $A_r$ at $d=402$, $m=1024$, $\alpha=10^{-3}$, $3$ seeds.
\textbf{(c)}~NS5 step gains on the mean, contrast and $\perp$ blocks relative to Polar, which has gain $1$
on every block.}
\label{fig:ns-vs-polar}
\end{figure}

\begin{figure}[t]
\centering
\includegraphics[width=\linewidth]{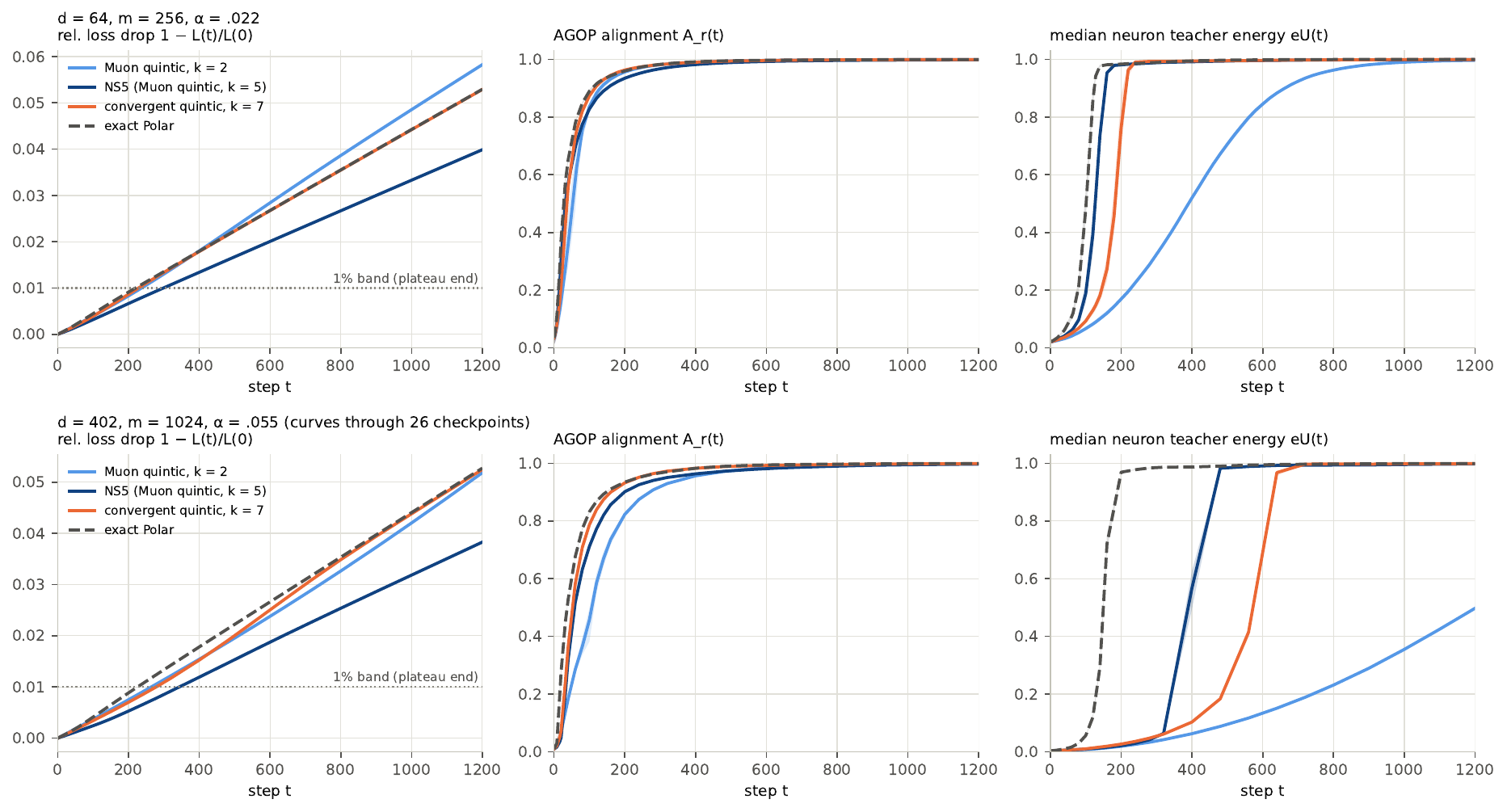}
\caption{\textbf{Training trajectories for four orthogonalisers} (small-head setting, population loss,
no momentum; $d=64$ and $d=402$): Muon's quintic with $2$ and $5$ steps, $7$ steps of a convergent
quintic, and exact Polar. Shown are $1-L(t)/L(0)$, the AGOP alignment $A_r(t)$ and the median neuron
teacher energy; lines are means over $3$ seeds and bands the seed range. The AGOP aligns early
($A_r\ge0.9$ by step $120$--$280$), while the neurons specialise at very different times.}
\label{fig:ns-traj}
\end{figure}

\begin{figure}[t]
\centering
\includegraphics[width=\linewidth]{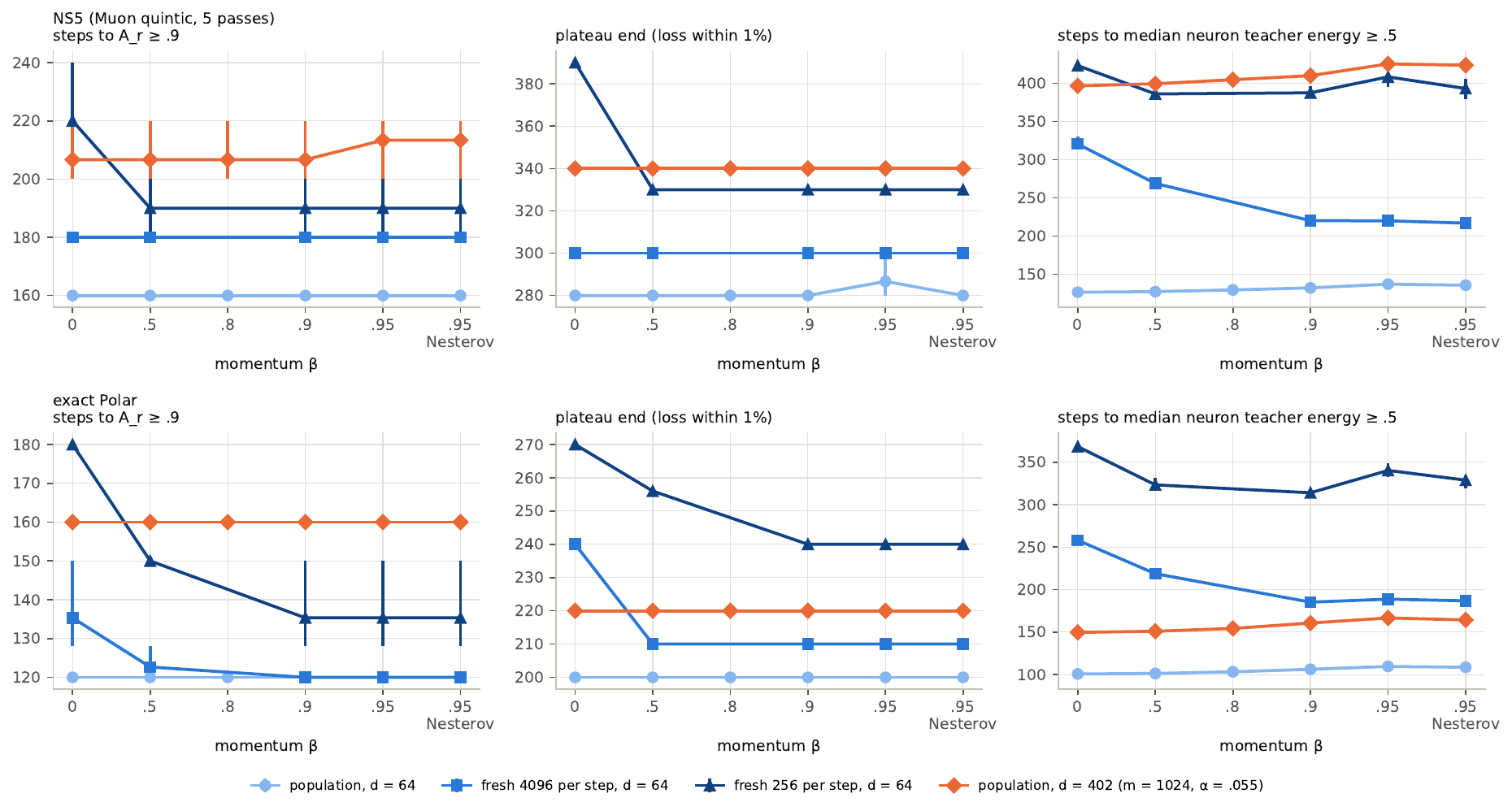}
\caption{\textbf{Momentum for NS5 and exact Polar} (small-head setting; $3$ seeds, bars show the seed
range). Rows: NS5 and Polar. Columns: time to $A_r\ge0.9$, end of the loss plateau, and time to a
median neuron teacher energy of $0.5$, against the momentum $\beta\in\{0,0.5,0.8,0.9,0.95\}$ and
$0.95$ with Nesterov; population loss at $d=64$ and $d=402$, and fresh minibatches of $4096$ or $256$
samples at $d=64$. In the population, momentum leaves the alignment and the plateau unchanged and
delays specialisation slightly; with minibatch noise it shortens the plateau and speeds up
specialisation.}
\label{fig:ns-momentum}
\end{figure}

\paragraph{Large polar step: the plateau needs an accurate orthogonaliser.} In the deep student
the floor is the period-$2$ loss oscillation of the large polar step on $W_1$, and anything that
changes that step changes the oscillation (Figure~\ref{fig:fp-ns}). From frozen deep layers, a
convergent Newton--Schulz iteration, which sits within $2\times10^{-9}$ of the exact polar factor,
reproduces it: $23$ of $24$ runs strict and within $5\%$ for both, $\cos^2_{\min}$ $0.17\to0.60$
(exact) and $0.17\to0.63$ (Newton--Schulz). Muon's quintic stays $0.18$--$0.21$ from the polar
factor, breaks the clean cycle, and no run passes. Nesterov momentum ($0.5$--$0.95$) learns the
features before the plateau forms: $\cos^2_{\min}$ is already $0.95$--$0.97$ at the start of the
window, after which the loss either keeps falling ($-14\%$ at $0.5$) or stays flat with nothing
left to learn ($17$--$20$ of $24$ runs flat at $0.9$ and $0.95$). Plain EMA momentum at $0.9$ never
aligns the weakest direction ($0.01$), and Muon's own update, the quintic with Nesterov momentum
$0.95$, stalls ($0.37\to0.30$).

The two regimes agree with the principle: in both, the features are learned by the orthogonalised
first-layer step against a fixed readout. They differ in what keeps the loss flat, and that
determines how much approximation the plateau tolerates.

\subsection{Other architectures and scope}\label{app:fp-scope}

We also ran a transformer on sequences of Gaussian tokens with a planted rank-$4$ teacher, and two
vision transformers and a CNN on the $8\times8$ optical-digit images of \citet{alpaydin1998optical}
(Table~\ref{tab:fp-arch}). For the transformer, alignment is measured against the planted rank-$4$
teacher subspace; for the digit-image models, against a fixed rank-$8$ input-AGOP reference subspace
learned beforehand. For those digit models, the principal-angle definitions above
use the rank-$8$ reference basis in place of $U$ and $r_t=8$.
After removing LayerNorm (LN) and, when $T_0>0$, partial AdamW pretraining, only the first layer is
trained with exact polar steps; one setting per model was chosen on two development seeds and run on
$10$ fresh initialisations. The loss sits on a clean period-$2$ cycle with a flat mean in $8$--$10$ of $10$ runs for the
transformer, the $4$-patch vision transformer and the CNN, and the centred error is flat as well.
Only the transformer keeps learning on that cycle, and only weakly (mean $\cos^2$ $+0.009$ over the
window, $2$ of $10$ runs pass the joint criterion defined below); Muon's quintic on the first layer removes the clean
cycle in all three models and Nesterov momentum removes it in the transformer, and training every
matrix removes the plateau. We therefore claim the principle for
residual MLP students of a Gaussian teacher (depths $4$ to $24$), for the one-hidden-layer
small-head setting, and in weak form for the transformer, and not for plain MLPs, for wider MLPs at an
untuned first-layer step, or for the vision transformers and the CNN.

\begin{table}[h]
\centering\small
\caption{Settings of the architecture runs. Every run minimises the MSE through a frozen head with
orthonormal rows, full batch. Stage~1 trains all non-head parameters with AdamW (learning rate
$10^{-3}$, zero weight decay) for $T_0$ steps and is omitted when $T_0=0$. Stage~2 trains only
the first-layer (input-embedding) weight by $W\leftarrow W-\eta\Polar(G)$,
where $G=\nabla_W L_{\mathrm{train}}$, for $1600$ steps,
scored on steps $200$--$1600$. The update coefficient is
$\eta=\delta\|W_{\mathrm{start}}\|_F/\sqrt M$, where $W_{\mathrm{start}}$ is the stage-2 initial
weight and $M=\min(n_{\mathrm{rows}},n_{\mathrm{cols}})$ after flattening any convolutional
kernel to a matrix: $M=32,16,64,9$ for the four rows below, respectively.
Transformer data: sequences of $8$ Gaussian tokens in $\R^{32}$; the teacher applies one attention
layer and a ReLU layer of width $16$ to the tokens' projections onto a rank-$4$ subspace, averages over
tokens and outputs $4$ standardised targets; $1024$ training sequences, and ten fresh planted problems
in the confirmation runs. Digit data: $500$ training and $500$ test images, one-hot targets.}
\label{tab:fp-arch}
\begin{tabular}{@{}llllrr@{}}
\toprule
model & data & architecture & LN removed & $T_0$ & $\delta$\\
\midrule
transformer & Gaussian tokens & 1 block, width 64, 4 heads & output & 0 & 1.0\\
ViT, four $4\times4$ patches & digits & 1 block, width 64, 4 heads & all & 500 & 0.03\\
ViT, one $8\times8$ patch & digits & 1 block, width 64, 4 heads & all & 500 & 1.0\\
CNN & digits & \shortstack[l]{3 conv.\ layers,\\channels 16/32/64} & output & 200 & 0.1\\
\bottomrule
\end{tabular}
\end{table}

\paragraph{Architecture-specific success criterion.}\label{app:fp-criterion}
These runs use a separate joint test from the strict residual-network
test of Appendix~\ref{app:deep-protocol}. Time is measured from the start
of stage~2, and the scoring window is $s=200$ to $e=1600$, inclusive.
For a model with $K$ outputs and training residuals $r_i(t)=f_t(x_i)-y_i\in\mathbb R^K$, let
$L_t=(nK)^{-1}\sum_i\|r_i(t)\|_2^2$ and
$E_t=L_t-K^{-1}\|n^{-1}\sum_i r_i(t)\|_2^2$ be the training MSE and
centred training error. Define $d_t=L_{t+1}-L_t$ and the adjacent-pair
means $C_t=(L_t+L_{t+1})/2$ and $H_t=(E_t+E_{t+1})/2$,
for $s\le t<e$. The loss-cycle signature requires
\begin{equation}
\rho_2=-\operatorname{Corr}_{t=s}^{e-2}(d_t,d_{t+1})\ge0.99,
\qquad
R_2=\operatorname{median}_{t=s}^{e-2}
\frac{|L_{t+2}-L_t|}{\max\{|d_t|,\epsilon\}}\le0.01,
\label{eq:fp-arch-cycle}
\end{equation}
and a normalized phase gap
\begin{equation}
\frac{\operatorname{median}_{s\le t<e}|d_t|}
     {\max\{\operatorname{median}_{s\le t<e}C_t,10^{-30}\}}
\ge0.01,
\qquad
\epsilon=10^{-15}\max\{\operatorname{median}_{s\le t<e}|d_t|,1\}.
\label{eq:fp-arch-gap}
\end{equation}
We set $\rho_2=0$ if either increment sequence has zero variance.
Flat cycle-mean loss and flat centred error mean, respectively,
\begin{equation}
\left|\frac{C_{e-1}-C_s}{\max\{C_s,10^{-30}\}}\right|\le0.05,
\qquad
\left|\frac{H_{e-1}-H_s}{\max\{H_s,10^{-30}\}}\right|\le0.15.
\label{eq:fp-arch-flatness}
\end{equation}
These are endpoint changes over the complete window, with no division
by its duration. Let $B_t$ have orthonormal columns spanning the leading
rank-$r_t$ input-AGOP eigenspace, and let $U$ be the orthonormal planted
teacher basis for the transformer
($r_t=4$) or the fixed reference basis for the digit models ($r_t=8$).
The alignment measures are
$a_t=\|U^\top B_t\|_F^2/r_t$ and
$b_t=\sigma_{\min}(U^\top B_t)^2$.
For either $q=a$ or $q=b$, define the even and odd gains by
$g_q^{\rm even}=q_e-q_s$ and
$g_q^{\rm odd}=q_{e-1}-q_{s+1}$, with
$g_q=(g_q^{\rm even}+g_q^{\rm odd})/2$.
A run passes all criteria if it has the loss-cycle signature, both
flatness bounds, $g_a\ge0.05$, $g_b>0$, and
$g_a^{\rm even},g_a^{\rm odd},g_b^{\rm even},g_b^{\rm odd}>0$.

\paragraph{Seeds and fixed reference.}
The development initialisation seeds are $1,2$ for the transformer and $26090971,5807101$
for the digit models; confirmation uses $9350101$--$9350110$, paired with transformer problem
seeds $12$--$21$. Digit splits use seeds $26090961$, $26090962$ and $26090963$ for the
test split, the $128$-image diagnostic split and the training subset, respectively.
The fixed digit reference is stored as a $64\times8$ basis in
\texttt{ref\_basis\_v4.b64}; the decoded basis has SHA-256 prefix
\texttt{cefa0672cd516a63} and is reused unchanged across runs.

\clearpage
\section{The mean-field theorem: complete proof}\label{app:mf}
\providecommand{\Prob}{\mathbb{P}}
\providecommand{\relu}{\operatorname{ReLU}}
\providecommand{\tr}{\operatorname{tr}}
\providecommand{\spn}{\operatorname{span}}
\newcommand{\mfsgn}{\operatorname{sign}}
\providecommand{\Var}{\operatorname{Var}}
\providecommand{\Cov}{\operatorname{Cov}}
\providecommand{\one}{\mathbf 1}
\providecommand{\ind}[1]{\mathbf 1\{#1\}}
\providecommand{\norm}[1]{\lVert #1\rVert}
\providecommand{\opnorm}[1]{\lVert #1\rVert_{\mathrm{op}}}
\providecommand{\fro}[1]{\lVert #1\rVert_{F}}
\providecommand{\abs}[1]{\lvert #1\rvert}
\providecommand{\ip}[2]{\langle #1,#2\rangle}
\providecommand{\eb}{\bar e}
\providecommand{\Pone}{P_1}
\providecommand{\Pc}{P_c}
\providecommand{\cz}{c_0}
\providecommand{\kk}{\mathsf k}
\providecommand{\etab}{\bar\eta}
\providecommand{\hatw}{\hat w}
\providecommand{\Dp}{D}
\providecommand{\AGOP}{M}
\providecommand{\Ar}{A_r}
\providecommand{\tF}{t_F}
\providecommand{\tP}{t_P}
\providecommand{\astar}{\alpha^{*}}
\makeatletter
\providecommand{\step}{}
\renewcommand{\step}[1]{%
  \if@inlabel
    \leavevmode
  \else
    \par\addvspace{4pt plus 1pt}%
    \ifdefined\Needspace\Needspace{3\baselineskip}\fi
    \noindent
  \fi
  \emph{#1}\enspace\ignorespaces}
\@ifundefined{secsummary}{%
  \newenvironment{secsummary}{%
    \par\begingroup\normalfont\setlength{\parindent}{0pt}%
    \ignorespaces}{\par\endgroup}}{%
  \renewenvironment{secsummary}{%
    \par\begingroup\normalfont\setlength{\parindent}{0pt}%
    \ignorespaces}{\par\endgroup}}
\makeatother
\newcommand{\mfproofpresentation}{%
  \let\mfbaseproof\proof
  \def\proof{\ifdefined\Needspace\Needspace{5\baselineskip}\fi\mfbaseproof}%
  \setlength{\jot}{5pt}%
  \setlist[enumerate]{leftmargin=2em,labelsep=.5em,itemsep=3pt,topsep=4pt,parsep=2pt}%
  \setlist[itemize]{leftmargin=1.5em,labelsep=.5em,itemsep=3pt,topsep=4pt,parsep=2pt}}

\noindent\emph{Notation.} This appendix is self-contained and uses the notation of the proof: the input
dimension is $d$ ($p$ in the main text), the width is $m$ ($r_s$), the teacher subspace is $U$ ($V$ in
the main text), and $V$ denotes the target variance $\operatorname{Var}(Y)$ ($\sigma_Y^2$ in
Section~\ref{sec:1NN-mf}). Theorem~\ref{thm:mf} is Theorem~\ref{mfl:intro-thm:main} below.
For a vector $z$, $\abs z$ is its Euclidean norm; for a deterministic matrix $A$, $\norm A$ is its operator norm.
Expectations are written $\E[\,\cdot\,]$: a subscript $X$ denotes the Gaussian input,
and a subscript $\nu$ or $w\sim\nu$ denotes the particle law; omitted subscripts use the locally specified law.
Repeated particles $w,w'\sim\nu$ are independent unless stated otherwise.
For a random scalar or vector $Z$, $\norm Z_{L^2(\nu)}=(\E_\nu[\abs Z^2])^{1/2}$; the reduced dynamics abbreviates this to $\norm Z$.
We write $\ind{B}$ for the indicator of an event $B$, $a_+:=\max(a,0)$ and $a_-:=\max(-a,0)$.
Both $\log$ and $\ln$ denote the natural logarithm. The Euler gamma function is
$\Gamma(a):=\int_0^\infty t^{a-1}e^{-t}\,dt$ for $a>0$.

\noindent\emph{Computer-assisted step.} Region N uses parameter choices obtained by computational
search and recorded here as exact rational numbers. These choices are verified against the analytically
derived certificate inequalities using outward-rounded interval arithmetic
(Section~\ref{mfl:mff-sec:regionN}). The exact inputs and enclosure formulas below specify a finite
calculation that can be checked independently of the search procedure.

\begingroup
\mfproofpresentation
\let\section\subsection
\let\subsection\subsubsection
\let\subsubsection\paragraph
\section{Main result}\label{mfl:sec:intro}
We consider a two-layer ReLU network trained by the literal Polar (Muon) update from an ordinary Gaussian initialization, in
the mean-field limit, on a balanced ReLU teacher with $r\ge2$ orthonormal directions. On a single trajectory the loss stays
flat to within $1\%$ over a window of $\Theta(\kappa)$ updates. Inside that window, the top-$r$ eigenspace of the network's
average gradient outer product (AGOP) becomes exactly the teacher subspace, whereas initially the AGOP is isotropic. Later, the
loss drops by a fixed fraction of the target variance. Section~\ref{mfl:sec:intro} states the result (Theorem~\ref{mfl:intro-thm:main});
Sections~\ref{mfl:sec:pc}--\ref{mfl:sec:main} contain the complete proof.

\subsection{Setting}\label{mfl:sec:setting}
\paragraph{Data and teacher.} $X\sim N(0,I_d)$. Let $u_1,\dots,u_r\in\R^d$ be orthonormal ($r\ge2$), $U:=\spn\{u_i\}$,
$P_U$ its orthogonal projector and $\Dp:=d-r$. Without loss of generality $u_i=e_i$. The teacher is the balanced ReLU sum
\begin{equation}\label{mfl:eq:teacher}
Y=\gamma\sum_{i=1}^r\relu(u_i^\top X),\qquad \gamma:=\sqrt{\frac{2\pi}{r(r+\pi-1)}},
\end{equation}
normalized so that $\E[Y^2]=1$. Its variance is
\begin{equation}\label{mfl:read-l1-variance}
V:=\Var Y=\gamma^2r\kk_1=(\pi-1)/(r+\pi-1),
\end{equation}
where $\kk_1:=(\pi-1)/(2\pi)$.

We write $\eb:=r^{-1/2}\sum_iu_i$ for the mean teacher direction, $\Pone:=\eb\eb^\top$, $\Pc:=P_U-\Pone$ (the
\emph{contrast} projector) and $\cz:=\gamma/(2\pi)$. For unit vectors $a,b$ with $a^\top b=\rho$,
\begin{equation}\label{mfl:read-l1-kernel}
\begin{aligned}
\Cov\big(\relu(a^\top X),\relu(b^\top X)\big)
&=\kk(\rho):=\frac{\sqrt{1-\rho^2}+\rho(\pi-\arccos\rho)-1}{2\pi},\\
\kk(1)&=\kk_1.
\end{aligned}
\end{equation}
The kernel in \eqref{mfl:read-l1-kernel} will express the population loss in terms of particle directions.

\paragraph{Student and loss.} For $W=[w_1,\dots,w_m]\in\R^{d\times m}$ the student is
\[f_W(x)=b+\frac{\alpha}{m}\sum_{j=1}^m\relu(w_j^\top x),\]
with a fixed head $\alpha/m$ ($\alpha>0$) and no hidden biases. The output intercept is refitted at every step:
\[
b=b(W):=\E[Y-(\alpha/m)\sum_j\relu(w_j^\top X)].
\]
Thus the residual $R:=Y-f_W(X)$ is centred. The population loss is
$L(W):=\E[(Y-f_W(X))^2]$, and $\hatw:=w/\abs w$.

\paragraph{Force and the Polar step.} The force is
\begin{equation}\label{mfl:eq:force}
F(W):=\E\big[R\,X\,\ind{W^\top X>0}^\top\big]\in\R^{d\times m},\qquad F_j=\E\big[R\,X\,\ind{w_j^\top X>0}\big],
\end{equation}
so that $\nabla_WL=-(2\alpha/m)F$. For a compact singular value decomposition $F=A\Sigma B^\top$, $\Polar(F):=AB^\top$,
with $\Polar(0):=0$. The hidden weights move by
\begin{equation}\label{mfl:eq:polar}
W(t+1)=W(t)+\eta\,\Polar\big(F(W(t))\big),\qquad t=0,1,2,\dots
\end{equation}
For a unit vector $u$ and a unit vector $\hatw$ at angle $\theta$, the two Gaussian identities are
\begin{equation}\label{mfl:read-l1-gated-moments}
\begin{aligned}
\E[\relu(u^\top X)X\ind{\hatw^\top X>0}]&=\frac{\pi-\theta}{2\pi}u+\frac{\sin\theta}{2\pi}\hatw,\\
\E[X\ind{\hatw^\top X>0}]&=\hatw/\sqrt{2\pi}.
\end{aligned}
\end{equation}
With $\rho_i:=u_i^\top\hatw$ and $\theta_i:=\arccos\rho_i$, centring the teacher in
\eqref{mfl:read-l1-gated-moments} gives its force on a unit with direction $\hatw$:
\begin{equation}\label{mfl:eq:teacherforce}
\begin{aligned}
g(\hatw)&=\cz\sum_{i=1}^r\Big[(\pi-\theta_i)\,u_i-(1-\sin\theta_i)\,\hatw\Big]\\
&=\cz\sum_{i=1}^r\Big[\big(\tfrac\pi2+\arcsin\rho_i\big)u_i-\big(1-\sqrt{1-\rho_i^2}\big)\hatw\Big].
\end{aligned}
\end{equation}
With $\theta_{jk}$ the angle between $w_j$ and $w_k$, the full force is
\begin{equation}\label{mfl:eq:fullforce}
F_j=g(\hatw_j)-\frac{\alpha}{2\pi m}\sum_{k=1}^m\abs{w_k}\Big[(\pi-\theta_{jk})\hatw_k-(1-\sin\theta_{jk})\hatw_j\Big].
\end{equation}
The second term is the \emph{self-force}.

\paragraph{AGOP and alignment.} The AGOP of the network is
\[\AGOP(W):=\E\big[\nabla_xf_W(X)\nabla_xf_W(X)^\top\big]=\Big(\frac{\alpha}{m}\Big)^2WHW^\top,\qquad
H_{jk}:=\frac{\pi-\theta_{jk}}{2\pi}.\]
A \emph{leading-$r$ selector} is any orthogonal projector $P_r$ onto an $r$-dimensional span of eigenvectors for the $r$
largest eigenvalues of $\AGOP$. The alignment with the teacher subspace is $\Ar(W):=\frac1r\tr(P_UP_r)$. Every statement
about $\Ar$ below holds for every leading selector; this follows once we prove the strict gap $\lambda_r>\lambda_{r+1}$,
which makes $P_r$ unique.

\paragraph{Initialization and mean-field scaling.} The columns $w_j(0)$ are IID $N(0,\sigma_0^2I_d/d)$. We take
$\eta=\etab\sqrt m$ with $\etab$ fixed and use the free scale parameter
\begin{equation}\label{mfl:eq:kappa}
\kappa:=\frac{\sigma_0}{\etab\sqrt d}=\frac{\sigma_0}{\eta}\sqrt{\frac md}.
\end{equation}
For example, $\sigma_0=1$, $\eta=.02$ and $m=16d$ give $\kappa=200$. Ordinary initialization means $\kappa\gg1$.

\paragraph{Mean-field dynamics.} A particle law $\nu$ on $\R^d$ acts on each of its particles through the force
$F(w;\nu)$. This is \eqref{mfl:eq:fullforce} with the average over $k$ replaced by $\E_{w'\sim\nu}[\cdot]$ (Section~\ref{mfl:pc-sec:def}).
Put $\Sigma_\nu:=\E_{w\sim\nu}[F(w;\nu)F(w;\nu)^\top]$. The mean-field (MF) Polar dynamics moves every particle by
\begin{equation}\label{mfl:eq:mfmap}
w\mapsto w+\etab\,\Sigma_{\nu_t}^{\dagger1/2}F(w;\nu_t),\qquad \nu_0=N(0,\sigma_0^2I_d/d),
\end{equation}
where $\Sigma^{\dagger1/2}:=(\Sigma^\dagger)^{1/2}$ is the square root of the Moore--Penrose inverse. When $\Sigma_\nu\succ0$
it equals $\Sigma_\nu^{-1/2}$. The MF loss $L(\nu)$ and the MF AGOP $\alpha^2\mathcal M(\nu)$, with
$\mathcal M(\nu):=\E_{w,w'\sim\nu}[H(w,w')\,ww'^\top]$, are defined in the same way.

At every width $m$ the update \eqref{mfl:eq:polar} is exactly the
map \eqref{mfl:eq:mfmap} applied to the empirical measure of the columns; the factors $\sqrt m$ cancel, and no condition such as
$m\ge d$ is needed (Lemma~\ref{mfl:pc-lem:identity}). The width-$m$ loss and AGOP are the MF ones evaluated at that measure.

On any
horizon along which $\Sigma_{\nu_t}\succ0$ (in the theorem below, $[0,t_F]$), the width-$m$ dynamics converges to the MF one as
$m\to\infty$ at fixed $\etab$ (Proposition~\ref{mfl:pc-prop:limit}(b)). Theorem~\ref{mfl:intro-thm:main} is a statement about the MF dynamics itself.

\ifdefined\Needspace\Needspace{10\baselineskip}\fi
\subsection{Main theorem}
\begin{theorem}[Plateau, then feature learning, then loss drop]\label{mfl:intro-thm:main}
Let $r\ge2$, $d\ge64r$ and $\Dp=d-r\ge400$, and let $\tau_F\in(0,.101)$, $\kappa_{\min}<\infty$ and $\delta>0$ be the numbers of
Theorem~\ref{mfl:mff-thm:MFF}; they depend only on $(r,\Dp)$ and are described below. Let $\kappa\ge\kappa_{\min}$,
$t_F:=\lfloor\tau_F\kappa\rfloor$ (so $t_F\ge2$), let $t_P\ge t_F$ be any integer, and put
\begin{equation}\label{mfl:read-l1-head-window}
\astar(t_P):=\frac{\bar y\,\sqrt V}{\etab\,(\kappa+t_P)},\qquad \bar y=.0084587115\ldots,
\end{equation}
where $\bar y$ is the positive root of $1.17933\,\bar y+.34085\,\bar y^2=.01$. There is a number $a_0=a_0(r,d,\kappa)>0$ with
the following properties. Consider the MF trajectory \eqref{mfl:eq:mfmap} started from $\nu_0=N(0,\sigma_0^2I_d/d)$,
$\sigma_0=\kappa\etab\sqrt d$, with head scale $\alpha>0$.
\begin{enumerate}
\item[(P)] \emph{Plateau.} If $\alpha\le\astar(t_P)$, then $L(t)\le1.01\,L(s)$ and $L(t)\ge.99\,V$ for all $0\le s,t\le t_P$.
\item[(F)] \emph{Feature learning.} $\mathcal M(\nu_0)$ is a multiple of $I_d$. If $\alpha\etab\le a_0$, then
$\Sigma_{\nu_t}\succ0$ for $t\le t_F$, and
\begin{equation}\label{mfl:read-l1-feature-gap}
\begin{aligned}
\mathcal M(\nu_{t_F})&=\lambda_1\Pone+\lambda_c\Pc+\lambda_\perp(I_d-P_U),\\
\min(\lambda_1,\lambda_c)&\ge\big(1+\tfrac\delta2\big)\,\lambda_\perp.
\end{aligned}
\end{equation}
By \eqref{mfl:read-l1-feature-gap}, at $t_F$ the top-$r$ eigenspace of the MF AGOP is exactly $U$, $\Ar=1$ for every leading selector, and
$\lambda_r\ge(1+\delta/2)\,\lambda_{r+1}$.
\item[(R)] \emph{Later loss drop.} If $\alpha\le\astar(t_P)$, then $L(0)-L(t_2)\ge.03\,V$ at
$t_2:=\lceil.05086\,\sqrt V/(\alpha\etab)\rceil$, that is, once the clock $\alpha\etab t/\sqrt V$ has reached $.05086$. At
$\alpha=\astar(t_P)$ this gives $t_2\le6.013\,(\kappa+t_P)+1$; for instance $t_2\le6.62\,\kappa+1$ when $t_P\le.1007\,\kappa$.
\end{enumerate}
In particular, with $\astar(t_P)$ from \eqref{mfl:read-l1-head-window}, (P), (F) and (R) hold together for every
$\alpha\in\big(0,\min\{\astar(t_P),a_0/\etab\}\big]$. The loss comparisons in (P) and (R) use the variance
\eqref{mfl:read-l1-variance}.
\end{theorem}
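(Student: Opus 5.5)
The plan is to prove the three parts separately, since they use different mechanisms. Parts (P) and (R) are quantitative estimates in function space that are uniform in the particle law. Part (F) is a statement about the $\alpha=0$ trajectory, which is then transferred to small $\alpha>0$.

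\emph{Plateau (P).} Write $h_t(x):=\E_{w\sim\nu_t}[\relu(w^\top x)]-\E_{X,w\sim\nu_t}[\relu(w^\top X)]$ for the centred student feature. Because the intercept is refitted,
\[
L(t)=V-2\alpha\,\E[Yh_t(X)]+\alpha^2\E[h_t(X)^2].
\]
It therefore suffices to prove $\|h_t\|_{L^2(X)}\le c\,\etab(\kappa+t)$ with a dimension-free constant $c$. Two estimates give this.
\begin{enumerate}
\item At $t=0$ the law is isotropic, so $h_0(x)\propto(\sigma_0/\sqrt d)(\abs x-\E\abs X)$. Its $L^2$ norm is $O(\sigma_0/\sqrt d)=O(\kappa\etab)$, because $\abs X$ concentrates.
\item Each step moves $h_t$ by at most a constant times $\etab$ in $L^2(X)$, independently of $d$. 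The first-order change is $\etab\,\E_w[\ind{w^\top x>0}\,x^\top\Sigma_\nu^{\dagger1/2}F(w)]$. Testing it against a unit-norm function and applying Cauchy--Schwarz with the whitening $\Sigma_\nu^{\dagger1/2}$ should bound it by a constant. The nonlinear part is controlled by positive homogeneity and the kink of ReLU.
\end{enumerate}
Inserting the resulting bound $y:=\alpha\etab(\kappa+t)/\sqrt V$ into the quadratic gives $\abs{L(t)/V-1}\le1.17933\,y+.34085\,y^2$. The threshold $\bar y$ is exactly the value that keeps this below $1\%$, which yields (P) for all $t\le t_P$.

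\emph{Loss drop (R).} I would run the same expansion in reverse. The goal is a lower bound on the correlation gain $\E[Yh_t]-\E[Yh_0]$ of order $\etab t\sqrt V$. It should follow from the teacher part of the force: because the whitened step has a uniformly positive inner product with $g(\hatw)$, each step raises the correlation by at least a constant times $\etab\sqrt V$, while the self-force term is $O(\alpha)$ and stays small while $\alpha\etab t/\sqrt V\lesssim.05$. Substituting the linear lower bound and the quadratic upper bound at the clock value $.05086$ then gives a drop of $.03\,V$. The estimate $t_2\le6.013(\kappa+t_P)+1$ is arithmetic from $\alpha=\astar(t_P)$.

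\emph{Feature learning (F).} At $\alpha=0$ only the teacher force \eqref{mfl:eq:teacherforce} acts. The dynamics is equivariant under permutations of the $u_i$ and under $O(\Dp)$ acting on $U^\perp$, and $\nu_0$ is invariant under both. Hence $\nu_t$ stays invariant, and $\mathcal M(\nu_t)$ keeps the three-block form $\lambda_1\Pone+\lambda_c\Pc+\lambda_\perp(I_d-P_U)$. This reduces the problem to a finite set of scalar moments of the particle law, namely radial size, mean-direction and contrast overlaps, and perpendicular mass. I would then follow these moments for $t_F=\lfloor\tau_F\kappa\rfloor$ steps using the $1/\kappa$ smallness of each step relative to $\abs w$, and show that the contrast and mean eigenvalues pull ahead of $\lambda_\perp$ by the factor $1+\delta$.

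I expect this good-set certificate at $\alpha=0$ (Theorem~\ref{mfl:mff-thm:MFF}) to be the main obstacle. It requires invariant-region inequalities that hold uniformly in $\kappa\ge\kappa_{\min}$ and in $(r,\Dp)$. My first attempt would be analytic asymptotics for large $r$ or $\Dp$, leaving a finite region to be checked by interval arithmetic. Finally, for $\alpha\etab\le a_0$, continuity of the finitely many MF steps in $\alpha$ keeps $\Sigma_{\nu_t}\succ0$ and preserves a gap $\ge(1+\delta/2)$. This is also why $a_0$ is not explicit.
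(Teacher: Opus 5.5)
Your route to (F) is the paper's: the $S_r\times O(D)$ reduction, the $\alpha=0$ certificate of Theorem~\ref{mfl:mff-thm:MFF}, then continuity in the head scale. Two details are worth adding: the dilation of Lemma~\ref{mfl:pc-lem:scaling}, which puts the threshold on $\alpha\bar\eta$ and makes $a_0$ depend only on $(r,d,\kappa)$, and the check $\min(\lambda_1^0,\lambda_c^0)>0$, which lets the gap $1+\delta$ survive as $1+\delta/2$. The genuine gap is in (P). A two-sided bound $|L(t)/V-1|\le\epsilon$ only yields $L(t)/L(s)\le(1+\epsilon)/(1-\epsilon)$, which is about $1.02$ at $\epsilon=.01$. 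Take the coefficient your quadratic actually produces, $2\sqrt{\mathsf k_1}\approx1.1676$ with $\mathsf k_1=(\pi-1)/(2\pi)$, which is forced once $c^2=.34085$. Even then the ratio at $y=\bar y$ is about $1.020$, so a symmetric estimate would make you roughly halve $\alpha^*(t_P)$. The number $1.17933$ is not a linear coefficient of $|L/V-1|$ at all. It equals $1.01\cdot2\sqrt{\mathsf k_1}$ and comes from $(1+\mathsf k_1Y^2)/(1-2\sqrt{\mathsf k_1}\,Y)\le1.01$, whose numerator has \emph{no linear term}.

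The missing idea is that $\E[Yh_t]\ge0$ throughout the window. The paper splits it exactly into the linear mean-mode part $\tfrac{\gamma\sqrt r}{4}\E_{\nu_t}[\bar e^\top w]$ and a remainder $\tfrac{\gamma}{2\pi}\E_{\nu_t}\bigl[|w|\sum_ik_2(\rho_i)\bigr]\ge0$ (Lemma~\ref{mfl:tools-lem:lossdecomp}). It then shows that the mean mode starts at $0$ and increases at every step while $y<\underline f/(1.5143347\,r)$ (Lemmas~\ref{mfl:tools-lem:mfcapture} and~\ref{mfl:plat-lem:mono}). By symmetry $\bar e$ is an eigenvector of $\Sigma_\nu$, so each particle's whitened mean increment $\bar e^\top\Sigma_\nu^{\dagger1/2}F=\bar e^\top F/\sqrt{\lambda_1}$ has the sign of $\bar e^\top F$. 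The self-force, of norm at most $\sqrt{\mathsf k_1/2}\,y\sqrt V$, cannot overturn the mean teacher force $\bar e^\top g(\hat w)\ge\gamma\underline f/(2\pi\sqrt r)$. Hence $L(t)/V\le1+\mathsf k_1Y^2$.

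Two further points. First, your increment bound on $h_{t+1}-h_t$ is not dimension-free as sketched, since $\E|\delta w|^2=\bar\eta^2\operatorname{rank}\Sigma_\nu$ can be $\bar\eta^2d$. The paper bounds the size instead: $\|h_t\|^2\le\mathsf k_1\|\E_{\nu_t}ww^\top\|_{\mathrm{op}}$ (Lemma~\ref{mfl:tools-lem:Q}). Then $\|\E ww^\top\|_{\mathrm{op}}^{1/2}$ grows by at most $\bar\eta$ per step, because $\E[\delta w\,\delta w^\top]$ is $\bar\eta^2$ times a projector (Lemma~\ref{mfl:tools-lem:growth}). Second, the same split is what makes (R) rigorous. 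Your first-order expansion of the correlation along $g(\hat w)$ needs a remainder estimate over $t_2\approx6(\kappa+t_P)$ steps that you have not supplied. The paper tracks only $x(t)$, whose increment $\Delta T\,p_{11}$ is exact, with the explicit bound $p_{11}\ge p_{\mathrm{MF}}(y)$. It drops the nonnegative remainder at time $t$ and pays for it only at $t=0$. There the Gaussian law and $d\ge64r$ bound its contribution by $c_{\mathrm{nl}}\sqrt{r/d}\,y_0\le c_{\mathrm{nl}}y_0/8$, with $c_{\mathrm{nl}}\approx0.311$. Integrating $p_{\mathrm{MF}}$ in the worst case $r=2$, $y_0=\bar y$ is what produces $.05086$ and $.03$; ``a constant times $\bar\eta\sqrt V$'' does not determine them.
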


\paragraph{The constants.} Two regions cover every $(r,\Dp)$ with $\Dp\ge\max(400,63r)$; the values are those of
Theorem~\ref{mfl:mff-thm:MFF}.
\begin{itemize}
\item \emph{Region N}: $2\le r\le80$ and $\Dp\le D_A(r)$, with $D_A(r)$ the right end of the last $D$-interval of $r$ in Table~\ref{mfl:app-tab:regionN}. The constants are read,
row by row, from the $80$ certificates of Table~\ref{mfl:app-tab:regionN}. They satisfy $\tau_F\in[.013740,.088861)$,
$\kappa_{\min}\in[91,777]$ and $1+\delta\ge1.006223$ ($\ge1.012711$ for $r\ge3$, $\ge1.079513$ for $r\ge10$).
\item \emph{Region A}: all other $\Dp$, which includes every $\Dp\ge63r$ when $r\ge81$. The constants are in closed form
(Lemma~\ref{mfl:mff-lem:A}): $\tau_F\le.100667\sqrt{r^2-1}/M$ and $\kappa_{\min}=4/\tau_F$, and $1+\delta$ is the smaller of
two explicit bounds, $\lambda_c/\lambda_\perp\ge\exp(.0176\sqrt{r^2-1}/M)$ and a bound $\lambda_1/\lambda_\perp>1$, at $t_F$ for the $\alpha=0$
dynamics. Here $M=M(r,\Dp)\approx r+2\sqrt{r\ln\Dp}+2\ln\Dp$.
\end{itemize}
For example, at $r=2$, $400\le\Dp\le14\,930$ and $\kappa=1000$: $t_F=13$; and with $t_P=t_F$ and $\alpha=\astar(t_P)$, the loss
has dropped by $.03V$ at $t_2=6\,091$.

\subsection{Remarks}
\begin{enumerate}[label=(\roman*)]
\item \emph{The finite computational step.} (P) is proved in closed form, and (R) uses an explicit logarithm integral and finite rational endpoint bounds. In (F), region N is covered by $80$ explicit inequality certificates verified with outward interval arithmetic (Section~\ref{mfl:mff-sec:regionN}). Every certificate input and enclosure rule is included in this proof.
\item \emph{The head scale in (F).} (F) is first proved for the $\alpha=0$ dynamics (Theorem~\ref{mfl:mff-thm:MFF}) and then
transferred to small $\alpha>0$ by continuity of the MF map in the head scale (Proposition~\ref{mfl:pc-prop:limit}(a)); the
threshold $a_0$ is not explicit.
\item \emph{The window is long in steps, not in movement.} Each MF step moves the particles by at most $\etab\sqrt d$ in
RMS (Lemma~\ref{mfl:tools-lem:growth}(a)), so over $[0,t_F]$ the RMS displacement is at most $\etab\sqrt d\,t_F\le\tau_F\sigma_0<.101\,\sigma_0$.
The count $t_F$ is large because the step is small compared with the initialization.
\item \emph{The initial alignment.} The initial MF AGOP is isotropic, so its top-$r$ space is not determined. A finite-width initialization comparison additionally requires a specified leading selector when eigenvalues tie; no
selector-independent, all-width Haar assertion is used in this theorem.
\item \emph{Scope.} The data are population Gaussian; the teacher is a balanced sum of ReLUs on orthonormal directions; the
head is fixed, common to all units and small; the intercept is refitted exactly. (F) is proved at the single time $t_F$. The
certified margin $1+\delta$ is small, whereas the observed one is much larger. The width-$m$ dynamics converges to the MF one on
$[0,t_F+1]$ (Proposition~\ref{mfl:pc-prop:limit}(b)); no rate is claimed.
\end{enumerate}

\subsection{Structure of the proof}\label{mfl:sec:outline}
Section~\ref{mfl:sec:pc} sets up the MF map and reduces (F) to two scalar inequalities; Section~\ref{mfl:sec:tools} proves (P) and (R);
Sections~\ref{mfl:sec:rb}--\ref{mfl:sec:eval} prove (F) for the head-free dynamics; Section~\ref{mfl:sec:main} assembles the proof and
transfers (F) to small heads. Every section opens with a short statement of what it proves and where it is used, and
Table~\ref{mfl:intro-tab:map} lists where each part of Theorem~\ref{mfl:intro-thm:main} is proved.

\begin{table}[ht]
\caption{Where the parts of Theorem~\ref{mfl:intro-thm:main} are proved.}
\label{mfl:intro-tab:map}
\centering\small
\renewcommand{\arraystretch}{1.15}
\begin{tabular}{@{}>{\raggedright\arraybackslash}p{.2\textwidth}>{\raggedright\arraybackslash}p{.24\textwidth}>{\raggedright\arraybackslash}p{.5\textwidth}@{}}
\toprule
part of Theorem~\ref{mfl:intro-thm:main} & proved in & main ingredients\\
\midrule
trajectory & Section~\ref{mfl:sec:pc} & Lemmas~\ref{mfl:pc-lem:identity}, \ref{mfl:pc-lem:scaling}, \ref{mfl:pc-lem:PC3}; Proposition~\ref{mfl:mf-prop:symmetry}\\
(P) & Theorem~\ref{mfl:plat-thm:PB} & Lemmas~\ref{mfl:tools-lem:growth}, \ref{mfl:tools-lem:sandwich}, \ref{mfl:tools-lem:mfcapture}\\
(F) at $\alpha=0$ & Theorem~\ref{mfl:mff-thm:MFF} & Propositions~\ref{mfl:mf-prop:updates}, \ref{mfl:mf-prop:agop}, \ref{mfl:mff-prop:cert};
Lemmas~\ref{mfl:rb-lem:K}, \ref{mfl:rb-lem:M}, \ref{mfl:rb-lem:C}, \ref{mfl:mff-lem:MON}, \ref{mfl:mff-lem:floor}, \ref{mfl:mff-lem:A};
Proposition~\ref{mfl:mff-prop:regionN}\\
(F) at small $\alpha>0$ & Section~\ref{mfl:sec:main}, Step~3 & Lemma~\ref{mfl:pc-lem:PC2}; Proposition~\ref{mfl:pc-prop:limit}(a)\\
(F) at $t=0$ & Section~\ref{mfl:sec:main}, Step~4 & Proposition~\ref{mfl:mf-prop:agop}(4)\\
(R) & Corollary~\ref{mfl:loss-cor:MF} & Theorem~\ref{mfl:loss-thm:RLB}; Lemmas~\ref{mfl:tools-lem:growth}, \ref{mfl:tools-lem:mfcapture}\\
\bottomrule
\end{tabular}
\end{table}

\section{The mean-field map}\label{mfl:sec:pc}
\begin{secsummary}
\emph{What this section proves.} The width-$m$ Polar update is exactly a map $\Phi$ on particle laws applied to the
empirical measure of the columns (Lemma~\ref{mfl:pc-lem:identity}), and up to a dilation the trajectory depends on the head
$\alpha$ and the step $\etab$ only through $\alpha\etab$ (Lemma~\ref{mfl:pc-lem:scaling}). Along the trajectory the laws are absolutely continuous and invariant under
$S_r\times O(\Dp)$ (Sections~\ref{mfl:pc-sec:reg}--\ref{mfl:sec:sym}). At $\alpha=0$ a particle reduces to a point
$(\omega,\zeta)\in\R^r\times\R$ with explicit updates (Section~\ref{mfl:sec:mf}), and the MF AGOP has a three-block form whose
top-$r$ space is $U$ as soon as two scalar inequalities hold (Proposition~\ref{mfl:mf-prop:agop}). Finally the trajectory is
continuous in the head scale at $\alpha=0$ and in the width (Section~\ref{mfl:pc-sec:cont}).

\emph{Where it is used.} Everywhere: Sections~\ref{mfl:pc-sec:def}--\ref{mfl:sec:sym} underlie Sections~\ref{mfl:sec:tools}--\ref{mfl:sec:eval};
Section~\ref{mfl:sec:mf} and Section~\ref{mfl:sec:agop} are the language of the certificate; Section~\ref{mfl:pc-sec:cont} transfers (F) from $\alpha=0$
to small $\alpha>0$ in Section~\ref{mfl:sec:main}.
\end{secsummary}

\subsection{Definitions and the exact identity}\label{mfl:pc-sec:def}
$\mathcal P_p(\R^d)$ is the set of Borel probability measures with finite $p$-th moment, and $W_p$ is the $p$-Wasserstein
distance ($W_1\le W_2$). For $W=[w_1,\dots,w_m]$ put $\mu_W:=m^{-1}\sum_j\delta_{w_j}$, where $\delta_w$ is the point mass at $w$.
For a measurable map $T$, its pushforward is $(T_\#\mu)(B):=\mu(T^{-1}(B))$ for Borel sets $B$.
For a unit vector $\hatw$ and
$w'\in\R^d$ at angle $\theta=\theta(\hatw,w')$, let
\[\Upsilon(\hatw,w'):=\frac1{2\pi}\Big[(\pi-\theta)\,w'-(1-\sin\theta)\abs{w'}\,\hatw\Big],\qquad \Upsilon(\hatw,0):=0.\]
For $\mu\in\mathcal P_1(\R^d)$ and $w\ne0$ define
\[S(\hatw;\mu):=\int\Upsilon(\hatw,w')\,d\mu(w'),\qquad F(w;\mu):=g(\hatw)-\alpha S(\hatw;\mu),\qquad F(0;\mu):=0,\]
\begin{gather*}\Sigma(\mu):=\int F(w;\mu)F(w;\mu)^\top d\mu(w),\\
T_\mu(w):=w+\etab\,\Sigma(\mu)^{\dagger1/2}F(w;\mu),\\
\Phi(\mu):=(T_\mu)_\#\mu,\end{gather*}
where $A^{\dagger1/2}:=(A^\dagger)^{1/2}$ and $\dagger$ is the Moore--Penrose inverse. We also write $\Sigma_\mu$ for
$\Sigma(\mu)$. The MF trajectory is $\nu_0:=N(0,\sigma_0^2I_d/d)$ and $\nu_{t+1}:=\Phi(\nu_t)$; this is \eqref{mfl:eq:mfmap}. When the
dependence on the head scale matters we write $F_\alpha$, $\Sigma_\alpha$, $T_{\mu,\alpha}$, $\Phi_\alpha$ and
$\nu_\alpha(t)$, and for the step size, $\Phi_{\alpha,\etab}$. With $H(w,w'):=(\pi-\theta(\hatw,\hat w'))/(2\pi)$ and the kernel $\kk$
of Section~\ref{mfl:sec:setting}, the MF loss and the MF AGOP (up to the factor $\alpha^2$) are
\begin{gather*}
\mathcal L(\mu):=V-2\alpha\gamma\int\abs w\sum_{i=1}^r\kk(u_i^\top\hatw)\,d\mu
+\alpha^2\iint\abs w\abs{w'}\kk(\cos\theta(\hatw,\hat w'))\,d\mu\,d\mu,\\
\mathcal M(\mu):=\iint H(w,w')\,ww'^\top d\mu\,d\mu,
\end{gather*}
every integrand being $0$ when $w=0$ or $w'=0$. We write $L(\mu)$ for $\mathcal L(\mu)$ and $L(t)$ for $\mathcal L(\nu_t)$.
Equivalently $F(w;\mu)=\E_X[R_\mu(X)X\ind{w^\top X>0}]$, where $R_\mu:=Y-\E[Y]-(g_\mu-\E[g_\mu])$ is the centred residual of the
MF network output $g_\mu(x):=\alpha\int\relu(w'^\top x)\,d\mu(w')$ (Lemma~\ref{mfl:pc-lem:identity}(a) below). Note that
$F(w;\mu)$ depends on $w\ne0$ only through $\hatw$.

\begin{lemma}[Exact identity]\label{mfl:pc-lem:identity}
For every $m\ge1$, $\alpha\ge0$ and $W\in\R^{d\times m}$ (zero columns allowed), with $\eta=\etab\sqrt m$ and
$W^+:=W+\eta\Polar(F(W))$:
\begin{enumerate}[label=(\alph*)]
\item column $j$ of the force \eqref{mfl:eq:force} is $F(w_j;\mu_W)$, the self-term $k=j$ included, so $F(W)F(W)^\top=m\Sigma(\mu_W)$;
\item $\Polar(F)=(FF^\top)^{\dagger1/2}F$ for every real matrix $F$;
\item $w_j^+=T_{\mu_W}(w_j)$ for every $j$, hence $\mu_{W^+}=\Phi(\mu_W)$;
\item $L(W)=\mathcal L(\mu_W)$ and $\AGOP(W)=\alpha^2\mathcal M(\mu_W)$.
\end{enumerate}
Consequently $\mu_{W(t)}=\Phi^t(\mu_{W(0)})$ along \eqref{mfl:eq:polar}, for every $m$.
\end{lemma}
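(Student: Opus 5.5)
The plan is to verify the four parts in order, each being a direct Gaussian/linear-algebra computation, and then obtain the trajectory statement by induction on $t$.

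For (a), I write the residual as $R=Y-f_W(X)=Y-b-(\alpha/m)\sum_k\relu(w_k^\top X)$. Because $b$ is refitted, $R$ is centred, and by linearity $F_j=\E[RX\ind{w_j^\top X>0}]$ splits into a teacher part and a student part. The constant $b$ contributes nothing beyond centring: it multiplies $\E[X\ind{\hatw_j^\top X>0}]=\hatw_j/\sqrt{2\pi}$, and this is exactly the $-(1-\sin\theta)\hatw$ correction in \eqref{mfl:eq:teacherforce}. The teacher part, computed with \eqref{mfl:read-l1-gated-moments}, gives $g(\hatw_j)$. For each student unit $k$, apply \eqref{mfl:read-l1-gated-moments} with $u=\hatw_k$ scaled by $\abs{w_k}$, and subtract the centring term $\E[\relu(w_k^\top X)]\hatw_j/\sqrt{2\pi}=\abs{w_k}\hatw_j/(2\pi)$. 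This yields $\alpha\,\Upsilon(\hatw_j,w_k)/m$, with the $k=j$ term ($\theta=0$) included and zero columns contributing $0$. Summing over $k$ gives $\alpha S(\hatw_j;\mu_W)$, so $F_j=F(w_j;\mu_W)$. Finally, $F(W)F(W)^\top=\sum_jF_jF_j^\top=m\,\Sigma(\mu_W)$. Zero columns need separate care: for $w_j=0$ the indicator $\ind{0>0}=0$ gives $F_j=0$, matching the convention $F(0;\mu):=0$.

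For (b), take a compact SVD $F=A\Sigma B^\top$. Then $FF^\top=A\Sigma^2A^\top$, so $(FF^\top)^\dagger=A\Sigma^{-2}A^\top$ and its PSD square root is $A\Sigma^{-1}A^\top$. Hence $(FF^\top)^{\dagger1/2}F=AB^\top$, and the case $F=0$ is trivial.

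For (c), combine (a) and (b): $(FF^\top)^{\dagger1/2}=(m\Sigma)^{\dagger1/2}=m^{-1/2}\Sigma^{\dagger1/2}$. Column $j$ of $\eta\Polar(F)$ is therefore $\etab\sqrt m\,m^{-1/2}\Sigma(\mu_W)^{\dagger1/2}F(w_j;\mu_W)$, so $w_j^+=T_{\mu_W}(w_j)$, including zero columns, which stay fixed. Pushing the empirical measure forward then gives $\mu_{W^+}=\Phi(\mu_W)$.

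For (d), expand $L=\E[(Y-\E Y-(g-\E g))^2]$, where $g=f_W-b$. This equals $\Var Y-2\Cov(Y,g)+\Var g$. Use $\Cov(\relu(a^\top X),\relu(c^\top X))=\abs a\abs c\,\kk(\cos\theta)$ together with positive homogeneity of $\relu$, and write the averages over $k$ as integrals against $\mu_W$; this reproduces $\mathcal L(\mu_W)$. For the AGOP, $\nabla_xf_W=(\alpha/m)\sum_kw_k\ind{w_k^\top x>0}$, and $\E[\ind{w_j^\top X>0}\ind{w_k^\top X>0}]=(\pi-\theta_{jk})/(2\pi)=H_{jk}$. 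This gives $(\alpha/m)^2WHW^\top=\alpha^2\mathcal M(\mu_W)$. The consequence along \eqref{mfl:eq:polar} then follows by induction from (c).

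No step is deep; the main obstacle is bookkeeping in (a). I expect the most care to be needed in checking that the intercept/centring correction exactly produces the $-(1-\sin\theta)\abs{w'}\hatw$ term in $\Upsilon$, and in handling the degenerate cases $w=0$, $\theta=\pi$, and rank-deficient $F$, where the pseudo-inverse convention must match $\Polar(0)=0$ on the kernel.
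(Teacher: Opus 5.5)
Your proposal is correct and follows the paper's proof essentially step for step. You split the centred residual into teacher and student parts and evaluate each with the gated Gaussian moments to get $F_j=g(\hat w_j)-\frac{\alpha}{m}\sum_k\Upsilon(\hat w_j,w_k)$; you use the compact SVD for (b), cancel the factors $\sqrt m$ for (c), and expand the variance with the kernel $\mathsf k$ and the matrix $H$ for (d).

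One small precision. The intercept/centring term supplies only the $-\hat w$ part of each $-(1-\sin\theta_i)\hat w$ in $g$, through $\mathbb{E}[Y]=\gamma r/\sqrt{2\pi}$. The $\sin\theta_i$ part comes from the gated moment. Also, $\theta=\pi$ needs no special treatment, since the gated-moment identity holds there as well.
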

\begin{proof}
\step{Part (a): Identify the force.}
If $w_j=0$ the gate vanishes and $F_j=0=F(0;\mu_W)$. Let $w_j\ne0$. With the refitted intercept,
\[
R=(Y-\E[Y])-\frac\alpha m\sum_k(\relu(w_k^\top X)-\E[\relu(w_k^\top X)]).
\]
By the exact formulas of Section~\ref{mfl:sec:setting}, the teacher contribution is
\[
\begin{aligned}
&\E[YX\ind{\hatw^\top X>0}]-\E[Y]\,\E[X\ind{\hatw^\top X>0}]\\
&\qquad=\gamma\sum_i\big[\frac{\pi-\theta_i}{2\pi}u_i+\frac{\sin\theta_i}{2\pi}\hatw\big]
-\frac{\gamma r}{2\pi}\hatw=g(\hatw),
\end{aligned}
\]
using $\E[Y]=\gamma r/\sqrt{2\pi}$.

For $w'\ne0$ at angle $\theta$ to $\hatw$, the same formulas and homogeneity give
\[
\begin{aligned}
\E[\relu(w'^\top X)X\ind{\hatw^\top X>0}]&=\frac{\pi-\theta}{2\pi}w'+\frac{\sin\theta}{2\pi}\abs{w'}\hatw,\\
\E[\relu(w'^\top X)]\,\E[X\ind{\hatw^\top X>0}]&=\frac{\abs{w'}}{2\pi}\hatw.
\end{aligned}
\]
Their difference is $\Upsilon(\hatw,w')$, and a zero column contributes $0=\Upsilon(\hatw,0)$. Therefore
\begin{equation}\label{mfl:read-l2-empirical-force}
\begin{aligned}
F_j&=g(\hatw_j)-\frac\alpha m\sum_k\Upsilon(\hatw_j,w_k)=F(w_j;\mu_W),\\
FF^\top&=\sum_jF_jF_j^\top=m\Sigma(\mu_W).
\end{aligned}
\end{equation}
The same computation with $\mu$ in place of $\mu_W$ gives
$F(w;\mu)=\E[R_\mu X\ind{w^\top X>0}]$.

\step{Part (b): Express the polar factor.}
For a compact SVD $F=A\Sigma_FB^\top$,
\begin{equation}\label{mfl:read-l2-polar-factor}
(FF^\top)^{\dagger1/2}F=A\Sigma_F^{-1}A^\top A\Sigma_FB^\top=AB^\top.
\end{equation}
Both sides vanish if $F=0$.

\step{Part (c): Cancel the width factors.}
By \eqref{mfl:read-l2-empirical-force} and \eqref{mfl:read-l2-polar-factor},
\[
\Polar(F)=(m\Sigma(\mu_W))^{\dagger1/2}F=m^{-1/2}\Sigma(\mu_W)^{\dagger1/2}F.
\]
Thus column $j$ of $\eta\Polar(F)$ is $\etab\Sigma(\mu_W)^{\dagger1/2}F(w_j;\mu_W)$:
the factors $\sqrt m$ cancel exactly, and $\mu_{W^+}=(T_{\mu_W})_\#\mu_W$.

\step{Part (d): Identify the loss and AGOP.}
The refitted loss is $L(W)=\Var(Y-\frac\alpha m\sum_j\relu(w_j^\top X))$, with covariances
\[
\begin{aligned}
\Cov(\relu(u_i^\top X),\relu(w_j^\top X))&=\abs{w_j}\kk(u_i^\top\hatw_j),\\
\Cov(\relu(w_j^\top X),\relu(w_k^\top X))&=\abs{w_j}\abs{w_k}\kk(\cos\theta_{jk}).
\end{aligned}
\]
This is $\mathcal L(\mu_W)$. For the AGOP,
\[
\nabla_xf_W=\frac\alpha m\sum_j\ind{w_j^\top x>0}w_j,
\qquad \E[\ind{w_j^\top X>0}\ind{w_k^\top X>0}]=H_{jk}.
\]
Hence
\[
\AGOP(W)=(\frac\alpha m)^2\sum_{j,k}H_{jk}w_jw_k^\top=\alpha^2\mathcal M(\mu_W).
\]
Since $H(w,w)=\frac12$, the diagonal terms need no separate treatment.
\end{proof}

\begin{lemma}[Scaling]\label{mfl:pc-lem:scaling}
For $c>0$ let $D_c(w):=cw$. For every $\mu\in\mathcal P_1$, $\alpha\ge0$, $\etab>0$ and $w\ne0$,
\begin{gather*}F_\alpha\big(cw;(D_c)_\#\mu\big)=F_{c\alpha}(w;\mu),\\
\Sigma_\alpha\big((D_c)_\#\mu\big)=\Sigma_{c\alpha}(\mu),\\
\Phi_{\alpha,\etab}\big((D_c)_\#\mu\big)=(D_c)_\#\Phi_{c\alpha,\etab/c}(\mu),\end{gather*}
and $\mathcal M((D_c)_\#\mu)=c^2\mathcal M(\mu)$. Consequently, with $\kappa=\sigma_0/(\etab\sqrt d)$, the MF trajectory with head
$\alpha$ and step $\etab$ from $N(0,\sigma_0^2I_d/d)$ is the image under $D_{\etab}$ of the trajectory with head $\alpha\etab$
and step $1$ from $N(0,\kappa^2I_d)$. The two trajectories have the same whitening matrices $\Sigma$, and their MF AGOPs
$\mathcal M$ differ by the factor $\etab^2$, which changes no eigenspace and no eigenvalue ratio.
\end{lemma}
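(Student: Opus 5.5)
The plan is to verify the three displayed identities directly from the definitions of $\Upsilon$, $g$, $F$, $\Sigma$, $T_\mu$ and $\mathcal M$, using positive homogeneity of all the angle-dependent quantities, and then to iterate the map identity to obtain the trajectory statement.

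\textbf{Force.} Fix $c>0$, $\mu\in\mathcal P_1$ and $w\ne0$. Write $\mu_c:=(D_c)_\#\mu$. Dilation does not change directions, so the direction of $cw$ is $\hatw$ and every angle $\theta(\hatw,cw')$ equals $\theta(\hatw,w')$. The teacher force $g(\hatw)$ depends on $\hatw$ alone and is therefore unchanged. The map $\Upsilon(\hatw,\cdot)$ is positively $1$-homogeneous in its second argument: the term $(\pi-\theta)w'$ and the term $(1-\sin\theta)\abs{w'}\hatw$ both scale by $c$, with $\theta$ fixed. Hence
\[
S(\hatw;\mu_c)=\int\Upsilon(\hatw,cw')\,d\mu(w')=c\,S(\hatw;\mu).
\]
It follows that $F_\alpha(cw;\mu_c)=g(\hatw)-\alpha cS(\hatw;\mu)=F_{c\alpha}(w;\mu)$. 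The case $w=0$ is trivial.

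\textbf{Whitening and the map.} Change variables in the integral defining $\Sigma$:
\[
\Sigma_\alpha(\mu_c)=\int F_\alpha(cw;\mu_c)F_\alpha(cw;\mu_c)^\top d\mu(w)=\Sigma_{c\alpha}(\mu).
\]
This uses the force identity at every $w$, including the convention $F(0;\cdot)=0$. Then
\[
T_{\mu_c,\alpha,\etab}(cw)=cw+\etab\,\Sigma_{c\alpha}(\mu)^{\dagger1/2}F_{c\alpha}(w;\mu)=c\,T_{\mu,c\alpha,\etab/c}(w),
\]
so $T_{\mu_c,\alpha,\etab}\circ D_c=D_c\circ T_{\mu,c\alpha,\etab/c}$. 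Pushing $\mu$ forward through both sides gives $\Phi_{\alpha,\etab}(\mu_c)=(D_c)_\#\Phi_{c\alpha,\etab/c}(\mu)$.

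\textbf{AGOP.} The weight $H(cw,cw')=H(w,w')$ depends only on angles, and $(cw)(cw')^\top=c^2ww'^\top$. Therefore $\mathcal M(\mu_c)=c^2\mathcal M(\mu)$.

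\textbf{Trajectory statement.} Apply the map identity with $c=\etab$ and head $\alpha$, starting from a law $\mu_0$ with $\nu_0=(D_{\etab})_\#\mu_0$. This gives $\Phi_{\alpha,\etab}((D_{\etab})_\#\mu)=(D_{\etab})_\#\Phi_{\alpha\etab,1}(\mu)$. The choice $\mu_0=N(0,\kappa^2I_d)$ works, because $D_{\etab}$ maps it to $N(0,\etab^2\kappa^2I_d)=N(0,\sigma_0^2I_d/d)$, by $\kappa=\sigma_0/(\etab\sqrt d)$.

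Induction on $t$ then gives $\nu_{\alpha,\etab}(t)=(D_{\etab})_\#\tilde\nu(t)$, where $\tilde\nu$ is the trajectory with head $\alpha\etab$ and step $1$ started from $N(0,\kappa^2I_d)$. The $\Sigma$ identity, applied at each step, shows that the whitening matrices coincide. The AGOP identity gives the factor $\etab^2$, which is a positive scalar and so preserves eigenspaces and eigenvalue ratios.

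\textbf{Main obstacle.} The only delicate points are bookkeeping ones. First, the head scale and the step scale must transform oppositely ($\alpha\mapsto c\alpha$, $\etab\mapsto\etab/c$). Second, the identities must hold for the pseudo-inverse square root even when $\Sigma$ is singular. The latter is automatic, since the two $\Sigma$'s are literally equal.
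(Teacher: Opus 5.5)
Your proposal is correct and is essentially the paper's proof. The paper proceeds in the same order: $1$-homogeneity of $\Upsilon(\hat w,\cdot)$ at fixed angles gives the force identity, and the change of variables $w''=cw$ then gives the whitening identity. Next, $T_{(D_c)_\#\mu}\circ D_c=D_c\circ T'$, with $T'$ the map at head $c\alpha$ and step $\bar\eta/c$, gives the $\Phi$ identity after pushing $\mu$ forward. Angle-invariance of $H$ gives $\mathcal M((D_c)_\#\mu)=c^2\mathcal M(\mu)$. Finally, the trajectory claim is the case $c=\bar\eta$ with $(D_{\bar\eta})_\#N(0,\kappa^2I_d)=N(0,\sigma_0^2I_d/d)$, followed by induction on $t$.

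You also note explicitly that the $\Sigma$ change of variables uses the convention $F(0;\cdot)=0$, a small point the paper leaves implicit.
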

\begin{proof}
For $c>0$, $\Upsilon(\hatw,cw')=c\,\Upsilon(\hatw,w')$ (the angle is unchanged), so $S(\hatw;(D_c)_\#\mu)=c\,S(\hatw;\mu)$; and
$cw$ has the direction $\hatw$. This gives the identity for $F$, and the one for $\Sigma$ follows by the change of variables
$w=cw''$.

For the particle map, the same identities give
\begin{equation}\label{mfl:read-l2-scaled-map}
T_{(D_c)_\#\mu}(cw)=cw+\etab\,\Sigma_{c\alpha}(\mu)^{\dagger1/2}F_{c\alpha}(w;\mu)=c\,T'(w),
\end{equation}
where $T'$ is the map of $\mu$ with head $c\alpha$ and step $\etab/c$. Pushing $\mu$ forward in
\eqref{mfl:read-l2-scaled-map} gives the identity for $\Phi$. Since $H(cw,cw')=H(w,w')$,
$\mathcal M((D_c)_\#\mu)=c^2\mathcal M(\mu)$.

For the consequence take $c=\etab$:
\[
N(0,\sigma_0^2I_d/d)=(D_{\etab})_\#N(0,\kappa^2I_d),
\]
and induct on $t$.
\end{proof}

\subsection{Regularity}\label{mfl:pc-sec:reg}
\begin{lemma}[Regularity of the force]\label{mfl:pc-lem:PC1}
\begin{enumerate}[label=(\alph*)]
\item $g$ is continuous on $S^{d-1}$ and $C^\infty$ on $S^{d-1}\setminus\{\pm u_1,\dots,\pm u_r\}$.
\item For every unit $\hatw$ and all $w_1',w_2'$: $\abs{\Upsilon(\hatw,w_1')-\Upsilon(\hatw,w_2')}\le2\sqrt d\abs{w_1'-w_2'}$ and
$\abs{\Upsilon(\hatw,w')}\le\abs{w'}$. Hence $\sup_{\hatw}\abs{S(\hatw;\mu)-S(\hatw;\mu')}\le2\sqrt dW_1(\mu,\mu')\le2\sqrt dW_2(\mu,\mu')$.
\item For $\mu\in\mathcal P_1$, $w\mapsto S(\hatw;\mu)$ is $C^1$ on $\R^d\setminus\{0\}$. Consequently $w\mapsto F(w;\mu)$ is
continuous on $\R^d\setminus\{0\}$ and $C^1$ on the open set $\{w\ne0:\hatw\notin\{\pm u_i\}\}$, whose complement is null.
\end{enumerate}
\end{lemma}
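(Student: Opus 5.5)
The three parts use three different tools: explicit formulas for (a), a direct bound on $\Upsilon$ for (b), and integration of that bound for (c).

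For (a), I read off $g$ from \eqref{mfl:eq:teacherforce} in the form
\[
g(\hatw)=\cz\sum_i\bigl[(\tfrac\pi2+\arcsin\rho_i)u_i-(1-\sqrt{1-\rho_i^2})\hatw\bigr],
\]
with $\rho_i=u_i^\top\hatw$, which is smooth on the sphere. The maps $\rho\mapsto\arcsin\rho$ and $\rho\mapsto\sqrt{1-\rho^2}$ are continuous on $[-1,1]$ and $C^\infty$ on $(-1,1)$. Since $|\rho_i|=1$ exactly when $\hatw=\pm u_i$, composition gives (a) immediately.

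For (b), the size bound uses $\Upsilon(\hatw,w')=\frac1{2\pi}[(\pi-\theta)w'-(1-\sin\theta)|w'|\hatw]$. The first term has norm at most $\frac12|w'|$ and the second at most $\frac1{2\pi}|w'|$, so $|\Upsilon|\le|w'|$.

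For the Lipschitz bound I avoid differentiating in $\theta$ near $w'=0$, where it is singular. Instead I use the Gaussian representation from Lemma~\ref{mfl:pc-lem:identity}(a):
\[
\Upsilon(\hatw,w')=\E\bigl[\relu(w'^\top X)X\ind{\hatw^\top X>0}\bigr]-\E[\relu(w'^\top X)]\,\E\bigl[X\ind{\hatw^\top X>0}\bigr].
\]
This is valid for all $w'$, including $w'=0$. Since $\relu$ is $1$-Lipschitz,
\[
|\Upsilon(\hatw,w_1')-\Upsilon(\hatw,w_2')|\le \E\bigl[|(w_1'-w_2')^\top X|\,|X|\bigr]+\E\bigl[|(w_1'-w_2')^\top X|\bigr]\,\E|X|.
\]
Cauchy--Schwarz gives $\E[|v^\top X||X|]\le|v|\sqrt d$ and $\E|v^\top X|\,\E|X|\le|v|\sqrt d$, so the constant is $2\sqrt d$.

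The bound on $S$ then follows by integrating $\Upsilon(\hatw,\cdot)$ against a coupling of $\mu$ and $\mu'$ and taking the infimum over couplings. Because the constant $2\sqrt d$ is uniform in $\hatw$, this gives the bound with $W_1(\mu,\mu')$, and $W_1\le W_2$ finishes (b).

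For (c), I write $S(\hatw;\mu)=\int\Upsilon(\hatw,w')\,d\mu(w')$. In the Gaussian representation, $\hatw$ enters only through the gate $\ind{\hatw^\top X>0}$, and $w\mapsto\ind{w^\top X>0}$ is scale-invariant. So it suffices to show that $w\mapsto\E[\phi(X)X\ind{w^\top X>0}]$ is $C^1$ on $w\ne0$ when $\phi$ has linear growth. Its derivative is a surface integral over the hyperplane $\{w^\top X=0\}$ against the Gaussian density, which is continuous in $w\ne0$.

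The main obstacle is to differentiate under the $\mu$-integral uniformly in $w'$. I would instead use the explicit formula for $\Upsilon$: it is $C^1$ in $\hatw$ with derivative bounded by $C|w'|$, via the angle identities (and the sum of the $\theta$-derivative terms stays bounded even as $\theta\to0,\pi$). Dominated convergence with $\mu\in\mathcal P_1$ then gives continuity of the derivative. Composing with $w\mapsto\hatw$, which is smooth on $w\ne0$, and combining with (a) gives the continuity and $C^1$ claims for $F$. The exceptional set is the finite union of rays $\{\hatw=\pm u_i\}$, which is Lebesgue-null.
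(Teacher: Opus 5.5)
Your parts (a) and (b) follow the paper. Part (a) uses the same composition argument. Part (b) uses the same Gaussian representation of $\Upsilon$ with the $1$-Lipschitz property of ReLU. Bounding $|\E[X\mathbf 1\{\hat w^\top X>0\}]|$ crudely by $\E|X|$, rather than computing it as $1/\sqrt{2\pi}$, still stays within $2\sqrt d$.

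For (c) your route differs from the paper's, and your justification misses one point. The paper never differentiates the angle formula. It proves once that $h(w)=\E[\phi(X)X\mathbf 1\{w^\top X>0\}]$ is $C^1$ on $w\neq0$ for every continuous $\phi$ with $|\phi(x)|\le C(1+|x|)$. It does this by replacing the gate with a smooth ramp $\vartheta(w^\top X/\epsilon)$ and letting $\epsilon\to0$. The derivative is the hyperplane integral $|w|^{-1}\int_{\hat w^\perp}(v^\top y)\phi(y)y\,\gamma_d(y)\,d\sigma(y)$, bounded by $C_dC|v|/|w|$. Applied with $\phi=\phi_{w'}$ and $C\propto|w'|$, this gives domination uniformly in $w'$. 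So the obstacle that made you abandon your first sketch is not real. What that sketch lacked was a proof that the hyperplane integral is the derivative, and the mollification supplies exactly that.

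Your explicit-formula route also works and is more elementary, but boundedness of the $\theta$-derivative terms near $\theta\in\{0,\pi\}$ does not give $C^1$ there. The angle $\theta=\arccos(\hat w^\top\hat w')$ is not differentiable at $\hat w=\pm\hat w'$, so you need the derivative to have a limit at those points. It does. For a tangent vector $\dot u$ at $\hat w$, set $e:=(\hat w'-\cos\theta\,\hat w)/\sin\theta$. Then the derivative of $\hat w\mapsto\Upsilon(\hat w,w')$ along $\dot u$ is
\[
\frac{|w'|}{2\pi}\bigl[\sin\theta\,(e^\top\dot u)\,e-(1-\sin\theta)\,\dot u\bigr].
\]
This tends to $-\frac{|w'|}{2\pi}\dot u$ as $\sin\theta\to0$, whatever $e$ does. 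Continuity of $\Upsilon$ plus this limit then gives $C^1$ on the whole sphere, with derivative norm at most $|w'|/(2\pi)$.

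You cannot skip this check as a null-set issue. The lemma is stated for arbitrary $\mu\in\mathcal P_1$, and if $\mu$ charges the line $\mathbb R\hat w$ (e.g.\ an empirical measure $\mu_W$), differentiability of $S$ at $\hat w$ hinges on exactly these points. The paper's route has no special directions, since its hyperplane integral is manifestly continuous in $w$ for every $w'$. Yours buys an explicit closed-form derivative.
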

\begin{proof}
\step{Part (a): Regularity of the teacher force.}
By \eqref{mfl:eq:teacherforce}, $g(\hatw)$ is built from $\hatw$ and the functions $\arcsin\rho_i$ and $\sqrt{1-\rho_i^2}$ of
$\rho_i=u_i^\top\hatw$; these are continuous on $[-1,1]$ and $C^\infty$ on $(-1,1)$, and $\abs{\rho_i}=1$ iff $\hatw=\pm u_i$.

\step{Part (b): Dependence on the particle law.}
By the proof of Lemma~\ref{mfl:pc-lem:identity}(a), $\Upsilon(\hatw,w')=\E[\phi_{w'}(X)X\ind{\hatw^\top X>0}]$ with
$\phi_{w'}:=\relu(w'^\top X)-\E[\relu(w'^\top X)]$ (also for $w'=0$). Put $\psi:=\relu(w_1'^\top X)-\relu(w_2'^\top X)$. As $\relu$ is
$1$-Lipschitz, $(\E[\psi^2])^{1/2}\le\abs{w_1'-w_2'}$ and $\abs{\E[\psi]}\le\abs{w_1'-w_2'}$. With $\ind\cdot:=\ind{\hatw^\top X>0}$,
\[
\Upsilon(\hatw,w_1')-\Upsilon(\hatw,w_2')=\E[\psi X\ind\cdot]-\E[\psi]\,\E[X\ind\cdot].
\]
The two terms satisfy
\[
\begin{aligned}
\abs{\E[\psi X\ind\cdot]}&\le(\E[\psi^2])^{1/2}(\E[\abs X^2])^{1/2}\le\sqrt d\abs{w_1'-w_2'},\\
\abs{\E[\psi]}\abs{\E[X\ind\cdot]}&=\abs{\E[\psi]}/\sqrt{2\pi}\le\abs{w_1'-w_2'}.
\end{aligned}
\]
Also
$\abs{\Upsilon(\hatw,w')}\le\frac{\pi+1}{2\pi}\abs{w'}\le\abs{w'}$ from the definition. For the last claim integrate against an optimal
$W_1$ coupling.

\step{Part (c): Differentiate the gated expectation.}
We first claim that if $\phi$ is continuous with $\abs{\phi(x)}\le C(1+\abs x)$, then
$h(w):=\E[\phi(X)X\ind{w^\top X>0}]$ is $C^1$ on $\R^d\setminus\{0\}$, with
\begin{equation}\label{mfl:read-l2-gate-derivative}
\begin{aligned}
\partial_vh(w)&=\abs w^{-1}\int_{\hatw^\perp}(v^\top y)\phi(y)y\,\gamma_d(y)\,d\sigma(y),\\
\abs{\partial_vh(w)}&\le C_dC\abs v/\abs w.
\end{aligned}
\end{equation}
Here $\sigma$ is Lebesgue measure on $\hatw^\perp$, and $\gamma_d$ is the $N(0,I_d)$ density.

To prove the claim, fix a smooth
nondecreasing $\vartheta$ with $\vartheta=0$ on $(-\infty,0]$ and $\vartheta=1$ on $[1,\infty)$, and put
$h_\epsilon(w):=\E[\phi(X)X\vartheta(w^\top X/\epsilon)]$. Differentiating under the expectation gives
\[
\begin{aligned}
\partial_vh_\epsilon(w)&=\int_\R\epsilon^{-1}\vartheta'(\abs wz/\epsilon)G_w(z)\,dz,\\
G_w(z)&:=\int_{\hatw^\perp}\phi(x)x(v^\top x)\gamma_d(x)\,d\sigma(y),\qquad x=z\hatw+y.
\end{aligned}
\]
The function $G_w(z)$ is jointly continuous in $(z,w)$
(dominated convergence), and the weights $\epsilon^{-1}\vartheta'(\abs wz/\epsilon)dz$ have mass $1/\abs w$ on $z\in[0,\epsilon/\abs w]$. So
$\partial_vh_\epsilon\to G_w(0)/\abs w$ and $h_\epsilon\to h$ (the difference is at most $\E[\abs{\phi(X)}\abs X\ind{0<w^\top X<\epsilon}]$),
uniformly on compact subsets of $\R^d\setminus\{0\}$. Hence $h\in C^1$ with the derivative in \eqref{mfl:read-l2-gate-derivative}.

Apply \eqref{mfl:read-l2-gate-derivative} with $\phi=\phi_{w'}$ ($C=2\abs{w'}$):
$\partial_v\Upsilon(w/\abs w,w')$ is continuous in $w\ne0$ and bounded by $2C_d\abs{w'}\abs v/\abs w$,
so for $\mu\in\mathcal P_1$ dominated convergence allows differentiation under $\int d\mu(w')$. With (a) this gives the
statements about $F$; the excluded set is a finite union of rays.
\end{proof}

\begin{lemma}[Absolute continuity along the MF trajectory]\label{mfl:pc-lem:PC3}
Let $\mu\ll\mathrm{Leb}$ and $T(w)=w+b(\hatw)$ ($w\ne0$), $T(0)=0$, with $b$ bounded and of class $C^1$ on an open
$S'\subseteq S^{d-1}$ of full measure. Then $T_\#\mu\ll\mathrm{Leb}$. Consequently, for every $\alpha\ge0$, every $\nu_\alpha(t)$ is
absolutely continuous and lies in $\mathcal P_2$; in particular $\nu_\alpha(t)(\{0\})=0$.
\end{lemma}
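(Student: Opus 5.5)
The plan has two parts: first a change-of-variables statement for a single map $T$, then an induction along the trajectory. For the first part, fix a Lebesgue-null set $N\subset\R^d$; we need $\mu(T^{-1}(N))=0$. Since $T(0)=0$ and $\mu(\{0\})=0$, only $w\ne0$ matters. Work in polar coordinates $w=s\omega$ with $s>0$ and $\omega\in S^{d-1}$. Then $T(s\omega)=s\omega+b(\omega)$. Also discard the $\sigma$-null set $S^{d-1}\setminus S'$: its cone is Lebesgue-null, so it is $\mu$-null.

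On the cone over the open set $S'$, write $\tilde T(s,\omega):=s\omega+b(\omega)$. This map is $C^1$ on $(0,\infty)\times S'$. Its differential in an orthonormal frame $(\omega,e_2,\dots,e_d)$ of $T_\omega S^{d-1}$ is computed as follows.
\begin{itemize}
\item The $\partial_s$ column is $\omega$.
\item The tangential columns are $s e_k+Db(\omega)e_k$.
\end{itemize}
So the Jacobian determinant is $J(s,\omega)=\det\bigl(\omega,\,sI+Db(\omega)\bigr)$. Its leading coefficient in $s$ is $s^{d-1}$, so for fixed $\omega$ it is a nonzero polynomial in $s$ of degree $d-1$. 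Hence the critical set $Z=\{J=0\}$ meets each ray in at most $d-1$ points, and by Fubini in polar coordinates $Z$ is Lebesgue-null. On the complement of $Z$, the inverse function theorem makes $\tilde T$ a local $C^1$ diffeomorphism. Cover that open set by countably many open sets on which $\tilde T$ is a diffeomorphism. On each piece, the preimage of a null set is null, because $C^1$ diffeomorphisms preserve Lebesgue-null sets. Therefore $T^{-1}(N)$ is contained in a null set, so $\mu(T^{-1}(N))=0$ since $\mu\ll\mathrm{Leb}$. Boundedness of $b$ is not needed for absolute continuity; it is used below for moments.

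For the consequence, induct on $t$. The base case holds because $\nu_\alpha(0)$ is Gaussian, hence absolutely continuous and in $\mathcal P_2$. Now suppose $\nu:=\nu_\alpha(t)\ll\mathrm{Leb}$ and $\nu\in\mathcal P_2$. The particle map is $T_\nu(w)=w+\etab\,\Sigma_\nu^{\dagger1/2}F(w;\nu)$. Since $F(w;\nu)$ depends on $w$ only through $\hatw$, we can write $T_\nu(w)=w+b(\hatw)$ with $b(\hatw)=\etab\,\Sigma_\nu^{\dagger1/2}F(\hatw;\nu)$. By Lemma~\ref{mfl:pc-lem:PC1}(c), $b$ is $C^1$ off the finite set $\{\pm u_i\}$, which gives an open $S'$ of full measure. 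Boundedness of $b$ follows from two facts.
\begin{itemize}
\item $g$ is continuous on the compact sphere.
\item Lemma~\ref{mfl:pc-lem:PC1}(b) gives $\abs{S(\hatw;\nu)}\le\int\abs{w'}d\nu<\infty$.
\end{itemize}
So $F(\cdot;\nu)$ is bounded and $\Sigma_\nu^{\dagger1/2}$ is a fixed matrix. At $w=0$ we have $F(0;\nu)=0$, so $T_\nu(0)=0$, matching the convention. The first part then gives $\nu_\alpha(t+1)\ll\mathrm{Leb}$. For moments, $\abs{T_\nu(w)}\le\abs w+\sup\abs b$, so $\nu_\alpha(t+1)\in\mathcal P_2$. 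Finally, $\nu_\alpha(t)(\{0\})=0$ because an absolutely continuous measure has no atoms.

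The main obstacle is the Jacobian step, where we need $J$ to vanish only on a null set. The point is that the $s^{d-1}$ term dominates: a nonzero polynomial in the radial variable has finitely many roots on each ray. I would verify this carefully by writing the determinant in the frame above.
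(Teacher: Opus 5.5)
Your proof is correct and follows the paper's argument essentially step for step: a Jacobian that is a nonzero polynomial in the radial variable and so vanishes at finitely many points per ray, Fubini in polar coordinates to make the critical set null, a countable (Lindel\"of) cover by local $C^1$ diffeomorphisms, and then induction along the trajectory using Lemma~\ref{mfl:pc-lem:PC1} and boundedness of $b$ for second moments. The only difference is cosmetic: the paper writes the Cartesian Jacobian $\det\bigl(I+B(\hat w)/\lvert w\rvert\bigr)$ as a polynomial in $1/\lvert w\rvert$ with constant term $1$, whereas your polar Jacobian $J(s,\omega)=s^{d-1}\det DT(s\omega)$ is a polynomial in $s$ with leading coefficient $\pm1$, and the two have the same zero set for $s>0$.
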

\begin{proof}
\step{Step 1 (Locate the singular set).}
On the open set $\Omega:=\{w\ne0:\hatw\in S'\}$ (complement null),
\begin{equation}\label{mfl:read-l2-radial-jacobian}
DT(w)=I+B(\hatw)/\abs w,\qquad B(\hatw):=Db(\hatw)(I-\hatw\hatw^\top).
\end{equation}
For fixed $\hatw$, $s\mapsto\det(I+sB(\hatw))$ is a polynomial equal to $1$ at $s=0$, so it has finitely many zeros; by polar
coordinates and Fubini applied to \eqref{mfl:read-l2-radial-jacobian}, $Z:=\{w\in\Omega:\det DT=0\}$ is null (and closed in $\Omega$).

\step{Step 2 (Preserve null sets).}
On $\Omega\setminus Z$, $T$ is a local
$C^1$-diffeomorphism; cover it by countably many open sets $U_n$ on which $T$ is a diffeomorphism onto its image (Lindel\"of).
For a null set $N$, each $T^{-1}(N)\cap U_n$ is null. Hence
\[
T^{-1}(N)\subseteq(\Omega^c\cup Z)\cup\bigcup_n(T^{-1}(N)\cap U_n)
\]
is null and $T_\#\mu(N)=0$.

\step{Step 3 (Apply the argument along the trajectory).}
Induct on $t$ from $\nu_0\ll\mathrm{Leb}$: $T_{\nu_t}(w)=w+b(\hatw)$ with
$b=\etab\Sigma_{\nu_t}^{\dagger1/2}F(\cdot;\nu_t)$, which is bounded and $C^1$ on $S^{d-1}\setminus\{\pm u_i\}$ by
Lemma~\ref{mfl:pc-lem:PC1}. Second moments stay finite because $b$ is bounded. No invertibility of $\Sigma$ is needed.
\end{proof}

\subsection{Symmetry}\label{mfl:sec:sym}
\paragraph{Coordinates.} From now on $u_i=e_i$, so $w=(w_U,w_\perp)\in\R^r\times\R^{\Dp}$, and $P_\perp:=I_d-P_U$. We identify
vectors of $U$ with their coordinate vectors in $\R^r$; then $\eb=\one/\sqrt r$ and $\Pc=I_r-\eb\eb^\top$ on $U$. For a random
variable $Z$ (scalar or vector) under a particle law $\nu$, $\norm{Z}:=(\E[\abs{Z}^2])^{1/2}$ is its $L^2(\nu)$ norm.

\paragraph{The symmetry group.} $\mathcal G\subset O(d)$ is the group generated by the permutation matrices of the $r$
coordinates of $U$ (acting as the identity on $U^\perp$) and by $O(\Dp)$ acting on $U^\perp$ (as the identity on $U$). Thus
$\mathcal G\cong S_r\times O(\Dp)$. Every $Q\in\mathcal G$ commutes with $\Pone$, $\Pc$ and $P_\perp$. The law of $X$ is
$\mathcal G$-invariant and the teacher is $\mathcal G$-invariant: $Y(QX)=Y(X)$.

\begin{proposition}[Symmetry and block form of $\Sigma_\nu$]\label{mfl:mf-prop:symmetry}
Let $\alpha\ge0$ and let $F(\cdot;\nu)$, $\Sigma_\nu$ and $T_\nu$ be as in Section~\ref{mfl:pc-sec:def}.
\begin{enumerate}[label=(\alph*)]
\item $g(Q\hatw)=Q\,g(\hatw)$ for every $Q\in\mathcal G$ and every unit vector $\hatw$. If $\nu$ is $\mathcal G$-invariant, then
$F(Qw;\nu)=QF(w;\nu)$ and $Q\Sigma_\nu Q^\top=\Sigma_\nu$ for every $Q\in\mathcal G$.
\item Let $\nu$ be $\mathcal G$-invariant with $\nu(\{0\})=0$. Then, with $F=F(w;\nu)$ and $w\sim\nu$,
\[\Sigma_\nu=a_1\Pone+a_c\Pc+a_\perp P_\perp,\qquad a_1:=\E[(\eb^\top F)^2],\quad a_c:=\frac{\E[\abs{\Pc F}^2]}{r-1},\quad
a_\perp:=\frac{\E[\abs{P_\perp F}^2]}{\Dp}.\]
\item If $\nu$ is $\mathcal G$-invariant, then $T_\nu(Qw)=QT_\nu(w)$ for all $Q\in\mathcal G$ and $w$, and $(T_\nu)_\#\nu$ is
$\mathcal G$-invariant. Consequently every law $\nu_t$ of the MF trajectory \eqref{mfl:eq:mfmap} is $\mathcal G$-invariant, for every
$\alpha\ge0$.
\end{enumerate}
\end{proposition}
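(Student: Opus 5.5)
The plan is to prove the three parts in order, each building on the previous one.

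\emph{Part (a).} First show $g(Q\hatw)=Q\,g(\hatw)$ for the two generators of $\mathcal G$. Use the formula \eqref{mfl:eq:teacherforce}, which expresses $g$ through the overlaps $\rho_i=u_i^\top\hatw$, the vectors $u_i$ and $\hatw$.
\begin{itemize}
\item For $Q=\mathrm{blk\text{-}diag}(I_r,R)$ with $R\in O(\Dp)$: all $\rho_i$ are unchanged and $Qu_i=u_i$, so $g(Q\hatw)=Qg(\hatw)$ is immediate.
\item For a permutation $\pi$ of the teacher coordinates: $u_i^\top Q\hatw=\rho_{\pi^{-1}(i)}$. Reindexing the sum then gives $\sum_i(\cdot)\,u_{\pi(i)}=Q\sum_i(\cdot)\,u_i$.
\end{itemize}
Next, $\Upsilon(Q\hatw,Qw')=Q\,\Upsilon(\hatw,w')$ because angles and norms are $O(d)$-invariant. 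If $\nu$ is $\mathcal G$-invariant, the substitution $w'=Qw''$ gives $S(Q\hatw;\nu)=Q\,S(\hatw;\nu)$, hence $F(Qw;\nu)=QF(w;\nu)$; at $w=0$ both sides vanish. Finally,
\[
Q\Sigma_\nu Q^\top=\int F(Qw)F(Qw)^\top\,d\nu=\Sigma_\nu
\]
by invariance of $\nu$.

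\emph{Part (b).} This is a representation-theory step. A symmetric matrix commuting with all of $\mathcal G$ must respect the decomposition $\R^d=\spn\{\eb\}\oplus(U\cap\eb^\perp)\oplus U^\perp$.
\begin{itemize}
\item $O(\Dp)$ acts irreducibly on $U^\perp$ and trivially on $U$. Commutation with $\mathrm{blk\text{-}diag}(I,-I_{\Dp})$ kills the off-diagonal $U$--$U^\perp$ blocks. Schur's lemma, or directly commutation with all rotations, makes the $U^\perp$ block a scalar multiple of the identity; here $\Dp\ge2$.
\item On $U\cong\R^r$, a matrix commuting with all permutation matrices has the form $aI_r+b\mathbf 1\mathbf 1^\top$. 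This is the standard fact that $S_r$ acts on $\R^r$ as trivial $\oplus$ standard irreducible, with $r\ge2$. Hence the $U$ block equals $a_1\Pone+a_c\Pc$.
\end{itemize}
Identify the coefficients by taking traces against each projector: $\tr(\Pone\Sigma)=\E[(\eb^\top F)^2]$, $\tr(\Pc\Sigma)=\E|\Pc F|^2=(r-1)a_c$, and $\tr(P_\perp\Sigma)=\Dp a_\perp$. The hypothesis $\nu(\{0\})=0$ is not essential for the algebra; it only ensures $F$ is evaluated at nonzero points, so no special convention is needed. I would note this rather than rely on it.

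\emph{Part (c).} Since $Q$ commutes with $\Sigma_\nu$, it also commutes with $\Sigma_\nu^\dagger$ and with its PSD square root $\Sigma_\nu^{\dagger1/2}$. One way to see this: in (b) form, $\Sigma_\nu^{\dagger1/2}=\sum_{a_\bullet>0}a_\bullet^{-1/2}P_\bullet$; alternatively, the pseudoinverse and the square root are conjugation-equivariant. Then
\[
T_\nu(Qw)=Qw+\etab\,\Sigma_\nu^{\dagger1/2}QF(w;\nu)=Q\,T_\nu(w),
\]
and $T_\nu(0)=0$. Pushforward equivariance gives
\[
Q_\#(T_\nu)_\#\nu=(T_\nu)_\#Q_\#\nu=(T_\nu)_\#\nu .
\]
Finally, induct on $t$ starting from $\nu_0=N(0,\sigma_0^2I_d/d)$, which is $O(d)$-invariant. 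Lemma~\ref{mfl:pc-lem:PC3} supplies $\nu_t(\{0\})=0$ at each step, so (b) applies along the trajectory.

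The main obstacle is (b): one must correctly argue that the $\Pone$ and $\Pc$ blocks cannot mix and that the $U^\perp$ block is scalar. I would handle this concretely by commuting with reflections and with transpositions, rather than invoking Schur abstractly.
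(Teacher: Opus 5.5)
Your proposal is correct and follows essentially the paper's own argument. In (a) you check equivariance of $g$ on the two generators of $\mathcal G$ and use the substitution $w'=Qw''$ for the self-force; in (b) you use the reflection $I_r\oplus(-I_{d-r})$, commutation with reflections on $U^\perp$ and with permutations on $U$, and identify the coefficients by traces; in (c) you use the functional calculus for $\Sigma_\nu^{\dagger1/2}$, equivariance of the pushforward, and induction from the $O(d)$-invariant $\nu_0$, with your remark that $\nu(\{0\})=0$ is not needed for the block form also being right since $F(0;\nu):=0$ by definition.
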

\begin{proof}
\step{Part (a): Equivariance of the force.}
Write $g(\hatw)=\cz\big[\sum_i(\pi-\theta_i)u_i-s\,\hatw\big]$ with $\rho_i=u_i^\top\hatw$, $\theta_i=\arccos\rho_i$ and
$s:=\sum_i(1-\sin\theta_i)$ \eqref{mfl:eq:teacherforce}. Let $Q$ be a permutation of the coordinates of $U$, say $Qu_i=u_{\pi(i)}$.
Then $u_{\pi(i)}^\top Q\hatw=u_i^\top\hatw$, so the angles are permuted, $s$ is unchanged, and
$g(Q\hatw)=\cz[\sum_i(\pi-\theta_i)u_{\pi(i)}-sQ\hatw]=Qg(\hatw)$. Let $Q=I_r\oplus O$ with $O\in O(\Dp)$. Then all $\rho_i$ and $s$
are unchanged, $Qu_i=u_i$, and again $g(Q\hatw)=Qg(\hatw)$. These elements generate $\mathcal G$.

For the self-force,
$\Upsilon(Q\hatw,Qw')=Q\Upsilon(\hatw,w')$ because $Q$ preserves angles and norms, so by invariance of $\nu$,
\[
S(Q\hatw;\nu)=\int\Upsilon(Q\hatw,Qw'')\,d\nu(w'')=QS(\hatw;\nu).
\]
Hence $F(Qw;\nu)=QF(w;\nu)$, and invariance of $\nu$ gives
\begin{equation}\label{mfl:read-l2-covariance-symmetry}
Q\Sigma_\nu Q^\top=\E[F(Qw;\nu)F(Qw;\nu)^\top]=\Sigma_\nu.
\end{equation}

\step{Part (b): Identify the three blocks.}
By \eqref{mfl:read-l2-covariance-symmetry}, $\Sigma_\nu$ commutes with $\mathcal G$.
\begin{itemize}
\item \emph{Cross block.} $R:=I_r\oplus(-I_{\Dp})\in\mathcal G$ gives $P_U\Sigma_\nu P_\perp=P_UR\Sigma_\nu RP_\perp=-P_U\Sigma_\nu P_\perp$,
hence $P_U\Sigma_\nu P_\perp=0$.
\item \emph{$\perp$ block.} Its restriction $S_\perp$ to $U^\perp$ commutes with every reflection $I_{\Dp}-2ee^\top$ ($\abs e=1$),
hence with $ee^\top$. So every unit vector $e$ is an eigenvector of $S_\perp$, and $S_\perp=a_\perp I_{\Dp}$ with
$a_\perp=\tr S_\perp/\Dp$.
\item \emph{$U$ block.} Its restriction $S_U$ commutes with every permutation matrix, so $(S_U)_{ij}=(S_U)_{\pi(i)\pi(j)}$ for all
$\pi\in S_r$. Hence all diagonal entries are equal, say to $x$, and (since $r\ge2$) all off-diagonal entries are equal, say
to $y$. So $S_U=(x-y)\Pc+(x-y+ry)\Pone$, and $a_1=\eb^\top S_U\eb$, $a_c=\tr(\Pc S_U)/(r-1)$.
\end{itemize}

\step{Part (c): Propagate the symmetry.}
By (a), $\Sigma_\nu$ commutes with every $Q\in\mathcal G$, hence so do $\Sigma_\nu^\dagger$ and $\Sigma_\nu^{\dagger1/2}$ (functions of
$\Sigma_\nu$). Thus for $w\ne0$,
\[
T_\nu(Qw)=Qw+\etab\Sigma_\nu^{\dagger1/2}QF(w;\nu)=QT_\nu(w),
\]
and $T_\nu(0)=0$. For a Borel set $A$, equivariance gives $T_\nu^{-1}(QA)=QT_\nu^{-1}(A)$, hence
\[
((T_\nu)_\#\nu)(QA)=\nu(QT_\nu^{-1}(A))=((T_\nu)_\#\nu)(A).
\]
Since
$\nu_0$ is $O(d)$-invariant and $\nu_t(\{0\})=0$ for every $t$ (Lemma~\ref{mfl:pc-lem:PC3}), induction on $t$ concludes.
\end{proof}

\subsection{The reduced dynamics at \texorpdfstring{$\alpha=0$}{alpha = 0}}\label{mfl:sec:mf}
\paragraph{Conventions for Sections~\ref{mfl:sec:mf}--\ref{mfl:sec:agop} and Sections~\ref{mfl:sec:rb}--\ref{mfl:sec:eval}.}
\begin{itemize}
\item \emph{$\alpha=0$.} With the refitted intercept and $\alpha=0$ the residual is $Y-\E[Y]$, and the force on a particle with
direction $\hatw$ is the teacher force $g(\hatw)$ of \eqref{mfl:eq:teacherforce}. Small $\alpha>0$ is reached by continuity
(Section~\ref{mfl:pc-sec:cont} and Section~\ref{mfl:sec:main}).
\item \emph{Units $\etab=1$.} By Lemma~\ref{mfl:pc-lem:scaling} (at $\alpha=0$ the head plays no role), the trajectory with step
$\etab$ is the image under $w\mapsto\etab w$ of the trajectory with step $1$ from $N(0,\kappa^2I_d)$, $\kappa$ as in
\eqref{mfl:eq:kappa}. We work with the latter, that is, we write $w$ for $w/\etab$; then $\nu_0=N(0,\kappa^2I_d)$.
\end{itemize}

\begin{definition}[MF dynamics at $\alpha=0$]\label{mfl:mf-def:mf}
For a law $\nu$ on $\R^d$ with $\nu(\{0\})=0$ put $\Sigma_\nu:=\E_{w\sim\nu}[g(\hatw)g(\hatw)^\top]$ and
$T_\nu(w):=w+\Sigma_\nu^{\dagger1/2}g(\hatw)$ ($w\ne0$), $T_\nu(0):=0$. The $\alpha=0$ trajectory is $\nu_0=N(0,\kappa^2I_d)$,
$\nu_{t+1}:=(T_{\nu_t})_\#\nu_t$. This is the map $\Phi$ of Section~\ref{mfl:pc-sec:def} at $\alpha=0$ in units $\etab=1$. By
Lemma~\ref{mfl:pc-lem:PC4} below, $\Sigma_{\nu_t}\succ0$ for every $t$, so $\Sigma_{\nu_t}^{\dagger1/2}=\Sigma_{\nu_t}^{-1/2}$. The object of (F) is
\[\mathcal M(\nu):=\E_{w,w'}[H(w,w')\,ww'^\top],\qquad H(w,w'):=\frac{\pi-\angle(w,w')}{2\pi},\]
with $w,w'\sim\nu$ independent. At $\alpha=0$, $\mathcal M$ is a surrogate: the actual network AGOP is zero.
For $\alpha>0$, the actual AGOP is $\alpha^2\etab^2\mathcal M$, evaluated at the corresponding positive-head law in unit-step coordinates; the positive factor preserves eigenspaces and eigenvalue ratios.
\end{definition}

\begin{definition}[Reduced state]\label{mfl:mf-def:reduced}
Let $\nu$ be $\mathcal G$-invariant with $\nu(\{0\})=0$. A \emph{reduced representation} of $w\sim\nu$ is a triple
$(\omega,\zeta,n)\in\R^r\times\R\times S^{\Dp-1}$ such that
\[w_U=\omega,\qquad w_\perp=\sqrt{\Dp}\,\zeta\,n,\]
with $n$ uniform on $S^{\Dp-1}$ and independent of $(\omega,\zeta)$. The sign of $\zeta$ is part of the state. The
\emph{reduced law} is the law of $(\omega,\zeta)$. For each particle we write
\[\abs w=(\abs\omega^2+\Dp\zeta^2)^{1/2},\quad \rho:=\omega/\abs w=\hatw_U,\quad \rho_\infty:=\max_i\abs{\rho_i},\quad
\omega_1:=\eb^\top\omega,\quad \xi:=\Pc\omega,\]
\[s:=s(\rho)=\sum_{i=1}^r\big(1-\sqrt{1-\rho_i^2}\big),\qquad b:=s/\abs w,\]
\[f(\rho):=\sum_{i=1}^r\Big(\frac\pi2+\arcsin\rho_i\Big)-s\sum_{i=1}^r\rho_i,\qquad h(\rho):=\Pc\big(\arcsin\rho-s\rho\big),\]
where $\arcsin$ acts coordinatewise. The \emph{normalizations} of $\nu$ are the $L^2(\nu)$ quantities
\[\norm f,\qquad \bar a:=\frac{\E[\abs h^2]}{r-1},\qquad N:=\norm{b\zeta}=\Big(\frac{\E[s^2(1-\abs\rho^2)]}{\Dp}\Big)^{1/2},\]
and its \emph{contrast scale} is $\sigma_c:=(\E[\abs\xi^2]/(r-1))^{1/2}$.
\end{definition}
The identity $N=\norm{b\zeta}=(\E[s^2(1-\abs\rho^2)]/\Dp)^{1/2}$ holds because $\abs w^2(1-\abs\rho^2)=\Dp\zeta^2$.

\begin{proposition}[Exact reduced dynamics]\label{mfl:mf-prop:updates}
Let $\nu$ be $\mathcal G$-invariant with $\nu(\{0\})=0$. A reduced representation exists: take
$\zeta:=\abs{w_\perp}/\sqrt{\Dp}$ and $n:=w_\perp/\abs{w_\perp}$ (an independent uniform vector if $w_\perp=0$). The existence claim
uses only the invariance of $\nu$; in (a)--(c), $\alpha=0$.
\begin{enumerate}[label=(\alph*)]
\item Per particle, $g_U=\cz\big(\frac\pi2+\arcsin\rho_i-s\rho_i\big)_{i\le r}$, $g_\perp=-\cz s\,\hatw_\perp$, and
$s\in[\abs\rho^2/2,\abs\rho^2]$. Moreover $\eb^\top g=\cz f/\sqrt r$, $\Pc g=\cz h$ and $g_\perp=-\cz\sqrt{\Dp}\,b\zeta\,n$. Hence
$a_1=\cz^2\norm f^2/r$, $a_c=\cz^2\bar a$, $a_\perp=\cz^2N^2$, and $\Sigma_\nu\succ0$ iff $\bar a>0$ and $N>0$.
\item Let $\Sigma_\nu\succ0$. Then $T_\nu(w)=(\omega^+,\sqrt{\Dp}\,\zeta^+n)$ with the \emph{same} $n$, where
\[\omega_1^+=\omega_1+\delta\omega_1,\ \ \delta\omega_1:=\frac{f(\rho)}{\norm f};\qquad
\xi^+=\xi+\delta\xi,\ \ \delta\xi:=\frac{h(\rho)}{\sqrt{\bar a}};\]
\[\zeta^+=\zeta+\delta\zeta=\zeta\Big(1-\frac bN\Big),\qquad\delta\zeta:=-\frac{b\zeta}N.\]
The constant $\cz$ cancels. Consequently $(\omega^+,\zeta^+,n)$ is a reduced representation of $(T_\nu)_\#\nu$. The reduced law
evolves autonomously, by a map that depends on the law only through $\norm f$, $\bar a$ and $N$.
\item (Whitening.) $\E[\delta\omega\,\delta\omega^\top]=I_r$, i.e.\ $\E[\delta\omega_1^2]=1$, $\E[\abs{\delta\xi}^2]=r-1$ and
$\E[\delta\omega_1\delta\xi]=0$. Also $\E[(\delta\zeta)^2]=1$.
\end{enumerate}
\end{proposition}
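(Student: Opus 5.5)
The proof is a direct computation in coordinates, and we carry it out in the order (existence), (a), (b), (c).

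\emph{Existence.} Conditional on $w_U$ and $\abs{w_\perp}$, the $O(\Dp)$-invariance of $\nu$ makes $w_\perp/\abs{w_\perp}$ uniform on $S^{\Dp-1}$ and independent of $(w_U,\abs{w_\perp})$. To see this, compose with an independent Haar rotation acting on $U^\perp$: it leaves the law unchanged and randomises the direction. On the event $w_\perp=0$ we adjoin an independent uniform $n$. Setting $\zeta=\abs{w_\perp}/\sqrt{\Dp}$ then gives a valid reduced representation.

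\emph{Part (a).} Substitute $u_i=e_i$ into the second line of \eqref{mfl:eq:teacherforce}. Since $\hatw_U=\rho$, this gives $g_U=\cz(\frac\pi2+\arcsin\rho_i-s\rho_i)_i$ and $g_\perp=-\cz s\hatw_\perp$.

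For the bounds on $s$, use $1-\sqrt{1-x}\in[x/2,x]$ for $x\in[0,1]$ and sum over $i$; this gives $s\in[\abs\rho^2/2,\abs\rho^2]$.

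Projecting $g_U$ gives $\eb^\top g=\cz f/\sqrt r$ and $\Pc g=\cz\Pc(\arcsin\rho-s\rho)=\cz h$. Here the constant $\pi/2$ vector is parallel to $\eb$, so $\Pc$ kills it. Next, $\hatw_\perp=\sqrt{\Dp}\zeta n/\abs w$, so $g_\perp=-\cz\sqrt{\Dp}b\zeta n$.

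Now plug these into the block coefficients of Proposition~\ref{mfl:mf-prop:symmetry}(b). Using $\abs n=1$, they read $a_1=\cz^2\norm f^2/r$, $a_c=\cz^2\E\abs h^2/(r-1)=\cz^2\bar a$ and $a_\perp=\cz^2\E[\Dp b^2\zeta^2]/\Dp=\cz^2N^2$. Two remaining points:
\begin{itemize}
\item $\norm f>0$ always holds, because $\frac\pi2+\arcsin\rho_i-s\rho_i\ge0$ with a strictly positive sum away from a null set. Hence $\Sigma_\nu\succ0$ iff $\bar a>0$ and $N>0$.
\item The identity $N^2=\E[s^2(1-\abs\rho^2)]/\Dp$ is the remark after Definition~\ref{mfl:mf-def:reduced}.
\end{itemize}

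\emph{Part (b).} With $\Sigma_\nu\succ0$ we have $\Sigma_\nu^{-1/2}=a_1^{-1/2}\Pone+a_c^{-1/2}\Pc+a_\perp^{-1/2}P_\perp$. Applying this to $g$ blockwise:
\begin{itemize}
\item the $\eb$-component increases by $(\cz f/\sqrt r)\cdot\sqrt r/(\cz\norm f)\cdot\frac{1}{\sqrt r}\cdot\sqrt r$, which simplifies to $f/\norm f$ in the $\omega_1=\eb^\top\omega$ coordinate;
\item the contrast part increases by $h/\sqrt{\bar a}$;
\item the $\perp$ part increases by $-\sqrt{\Dp}(b\zeta/N)n$.
\end{itemize}
Since $w_\perp^+=\sqrt{\Dp}\zeta(1-b/N)n$ with the same $n$, the triple $(\omega^+,\zeta^+,n)$ is a reduced representation: $n$ is still uniform and independent, because $(\omega^+,\zeta^+)$ is a function of $(\omega,\zeta)$ alone. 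Note that $\rho$ and $b$ depend only on $(\omega,\zeta)$, because $\abs w^2=\abs\omega^2+\Dp\zeta^2$. Hence the reduced update is autonomous, with the law entering only through $(\norm f,\bar a,N)$. The only subtle point is that $\zeta$ may change sign; this is why the sign of $\zeta$ is kept as part of the state.

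\emph{Part (c).} Whitening is immediate from the definitions of the normalizations: $\E\delta\omega_1^2=\E f^2/\norm f^2=1$, $\E\abs{\delta\xi}^2=\E\abs h^2/\bar a=r-1$ and $\E(\delta\zeta)^2=\E b^2\zeta^2/N^2=1$. The cross term $\E[\delta\omega_1\delta\xi]=0$ follows from the $S_r$-symmetry. Under a coordinate permutation, $f$ is invariant and $h$ permutes, so $\E[fh]$ is a permutation-invariant vector lying in the range of $\Pc$; such a vector must be zero.

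The only step with any real content is the independence claim in the existence part and its preservation in (b); everything else is bookkeeping.
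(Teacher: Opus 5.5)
Your proof follows the paper's argument essentially step for step: conditional $O(D)$-invariance of $n$ for existence, reading $g$ blockwise against Proposition~\ref{mfl:mf-prop:symmetry}(b), applying $\Sigma_\nu^{-1/2}$ block by block, and whitening from the normalizations. It is correct apart from one omitted step in (c).

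\textbf{The gap in (c).} The facts you check fix the $\bar e\bar e^\top$ block and the cross blocks of $\E[\delta\omega\,\delta\omega^\top]$. For the contrast block, however, they fix only its \emph{trace}. The identity $\E[\delta\omega\,\delta\omega^\top]=I_r$ also needs $\E[\delta\xi\,\delta\xi^\top]=P_c$.

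\begin{itemize}
\item The paper gets this from the block form: $\E[\delta\xi\,\delta\xi^\top]=P_c\Sigma_\nu P_c/a_c=P_c$.
\item With your own tools it is one line. For every coordinate permutation $Q$ you have $h(Q\rho)=Q\,h(\rho)$, and the law of $\rho$ is $S_r$-invariant. Hence $\E[hh^\top]$ commutes with every permutation. It also satisfies $P_c\,\E[hh^\top]\,P_c=\E[hh^\top]$. So $\E[\delta\xi\,\delta\xi^\top]=\E[hh^\top]/\bar a$ is a multiple of $P_c$ with trace $r-1$, i.e.\ it equals $P_c$.
\end{itemize}

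\textbf{The cross term.} Your argument ($\E[fh]$ is an $S_r$-invariant vector in $\operatorname{range}P_c$, hence zero) applies the symmetry directly to $f$ and $h$. The paper instead uses $P_c\Sigma_\nu\bar e=a_1P_c\bar e=0$. The two routes are equivalent.

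\textbf{A minor precision in (a).} You should not invoke a null-set exception when showing $f>0$, since $\nu$ is not assumed absolutely continuous. None is needed: every summand of $f$ is strictly positive, and Lemma~\ref{mfl:mf-lem:signs}(i) gives $f\ge\frac\pi2(r-\sqrt r)>0$ everywhere.
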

\begin{proof}
First establish the reduced representation. Let $Q=I_r\oplus O$ with $O\in O(\Dp)$. Invariance of $\nu$ gives $(\omega,\zeta,On)\overset d=(\omega,\zeta,n)$ on
$\{w_\perp\ne0\}$; on $\{w_\perp=0\}$ this holds because the auxiliary vector is uniform and independent. So the conditional
law of $n$ given $(\omega,\zeta)$ is $O(\Dp)$-invariant, hence uniform.

\step{Part (a): Resolve the force into its blocks.}
The coordinate formula for $g_U$ is \eqref{mfl:eq:teacherforce} with $\hatw_U=\rho$, and $g_\perp=-\cz s\hatw_\perp$. For
$x\in[-1,1]$, $1-\sqrt{1-x^2}=x^2/(1+\sqrt{1-x^2})\in[x^2/2,x^2]$; summing gives the bounds on $s$. Next,
\[
\eb^\top g=\cz r^{-1/2}\sum_i(\frac\pi2+\arcsin\rho_i-s\rho_i)=\cz f/\sqrt r,
\]
and $\Pc g=\cz h$ because $\Pc$ annihilates the
constant vector $\frac\pi2\one$. Also $\hatw_\perp=\sqrt{\Dp}\zeta n/\abs w$, so $g_\perp=-\cz\sqrt{\Dp}(s/\abs w)\zeta n=-\cz\sqrt{\Dp}b\zeta n$.
The formulas for $a_1,a_c,a_\perp$ follow from Proposition~\ref{mfl:mf-prop:symmetry}(b) and $\abs{g_\perp}^2=\cz^2\Dp b^2\zeta^2$. Finally
$a_1>0$ always, because $f>0$ (Lemma~\ref{mfl:mf-lem:signs}(i)).

\step{Part (b): Read off the reduced update.}
By Proposition~\ref{mfl:mf-prop:symmetry}(b),
\[
T_\nu(w)=w+a_1^{-1/2}(\eb^\top g)\eb+a_c^{-1/2}\Pc g+a_\perp^{-1/2}g_\perp.
\]
With (a),
\[
\delta\omega_1=a_1^{-1/2}\cz f/\sqrt r=f/\norm f,
\qquad \delta\xi=a_c^{-1/2}\cz h=h/\sqrt{\bar a}.
\]
The $\perp$ part is
\[
\sqrt{\Dp}\zeta n-a_\perp^{-1/2}\cz\sqrt{\Dp}b\zeta n=\sqrt{\Dp}\,n\,\zeta(1-b/N).
\]
The new triple has the same $n$, which is still uniform
and independent of $(\omega^+,\zeta^+)$, because the latter is a function of $(\omega,\zeta)$ and of the three normalizations.

\step{Part (c): Verify whitening.}
The scalar normalizations give
\[
\E[\delta\omega_1^2]=\E[f^2]/\norm f^2=1,
\qquad \E[\abs{\delta\xi}^2]=\E[\abs h^2]/\bar a=r-1.
\]
Next, $\E[(\eb^\top g)\Pc g]=\Pc\Sigma_\nu\eb=a_1\Pc\eb=0$, so $\E[\delta\omega_1\delta\xi]=0$.
Also $\E[\delta\xi\delta\xi^\top]=\Pc\Sigma_\nu\Pc/a_c=\Pc$, so
\[
\E[\delta\omega\delta\omega^\top]=\Pone+\Pc=I_r.
\]
Finally $\E[(\delta\zeta)^2]=\E[b^2\zeta^2]/N^2=1$.

\end{proof}
At $\alpha=0$ the reduction applies from $t=0$; the initial reduced law is given in Section~\ref{mfl:rb-sec:setup}.

\begin{lemma}[Per-particle signs]\label{mfl:mf-lem:signs}
Let $B_r:=\{\rho\in\R^r:\abs\rho\le1\}$, $f_{\min}:=\min_{B_r}f$, $f_{\max}:=\max_{B_r}f$ and $c_1:=f_{\min}/f_{\max}$. For every
$\rho\in B_r$:
\begin{enumerate}[label=(\roman*)]
\item $\frac\pi2(r-\sqrt r)\le f_{\min}\le f(\rho)\le f_{\max}\le\frac\pi2(r+\sqrt r)$. Hence $f_{\min}>0$, $c_1\ge(\sqrt r-1)/(\sqrt r+1)$,
and in every step of Proposition~\ref{mfl:mf-prop:updates}(b), $\delta\omega_1=f/\norm f\in[c_1,1/c_1]$ for every particle.
\item With $\bar\rho:=r^{-1}\sum_i\rho_i$, for every particle
\[\xi\cdot\Pc g_U=\cz\abs w\Big[\sum_{i=1}^r(\rho_i-\bar\rho)(\arcsin\rho_i-\rho_i)+(1-s)\abs{\Pc\rho}^2\Big]\ \ge\ \cz(1-s)\frac{\abs\xi^2}{\abs w}\ \ge0.\]
Equivalently $\xi\cdot h\ge(1-s)\abs\xi^2/\abs w\ge0$.
\item $0\le s\le\abs\rho^2\le1$, hence $b\in[0,1/\abs w]$.
\end{enumerate}
\end{lemma}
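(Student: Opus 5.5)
The three parts are elementary inequalities about the functions $f$, $h$ and $s$ of Definition~\ref{mfl:mf-def:reduced}. I would prove (iii) first, since (i) and (ii) use it.

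\emph{Part (iii).} For $\rho\in B_r$, each $\abs{\rho_i}\le\abs\rho\le1$. The scalar bound $1-\sqrt{1-x^2}=x^2/(1+\sqrt{1-x^2})\in[0,x^2]$, already used in Proposition~\ref{mfl:mf-prop:updates}(a), summed over $i$ gives $0\le s\le\abs\rho^2\le1$. Dividing by $\abs w$ gives $b\in[0,1/\abs w]$.

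\emph{Part (i).} Write $f(\rho)=\frac\pi2 r+\sum_i\arcsin\rho_i-s\sum_i\rho_i$. I would bound the deviation $\bigl|\sum_i(\arcsin\rho_i-s\rho_i)\bigr|$ by $\frac\pi2\sqrt r$.
\begin{itemize}
\item The function $\phi(x)=\arcsin x-s x$ is odd, with $\phi'(x)=1/\sqrt{1-x^2}-s\ge1-s\ge0$.
\item Its slope satisfies $\abs{\phi(x)}\le\arcsin\abs x\le\frac\pi2\abs x$, since $\arcsin$ is convex on $[0,1]$ with $\arcsin1=\pi/2$.
\item Cauchy--Schwarz then gives $\sum_i\abs{\phi(\rho_i)}\le\frac\pi2\sum_i\abs{\rho_i}\le\frac\pi2\sqrt r\abs\rho\le\frac\pi2\sqrt r$.
\end{itemize}
This yields both envelope bounds, hence $f_{\min}>0$ and $c_1\ge(\sqrt r-1)/(\sqrt r+1)$.

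For $\delta\omega_1$, note that $\norm f=(\E[f^2])^{1/2}$ lies in $[f_{\min},f_{\max}]$ because $\rho\in B_r$ almost surely. So $f/\norm f\in[f_{\min}/f_{\max},f_{\max}/f_{\min}]=[c_1,1/c_1]$.

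\emph{Part (ii).} I would compute directly. Since $\xi=\Pc\omega=\abs w\,\Pc\rho$ and $\Pc$ is a symmetric projector, $\xi\cdot\Pc g_U=\cz\abs w\,\Pc\rho\cdot(\arcsin\rho-s\rho)$, because the constant vector $\frac\pi2\one$ is killed by $\Pc$. Two substitutions finish the identity:
\begin{itemize}
\item $\Pc\rho\cdot\arcsin\rho=\sum_i(\rho_i-\bar\rho)\arcsin\rho_i$;
\item $\Pc\rho\cdot\rho=\abs{\Pc\rho}^2$, which lets me split $\arcsin\rho_i=(\arcsin\rho_i-\rho_i)+\rho_i$.
\end{itemize}
This gives the displayed identity.

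For the inequality, I need $\sum_i(\rho_i-\bar\rho)(\arcsin\rho_i-\rho_i)\ge0$. The map $x\mapsto\arcsin x-x$ is nondecreasing, so the summand equals $(\rho_i-\bar\rho)\bigl(\psi(\rho_i)-\psi(\bar\rho)\bigr)\ge0$ after subtracting the zero term $\psi(\bar\rho)\sum_i(\rho_i-\bar\rho)$; this is Chebyshev's sum inequality. With $1-s\ge0$ from (iii) and $\abs{\Pc\rho}^2=\abs\xi^2/\abs w^2$, the lower bound $\cz(1-s)\abs\xi^2/\abs w$ follows. The $h$ version is the same statement divided by $\cz$.

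The only delicate step is the uniform $\frac\pi2\sqrt r$ bound in (i), and specifically the claim $\abs{\arcsin x-sx}\le\arcsin\abs x$. It holds because $0\le s\abs x\le\arcsin\abs x$, so subtracting $s\abs x$ cannot push the value below $-\arcsin\abs x$. I would check this with care.
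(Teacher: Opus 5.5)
Your proof is correct and follows essentially the same route as the paper: for (i) the odd-function bound $|\arcsin x - sx|\le\frac{\pi}{2}|x|$ (using $0\le s\le 1$) combined with Cauchy--Schwarz; for (ii) the direct expansion of $P_c\rho\cdot(\arcsin\rho-s\rho)$ followed by Chebyshev's sum inequality; for (iii) the scalar bound $1-\sqrt{1-x^2}\in[0,x^2]$. The differences are cosmetic: you prove (iii) first, whereas the paper simply cites it inside (i) and (ii), and you prove Chebyshev's inequality by subtracting $\psi(\bar\rho)\sum_i(\rho_i-\bar\rho)=0$ instead of using the paper's double-sum identity.
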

\begin{proof}
\step{Part (i): Bounds on the mean force.}
The function $f$ is continuous on the compact set $B_r$, so $f_{\min}$ and $f_{\max}$ exist. Fix $s\in[0,1]$ and put $\psi_s(x):=\arcsin x-sx$, an
odd function on $[-1,1]$. For $x\in[0,1]$, $x\le\arcsin x\le\frac\pi2x$: the first inequality holds because $\arcsin x-x$ is
nondecreasing and vanishes at $0$, the second by convexity of $\arcsin$ on $[0,1]$. Hence $0\le(1-s)x\le\psi_s(x)\le\frac\pi2x$
there, and by oddness $\abs{\psi_s(x)}\le\frac\pi2\abs x$ on $[-1,1]$. Since $f(\rho)=\frac{r\pi}2+\sum_i\psi_{s(\rho)}(\rho_i)$ and
$s(\rho)\in[0,1]$ by (iii),
\[\Big|f(\rho)-\frac{r\pi}2\Big|\le\frac\pi2\sum_i\abs{\rho_i}\le\frac\pi2\sqrt r\abs\rho\le\frac\pi2\sqrt r.\]
This gives the bounds, $f_{\min}\ge\frac\pi2(r-\sqrt r)>0$ for $r\ge2$, and $c_1\ge(\sqrt r-1)/(\sqrt r+1)$. Since
$f_{\min}\le\norm f\le f_{\max}$, we get $\delta\omega_1\in[f_{\min}/f_{\max},f_{\max}/f_{\min}]=[c_1,1/c_1]$.

\step{Part (ii): Sign of the contrast force.}
We have $\xi=\abs w\Pc\rho$ and $\Pc g_U=\cz h=\cz\Pc(\arcsin\rho-s\rho)$. Since $\Pc$ is an orthogonal projector,
\[\xi\cdot\Pc g_U=\cz\abs w\big[\Pc\rho\cdot\arcsin\rho-s\abs{\Pc\rho}^2\big],\qquad
\Pc\rho\cdot\arcsin\rho=\sum_i(\rho_i-\bar\rho)(\arcsin\rho_i-\rho_i)+\abs{\Pc\rho}^2.\]
This is the identity. The sum is nonnegative by Chebyshev's sum inequality: for any reals $x_i,y_i$ with
$\bar x:=r^{-1}\sum_ix_i$,
\begin{equation}\label{mfl:read-l2-chebyshev-sum}
\sum_i(x_i-\bar x)y_i=\frac1{2r}\sum_{i,j}(x_i-x_j)(y_i-y_j).
\end{equation}
In \eqref{mfl:read-l2-chebyshev-sum}, take $x_i=\rho_i$, $y_i=\arcsin\rho_i-\rho_i$. Every
term is $\ge0$, because $x\mapsto\arcsin x-x$ is nondecreasing. Finally $1-s\ge0$ by (iii), and $\abs{\Pc\rho}^2=\abs\xi^2/\abs w^2$.

\step{Part (iii): Bounds on the radial coefficient.}
Proposition~\ref{mfl:mf-prop:updates}(a) gives $s\in[\abs\rho^2/2,\abs\rho^2]$, and $\abs\rho=\abs{\hatw_U}\le1$.
\end{proof}

\begin{lemma}[Nondegenerate whitening at $\alpha=0$]\label{mfl:pc-lem:PC4}
For $\alpha=0$, every step $\etab>0$, every $\sigma_0>0$ and every $t\ge0$, $\nu_t$ is invariant under the group $\mathcal G\cong S_r\times O(\Dp)$ of
Proposition~\ref{mfl:mf-prop:symmetry}, absolutely continuous, and $\Sigma_{\nu_t}\succ0$.
\end{lemma}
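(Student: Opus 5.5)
The proof runs by induction on $t$, carrying three properties together: $\mathcal G$-invariance, absolute continuity, and $\Sigma_{\nu_t}\succ0$.

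\emph{Reduction to unit step.} By Lemma~\ref{mfl:pc-lem:scaling}, at $\alpha=0$ the trajectory with step $\etab$ from $N(0,\sigma_0^2I_d/d)$ is the dilation by $\etab$ of the unit-step trajectory from $N(0,\kappa^2I_d)$. The whitening matrices $\Sigma$ agree, and invariance and absolute continuity are preserved under dilation. So it suffices to treat the dynamics of Definition~\ref{mfl:mf-def:mf}.

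\emph{Base case.} The law $\nu_0$ is Gaussian and isotropic, so it is $O(d)$-invariant (hence $\mathcal G$-invariant), absolutely continuous, and $\nu_0(\{0\})=0$.

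\emph{Inductive step.} Suppose $\nu_t$ has all three properties. Proposition~\ref{mfl:mf-prop:symmetry}(c) gives $\mathcal G$-invariance of $\nu_{t+1}$; it requires only invariance of $\nu_t$ and uses the pseudo-inverse, so nondegeneracy is not needed there. Lemma~\ref{mfl:pc-lem:PC3} gives absolute continuity of $\nu_{t+1}$, since $T_{\nu_t}(w)=w+b(\hatw)$ with $b=\Sigma^{\dagger1/2}g$ bounded and $C^1$ away from $\{\pm u_i\}$ by Lemma~\ref{mfl:pc-lem:PC1}(a). In particular $\nu_{t+1}(\{0\})=0$.

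\emph{Positive definiteness.} Proposition~\ref{mfl:mf-prop:updates}(a) reduces $\Sigma_{\nu}\succ0$ to $\bar a>0$ and $N>0$; the mean block $a_1$ is positive because $f\ge f_{\min}>0$ by Lemma~\ref{mfl:mf-lem:signs}(i). It therefore remains to show that, for an absolutely continuous $\mathcal G$-invariant $\nu$, both $\E[\abs h^2]>0$ and $\E[s^2(1-\abs\rho^2)]>0$.

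For $N>0$: absolute continuity (with $d>r$) gives $\nu(w_U=0)=0$ and $\nu(w_\perp=0)=0$, since both are proper linear subspaces and hence null. On the complement of these two null sets, $\abs\rho\in(0,1)$, so $s\ge\abs\rho^2/2>0$ and $1-\abs\rho^2>0$. The integrand is therefore positive on a set of full measure, and $N>0$.

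For $\bar a>0$: I use Lemma~\ref{mfl:mf-lem:signs}(ii), which gives $\xi\cdot h\ge(1-s)\abs\xi^2/\abs w$. Since $s\le\abs\rho^2<1$ almost surely and $\xi=\Pc w_U\neq0$ off the null subspace $\{\Pc w_U=0\}$ (a subspace of codimension $r-1\ge1$), we get $\xi\cdot h>0$ almost surely. Hence $h\neq0$ almost surely, so $\E[\abs h^2]>0$.

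\emph{Expected obstacle.} The main point to check is that none of the null-set arguments secretly depends on $\Sigma\succ0$ at the current step, which would make the induction circular. The argument avoids this because Proposition~\ref{mfl:mf-prop:symmetry}(c) and Lemma~\ref{mfl:pc-lem:PC3} are both stated with the Moore--Penrose inverse and do not require invertibility. Once $\Sigma_{\nu_t}\succ0$ is known, $\Sigma^{\dagger1/2}=\Sigma^{-1/2}$, which confirms the consistency of Definition~\ref{mfl:mf-def:mf}.
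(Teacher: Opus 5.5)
Your proposal is correct and follows the paper's proof essentially step for step. You induct on $t$, carrying $\mathcal G$-invariance and absolute continuity via Proposition~\ref{mfl:mf-prop:symmetry}(c) and Lemma~\ref{mfl:pc-lem:PC3}, and you show the three blocks of $\Sigma_{\nu_t}$ are positive. The mean block uses Lemma~\ref{mfl:mf-lem:signs}(i), the perpendicular block uses $s\ge|\rho|^2/2$ off the null set $\{w_U=0\}\cup\{w_\perp=0\}$, and the contrast block uses Lemma~\ref{mfl:mf-lem:signs}(ii).

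The differences are minor. Your reduction to unit step is valid but unnecessary, since the inductive step works for any absolutely continuous $\mathcal G$-invariant law. You also get $s<1$ almost surely from $s\le|\rho|^2<1$ when $w_\perp\neq0$, which is slightly simpler than the paper's strict-convexity argument that $s<1$ unless $\hat w=\pm u_i$.
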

\begin{proof}
Induction on $t$; $\nu_0$ is $O(d)$-invariant and absolutely continuous. Let $\nu:=\nu_t$ be $\mathcal G$-invariant and
absolutely continuous. At $\alpha=0$, $F=g(\hatw)$, and Proposition~\ref{mfl:mf-prop:symmetry}(b) gives
$\Sigma_\nu=a_1\Pone+a_c\Pc+a_\perp P_\perp$ with $a_1=\E[(\eb^\top g)^2]$, $a_c=\E[\abs{\Pc g}^2]/(r-1)$ and $a_\perp=\E[\abs{P_\perp g}^2]/\Dp$.

\step{Step 1 (The mean block is positive).}
We have $\eb^\top g=\cz f(\rho)/\sqrt r\ge\cz f_{\min}/\sqrt r>0$ everywhere
(Lemma~\ref{mfl:mf-lem:signs}(i)), so $a_1>0$.

\step{Step 2 (The perpendicular block is positive).}
The force $P_\perp g=-\cz sP_\perp\hatw$ with $s\ge\abs\rho^2/2$ is nonzero unless $w_U=0$ or $w_\perp=0$, a null set;
so $a_\perp>0$.

\step{Step 3 (The contrast block is positive).}
By Lemma~\ref{mfl:mf-lem:signs}(ii), $\Pc\rho\cdot\Pc g_U\ge\cz(1-s)\abs{\Pc\rho}^2$, and $s<1$ unless $\hatw=\pm u_i$: since
$x\mapsto1-\sqrt{1-x}$ is strictly convex and vanishes at $0$,
\[
s\le1-\sqrt{1-\abs\rho^2}\le1,
\]
with equality in both only if
$\abs\rho=1$ and at most one $\rho_i\ne0$. So $\Pc g\ne0$ off the null set $\{\Pc w_U=0\}\cup\{\hatw=\pm u_i\}$, and $a_c>0$.

The three positive blocks give $\Sigma_\nu\succ0$; $\nu_{t+1}$ is $\mathcal G$-invariant (Proposition~\ref{mfl:mf-prop:symmetry}(c)) and absolutely continuous
(Lemma~\ref{mfl:pc-lem:PC3}).
\end{proof}

\subsection{The MF AGOP and the criterion for (F)}\label{mfl:sec:agop}
For a $\mathcal G$-invariant $\nu$ with $\nu(\{0\})=0$ and $\E[\abs w^2]<\infty$ put
\[\mathsf v(X):=\E_{w\sim\nu}\big[\ind{w^\top X>0}\,w\big],\qquad
\mathcal M(\nu)=\E_X\big[\mathsf v(X)\mathsf v(X)^\top\big]=\E_{w,w'}\big[H(w,w')\,ww'^\top\big],\]
where the second equality is Fubini with independent $w,w'\sim\nu$ and $\Prob_X(w^\top X>0,w'^\top X>0)=(\pi-\angle(w,w'))/(2\pi)$.
The two statistics of the criterion are
\[\Gamma_U:=\frac{\E[\abs\xi^2/\abs w]}{r-1},\qquad\Gamma_\perp:=\frac{\E[\abs\zeta]}{\sqrt{\Dp-1/2}}.\]

\begin{proposition}[MF AGOP: block form and the (F) criterion]\label{mfl:mf-prop:agop}
Let $\nu$ be $\mathcal G$-invariant with $\nu(\{0\})=0$ and $\E[\abs w^2]<\infty$, with a reduced representation $(\omega,\zeta,n)$.
\begin{enumerate}[label=(\arabic*)]
\setcounter{enumi}{-1}
\item (Block form.) $\mathcal M(\nu)=\lambda_1\Pone+\lambda_c\Pc+\lambda_\perp P_\perp$, with
\begin{gather*}\lambda_1=\E_X\big[(\eb^\top\mathsf v(X))^2\big],\\
\lambda_c=\frac{\E_X\big[\abs{\E_w[(\ind{w^\top X>0}-\frac12)\,\xi]}^2\big]}{r-1},\\
\lambda_\perp=\frac{\E_X[\abs{P_\perp\mathsf v(X)}^2]}{\Dp}.\end{gather*}
Consequently $\Ar(\mathcal M(\nu))=1$ with a strict gap $\lambda_r>\lambda_{r+1}$ if and only if $\min(\lambda_1,\lambda_c)>\lambda_\perp$. In
that case the top-$r$ space is $U$ and $\lambda_r/\lambda_{r+1}=\min(\lambda_1,\lambda_c)/\lambda_\perp$.
\item $\lambda_c\ge\Gamma_U^2/(2\pi)$.
\item $\lambda_\perp\le\Gamma_\perp^2/(2\pi)$.
\item Consequently $\Ar=1$ with a strict gap as soon as $\Gamma_U>\Gamma_\perp$ and $\lambda_1>\lambda_\perp$.
\item If $\nu$ is $O(d)$-invariant (for instance $\nu=\nu_0$), then $\mathcal M(\nu)$ is a multiple of $I_d$.
\end{enumerate}
\end{proposition}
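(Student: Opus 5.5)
Part (0) comes first and carries most of the structure. By Proposition~\ref{mfl:mf-prop:symmetry}, $\nu$ is $\mathcal G$-invariant, so for $Q\in\mathcal G$ we have $\mathsf v(QX)=Q\mathsf v(X)$: substitute $w=Qw''$ and use $\ind{(Qw'')^\top QX>0}=\ind{w''^\top X>0}$. Since the law of $X$ is $\mathcal G$-invariant, $Q\mathcal M Q^\top=\mathcal M$. The block argument of Proposition~\ref{mfl:mf-prop:symmetry}(b) then applies verbatim (cross block killed by $I_r\oplus(-I_\Dp)$, reflections on $U^\perp$, permutations on $U$) and gives $\mathcal M=\lambda_1\Pone+\lambda_c\Pc+\lambda_\perp P_\perp$. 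The three eigenvalues are read off as traces against the projectors: $\lambda_1=\E_X[(\eb^\top\mathsf v)^2]$ and $\lambda_\perp=\E_X[\abs{P_\perp\mathsf v}^2]/\Dp$. For $\lambda_c=\E_X[\abs{\Pc\mathsf v}^2]/(r-1)$, note $\Pc\mathsf v=\E_w[\ind{w^\top X>0}\xi]$. The $\frac12$ may be subtracted because $\E_w[\xi]=0$: the transposition swapping two teacher coordinates sends $\xi\mapsto$ its permutation, so $\E\xi$ is a constant vector lying in the range of $\Pc$, hence zero. The equivalence with $\Ar=1$ and a strict gap is linear algebra. The eigenvalues are $\lambda_1$ (multiplicity 1), $\lambda_c$ ($r-1$) and $\lambda_\perp$ ($\Dp\ge1$). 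The top-$r$ space is uniquely $U$ with a gap iff both $\lambda_1$ and $\lambda_c$ exceed $\lambda_\perp$, and then $\lambda_r=\min(\lambda_1,\lambda_c)$, $\lambda_{r+1}=\lambda_\perp$.

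For (1), I would lower-bound $\E_X\abs{\E_w[(\ind{w^\top X>0}-\frac12)\xi]}^2$ by projecting onto a single test direction. Cauchy--Schwarz in the form $\E_X\abs{Z}^2\ge\abs{\E_X[\phi(X)^\top Z]}^2/\E_X\abs{\phi}^2$ does this. Take $\phi(X)=\Pc X_U$, so $\E\abs\phi^2=r-1$. Fubini then gives $\E_X[(\ind{w^\top X>0}-\frac12)X_U^\top\Pc\xi]=\xi\cdot\E[X\ind{w^\top X>0}]_U=\xi\cdot\hatw_U/\sqrt{2\pi}=\abs\xi^2/(\abs w\sqrt{2\pi})$, using \eqref{mfl:read-l1-gated-moments} and $\Pc\hatw_U=\xi/\abs w$. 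Hence $(r-1)\lambda_c\ge(\E[\abs\xi^2/\abs w])^2/(2\pi(r-1))$, which is $\Gamma_U^2/(2\pi)$ after dividing by $r-1$.

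For (2) we need an upper bound, and I would write $\Dp\lambda_\perp=\E_{w,w'}[H(w,w')\,w_\perp^\top w'_\perp]$ with $w_\perp=\sqrt\Dp\zeta n$ and $n,n'$ independent uniform. The obstacle is that $H$ couples $n$ and $n'$ through the angle. My plan is to expand $H=\frac14+\frac1{2\pi}\arcsin(\hatw\cdot\hat w')$, since $\pi-\arccos c=\frac\pi2+\arcsin c$. The constant $\frac14$ contributes $\frac14\abs{\E w_\perp}^2=0$. For the arcsine I would use the Hermite/Taylor expansion $\arcsin c=\sum_k a_kc^{2k+1}$ with $a_k\ge0$. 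This writes $\mathcal M_\perp$ as a positive-coefficient sum over tensor moments of $\E_w[\hatw^{\otimes(2k+1)}\otimes w]$ restricted to $\perp$. Each term is bounded via $O(\Dp)$-invariance, reducing to moments of $n$. The leading $k=0$ term gives $\frac1{2\pi}\abs{\E[\hatw w_\perp^\top]}^2$-type quantities. Cauchy-type bounds on the full series, using $\sum a_k=\pi/2$ and $\E(n\cdot e)^2=1/\Dp$, should produce $\Dp\lambda_\perp\le\Dp(\E\abs\zeta)^2\Dp/(2\pi(\Dp-\frac12))$. The constant $\Dp-\frac12$ suggests the sharp form comes from $\E\abs{n_1}$-type Gamma-ratio bounds, namely $\Gamma(\Dp/2)/\Gamma((\Dp-1)/2)\le\sqrt{(\Dp-1/2)/2}$ (a Gautschi/Wendel inequality). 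So I expect to arrange the bound as $\lambda_\perp\le\frac1{2\pi}\bigl(\E_w[\abs{\E_X\text{-projected gate}}]\bigr)^2$, using $\abs{P_\perp\mathsf v(X)}\le\E_w[\sqrt\Dp\abs\zeta\,\abs{n\cdot\hat X_\perp}]$-style estimates together with that Gamma ratio. Getting exactly $\Gamma_\perp^2/(2\pi)$ is the hardest step, and I would try the Gaussian-gate decomposition first, falling back to the series if needed.

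(3) follows immediately from (0)--(2). (4) holds because $O(d)$-invariance makes $\mathcal M$ commute with all of $O(d)$, so it is scalar.
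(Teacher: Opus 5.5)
Your parts (0), (1), (3) and (4) are correct and follow the paper's argument. Specifically: $\mathcal G$-equivariance of $\mathsf v$ gives the three-block form; $\mathbb{E}_w[\xi]=0$ by permutation invariance; and (1) is Cauchy--Schwarz against the test field $P_c X_U$.

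\textbf{The gap is part (2).} You do not prove it, and neither sketched route gives the stated constant.
\begin{itemize}
\item In the arcsine-series route, the natural termwise Cauchy--Schwarz (the one used for $Q(\nu)$ in Lemma~\ref{mfl:tools-lem:Q}), with $\sum_k a_k=\pi/2$, only gives $\lambda_\perp\le\mathbb{E}[\zeta^2]/(4D)$. For $D\ge2$ this never implies $\lambda_\perp\le\Gamma_\perp^2/(2\pi)$. Every term of the series is nonnegative, and the cubic and higher terms are not small for a general invariant $\nu$: $\hat w\cdot\hat w'$ contains the teacher-space part $\rho\cdot\rho'$, which can be of order one.
\item In the gate route, your estimate puts $\lvert n\cdot\hat X_\perp\rvert$ inside the expectation. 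That yields $\lvert P_\perp\mathsf v(X)\rvert\le\sqrt D\,\mathbb{E}\lvert\zeta\rvert\,\mathbb{E}\lvert q\rvert$, hence only $\lambda_\perp\le(\mathbb{E}\lvert\zeta\rvert)^2(\mathbb{E}\lvert q\rvert)^2\le 4\,\Gamma_\perp^2/(2\pi)$. A factor $4$ is fatal, because part (3) compares $\Gamma_U$ with $\Gamma_\perp$ directly.
\end{itemize}

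\textbf{The missing idea} is to average over the direction $n$ first, for fixed $X$ and fixed $(\omega,\zeta)$.
\begin{itemize}
\item Put $q:=n^\top\hat X_\perp$. Then $w^\top X=\omega^\top X_U+\sqrt D\,\zeta\lvert X_\perp\rvert q$, so the gate depends on $n$ only through $q$.
\item The component $n-q\hat X_\perp$ is conditionally symmetric given $q$. Hence $\mathbb{E}_n[\mathbf 1\{w^\top X>0\}\,n]=\hat X_\perp\,\psi$ with $\psi=\mathbb{E}_q[\mathbf 1\{\cdot\}\,q]$. Your $\lvert n\cdot\hat X_\perp\rvert$ estimate also silently needs this reduction.
\item For \emph{any} event $E$, $-\mathbb{E}[q_-]\le\mathbb{E}[\mathbf 1_E q]\le\mathbb{E}[q_+]$. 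Since $q$ is symmetric, $\lvert\psi\rvert\le\frac12\mathbb{E}\lvert q\rvert$. This factor $\frac12$ is exactly what putting the absolute value inside discards.
\item Therefore $\lvert P_\perp\mathsf v(X)\rvert\le\frac12\sqrt D\,\mathbb{E}\lvert\zeta\rvert\,\mathbb{E}\lvert q\rvert$ for every $X$, and $\lambda_\perp\le(\mathbb{E}\lvert\zeta\rvert)^2(\mathbb{E}\lvert q\rvert)^2/4$.
\end{itemize}

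\textbf{The Gamma-ratio step is also off.} Write $\mathbb{E}\lvert q\rvert=\Gamma(D/2)/(\sqrt\pi\,\Gamma(\frac{D+1}2))$. It must be bounded with Kershaw's lower bound $\Gamma(x+1)/\Gamma(x+\frac12)>\sqrt{x+\frac14}$ at $x=\frac{D-1}2$, which gives $(\mathbb{E}\lvert q\rvert)^2\le 2/(\pi(D-\frac12))$ and hence $\Gamma_\perp^2/(2\pi)$. The inequality you quote, $\Gamma(D/2)/\Gamma(\frac{D-1}2)\le\sqrt{(D-\frac12)/2}$, is true but too weak. Combined with $\Gamma(\frac{D+1}2)=\frac{D-1}2\Gamma(\frac{D-1}2)$, it only gives $\mathbb{E}\lvert q\rvert\le\sqrt{2(D-\frac12)/\pi}\,/(D-1)$. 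That exceeds the needed bound by the factor $(D-\frac12)/(D-1)$.

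Your intermediate target $D\lambda_\perp\le D(\mathbb{E}\lvert\zeta\rvert)^2D/(2\pi(D-\frac12))$ also carries a spurious extra factor $D$.
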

\begin{proof}
\step{Part (0): Block form and eigenspace criterion.}
For $Q\in\mathcal G$, substitute $w=Qw'$ with $w'\sim\nu$. This gives
\[
\mathsf v(QX)=\E_{w'}[\ind{w'^\top X>0}Qw']=Q\mathsf v(X).
\]
Since $X\overset d=QX$,
\begin{equation}\label{mfl:read-l2-agop-symmetry}
Q\mathcal M(\nu)Q^\top=\E_X[\mathsf v(QX)\mathsf v(QX)^\top]=\mathcal M(\nu).
\end{equation}
As in the proof of
Proposition~\ref{mfl:mf-prop:symmetry}(b), a symmetric matrix commuting with $\mathcal G$ has the form $\lambda_1\Pone+\lambda_c\Pc+\lambda_\perp P_\perp$,
with
\[
\begin{aligned}
\lambda_1&=\eb^\top\mathcal M\eb,\\
\lambda_c&=\tr(\Pc\mathcal M)/(r-1)=\E_X[\abs{\Pc\mathsf v}^2]/(r-1),\\
\lambda_\perp&=\tr(P_\perp\mathcal M)/\Dp.
\end{aligned}
\]
Next, $\Pc\mathsf v(X)=\E_w[\ind{w^\top X>0}\xi]$, and $\E_w[\xi]=0$: the vector $\E[\omega]$ is permutation-invariant, hence a multiple of
$\one$, and $\E[\xi]=\Pc\E[\omega]=0$. So $\frac12\E_w[\xi]$ may be subtracted.

For the criterion: if $\min(\lambda_1,\lambda_c)>\lambda_\perp$, the $r$ largest eigenvalues (with multiplicity) are $\lambda_1$ (on $\eb$)
and $\lambda_c$ (on the contrast space, multiplicity $r-1$). Their span is $U$, so $P_r=P_U$ is unique, $\Ar=1$, and
$\lambda_r=\min(\lambda_1,\lambda_c)$, $\lambda_{r+1}=\lambda_\perp$.

Conversely, suppose $\lambda_r>\lambda_{r+1}$ and $\Ar=1$ for the unique selector.
Then $\tr(P_UP_r)=r$ with both projectors of rank $r$, so $P_r=P_U$; the eigenvalues on $U$ are therefore the $r$ largest, and
$\lambda_{r+1}=\lambda_\perp<\min(\lambda_1,\lambda_c)$.

\step{Part (1): Lower-bound the contrast eigenvalue.}
Put $A(X):=\Pc\mathsf v(X)$ and use the test function $\Pc X_U$, where $X_U:=P_UX$. By the exact formula of
Section~\ref{mfl:sec:setting}, $\E_X[X_U\ind{w^\top X>0}]=\hatw_U/\sqrt{2\pi}=\omega/(\sqrt{2\pi}\abs w)$, hence by Fubini
\begin{equation}\label{mfl:read-l2-contrast-test}
\begin{aligned}
\E_X[A(X)\cdot\Pc X_U]
&=\E_w\Big[\xi\cdot\frac{\Pc\omega}{\sqrt{2\pi}\abs w}\Big]\\
&=\frac{\E[\abs\xi^2/\abs w]}{\sqrt{2\pi}}=\frac{(r-1)\Gamma_U}{\sqrt{2\pi}}.
\end{aligned}
\end{equation}
Since $\E_X[\abs{\Pc X_U}^2]=\tr\Pc=r-1$, Cauchy--Schwarz applied to \eqref{mfl:read-l2-contrast-test} gives
\[
(r-1)\lambda_c=\E_X[\abs A^2]\ge(r-1)^2\Gamma_U^2/(2\pi(r-1)),
\]
that is,
$\lambda_c\ge\Gamma_U^2/(2\pi)$.

\step{Part (2): Upper-bound the perpendicular eigenvalue.}
Write $X_\perp:=P_\perp X$, $\hat X_\perp:=X_\perp/\abs{X_\perp}$ ($X_\perp\ne0$ a.s.) and $q:=n^\top\hat X_\perp$. Since
$w^\top X=\omega^\top X_U+\sqrt{\Dp}\zeta\abs{X_\perp}q$ and $n$ is independent of $(\omega,\zeta)$,
\[P_\perp\mathsf v(X)=\E_{(\omega,\zeta)}\Big[\sqrt{\Dp}\,\zeta\;\E_n\big[\ind{\omega^\top X_U+\sqrt{\Dp}\zeta\abs{X_\perp}q>0}\,n\big]\Big].\]
Fix $X$ and $(\omega,\zeta)$ and decompose $n=q\hat X_\perp+n'$. Given $q$, the vector $n'$ is symmetric in $\hat X_\perp^\perp$, while the
gate depends on $n$ only through $q$. Hence $\E_n[\ind{\cdot}\,n]=\hat X_\perp\,\psi$ with $\psi:=\E_q[\ind{\cdot}\,q]$ and
$\abs\psi\le\E[q_+]=\tfrac12\E[\abs q]$, because $-\E[q_-]\le\E[\mathbf 1_Eq]\le\E[q_+]$ for any event $E$ and $q$ is symmetric.
Therefore, for every $X$,
\begin{equation}\label{mfl:read-l2-perpendicular-test}
\begin{aligned}
\abs{P_\perp\mathsf v(X)}&\le\sqrt{\Dp}\,\E[\abs\zeta]\,\E[\abs q]/2,\\
\lambda_\perp&\le(\E[\abs\zeta])^2(\E[\abs q])^2/4.
\end{aligned}
\end{equation}

To bound \eqref{mfl:read-l2-perpendicular-test}, use that $q$ is the
first coordinate of a uniform point of $S^{\Dp-1}$, so $\E[\abs q]=\Gamma(\frac{\Dp}2)/(\sqrt\pi\,\Gamma(\frac{\Dp+1}2))$. Kershaw's form of
Gautschi's inequality, $\Gamma(x+1)/\Gamma(x+\frac12)>(x+\frac14)^{1/2}$ for $x>0$, applied with $x=\frac{\Dp-1}2$ gives
$\E[\abs q]\le(2/(\pi(\Dp-\frac12)))^{1/2}$ for $\Dp\ge2$. Substituting in \eqref{mfl:read-l2-perpendicular-test} gives
\[
\lambda_\perp\le(\E[\abs\zeta])^2/(2\pi(\Dp-\frac12))=\Gamma_\perp^2/(2\pi).
\]

\step{Part (3): Combine the scalar bounds.}
By (1) and (2),
\[
\lambda_c\ge\Gamma_U^2/(2\pi)>\Gamma_\perp^2/(2\pi)\ge\lambda_\perp.
\]
Together with $\lambda_1>\lambda_\perp$, (0) applies.

\step{Part (4): Isotropy at initialization.}
As in \eqref{mfl:read-l2-agop-symmetry}, $\mathcal M(\nu)$ commutes with every $Q\in O(d)$;
commuting with every reflection, it is a multiple of $I_d$.
\end{proof}

\subsection{Continuity in the head scale and in the width}\label{mfl:pc-sec:cont}
Here $\etab>0$ is fixed and the head scale $\alpha\ge0$ varies; $F_\alpha$, $\Sigma_\alpha$, $T_{\mu,\alpha}$ and $\Phi_\alpha$ are the
objects of Section~\ref{mfl:pc-sec:def} at head scale $\alpha$.

\begin{lemma}[Continuity of $\Phi$ in the law and the head scale]\label{mfl:pc-lem:PC2}
Let $\alpha_k\to\alpha\ge0$, let $\mu_0\in\mathcal P_2$ with $\mu_0(\{0\})=0$ and $\Sigma_\alpha(\mu_0)\succ0$, and let
$W_2(\mu_k,\mu_0)\to0$. Then $\Sigma_{\alpha_k}(\mu_k)\to\Sigma_\alpha(\mu_0)$, $\sup_w\abs{T_{\mu_k,\alpha_k}(w)-T_{\mu_0,\alpha}(w)}\to0$ and
$W_2(\Phi_{\alpha_k}(\mu_k),\Phi_\alpha(\mu_0))\to0$.
\end{lemma}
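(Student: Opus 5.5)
The plan is to prove the three convergences in order, each feeding the next. The force is a sum of a teacher term and a self-term, and only the self-term sees the law and the head scale. So the first step is to control
\[
F_{\alpha_k}(w;\mu_k)-F_\alpha(w;\mu_0)=-(\alpha_k-\alpha)S(\hatw;\mu_k)-\alpha\bigl(S(\hatw;\mu_k)-S(\hatw;\mu_0)\bigr)
\]
uniformly in $\hatw$. Lemma~\ref{mfl:pc-lem:PC1}(b) handles both pieces.
\begin{itemize}
\item Since $\abs{\Upsilon(\hatw,w')}\le\abs{w'}$, we get $\sup_{\hatw}\abs{S(\hatw;\mu_k)}\le\int\abs{w'}\,d\mu_k$, and this is bounded because $W_2$-convergence bounds the first moments.
\item The law-dependence is Lipschitz: $\sup_{\hatw}\abs{S(\hatw;\mu_k)-S(\hatw;\mu_0)}\le2\sqrt d\,W_2(\mu_k,\mu_0)$.
\end{itemize}
Hence $\epsilon_k:=\sup_{w\ne0}\abs{F_{\alpha_k}(w;\mu_k)-F_\alpha(w;\mu_0)}\to0$. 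All forces are uniformly bounded: $g$ is bounded on the sphere and the self-term is at most $\alpha_k\int\abs{w'}\,d\mu_k$.

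For $\Sigma$, I will split
\[
\Sigma_{\alpha_k}(\mu_k)-\Sigma_\alpha(\mu_0)=\int(F_kF_k^\top-F_0F_0^\top)\,d\mu_k+\Bigl(\int F_0F_0^\top\,d\mu_k-\int F_0F_0^\top\,d\mu_0\Bigr).
\]
The first term is $O(\epsilon_k)$ by uniform boundedness. The second term is the step that needs care, because $w\mapsto F_0(w)F_0(w)^\top$ is bounded and continuous only on $\R^d\setminus\{0\}$ (Lemma~\ref{mfl:pc-lem:PC1}(c)); it has a possible jump at the origin, where $F(0;\cdot)=0$. Since $\mu_0(\{0\})=0$, the origin is a $\mu_0$-null discontinuity set. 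Weak convergence (implied by $W_2$) together with the portmanteau/continuous-mapping theorem for bounded functions continuous $\mu_0$-a.e. then gives convergence of the integral. This is the main obstacle, and I expect the a.e.-continuity form of weak convergence to resolve it; no uniform equicontinuity is needed.

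Because $\Sigma_\alpha(\mu_0)\succ0$ and $\Sigma_{\alpha_k}(\mu_k)\to\Sigma_\alpha(\mu_0)$, the matrices $\Sigma_{\alpha_k}(\mu_k)$ are eventually positive definite with eigenvalues bounded below. On this region $A\mapsto A^{-1/2}$ is continuous, and the pseudo-inverse square root coincides with $A^{-1/2}$, so $\Sigma_{\alpha_k}(\mu_k)^{\dagger1/2}\to\Sigma_\alpha(\mu_0)^{-1/2}$. Then
\[
\sup_w\abs{T_{\mu_k,\alpha_k}(w)-T_{\mu_0,\alpha}(w)}\le\etab\bigl(\norm{\Sigma_k^{\dagger1/2}}\,\epsilon_k+\norm{\Sigma_k^{\dagger1/2}-\Sigma_0^{-1/2}}\sup\abs{F_0}\bigr)\to0.
\]
At $w=0$ both maps equal $0$, so the supremum over all $w$ is covered.

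For the pushforwards, write $T_k:=T_{\mu_k,\alpha_k}$, $T_0:=T_{\mu_0,\alpha}$ and use the triangle inequality:
\[
W_2(\Phi_{\alpha_k}(\mu_k),\Phi_\alpha(\mu_0))\le W_2\bigl((T_k)_\#\mu_k,(T_0)_\#\mu_k\bigr)+W_2\bigl((T_0)_\#\mu_k,(T_0)_\#\mu_0\bigr).
\]
The first term is at most $\sup_w\abs{T_k-T_0}\to0$, by coupling the two pushforwards through $\mu_k$ itself. For the second term, $T_0(w)=w+b(w)$ with $b$ bounded and continuous off the $\mu_0$-null origin. Take an optimal coupling $(W_k,W)$ of $\mu_k$ and $\mu_0$, realised a.s. via Skorokhod, with $W_k\to W$ a.s. and in $L^2$. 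Then
\[
\abs{T_0(W_k)-T_0(W)}\le\abs{W_k-W}+\abs{b(W_k)-b(W)}.
\]
The second piece tends to $0$ a.s., since $W\ne0$ a.s. and $b$ is continuous there, and it converges in $L^2$ by bounded convergence. Hence $W_2\bigl((T_0)_\#\mu_k,(T_0)_\#\mu_0\bigr)\to0$.
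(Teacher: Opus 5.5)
Your proposal is correct and follows the paper's proof step for step: the same uniform force bound from Lemma~\ref{mfl:pc-lem:PC1}(b), the same split of $\Sigma$ with the portmanteau theorem handling the $\mu_0$-null discontinuity at the origin, eventual positive definiteness giving uniform convergence of the maps, and the same triangle inequality for the pushforwards. The only variation is the last term $W_2((T_0)_\#\mu_k,(T_0)_\#\mu_0)$, which you bound through an explicit coupling instead of the paper's weak convergence plus convergence of second moments. That is fine, but note that ``an optimal coupling realised a.s.\ via Skorokhod'' conflates two separate devices, either of which works on its own: a Skorokhod representation gives a.s.\ convergence, and $L^2$ convergence then follows from uniform integrability of $\lvert W_k\rvert^2$; glued optimal couplings give $L^2$ (hence in-probability) convergence, which already suffices for the bounded-convergence step.
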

\begin{proof}
Write $F_k:=F_{\alpha_k}(\cdot;\mu_k)$, $F_0:=F_\alpha(\cdot;\mu_0)$, $T_k:=T_{\mu_k,\alpha_k}$ and $T_0:=T_{\mu_0,\alpha}$.

\step{Step 1 (The force converges uniformly).}
First separate the change in head scale from the change in law:
\[
F_k-F_0=-(\alpha_k-\alpha)\,S(\hatw;\mu_k)-\alpha\big(S(\hatw;\mu_k)-S(\hatw;\mu_0)\big).
\]
Lemma~\ref{mfl:pc-lem:PC1}(b) gives
\begin{equation}\label{mfl:read-l2-force-convergence}
\sup_w\abs{F_k(w)-F_0(w)}\le\abs{\alpha_k-\alpha}\int\abs{w'}\,d\mu_k(w')+2\alpha\sqrt d\,W_2(\mu_k,\mu_0)\ \to\ 0,
\end{equation}
because $\int\abs{w'}\,d\mu_k\le(\int\abs{w'}^2d\mu_k)^{1/2}$ stays bounded ($W_2$-convergence implies convergence of second
moments). The $F_k$ are uniformly bounded: $\abs{F_k}\le\sup\abs g+\alpha_k\int\abs{w'}\,d\mu_k$.

\step{Step 2 (The whitening converges).}
Decompose the covariance difference as
\begin{equation}\label{mfl:read-l2-whitening-convergence}
\begin{aligned}
\Sigma_{\alpha_k}(\mu_k)-\Sigma_\alpha(\mu_0)
&=\int(F_kF_k^\top-F_0F_0^\top)\,d\mu_k\\
&\quad+\big(\int F_0F_0^\top d\mu_k-\int F_0F_0^\top d\mu_0\big).
\end{aligned}
\end{equation}
The first term tends to $0$ by \eqref{mfl:read-l2-force-convergence}, the second by the portmanteau
theorem, as $F_0F_0^\top$ is bounded and continuous off the $\mu_0$-null set $\{0\}$ (Lemma~\ref{mfl:pc-lem:PC1}(a),(c)).

\step{Step 3 (The maps converge).}
By \eqref{mfl:read-l2-whitening-convergence}, $\Sigma_{\alpha_k}(\mu_k)\succ0$ eventually and
$\Sigma_{\alpha_k}(\mu_k)^{-1/2}\to\Sigma_\alpha(\mu_0)^{-1/2}$.
Together with \eqref{mfl:read-l2-force-convergence}, this gives $T_k\to T_0$ uniformly
($T_{\mu,\alpha}(0)=0$ for every $\mu$ and $\alpha$).

\step{Step 4 (The laws converge).}
Coupling $T_k(w)$ with $T_0(w)$, $w\sim\mu_k$, gives
\begin{equation}\label{mfl:read-l2-pushforward-coupling}
W_2(\Phi_{\alpha_k}(\mu_k),\Phi_\alpha(\mu_0))\le\sup\abs{T_k-T_0}+W_2((T_0)_\#\mu_k,(T_0)_\#\mu_0).
\end{equation}
It remains to control the second term in \eqref{mfl:read-l2-pushforward-coupling}.
Now $T_0=\mathrm{id}+b_0$ with $b_0$ bounded and continuous $\mu_0$-a.e., so $(T_0)_\#\mu_k\to(T_0)_\#\mu_0$ weakly
(continuous mapping theorem). In
\[
\abs{T_0(w)}^2=\abs w^2+2w^\top b_0+\abs{b_0}^2,
\]
the integral of the first term converges,
the second is continuous $\mu_0$-a.e.\ with linear growth, hence uniformly integrable under $(\mu_k)$, and the third is bounded
and continuous $\mu_0$-a.e. Weak convergence together with convergence of second moments is $W_2$-convergence.
\end{proof}

\begin{proposition}[Small head and large width]\label{mfl:pc-prop:limit}
Fix $\etab,\sigma_0>0$ and $T\in\mathbb N$, and let $\nu_\alpha(t)$ be the MF trajectory from $\nu_0=N(0,\sigma_0^2I_d/d)$ at
head scale $\alpha\ge0$.
\begin{enumerate}[label=(\alph*)]
\item (Small head.) As $\alpha\to0$, for every $t\le T$: $W_2(\nu_\alpha(t),\nu_0(t))\to0$,
$\Sigma_\alpha(\nu_\alpha(t))\to\Sigma_0(\nu_0(t))\succ0$ and $\mathcal M(\nu_\alpha(t))\to\mathcal M(\nu_0(t))$. In particular there is
$\alpha_T>0$ such that $\Sigma_\alpha(\nu_\alpha(t))\succ0$ for all $t\le T$ and $0\le\alpha\le\alpha_T$.
\item (Large width.) Fix $\alpha\ge0$ and assume $\Sigma_\alpha(\nu_\alpha(t))\succ0$ for every $t<T$. Let $(w_j(0))_{j\ge1}$ be IID
$N(0,\sigma_0^2I_d/d)$, $W^{(m)}(0):=[w_1(0),\dots,w_m(0)]$, and run \eqref{mfl:eq:polar} at width $m$ with $\eta=\etab\sqrt m$ and
head $\alpha$. Then almost surely, for every $t\le T$: $W_2(\mu_{W^{(m)}(t)},\nu_\alpha(t))\to0$, $L(W^{(m)}(t))\to L(\nu_\alpha(t))$ and
$\AGOP(W^{(m)}(t))\to\alpha^2\mathcal M(\nu_\alpha(t))$. If moreover $\alpha>0$ and $\lambda_r>\lambda_{r+1}$ for $\mathcal M(\nu_\alpha(t))$, then
the top-$r$ eigenspace of $\AGOP(W^{(m)}(t))$ converges to that of $\mathcal M(\nu_\alpha(t))$.
\end{enumerate}
\end{proposition}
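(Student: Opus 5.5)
Part (a) will be proved by induction on $t\le T$ from Lemma~\ref{mfl:pc-lem:PC2}. At $t=0$ all trajectories start from $\nu_0$, so $W_2(\nu_\alpha(0),\nu_0(0))=0$. For the inductive step, suppose $W_2(\nu_\alpha(t),\nu_0(t))\to0$ as $\alpha\to0$. The base law $\nu_0(t)$ lies in $\mathcal P_2$, has no atom at $0$ (Lemma~\ref{mfl:pc-lem:PC3}) and satisfies $\Sigma_0(\nu_0(t))\succ0$ (Lemma~\ref{mfl:pc-lem:PC4}). These are exactly the hypotheses of Lemma~\ref{mfl:pc-lem:PC2} with $\alpha_k\to0$ and $\mu_k=\nu_{\alpha_k}(t)$. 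Since the limit statement is sequential, we argue along arbitrary sequences $\alpha_k\to0$. The lemma then gives $\Sigma_{\alpha_k}(\nu_{\alpha_k}(t))\to\Sigma_0(\nu_0(t))$ and $W_2(\nu_{\alpha_k}(t+1),\nu_0(t+1))\to0$.

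Convergence of $\mathcal M$ follows from continuity of $\mu\mapsto\mathcal M(\mu)$ under $W_2$ on laws without an atom at $0$. Take an optimal coupling, or equivalently use weak convergence together with convergence of second moments. The integrand $H(w,w')ww'^\top$ is bounded by $\tfrac12|w||w'|$ and is continuous off $\{w=0\}\cup\{w'=0\}\cup\{w'=-\lambda w\}$, a $\nu_0(t)^{\otimes2}$-null set because $\nu_0(t)$ is absolutely continuous. Weak convergence of product measures, combined with uniform integrability supplied by the second moments, gives the limit. The threshold $\alpha_T$ comes from finitely many $t$: positive definiteness is an open condition and the $\Sigma$'s converge.

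For part (b) the idea is to run the same induction with $\alpha$ fixed, taking $\mu_k=\mu_{W^{(m)}(t)}$ and the base law $\nu_\alpha(t)$. Lemma~\ref{mfl:pc-lem:identity}(c) gives $\mu_{W^{(m)}(t+1)}=\Phi_\alpha(\mu_{W^{(m)}(t)})$ exactly. At $t=0$, the strong law of large numbers in Wasserstein form yields $W_2(\mu_{W^{(m)}(0)},\nu_0)\to0$ almost surely, since the samples are IID Gaussian with finite second moment. By hypothesis $\Sigma_\alpha(\nu_\alpha(t))\succ0$ for each $t<T$, and $\nu_\alpha(t)$ is atomless at $0$, so Lemma~\ref{mfl:pc-lem:PC2} propagates almost sure $W_2$ convergence one step at a time on the single full-measure event. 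Loss and AGOP convergence then use Lemma~\ref{mfl:pc-lem:identity}(d): $L(W)=\mathcal L(\mu_W)$ and $\AGOP(W)=\alpha^2\mathcal M(\mu_W)$. We already have continuity of $\mathcal M$. For $\mathcal L$, the kernel integrands $|w|\kk(u_i^\top\hat w)$ and $|w||w'|\kk(\cos\theta)$ are continuous (indeed $\kk$ is continuous on $[-1,1]$), so even the atom issue disappears.

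Finally, when $\alpha>0$ and $\lambda_r>\lambda_{r+1}$, convergence of the top-$r$ eigenspace follows from Davis–Kahan, or from continuity of spectral projectors under a gap, applied to $\AGOP(W^{(m)}(t))\to\alpha^2\mathcal M(\nu_\alpha(t))$.

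The main obstacle is the continuity of $\mathcal M$ and the weak-convergence argument at discontinuity points of $H$. The empirical measures are atomic, so the null-set argument must be made with respect to the limit law only, via the portmanteau theorem. A second delicate point is in (b): Lemma~\ref{mfl:pc-lem:PC2} needs the hypotheses only at the limit law, not at the approximants, and that is what allows it to be applied to the atomic empirical measures.
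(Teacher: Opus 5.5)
Your proof is correct and follows the paper's argument: induction on $t$ via Lemma~\ref{mfl:pc-lem:PC2} with its hypotheses checked only at the limit laws (Lemmas~\ref{mfl:pc-lem:PC3} and~\ref{mfl:pc-lem:PC4}), Varadarajan/SLLN at $t=0$ in (b), continuity of $\mathcal L$ and $\mathcal M$ under $W_2$, and Davis--Kahan for the eigenspaces. One small correction removes your ``main obstacle'': $H(w,w')ww'^\top$ is not discontinuous on the antipodal set, since $\arccos$ is continuous at $-1$, and it extends continuously by $0$ to $\{w=0\}\cup\{w'=0\}$ because $0\le H\le\frac12$; so, as in the paper, $\mathcal M$ is continuous on all of $(\mathcal P_2,W_2)$ and no null-set or portmanteau argument is needed there.
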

\begin{proof}
The common ingredient is continuity of $\mathcal L$ and $\mathcal M$. Their integrands extend continuously by $0$ to $w=0$ or
$w'=0$ and grow at most quadratically, and $W_2(\mu_k\otimes\mu_k,\nu\otimes\nu)\to0$ when $W_2(\mu_k,\nu)\to0$. Hence $\mathcal L$
and $\mathcal M$ are continuous on $(\mathcal P_2,W_2)$.

\step{Part (a): Small head.}
Let $\alpha_k\to0$. By Lemmas~\ref{mfl:pc-lem:PC3} and~\ref{mfl:pc-lem:PC4}, every $\nu_0(t)$ lies in $\mathcal P_2$, has no
atom at $0$ and satisfies $\Sigma_0(\nu_0(t))\succ0$, and every $\nu_{\alpha_k}(t)$ lies in $\mathcal P_2$. We show
$W_2(\nu_{\alpha_k}(t),\nu_0(t))\to0$ by induction on $t\le T$. At $t=0$ the laws coincide. If it holds at some $t<T$,
Lemma~\ref{mfl:pc-lem:PC2} with $\mu_k=\nu_{\alpha_k}(t)$, $\mu_0=\nu_0(t)$ and $\alpha=0$ gives
\[
W_2(\nu_{\alpha_k}(t+1),\nu_0(t+1))\to0.
\]
The same lemma at every $t\le T$, followed by continuity of $\mathcal M$, gives
\[
\begin{aligned}
\Sigma_{\alpha_k}(\nu_{\alpha_k}(t))&\to\Sigma_0(\nu_0(t)),\\
\mathcal M(\nu_{\alpha_k}(t))&\to\mathcal M(\nu_0(t)).
\end{aligned}
\]
The sequence $\alpha_k\to0$ was arbitrary, so the limits hold as $\alpha\to0$. Finally
the positive definite matrices form an open set and $t$ ranges over a finite set, which gives $\alpha_T$.

\step{Part (b): Large width.}
By Lemma~\ref{mfl:pc-lem:identity}, $\mu_{W^{(m)}(t)}=\Phi_\alpha^t(\mu_{W^{(m)}(0)})$.
Varadarajan's theorem and the strong law for $m^{-1}\sum_j\abs{w_j(0)}^2$ give
\begin{equation}\label{mfl:read-l2-empirical-initial-limit}
W_2(\mu_{W^{(m)}(0)},\nu_0)\to0
\end{equation}
almost surely. By Lemma~\ref{mfl:pc-lem:PC3}, $\nu_\alpha(s)\in\mathcal P_2$ and
$\nu_\alpha(s)(\{0\})=0$; with $\Sigma_\alpha(\nu_\alpha(s))\succ0$ for $s<T$, Lemma~\ref{mfl:pc-lem:PC2} (with $\alpha_k=\alpha$) applies at each $\nu_\alpha(s)$, $s<T$, and induction on $t$
gives $W_2$-continuity of $\Phi_\alpha^t$ at $\nu_0$ for $t\le T$.
Apply this continuity to \eqref{mfl:read-l2-empirical-initial-limit}. With Lemma~\ref{mfl:pc-lem:identity}(d) and the continuity of $\mathcal L$ and $\mathcal M$,
this gives the convergence of the loss and of the AGOP.

The last claim is the Davis--Kahan
theorem: with a strict gap, the spectral projector onto the top-$r$ eigenspace is continuous at $\mathcal M(\nu_\alpha(t))$ (for every
choice of top-$r$ eigenvectors of the approximants), and the factor $\alpha^2>0$ changes no eigenspace.
\end{proof}

\section{The plateau and the loss drop}\label{mfl:sec:tools}
\begin{secsummary}
\emph{What this section proves.} (P) (Theorem~\ref{mfl:plat-thm:PB}) and (R) (Corollary~\ref{mfl:loss-cor:MF}), for every head scale
$0<\alpha\le\astar(t_P)$. Only three scalar statistics of the particle law enter: the output scales $y$, $y_U$ and the mean
mode $x$ (Definition~\ref{mfl:tools-def:scales}). The loss is sandwiched as $1-2\sqrt{\kk_1}\,y_U\le L/V\le1-c_xx+\kk_1y^2$
(Lemma~\ref{mfl:tools-lem:sandwich}); one whitened step raises $y$ and $y_U$ by at most the clock increment
(Lemma~\ref{mfl:tools-lem:growth}); and the mean mode increases on the small-output window used below (Lemma~\ref{mfl:tools-lem:mfcapture}).

\emph{Where it is used.} Steps~1 and~5 of the proof of Theorem~\ref{mfl:intro-thm:main} (Section~\ref{mfl:sec:main}).

\emph{Standing hypothesis.} Apart from Lemmas~\ref{mfl:tools-lem:growth} and~\ref{mfl:tools-lem:mfcapture}, nothing in
Sections~\ref{mfl:tools-sec:scales}--\ref{mfl:tools-sec:mean} involves the dynamics: each statement holds for every particle law $\nu$ with
finite second moment and $\nu(\{0\})=0$, so that $\hatw$ is defined $\nu$-almost surely. Along the MF trajectory this holds at
every time (Lemma~\ref{mfl:pc-lem:PC3}).
\end{secsummary}

\subsection{Output scales and one MF step}\label{mfl:tools-sec:scales}
We retain the teacher amplitude $\gamma$, variance $V=\Var(Y)$, mean direction $\eb$ and constant
$\cz=\gamma/(2\pi)$ from Section~\ref{mfl:sec:setting}.
Here $\norm A$ for a matrix is the operator norm; all expectations carrying $\nu$ average over $w\sim\nu$,
while moments of functions of $X$ average over the Gaussian input.
For an input function $h$, $\norm{h}_2:=(\E_X[\abs{h(X)}^2])^{1/2}$.
$E_U:=[u_1,\dots,u_r]\in\R^{d\times r}$ and, for $w\in\R^d$, $w_U:=E_U^\top w\in\R^r$ (teacher coordinates) and
$w_\perp:=(I-P_U)w$. For $w\ne0$ put $\rho=\rho(w):=E_U^\top\hatw\in\R^r$, so $\abs\rho=\abs{w_U}/\abs w\le1$. For $v\in\R^r$ with
$\abs v\le1$ let
\[s(v):=\sum_{i=1}^r\big(1-\sqrt{1-v_i^2}\big),\qquad \tfrac12\abs v^2\le s(v)\le\abs v^2\le1,\]
since $t/2\le1-\sqrt{1-t}\le t$ on $[0,1]$. The constant $\kk_1=(\pi-1)/(2\pi)=0.3408450\ldots$ appears throughout.
Because $V=\gamma^2r\kk_1$,
\begin{equation}\label{mfl:tools-eq:gammaV}
\frac{\gamma\sqrt r}{\sqrt V}=\frac1{\sqrt{\kk_1}},\qquad \frac{\sqrt V\sqrt r}{\gamma}=r\sqrt{\kk_1}.
\end{equation}

\begin{definition}[Output scales and clock]\label{mfl:tools-def:scales}
For a law $\nu$ on $\R^d$ with finite second moment,
\begin{gather*}y(\nu):=\frac{\alpha\norm{\E_\nu[ww^\top]}^{1/2}}{\sqrt V},\\
y_U(\nu):=\frac{\alpha\norm{\E_\nu[w_Uw_U^\top]}^{1/2}}{\sqrt V},\\
x(\nu):=\frac{\alpha\,\E_\nu[\eb^\top w]}{\sqrt V}\quad\text{(the mean mode)}.\end{gather*}
Along the MF trajectory we write $y(t),y_U(t),x(t)$, $y_0:=y(0)$ and $y_{U0}:=y_U(0)$. The \emph{clock} is
$T(t):=\alpha\etab t/\sqrt V$, and $\Delta T:=T(1)=\alpha\etab/\sqrt V$.
\end{definition}
Always $\abs x\le y_U\le y$, since $\abs{\E_\nu[\eb^\top w]}\le(\E_\nu[(\eb^\top w)^2])^{1/2}\le\norm{\E_\nu[w_Uw_U^\top]}^{1/2}$. At width $m$ these
are the familiar quantities $\alpha\norm W/\sqrt{mV}$, $\alpha\norm{E_U^\top W}/\sqrt{mV}$ and $\alpha\,\eb^\top W\one_m/(m\sqrt V)$ evaluated at
$\mu_W$.

\begin{lemma}[One MF step]\label{mfl:tools-lem:growth}
Let $\nu$ have finite second moment, put $\delta w:=\etab\,\Sigma_\nu^{\dagger1/2}F(w;\nu)$ and $\nu^+:=\Phi(\nu)$, and let $P$ be the
orthogonal projector onto the range of $\Sigma_\nu$.
\begin{enumerate}[label=(\alph*)]
\item $\E_\nu[\delta w\,\delta w^\top]=\etab^2P$; in particular $\E_\nu[\abs{\delta w}^2]=\etab^2\rank\Sigma_\nu\le\etab^2d$.
\item $\norm{\E_{\nu^+}[ww^\top]}^{1/2}\le\norm{\E_\nu[ww^\top]}^{1/2}+\etab$, and the same holds for $w_U$. Hence along the MF trajectory
$y(t)\le y_0+T(t)$ and $y_U(t)\le y_{U0}+T(t)$.
\item $x(\nu^+)-x(\nu)=\Delta T\cdot p_{11}(\nu)$ with $p_{11}(\nu):=\E_\nu\big[\eb^\top\Sigma_\nu^{\dagger1/2}F(w;\nu)\big]$, and
$\abs{p_{11}}\le1$. Along the trajectory we write $p_{11}(t):=p_{11}(\nu_t)$.
\end{enumerate}
\end{lemma}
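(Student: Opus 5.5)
Parts (a) and (c) are direct computations with the whitening matrix, and (b) is a triangle inequality for second moments in $L^2(\nu)$.

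\emph{Part (a).} With $w\sim\nu$ and $F=F(w;\nu)$,
\[
\E_\nu[\delta w\,\delta w^\top]=\etab^2\,\Sigma_\nu^{\dagger1/2}\,\E_\nu[FF^\top]\,\Sigma_\nu^{\dagger1/2}=\etab^2\,\Sigma_\nu^{\dagger1/2}\Sigma_\nu\Sigma_\nu^{\dagger1/2}.
\]
Diagonalise $\Sigma_\nu$. On each positive eigenvalue the product equals $\lambda^{-1/2}\lambda\lambda^{-1/2}=1$, and on the kernel it vanishes. So the product is $P$. Taking the trace gives $\E_\nu\abs{\delta w}^2=\etab^2\rank\Sigma_\nu\le\etab^2 d$.

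\emph{Part (b).} The key observation is that $\nu^+$ is the law of $w+\delta w$ with $w\sim\nu$. For a unit vector $v$, the quantity $(\E_{\nu^+}[(v^\top w)^2])^{1/2}$ is the $L^2(\nu)$ norm of $v^\top w+v^\top\delta w$. Minkowski bounds it by
\[
\norm{\E_\nu[ww^\top]}^{1/2}+(v^\top\E_\nu[\delta w\delta w^\top]v)^{1/2}=\norm{\E_\nu[ww^\top]}^{1/2}+\etab\,(v^\top Pv)^{1/2},
\]
and $v^\top Pv\le1$ because $P$ is a projector. Taking the supremum over $v$ gives the claim. For the $w_U$ version, restrict to $v\in U$: then $v^\top w=v_U^\top w_U$, and the same bound holds since $E_U^\top P E_U\preceq I_r$. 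Iterating over $t$ and multiplying by $\alpha/\sqrt V$ gives $y(t)\le y_0+t\,\alpha\etab/\sqrt V=y_0+T(t)$, and likewise for $y_U$. No Lipschitz or symmetry property of $F$ is needed.

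\emph{Part (c).} The mean mode is linear in $w$, so
\[
x(\nu^+)-x(\nu)=\frac{\alpha}{\sqrt V}\,\E_\nu[\eb^\top\delta w]=\Delta T\cdot p_{11}(\nu).
\]
By Cauchy--Schwarz and (a),
\[
\abs{p_{11}}\le\big(\E_\nu[(\eb^\top\Sigma_\nu^{\dagger1/2}F)^2]\big)^{1/2}=(\eb^\top P\eb)^{1/2}\le1.
\]

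The only step needing care is the pushforward in (b): there is no coupling issue, since $\nu^+=(T_\nu)_\#\nu$ is exactly the law of $w+\delta w$ under the same $w\sim\nu$. Finite second moments are preserved because $\E_\nu\abs{\delta w}^2<\infty$ by (a).
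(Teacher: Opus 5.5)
Your proof is correct and follows the paper's argument essentially line by line. For (a) it uses the whitening identity $\Sigma_\nu^{\dagger1/2}\Sigma_\nu\Sigma_\nu^{\dagger1/2}=P$; for (b), Minkowski's inequality in $L^2(\nu)$ along each unit direction together with $\lvert Pa\rvert\le 1$, restricted to $a\in U$ for the $w_U$ version and then iterated along the trajectory; and for (c), linearity of the mean mode plus Cauchy--Schwarz against (a), giving $\lvert p_{11}\rvert\le\lvert P\bar e\rvert\le 1$.
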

\begin{proof}
\step{Part (a): The increment covariance.}
The whitening identity is
\begin{equation}\label{mfl:read-l3-increment-covariance}
\E_\nu[\delta w\delta w^\top]=\etab^2\Sigma_\nu^{\dagger1/2}\Sigma_\nu\Sigma_\nu^{\dagger1/2}=\etab^2P.
\end{equation}
Taking the trace proves the stated RMS bound.

\step{Part (b): Growth of the output scales.}
For a unit $a\in\R^d$, $(\E_{\nu^+}[(a^\top w)^2])^{1/2}$ is the $L^2(\nu)$ norm of $a^\top w+a^\top\delta w$.
By Minkowski's inequality and \eqref{mfl:read-l3-increment-covariance}, it is at most
\[
\norm{a^\top w}_{L^2(\nu)}+\etab\abs{Pa}\le\norm{a^\top w}_{L^2(\nu)}+\etab.
\]
Take the supremum over
unit $a\in\R^d$ (respectively $a\in U$), and multiply by $\alpha/\sqrt V$; induct on $t$.
\step{Part (c): Growth of the mean mode.}
By definition,
\[
x(\nu^+)-x(\nu)=\alpha\,\E_\nu[\eb^\top\delta w]/\sqrt V=\Delta T\,p_{11}(\nu).
\]
Cauchy--Schwarz and \eqref{mfl:read-l3-increment-covariance} give
\[
\abs{p_{11}}\le(\E_\nu[(\eb^\top\Sigma_\nu^{\dagger1/2}F)^2])^{1/2}=\abs{P\eb}\le1.\qedhere
\]
\end{proof}

\subsection{The loss sandwich}\label{mfl:tools-sec:loss}
\begin{lemma}[Kernel expansion]\label{mfl:tools-lem:kernel}
$\kk(\rho)=\frac\rho4+\frac{k_2(\rho)}{2\pi}$ with $k_2(\rho):=\sqrt{1-\rho^2}+\rho\arcsin\rho-1=\sum_{n\ge1}b_n\rho^{2n}$, where every
$b_n>0$, $b_1=\frac12$ and $\sum_nb_n=k_2(1)=\frac\pi2-1$. Hence
\[\tfrac12\rho^2\le k_2(\rho)\le(\tfrac\pi2-1)\rho^2\qquad(\abs\rho\le1),\]
and $\kk(\rho)=\sum_{k\ge1}q_k\rho^k$ with $q_1=\frac14$, $q_{2n}=b_n/(2\pi)$ and $q_k=0$ otherwise. So all $q_k\ge0$ and
$\sum_kq_k=\kk_1$.
\end{lemma}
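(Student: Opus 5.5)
The plan is to verify the decomposition $\kk(\rho)=\frac\rho4+\frac{k_2(\rho)}{2\pi}$ by direct algebra, and then obtain everything else from the Taylor series of $k_2$. Starting from \eqref{mfl:read-l1-kernel}, substitute $\pi-\arccos\rho=\frac\pi2+\arcsin\rho$. The numerator becomes $\sqrt{1-\rho^2}+\frac\pi2\rho+\rho\arcsin\rho-1$. Dividing by $2\pi$ splits off the linear term $\frac{\pi\rho/2}{2\pi}=\frac\rho4$, and what remains is exactly $k_2(\rho)/(2\pi)$.

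For the series I will differentiate rather than expand $\sqrt{1-\rho^2}$ directly. We have $k_2'(\rho)=-\rho/\sqrt{1-\rho^2}+\arcsin\rho+\rho/\sqrt{1-\rho^2}=\arcsin\rho$, and $k_2(0)=0$. Using the standard series $\arcsin\rho=\sum_{n\ge0}c_n\rho^{2n+1}$ with $c_n=\binom{2n}{n}/(4^n(2n+1))>0$, valid for $\abs\rho\le1$, termwise integration gives
\[
k_2(\rho)=\sum_{n\ge1}b_n\rho^{2n},\qquad b_n=\frac{c_{n-1}}{2n}>0 .
\]
In particular $b_1=c_0/2=\frac12$.

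The one point needing care is convergence at $\abs\rho=1$. The series for $\arcsin$ converges absolutely at $\rho=\pm1$, since $c_n\sim(\pi n)^{-1/2}/(2n)$, so the integrated series converges uniformly on $[-1,1]$. Abel's theorem (or continuity together with uniform convergence) then gives $\sum_nb_n=k_2(1)=0+\frac\pi2-1$.

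The two-sided bound follows from positivity of the coefficients. Since all $b_n>0$ and $\rho^{2n}\le\rho^2$ for $\abs\rho\le1$, $n\ge1$, we get $b_1\rho^2\le k_2(\rho)\le\big(\sum_nb_n\big)\rho^2$, which is $\frac12\rho^2\le k_2(\rho)\le(\frac\pi2-1)\rho^2$.

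Finally, for the power series of $\kk$, set $q_1=\frac14$, $q_{2n}=b_n/(2\pi)$ for $n\ge1$, and $q_k=0$ otherwise. All these coefficients are nonnegative, and
\[
\sum_kq_k=\frac14+\frac{\pi/2-1}{2\pi}=\frac{\pi-1}{2\pi}=\kk_1,
\]
which agrees with $\kk(1)=\kk_1$.
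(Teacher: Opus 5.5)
Your proposal is correct and follows essentially the paper's route: the same substitution $\pi-\arccos\rho=\frac\pi2+\arcsin\rho$, the identity $k_2'(\rho)=\arcsin\rho$ with termwise integration of its binomial-derived series (your $b_n=c_{n-1}/(2n)$ is exactly the paper's $b_{n+1}=\binom{2n}{n}4^{-n}/((2n+1)(2n+2))$), and positivity of the coefficients for the two-sided bound. Your termwise comparison $\rho^{2n}\le\rho^2$ is the same observation as the paper's monotonicity of $k_2(\rho)/\rho^2$, and your explicit M-test argument for convergence at $\lvert\rho\rvert=1$ spells out a step the paper only asserts.
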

\begin{proof}
Insert $\pi-\arccos\rho=\frac\pi2+\arcsin\rho$ in the formula for $\kk$ of Section~\ref{mfl:sec:setting}. Next,
\[
k_2'(\rho)=\arcsin\rho,\qquad
k_2''(\rho)=(1-\rho^2)^{-1/2}=\sum_{n\ge0}\binom{2n}n4^{-n}\rho^{2n}.
\]
Integrating twice from $k_2(0)=k_2'(0)=0$ gives
\[
b_{n+1}=\binom{2n}n4^{-n}/((2n+1)(2n+2))>0,
\]
so $b_1=\frac12$.

The series converges
absolutely on $[-1,1]$, with sum $k_2(1)=\frac\pi2-1$ at $\rho=1$. Since $k_2(\rho)/\rho^2=\sum_nb_n\rho^{2n-2}$ is
nondecreasing in $\abs\rho$, it lies between $b_1=\frac12$ and $k_2(1)=\frac\pi2-1$. The coefficients $q_k$ are read
off, and $\sum_kq_k=\kk(1)=\kk_1$.
\end{proof}

\begin{lemma}[Loss decomposition]\label{mfl:tools-lem:lossdecomp}
Put $g_\nu(x):=\alpha\E_{w\sim\nu}[\relu(w^\top x)]$ as in Section~\ref{mfl:pc-sec:def}. Then $L(\nu)=\Var(Y-g_\nu(X))$ and
\begin{gather*}
L(\nu)=V-2\alpha\Phi(\nu)+\alpha^2Q(\nu),\qquad \Phi(\nu):=\E_\nu[\Cov(Y,\relu(w^\top X))]=\Phi_{\mathrm{lin}}(\nu)+\Phi_{\mathrm{nl}}(\nu),\\
\Phi_{\mathrm{lin}}(\nu):=\tfrac\gamma4\sqrt r\,\E_\nu[\eb^\top w],\qquad
\Phi_{\mathrm{nl}}(\nu):=\tfrac\gamma{2\pi}\E_\nu\Big[\abs w\sum_ik_2(\rho_i(w))\Big]\ge0,\\
Q(\nu):=\Var(\E_\nu[\relu(w^\top X)])=\E_{w,w'\sim\nu}[\abs w\abs{w'}\kk(\hatw^\top\hatw')].
\end{gather*}
In particular, exactly,
\begin{equation}\label{mfl:tools-eq:linx}
\frac{2\alpha\Phi_{\mathrm{lin}}(\nu)}V=c_x\,x(\nu),\qquad c_x:=\frac{\gamma\sqrt r}{2\sqrt V}=\frac1{2\sqrt{\kk_1}}=0.8564292\ldots
\end{equation}
\end{lemma}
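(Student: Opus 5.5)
The plan is to start from the variance form. With the refitted intercept the residual is $Y-\E[Y]-(g_\nu-\E[g_\nu])$, so $L(\nu)=\Var(Y-g_\nu(X))$, exactly as in Lemma~\ref{mfl:pc-lem:identity}(d).

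\textbf{Expansion of the variance.} Expanding the square gives
\[
L=\Var Y-2\Cov(Y,g_\nu)+\Var g_\nu .
\]
Fubini then gives
\[
\Cov(Y,g_\nu)=\alpha\,\E_\nu[\Cov(Y,\relu(w^\top X))]=\alpha\Phi(\nu).
\]
To justify Fubini we need integrability: $\E_X\E_\nu[\abs Y\,\relu(w^\top X)]\le\norm Y_2\,\E_\nu\abs w$, which is finite because $\nu$ has a finite second moment. In the same way,
\[
\Var g_\nu=\alpha^2\Var(\E_\nu[\relu(w^\top X)])=\alpha^2Q(\nu).
\]
For the kernel form of $Q$, write the variance as a double integral over independent $w,w'\sim\nu$. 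By homogeneity of $\relu$ and \eqref{mfl:read-l1-kernel},
\[
\Cov(\relu(w^\top X),\relu(w'^\top X))=\abs w\abs{w'}\kk(\hatw^\top\hatw').
\]
Null $w$ contribute nothing, and $\nu(\{0\})=0$ in any case.

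\textbf{Splitting $\Phi$.} Next we split $\Phi$ using Lemma~\ref{mfl:tools-lem:kernel}. Writing $Y=\gamma\sum_i\relu(u_i^\top X)$, we get
\[
\Cov(Y,\relu(w^\top X))=\gamma\abs w\sum_i\kk(\rho_i),\qquad \rho_i=u_i^\top\hatw .
\]
Lemma~\ref{mfl:tools-lem:kernel} gives $\kk(\rho)=\rho/4+k_2(\rho)/(2\pi)$. For the linear part,
\[
\abs w\sum_i\rho_i/4=\sum_iu_i^\top w/4=\sqrt r\,\eb^\top w/4,
\]
which yields $\Phi_{\mathrm{lin}}$. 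The remaining terms give $\Phi_{\mathrm{nl}}$, and $\Phi_{\mathrm{nl}}\ge0$ because $k_2\ge\frac12\rho^2\ge0$.

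\textbf{The identity \eqref{mfl:tools-eq:linx}.} This is direct substitution:
\[
\frac{2\alpha\Phi_{\mathrm{lin}}}{V}=\frac{\gamma\sqrt r}{2\sqrt V}\cdot\frac{\alpha\E_\nu[\eb^\top w]}{\sqrt V}=c_x\,x(\nu).
\]
Finally, \eqref{mfl:tools-eq:gammaV} gives $c_x=1/(2\sqrt{\kk_1})$. Using $\kk_1=0.3408450\ldots$, we get $c_x\approx0.8564$.

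There is no real obstacle here. The only points that need care are the integrability used in Fubini and the null set where $\hatw$ is undefined, and $\nu(\{0\})=0$ handles the latter.
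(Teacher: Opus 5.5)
Your proof is correct and is essentially the paper's argument. You expand $\operatorname{Var}(Y-g_\nu(X))$, use Fubini (justified by the finite moments of $\nu$) with homogeneity of $\mathrm{ReLU}$ and the kernel $\mathsf k$ to identify the cross and quadratic terms, then split $\mathsf k(\rho)=\rho/4+k_2(\rho)/(2\pi)$, use $\lvert w\rvert\sum_i\rho_i=\sqrt r\,\bar e^\top w$, and convert with $\gamma\sqrt r/\sqrt V=1/\sqrt{\mathsf k_1}$. The only difference is direction: the paper takes the kernel formula as the definition of $\mathcal L$ and checks that it equals the variance, whereas you start from the refitted-intercept variance and derive the kernel formula, which is the same computation run the other way.
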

\begin{proof}
$\mathcal L(\nu)=\Var(Y-g_\nu(X))$ is the definition of $\mathcal L$ in Section~\ref{mfl:pc-sec:def} written out with the covariances
$\Cov(Y,\relu(w^\top X))=\gamma\abs w\sum_i\kk(\rho_i)$ and $\Cov(\relu(w^\top X),\relu(w'^\top X))=\abs w\abs{w'}\kk(\hatw^\top\hatw')$
(homogeneity and the kernel $\kk$; Fubini's theorem applies by finite second moments). Expanding the variance gives the
first display. Insert $\kk(\rho)=\rho/4+k_2(\rho)/(2\pi)$ and use $\abs w\sum_i\rho_i=\sum_iu_i^\top w=\sqrt r\,\eb^\top w$.
Nonnegativity of $\Phi_{\mathrm{nl}}$ is $k_2\ge0$. For \eqref{mfl:tools-eq:linx},
$2\alpha\frac\gamma4\sqrt r\,\E_\nu[\eb^\top w]/V=\frac{\gamma\sqrt r}{2\sqrt V}x(\nu)$, and $\gamma\sqrt r/\sqrt V=1/\sqrt{\kk_1}$ by
\eqref{mfl:tools-eq:gammaV}.
\end{proof}

\begin{lemma}[Quadratic term]\label{mfl:tools-lem:Q}
For every law $\nu$ with finite second moment, $Q(\nu)\le\kk_1\norm{\E_\nu[ww^\top]}$.
\end{lemma}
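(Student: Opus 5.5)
The plan is to bound $Q(\nu)=\E_{w,w'}[\abs w\abs{w'}\kk(\hatw^\top\hatw')]$ by expanding the kernel in its nonnegative power series. By Lemma~\ref{mfl:tools-lem:kernel}, $\kk(\rho)=\sum_{k\ge1}q_k\rho^k$ with all $q_k\ge0$ and $\sum_kq_k=\kk_1$. This gives
\[
Q(\nu)=\sum_{k\ge1}q_k\,\E_{w,w'}\big[\abs w\abs{w'}(\hatw^\top\hatw')^k\big].
\]
Exchanging sum and expectation is justified because $\abs{\hatw^\top\hatw'}\le1$, so the series is dominated by $\kk_1\abs w\abs{w'}$, which is integrable under finite second moments. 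It therefore suffices to show, for every $k\ge1$,
\[
Q_k:=\E_{w,w'}\big[\abs w\abs{w'}(\hatw^\top\hatw')^k\big]\le\norm{\E_\nu[ww^\top]}.
\]
Summing with weights $q_k$ then yields $Q(\nu)\le\kk_1\norm{\E_\nu[ww^\top]}$.

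To bound each term, write $(\hatw^\top\hatw')^k=\langle\hatw^{\otimes k},\hatw'^{\otimes k}\rangle$. Then
\[
Q_k=\big\lvert\E_\nu[\abs w\,\hatw^{\otimes k}]\big\rvert^2,
\]
which is in particular nonnegative. For $k=1$ this equals $\abs{\E_\nu[w]}^2\le\E_\nu[(a^\top w)^2]$, where $a$ is the unit vector along $\E_\nu[w]$; the right side is at most $\norm{\E_\nu[ww^\top]}$. For $k\ge2$, put $T:=\E_\nu[\abs w\,\hatw^{\otimes k}]$ and apply duality:
\[
\abs T=\langle T,\hat T\rangle=\E_\nu\big[\abs w\,\langle\hatw^{\otimes k},\hat T\rangle\big].
\]
I would split $\abs w\langle\hatw^{\otimes k},\hat T\rangle=\langle w,v(\hatw)\rangle$, where $v(\hatw)$ is $\hat T$ contracted with $\hatw^{\otimes(k-1)}$. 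Its norm satisfies $\abs{v(\hatw)}\le\norm{\hat T}_{\mathrm{HS}}\abs{\hatw}^{k-1}=1$. The hoped-for bound is $\abs T\le\sup_{\abs a=1}(\E[(a^\top w)^2])^{1/2}$, but $v$ varies with $w$, so the dual estimate does not close directly.

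Since the duality route does not close, I would fall back on the Gram-matrix formulation. Consider the kernel operator $K(w,w')=\abs w\abs{w'}(\hatw^\top\hatw')^k$ on $L^2(\nu)$. The quantity $Q_k$ is the quadratic form $\langle1,K1\rangle_{L^2(\nu)}$, which is at most $\norm K_{\mathrm{op}}$ since $\norm1_{L^2(\nu)}=1$. The kernel is a Schur (Hadamard) product of $A(w,w')=w^\top w'$ with $B(w,w')=(\hatw^\top\hatw')^{k-1}$. Here $B$ is positive semidefinite with unit diagonal, because $\hatw$ is a unit vector. Schur's theorem then gives $\norm{A\circ B}_{\mathrm{op}}\le\max_{\mathrm{diag}}B\cdot\norm A_{\mathrm{op}}=\norm A_{\mathrm{op}}$. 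Finally, $A$ is the Gram operator $f\mapsto\E_{w'}[w^\top w'f(w')]$, whose nonzero spectrum coincides with that of $\E_\nu[ww^\top]$, so $\norm A_{\mathrm{op}}=\norm{\E_\nu[ww^\top]}$.

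The main obstacle is justifying the Schur product bound for integral operators on $L^2(\nu)$ rather than for finite matrices. I would handle it by writing $B(w,w')=\langle\phi(w),\phi(w')\rangle$ with $\phi(w)=\hatw^{\otimes(k-1)}$ and $\abs{\phi}=1$. Then
\[
\langle f,(A\circ B)f\rangle=\sum_\beta\langle f\phi_\beta,Af\phi_\beta\rangle\le\norm A\sum_\beta\norm{f\phi_\beta}^2=\norm A\,\norm f^2.
\]
Applying this with $f=1$ gives $Q_k\le\norm{\E_\nu[ww^\top]}$ for every $k$, and the sum over $k$ completes the proof.
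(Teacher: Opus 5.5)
Your final argument is correct and is essentially the paper's proof: the paper likewise expands $\mathsf k$ in its nonnegative power series and writes $(\hat w^\top\hat w')^{k-1}=\sum_\iota\phi_\iota(\hat w)\phi_\iota(\hat w')$ with $\phi=\hat w^{\otimes(k-1)}$. It then bounds each term by $\lvert\E_\nu[\phi_\iota(\hat w)\,w]\rvert^2\le\E_\nu[\phi_\iota(\hat w)^2]\,\lVert\E_\nu[ww^\top]\rVert$ via Cauchy--Schwarz, which is exactly your $\langle g,Ag\rangle\le\lVert A\rVert\,\lVert g\rVert^2$ with $\lVert A\rVert=\lVert\E_\nu[ww^\top]\rVert$ applied to $g=\phi_\beta$; the abandoned duality attempt and the general Schur-product framing are unnecessary, since only the quadratic form at $f=1$ is used.
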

\begin{proof}
By Lemma~\ref{mfl:tools-lem:kernel}, $Q(\nu)=\sum_kq_k\,\E_{w,w'}[(w^\top w')(\hatw^\top\hatw')^{k-1}]$. Let $\phi(\hatw):=\hatw^{\otimes(k-1)}$ with
coordinates $\phi_\iota$, so that $(\hatw^\top\hatw')^{k-1}=\sum_\iota\phi_\iota(\hatw)\phi_\iota(\hatw')$ and $\sum_\iota\phi_\iota(\hatw)^2=1$. Then
\begin{equation}\label{mfl:read-l3-tensor-bound}
\begin{aligned}
\E_{w,w'}[(w^\top w')(\hatw^\top\hatw')^{k-1}]
&=\sum_\iota\abs{\E_\nu[\phi_\iota(\hatw)w]}^2\\
&\le\sum_\iota\norm{\E_\nu[ww^\top]}\,\E_\nu[\phi_\iota(\hatw)^2]=\norm{\E_\nu[ww^\top]},
\end{aligned}
\end{equation}
because $\abs{\E[fw]}^2=\sup_{\abs a=1}(\E[f\,a^\top w])^2\le\E[f^2]\norm{\E[ww^\top]}$ by Cauchy--Schwarz.
Sum \eqref{mfl:read-l3-tensor-bound} over $k$, using $q_k\ge0$ and
$\sum_kq_k=\kk_1$.
\end{proof}

\begin{lemma}[Covariance bounds]\label{mfl:tools-lem:cov}
$\Phi(\nu)\le\kk_1\gamma\sqrt r\,\norm{\E_\nu[w_Uw_U^\top]}^{1/2}$ and
\[0\le\Phi_{\mathrm{nl}}(\nu)\le(\kk_1-\tfrac14)\gamma\,\E_\nu[\abs{w_U}^2/\abs w]\le(\kk_1-\tfrac14)\gamma\sqrt r\,
\norm{\E_\nu[w_Uw_U^\top]}^{1/2}.\]
\end{lemma}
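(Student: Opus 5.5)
The plan is to work directly from the decomposition of Lemma~\ref{mfl:tools-lem:lossdecomp}, $\Phi(\nu)=\E_\nu[\Cov(Y,\relu(w^\top X))]$. With $\Cov(Y,\relu(w^\top X))=\gamma\abs w\sum_i\kk(\rho_i)$, we get $\Phi(\nu)=\gamma\E_\nu[\abs w\sum_i\kk(\rho_i)]$, and the two displayed bounds reduce to pointwise inequalities on $\kk$ followed by a Cauchy--Schwarz step.

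For the nonlinear part, Lemma~\ref{mfl:tools-lem:kernel} gives $k_2(\rho)=\sum_{n\ge1}b_n\rho^{2n}$ with $b_n>0$ and $\sum_nb_n=\frac\pi2-1$. Hence $0\le k_2(\rho)\le(\frac\pi2-1)\rho^2$, and $(\frac\pi2-1)/(2\pi)=\frac14-\frac1{2\pi}$. We have $\kk_1-\frac14=\frac{\pi-1}{2\pi}-\frac14=\frac14-\frac1{2\pi}$, so the constants match. Therefore
\[
\Phi_{\mathrm{nl}}(\nu)=\tfrac\gamma{2\pi}\E_\nu\Big[\abs w\sum_ik_2(\rho_i)\Big]\le(\kk_1-\tfrac14)\gamma\,\E_\nu[\abs w\abs\rho^2],
\]
and $\abs w\abs\rho^2=\abs{w_U}^2/\abs w$. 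Nonnegativity follows because $k_2\ge0$.

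Next I bound $\E_\nu[\abs{w_U}^2/\abs w]$ by $\sqrt r\norm{\E_\nu[w_Uw_U^\top]}^{1/2}$. Since $\abs{w_U}\le\abs w$, we have $\abs{w_U}^2/\abs w\le\abs{w_U}$. Cauchy--Schwarz then gives $\E\abs{w_U}\le(\E\abs{w_U}^2)^{1/2}=(\tr\E[w_Uw_U^\top])^{1/2}\le\sqrt r\,\norm{\E[w_Uw_U^\top]}^{1/2}$, since the trace of an $r\times r$ PSD matrix is at most $r$ times its norm.

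For the bound on $\Phi$ itself, I use $\kk(\rho)=\sum_kq_k\rho^k$ with $q_k\ge0$ and $\sum_kq_k=\kk_1$. For $\abs\rho\le1$ this gives $\abs{\kk(\rho)}\le\kk_1\abs\rho$. So $\abs w\sum_i\kk(\rho_i)\le\kk_1\abs w\sum_i\abs{\rho_i}=\kk_1\sum_i\abs{u_i^\top w}\le\kk_1\sqrt r\abs{w_U}$. Taking expectations and applying the trace bound once more would give a factor $r$, not the $\sqrt r$ claimed. This is the delicate step.

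To get $\sqrt r$, I decompose $\Phi=\Phi_{\mathrm{lin}}+\Phi_{\mathrm{nl}}$. The linear term is $\frac\gamma4\sqrt r\,\E[\eb^\top w]\le\frac\gamma4\sqrt r\,\norm{\E[w_Uw_U^\top]}^{1/2}$. For the nonlinear term I would expand $k_2$ in even powers and bound each $\E[\abs w\rho_i^{2n}]$ via the tensor trick of Lemma~\ref{mfl:tools-lem:Q}, writing $\sum_i\abs w\rho_i^{2n}=\sum_i(u_i^\top w)\cdot\rho_i^{2n-1}$ and applying Cauchy--Schwarz with $\sum_i\rho_i^{2(2n-1)}\le1$. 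This gives $\E[\abs w\sum_i\rho_i^{2n}]\le\E[(\sum_i(u_i^\top w)^2)^{1/2}]$. However, that again only yields $\E\abs{w_U}$, hence a factor $\sqrt r$ from the trace.

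I therefore expect the main obstacle to be obtaining the operator-norm (rather than trace) bound with only $\sqrt r$. What I would try first is to treat $\Phi(\nu)=\E_X[(Y-\E Y)\,\E_\nu\relu(w^\top X)]$ as a covariance. Cauchy--Schwarz over $X$ gives $\Phi\le\sqrt V\,\Var(\E_\nu\relu(w^\top X))^{1/2}$. Then Lemma~\ref{mfl:tools-lem:Q}, applied to the projected law together with the identity $\sqrt V\sqrt{\kk_1}=\kk_1\gamma\sqrt r$ from \eqref{mfl:tools-eq:gammaV}, should yield exactly $\kk_1\gamma\sqrt r\,\norm{\E[w_Uw_U^\top]}^{1/2}$. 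Only the $U$-component matters, because $Y$ depends on $X_U$ alone, so one first conditions on $X_U$ to replace $\relu(w^\top X)$ by its conditional mean.
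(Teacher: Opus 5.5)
The decomposition route you abandon does close, and it is exactly the paper's proof. The obstacle you identify is a miscount. You have already shown two bounds:
\begin{itemize}
\item $\Phi_{\mathrm{lin}}(\nu)=\frac{\gamma}{4}\sqrt r\,\E_\nu[\bar e^\top w]\le\frac{\gamma}{4}\sqrt r\,\|\E_\nu[w_Uw_U^\top]\|^{1/2}$;
\item $\Phi_{\mathrm{nl}}(\nu)\le(\mathsf k_1-\frac{1}{4})\gamma\,\E_\nu|w_U|\le(\mathsf k_1-\frac{1}{4})\gamma\sqrt r\,\|\E_\nu[w_Uw_U^\top]\|^{1/2}$.
\end{itemize}
Adding them gives $\Phi(\nu)\le\mathsf k_1\gamma\sqrt r\,\|\E_\nu[w_Uw_U^\top]\|^{1/2}$, since $\frac{1}{4}+(\mathsf k_1-\frac{1}{4})=\mathsf k_1$.

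The $\sqrt r$ produced by the trace bound is the only $\sqrt r$ the nonlinear part carries, because $\sum_i k_2(\rho_i)\le(\frac{\pi}{2}-1)|\rho|^2$ is quadratic in $\rho$. Your crude estimate lost an extra $\sqrt r$ only through $\sum_i|\rho_i|\le\sqrt r\,|\rho|$ on the linear part of $\mathsf k$. Keeping the sign there, $|w|\sum_i\rho_i=\sqrt r\,\bar e^\top w$, and bounding $|\E_\nu[\bar e^\top w]|$ by Cauchy--Schwarz against the operator norm avoids that loss.

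Your fallback covariance argument is a genuinely different route and can be completed, but its key step is not right as stated. Conditioning on $X_U$ does not turn $\mathrm{ReLU}(w^\top X)$ into $\mathrm{ReLU}(w_U^\top X_U)$. The conditional mean is a Gaussian smoothing of the latter with variance $|w_\perp|^2$. So Lemma~\ref{mfl:tools-lem:Q} applied to the law of $w_U$ bounds a different quantity: its kernel is $|w_U||w'_U|$ times $\mathsf k$ evaluated at the cosine between $w_U$ and $w'_U$.

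Using independent copies of $X_\perp$, the conditional means actually have covariance $|w||w'|\,\mathsf k(\rho^\top\rho')$, with $\rho=w_U/|w|$ and $\rho'=w'_U/|w'|$. Rerun the tensor argument from the proof of Lemma~\ref{mfl:tools-lem:Q} with $\phi=\rho^{\otimes(k-1)}$, $|\rho|\le 1$ and $|w|\rho=w_U$. This bounds the variance by $\mathsf k_1\|\E_\nu[w_Uw_U^\top]\|$, and the identity $\sqrt V\sqrt{\mathsf k_1}=\mathsf k_1\gamma\sqrt r$ finishes.

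This route bounds $\Phi$ in one Cauchy--Schwarz step without splitting off the linear part. You still need the separate $\Phi_{\mathrm{nl}}$ estimate for the second display, and the loss sandwich uses it on its own. So the paper's additive argument costs nothing extra.
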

\begin{proof}
Note $\kk_1-\frac14=\frac{\pi/2-1}{2\pi}$. By Lemma~\ref{mfl:tools-lem:kernel}, $\sum_ik_2(\rho_i)\le(\frac\pi2-1)\abs\rho^2$ and
$\abs w\abs\rho^2=\abs{w_U}^2/\abs w\le\abs{w_U}$. Moreover $\E_\nu[\abs{w_U}]\le(\tr\E_\nu[w_Uw_U^\top])^{1/2}\le\sqrt r\,\norm{\E_\nu[w_Uw_U^\top]}^{1/2}$.
This gives the bound on $\Phi_{\mathrm{nl}}$. With $\abs{\E_\nu[\eb^\top w]}\le\norm{\E_\nu[w_Uw_U^\top]}^{1/2}$,
$\Phi\le\frac\gamma4\sqrt r\,\norm{\E_\nu[w_Uw_U^\top]}^{1/2}+(\kk_1-\frac14)\gamma\sqrt r\,\norm{\E_\nu[w_Uw_U^\top]}^{1/2}$.
\end{proof}

\begin{lemma}[Loss sandwich]\label{mfl:tools-lem:sandwich}
For every law $\nu$ with finite second moment and $\nu(\{0\})=0$, with $y,y_U,x$ from Definition~\ref{mfl:tools-def:scales}:
\begin{enumerate}[label=(\alph*)]
\item $1-2\sqrt{\kk_1}\,y_U\ \le\ L/V\ \le\ 1-c_x\,x+\kk_1\,y^2$;
\item $L/V\ge1-c_x\,x-e_{\mathrm{nl}}$, where $e_{\mathrm{nl}}:=2\alpha\Phi_{\mathrm{nl}}(\nu)/V$ satisfies $0\le e_{\mathrm{nl}}\le c_{\mathrm{nl}}\,y_U$ with
$c_{\mathrm{nl}}:=2(\kk_1-\frac14)/\sqrt{\kk_1}$;
\item for the Gaussian law $\nu_0=N(0,\sigma_0^2I_d/d)$: $e_{\mathrm{nl}}(\nu_0)\le c_{\mathrm{nl}}\sqrt{r/d}\;y_0'$ with
$y_0':=\alpha\sigma_0/(\sqrt d\sqrt V)$.
\end{enumerate}
Numerically $2\sqrt{\kk_1}=1.1676387\ldots$, $c_x=0.8564292\ldots$ and $c_{\mathrm{nl}}=0.3112094\ldots$.
\end{lemma}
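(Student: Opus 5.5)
The plan is to derive all three parts from the exact decomposition of Lemma~\ref{mfl:tools-lem:lossdecomp}, $L=V-2\alpha\Phi(\nu)+\alpha^2Q(\nu)$, with $\Phi=\Phi_{\mathrm{lin}}+\Phi_{\mathrm{nl}}$. Throughout we divide by $V$ and read off each term in the units of Definition~\ref{mfl:tools-def:scales}.

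\emph{Upper bound in (a).} Since $\Phi_{\mathrm{nl}}\ge0$, we may drop it, giving $L/V\le1-2\alpha\Phi_{\mathrm{lin}}/V+\alpha^2Q/V$. The linear term equals $c_x x$ exactly by \eqref{mfl:tools-eq:linx}. Lemma~\ref{mfl:tools-lem:Q} gives $\alpha^2Q/V\le\kk_1\alpha^2\norm{\E_\nu[ww^\top]}/V=\kk_1y^2$.

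\emph{Lower bound in (a).} Since $Q=\Var(\cdot)\ge0$, we have $L/V\ge1-2\alpha\Phi/V$. Lemma~\ref{mfl:tools-lem:cov} bounds $\Phi\le\kk_1\gamma\sqrt r\,\norm{\E_\nu[w_Uw_U^\top]}^{1/2}$. Hence
\[
\frac{2\alpha\Phi}{V}\le2\kk_1\frac{\gamma\sqrt r}{\sqrt V}\,y_U=\frac{2\kk_1}{\sqrt{\kk_1}}\,y_U=2\sqrt{\kk_1}\,y_U,
\]
where the middle equality uses \eqref{mfl:tools-eq:gammaV}.

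\emph{Part (b).} Again drop $Q\ge0$ and split $2\alpha\Phi/V=c_xx+e_{\mathrm{nl}}$. Nonnegativity of $e_{\mathrm{nl}}$ is the nonnegativity of $\Phi_{\mathrm{nl}}$. For the upper bound, the last inequality of Lemma~\ref{mfl:tools-lem:cov} together with \eqref{mfl:tools-eq:gammaV} gives $e_{\mathrm{nl}}\le2(\kk_1-\frac14)y_U/\sqrt{\kk_1}=c_{\mathrm{nl}}y_U$.

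\emph{Part (c).} Here we use the sharper middle bound of Lemma~\ref{mfl:tools-lem:cov}, $\Phi_{\mathrm{nl}}\le(\kk_1-\frac14)\gamma\,\E[\abs{w_U}^2/\abs w]$, and compute this expectation under $\nu_0$. Write $w=\abs w\,\hatw$, where $\abs w$ is independent of the uniform direction $\hatw$. Then $\E[\abs{w_U}^2/\abs w]=\E\abs w\cdot\E\abs{\hatw_U}^2=\E\abs w\cdot r/d$. Since $\E\abs w\le(\E\abs w^2)^{1/2}=\sigma_0$, this gives $\E[\abs{w_U}^2/\abs w]\le\sigma_0r/d$. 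Therefore
\[
e_{\mathrm{nl}}\le2(\kk_1-\tfrac14)\gamma\,\alpha\sigma_0\,r/(dV).
\]
Substituting $\gamma=\sqrt V/(\sqrt r\sqrt{\kk_1})$ turns this into $c_{\mathrm{nl}}\sqrt{r/d}\,\alpha\sigma_0/(\sqrt d\sqrt V)=c_{\mathrm{nl}}\sqrt{r/d}\,y_0'$.

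The numerical constants follow from $\kk_1=(\pi-1)/(2\pi)$. There is no real obstacle. The only point needing care is in (c): we must use the intermediate $\E[\abs{w_U}^2/\abs w]$ bound rather than the $y_U$ form, because the latter would lose the factor $\sqrt{r/d}$.
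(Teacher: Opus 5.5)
Your proof is correct and follows the paper's argument step for step. Both drop the nonnegative term $\alpha^2Q$ (or $\Phi_{\mathrm{nl}}$, for the upper bound in (a)) and apply the quadratic-term and covariance bounds with the identity $\gamma\sqrt r/\sqrt V=1/\sqrt{\mathsf k_1}$; for (c), both use the intermediate bound on $\mathbb{E}[|w_U|^2/|w|]$ together with the independence of radius and direction under the Gaussian law.
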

\begin{proof}
Throughout, use Lemma~\ref{mfl:tools-lem:lossdecomp}.

\step{Part (a): The two-sided loss bound.}
For the lower bound, drop $\alpha^2Q\ge0$ and use Lemma~\ref{mfl:tools-lem:cov}. By \eqref{mfl:tools-eq:gammaV},
\[
2\alpha\kk_1\gamma\sqrt r\,\norm{\E[w_Uw_U^\top]}^{1/2}/V
=2\kk_1\frac{\gamma\sqrt r}{\sqrt V}y_U=2\sqrt{\kk_1}y_U.
\]
For the upper bound, $\Phi\ge\Phi_{\mathrm{lin}}$ with \eqref{mfl:tools-eq:linx}, and
$\alpha^2Q/V\le\kk_1y^2$ by Lemma~\ref{mfl:tools-lem:Q}.

\step{Part (b): Keep the nonlinear covariance explicit.}
Drop $\alpha^2Q\ge0$ and write $\Phi=\Phi_{\mathrm{lin}}+\Phi_{\mathrm{nl}}$ with \eqref{mfl:tools-eq:linx}.
By Lemma~\ref{mfl:tools-lem:cov},
\[
\begin{aligned}
e_{\mathrm{nl}}&\le2\alpha(\kk_1-\frac14)\gamma\sqrt r\,\norm{\E[w_Uw_U^\top]}^{1/2}/V\\
&=2(\kk_1-\frac14)\frac{\gamma\sqrt r}{\sqrt V}y_U=c_{\mathrm{nl}}y_U.
\end{aligned}
\]

\step{Part (c): Use the Gaussian initial law.}
Under $\nu_0$, $w=\abs w\hatw$ with $\hatw$ uniform on the sphere and independent of $\abs w$. Thus
\begin{equation}\label{mfl:read-l3-gaussian-covariance}
\E[\abs{w_U}^2/\abs w]=\E[\abs w]\cdot\E[\abs{E_U^\top\hatw}^2]=(r/d)\E[\abs w]\le\sigma_0r/d.
\end{equation}
Substituting \eqref{mfl:read-l3-gaussian-covariance} into the covariance bound gives
\[
e_{\mathrm{nl}}\le2\alpha(\kk_1-\frac14)\gamma\sigma_0r/(dV)
=2(\kk_1-\frac14)\frac{\gamma\sqrt r}{\sqrt V}\sqrt{\frac rd}\,
\frac{\alpha\sigma_0}{\sqrt d\sqrt V}.\qedhere
\]
\end{proof}

\subsection{The self-force and the mean mode}\label{mfl:tools-sec:mean}
\begin{lemma}[Self-force]\label{mfl:tools-lem:selfforce}
For every law $\nu$ with finite second moment and every $w\ne0$,
\begin{gather*}F(w;\nu)-g(\hatw)=-\E\big[(g_\nu(X)-\E[g_\nu])\,X\,\ind{w^\top X>0}\big],\\
\abs{F(w;\nu)-g(\hatw)}\le\sqrt{\kk_1/2}\;y(\nu)\sqrt V.\end{gather*}
Here $\sqrt{\kk_1/2}=0.4128226\ldots$ In units of $\cz/\sqrt r$, the natural unit of the mean force (Lemma~\ref{mfl:tools-lem:psi}),
the bound reads
\begin{equation}\label{mfl:tools-eq:e}
\frac{\sqrt r}{\cz}\sqrt{\kk_1/2}\,y\sqrt V=e(y):=\sqrt2\,\pi\kk_1\,r\,y\approx1.5143347\,r\,y.
\end{equation}
The displayed decimal coefficient is rounded upward, so using it in a small-output threshold is conservative.
\end{lemma}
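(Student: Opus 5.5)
The identity comes from splitting the centred residual, and the bound from a Cauchy--Schwarz estimate in $X$ followed by Lemma~\ref{mfl:tools-lem:Q}.

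\emph{Identity.} By Lemma~\ref{mfl:pc-lem:identity}(a), $F(w;\nu)=\E[R_\nu X\ind{w^\top X>0}]$ with
\[
R_\nu=(Y-\E Y)-(g_\nu-\E g_\nu).
\]
At $\alpha=0$ the first piece alone gives $g(\hatw)$, as in the proof of that lemma. The identity then follows by linearity of the expectation.

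\emph{Bound.} Put $\phi:=g_\nu(X)-\E[g_\nu]$ and $v:=\E[\phi X\ind{w^\top X>0}]$. For a unit vector $a$,
\[
a^\top v=\E\bigl[\phi\,(a^\top X)\ind{w^\top X>0}\bigr].
\]
A plain Cauchy--Schwarz estimate gives $\abs{a^\top v}\le\norm\phi_2\,\E[(a^\top X)^2\ind{w^\top X>0}]^{1/2}$.

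We need to check that the second factor is at most $1/\sqrt2$. Decompose $a^\top X=\beta\,\hatw^\top X+Z'$, where $Z'$ is independent of $\hatw^\top X$ and $\beta^2+\E[Z'^2]=1$. Then
\[
\E[(a^\top X)^2\ind{\hatw^\top X>0}]=\beta^2\cdot\tfrac12+\E[Z'^2]\cdot\tfrac12=\tfrac12,
\]
because the cross term vanishes and $\E[(\hatw^\top X)^2\ind{\hatw^\top X>0}]=\tfrac12$. So the factor is exactly $1/\sqrt2$, and $\abs v\le\norm\phi_2/\sqrt2$.

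Next, $\norm\phi_2^2=\Var(g_\nu(X))=\alpha^2Q(\nu)$. Lemma~\ref{mfl:tools-lem:Q} gives $Q(\nu)\le\kk_1\norm{\E_\nu[ww^\top]}$, and Definition~\ref{mfl:tools-def:scales} gives $\alpha^2\norm{\E_\nu[ww^\top]}=y^2V$. Combining,
\[
\abs{F(w;\nu)-g(\hatw)}\le\sqrt{\kk_1/2}\;y\sqrt V.
\]

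\emph{Rescaling.} Multiply by $\sqrt r/\cz=2\pi\sqrt r/\gamma$ and use $\sqrt V\sqrt r/\gamma=r\sqrt{\kk_1}$ from \eqref{mfl:tools-eq:gammaV}. This gives
\[
2\pi r\sqrt{\kk_1}\sqrt{\kk_1/2}\,y=\sqrt2\,\pi\kk_1 r\,y.
\]
The remaining step is to check the upward-rounded decimal $1.5143347$. The only point needing care is the $1/\sqrt2$ second-moment computation, since a cruder bound there loses the stated constant.
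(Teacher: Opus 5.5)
Your proof is correct and is essentially the paper's argument. You identify the teacher part of $F(w;\nu)=\mathbb{E}[R_\nu X\mathbf 1\{w^\top X>0\}]$ as $g(\hat w)$, apply Cauchy--Schwarz against $a^\top X$ with gated second moment $\tfrac12$, bound $\|g_\nu-\mathbb{E}[g_\nu]\|_2^2=\alpha^2Q(\nu)$ via Lemma~\ref{mfl:tools-lem:Q}, and rescale with \eqref{mfl:tools-eq:gammaV}, exactly as the paper does. The only difference is cosmetic: the paper gets the factor $\tfrac12$ from the symmetry $X\mapsto-X$ (the two half-spaces split $\mathbb{E}[(a^\top X)^2]=1$ equally) rather than your orthogonal Gaussian decomposition, and the remaining decimal check is just $\sqrt2\,\pi\mathsf k_1=(\pi-1)/\sqrt2=1.51433468\ldots$.
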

\begin{proof}
The identity is $F(w;\nu)=\E[R_\nu X\ind{w^\top X>0}]$ with $R_\nu=(Y-\E[Y])-(g_\nu-\E[g_\nu])$ (Lemma~\ref{mfl:pc-lem:identity}(a) and
its proof), whose teacher part is $g(\hatw)$. For a unit $a\in\R^d$, Cauchy--Schwarz gives
\begin{equation}\label{mfl:read-l3-self-force-test}
\abs{a^\top(F-g)}\le\norm{g_\nu-\E[g_\nu]}_2\,(\E[(a^\top X)^2\ind{w^\top X>0}])^{1/2}.
\end{equation}
The second factor is $\sqrt{1/2}$ by the symmetry $X\mapsto-X$, and Lemma~\ref{mfl:tools-lem:Q} gives
\[
\norm{g_\nu-\E[g_\nu]}_2^2=\alpha^2Q(\nu)\le\alpha^2\kk_1\norm{\E_\nu[ww^\top]}=\kk_1y^2V.
\]
Taking the supremum over unit $a$ in \eqref{mfl:read-l3-self-force-test} proves the norm bound.
For \eqref{mfl:tools-eq:e}, use $\sqrt V\sqrt r/\gamma=r\sqrt{\kk_1}$ from \eqref{mfl:tools-eq:gammaV} and $\cz=\gamma/(2\pi)$.
\end{proof}

For $\rho\in\R^r$ with $\abs\rho\le1$ let
\begin{gather*}
f(\rho):=\sum_{i=1}^r\Big(\frac\pi2+\arcsin\rho_i\Big)-s(\rho)\sum_{i=1}^r\rho_i,\\
\Psi=\Psi(r):=\frac\pi2-1+\frac{4}{3\sqrt6}\sqrt r,
\end{gather*}
and $\underline f:=\frac{r\pi}2-\Psi$, $\overline f:=\frac{r\pi}2+\Psi$. ($f$ is also used in Section~\ref{mfl:sec:mf}.)

\begin{lemma}[The mean force]\label{mfl:tools-lem:psi}
(a) $\eb^\top g(\hatw)=\frac{\cz}{\sqrt r}f(\rho(w))$. (b) $\abs{f(\rho)-\frac{r\pi}2}\le\Psi(r)$ for every $\abs\rho\le1$; so
$0<\underline f\le f\le\overline f$ for $r\ge2$.
\end{lemma}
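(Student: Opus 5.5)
Part (a) is a direct computation from the teacher-force formula \eqref{mfl:eq:teacherforce}. Writing $g(\hatw)=\cz\big[\sum_i(\tfrac\pi2+\arcsin\rho_i)u_i-s\,\hatw\big]$ with $s=s(\rho)$, I take the inner product with $\eb=r^{-1/2}\sum_iu_i$. Orthonormality gives $\eb^\top u_i=r^{-1/2}$, and also $\eb^\top\hatw=r^{-1/2}\sum_iu_i^\top\hatw=r^{-1/2}\sum_i\rho_i$. Hence
\[
\eb^\top g(\hatw)=\frac{\cz}{\sqrt r}\Big[\sum_i\big(\tfrac\pi2+\arcsin\rho_i\big)-s\sum_i\rho_i\Big]=\frac{\cz}{\sqrt r}f(\rho).
\]
This is the same computation already recorded in Proposition~\ref{mfl:mf-prop:updates}(a), so it suffices to cite it.

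For (b), I write $f(\rho)-\tfrac{r\pi}2=\sum_i\psi_s(\rho_i)$ with $\psi_s(x)=\arcsin x-sx$, exactly as in Lemma~\ref{mfl:mf-lem:signs}(i). That lemma gives only the cruder bound $\tfrac\pi2\sqrt r$, so the task is to sharpen it to $\Psi(r)=\tfrac\pi2-1+\tfrac{4}{3\sqrt6}\sqrt r$. The plan is to split $\arcsin x=x+(\arcsin x-x)$. This gives
\[
f(\rho)-\tfrac{r\pi}2=(1-s)\sum_i\rho_i+\sum_i(\arcsin\rho_i-\rho_i).
\]
The second sum is controlled by convexity. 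Indeed, $|\arcsin x-x|\le(\tfrac\pi2-1)x^2$ on $[-1,1]$, because $(\arcsin x-x)/x^3$ is increasing in $|x|$ and hence $(\arcsin x-x)/x^2\le\arcsin 1-1$. Therefore $|\sum_i(\arcsin\rho_i-\rho_i)|\le(\tfrac\pi2-1)|\rho|^2\le\tfrac\pi2-1$.

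The first term is the main obstacle. We have $|(1-s)\sum_i\rho_i|\le(1-s)\sqrt r\,|\rho|$, and $s\ge|\rho|^2/2$ by Proposition~\ref{mfl:mf-prop:updates}(a). Writing $t=|\rho|\in[0,1]$, it therefore suffices to maximize $t(1-t^2/2)$. Its derivative $1-\tfrac32t^2$ vanishes at $t=\sqrt{2/3}$, where the value is $\sqrt{2/3}\cdot\tfrac23=\tfrac{2\sqrt2}{3\sqrt3}=\tfrac{4}{3\sqrt6}$. This matches the constant in $\Psi$. Adding the two bounds gives $|f-\tfrac{r\pi}2|\le\Psi(r)$.

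The step needing care is the use of $1-s\ge0$, which holds by Lemma~\ref{mfl:mf-lem:signs}(iii). It lets me bound the factor $(1-s)$ above by $1-t^2/2$ without worrying about its sign. The two maxima are attained at different $\rho$, but a sum of separate upper bounds is still an upper bound.

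Finally, for positivity with $r\ge2$ I need $\tfrac{r\pi}2>\Psi(r)$. At $r=2$ the right side is $0.571+1.089\approx1.66<\pi$. For larger $r$, the left side grows linearly while $\Psi$ grows like $\sqrt r$, so the inequality persists; concretely, $\tfrac\pi2r-\tfrac4{3\sqrt6}\sqrt r$ is increasing for $r\ge1$. This gives $0<\underline f\le f\le\overline f$.
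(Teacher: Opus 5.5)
Your proposal is correct and follows the paper's proof essentially step for step: the same split $f(\rho)-\tfrac{r\pi}2=(1-s)\sum_i\rho_i+\sum_i(\arcsin\rho_i-\rho_i)$, the same use of $\tfrac12|\rho|^2\le s\le1$ with the maximum of $a(1-a^2/2)$ at $a=\sqrt{2/3}$, and the same monotonicity argument for positivity; bounding the $\arcsin$ remainder by $(\tfrac\pi2-1)x^2$ instead of $(\tfrac\pi2-1)|x|^3$ is an immaterial variant. One small arithmetic slip does not affect the conclusion $\Psi(2)<\pi$: $\Psi(2)=\tfrac\pi2-1+\tfrac{4}{3\sqrt6}\sqrt2\approx1.34$, not $1.66$ (you multiplied by $2$ rather than $\sqrt2$), so the true value is even further below $\pi$, consistent with the paper's $\tfrac{r\pi}2-\Psi(r)\approx1.80$ at $r=2$.
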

\begin{proof}
\step{Part (a): Project the teacher force.}
By \eqref{mfl:eq:teacherforce}, $g(\hatw)=\cz\big[E_U(\frac\pi2\one_r+\arcsin\rho)-s(\rho)\hatw\big]$, and $\eb^\top E_U=\one_r^\top/\sqrt r$,
$\eb^\top\hatw=\sum_i\rho_i/\sqrt r$.
\step{Part (b): Bound the deviation from the central value.}
Let $h(t):=\arcsin t-t=\sum_{n\ge1}a_nt^{2n+1}$ with $a_n\ge0$. Since $h(t)/t^3$ is nondecreasing in $\abs t$,
$\abs{h(t)}\le h(1)\abs t^3=(\frac\pi2-1)\abs t^3$. Write $a:=\abs\rho\le1$. Then
\[
f(\rho)-\frac{r\pi}2=(1-s(\rho))\sum_i\rho_i+\sum_ih(\rho_i),
\]
with $0\le1-s\le1-a^2/2$, $\abs{\sum_i\rho_i}\le\sqrt r\,a$ and $\sum_i\abs{\rho_i}^3\le a^3$. Hence
\begin{equation}\label{mfl:read-l3-mean-force-deviation}
\abs{f(\rho)-\frac{r\pi}2}\le\sqrt r\,a(1-a^2/2)+(\frac\pi2-1)a^3.
\end{equation}
The first term on the right of \eqref{mfl:read-l3-mean-force-deviation} is at most
$(\frac23)^{3/2}\sqrt r$ (maximum at $a=\sqrt{2/3}$) and the second at most $\frac\pi2-1$. Finally $\underline f>0$ for $r\ge2$:
$\frac{r\pi}2-\Psi(r)$ is increasing in $r$ and equals $1.80\ldots$ at $r=2$.
\end{proof}

\begin{lemma}[MF mean-mode increment]\label{mfl:tools-lem:mfcapture}
Let $\mathcal G\cong S_r\times O(\Dp)$ be the group of Proposition~\ref{mfl:mf-prop:symmetry}, let $\nu$ be $\mathcal G$-invariant with
finite second moment and $\nu(\{0\})=0$, and put $y:=y(\nu)$, $e:=e(y)$ as in \eqref{mfl:tools-eq:e} and $u:=\Psi/\underline f$. Every
law $\nu_t$ of the MF trajectory satisfies these hypotheses, for every $\alpha\ge0$ (Proposition~\ref{mfl:mf-prop:symmetry}(c) and
Lemma~\ref{mfl:pc-lem:PC3}). The mean-mode increment of a particle is $\delta\omega_1(w):=\eb^\top(T_\nu(w)-w)=\etab\,\eb^\top\Sigma_\nu^{\dagger1/2}F(w;\nu)$.
\begin{enumerate}[label=(\alph*)]
\item $\eb$ is an eigenvector of $\Sigma_\nu$ with eigenvalue $\lambda_1:=\E_\nu[(\eb^\top F)^2]$. If $e<\underline f$, then $\lambda_1>0$
and every particle has
\[\frac{\delta\omega_1(w)}{\etab}=\frac{\eb^\top F(w;\nu)}{\sqrt{\lambda_1}}\ \ge\ \frac{\underline f-e}{\overline f+e}\ >0.\]
\item If $e\le\underline f$, then
\[p_{11}(\nu)=\frac{\E_\nu[\delta\omega_1]}{\etab}\ \ge\ c(y):=\frac{1-e(y)/\underline f}{\sqrt{1+u^2}+e(y)/\underline f}\ \ge0.\]
Set $p_{\mathrm{MF}}(y):=c(y)$ if $e(y)\le\underline f$ and $p_{\mathrm{MF}}(y):=-1$ otherwise. Then $p_{\mathrm{MF}}$ is nonincreasing and
$p_{11}(\nu)\ge p_{\mathrm{MF}}(y(\nu))$ for every such $\nu$.
\item $p_{\mathrm{MF}}(y;r)$ is nondecreasing in $r$. A sufficient condition for $e(y)<\underline f$ is $y<\underline f/(1.5143347\,r)$; the right side is
$.5946\ldots$ at $r=2$ and increases with $r$.
\end{enumerate}
\end{lemma}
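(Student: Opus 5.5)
The plan is to use the $\mathcal G$-invariance of $\nu$ to pin down $\Sigma_\nu$ along $\eb$, and then to treat the force as the teacher force plus a small perturbation controlled by Lemma~\ref{mfl:tools-lem:selfforce}.

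\emph{Part (a).} By Proposition~\ref{mfl:mf-prop:symmetry}(b), $\Sigma_\nu=a_1\Pone+a_c\Pc+a_\perp P_\perp$ with $a_1=\E[(\eb^\top F)^2]=\lambda_1$. Hence $\eb$ is an eigenvector and
\[
\eb^\top\Sigma_\nu^{\dagger1/2}=\lambda_1^{-1/2}\eb^\top \quad\text{whenever } \lambda_1>0 .
\]
To bound $\eb^\top F$ per particle, split $F=g(\hatw)+(F-g)$. Lemma~\ref{mfl:tools-lem:psi} gives $\eb^\top g=(\cz/\sqrt r)f(\rho)\in(\cz/\sqrt r)[\underline f,\overline f]$. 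Lemma~\ref{mfl:tools-lem:selfforce}, written in units of $\cz/\sqrt r$, gives $\abs{\eb^\top(F-g)}\le(\cz/\sqrt r)\,e$. So pointwise
\[
\eb^\top F\in(\cz/\sqrt r)\,[\underline f-e,\ \overline f+e].
\]
When $e<\underline f$ this forces $\lambda_1\ge(\cz/\sqrt r)^2(\underline f-e)^2>0$, and also $\sqrt{\lambda_1}\le(\cz/\sqrt r)(\overline f+e)$. Dividing the two bounds gives (a).

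\emph{Part (b).} Here
\[
p_{11}=\frac{\E[\eb^\top F]}{\sqrt{\E[(\eb^\top F)^2]}} .
\]
Write $\eb^\top F=(\cz/\sqrt r)\,\phi$ with $\phi=f+\epsilon$ and $\abs\epsilon\le e$. Then $\E\phi\ge\E f-e$ and $\norm\phi\le\norm f+e$. Using $\abs{f-r\pi/2}\le\Psi$ gives $\norm f^2\le(\E f)^2+\Psi^2$, hence $\norm f\le\E f\sqrt{1+u^2}$, because $\E f\ge\underline f$ and $u=\Psi/\underline f$ gives $\Psi\le u\,\E f$. Therefore
\[
p_{11}\ge\frac{\E f-e}{\E f\sqrt{1+u^2}+e}.
\]
The right side is increasing in $\E f$ when $e\ge0$, so substituting $\E f=\underline f$ yields $c(y)$.

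The degenerate case $e=\underline f$ needs a separate word: $c=0$, and $\E[\eb^\top F]\ge0$ already gives $p_{11}\ge0$ (or $p_{11}=0$ if $\lambda_1=0$). Monotonicity of $p_{\mathrm{MF}}$ holds because $e(y)$ is increasing in $y$ and $c$ is decreasing in $e$. At the threshold $c\ge0>-1$, so the jump to $-1$ preserves monotonicity. Finally, $p_{11}\ge-1$ always holds by Lemma~\ref{mfl:tools-lem:growth}(c).

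\emph{Part (c).} The key fact is that $\underline f/r=\pi/2-\Psi(r)/r$ is increasing in $r$. This holds because $\Psi(r)/r=(\pi/2-1)/r+\tfrac{4}{3\sqrt6}r^{-1/2}$ is decreasing. Since $e(y)=1.514\ldots r\,y$, the ratio $e/\underline f$ is nonincreasing in $r$, and so is $u=\Psi/\underline f$. Then $c$, being decreasing in both $e/\underline f$ and $u$, is nondecreasing in $r$; the validity region $e\le\underline f$ also grows with $r$, so $p_{\mathrm{MF}}$ is nondecreasing in $r$. The sufficient condition is just $e(y)<\underline f$ rewritten, using that the decimal coefficient $1.5143347$ rounds $\sqrt2\pi\kk_1$ upward. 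The value $.5946$ at $r=2$ is the arithmetic $1.80\ldots/(2\cdot1.514)$, and it increases with $r$ by the same monotonicity.

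The main obstacle is the monotonicity bookkeeping in (b)–(c), in particular checking that the bound obtained in (b) is monotone in $\E f$ in the right direction and handling the boundary case $e=\underline f$. The per-particle estimates themselves are direct.
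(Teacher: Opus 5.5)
Your proposal is correct and follows essentially the paper's route. The $\mathcal G$-symmetry makes $\bar e$ an eigenvector of $\Sigma_\nu$, and the self-force bound places the normalized mean force in $[\underline f-e,\overline f+e]$ for every particle; parts (b)--(c) then reduce to the same monotonicity bookkeeping. The differences are cosmetic. You cite the block form of Proposition~\ref{mfl:mf-prop:symmetry}(b) where the paper argues via $\mathcal G$-fixed vectors. You also bound $\mathrm{Var}(f)\le\E[(f-r\pi/2)^2]\le\Psi^2$ and use monotonicity in $\E f$, whereas the paper bounds $\mathrm{sd}(f)\le\Psi$ and uses monotonicity in $e/M$ and $\mathrm{sd}(f)/M$.
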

\begin{proof}
\step{Part (a): Positivity of every mean increment.}
By $\mathcal G$-invariance, $Q\Sigma_\nu Q^\top=\Sigma_\nu$ for $Q\in\mathcal G$ (Proposition~\ref{mfl:mf-prop:symmetry}(a)), so
$Q\Sigma_\nu\eb=\Sigma_\nu Q\eb=\Sigma_\nu\eb$. The $\mathcal G$-fixed vectors are exactly $\spn\{\eb\}$: permutations fix only multiples
of $\one_r$ in $U$, and $-I_{U^\perp}\in\mathcal G$ fixes no nonzero vector of $U^\perp$. Therefore
\[
\Sigma_\nu\eb=(\eb^\top\Sigma_\nu\eb)\eb=\lambda_1\eb,
\]
and $\Sigma_\nu^{\dagger1/2}\eb=\lambda_1^{-1/2}\eb$ if $\lambda_1>0$ (and $=0$ if $\lambda_1=0$).

Write the mean force in its natural units:
\begin{equation}\label{mfl:read-l3-normalized-mean-force}
a(w):=\frac{\sqrt r}{\cz}\eb^\top F(w;\nu)=f(\rho(w))+\delta a(w).
\end{equation}
By Lemmas~\ref{mfl:tools-lem:psi}(a) and~\ref{mfl:tools-lem:selfforce},
$\abs{\delta a}\le e$, and $f\in[\underline f,\overline f]$ by Lemma~\ref{mfl:tools-lem:psi}(b). So $a\in[\underline f-e,\overline f+e]$. If $e<\underline f$ then $a>0$
everywhere, and
\[
\lambda_1=\frac{\cz^2}r\E_\nu[a^2]>0,
\qquad \sqrt{\lambda_1}\le\frac{\cz}{\sqrt r}(\overline f+e).
\]

\step{Part (b): Lower-bound the average increment.}
If $e=\underline f$, then $a\ge0$, so $\delta\omega_1\ge0$ and the bound $c=0$ holds.
Let $e<\underline f$. With $a$ from \eqref{mfl:read-l3-normalized-mean-force},
$\E_\nu[\delta\omega_1]/\etab=\E_\nu[a]/\norm a_{L^2(\nu)}$.
Let $M:=\E_\nu[f]\ge\underline f$ and
$\mathrm{sd}_\nu(f):=(\E_\nu[(f-\E_\nu[f])^2])^{1/2}$.
Since $f$ takes values in an interval of length $2\Psi$, $\mathrm{sd}_\nu(f)\le\Psi$;
we omit the subscript $\nu$ below. Also $\E_\nu[a]\ge M-e$ and, by Minkowski's inequality,
\[
\norm a_{L^2(\nu)}\le\norm f_{L^2(\nu)}+e=\sqrt{M^2+\mathrm{sd}(f)^2}+e.
\]
Hence
\begin{equation}\label{mfl:read-l3-average-increment}
\frac{\E_\nu[a]}{\norm a_{L^2(\nu)}}\ge\frac{1-e/M}{\sqrt{1+\mathrm{sd}(f)^2/M^2}+e/M}\ge\frac{1-e/\underline f}{\sqrt{1+u^2}+e/\underline f},
\end{equation}
because $(1-\epsilon)/(\sqrt{1+v^2}+\epsilon)$ is decreasing in $\epsilon\in[0,1]$ and in $v\ge0$.
The right side of \eqref{mfl:read-l3-average-increment} is $c(y)$. This function is decreasing in $y$ and
$c\ge0$ on $\{e\le\underline f\}$, so $p_{\mathrm{MF}}$ is nonincreasing; $p_{11}\ge-1$ always (Lemma~\ref{mfl:tools-lem:growth}(c)).
\step{Part (c): Monotonicity in the teacher rank.}
The two parameters are
\[
e/\underline f=\sqrt2\,\pi\kk_1\,y/(\frac\pi2-\Psi/r),
\qquad u=(\Psi/r)/(\frac\pi2-\Psi/r).
\]
Both decrease in $r$, because
$\Psi/r=(\frac\pi2-1)/r+(\frac23)^{3/2}/\sqrt r$ does, and $c$ decreases in $e/\underline f$ and in $u$. The set $\{e\le\underline f\}$ grows with
$r$ for the same reason. At $r=2$, $\underline f/(2\cdot1.5143347)\approx1.800997/3.028669\approx.59465$.
\end{proof}

\subsection{The plateau}\label{mfl:sec:plat}
By the loss sandwich (Lemma~\ref{mfl:tools-lem:sandwich}), $L/V-1$ is controlled by the output scales $y,y_U,x$. By
Lemma~\ref{mfl:tools-lem:growth}, these grow at most by the clock $T(t)$, which is linear in $\alpha t$. Two further facts give
the constant: in MF the initial scales are known exactly, and the mean mode $x$ does not decrease on the plateau window, so the linear term of the
loss can only lower it.

\begin{lemma}[Initial scales]\label{mfl:plat-lem:init}
For $\nu_0=N(0,\sigma_0^2I_d/d)$: $\E_{\nu_0}[ww^\top]=(\sigma_0^2/d)I_d$ and $\E_{\nu_0}[w]=0$. So, exactly,
\[y_0=y_{U0}=y_0':=\frac{\alpha\sigma_0}{\sqrt d\sqrt V}=\frac{\alpha\etab\kappa}{\sqrt V},\qquad x(0)=0.\]
\end{lemma}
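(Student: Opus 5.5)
This follows by direct computation with the Gaussian initial law, combined with the definitions of Definition~\ref{mfl:tools-def:scales}. First, the columns of $\nu_0=N(0,\sigma_0^2I_d/d)$ are centred with covariance $(\sigma_0^2/d)I_d$. Hence $\E_{\nu_0}[ww^\top]=(\sigma_0^2/d)I_d$ and $\E_{\nu_0}[w]=0$, which is immediate from the definition of the Gaussian.

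Next I evaluate the three scales.

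\begin{itemize}
\item For $y_0$, the operator norm of $(\sigma_0^2/d)I_d$ is $\sigma_0^2/d$, so $y_0=\alpha\sigma_0/(\sqrt d\sqrt V)$.
\item For $y_{U0}$, the matrix $\E_{\nu_0}[w_Uw_U^\top]=E_U^\top\E_{\nu_0}[ww^\top]E_U=(\sigma_0^2/d)I_r$, because $E_U$ has orthonormal columns. Its operator norm is again $\sigma_0^2/d$, so $y_{U0}=y_0$.
\item For the mean mode, $x(0)=\alpha\,\eb^\top\E_{\nu_0}[w]/\sqrt V=0$.
\end{itemize}

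Finally, I rewrite $y_0'$ in terms of $\kappa$. By \eqref{mfl:eq:kappa}, $\sigma_0=\kappa\etab\sqrt d$, so $\alpha\sigma_0/(\sqrt d\sqrt V)=\alpha\etab\kappa/\sqrt V$. This also matches the quantity $y_0'$ of Lemma~\ref{mfl:tools-lem:sandwich}(c).

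There is no real obstacle here. The only point to check is that the scales are computed in the original (not unit-step) coordinates, consistent with Definition~\ref{mfl:tools-def:scales}.
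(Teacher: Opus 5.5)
Your proposal is correct and follows the paper's route: the paper's proof simply says the claims are immediate from Definition~\ref{mfl:tools-def:scales} and $\sigma_0=\kappa\bar\eta\sqrt d$. You have written out that same computation in full, including the step $E_U^\top(\sigma_0^2/d)I_dE_U=(\sigma_0^2/d)I_r$ for $y_{U0}$.
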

\begin{proof}
Immediate from Definition~\ref{mfl:tools-def:scales} and $\sigma_0=\kappa\etab\sqrt d$ \eqref{mfl:eq:kappa}.
\end{proof}

\begin{lemma}[Mean-mode monotonicity]\label{mfl:plat-lem:mono}
If $y(t)<\underline f/(1.5143347\,r)$ (which is approximately $.5946$ at $r=2$ and increases with $r$), then every particle's $\eb^\top w$ increases
at step $t$, and $x(t+1)>x(t)$.
\end{lemma}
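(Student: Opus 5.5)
The plan is to derive the lemma directly from Lemma~\ref{mfl:tools-lem:mfcapture}, applied at the law $\nu:=\nu_t$. First I will check its hypotheses. By Proposition~\ref{mfl:mf-prop:symmetry}(c) the law $\nu_t$ is $\mathcal G$-invariant. By Lemma~\ref{mfl:pc-lem:PC3} it has finite second moment and no atom at $0$. Both facts hold for every head scale $\alpha\ge0$. Next I will convert the hypothesis $y(t)<\underline f/(1.5143347\,r)$ into $e(y(t))<\underline f$. This uses the definition \eqref{mfl:tools-eq:e}: there $e(y)=\sqrt2\,\pi\kk_1 r y$, and the decimal coefficient $1.5143347$ is rounded upward from $\sqrt2\,\pi\kk_1$, so $e(y)\le1.5143347\,ry<\underline f$. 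This is exactly the sufficient condition recorded in Lemma~\ref{mfl:tools-lem:mfcapture}(c).

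With $e<\underline f$ in hand, Lemma~\ref{mfl:tools-lem:mfcapture}(a) gives $\lambda_1>0$ and a uniform lower bound for every particle $w\ne0$:
\[
\eb^\top\bigl(T_{\nu_t}(w)-w\bigr)=\delta\omega_1(w)\ge\etab\,\frac{\underline f-e}{\overline f+e}>0 .
\]
Since $\nu_t(\{0\})=0$, this holds $\nu_t$-almost surely. It is the first claim: every particle's coordinate $\eb^\top w$ strictly increases at step $t$. For the second claim I will use $\nu_{t+1}=(T_{\nu_t})_\#\nu_t$. Integrating the displayed inequality gives $\E_{\nu_{t+1}}[\eb^\top w]-\E_{\nu_t}[\eb^\top w]=\E_{\nu_t}[\delta\omega_1]>0$. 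Multiplying by $\alpha/\sqrt V$ gives
\[
x(t+1)-x(t)=\Delta T\,p_{11}(\nu_t)>0,
\]
which matches Lemma~\ref{mfl:tools-lem:growth}(c). Here $p_{11}(\nu_t)\ge c(y)>0$ by Lemma~\ref{mfl:tools-lem:mfcapture}(b), because $e<\underline f$ makes the numerator of $c(y)$ positive.

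There is one subtlety, which is the only place that needs care. The strict increase of $x$ requires $\alpha>0$; when $\alpha=0$ the mean mode $x$ is identically $0$. I will note this and state the conclusion for $\alpha>0$. That is the setting of (P), where $\alpha\le\astar(t_P)$ with $\alpha>0$, while the per-particle statement holds for every $\alpha\ge0$. The numerical remark ($\approx.5946$ at $r=2$, increasing in $r$) is quoted from Lemma~\ref{mfl:tools-lem:mfcapture}(c) and needs no new argument.
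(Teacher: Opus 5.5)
Your proof is correct and is the paper's argument. Lemma~\ref{mfl:tools-lem:mfcapture}(a), with $e(y)<\underline f$ obtained from the upward-rounded coefficient, gives $\delta\omega_1>0$ for every particle, and Lemma~\ref{mfl:tools-lem:growth}(c) turns this into $x(t+1)-x(t)=\Delta T\,p_{11}(t)>0$. Your caveat that the strict increase of $x$ needs $\alpha>0$ is right: the paper leaves it implicit in $\Delta T>0$, which holds in the plateau setting $0<\alpha\le\alpha^{*}(t_P)$ where the lemma is used.
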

\begin{proof}
Lemma~\ref{mfl:tools-lem:mfcapture}(a) gives $\delta\omega_1>0$ for every particle, so $p_{11}(t)>0$ and
$x(t+1)-x(t)=\Delta T\,p_{11}(t)>0$ by Lemma~\ref{mfl:tools-lem:growth}(c).
\end{proof}

\begin{theorem}[Plateau]\label{mfl:plat-thm:PB}
Let $r\ge2$, $d>r$, $\kappa>0$ and $t_P\in\mathbb N$. Let $\bar y$ be the positive root of $1.17933\,\bar y+.34085\,\bar y^2=.01$, so
$\bar y=.0084587115\ldots$, and
\[\astar(t_P):=\frac{\bar y\sqrt V}{\etab(\kappa+t_P)}.\]
For every $\alpha\in(0,\astar(t_P)]$, the MF trajectory \eqref{mfl:eq:mfmap} from $\nu_0=N(0,\sigma_0^2I_d/d)$ satisfies
\[\max_{s,t\le t_P}\frac{L(t)}{L(s)}\le1.01\qquad\text{and}\qquad L(t)\ge.99V\quad(0\le t\le t_P),\]
and its mean mode is nondecreasing on $[0,t_P+1]$, with $x\ge0$ there.
\end{theorem}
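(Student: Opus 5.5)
The plan is to combine the loss sandwich (Lemma~\ref{mfl:tools-lem:sandwich}) with the a priori growth bound on the output scales (Lemma~\ref{mfl:tools-lem:growth}(b)). The first step is to show that on the whole window the output scales stay below $\bar y$. By Lemma~\ref{mfl:plat-lem:init}, $y_0=y_{U0}=\alpha\etab\kappa/\sqrt V$ and $x(0)=0$. Lemma~\ref{mfl:tools-lem:growth}(b) then gives, for $0\le t\le t_P$,
\[
y_U(t)\le y(t)\le y_0+T(t)=\frac{\alpha\etab(\kappa+t)}{\sqrt V}\le\frac{\alpha\etab(\kappa+t_P)}{\sqrt V}\le\bar y,
\]
where the last inequality is exactly the hypothesis $\alpha\le\astar(t_P)$.

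The second step is the monotonicity of the mean mode, proved by induction on $t$. Since $\bar y\approx.0085$ is far below the threshold $\underline f/(1.5143347\,r)$, which is at least $.5946$ for every $r\ge2$ by Lemma~\ref{mfl:tools-lem:mfcapture}(c), Lemma~\ref{mfl:plat-lem:mono} applies at every $t\le t_P$ and gives $x(t+1)>x(t)$. Starting from $x(0)=0$, this shows that $x$ is nondecreasing on $[0,t_P+1]$ and that $x(t)\ge0$ there. This step needs the MF laws to be $\mathcal G$-invariant with no atom at $0$, and this is already recorded in Lemma~\ref{mfl:tools-lem:mfcapture}.

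The third step inserts these facts into the two sides of Lemma~\ref{mfl:tools-lem:sandwich}(a). Using $x\ge0$ and $y\le\bar y$ on the upper side, and $y_U\le\bar y$ with $2\sqrt{\kk_1}=1.1676387\ldots$ on the lower side, for all $t\le t_P$ we get
\[
1-1.1676387\,\bar y\ \le\ L(t)/V\ \le\ 1+.34085\,\bar y^2,
\]
after rounding $\kk_1=.3408450\ldots$ up to $.34085$. From this, $L(t)\ge(1-1.1677\bar y)V\ge.99V$ follows at once, since $1.1677\bar y<.01$. For the ratio,
\[
\frac{L(t)}{L(s)}\le\frac{1+.34085\bar y^2}{1-1.1676387\bar y}\le1.01
\]
holds if and only if $.34085\bar y^2+1.01\cdot1.1676387\,\bar y\le.01$. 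Since $1.01\cdot1.1676387\le1.17933$, this is implied by the defining equation of $\bar y$.

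The only delicate point I anticipate is bookkeeping rather than a genuine obstacle. One must check that the decimal constants are rounded in the safe direction: $1.01\cdot2\sqrt{\kk_1}$ should be rounded up to $1.17933$, and $\kk_1$ should be rounded up to $.34085$. One must also make sure the monotonicity argument is applied at each step $t\le t_P$ using the bound $y(t)\le\bar y$ from the first step, so that the induction closes up to $t_P+1$.
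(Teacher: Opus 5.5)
Your proposal is correct and is essentially the paper's proof. It uses the a priori bound $y_U(t)\le y(t)\le\alpha\bar\eta(\kappa+t_P)/\sqrt V\le\bar y$ from Lemmas~\ref{mfl:plat-lem:init} and~\ref{mfl:tools-lem:growth}(b), mean-mode monotonicity from Lemma~\ref{mfl:plat-lem:mono}, and both sides of Lemma~\ref{mfl:tools-lem:sandwich}(a) with $\mathsf k_1\le.34085$ and $1.01\cdot2\sqrt{\mathsf k_1}\le1.17933$. The only differences are that the paper names $Y:=\alpha\bar\eta(\kappa+t_P)/\sqrt V$ and invokes monotonicity of $1.17933Y+.34085Y^2$, and that it bounds $2\sqrt{\mathsf k_1}=1.16763874\ldots$ by the upward rounding $1.1676388$. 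Your displayed lower bound uses the truncation $1.1676387$, which is slightly below the true value, so you should round up there.
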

\begin{proof}
\step{Step 1 (Control the output scales throughout the window).}
By Lemma~\ref{mfl:plat-lem:init}, $y_0=y_{U0}=\alpha\etab\kappa/\sqrt V$ and $x(0)=0$, and $T(t_P)=\alpha\etab t_P/\sqrt V$. Put
\begin{equation}\label{mfl:read-l3-window-scale}
Y:=y_0+T(t_P)=y_{U0}+T(t_P)=\alpha\etab(\kappa+t_P)/\sqrt V\le\bar y.
\end{equation}
By Lemma~\ref{mfl:tools-lem:growth}(b), $y(t)\le Y\le\bar y<.5946$
and $y_U(t)\le Y$ for $t\le t_P$. So by Lemma~\ref{mfl:plat-lem:mono} the mean mode increases at every step $t\le t_P$, and
$x(t)\ge x(0)=0$ for $t\le t_P+1$.

\step{Step 2 (Apply the loss sandwich).}
By Lemma~\ref{mfl:tools-lem:sandwich}(a), for $s,t\le t_P$,
\begin{equation}\label{mfl:read-l3-window-loss-bounds}
\begin{aligned}
\frac{L(t)}V&\le1-c_xx(t)+\kk_1y(t)^2\le1+\kk_1Y^2,\\
\frac{L(s)}V&\ge1-2\sqrt{\kk_1}\,y_U(s)\ge1-2\sqrt{\kk_1}\,Y.
\end{aligned}
\end{equation}
Now $2\sqrt{\kk_1}Y\le1.1676388\,\bar y<.00988$, which gives $L\ge.99V$.

\step{Step 3 (Verify the one-percent ratio).}
Using \eqref{mfl:read-l3-window-loss-bounds}, the required ratio is bounded by
\begin{equation}\label{mfl:read-l3-plateau-ratio}
\frac{L(t)}{L(s)}\le\frac{1+\kk_1Y^2}{1-2\sqrt{\kk_1}Y}\le1.01\iff\kk_1Y^2+1.01\cdot2\sqrt{\kk_1}\,Y\le.01.
\end{equation}
Since $\kk_1=.3408451\ldots\le.34085$ and $1.01\cdot2\sqrt{\kk_1}=1.1793151\ldots\le1.17933$, the last inequality in
\eqref{mfl:read-l3-plateau-ratio} follows from $1.17933\,Y+.34085\,Y^2\le.01$.
This holds because the left side is increasing in $Y\ge0$ and $Y\le\bar y$ by \eqref{mfl:read-l3-window-scale}.
\end{proof}

\begin{remark}[The constant $\bar y$]\label{mfl:plat-rem:ybarround}
The window enters only through $\kappa+t_P$: $\astar(t_P)=\bar y\sqrt V/(\etab\kappa(1+\tau_P))$ with $\tau_P:=t_P/\kappa$, and
$\bar y$ does not depend on $t_P$, $r$ or $d$. The decimal $.008459$, the root rounded to nearest, is \emph{not} admissible:
$1.17933\,(.008459)+.34085\,(.008459)^2=.0100003\ldots>.01$. The plateau estimate uses the exact positive root of the displayed quadratic; its decimal expansion is descriptive and is not substituted for that root.
\end{remark}

\subsection{The loss drop}\label{mfl:sec:loss}
The mechanism is the mean mode. By Lemma~\ref{mfl:tools-lem:mfcapture}, $x$ grows by at least $p_{\mathrm{MF}}(y)\,\Delta T$ per step.
This lowers the upper loss bound linearly, while the only penalty, $\kk_1y^2$, is quadratic in the output scale. The linear
part of the initial covariance cancels exactly.

\begin{theorem}[Loss-drop bound]\label{mfl:loss-thm:RLB}
Consider the MF trajectory \eqref{mfl:eq:mfmap} from $\nu_0=N(0,\sigma_0^2I_d/d)$ with $\alpha>0$, and let $t\in\mathbb N$ and $T:=T(t)$. Then
\[\frac{L(0)-L(t)}V\ \ge\ \mathcal D(T):=c_x\int_0^Tp_{\mathrm{MF}}(y_0+u)\,du-\kk_1(y_0+T)^2-e_{\mathrm{nl}},\]
with $p_{\mathrm{MF}}$ from Lemma~\ref{mfl:tools-lem:mfcapture}(b) and $0\le e_{\mathrm{nl}}\le c_{\mathrm{nl}}\sqrt{r/d}\,y_0$ (Lemma~\ref{mfl:tools-lem:sandwich}(c)).
\end{theorem}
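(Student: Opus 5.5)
The plan is to bound $L(t)/V$ from above, starting from the upper half of the loss sandwich, and to compare with $L(0)/V$, for which the Gaussian initial law gives an essentially exact value. By Lemma~\ref{mfl:tools-lem:lossdecomp} and Lemma~\ref{mfl:plat-lem:init}, at $t=0$ the linear covariance term vanishes because $x(0)=0$, and the quadratic term is nonnegative. This gives
\[
\frac{L(0)}V\ \ge\ 1-e_{\mathrm{nl}}(\nu_0),
\]
and Lemma~\ref{mfl:tools-lem:sandwich}(c) yields $e_{\mathrm{nl}}(\nu_0)\le c_{\mathrm{nl}}\sqrt{r/d}\,y_0$. At time $t$, Lemma~\ref{mfl:tools-lem:sandwich}(a) gives
\[
\frac{L(t)}V\ \le\ 1-c_x\,x(t)+\kk_1\,y(t)^2,
\]
and Lemma~\ref{mfl:tools-lem:growth}(b) bounds $y(t)\le y_0+T$. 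Subtracting the two bounds gives
\[
\frac{L(0)-L(t)}V\ \ge\ c_x\,x(t)-\kk_1(y_0+T)^2-e_{\mathrm{nl}},
\]
so the whole proof reduces to the lower bound $x(t)\ge\int_0^T p_{\mathrm{MF}}(y_0+u)\,du$.

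For the mean mode, I will telescope using Lemma~\ref{mfl:tools-lem:growth}(c):
\[
x(t)=x(0)+\Delta T\sum_{s<t}p_{11}(s)=\Delta T\sum_{s<t}p_{11}(s).
\]
Every $\nu_s$ is $\mathcal G$-invariant with no atom at $0$ (Proposition~\ref{mfl:mf-prop:symmetry}(c), Lemma~\ref{mfl:pc-lem:PC3}), so Lemma~\ref{mfl:tools-lem:mfcapture}(b) applies and gives $p_{11}(s)\ge p_{\mathrm{MF}}(y(s))$. Since $y(s)\le y_0+s\,\Delta T$ and $p_{\mathrm{MF}}$ is nonincreasing, $p_{11}(s)\ge p_{\mathrm{MF}}(y_0+s\Delta T)$. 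Monotonicity again gives, for $u\in[s\Delta T,(s+1)\Delta T]$,
\[
p_{\mathrm{MF}}(y_0+s\Delta T)\ \ge\ p_{\mathrm{MF}}(y_0+u),
\]
so $\Delta T\,p_{\mathrm{MF}}(y_0+s\Delta T)\ge\int_{s\Delta T}^{(s+1)\Delta T}p_{\mathrm{MF}}(y_0+u)\,du$. Summing over $s<t$ (left Riemann sum of a nonincreasing function) yields $x(t)\ge\int_0^T p_{\mathrm{MF}}(y_0+u)\,du$.

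The main obstacle I expect is that $p_{\mathrm{MF}}$ jumps down to $-1$ once $e(y)>\underline f$, so the bound must stay valid past that threshold. This is not a real problem, because $p_{\mathrm{MF}}$ is still a valid pointwise lower bound there ($|p_{11}|\le1$ by Lemma~\ref{mfl:tools-lem:growth}(c)) and it remains nonincreasing, which is all the Riemann comparison uses. Measurability of $p_{\mathrm{MF}}$ is immediate from monotonicity. What remains is a bookkeeping check that $e_{\mathrm{nl}}$ in the statement refers to $\nu_0$, and that the nonnegative quadratic term $\alpha^2Q(\nu_0)/V$ has been dropped in the correct direction.
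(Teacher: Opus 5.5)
Your proposal is correct and follows the paper's proof essentially step for step. You compare the lower bound on $L(0)/V$ (Lemma~\ref{mfl:tools-lem:sandwich}(b) at time $0$, where you use $x(0)=0$ directly while the paper carries $x(t)-x(0)$) with the upper bound of Lemma~\ref{mfl:tools-lem:sandwich}(a) at time $t$. You then telescope the mean-mode increments via Lemma~\ref{mfl:tools-lem:growth}(c) and Lemma~\ref{mfl:tools-lem:mfcapture}(b), and bound the left Riemann sum of the nonincreasing $p_{\mathrm{MF}}$ below by the integral, exactly as the paper does.
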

\begin{proof}
\step{Step 1 (Cancel the initial linear covariance).}
Lemma~\ref{mfl:tools-lem:sandwich}(b) at time $0$ and (a) at time $t$ give $L(0)/V\ge1-c_xx(0)-e_{\mathrm{nl}}$ and
$L(t)/V\le1-c_xx(t)+\kk_1y(t)^2$. Subtracting,
\begin{equation}\label{mfl:read-l3-loss-difference}
\frac{L(0)-L(t)}V\ge c_x\big(x(t)-x(0)\big)-\kk_1y(t)^2-e_{\mathrm{nl}}.
\end{equation}
The linear part of the initial covariance has cancelled exactly by \eqref{mfl:tools-eq:linx}.

\step{Step 2 (Integrate the mean-mode lower bound).}
By Lemma~\ref{mfl:tools-lem:growth}(c) and
Lemma~\ref{mfl:tools-lem:mfcapture}(b), which applies at every step because every $\nu_s$ is $\mathcal G$-invariant with
$\nu_s(\{0\})=0$,
\begin{equation}\label{mfl:read-l3-mean-integral}
\begin{aligned}x(t)-x(0)&=\Delta T\sum_{s<t}p_{11}(s)\ \ge\ \Delta T\sum_{s<t}p_{\mathrm{MF}}(y(s))\\
&\ge\ \Delta T\sum_{s<t}p_{\mathrm{MF}}(y_0+s\Delta T)\ \ge\
\int_0^{t\Delta T}p_{\mathrm{MF}}(y_0+u)\,du.\end{aligned}
\end{equation}
The second inequality uses $y(s)\le y_0+s\Delta T$ (Lemma~\ref{mfl:tools-lem:growth}(b)) and that $p_{\mathrm{MF}}$ is nonincreasing; the
third is the left Riemann sum of a nonincreasing function.
Substitute \eqref{mfl:read-l3-mean-integral} into \eqref{mfl:read-l3-loss-difference}, using $y(t)\le y_0+T$ and $c_x>0$.
\end{proof}

\begin{corollary}[Loss drop]\label{mfl:loss-cor:MF}
Let $r\ge2$, $d\ge64r$, $\kappa\ge1$, $t_P\in\mathbb N$ and $\alpha\in(0,\astar(t_P)]$, with $\astar$ and $\bar y$ as in
Theorem~\ref{mfl:plat-thm:PB}. Put $T_2^*:=.05086$. Then
\[L(0)-L(t_2)\ge.03\,V\qquad\text{at}\qquad t_2:=\Big\lceil\frac{T_2^*\sqrt V}{\alpha\etab}\Big\rceil,\]
that is, once the clock $T(t)$ has reached $T_2^*$. At $\alpha=\astar(t_P)$ this gives $t_2\le C_R(\kappa+t_P)+1$ with
$C_R:=T_2^*/\bar y=6.0127\ldots\le6.013$; in particular $t_2\le6.62\,\kappa+1$ whenever $t_P\le.1007\,\kappa$. For
$0<\alpha<\astar(t_P)$, $t_2=\lceil(\astar(t_P)/\alpha)\,C_R(\kappa+t_P)\rceil$.
\end{corollary}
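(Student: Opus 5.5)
The plan is to apply Theorem~\ref{mfl:loss-thm:RLB} at $t=t_2$ and show $\mathcal D(T(t_2))\ge .03$. The bound $\mathcal D(T)$ depends on $\alpha$ only through $y_0=\alpha\etab\kappa/\sqrt V$ (Lemma~\ref{mfl:plat-lem:init}) and $T$, and $e_{\mathrm{nl}}\le c_{\mathrm{nl}}\sqrt{r/d}\,y_0\le c_{\mathrm{nl}}y_0/8$ since $d\ge64r$. So the task reduces to a one-variable estimate on a compact parameter range. By the definition of $t_2$, $T:=T(t_2)\in[T_2^*,\,T_2^*+\Delta T)$. Since $\alpha\le\astar(t_P)$ and $\kappa\ge1$, we get $y_0\le\bar y\,\kappa/(\kappa+t_P)\le\bar y$ and $\Delta T=\alpha\etab/\sqrt V\le\bar y/(\kappa+t_P)\le\bar y$.

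First I would write the three terms of $\mathcal D$ so that each is monotone in $(y_0,T)$. The integral term $c_x\int_0^T p_{\mathrm{MF}}(y_0+u)\,du$ is nondecreasing in $T$, because $p_{\mathrm{MF}}\ge0$ on the relevant range: $y_0+T\le\bar y+T_2^*+\bar y<.0676$, which is well below the threshold $.5946$ of Lemma~\ref{mfl:tools-lem:mfcapture}(c). Since $p_{\mathrm{MF}}$ is nonincreasing, the integral is at least $c_x\int_0^{T_2^*}p_{\mathrm{MF}}(\bar y+u)\,du$. By Lemma~\ref{mfl:tools-lem:mfcapture}(c), $p_{\mathrm{MF}}$ is nondecreasing in $r$, so the worst case is $r=2$. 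There $c(y)$ is explicit:
\[
c(y)=\frac{1-e/\underline f}{\sqrt{1+u^2}+e/\underline f},\qquad e=\sqrt2\pi\kk_1 ry,
\]
and the integral is of the form $\int (1-\beta y)/(A+\beta y)\,dy$. This gives the closed-form logarithm mentioned in the remarks:
\[
\frac{1}{\beta}\Bigl[-(y)+(1+A)\ln(A+\beta y)/\beta\cdots\Bigr],
\]
which I would evaluate at rational endpoints with upward or downward rounding as needed. The penalty terms are controlled by $\kk_1(y_0+T)^2\le\kk_1(2\bar y+T_2^*)^2$ and $e_{\mathrm{nl}}\le c_{\mathrm{nl}}\bar y/8$.

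Then I would combine the pieces. Using the $r=2$ values $\Psi(2)$, $\underline f=\pi-\Psi(2)\approx1.801$ and $u=\Psi/\underline f$, check numerically that
\[
c_x\int_0^{T_2^*}c(\bar y+u')\,du'-\kk_1(2\bar y+T_2^*)^2-c_{\mathrm{nl}}\bar y/8\ge .03 .
\]
$T_2^*=.05086$ is presumably tuned so that this holds with small margin. The quadratic penalty is about $.34\cdot .068^2\approx.0016$, so the integral must be about $.032$. With $c\approx 1/\sqrt{1+u^2}$ near $.8$ and $c_x\approx.856$, the integral comes out to roughly $.856\cdot.8\cdot.05\approx .034$, so the inequality is plausible.

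One refinement may be needed to recover the margin. Using $T\le T_2^*+\Delta T$ in the penalty together with $y_0\le\bar y\kappa/(\kappa+t_P)$, I could bound $y_0+\Delta T\le\bar y(\kappa+1)/(\kappa+t_P)$. When $t_P\ge1$ this is at most $\bar y$, giving $y_0+T\le\bar y+T_2^*$. The case $t_P=0$ only gives $2\bar y$, which is still fine.

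The remaining claims are arithmetic. At $\alpha=\astar$ we have $T_2^*\sqrt V/(\alpha\etab)=(T_2^*/\bar y)(\kappa+t_P)=C_R(\kappa+t_P)$, and the ceiling adds at most $1$. Taking $t_P\le.1007\kappa$ gives $6.0127\cdot1.1007\le6.62$. For general $\alpha$, rewrite the ratio via $\astar/\alpha$. The main obstacle is the rigorous numerical evaluation of the logarithm integral with correct rounding and the $r$-monotonicity reduction to $r=2$. I would first confirm, with interval arithmetic, that the margin above $.03$ is not consumed by the rounded constants.
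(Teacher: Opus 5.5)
Your main displayed inequality is false, so the argument as written does not close. At $r=2$ you have $p_{\mathrm{MF}}(y)=(1-ay)/(b+ay)$ with $a\approx1.68166$ and $b\approx1.24663$. Then
\[
c_x\int_0^{T_2^*}p_{\mathrm{MF}}(\bar y+u)\,du
=c_x\Bigl[\frac{1+b}{a}\ln\frac{b+a(\bar y+T_2^*)}{b+a\bar y}-T_2^*\Bigr]\approx0.031536 .
\]
Your estimate of $0.034$ is too high, because $p_{\mathrm{MF}}$ averages about $0.72$ on this range, not $0.8$. With $\mathsf k_1(2\bar y+T_2^*)^2\approx0.001566$ and $c_{\mathrm{nl}}\bar y/8\approx0.000329$, the left side is about $0.02964<0.03$. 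The constant $T_2^*=.05086$ is tuned so that $\mathcal D(T_2^*;\bar y,2)$ clears $0.03$ by only about $7.5\times10^{-6}$; its penalty is $\mathsf k_1(\bar y+T_2^*)^2\approx0.001199$, and this is the paper's $Q(.05086)>.030007$. The extra $\bar y$ in your penalty costs $\mathsf k_1\bar y(3\bar y+2T_2^*)\approx3.7\times10^{-4}$.

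Your refinement is the correct repair when $t_P\ge1$. Since $y_0=\kappa\Delta T$ and $\Delta T\le\bar y/(\kappa+t_P)$, you get $y_0+T(t_2)<(\kappa+1)\Delta T+T_2^*\le\bar y+T_2^*$. Everything then reduces to the single check $\mathcal D(T_2^*;\bar y,2)\ge.03$, which holds, though only by the tiny margin above, so your interval-arithmetic step is essential. That is a legitimate one-endpoint alternative to the paper's argument.

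Your claim that $t_P=0$ ``is still fine'' is wrong. There the penalty can exceed $\mathsf k_1(\bar y+T_2^*)^2$ by about $2\mathsf k_1(\bar y+T_2^*)\bar y/\kappa\approx3.4\times10^{-4}/\kappa$. This swamps the margin unless $\kappa$ is at least about $45$, and the hypotheses $t_P\in\mathbb N$, $\kappa\ge1$ admit this case if $0\in\mathbb N$.

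The missing idea is to keep the two occurrences of $T$ coupled. You lower-bound the integral at $T_2^*$ but upper-bound the penalty at $T_2^*+\Delta T$, which discards the integral's gain on $[T_2^*,T(t_2)]$. Instead keep both at $T=T(t_2)$ and work with $\mathcal D^*(T):=\mathcal D(T;\bar y,2)$, whose penalty is $\mathsf k_1(\bar y+T)^2$; this is valid because $y_0\le\bar y$ for every $t_P$. The function $\mathcal D^*$ is concave, and its slope $c_x\,p_{\mathrm{MF}}(\bar y+T)-2\mathsf k_1(\bar y+T)$ is about $0.53$ at $T_2^*$, so overshooting $T_2^*$ by $\Delta T<.0085$ only helps.

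The paper certifies $\mathcal D^*\ge.03$ at $T=.05086$ and at $T=.1$, then concludes by concavity on $[T_2^*,T_2^*+\Delta T]\subset[.05086,.1]$. Showing that $\mathcal D^*$ is nondecreasing on $[T_2^*,T_2^*+\bar y]$ would serve equally well. Either route covers all $t_P\ge0$ and $\kappa\ge1$. Your closing arithmetic for $t_2$ is fine.
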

\begin{proof}
\step{Step 1 (Reduce to the largest initial scale and smallest rank).}
Write $y_0'=y_0=\alpha\etab\kappa/\sqrt V$ (Lemma~\ref{mfl:plat-lem:init}); $y_0'\le\bar y$ because $\alpha\le\astar(t_P)\le\bar y\sqrt V/(\etab\kappa)$.
Since $\sqrt{r/d}\le\frac18$, Theorem~\ref{mfl:loss-thm:RLB} gives $(L(0)-L(t))/V\ge\mathcal D(T(t);y_0',r)$, where
\[\mathcal D(T;y_0',r):=c_x\int_0^Tp_{\mathrm{MF}}(y_0'+u;r)\,du-\kk_1(y_0'+T)^2-\frac{c_{\mathrm{nl}}}8\,y_0'.\]
This function is nondecreasing in $r$, because $p_{\mathrm{MF}}(\cdot;r)$ is (Lemma~\ref{mfl:tools-lem:mfcapture}(c)), and pointwise
nonincreasing in $y_0'$, because $p_{\mathrm{MF}}$ is nonincreasing and the last two terms decrease in $y_0'$. Hence
\begin{equation}\label{mfl:read-l3-worst-case-loss-drop}
\mathcal D(T;y_0',r)\ge\mathcal D^*(T):=\mathcal D(T;\bar y,2).
\end{equation}
The function $\mathcal D^*$ is concave (the integral of a nonincreasing
function is concave, and $-\kk_1(\bar y+T)^2$ is concave), so its superlevel set $\{\mathcal D^*\ge.03\}$ is an interval.
It therefore suffices to certify two endpoints for the lower bound in \eqref{mfl:read-l3-worst-case-loss-drop}.

\step{Step 2 (Bound the mean increment by a rational function).}
We certify the endpoints using rational bounds. Put
\begin{equation}\label{mfl:read-l3-rational-increment}
\begin{gathered}
Y:=.008458712,\qquad a_+:=1.681668,\qquad b_+:=1.246628,
\\ p_-(y):=\frac{1-a_+y}{b_+ +a_+y}.
\end{gathered}
\end{equation}
The defining quadratic for $\bar y$ gives $\bar y<Y$.
At $r=2$, write $a=(e(y)/y)/\underline f$ and
$b=\sqrt{1+(\Psi/\underline f)^2}$, so that
$p_{\mathrm{MF}}(y;2)=(1-ay)/(b+ay)$ on the interval used here.
The bounds
\[
\begin{gathered}
\sqrt2<1.414214,\qquad
\pi>3.14159265358979323846264338327950288,\\
\Psi(2)\le .570797+.544332\sqrt2,\qquad
e(y)/y\le3.028672
\end{gathered}
\]
imply $a\le a_+$ and $b\le b_+$.
These last two inequalities follow by rational cross-multiplication
(and squaring for the second). Since $a_+(Y+.1)<1$, the numerator
stays positive and $p_{\mathrm{MF}}(y;2)\ge p_-(y)$ for
$0\le y\le Y+.1$.

\step{Step 3 (Integrate and bound the logarithm).}
Integrating the rational function in \eqref{mfl:read-l3-rational-increment} and using
$c_x\ge.856429$, $\kk_1\le.340846$ and $c_{\mathrm{nl}}/8\le.0389013$, we obtain, for $0\le T\le.1$,
\begin{equation}\label{mfl:read-l3-logarithmic-loss-bound}
\begin{aligned}
\mathcal D^*(T)&\ge .856429\left[
\frac{1+b_+}{a_+}\log\frac{b_+ +a_+(Y+T)}{b_+ +a_+Y}-T\right]
\\&\quad-.340846(Y+T)^2-.0389013Y.
\end{aligned}
\end{equation}
For $z(T):=a_+T/[2(b_+ +a_+Y)+a_+T]$, let
\begin{equation}\label{mfl:read-l3-rational-loss-bound}
\begin{aligned}
\ell_{12}(T)&:=2\sum_{j=0}^{11}\frac{z(T)^{2j+1}}{2j+1},\\
Q(T)&:=.856429\left[\frac{1+b_+}{a_+}\ell_{12}(T)-T\right]
\\&\quad-.340846(Y+T)^2-.0389013Y.
\end{aligned}
\end{equation}
The positive-term expansion
\[
\log((1+z)/(1-z))=2\sum_{j\ge0}z^{2j+1}/(2j+1),
\]
valid for $0\le z<1$, gives $\mathcal D^*(T)\ge Q(T)$ when substituted in
\eqref{mfl:read-l3-logarithmic-loss-bound}.

\step{Step 4 (Certify the two endpoints).}
All quantities defining $Q$ at rational $T$ are rational, and direct
cross-multiplication in \eqref{mfl:read-l3-rational-loss-bound} gives
\begin{equation}\label{mfl:read-l3-certified-endpoints}
Q(.05086)>.030007,\qquad Q(.1)>.053264.
\end{equation}
By \eqref{mfl:read-l3-certified-endpoints} and concavity, $\mathcal D^*(T)\ge.03$ throughout
$[.05086,.1]$.

\step{Step 5 (Convert the clock bound to an integer time).}
By the definition of $t_2$,
\[
T(t_2)\in[T_2^*,T_2^*+\Delta T],\qquad
\Delta T=\alpha\etab/\sqrt V\le\bar y/(\kappa+t_P)<.0085.
\]
Thus $T(t_2)$ lies in the certified interval and $L(0)-L(t_2)\ge.03V$.
At $\alpha=\astar(t_P)$, $\sqrt V/(\alpha\etab)=(\kappa+t_P)/\bar y$, which gives the bound on $t_2$; for
$t_P\le.1007\kappa$, $6.0128\cdot1.1007<6.62$. The case $\alpha<\astar(t_P)$ is the same identity.
\end{proof}

\section{The \texorpdfstring{$\alpha=0$}{alpha = 0} trajectory from the Gaussian initialization}\label{mfl:sec:rb}
\begin{secsummary}
\emph{What this section proves.} Facts about the $\alpha=0$ trajectory from $\nu_0=N(0,\kappa^2I_d)$ that hold uniformly in
$r\ge2$, $\Dp$ and $\kappa$: the initial statistics (Proposition~\ref{mfl:rb-prop:init}); per-particle monotonicity, in particular
$\abs\xi$ never decreases (Lemma~\ref{mfl:rb-lem:facts}); the $\perp$-statistic $\Gamma_\perp$ grows only through rare overshoots
(the kick budget, Lemma~\ref{mfl:rb-lem:K}); the mean mode grows linearly and lifts $\lambda_1$ (Lemma~\ref{mfl:rb-lem:M}); and an exact
three-part decomposition of the one-step change of $\Phi=(r-1)\Gamma_U$ (Lemma~\ref{mfl:rb-lem:C}).

\emph{Where it is used.} These are the inputs (F1)--(F7) of the certificate (Section~\ref{mfl:mff-sec:inputs}); Lemma~\ref{mfl:rb-lem:M}
and Lemma~\ref{mfl:rb-lem:facts}(e) are also used in Step~2 of Section~\ref{mfl:sec:main}. The conventions of Section~\ref{mfl:sec:mf} are in force.
\end{secsummary}

\subsection{Setup and initial law}\label{mfl:rb-sec:setup}
Recall $D=d-r$, $\Pone=\eb\eb^\top$ and $\Pc=P_U-\Pone$. In the reduced coordinates of
Definition~\ref{mfl:mf-def:reduced}, $w=(\omega,\sqrt D\,\zeta n)$ with $n\in S^{D-1}$,
$\omega_1=\eb^\top\omega$ and $\xi=\Pc\omega$; teacher-space vectors and projectors are identified with their
$r$-dimensional coordinate representations. Unsubscripted expectations and $L^2$ norms below use the particle law.
In units $\etab=1$ the initial law is $\nu_0=N(0,\kappa^2I_d)$. Here $\chi_k$ denotes a chi random variable with $k$ degrees of
freedom, so $\E[\chi_k]=\sqrt2\,\Gamma(\frac{k+1}2)/\Gamma(\frac k2)$, and we put $v:=\chi_{\Dp}/\sqrt{\Dp}$. Tilde units are
$\tilde\omega:=\omega/\kappa$, $\tilde\zeta:=\zeta/\kappa$ and $\tau:=t/\kappa$.

\begin{lemma}[Initial reduced law]\label{mfl:rb-lem:init}
$\nu_0$ is $O(d)$-invariant, hence $\mathcal G$-invariant, so the reduction of Section~\ref{mfl:sec:mf} applies from $t=0$. A reduced
representation of $w(0)$ is
\[\omega(0)=\kappa z,\qquad\zeta(0)=\kappa v,\qquad n,\]
with $z\sim N(0,I_r)$, $v=\chi_{\Dp}/\sqrt{\Dp}$ and $n$ uniform on $S^{\Dp-1}$, all independent. Equivalently $w(0)=\kappa\mathsf g$ with
$\mathsf g\sim N(0,I_d)$. At $\alpha=0$, every $\nu_t$ is $\mathcal G$-invariant (Proposition~\ref{mfl:mf-prop:symmetry}(c)) and
$\Sigma_{\nu_t}\succ0$ for every $t\ge0$ (Lemma~\ref{mfl:pc-lem:PC4}). So Propositions~\ref{mfl:mf-prop:updates} and~\ref{mfl:mf-prop:agop} hold at every $t\ge0$.
\end{lemma}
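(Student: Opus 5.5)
The first claim is that $\nu_0$ is $O(d)$-invariant. This holds because $N(0,\kappa^2I_d)$ is rotation invariant. Since $\mathcal G\subset O(d)$, $\nu_0$ is then also $\mathcal G$-invariant. It has no atom at $0$, so the existence part of Proposition~\ref{mfl:mf-prop:updates} applies at $t=0$.

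Next I will identify the reduced law. I write $w(0)=\kappa\mathsf g$ with $\mathsf g\sim N(0,I_d)$; this is just the rescaling to units $\etab=1$ fixed in the conventions. I split $\mathsf g=(\mathsf g_U,\mathsf g_\perp)\in\R^r\times\R^{\Dp}$, whose two blocks are independent standard Gaussians. I then set $z:=\mathsf g_U$. For the complement block I use its polar decomposition, $\mathsf g_\perp=\abs{\mathsf g_\perp}\,n$ with $n:=\mathsf g_\perp/\abs{\mathsf g_\perp}$ (defined a.s.).

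The standard fact I need is that for a standard Gaussian vector in $\R^{\Dp}$, the norm $\abs{\mathsf g_\perp}\sim\chi_{\Dp}$ and the direction $n$ are independent, with $n$ uniform on $S^{\Dp-1}$. This follows because the Gaussian density is radial, so in polar coordinates it factors into a radial part times the uniform surface measure.

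With $v:=\abs{\mathsf g_\perp}/\sqrt{\Dp}=\chi_{\Dp}/\sqrt{\Dp}$, I define $\omega(0)=\kappa z$ and $\zeta(0)=\kappa v$. Then
\[
w_U=\omega(0),\qquad w_\perp=\kappa\abs{\mathsf g_\perp}\,n=\sqrt{\Dp}\,\zeta(0)\,n .
\]
The three variables $z$, $v$, $n$ are mutually independent, and $n$ is uniform and independent of $(\omega,\zeta)$. These are exactly the requirements of Definition~\ref{mfl:mf-def:reduced}, so $(\omega(0),\zeta(0),n)$ is a valid reduced representation. The sign convention is satisfied as well, since $\zeta(0)\ge0$.

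The statements about all $t\ge0$ are cited results. $\mathcal G$-invariance of every $\nu_t$ follows from Proposition~\ref{mfl:mf-prop:symmetry}(c) by induction from $\nu_0$. Positive definiteness $\Sigma_{\nu_t}\succ0$ at $\alpha=0$ is Lemma~\ref{mfl:pc-lem:PC4}, which also supplies absolute continuity and hence $\nu_t(\{0\})=0$.

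These are precisely the hypotheses of Propositions~\ref{mfl:mf-prop:updates} and~\ref{mfl:mf-prop:agop}. The finite second moment needed in Proposition~\ref{mfl:mf-prop:agop} comes from Lemma~\ref{mfl:pc-lem:PC3}. Hence both propositions hold at every $t\ge0$.

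The only step with any content is the Gaussian polar independence; everything else is citation, so I do not expect a real obstacle.
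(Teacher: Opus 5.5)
Your proof is correct and follows the paper's argument. Writing $w(0)=\kappa\mathsf g$, you take $z$ to be the $U$-block and use the independence of the radius $\chi_D$ and the uniform direction of the Gaussian $U^\perp$-block to obtain $v$ and $n$; for the claims at all $t\ge0$ you cite Proposition~\ref{mfl:mf-prop:symmetry}(c) and Lemma~\ref{mfl:pc-lem:PC4}. One remark is superfluous: Definition~\ref{mfl:mf-def:reduced} allows either sign of $\zeta$, so noting $\zeta(0)\ge0$ checks nothing that is required.
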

\begin{proof}
$w_U(0)=\kappa z$ and $w_\perp(0)=\kappa\mathsf g_\perp$ with $\mathsf g_\perp\sim N(0,I_{\Dp})$ independent of $z$. The radius $\abs{\mathsf g_\perp}=\chi_{\Dp}$
and the direction $n:=\mathsf g_\perp/\abs{\mathsf g_\perp}$ are independent, and $n$ is uniform. So $\zeta(0)=\abs{w_\perp(0)}/\sqrt{\Dp}=\kappa v$.
\end{proof}

\paragraph{Notation.} Per particle we use the quantities of Definition~\ref{mfl:mf-def:reduced}: $\abs w$, $\rho$, $s$, $b=s/\abs w$,
$\omega_1$, $\xi=\Pc\omega$, and the \emph{contrast cosine} $k_p:=\abs{\Pc\rho}=\abs\xi/\abs w$. By Proposition~\ref{mfl:mf-prop:updates}(b)
the updates are
\begin{equation}\label{mfl:read-l4-updates}
\begin{aligned}
\delta\omega_1&=\frac f{\norm f},&
\delta\xi&=\frac h{\sqrt{\bar a}},&
\bar a&=\frac{\E[\abs h^2]}{r-1},\\
\zeta^+&=\zeta\Big(1-\frac bN\Big),&&&N&=\norm{b\zeta}.
\end{aligned}
\end{equation}
Normalizations at time $t$ carry a subscript: $N_t$, $\bar a_t$, and $b_t$ for the per-particle $b$. The statistics are
\[\Phi:=\E\Big[\frac{\abs\xi^2}{\abs w}\Big]=(r-1)\Gamma_U,\qquad\Gamma_\perp:=\frac{\E[\abs\zeta]}{\sqrt{\Dp-1/2}},\qquad
\sigma_c^2:=\frac{\E[\abs\xi^2]}{r-1},\]
and $\lambda_1,\lambda_c,\lambda_\perp$ are the eigenvalues of $\mathcal M(\nu_t)$ (Proposition~\ref{mfl:mf-prop:agop}).

\begin{proposition}[Initial values]\label{mfl:rb-prop:init}
\[\Gamma_U(0)=\kappa\,\frac{\E[\chi_d]}d,\qquad\Phi(0)=\kappa(r-1)\frac{\E[\chi_d]}d,\qquad
\Gamma_\perp(0)=\frac{\kappa\,\E[v]}{\sqrt{\Dp-1/2}},\qquad\sigma_c(0)=\kappa,\]
\[R_0:=\frac{\Gamma_U(0)}{\Gamma_\perp(0)}=\frac{\E[\chi_d]}{d}\cdot\frac{\sqrt{\Dp}\sqrt{\Dp-1/2}}{\E[\chi_{\Dp}]}.\]
\end{proposition}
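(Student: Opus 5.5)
The plan is to compute each statistic directly from the reduced representation of Lemma~\ref{mfl:rb-lem:init}: $\omega(0)=\kappa z$, $\zeta(0)=\kappa v$, and hence $w(0)=\kappa\mathsf g$ with $\mathsf g\sim N(0,I_d)$. All four quantities are homogeneous of degree one in $\kappa$. So it suffices to take $\kappa=1$ and multiply by $\kappa$ at the end.

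\textbf{The contrast scale $\sigma_c(0)$.} Here $\xi=\Pc\omega=\kappa\Pc z$ and $\Pc$ is a rank-$(r-1)$ projector, so $\E[\abs\xi^2]=\kappa^2\tr\Pc=\kappa^2(r-1)$. This gives $\sigma_c(0)=\kappa$.

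\textbf{The perpendicular statistic $\Gamma_\perp(0)$.} Since $v\ge0$, $\E[\abs{\zeta(0)}]=\kappa\E[v]$. Dividing by $\sqrt{\Dp-1/2}$ gives the stated formula.

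\textbf{The contrast statistic $\Gamma_U(0)$ and $\Phi(0)$.} This is the only step with any content. Use the polar decomposition of the isotropic Gaussian: $\mathsf g=\abs{\mathsf g}\,\hat{\mathsf g}$, where $\abs{\mathsf g}\sim\chi_d$ is independent of $\hat{\mathsf g}$, which is uniform on $S^{d-1}$. Then
\[
\frac{\abs\xi^2}{\abs w}=\kappa\,\abs{\mathsf g}\,\abs{\Pc E_U^\top\hat{\mathsf g}}^2 .
\]
By independence its expectation is $\kappa\,\E[\chi_d]\,\E\abs{\Pc E_U^\top\hat{\mathsf g}}^2$. For $\hat{\mathsf g}$ uniform on $S^{d-1}$ we have $\E[\hat{\mathsf g}\hat{\mathsf g}^\top]=I_d/d$, so the second factor is $\tr(\Pc)/d=(r-1)/d$. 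This is the same computation as \eqref{mfl:read-l3-gaussian-covariance}, with $\Pc$ in place of $P_U$. Hence
\[
\Phi(0)=\kappa(r-1)\frac{\E[\chi_d]}{d},
\]
and dividing by $r-1$ gives $\Gamma_U(0)$.

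\textbf{The ratio $R_0$.} Take the quotient $\Gamma_U(0)/\Gamma_\perp(0)$ and substitute $\E[v]=\E[\chi_{\Dp}]/\sqrt{\Dp}$. This yields the displayed formula for $R_0$.

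No step is a real obstacle. The only point that needs care is the independence of radius and direction in the computation of $\Gamma_U(0)$, which justifies factoring the expectation there.
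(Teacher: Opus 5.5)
Your proposal is correct and follows the paper's proof essentially line by line. Like the paper, you get $\Phi(0)$ from the radius--direction factorization $\lvert\xi\rvert^2/\lvert w\rvert=\kappa\lvert\mathsf g\rvert\,\lvert P_c E_U^\top\hat{\mathsf g}\rvert^2$ with $\E\lvert P_c E_U^\top\hat{\mathsf g}\rvert^2=(r-1)/d$, and the remaining values (and $R_0$, via $\E[v]=\E[\chi_D]/\sqrt D$) by directly computing $\E\lvert\zeta(0)\rvert=\kappa\E[v]$ and $\E\lvert P_c z\rvert^2=r-1$.
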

\begin{proof}
By Lemma~\ref{mfl:rb-lem:init}, $w(0)=\kappa\mathsf g$. Write $\mathsf g=\abs{\mathsf g}\,e$ with $e$ uniform on $S^{d-1}$ and independent of
$\abs{\mathsf g}=\chi_d$. The contrast statistic separates into a radius and a direction:
\[
\abs\xi^2/\abs w=\kappa\abs{\mathsf g}\,\abs{\Pc e_U}^2,
\qquad \E[\abs{\Pc e_U}^2]=\tr(\Pc)/d=(r-1)/d.
\]
Thus $\Phi(0)=\kappa\,\E[\chi_d]\,(r-1)/d$. The remaining initial values follow from
\[
\E[\abs{\zeta(0)}]=\kappa\E[v],
\qquad \sigma_c(0)^2=\kappa^2\E[\abs{\Pc z}^2]/(r-1)=\kappa^2.\qedhere
\]
\end{proof}

\begin{lemma}[Per-particle facts]\label{mfl:rb-lem:facts}
For every $t\ge0$:
\begin{enumerate}[label=(\alph*)]
\item $\delta\omega_1(t)\in[c_1,1/c_1]$ for every particle, with $c_1=f_{\min}/f_{\max}\ge(\sqrt r-1)/(\sqrt r+1)$;
\item $\abs{\xi(t)}$ is nondecreasing in $t$ for every particle;
\item $\E[\abs{\delta\xi(t)}^2]=r-1$, hence $\sigma_c(t)\le\kappa+t$;
\item $\E[(\delta\zeta(t))^2]=1$, hence $\norm{\zeta(t)-\zeta(0)}\le t$;
\item $\abs{\xi(t)}>0$ and $\Pc\rho\ne0$ for almost every particle, hence $\Phi(t)>0$ and $\Gamma_U(t)>0$.
\end{enumerate}
\end{lemma}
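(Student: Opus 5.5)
Each of the five items follows from the exact reduced updates \eqref{mfl:read-l4-updates} (Proposition~\ref{mfl:mf-prop:updates}(b),(c)), together with Lemma~\ref{mfl:mf-lem:signs} and Lemma~\ref{mfl:pc-lem:PC4}; we argue item by item, by induction on $t$ where needed.

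\textbf{Items (a), (c) and (d).} Item (a) is Lemma~\ref{mfl:mf-lem:signs}(i) applied at time $t$. This is legitimate because $\Sigma_{\nu_t}\succ0$ by Lemma~\ref{mfl:pc-lem:PC4}, so the reduced update is defined at every step. For (c), whitening (Proposition~\ref{mfl:mf-prop:updates}(c)) gives $\E[\abs{\delta\xi(t)}^2]=r-1$. Minkowski's inequality in $L^2(\nu)$, applied to $\xi(t+1)=\xi(t)+\delta\xi(t)$, then gives $\sigma_c(t+1)\le\sigma_c(t)+1$. Starting from $\sigma_c(0)=\kappa$ (Proposition~\ref{mfl:rb-prop:init}), this yields $\sigma_c(t)\le\kappa+t$. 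Item (d) is the same Minkowski telescoping applied to $\zeta$, using $\E[(\delta\zeta)^2]=1$.

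\textbf{Item (b).} Write $\abs{\xi^+}^2=\abs\xi^2+2\xi\cdot\delta\xi+\abs{\delta\xi}^2$. Since $\delta\xi=h/\sqrt{\bar a}$ with $\bar a>0$, Lemma~\ref{mfl:mf-lem:signs}(ii) gives $\xi\cdot h\ge(1-s)\abs\xi^2/\abs w\ge0$, and $s\le1$ by Lemma~\ref{mfl:mf-lem:signs}(iii). Hence $\abs{\xi^+}^2\ge\abs\xi^2$ for every particle, with no exceptional set. (When $w=0$ the particle does not move.)

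\textbf{Item (e).} At $t=0$, $\xi(0)=\kappa\Pc z$ with $z\sim N(0,I_r)$ and $r\ge2$, so $\xi(0)\ne0$ almost surely. By (b), $\abs{\xi(t)}\ge\abs{\xi(0)}>0$ for almost every particle and every $t$. Then $\Pc\rho=\xi/\abs w\ne0$ whenever $w\ne0$, and $\nu_t(\{0\})=0$ by Lemma~\ref{mfl:pc-lem:PC3}. Hence $\abs\xi^2/\abs w>0$ almost surely, so $\Phi(t)=\E[\abs\xi^2/\abs w]>0$ and $\Gamma_U(t)=\Phi(t)/(r-1)>0$.

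The one point needing care is integrability: $\Phi(t)$ must be finite for the conclusion to be meaningful. This holds because $\abs\xi^2/\abs w\le\abs w$ and $\nu_t\in\mathcal P_2$. Beyond that, the only real input is the sign inequality of Lemma~\ref{mfl:mf-lem:signs}(ii), and I expect no serious obstacle.
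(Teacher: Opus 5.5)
Your proof is correct and follows the paper's argument essentially step for step. Item (a) is read off from Lemma~\ref{mfl:mf-lem:signs}(i); item (b) comes from expanding $\lvert\xi^+\rvert^2$ and applying the sign inequality of Lemma~\ref{mfl:mf-lem:signs}(ii); items (c)--(d) use the whitening identities plus Minkowski; and item (e) uses $\xi(0)=\kappa P_c z\neq0$ almost surely together with monotonicity. Your added remarks on the finiteness of $\Phi(t)$ and on $\nu_t(\{0\})=0$ are harmless extra care.
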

\begin{proof}
\step{Part (a): Mean increment.}
This is Lemma~\ref{mfl:mf-lem:signs}(i).

\step{Part (b): Contrast monotonicity.}
The contrast update in \eqref{mfl:read-l4-updates} and Lemma~\ref{mfl:mf-lem:signs}(ii) give
\begin{equation}\label{mfl:read-l4-contrast-monotonicity}
\abs{\xi^+}^2-\abs\xi^2=2\,\xi\cdot h/\sqrt{\bar a}+\abs{\delta\xi}^2\ge0.
\end{equation}

\step{Parts (c) and (d): Second-moment bounds.}
The identities are Proposition~\ref{mfl:mf-prop:updates}(c). By Minkowski,
\[
\begin{aligned}
\norm{\xi(t)}&\le\norm{\xi(0)}+t\sqrt{r-1}=\sqrt{r-1}(\kappa+t),\\
\norm{\zeta(t)-\zeta(0)}&\le\sum_{s<t}\norm{\delta\zeta(s)}=t.
\end{aligned}
\]

\step{Part (e): Strict positivity.}
Initially $\xi(0)=\kappa\Pc z\ne0$ almost surely. By \eqref{mfl:read-l4-contrast-monotonicity},
$\abs{\xi(t)}\ge\abs{\xi(0)}$; also $\Pc\rho=\xi/\abs w$.
\end{proof}

\subsection{The kick budget and the mean mode}
Define the \emph{overshoot mass} and the \emph{kick budget}
\[\nu_{\rm os}(t):=\nu_t\big(b_t>2N_t\big),\qquad K_\nu(t):=\frac1\kappa\sum_{s<t}\sqrt{\nu_{\rm os}(s)}.\]

\begin{lemma}[Kick budget: $\Gamma_\perp$ grows only through overshoot]\label{mfl:rb-lem:K}
Every particle with $b_t\le2N_t$ has $\abs{\zeta(t+1)}\le\abs{\zeta(t)}$. Moreover $\E[\abs{\zeta(t+1)}]\le\E[\abs{\zeta(t)}]+\sqrt{\nu_{\rm os}(t)}$,
hence $\Gamma_\perp(t)\le\Gamma_\perp(0)\big(1+K_\nu(t)/\E[v]\big)$.
\end{lemma}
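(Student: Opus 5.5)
The argument is per-particle. The only inputs are the exact $\perp$ update $\zeta^+=\zeta(1-b_t/N_t)$ from Proposition~\ref{mfl:mf-prop:updates}(b) and the whitening identity $\norm{b_t\zeta(t)}=N_t$, i.e.\ $\E[(\delta\zeta)^2]=1$ from Proposition~\ref{mfl:mf-prop:updates}(c). Recall $b_t\ge0$ by Lemma~\ref{mfl:mf-lem:signs}(iii), and $N_t>0$ by Lemma~\ref{mfl:pc-lem:PC4}.

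\emph{Step 1 (the first claim).} We have $\abs{\zeta(t+1)}=\abs{\zeta(t)}\,\abs{1-b_t/N_t}$. If $0\le b_t\le2N_t$, then $1-b_t/N_t\in[-1,1]$, so $\abs{\zeta(t+1)}\le\abs{\zeta(t)}$ with no further work.

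\emph{Step 2 (the one-step expectation bound).} Split on the overshoot event $E:=\{b_t>2N_t\}$. On $E^c$, Step~1 gives $\abs{\zeta(t+1)}\le\abs{\zeta(t)}$. On $E$, the triangle inequality gives $\abs{\zeta(t+1)}\le\abs{\zeta(t)}+\abs{\delta\zeta(t)}$ with $\delta\zeta=-b_t\zeta/N_t$. Taking expectations,
\[
\E\abs{\zeta(t+1)}\le\E\abs{\zeta(t)}+\E\big[\abs{\delta\zeta}\,\ind{E}\big].
\]
Cauchy--Schwarz then gives
\[
\E\big[\abs{\delta\zeta}\,\ind{E}\big]\le\big(\E[(\delta\zeta)^2]\big)^{1/2}\,\nu_t(E)^{1/2}=\sqrt{\nu_{\rm os}(t)},
\]
using $\E[(\delta\zeta)^2]=1$.

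A sharper constant is available on $E$: there $\abs{1-b/N}=b/N-1<b/N$, so the same bound holds even replacing $\abs{\zeta(t)}+\abs{\delta\zeta}$ by $\abs{\delta\zeta}$. This is not needed.

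\emph{Step 3 (summing).} Summing Step~2 over $s<t$ gives $\E\abs{\zeta(t)}\le\E\abs{\zeta(0)}+\kappa K_\nu(t)$. Proposition~\ref{mfl:rb-prop:init} gives $\E\abs{\zeta(0)}=\kappa\E[v]$ in units $\etab=1$. Dividing by $\sqrt{\Dp-1/2}$ yields
\[
\Gamma_\perp(t)\le\frac{\kappa\E[v]+\kappa K_\nu(t)}{\sqrt{\Dp-1/2}}=\Gamma_\perp(0)\big(1+K_\nu(t)/\E[v]\big).
\]

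There is no real obstacle. The only point needing care is that the whitening normalization $N_t=\norm{b_t\zeta}$ is taken over the full law, so that $\E[(\delta\zeta)^2]=1$ exactly; this is what makes the Cauchy--Schwarz step produce $\sqrt{\nu_{\rm os}}$ with constant one.
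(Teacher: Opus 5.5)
Your proof is correct and follows the paper's argument. Both establish the pointwise bound $|\zeta^+|-|\zeta|\le \mathbf 1\{b>2N\}\,b|\zeta|/N=\mathbf 1\{b>2N\}\,|\delta\zeta|$: you obtain it from the triangle inequality, and the paper by computing $|\zeta|(b/N-2)$ directly. The rest is the same: Cauchy--Schwarz with $\|b\zeta\|=N$, then summing over $s<t$ and dividing by $\sqrt{D-1/2}$.
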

\begin{proof}
\step{Step 1 (Isolate the overshoots).}
By \eqref{mfl:read-l4-updates}, $\zeta^+=\zeta(1-b/N)$ with $b\ge0$. If $b\le2N$ then $\abs{1-b/N}\le1$ and $\abs{\zeta^+}\le\abs\zeta$.
If $b>2N$ then
\[
\abs{\zeta^+}-\abs\zeta=\abs\zeta(b/N-2)\le b\abs\zeta/N.
\]
Thus the pointwise bound is
\begin{equation}\label{mfl:read-l4-overshoot-increment}
\abs{\zeta^+}-\abs\zeta\le\ind{b>2N}\,b\abs\zeta/N.
\end{equation}

\step{Step 2 (Average and sum).}
Cauchy--Schwarz bounds the numerator in \eqref{mfl:read-l4-overshoot-increment}:
\[
\E[\ind{b>2N}\,b\abs\zeta]\le\sqrt{\nu_{\rm os}}\,\norm{b\zeta}=\sqrt{\nu_{\rm os}}\,N.
\]
Summing over $s<t$ gives
\[
\E[\abs{\zeta(t)}]\le\E[\abs{\zeta(0)}]+\kappa K_\nu(t)=\kappa(\E[v]+K_\nu(t)).
\]
Divide by $\sqrt{\Dp-1/2}$ and use
$\Gamma_\perp(0)=\kappa\E[v]/\sqrt{\Dp-1/2}$ (Proposition~\ref{mfl:rb-prop:init}).
\end{proof}

\begin{lemma}[Mean mode]\label{mfl:rb-lem:M}
For every $t\ge0$, $\E[\delta\omega_1(t)]\ge2\sqrt{f_{\min}f_{\max}}/(f_{\min}+f_{\max})\ge\sqrt{1-1/r}$. Hence $\E[\omega_1(t)]\ge\sqrt{1-1/r}\;t$ and
$\lambda_1(t)\ge(\E[\omega_1(t)])^2/4$.
\end{lemma}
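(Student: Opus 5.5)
The plan has three parts, one for each claim in Lemma~\ref{mfl:rb-lem:M}: a per-step lower bound on the average mean increment, its telescoped consequence for $\E[\omega_1(t)]$, and the resulting bound on $\lambda_1$.

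\textbf{Per-step bound.} By Proposition~\ref{mfl:mf-prop:updates}(b), $\delta\omega_1=f(\rho)/\norm f$, so $\E[\delta\omega_1(t)]=\E[f]/\norm f$ under $\nu_t$. By Lemma~\ref{mfl:mf-lem:signs}(i), $f$ takes values in $[f_{\min},f_{\max}]$ with $f_{\min}>0$. The ratio $\E[f]/(\E[f^2])^{1/2}$ is then bounded below by the Kantorovich (Pólya--Szegő) inequality: for a positive variable in $[m,M]$,
\[
\E[f^2]\le\frac{(m+M)^2}{4mM}\,(\E[f])^2 .
\]
To see this, start from $(f-m)(M-f)\ge0$, which gives $\E[f^2]\le(m+M)\E[f]-mM$, and then maximize $\bigl((m+M)\mu-mM\bigr)/\mu^2$ over $\mu$. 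Applied with $m=f_{\min}$, $M=f_{\max}$, this yields $\E[\delta\omega_1]\ge2\sqrt{f_{\min}f_{\max}}/(f_{\min}+f_{\max})$.

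For the numerical bound, the expression $2\sqrt{c_1}/(1+c_1)$ is increasing in $c_1=f_{\min}/f_{\max}\in(0,1]$. Lemma~\ref{mfl:mf-lem:signs}(i) gives $c_1\ge(\sqrt r-1)/(\sqrt r+1)$. Substituting, $1+c_1=2\sqrt r/(\sqrt r+1)$ and $\sqrt{c_1}=\sqrt{r-1}/(\sqrt r+1)$, so the expression equals $\sqrt{r-1}/\sqrt r=\sqrt{1-1/r}$.

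\textbf{Linear growth.} At $\alpha=0$ we have $\E[\omega_1(0)]=\kappa\E[\eb^\top z]=0$ by Lemma~\ref{mfl:rb-lem:init}. Since $\omega_1(t+1)=\omega_1(t)+\delta\omega_1(t)$ particlewise, telescoping gives
\[
\E[\omega_1(t)]=\sum_{s<t}\E[\delta\omega_1(s)]\ge\sqrt{1-1/r}\;t .
\]

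\textbf{The bound on $\lambda_1$.} This is the step that needs care. By Proposition~\ref{mfl:mf-prop:agop}(0), $\lambda_1=\E_X[(\eb^\top\mathsf v(X))^2]$. I would lower-bound it by a test-function argument, mirroring part (1) of that proposition. Use the test variable $\eb^\top X$, which has unit second moment. By Fubini and \eqref{mfl:read-l1-gated-moments},
\[
\E_X[(\eb^\top\mathsf v(X))(\eb^\top X)]=\E_w\bigl[\omega_1\,\E_X[(\eb^\top X)\ind{w^\top X>0}]\bigr]=\E_w\bigl[\omega_1^2/(\sqrt{2\pi}\abs w)\bigr].
\]
This route gives a $|w|$-weighted quantity and not the clean $(\E\omega_1)^2/4$, so a different route is better.

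The cleaner route is to write $\eb^\top\mathsf v(X)=\E_w[\ind{w^\top X>0}\,\omega_1]$ and average over the sign symmetry $X\mapsto-X$. Since $\ind{w^\top X>0}+\ind{-w^\top X>0}=1$ almost surely, we get
\[
\eb^\top\mathsf v(X)+\eb^\top\mathsf v(-X)=\E[\omega_1] .
\]
Because $X$ and $-X$ have the same law, $\E_X[(\eb^\top\mathsf v(X))^2]=\E_X[(\eb^\top\mathsf v(-X))^2]$. Hence $\lambda_1$ is the average of these two second moments, and by $a^2+b^2\ge(a+b)^2/2$,
\[
\lambda_1\ge\tfrac14\,\E_X\bigl[(\eb^\top\mathsf v(X)+\eb^\top\mathsf v(-X))^2\bigr]=\tfrac14(\E[\omega_1])^2 .
\]
Integrability of $\omega_1$ follows from finite second moments (Lemma~\ref{mfl:pc-lem:PC3}), and $\nu_t(\{0\})=0$ makes the gate identity hold almost surely.
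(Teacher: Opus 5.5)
Your proof is correct and follows the paper's argument. The per-step bound is the same P\'olya--Szeg\H{o} inequality from $(f-f_{\min})(f_{\max}-f)\ge0$; the paper closes it with $((a+b)\E[Z]-2ab)^2\ge0$, which is your one-variable maximization. The telescoping step is identical.

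For $\lambda_1$, the paper applies Jensen directly: $\lambda_1\ge(\E_X[\bar e^\top\mathsf v(X)])^2$, and $\E_X[\bar e^\top\mathsf v(X)]=\tfrac12\E[\omega_1]$ because $\mathbb{P}_X(w^\top X>0)=\tfrac12$. Your $X\mapsto-X$ symmetrization is an equivalent pointwise form of the same step; it is the split the paper later uses in Lemma~\ref{mfl:mff-lem:L1split}.
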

\begin{proof}
\step{Step 1 (The P\'olya--Szeg\H{o} bound).}
Let $Z\in[a,b]$ a.s.\ with $0<a\le b$. Then $(Z-a)(b-Z)\ge0$ gives $\E[Z^2]\le(a+b)\E[Z]-ab$.
Together with $((a+b)\E[Z]-2ab)^2\ge0$, this yields
\[
(a+b)^2(\E[Z])^2\ge4ab[(a+b)\E[Z]-ab]\ge4ab\,\E[Z^2].
\]
Equivalently,
\begin{equation}\label{mfl:read-l4-polya-szego}
\E[Z]/\sqrt{\E[Z^2]}\ge2\sqrt{ab}/(a+b).
\end{equation}

\step{Step 2 (Accumulate the mean increment).}
Apply \eqref{mfl:read-l4-polya-szego} to $Z=f(\rho)$ under $\nu_t$, with $a=f_{\min}$ and $b=f_{\max}$
(Lemma~\ref{mfl:mf-lem:signs}(i)). Then
\[
\E[\delta\omega_1]=\E[f]/\norm f\ge2\sqrt{c_1}/(1+c_1).
\]
The map $c\mapsto2\sqrt c/(1+c)$ is nondecreasing on $(0,1]$, and
$c_1\ge c_r:=(\sqrt r-1)/(\sqrt r+1)$, for which $2\sqrt{c_r}/(1+c_r)=\sqrt{(r-1)/r}$.
Since $\E[\omega_1(0)]=\kappa\,\E[\eb^\top z]=0$, we obtain
\begin{equation}\label{mfl:read-l4-mean-growth}
\E[\omega_1(t)]=\sum_{s<t}\E[\delta\omega_1(s)]\ge\sqrt{1-1/r}\,t.
\end{equation}

\step{Step 3 (Pass to the AGOP eigenvalue).}
To turn \eqref{mfl:read-l4-mean-growth} into an eigenvalue bound, test with the constant function:
\[
\begin{aligned}
\lambda_1&=\E_X[(\eb^\top\mathsf v(X))^2]\ge(\E_X[\eb^\top\mathsf v(X)])^2,\\
\E_X[\eb^\top\mathsf v(X)]&=\E_w[\Prob_X(w^\top X>0)\,\omega_1]=\frac12\E[\omega_1].\qedhere
\end{aligned}
\]
\end{proof}

\subsection{The contrast step}
Write $\varphi(x,y):=x^2/\sqrt{x^2+y^2}$ for $x,y\ge0$ (and $\varphi(0,0):=0$). For each particle put
\[y:=(\omega_1^2+\Dp\zeta^2)^{1/2},\qquad y_m:=((\omega_1+\delta\omega_1)^2+\Dp\zeta^2)^{1/2},\qquad
y^+:=((\omega_1+\delta\omega_1)^2+\Dp(\zeta^+)^2)^{1/2}.\]
Since $\abs w^2=\omega_1^2+\abs\xi^2+\Dp\zeta^2$, we have $\abs\xi^2/\abs w=\varphi(\abs\xi,y)$ and
$\abs{\xi^+}^2/\abs{w^+}=\varphi(\abs{\xi^+},y^+)$. Define the contrast, mean-mode and $\perp$ steps
\begin{gather*}dU:=\varphi(\abs{\xi^+},y)-\varphi(\abs\xi,y),\\
dM:=\varphi(\abs{\xi^+},y_m)-\varphi(\abs{\xi^+},y),\\
dP:=\varphi(\abs{\xi^+},y^+)-\varphi(\abs{\xi^+},y_m).\end{gather*}
For particles with $\Pc\rho\ne0$ put $F_\rho:=\Pc\rho\cdot h/k_p^2$ and $\hat h(k):=\min\big(k(2-k^2),1\big)$, and let
$\ell:=(1-\rho_\infty^2)^{-1/2}-1$.

\begin{lemma}[One-step decomposition of $\Phi$]\label{mfl:rb-lem:C}
For one step $t\to t+1$, $\Phi(t+1)-\Phi(t)=\E[dU+dM+dP]$ exactly, and:
\begin{enumerate}[label=(\roman*)]
\item $dU\ge\hat h(k_p)\,\Delta u$, where $\Delta u:=\hat\xi\cdot\delta\xi=F_\rho k_p/\sqrt{\bar a}\ge0$ and $\hat\xi:=\xi/\abs\xi$.
\item For every $\rho$ with $\Pc\rho\ne0$: $F_\rho\ge1-s\ge1-\abs\rho^2$, $(1-\abs\rho^2-\ell)k_p\le\abs h\le(1+\ell)k_p$, and always
$\abs h\le\frac\pi2+1$.
\item $dP\ge0$ for every particle with $\abs{\zeta^+}\le\abs\zeta$, in particular for every particle with $b\le2N$. Let
$k':=\abs{\xi^+}/(\abs{\xi^+}^2+y^2)^{1/2}$ and $y>0$. Then
\[dM\ \ge\ -k'^2\,(y_m-y)_+,\qquad (y_m-y)_+\le\min\Big(\delta\omega_1,\ \frac{\delta\omega_1(\abs{\omega_1}+\delta\omega_1/2)}y\Big).\]
\end{enumerate}
\end{lemma}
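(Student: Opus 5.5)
The identity $\Phi(t+1)-\Phi(t)=\E[dU+dM+dP]$ is a telescoping sum. Using $\abs w^2=\omega_1^2+\abs\xi^2+\Dp\zeta^2$ we get $\abs\xi^2/\abs w=\varphi(\abs\xi,y)$. Likewise $\abs{\xi^+}^2/\abs{w^+}=\varphi(\abs{\xi^+},y^+)$, since the reduced update of Proposition~\ref{mfl:mf-prop:updates}(b) moves $\omega_1$, $\xi$, $\zeta$ separately and keeps $n$ fixed. The three differences then telescope from $\varphi(\abs\xi,y)$ to $\varphi(\abs{\xi^+},y^+)$, and we take expectations. Integrability is not an issue because $\varphi(x,y)\le x$ and second moments are finite.

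For (i), write $\varphi(x,y)=x^2/\sqrt{x^2+y^2}$ and differentiate in $x$: $\partial_x\varphi=x(x^2+2y^2)/(x^2+y^2)^{3/2}=k(2-k^2)$ with $k=x/\sqrt{x^2+y^2}$. This derivative is nondecreasing in $x$ when $k^2\le 2/3$. By Lemma~\ref{mfl:rb-lem:facts}(b), $\abs{\xi^+}\ge\abs\xi$. We use $\abs{\xi^+}-\abs\xi\ge\hat\xi\cdot\delta\xi=\Delta u$; this holds since $\abs{\xi^+}\ge\hat\xi\cdot\xi^+$. The key step is a lower bound on $\partial_x\varphi$ along the segment $[\abs\xi,\abs{\xi^+}]$ by $\hat h(k_p)$, where $k_p=\abs\xi/\abs w$ is exactly the $k$ at the starting point. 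For $k$ below $\sqrt{2/3}$ the derivative grows with $x$. Once $k\ge\sqrt{2/3}$, $k(2-k^2)\ge\min$ over $[\sqrt{2/3},1]$, which equals $1$ at $k=1$ and exceeds $1$ in between. Hence the derivative stays $\ge\min(k_p(2-k_p^2),1)$ along the segment. The identity $\Delta u=F_\rho k_p/\sqrt{\bar a}$ follows from $\hat\xi=\Pc\rho/k_p$ and $\delta\xi=h/\sqrt{\bar a}$. Nonnegativity is Lemma~\ref{mfl:mf-lem:signs}(ii).

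For (ii), the Chebyshev-sum identity in the proof of Lemma~\ref{mfl:mf-lem:signs}(ii) gives $\Pc\rho\cdot h\ge(1-s)k_p^2$. Combined with $s\le\abs\rho^2$, this yields $F_\rho\ge 1-s\ge 1-\abs\rho^2$. For the two-sided bound on $\abs h$, write $h=\Pc(\arcsin\rho-\rho)+(1-s)\Pc\rho$. The arcsine difference satisfies $\abs{\arcsin x-\arcsin y-(x-y)}\le\ell\abs{x-y}$ on $[-\rho_\infty,\rho_\infty]$, because $(\arcsin)'-1\le(1-\rho_\infty^2)^{-1/2}-1$ there. Since $\Pc(\arcsin\rho-\rho)$ depends only on the pairwise differences, this gives $\abs{\Pc(\arcsin\rho-\rho)}\le\ell k_p$. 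The triangle inequality with $0\le 1-s\le1$ then gives both bounds on $\abs h$. The absolute bound $\abs h\le\abs{\arcsin\rho-s\rho}\le\frac\pi2+1$ uses a coordinatewise estimate with $\sum\rho_i^2\le1$. I expect this step to be the fiddliest: the crude form gives $\frac\pi2\abs\rho+s\abs\rho$, which is at most $\frac\pi2+1$.

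For (iii), $\varphi$ is decreasing in $y$. So $\abs{\zeta^+}\le\abs\zeta$ gives $y^+\le y_m$ and hence $dP\ge0$; the case $b\le2N$ is covered by Lemma~\ref{mfl:rb-lem:K}. For $dM$, we have $\partial_y\varphi=-x^2y/(x^2+y^2)^{3/2}\ge-x^2/(x^2+y^2)$, and the right-hand side is increasing in $y$. Integrating from $y$ to $y_m$ therefore gives $dM\ge-k'^2(y_m-y)_+$; if $y_m<y$ then $dM\ge0$ trivially. Finally, $y_m-y\le\abs{\omega_1+\delta\omega_1}-\abs{\omega_1}\le\delta\omega_1$, using $\delta\omega_1>0$ from Lemma~\ref{mfl:mf-lem:signs}(i). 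Also $y_m^2-y^2=\delta\omega_1(2\omega_1+\delta\omega_1)$, so dividing by $y_m+y\ge2y$ gives the second bound.
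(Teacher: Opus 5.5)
Your proposal is correct and follows the paper's proof essentially step for step: the same telescoping of $\varphi$, the same bounds $\partial_x\varphi=k(2-k^2)\ge\hat h(k_p)$ and $\partial_y\varphi\ge-k'^2$ along the relevant segments, the same split $h=(1-s)P_c\rho+P_c(\arcsin\rho-\rho)$ with the pairwise-difference Lipschitz estimate, and the same identity $y_m^2-y^2=\delta\omega_1(2\omega_1+\delta\omega_1)$. One small slip: your inequalities $y_m-y\le|\omega_1+\delta\omega_1|-|\omega_1|$ and $y_m+y\ge2y$ hold only when $y_m\ge y$ (the first fails if $|\omega_1+\delta\omega_1|<|\omega_1|$ and $\zeta\neq0$), but that is the only case the positive part $(y_m-y)_+$ needs, so the stated bounds are unaffected.
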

\begin{proof}
\step{Step 1 (Telescope the three changes).}
The three steps telescope:
\[
dU+dM+dP=\abs{\xi^+}^2/\abs{w^+}-\abs\xi^2/\abs w.
\]
Since $\nu_{t+1}=(T_{\nu_t})_\#\nu_t$,
taking $\nu_t$-expectations gives $\Phi(t+1)-\Phi(t)$.

\step{Step 2 (Record the derivatives).}
With $k:=x/\sqrt{x^2+y^2}\in[0,1]$,
\begin{equation}\label{mfl:read-l4-varphi-derivatives}
\partial_x\varphi=k(2-k^2)\ge0,\qquad
\partial_y\varphi=-k^2y/\sqrt{x^2+y^2}\in[-k^2,0].
\end{equation}
For fixed $y$, $k$ is nondecreasing in $x$; for fixed $x$, $k$ is nonincreasing in $y$.
The function $k\mapsto k(2-k^2)$ increases on $[0,\sqrt{2/3}]$ and decreases on $[\sqrt{2/3},1]$ to the value $1$, so on any
interval $[k_0,1]$ its minimum is $\hat h(k_0)$.

\step{Step 3 (Bound the contrast change in part (i)).}
By Proposition~\ref{mfl:mf-prop:updates}(b), $\delta\xi=h/\sqrt{\bar a}$. Since $\hat\xi=\Pc\rho/k_p$, $\hat\xi\cdot h=F_\rho k_p$, so
$\Delta u=F_\rho k_p/\sqrt{\bar a}$, which is $\ge0$ by (ii). Next, $\abs{\xi^+}\ge\hat\xi\cdot(\xi+\delta\xi)=\abs\xi+\Delta u$, and
$\varphi$ is nondecreasing in $x$ by \eqref{mfl:read-l4-varphi-derivatives}, so
\[
dU\ge\int_{\abs\xi}^{\abs\xi+\Delta u}\partial_x\varphi(x,y)\,dx.
\]
Along the segment the cosine $k$
increases from $k(\abs\xi,y)=\abs\xi/\abs w=k_p$ and stays $\le1$, so $\partial_x\varphi\ge\hat h(k_p)$ there.

\step{Step 4 (Bound the contrast force in part (ii)).}
Dividing the identity of Lemma~\ref{mfl:mf-lem:signs}(ii) by $\cz\abs w$ gives
\[
\Pc\rho\cdot h=\sum_i(\rho_i-\bar\rho)(\arcsin\rho_i-\rho_i)+(1-s)k_p^2\ge(1-s)k_p^2,
\]
so $F_\rho\ge1-s\ge1-\abs\rho^2$.

For the norm bound, put $\tilde\phi(x):=\arcsin x-x$ and $e:=\Pc\tilde\phi(\rho)$ (coordinatewise). Then
\begin{equation}\label{mfl:read-l4-contrast-remainder}
h=(1-s)\Pc\rho+e.
\end{equation}
On $[-\rho_\infty,\rho_\infty]$, $0\le\tilde\phi'(x)=(1-x^2)^{-1/2}-1\le\ell$, so $\tilde\phi$ is $\ell$-Lipschitz there.
Using $\abs{\Pc x}^2=\frac1{2r}\sum_{i,j}(x_i-x_j)^2$ gives $\abs e\le\ell k_p$. Therefore
\eqref{mfl:read-l4-contrast-remainder} implies
\[
(1-s-\ell)k_p\le\abs h\le(1-s+\ell)k_p,
\]
and
$0\le s\le\abs\rho^2$ gives the two bounds. Finally $\Pc$ is a contraction, $\abs{\arcsin x}\le\frac\pi2\abs x$ and $s\le\abs\rho^2\le1$,
so $\abs h\le\abs{\arcsin\rho}+s\abs\rho\le\frac\pi2+1$.

\step{Step 5 (Control the remaining changes in part (iii)).}
If $\abs{\zeta^+}\le\abs\zeta$ then $y^+\le y_m$. By \eqref{mfl:read-l4-varphi-derivatives}, $\varphi$ is nonincreasing in $y$, so $dP\ge0$;
by Lemma~\ref{mfl:rb-lem:K} this holds whenever $b\le2N$.

For $dM$, fix $x:=\abs{\xi^+}$. If $y_m\le y$ then $dM\ge0$. If $y_m>y$, then
\[
\abs{\partial_y\varphi(x,y')}\le k(x,y')^2\le k(x,y)^2=k'^2\qquad (y'\ge y),
\]
so $dM\ge-k'^2(y_m-y)$. Finally, $y_m\le y+\delta\omega_1$ by the triangle inequality ($\delta\omega_1>0$); if $y_m>y$,
\[
y_m-y=\delta\omega_1(2\omega_1+\delta\omega_1)/(y_m+y)
\le\delta\omega_1(2\abs{\omega_1}+\delta\omega_1)/(2y).\qedhere
\]
\end{proof}
(Only the one-sided bound on $dM$ is used, and only it is claimed: when $y_m<y$, $dM$ can be positive and larger than
$k'^2\delta\omega_1(\abs{\omega_1}+\delta\omega_1/2)/y$.)

\section{The good-set certificate at \texorpdfstring{$\alpha=0$}{alpha = 0}}\label{mfl:sec:mff}
\begin{secsummary}
\emph{What this section proves.} Theorem~\ref{mfl:mff-thm:MFF}: for the $\alpha=0$ dynamics, at $t_F=\lfloor\tau_F\kappa\rfloor$ the
top-$r$ space of the MF AGOP is exactly $U$, with an explicit eigengap, for all $r\ge2$, $d\ge64r$, $\Dp\ge400$ and
$\kappa\ge\kappa_{\min}(r,\Dp)$. This section states the theorem and proves its analytic core, Proposition~\ref{mfl:mff-prop:cert}
(``CERT''): a static good set $G$ of initial conditions is fixed; each of its particles carries explicit envelopes; three
super-solution inequalities close an induction over the window; and on $G$ the contrast statistic $\Phi$ grows at a rate that
the $\perp$-statistic cannot match. Section~\ref{mfl:sec:eval} evaluates the certificate: in closed form for large $\Dp$ (region A)
and by $80$ interval-arithmetic certificates otherwise (region N).

\emph{Where it is used.} Step~2 of the proof of Theorem~\ref{mfl:intro-thm:main} (Section~\ref{mfl:sec:main}).
\end{secsummary}

The certificate budget $\Psi$ in slot \ref{mfl:mff-C7} is a local quantity, distinct from
$\Psi(r)$, the mean-force deviation bound of Lemma~\ref{mfl:tools-lem:psi}.

\subsection{Statement}\label{mfl:mff-sec:statement}
\begin{theorem}[Alignment at $t_F$, $\alpha=0$]\label{mfl:mff-thm:MFF}
Consider the $\alpha=0$ MF dynamics of Definition~\ref{mfl:mf-def:mf} from $\nu_0=N(0,\kappa^2I_d)$ (units $\etab=1$). Let $r\ge2$, $d\ge64r$ and $D=d-r\ge400$. There are explicit numbers
$\tau_F(r,D)\in(0,0.101]$, $\kappa_{\min}(r,D)<\infty$ and $\delta(r,D)>0$ such that, for every $\kappa\ge\kappa_{\min}(r,D)$,
with $t_F:=\lfloor\tau_F\kappa\rfloor$ (so $t_F\ge2$),
\[\lambda_c(t_F)\ge(1+\delta)\,\lambda_\perp(t_F)\qquad\text{and}\qquad\lambda_1(t_F)\ge(1+\delta)\,\lambda_\perp(t_F).\]
Hence $\Ar(t_F)=1$ for every leading-$r$ selector, with the strict gap $\lambda_r/\lambda_{r+1}\ge1+\delta$. The constants
come from two regions, which together cover every $(r,D)$ with $D\ge\max(400,63r)$.
\begin{enumerate}[label=(\alph*)]
\item \emph{Region N}: $2\le r\le80$ and $\max(400,63r)\le D\le D_A(r)$, with $D_A(r)$ the right end of the last $D$-interval of $r$ in Table~\ref{mfl:app-tab:regionN}. Then
$(\tau_F,\kappa_{\min},1+\delta)$ are the $(\tau_F,\kappa_0,\text{margin})$ of the row of
Table~\ref{mfl:app-tab:regionN} whose $D$-interval contains $D$. In particular $\tau_F\in[0.013740,0.088861]$,
$\kappa_{\min}\in[91,777]$ as tabulated (the certified real floors lie in $[90.88,776.91]$), and $1+\delta\ge1.006223$; moreover $1+\delta\ge1.012711$ for $r\ge3$ and
$1+\delta\ge1.079513$ for $r\ge10$.
\item \emph{Region A}: $D\ge D_A(r)$; for $r\ge81$ this is every $D\ge63r$. With $M=M(r,D)$ and $\bar A=\bar A(r,D)$ the
explicit numbers of Section~\ref{mfl:mff-sec:lemmaA},
\[\tau_F=\tau_1:=\frac{151}{750\,\bar A}\in[0.0802648,\ 0.100667]\cdot\frac{\sqrt{r^2-1}}M,\qquad\kappa_{\min}=\frac4{\tau_1},\]
and, at $t_F$,
\[\log\frac{\lambda_c}{\lambda_\perp}\ge0.077216\,\frac{\sqrt{r^2-1}}M-\frac{1.1(r+1)}D\ge0.0176\,\frac{\sqrt{r^2-1}}M,\]
\[\frac{\lambda_1}{\lambda_\perp}\ge\Bigg(\frac{0.0601986\,\sqrt{r^2-1}\,\sqrt{\frac\pi2(1-\frac1r)(D-\frac12)}}{1.00106\,M}\Bigg)^2>1.\]
\end{enumerate}
Here $M=(r+4)+2\sqrt{(r+4)L}+2L$ with $L=\ln(D/(r-1))+5$, so $M\approx r+2\sqrt{r\ln D}+2\ln D$. Thus
$\tau_F=\Theta(r/\log D)$ for fixed $r$ and $D\to\infty$, and $\tau_F\approx0.08$--$0.1$ for large $r$.

\end{theorem}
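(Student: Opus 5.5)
The plan is to reduce the eigen-gap claim to the scalar criterion of Proposition~\ref{mfl:mf-prop:agop}(3). It suffices to show, at $t_F$, that $\Gamma_U(t_F)\ge\sqrt{1+\delta}\,\Gamma_\perp(t_F)$ and $\lambda_1(t_F)\ge(1+\delta)\lambda_\perp(t_F)$. Parts (1)--(2) of that proposition then give $\lambda_c\ge\Gamma_U^2/(2\pi)\ge(1+\delta)\Gamma_\perp^2/(2\pi)\ge(1+\delta)\lambda_\perp$.

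The mean block is the easy half. Lemma~\ref{mfl:rb-lem:M} gives $\lambda_1(t_F)\ge(1-1/r)t_F^2/4$. For the other side, Proposition~\ref{mfl:mf-prop:agop}(2) with Lemma~\ref{mfl:rb-lem:K} gives $\lambda_\perp\le\Gamma_\perp(0)^2(1+K_\nu/\E[v])^2/(2\pi)$. Here $\Gamma_\perp(0)\approx\kappa/\sqrt D$, so the ratio is of order $\tau_F^2 D$. This is large once $\tau_F\gg 1/\sqrt D$, which explains the $\sqrt{D-1/2}$ factor in the region-A bound. Everything therefore rests on the contrast statistic $\Phi=(r-1)\Gamma_U$ and on the kick budget $K_\nu$.

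For the contrast block I would follow the announced ``good set'' certificate. Fix a static set $G$ of initial conditions $(z,v)$: $\abs{z}$ at most about $\sqrt M$ (this is where the tail level $M\approx r+2\sqrt{r\ln D}+2\ln D$ enters), $v$ near $1$, and $\abs{\Pc z}$ bounded below. Then track, by induction over $t\le t_F$, per-particle envelopes on $G$. The mean coordinate satisfies $\omega_1(t)\in[\omega_1(0)+c_1t,\omega_1(0)+t/c_1]$ by Lemma~\ref{mfl:rb-lem:facts}(a). The $\perp$ radius $\zeta$ stays within $O(t)$ of $\kappa v$, and $\abs\rho\le\sqrt{M}/(\sqrt D\,(1-O(\tau)))$ stays small. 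On $G$, $s\le\abs\rho^2$ is then tiny, so $b\le s/\abs w$ is far below $2N$. The closing step is three super-solution inequalities.
\begin{itemize}
\item A lower bound on $N_t$ and an upper bound on $\bar a_t$, obtained from second moments over all particles using $\abs h\le\pi/2+1$ and Lemma~\ref{mfl:rb-lem:C}(ii).
\item The no-overshoot statement on $G$, giving $dP\ge0$ there. Off $G$, the overshoot mass is bounded by $\nu_0(G^c)$, which controls $K_\nu(t)\le t\sqrt{\nu_0(G^c)}/\kappa$.
\item A lower bound on $\E[(dU+dM)\mathbf 1_G]$ via Lemma~\ref{mfl:rb-lem:C}(i),(iii). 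Here $dU\ge\hat h(k_p)F_\rho k_p/\sqrt{\bar a}$ with $F_\rho\ge1-\abs\rho^2$, and $dM\ge-k'^2\delta\omega_1(\abs{\omega_1}+\delta\omega_1/2)/y$, which is $O(k_p^2 t/(\kappa D))$, a lower-order loss.
\end{itemize}
Particles off $G$ contribute $\ge\E[dM+dP]$ restricted to $G^c$. I would bound this crudely below using $\abs\xi$ nondecreasing and the $\varphi$-derivative bounds, with $G^c$ of small mass.

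Summing the per-step gain gives $\Phi(t_F)\ge\Phi(0)+c\,t_F\sqrt{r-1}\,\E[k_p^2\mathbf 1_G]/\sqrt{\bar a}$. Relative to $\Phi(0)\approx\kappa(r-1)/\sqrt d$ this is a multiplicative gain of roughly $1+c'\tau_F\sqrt{r^2-1}/M$. Meanwhile $\Gamma_\perp$ grows by at most the factor $1+K_\nu/\E[v]$, and the initial ratio $R_0=\Gamma_U(0)/\Gamma_\perp(0)$ is $1-O((r+1)/D)$ by Gautschi-type bounds on $\E[\chi_d]/d$ and $\E[\chi_D]$. Taking logarithms gives the region-A inequality $\log(\lambda_c/\lambda_\perp)\ge 0.077\sqrt{r^2-1}/M-1.1(r+1)/D$. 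The choice $\tau_1=151/(750\bar A)$ balances the growth rate against the requirement that the envelopes stay valid, i.e.\ $t\le\tau_1\kappa$ keeps the displacement below $0.101\kappa$. The condition $\kappa\ge4/\tau_1$ ensures $t_F\ge2$ and absorbs floor effects.

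\textbf{Main obstacle.} The hardest step is closing the induction with explicit constants. The normalizations $\bar a_t$ and $N_t$ depend on all particles, including $G^c$, and the gain is only a $\Theta(\sqrt{r^2-1}/M)$ relative effect competing with $O((r+1)/D)$ and $O(\tau_F^2)$ errors. I would first prove region A analytically with deliberately lossy bounds, asking only that $D$ be large enough. For the finite region N ($r\le80$, $D\le D_A(r)$), I would evaluate the same certificate inequalities, with optimized good-set thresholds and $\tau_F$, by outward-rounded interval arithmetic on the $80$ parameter boxes, relying on monotonicity in $D$ within each interval to reduce to endpoint checks.
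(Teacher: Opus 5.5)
Your contrast-block plan follows the paper closely: a static good set, per-particle envelopes closed by super-solution inequalities, and interval arithmetic over $D$-intervals. The step you call the easy half, however, fails in Region N.

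\textbf{The mean-drift bound is Region A's condition, not Region N's.} The bound $\lambda_1\ge(1-1/r)t_F^2/4$, combined with $\lambda_\perp\le\Gamma_\perp^2/(2\pi)$, gives only $\lambda_1/\lambda_\perp\gtrsim\frac\pi2(1-\frac1r)\tau^2(D-\frac12)$. That is exactly condition (R3), and for small $r$ it is what fixes where Region A begins. Below $D_A(r)$ it is far from $1$. For $r=2$, $D=400$ and the tabulated $\tau_F=0.01374$ it gives about $0.06$, and only about $0.036$ at $\tau_a=\tau_F-1/\kappa_0$; the required margin is $1.006$. Reaching $1.006$ would need $\tau\approx0.057$. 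The erosion constraint $\tau_1\le(v_{\min}^3-\theta^3)/(3A)$, with $A$ of order $M/\sqrt{r^2-1}$, leaves no room for that when $r=2$. The obstruction is structural. The term $\frac14(\E[\omega_1])^2$ vanishes at $t=0$, while $\lambda_1=\lambda_\perp$ there by isotropy. A bound that keeps only the mean drift therefore discards the part of $\lambda_1$ that must carry the ratio whenever $\tau_F^2D\lesssim1$.

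\textbf{The missing ingredient is a bound on the fluctuation part of $\lambda_1$.} It is the exact analogue of $\lambda_c\ge\Gamma_U^2/(2\pi)$. Write $\lambda_1=\tfrac14(\E[\omega_1])^2+\E_X[A_1(X)^2]$ with $A_1(X)=\E_p[(\mathbf 1\{w_p^\top X>0\}-\tfrac12)\,\omega_{1p}]$. Testing against $\bar e^\top X$ gives $\E_X[A_1^2]\ge\Gamma_1^2/(2\pi)$, where $\Gamma_1=\E[\omega_1^2/\lvert w\rvert]$ (Lemma~\ref{mfl:mff-lem:L1split}). At $t=0$ this term alone reproduces the isotropic ratio $\approx1$, and the mean drift adds to it.

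You must then show that $\sqrt{D-\tfrac12}\,\Gamma_1/\kappa$ does not degrade over the window. Expanding $\omega_1=\kappa(z_1+m_p)$ produces a sign-indefinite cross term $2z_1m_p/\lvert w\rvert$. Proposition~\ref{mfl:mff-prop:cert}(iii) controls it by pairing each particle with its mirror $z_1\mapsto-z_1$. The mirror has the same $(q,v)$, hence the same membership in $G$ and the same envelopes. What remains is only an $O(\tau)$ loss, $\mathrm{cross}(\tau)$, in $\mathcal G_1=\mathrm{main}-\mathrm{cross}$.

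\textbf{A lesser point: do not track the full $\Phi$.} You propose to lower-bound the increments of particles off $G$. Overshooting particles can make $dP$ negative, and Lemma~\ref{mfl:rb-lem:K} then yields only an $L^1$ loss of order $\sqrt{D\,\nu}$ per step, which is not small against the per-step gain $\approx(r-1)/\sqrt D$. Instead, track $\E[\lvert\xi\rvert^2/\lvert w\rvert;G]$ over the fixed label set $G$ and use that it is at most $\Phi$. You then pay for $G^c$ once, at $t=0$, through the factor $1-e_0$.
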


The proof is completed in Section~\ref{mfl:mff-sec:proof}. In region A, define $1+\delta:=\min\{\exp(0.0176\sqrt{r^2-1}/M),B_1\}$, where $B_1>1$ is the displayed lower bound for $\lambda_1/\lambda_\perp$ in (b).

\subsection{Setting, inputs and conventions}\label{mfl:mff-sec:inputs}

\paragraph{Conventions.} We work in MF units $\etab=1$. The MF AGOP has the block form
$\lambda_1\Pone\oplus\lambda_c\Pc\oplus\lambda_\perp I_D$ (Proposition~\ref{mfl:mf-prop:agop}); here $\lambda_1,\lambda_c,\lambda_\perp$ are the eigenvalues of the surrogate $\mathcal M$ at $\alpha=0$.
At positive head scale the actual AGOP includes the common positive factor $\alpha^2\etab^2$, which cancels in every ratio. \emph{In Sections~\ref{mfl:sec:mff} and~\ref{mfl:sec:eval},
$M$ denotes the squared-radius cutoff of the good set below, not the AGOP.} The letter $\nu$ denotes a mass bound; the MF law of
the particles at time $t$ is written $\nu_t$. Unless a subscript specifies otherwise, $\E[\,\cdot\,]$ and $\Prob$ use
this law (or the initial law for particle labels), and $\norm Z=(\E[\abs Z^2])^{1/2}$.
For an event $B$, $\E[Z;B]:=\E[Z\ind{B}]$ is a restricted, unnormalized expectation;
$\E[Z\mid q]$ denotes conditional expectation given $q$. In particular,
$\E[Z;B\mid q]=\E[Z\ind{B}\mid q]$.

\paragraph{Particles.} As in Section~\ref{mfl:rb-sec:setup}, a particle is labelled by its initial condition $p=(z,v,n)$:
$\omega(0)=\kappa z$ with $z\sim N(0,I_r)$; $\zeta(0)=\kappa v$ with $v=\chi_D/\sqrt D$ independent of $z$; and $n$ uniform on
$S^{D-1}$, independent of $(z,v)$. We write
\[q:=\abs z^2,\qquad z_c:=\Pc z,\qquad z_1:=\eb\cdot z,\]
and use the tilde variables $\tilde\omega=\omega/\kappa$, $\tilde\zeta=\zeta/\kappa$, $\tilde\xi=\xi/\kappa$,
$\tilde w=w/\kappa$ and $\tau=t/\kappa$. For each particle, $\abs w^2=\abs\omega^2+D\zeta^2$, $\rho=\omega/\abs w\in\R^r$
(the $U$-part of $\hatw$), $\rho_\infty:=\max_i\abs{\rho_i}$, $\omega_1:=\eb\cdot\omega$, $\xi:=\Pc\omega$, $u:=\abs\xi$,
$k_p:=\abs{\Pc\rho}=u/\abs w$ (the contrast cosine), $s:=\sum_i(1-\sqrt{1-\rho_i^2})$ and $b:=s/\abs w$.

\paragraph{Dynamics ($\alpha=0$; Proposition~\ref{mfl:mf-prop:updates}).} The exact reduced updates are
\begin{gather*}
\delta\omega_1=\frac f{\norm f},\qquad f(\rho):=\sum_i\Big(\frac\pi2+\arcsin\rho_i\Big)-s\sum_i\rho_i;\\
\delta\xi=\frac h{\sqrt a},\qquad h(\rho):=\Pc(\arcsin\rho-s\rho),\qquad a:=\frac{\E[\abs h^2]}{r-1};\\
\zeta^+=\zeta\Big(1-\frac bN\Big),\qquad N:=\norm{b\zeta}=\Big(\frac{\E[s^2(1-\abs\rho^2)]}{D}\Big)^{1/2}.
\end{gather*}
The vector $n$ of each particle never changes. A superscript $+$ denotes the next step, and $\delta Z:=Z^+-Z$.
Section~\ref{mfl:sec:rb} writes $\bar a$ for $a$.

\paragraph{Statistics.} $\Phi:=\E[u^2/\abs w]=(r-1)\Gamma_U$, $\Gamma_\perp:=\E[\abs\zeta]/\sqrt{D-\frac12}$,
$\E[v]=\E[\chi_D]/\sqrt D$ and $R_0:=\Gamma_U(0)/\Gamma_\perp(0)=(\E[\chi_d]/d)\sqrt D\sqrt{D-\frac12}/\E[\chi_D]$. We also use
$\sigma_c:=(\E[\abs\xi^2]/(r-1))^{1/2}$ and $\Gamma_1:=\E[\omega_1^2/\abs w]$.
Here $\Phi$ is the contrast statistic; the law-update map of Section~\ref{mfl:pc-sec:def}
and the teacher--student covariance of Section~\ref{mfl:tools-sec:loss} are separate, locally defined uses of this letter.

\paragraph{Facts used (all proved in Sections~\ref{mfl:sec:pc} and~\ref{mfl:sec:rb}).}
\begin{enumerate}[label=(F\arabic*)]
\item\label{mfl:mff-F1} (Lemma~\ref{mfl:mf-lem:signs}, Lemma~\ref{mfl:rb-lem:facts}.) $\abs{\xi_p}$ is nondecreasing for every particle; $\delta\omega_1>0$ for
every particle; and $\E[\abs{\delta\xi}^2]=r-1$, so $\sigma_c(t)\le\kappa+t$.
\item\label{mfl:mff-F2} (Proposition~\ref{mfl:mf-prop:agop}.) $\lambda_c\ge\Gamma_U^2/(2\pi)$ and
$\lambda_\perp\le\Gamma_\perp^2/(2\pi)$. Hence $\Ar=1$ with a strict gap, for every leading selector, iff
$\min(\lambda_1,\lambda_c)>\lambda_\perp$; then $\lambda_r/\lambda_{r+1}=\min(\lambda_1,\lambda_c)/\lambda_\perp$.
\item\label{mfl:mff-F3} (Lemma~\ref{mfl:rb-lem:K}.) With $\nu_{\rm os}(t):=\Prob(b_t>2N_t)$ and
$K_\nu(t):=\kappa^{-1}\sum_{s<t}\sqrt{\nu_{\rm os}(s)}$: $\Gamma_\perp(t)\le\Gamma_\perp(0)(1+K_\nu(t)/\E[v])$. Every particle
with $b\le2N$ has $\abs{\zeta^+}\le\abs\zeta$.
\item\label{mfl:mff-F4} (Lemma~\ref{mfl:rb-lem:M}.) $\E[\omega_1(t)]\ge\sqrt{1-1/r}\;t$.
\item\label{mfl:mff-F5} (Lemma~\ref{mfl:rb-lem:C}.) Write $F(u,W):=u^2/\sqrt{u^2+W^2}$ with $W^2:=\omega_1^2+D\zeta^2$, so
that $u^2/\abs w=F(u,W)$. The per-particle increment of $u^2/\abs w$ is exactly $dU+dM+dP$, with
$dU:=F(u^+,W)-F(u,W)$, $dM:=F(u^+,W_m)-F(u^+,W)$, $W_m^2:=(\omega_1+\delta\omega_1)^2+D\zeta^2$, and
$dP:=F(u^+,W^+)-F(u^+,W_m)$. Moreover:
\begin{itemize}
\item $dU\ge\hat h(k_p)F_\rho k_p/\sqrt a$, where $\hat h(k):=\min(k(2-k^2),1)$ and $F_\rho:=\Pc\rho\cdot h/k_p^2$;
\item $F_\rho\ge1-s\ge1-\abs\rho^2$;
\item $(1-\abs\rho^2-\ell)k_p\le\abs h\le(1+\ell)k_p$ with $\ell:=(1-\rho_\infty^2)^{-1/2}-1$, and $\abs h\le\pi/2+1$ always;
\item $dP\ge0$ whenever $\abs{\zeta^+}\le\abs\zeta$;
\item $dM\ge-k'^2(W_m-W)_+$, where $k':=u^+/\sqrt{u^{+2}+W^2}$ and\\
$(W_m-W)_+\le\min\big(\delta\omega_1,\ \delta\omega_1(\abs{\omega_1}+\delta\omega_1/2)/W\big)$.
\end{itemize}
\item\label{mfl:mff-F6} $\abs{f/(r\pi/2)-1}\le e_f(\abs\rho):=\abs\rho(1+2\abs\rho^2/\pi)/\sqrt r$. Indeed
$\abs{\sum_i\arcsin\rho_i}\le\frac\pi2\sum_i\abs{\rho_i}\le\frac\pi2\sqrt r\abs\rho$ and
$\abs{s\sum_i\rho_i}\le\abs\rho^2\sqrt r\abs\rho$, because $\abs{\arcsin x}\le\frac\pi2\abs x$ and $s\le\abs\rho^2$.
\item\label{mfl:mff-F7} (Proposition~\ref{mfl:rb-prop:init}.) $\Phi(0)=\kappa(r-1)\E[\chi_d]/d$ and $\Gamma_\perp(0)=\kappa\,\E[v]/\sqrt{D-\frac12}$.
\end{enumerate}

\subsection{Auxiliary lemmas}\label{mfl:mff-sec:aux}

\begin{lemma}[Exact split of $\lambda_1$ and the pairing bound]\label{mfl:mff-lem:L1split}
Let the particle law be $O(D)\times S_r$-invariant with $n$ uniform (for instance $\nu_t$). Put
$\varepsilon_p(X):=\ind{w_p^\top X>0}-\frac12$ and $A_1(X):=\E_p[\varepsilon_p(X)\,\omega_{1p}]$, where
$\omega_{1p}=\eb\cdot\omega_p$. Then
\[\lambda_1=\tfrac14(\E[\omega_1])^2+\E_X\big[A_1(X)^2\big]\ \ge\ \tfrac14(\E[\omega_1])^2+\frac{\Gamma_1^2}{2\pi},\qquad
\Gamma_1=\E\big[\omega_1^2/\abs w\big].\]
Consequently, along the MF trajectory, at every $t\ge0$ with $\tau=t/\kappa$,
\begin{equation}\label{mfl:mff-eq:L1ratio}
\frac{\lambda_1}{\lambda_\perp}\ge\frac\pi2\Big(1-\frac1r\Big)\frac{\tau^2(D-\frac12)}{(\E[v]+K_\nu)^2}
+\Big(\frac{\Gamma_1\sqrt{D-\frac12}}{\kappa\,(\E[v]+K_\nu)}\Big)^2 .
\end{equation}
The pairing bound for the cross term is part (iii) of Proposition~\ref{mfl:mff-prop:cert}.

\end{lemma}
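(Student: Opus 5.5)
The plan is to mirror the proof of Proposition~\ref{mfl:mf-prop:agop}(0)--(1), but now for the mean direction $\eb$, and then convert everything into the stated ratio using Lemma~\ref{mfl:rb-lem:M} and Lemma~\ref{mfl:rb-lem:K}.

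\emph{Step 1: the exact split.} By the proof of Proposition~\ref{mfl:mf-prop:agop}(0), $\lambda_1=\E_X[(\eb^\top\mathsf v(X))^2]$ with $\eb^\top\mathsf v(X)=\E_p[\ind{w_p^\top X>0}\omega_{1p}]$. Write the indicator as $\frac12+\varepsilon_p(X)$. This gives $\eb^\top\mathsf v(X)=\frac12\E[\omega_1]+A_1(X)$. Expanding the square leaves the cross term $\E[\omega_1]\,\E_X[A_1(X)]$. By Fubini, $\E_X[A_1]=\E_p[\omega_{1p}\,\E_X\varepsilon_p(X)]$, and $\E_X\varepsilon_p=\Prob(w_p^\top X>0)-\frac12=0$ for $w_p\ne0$ (no atom at $0$ by Lemma~\ref{mfl:pc-lem:PC3}). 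So the cross term vanishes and the identity is exact. Only the invariance of the law is used here, to ensure $\nu(\{0\})=0$ and finite second moments.

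\emph{Step 2: the lower bound on $\E_X[A_1^2]$.} I would copy part (1) of Proposition~\ref{mfl:mf-prop:agop} with the test function $\eb^\top X$, which has unit variance. Using \eqref{mfl:read-l1-gated-moments}, $\E_X[\varepsilon_p(X)\,\eb^\top X]=\E_X[\ind{w_p^\top X>0}\eb^\top X]=\eb^\top\hatw_p/\sqrt{2\pi}=\omega_{1p}/(\sqrt{2\pi}\abs{w_p})$, since $\E[\eb^\top X]=0$. Hence $\E_X[A_1\,\eb^\top X]=\Gamma_1/\sqrt{2\pi}$, and Cauchy--Schwarz gives $\E_X[A_1^2]\ge\Gamma_1^2/(2\pi)$.

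\emph{Step 3: the ratio \eqref{mfl:mff-eq:L1ratio}.} For the numerator, I would bound the first term using \ref{mfl:mff-F4}: $\frac14(\E[\omega_1])^2\ge\frac14(1-\frac1r)\kappa^2\tau^2$. The second term is $\Gamma_1^2/(2\pi)$. For the denominator, \ref{mfl:mff-F2} and \ref{mfl:mff-F3} give $\lambda_\perp\le\Gamma_\perp^2/(2\pi)\le\kappa^2(\E[v]+K_\nu)^2/(2\pi(D-\frac12))$, using $\Gamma_\perp(0)=\kappa\E[v]/\sqrt{D-\frac12}$ from \ref{mfl:mff-F7}. Dividing term by term turns the first contribution into $\frac{2\pi}{4}(1-\frac1r)\tau^2(D-\frac12)/(\E[v]+K_\nu)^2=\frac\pi2(\cdots)$ and the second into the displayed square. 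Each step is routine, so I expect no real obstacle. The two points to state carefully are the vanishing of the cross term, which needs $\nu_t(\{0\})=0$, and the use of the unit-step, $\alpha=0$ conventions so that \ref{mfl:mff-F4} applies.
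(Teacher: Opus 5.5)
Your proposal is correct and is essentially the paper's proof: the same split $\eb^\top\mathsf v(X)=\frac12\E[\omega_1]+A_1(X)$, the same Cauchy--Schwarz step against the unit-variance test function $\eb^\top X$ using $\E_X[X\ind{w^\top X>0}]=\hatw/\sqrt{2\pi}$, and the same use of (F2), (F3), (F7) and (F4) to obtain the ratio. The differences are cosmetic. The paper kills the cross term through the antisymmetry $\varepsilon_p(-X)=-\varepsilon_p(X)$ and the symmetry of $X$, not through Fubini and $\Prob(w_p^\top X>0)=\frac12$. In either form an atom at $0$ would be harmless, since $w_p=0$ forces $\omega_{1p}=0$. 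Also, the invariance hypothesis is what identifies $\eb^\top\mathcal M\eb$ with the eigenvalue $\lambda_1$; it does not rule out atoms.
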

\begin{proof}
Let $v(X):=\E_p[\ind{w_p^\top X>0}w_p]$, so that the MF AGOP is $\E_X[v(X)v(X)^\top]$ and, by the block form,
$\lambda_1=\E_X[(\eb\cdot v(X))^2]$. Now $\eb\cdot v(X)=\frac12\E[\omega_1]+A_1(X)$. Since $\varepsilon_p(-X)=-\varepsilon_p(X)$
for almost every $X$ and $X$ is symmetric, $\E_X[A_1(X)]=0$; expanding the square gives the identity. For the bound, apply
Cauchy--Schwarz with the test function $\varphi(X):=\eb^\top X$, for which $\E_X[\varphi(X)^2]=1$:
$\E_X[A_1^2]\ge(\E_X[A_1\varphi])^2$. Since $\E_X[X\ind{w^\top X>0}]=\hatw/\sqrt{2\pi}$ and $\E_X[\varphi]=0$,
\[\E_X[A_1\varphi]=\E_p\big[\omega_{1p}\,\eb^\top\E_X[\varepsilon_p(X)X]\big]=\E_p\Big[\omega_{1p}\frac{\eb^\top\hatw_p}{\sqrt{2\pi}}\Big]
=\frac{\E[\omega_1^2/\abs w]}{\sqrt{2\pi}},\]
because $\eb^\top\hatw_p=\omega_{1p}/\abs{w_p}$. For \eqref{mfl:mff-eq:L1ratio}, divide by $\lambda_\perp\le\Gamma_\perp^2/(2\pi)$
\ref{mfl:mff-F2}, use $\Gamma_\perp(t)\le\Gamma_\perp(0)(1+K_\nu/\E[v])=\kappa(\E[v]+K_\nu)/\sqrt{D-\frac12}$ (\ref{mfl:mff-F3},
\ref{mfl:mff-F7}), and $(\E[\omega_1])^2\ge(1-1/r)t^2$ \ref{mfl:mff-F4}.
\end{proof}

\begin{lemma}[Lemma H: golden-ratio refinement]\label{mfl:mff-lem:H}
If $\abs{\rho_p}^2\le0.38<((\sqrt5-1)/2)^2=0.3819\ldots$, then
\[\hat h(k_p)\,F_\rho\,k_p\ \ge\ q_p\,k_p^2,\qquad q_p:=(2-\abs{\rho_p}^2)(1-\abs{\rho_p}^2),\]
and $2-q_p=3\abs{\rho_p}^2-\abs{\rho_p}^4$.

\end{lemma}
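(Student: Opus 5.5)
The plan is to combine the two per-particle bounds already recorded in \ref{mfl:mff-F5} with the elementary fact that the contrast cosine is dominated by the full teacher cosine. Write $x:=\abs{\rho_p}^2\le0.38$. Since $\Pc$ is an orthogonal projector, $k_p=\abs{\Pc\rho_p}\le\abs{\rho_p}$, so $k_p^2\le x$. The case $k_p=0$ is trivial (both sides vanish, the left side read as $0$), so I assume $\Pc\rho_p\ne0$. This is exactly the case in which $F_\rho$ is defined.

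\emph{Step 1: evaluate $\hat h$.} The function $k\mapsto k(2-k^2)$ is increasing on $[0,\sqrt{2/3}]$. It equals $1$ at the golden-ratio point $k_g:=(\sqrt5-1)/2$: with $k_g^2=1-k_g$ one gets $k_g(2-k_g^2)=k_g(1+k_g)=k_g+k_g^2=1$. Since $k_g<\sqrt{2/3}$ and $k_p\le\sqrt{0.38}<k_g$, we have $k_p(2-k_p^2)<1$. Therefore $\hat h(k_p)=k_p(2-k_p^2)$. This is the only place the hypothesis $0.38<k_g^2$ is used; it ensures the cap at $1$ in $\hat h$ is inactive.

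\emph{Step 2: chain the bounds.} By \ref{mfl:mff-F5} (Lemma~\ref{mfl:rb-lem:C}(ii)), $F_\rho\ge1-s\ge1-x>0$. All factors involved are nonnegative, and $2-k_p^2\ge2-x$ because $k_p^2\le x$. Hence
\[
\hat h(k_p)\,F_\rho\,k_p=k_p^2(2-k_p^2)F_\rho\ge k_p^2(2-x)(1-x)=q_p\,k_p^2 .
\]

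\emph{Step 3: the identity.} Expanding gives $q_p=2-3x+x^2$, so $2-q_p=3x-x^2=3\abs{\rho_p}^2-\abs{\rho_p}^4$.

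I do not expect a real obstacle. The only point needing care is Step 1, namely checking that $\hat h$ is on its increasing branch below the cap, which is exactly what the golden-ratio threshold provides.
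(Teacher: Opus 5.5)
Your proof is correct and follows essentially the same route as the paper. Both arguments note that $k\mapsto k(2-k^2)$ increases on $[0,\sqrt{2/3}]$ and equals $1$ at $(\sqrt5-1)/2$, so $\hat h(k_p)=k_p(2-k_p^2)\ge k_p(2-|\rho_p|^2)$, and then multiply by $F_\rho k_p$ using $F_\rho\ge1-|\rho_p|^2\ge0$. Your separate treatment of the case $k_p=0$ and the explicit expansion giving $2-q_p=3|\rho_p|^2-|\rho_p|^4$ are small details that the paper leaves implicit.
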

\begin{proof}
The function $k\mapsto k(2-k^2)$ increases on $[0,\sqrt{2/3}]$ and equals $1$ at $k=(\sqrt5-1)/2$ (the root in $(0,1)$ of
$k^3-2k+1=(k-1)(k^2+k-1)$). Since $k_p\le\abs{\rho_p}<(\sqrt5-1)/2$, we get
$\hat h(k_p)=k_p(2-k_p^2)\ge k_p(2-\abs{\rho_p}^2)$. Multiply by $F_\rho k_p$ and use $F_\rho\ge1-\abs{\rho_p}^2\ge0$
\ref{mfl:mff-F5}.
\end{proof}

For $y\in(0,1)$ put $\iota(y):=y-1-\ln y>0$, and for $n\ge1$
\[\mathcal U(n,y):=\frac{(y\,e^{1-y})^{n/2}}{\sqrt{\pi n}\,(1-y)}=\frac{e^{-n\iota(y)/2}}{\sqrt{\pi n}\,(1-y)}.\]

\begin{lemma}[Lemma V: $v$-averages and the $\chi^2$ lower tail]\label{mfl:mff-lem:V}
Let $v=\chi_D/\sqrt D$ (so $\E[v^2]=1$) and $v_{\min}<1$.
\begin{enumerate}[label=(\alph*)]
\item Let $\varphi$ be convex and nonincreasing on $(0,\infty)$ with $0\le\varphi(x)\le\bar\varphi(x)$ for $x<v_{\min}^2$. Then
$\E[\varphi(v^2);v\ge v_{\min}]\ge\varphi(1)-\E[\bar\varphi(v^2);v<v_{\min}]$. It is used with
$\varphi(x)=D^k/(c+Dx)^k$, $\bar\varphi(x)=x^{-k}$ ($k=1,2$, $c\ge0$), and with $\varphi(x)=(c+Dx)^{-1/2}$,
$\bar\varphi(x)=(Dx)^{-1/2}$.
\item For $0<y<1$ and $n\ge1$: $\Prob(\chi^2_n<ny)\le\mathcal U(n,y)$. $\mathcal U$ is decreasing in $n$ and increasing in $y$.
\item Exactly, for $k=1,2$: $\E[v^{-2k};v<v_{\min}]=D^k\Prob(\chi^2_{D-2k}<Dv_{\min}^2)/\prod_{j\le k}(D-2j)$, and
$\E[v^{-1};v<v_{\min}]=(\sqrt D/\E[\chi_{D-1}])\,\Prob(\chi^2_{D-1}<Dv_{\min}^2)$.
\item Put $y:=v_{\min}^2$ and assume $Dy/(D-4)<1$. For $v=\chi_{D'}/\sqrt{D'}$ at every $D'\ge D$:
\begin{align*}
\nu_v&:=\Prob(v<v_{\min})\le\mathcal U(D,y),\\
m_1&:=\E[v^{-2};v<v_{\min}]\le\tfrac D{D-2}\,\mathcal U\big(D-2,\tfrac{Dy}{D-2}\big),\\
m_2&:=\E[v^{-4};v<v_{\min}]\le\tfrac{D^2}{(D-2)(D-4)}\,\mathcal U\big(D-4,\tfrac{Dy}{D-4}\big),\\
m_h&:=\E[v^{-1};v<v_{\min}]\le\sqrt{\tfrac D{D-3/2}}\;\mathcal U\big(D-1,\tfrac{Dy}{D-1}\big),
\end{align*}
and
\[D'\nu_v(D')\le\sup_{D''\ge D}\frac{\sqrt{D''}\,e^{-D''\iota(y)/2}}{\sqrt\pi(1-y)}
=\begin{cases}\sqrt D\,e^{-D\iota(y)/2}/(\sqrt\pi(1-y)),&D\iota(y)\ge1,\\ \iota(y)^{-1/2}e^{-1/2}/(\sqrt\pi(1-y)),&\text{otherwise.}\end{cases}\]
\end{enumerate}

\end{lemma}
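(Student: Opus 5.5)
The proof of Lemma~\ref{mfl:mff-lem:V} splits into four independent pieces, and I would take them in the order (a), (b), (c), (d).

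\emph{Part (a).} This follows from Jensen together with the tail correction. Since $\varphi$ is convex and $\E[v^2]=1$, Jensen gives $\E[\varphi(v^2)]\ge\varphi(1)$. Splitting the expectation at $v_{\min}$ and using $\varphi\le\bar\varphi$ on $\{v<v_{\min}\}$ yields the claimed inequality. For the three listed instances I only need to check convexity and monotonicity, which are immediate, and the domination. For the domination, $D^k/(c+Dx)^k\le x^{-k}$ holds since $c\ge0$, and $(c+Dx)^{-1/2}\le(Dx)^{-1/2}$ holds for the same reason.

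\emph{Part (b).} Here I would use the Chernoff bound $\Prob(\chi^2_n<ny)\le(ye^{1-y})^{n/2}$, but the extra factor $1/(\sqrt{\pi n}(1-y))$ needs a sharper argument. I would write the lower-tail density $p_n(x)\propto x^{n/2-1}e^{-x/2}$ and bound $\int_0^{ny}p_n$ by comparison with the geometric growth of the integrand below $ny$. On $x\le ny$, the ratio $p_n(x)/p_n(ny)$ is at most $e^{-(1-y)(ny-x)/(2y)}$ times a power factor, and integrating gives roughly $p_n(ny)\cdot 2y/(1-y)$. I would then apply Stirling in the lower-bound form $\Gamma(n/2)\ge\sqrt{2\pi}(n/2)^{n/2-1/2}e^{-n/2}$ to turn $p_n(ny)\cdot 2y/(1-y)$ into $\mathcal U(n,y)$. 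The power factor $x^{n/2-1}$ is log-concave, which makes the exponential comparison legitimate; this is the step that needs care.

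For monotonicity, the map $n\mapsto e^{-n\iota/2}/\sqrt n$ is decreasing. In $y$, the quantity $\iota(y)$ decreases on $(0,1)$ and $1/(1-y)$ increases, so $\mathcal U$ is increasing in $y$.

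\emph{Part (c).} This is a change of measure: $x^{-k}$ times the $\chi^2_D$ density equals a constant times the $\chi^2_{D-2k}$ density. With $v^2=\chi_D^2/D$ the constant is $D^k/\prod_{j\le k}(D-2j)$, computed from the ratio of normalizing Gamma functions. Similarly $v^{-1}=\sqrt D/\chi_D$ and the $\chi_D\to\chi_{D-1}$ tilt give the third identity, with constant $\E[\chi_{D-1}]$.

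\emph{Part (d).} I would combine (b) and (c), rewriting $Dv_{\min}^2=(D-2k)\cdot Dy/(D-2k)$ and applying (b) with $n=D-2k$. The hypothesis $Dy/(D-4)<1$ keeps every argument below $1$. For $m_h$ I would also use Gautschi's inequality in the form $\E[\chi_{D-1}]\ge\sqrt{D-3/2}$ (Kershaw, as in Proposition~\ref{mfl:mf-prop:agop}(2)).

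\emph{Uniformity in $D'\ge D$.} I would use the monotonicity in $n$ from (b), noting that each shifted argument $D'y/(D'-2k)$ decreases in $D'$. For $D'\nu_v(D')$ I would maximize $\sqrt{x}e^{-x\iota/2}$ over $x\ge D$. The maximizer is at $x=1/\iota$, which gives the two cases of the stated bound.

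\emph{Main obstacle.} The main obstacle is the sharp prefactor in (b). If the direct density comparison does not give the constant exactly, I would fall back on the standard incomplete-Gamma bound $\gamma(a,x)\le x^ae^{-x}/(a-x)$ for $x<a$, applied with $a=n/2$ and $x=ny/2$, followed by Stirling.
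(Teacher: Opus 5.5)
Your proposal is correct once the fallback in (b) becomes the main argument, and it then matches the paper's proof. The paper uses Jensen plus splitting at $v_{\min}$ for (a). For (b) it uses the positive-series bound $\gamma(a,x)\le x^ae^{-x}/(a-x)$ followed by $\Gamma(a+1)\ge\sqrt{2\pi a}\,(a/e)^a$. Part (c) is the $\chi^2$ density tilts, and (d) combines (b) and (c) with Kershaw's bound and maximises $\sqrt{x}\,e^{-x\iota(y)/2}$.

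The fallback must be the main argument because your first route in (b), the log-concave tangent comparison, cannot give the stated constant. It yields
\[
\mathbb{P}(\chi^2_n<ny)\le\frac{(ny/2)^{n/2}e^{-ny/2}}{\Gamma(n/2)}\cdot\frac{2}{n(1-y)-2}.
\]
This is larger than the required pre-Stirling bound $\frac{(ny/2)^{n/2}e^{-ny/2}}{\Gamma(n/2+1)(1-y)}$ by the factor $n(1-y)/(n(1-y)-2)$. It also requires $n\ge2$ and $n(1-y)>2$, so it cannot produce $\mathcal U(n,y)$.

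In the uniformity step of (d), say explicitly that the prefactors $D'/(D'-2)$, $D'^2/((D'-2)(D'-4))$ and $\sqrt{D'/(D'-3/2)}$ decrease in $D'$. The monotonicity of $\mathcal U$ does not cover them.
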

\begin{proof}
\step{Part (a): Removing the lower tail.} Jensen's inequality and the bound on $\varphi$ give
\[
\E[\varphi(v^2)]\ge\varphi(\E[v^2])=\varphi(1),\qquad
\E[\varphi(v^2);v<v_{\min}]\le\E[\bar\varphi(v^2);v<v_{\min}].
\]
In the three uses, $\varphi$ is convex and nonincreasing, and $\varphi\le\bar\varphi$ because $c\ge0$.

\step{Part (b): A lower-tail probability bound.} With $a=n/2$ and $x=ay<a$,
$\Prob(\chi^2_n<ny)=\gamma(a,x)/\Gamma(a)$. The positive series satisfies
\begin{equation}\label{mfl:read-l5-gamma-tail}
\gamma(a,x)=x^ae^{-x}\sum_{j\ge0}\frac{x^j}{a(a+1)\cdots(a+j)}
\le\frac{x^ae^{-x}}{a(1-x/a)},
\end{equation}
since each ratio of consecutive terms is at most $x/a$. Thus \eqref{mfl:read-l5-gamma-tail} gives
$\Prob(\chi^2_n<ny)\le x^ae^{-x}/(\Gamma(a+1)(1-y))$, and Stirling's lower bound
$\Gamma(a+1)\ge\sqrt{2\pi a}(a/e)^a$ gives $(ay)^ae^{-ay}/(\sqrt{2\pi a}(a/e)^a(1-y))=\mathcal U(n,y)$. Since $\ln y<y-1$,
$ye^{1-y}<1$, so $\mathcal U$ decreases in $n$; and $ye^{1-y}$ and $1/(1-y)$ increase on $(0,1)$.

\step{Part (c): Inverse moments on the tail.} If $f_n$ is the $\chi^2_n$ density, then
\[
\begin{aligned}
x^{-k}f_D(x)
&=f_{D-2k}(x)\,2^{-k}\frac{\Gamma(\frac D2-k)}{\Gamma(\frac D2)}
=\frac{f_{D-2k}(x)}{\prod_{j\le k}(D-2j)},\\
x^{-1/2}f_D(x)
&=f_{D-1}(x)\,\frac{\Gamma(\frac{D-1}2)}{\sqrt2\,\Gamma(\frac D2)}
=\frac{f_{D-1}(x)}{\E[\chi_{D-1}]}.
\end{aligned}
\]
Use $v^{-2k}=D^k(\chi^2_D)^{-k}$ and integrate these identities over the lower tail.

\step{Part (d): Uniformity over $D'\ge D$.} Combine (b) and (c), and use $\E[\chi_{D-1}]\ge\sqrt{D-3/2}$ (Lemma~\ref{mfl:mff-lem:chi}). Every prefactor decreases in $D$.
In $\mathcal U(D-j,Dy/(D-j))$ the first argument increases and the second decreases with $D$, so each bound decreases in $D$
and the value at $D$ bounds the value at every $D'\ge D$. For the last claim, $D'\nu_v(D')\le\sqrt{D'}\,e^{-D'\iota/2}/
(\sqrt\pi(1-y))$, and $x\mapsto\sqrt x\,e^{-x\iota/2}$ increases up to $x=1/\iota$ and decreases afterwards.
\end{proof}

\begin{lemma}[Lemma D: discrete growth]\label{mfl:mff-lem:D}
For $x\ge0$, $\kappa>0$ and every integer $t\ge0$:
$\prod_{s<t}\big(1+\frac x{\kappa+s}\big)\ge\exp\big(x\ln(1+t/\kappa)-x^2t/(2\kappa^2)\big)$.

\end{lemma}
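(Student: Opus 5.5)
The plan is to take logarithms and compare the sum of $\ln(1+x/(\kappa+s))$ with the integral of $x/(\kappa+u)$, absorbing the quadratic correction with the elementary inequality $\ln(1+y)\ge y-y^2/2$ for $y\ge0$. With $y_s:=x/(\kappa+s)\ge0$, this inequality gives
\[
\ln\prod_{s<t}\Big(1+\frac x{\kappa+s}\Big)\ge x\sum_{s<t}\frac1{\kappa+s}-\frac{x^2}2\sum_{s<t}\frac1{(\kappa+s)^2}.
\]
The inequality $\ln(1+y)\ge y-y^2/2$ itself holds because the difference vanishes at $0$ and has derivative $1/(1+y)-1+y=y^2/(1+y)\ge0$.

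For the linear term, $u\mapsto1/(\kappa+u)$ is decreasing, so the left Riemann sum dominates the integral:
\[
\sum_{s<t}\frac1{\kappa+s}\ge\int_0^t\frac{du}{\kappa+u}=\ln(1+t/\kappa).
\]
For the quadratic term, each summand satisfies $1/(\kappa+s)^2\le1/\kappa^2$ because $s\ge0$, so the sum is at most $t/\kappa^2$. Substituting both bounds, and using $x\ge0$ so that the signs are preserved, gives the exponent $x\ln(1+t/\kappa)-x^2t/(2\kappa^2)$. Exponentiating then finishes the argument.

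No step is a real obstacle. The only points to check are that $x\ge0$ (so that $y_s\ge0$ and the logarithm inequality applies) and the case $t=0$, where both sides equal $1$. A tighter bound $x^2/(2\kappa)$ is available for the quadratic term, but the crude bound $t/\kappa^2$ already matches the statement, so I would use it.
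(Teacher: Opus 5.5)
Your argument is correct and is essentially the paper's proof. It uses the same three ingredients: the inequality $\ln(1+y)\ge y-y^2/2$, the left Riemann sum of the decreasing function $1/(\kappa+u)$ for the linear term, and the termwise bound $\sum_{s<t}(\kappa+s)^{-2}\le t/\kappa^2$ for the quadratic term.

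One correction to your unused aside: $\sum_{s<t}(\kappa+s)^{-2}\le1/\kappa$ fails in general (for example, $\kappa=1$, $t=2$ gives $5/4$). The valid uniform bound is $\kappa^{-2}+\kappa^{-1}$, so the quadratic term is at most $\frac{x^2}{2}(\kappa^{-1}+\kappa^{-2})$, not $x^2/(2\kappa)$.
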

\begin{proof}
Use $\ln(1+y)\ge y-y^2/2$ for $y\ge0$, together with
\[
\sum_{s<t}(\kappa+s)^{-1}\ge\int_0^t(\kappa+s)^{-1}ds=\ln(1+t/\kappa),\qquad
\sum_{s<t}(\kappa+s)^{-2}\le t/\kappa^2.
\]
The first inequality is the left Riemann sum of a decreasing function.
\end{proof}

\begin{lemma}[Moments of $\chi$ distributions]\label{mfl:mff-lem:chi}
Let $c_k:=\E[\chi_k]/\sqrt k$.
\begin{enumerate}[label=(\alph*)]
\item $c_k$ is increasing in $k$, and $\sqrt{1-1/(2k)}\le c_k\le1$; equivalently $\E[\chi_k]\ge\sqrt{k-\frac12}$ (Kershaw).
\item $\E[\chi_k]\,\E[\chi_{k+1}]=k$, and $\E[1/\chi_k]=1/\E[\chi_{k-1}]\le(k-\frac32)^{-1/2}$ for $k\ge2$.
\item For integers $D\in[D_{\rm lo},D_{\rm hi}]$ and $d=D+r$: $\E[v]=c_D\in[c_{D_{\rm lo}},c_{D_{\rm hi}}]\subset
[\sqrt{1-1/(2D_{\rm lo})},1]$ and
\[R_0=\frac{c_d}{c_D}\sqrt{\frac{D-\frac12}d}\ \ge\ \frac{c_{D_{\rm lo}+r}}{c_{D_{\rm hi}}}\sqrt{\frac{D_{\rm lo}-\frac12}{D_{\rm lo}+r}}
\ \ge\ \sqrt{1-\frac1{2(D_{\rm lo}+r)}}\sqrt{\frac{D_{\rm lo}-\frac12}{D_{\rm lo}+r}}.\]
\end{enumerate}

\end{lemma}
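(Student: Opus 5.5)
The plan is to work throughout from the closed form $\E[\chi_k]=\sqrt2\,\Gamma(\frac{k+1}2)/\Gamma(\frac k2)$, recalled in Section~\ref{mfl:rb-sec:setup}. I prove (b) first, since its product identity also drives the monotonicity claim in (a).

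For (b), the product $\E[\chi_k]\E[\chi_{k+1}]$ telescopes to $2\Gamma(\frac k2+1)/\Gamma(\frac k2)=k$. Writing out $\E[1/\chi_k]$ with the $\chi_k$ density gives $\Gamma(\frac{k-1}2)/(\sqrt2\,\Gamma(\frac k2))$, which is exactly $1/\E[\chi_{k-1}]$. The inequality $\E[1/\chi_k]\le(k-\frac32)^{-1/2}$ then follows from the lower bound of (a) applied at index $k-1$.

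For the bounds in (a):
\begin{itemize}
\item The upper bound $c_k\le1$ is Jensen's inequality, $\E[\chi_k]\le(\E[\chi_k^2])^{1/2}=\sqrt k$.
\item The lower bound is Kershaw's inequality $\Gamma(x+1)/\Gamma(x+\frac12)>(x+\frac14)^{1/2}$, already used in Proposition~\ref{mfl:mf-prop:agop}(2). Taking $x=\frac{k-1}2$ gives $\E[\chi_k]>\sqrt{k-\frac12}$, that is, $c_k\ge\sqrt{1-1/(2k)}$.
\end{itemize}

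For monotonicity of $c_k$, I combine (b) with the definition of $c_k$ to get
\[
c_kc_{k+1}=\sqrt{k/(k+1)},\qquad\text{so}\qquad c_{k+1}/c_k=\sqrt{k/(k+1)}\,/\,c_k^2 .
\]
Thus $c_{k+1}\ge c_k$ is equivalent to the upper estimate $c_k^2\le\sqrt{k/(k+1)}$, i.e.\ $\E[\chi_k]^2\le k^{3/2}(k+1)^{-1/2}$. In Gamma form this is a Gautschi-type upper bound for $\Gamma(x+\frac12)/\Gamma(x)$ with $x=k/2$. I would obtain it from log-convexity of $\Gamma$: set $a:=\Gamma(x)$, $b:=\Gamma(x+\frac12)$, $c:=\Gamma(x+1)=xa$, $e:=\Gamma(x+\frac32)$. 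Log-convexity gives $b^2\le ac$ and $c^2\le be$, and these can be played against each other. This step is the main obstacle. The crude bound $b^2\le x a^2$ only yields $c_k\le1$, so a sharper (Wendel/Kershaw upper-type) inequality is needed. If a clean proof does not come out of log-convexity, I would cite Wendel's inequality $\Gamma(x+\frac12)/\Gamma(x)\le x^{1/2}(x/(x+\frac12))^{... }$ in its standard sharp form. As a fallback, I would prove monotonicity of $x\mapsto\Gamma(x+\frac12)^2/(x\Gamma(x)^2)$ directly via the series for the digamma function.

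For (c), $\E[v]=\E[\chi_D]/\sqrt D=c_D$ by definition, so monotonicity and (a) give $c_D\in[c_{D_{\rm lo}},c_{D_{\rm hi}}]\subset[\sqrt{1-1/(2D_{\rm lo})},1]$. Substituting $\E[\chi_d]=\sqrt d\,c_d$ and $\E[\chi_D]=\sqrt D\,c_D$ into the definition of $R_0$ in Proposition~\ref{mfl:rb-prop:init} yields $R_0=(c_d/c_D)\sqrt{(D-\frac12)/d}$. The lower bound uses three monotonicity facts: $c_d\ge c_{D_{\rm lo}+r}$, $c_D\le c_{D_{\rm hi}}$, and $(D-\frac12)/(D+r)$ is increasing in $D$. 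The final explicit bound then follows from $c_{D_{\rm hi}}\le1$ and $c_{D_{\rm lo}+r}\ge\sqrt{1-1/(2(D_{\rm lo}+r))}$.
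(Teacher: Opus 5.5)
Your arguments for (b), for the two bounds in (a) (Jensen and Kershaw), and for (c) are correct and coincide with the paper's. The gap is the monotonicity of $c_k$. Your reduction is right: $c_kc_{k+1}=\sqrt{k/(k+1)}$ by (b), so $c_{k+1}\ge c_k$ if and only if $c_k^2\le\sqrt{k/(k+1)}$. But you leave this inequality unproved, and the citation you propose does not supply it. Wendel's inequality reads $(x/(x+s))^{1-s}\le\Gamma(x+s)/(x^s\Gamma(x))\le1$. The factor $(x/(x+\frac12))^{1/2}$ therefore sits on its \emph{lower} side, and its upper side is exactly the crude bound $\Gamma(x+\frac12)/\Gamma(x)\le x^{1/2}$, which you already observed gives only $c_k\le1$. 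What you need is $\Gamma(x+\frac12)/\Gamma(x)\le x^{3/4}(x+\frac12)^{-1/4}$. Your digamma fallback would work. It is essentially the paper's proof, which differentiates $\ln c_k$ in continuous $k$ and uses $\psi(x+\frac12)-\psi(x)=\sum_{n\ge0}\frac{1/2}{(x+n)(x+n+\frac12)}>\frac1{2x}$. As written, however, it is only named, not carried out.

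The missing observation is that your own two ingredients already close the reduction. Apply the Kershaw bound at index $k+1$ (that is, with $x=k/2>0$) and pass it through the product identity of (b):
\[
\E[\chi_k]=\frac{k}{\E[\chi_{k+1}]}<\frac{k}{\sqrt{k+\frac12}},\qquad\text{hence}\qquad c_k^2<\frac{k}{k+\frac12}\le\sqrt{\frac{k}{k+1}},
\]
where the last step is $(k+\frac12)^2\ge k(k+1)$. Thus $c_{k+1}/c_k=\sqrt{k/(k+1)}/c_k^2>1$. With this line your route is more elementary than the paper's: it needs no digamma series, only the Kershaw bound you already use. The price is that it gives monotonicity only along the integers, which is all that (c) needs.

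One further detail: Kershaw's inequality is stated for $x>0$. The lower bound in (a) at $k=1$, where $x=(k-1)/2=0$, must therefore be checked directly, $\sqrt{2/\pi}>\sqrt{1/2}$, as the paper does. You need that case for the bound $\E[1/\chi_2]\le(2-\frac32)^{-1/2}$ in (b).
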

\begin{proof}
\step{Part (a): Bounds and monotonicity.} $c_k\le1$ is Jensen. Kershaw's inequality $\Gamma(x+1)/\Gamma(x+s)>(x+s/2)^{1-s}$ ($x>0$, $0<s<1$), with $s=\frac12$,
$x=\frac{k-1}2$, gives $\E[\chi_k]=\sqrt2\,\Gamma(\frac{k+1}2)/\Gamma(\frac k2)>\sqrt{k-\frac12}$ for $k\ge2$; for $k=1$,
$\sqrt{2/\pi}>\sqrt{1/2}$. For monotonicity, with $\psi$ the digamma function,
$\frac d{dk}\ln c_k=\frac12[\psi(\frac k2+\frac12)-\psi(\frac k2)]-\frac1{2k}$, and for $x>0$ the series of $\psi$ gives
\begin{align*}
\psi(x+\tfrac12)-\psi(x)&=\sum_{n\ge0}\frac{1/2}{(x+n)(x+n+\frac12)}>\frac12\sum_{n\ge0}\frac1{(x+n)(x+n+1)}\\
&=\frac12\sum_{n\ge0}\Big(\frac1{x+n}-\frac1{x+n+1}\Big)=\frac1{2x}.
\end{align*}
With $x=k/2$ this gives $\frac d{dk}\ln c_k>0$.

\step{Part (b): Moment identities.} The gamma-function formulas give
\[
\E[\chi_k]\E[\chi_{k+1}]=\frac{2\Gamma(\frac k2+1)}{\Gamma(\frac k2)}=k,\qquad
\E[1/\chi_k]=\frac{\Gamma(\frac{k-1}2)}{\sqrt2\,\Gamma(\frac k2)}=\frac1{\E[\chi_{k-1}]}.
\]
Apply part (a) to the last denominator.

\step{Part (c): Bounds on a $D$-interval.} This follows from (a), since
$\sqrt{(D-\frac12)/(D+r)}$ increases in $D$.
\end{proof}

\subsection{The certificate (Proposition CERT)}\label{mfl:mff-sec:cert}

\paragraph{Certificate data.} The certificate consists of the following numbers and derived quantities.
\begin{enumerate}[label=(C\arabic*)]
\item\label{mfl:mff-C1} Parameters: $r\ge2$, an integer $D\ge5$, $M>0$, $\theta\in(0,1)$, $\tau_1>0$, and a triple
$\mathbf x=(\Omega,A,C')\in(0,\infty)^3$.
\item\label{mfl:mff-C2} $v_{\min}:=(\theta^3+3A\tau_1)^{1/3}$, assumed $<1$, and the \emph{static good set}
$G:=\{p:\ q\le M,\ v\ge v_{\min}\}$, a set of initial conditions fixed once and for all. Put $\nu_q:=\Prob(\chi^2_r>M)$, let
$\nu_v$ be any upper bound on $\Prob(v<v_{\min})$, and $\nu:=\nu_q+\nu_v\ge\Prob(G^c)$, with $0\le\nu<1$.
\item\label{mfl:mff-C3} $\rho_G^2:=\Omega^2/(\Omega^2+D\theta^2)$, assumed $\le0.38$; $\ell_G:=(1-\rho_G^2)^{-1/2}-1$;
$e_G:=e_f(\rho_G)$, assumed $<\frac12$; and $c_G:=(1-e_G)\sqrt{1-\nu}/(1+e_G)$.
\item\label{mfl:mff-C4} $I(\tau):=\int_0^\tau(v_{\min}^3-3A\sigma)^{-1/3}d\sigma
=\big(v_{\min}^2-(v_{\min}^3-3A\tau)^{2/3}\big)/(2A)$ for $0\le\tau\le\tau_1$.
\item\label{mfl:mff-C5} Envelopes, for $0\le\bar q\le M$ and $0\le\tau\le\tau_1$:
\[\hat\Omega^2(\tau;\bar q):=\Big(\sqrt{\bar q}+\frac\tau{c_G}\Big)^2e^{2C'I(\tau)},\qquad
\hat\theta(\tau;\bar q):=\Big(v_{\min}^3-3\tau A\,\frac{\hat\Omega^2(\tau;\bar q)}{\Omega^2}\Big)^{1/3},\]
\begin{gather*}\hat\rho^2(\tau;\bar q):=\frac{\hat\Omega^2}{\hat\Omega^2+D\hat\theta^2},\\
\hat\ell:=(1-\hat\rho^2)^{-1/2}-1,\\
\hat\delta_M(\tau;\bar q):=\frac1{c_G}\min\Big(1,\frac{\hat\Omega+1/(2c_G\kappa)}{\sqrt D\,\hat\theta}\Big).\end{gather*}
Under (S-$\Omega$) below, $\hat\Omega\le\Omega$, so the base of $\hat\theta$ is at least $v_{\min}^3-3\tau_1A=\theta^3>0$ and
$\hat\theta\ge\theta$. The envelope $\hat\delta_M$ is the only one that depends on $\kappa$.
\item\label{mfl:mff-C6} Slots. In the expectations below, each particle's envelopes are taken at its own $\bar q=q$ and at
$\tau=\tau_1$. Let $\mathcal N,\mathcal K_a,\underline K>0$ and $Q_2,H_2,W_2\ge0$ satisfy
\begin{gather*}
\mathcal N\le\tfrac14\E\Big[\abs{z_c}^4(1-\hat\rho^2)\frac{D^2}{(\hat\Omega^2+Dv^2)^2};G\Big],\\
\mathcal K_a\le\E\Big[(1-\hat\rho^2-\hat\ell)_+^2\frac{\abs{z_c}^2D}{\hat\Omega^2+Dv^2};G\Big],\\
\underline K\le\E\Big[\frac{\abs{z_c}^2}{\hat\Omega^2+Dv^2};G\Big],\\
Q_2^2\ge\E\big[(3\hat\rho^2-\hat\rho^4)^2\hat\rho^2;G\big],\\
H_2\ge\E\Big[\frac{\hat\rho^4}{1-\hat\rho^2};G\Big],\\
W_2^2\ge\E\big[\hat\rho^2\hat\delta_M^2;G\big].
\end{gather*}
\item\label{mfl:mff-C7} Derived constants:
\begin{gather*}
\beta_G:=\frac{(1-Q_2/(2\sqrt{\underline K}))_+}{\sqrt{1+\Psi}},\qquad\Psi:=\frac{H_2+(\pi/2+1)^2\nu}{\underline K},\\
\lambda_L:=\frac{(1+C'/(\kappa\theta))^2W_2}{2\beta_G\sqrt{r-1}},\qquad\beta':=\beta_G(1-\lambda_L);\\
e_0:=\Big[\Prob(\chi^2_{r+2}>M)\,\E[1/\chi_D]+\frac{\E[\chi_r]}r\,\nu_v\Big]\frac d{\E[\chi_d]},\\
R(\tau):=R_0(1-e_0)\frac{\exp\big(2\beta'\ln(1+\tau)-2\beta'^2\tau/\kappa\big)}{1+\tau\sqrt\nu/\E[v]}.
\end{gather*}
\item\label{mfl:mff-C8} Mean-eigenvalue slots, at $\tau\le\tau_1$ (envelopes at $\tau$): with $m_h$ as in
Lemma~\ref{mfl:mff-lem:V}(d) and $\bar\vartheta_r:=\E[\abs{\vartheta_1}]$ for $\vartheta$ uniform on $S^{r-1}$,
\begin{gather*}
F_p(\tau):=\Big(1-\frac{(\hat\Omega^2(\tau;q)-q)_+}{2(q+Dv_{\min}^2)}\Big)_+,\\
\mathrm{main}(\tau):=\E\Big[\frac qr\,F_p(\tau)\Big(\frac{\sqrt{D-\frac12}}{\sqrt{q+D}}-m_h\Big)_+;\ q\le M\Big],\\
X_p(\tau):=\frac{1+e_f(\hat\rho)}{\hat\theta}-(1-e_f(\hat\rho))\,F_p(\tau)\Big(\frac{\sqrt{D-\frac12}}{\sqrt{q+D}}-m_h\Big)_+,\\
\mathrm{cross}(\tau):=\frac{\tau\,\bar\vartheta_r\,\E[\sqrt q\,X_p(\tau);q\le M]}{\sqrt{1-\nu}\,(1-e_G)},
\end{gather*}
and $\mathcal G_1(\tau):=\mathrm{main}(\tau)-\mathrm{cross}(\tau)$.
\end{enumerate}
\emph{Self-consistency (super-solution) inequalities:}
\begin{gather*}
\text{(S-$\Omega$)}\ \ \Omega\ge\big(\sqrt M+\tau_1/c_G\big)\,e^{C'I(\tau_1)},\\
\text{(S-$A$)}\ \ A\ge\frac{\Omega^2}{(1+\sqrt{1-\rho_G^2})\sqrt{\mathcal N}},\qquad
\text{(S-$C$)}\ \ C'\ge\frac{\sqrt{r-1}\,(1+\ell_G)}{\sqrt{\mathcal K_a}}.
\end{gather*}

\begin{proposition}[Proposition CERT]\label{mfl:mff-prop:cert}
Assume the certificate data \ref{mfl:mff-C1}--\ref{mfl:mff-C8} satisfy (S-$\Omega$), (S-$A$), (S-$C$), and let $\kappa\ge A/\theta^3$.
Then for every integer $t$ with $0\le t\le\tau_1\kappa$, every $p\in G$ and $\tau=t/\kappa$:
\begin{enumerate}[label=(I\arabic*)]
\item\label{mfl:mff-I1} $\tilde\zeta_p$ is nonincreasing on $[0,t]$ and $\tilde\zeta_p(t)\ge\hat\theta(\tau;q_p)\ge\theta$; at every
step $s<t$ the particle neither flips sign nor overshoots ($0\le b_p\le N$);
\item\label{mfl:mff-I2} $\abs{\tilde\omega_p(t)}\le\hat\Omega(\tau;q_p)\le\Omega$;
\item\label{mfl:mff-I3} $\abs{\tilde\xi_p(t)}\ge\abs{z_c}$; and
$\delta\omega_{1,p}(t)\in[(1-e_f(\hat\rho))s_t,(1+e_f(\hat\rho))s_t]$ with $\hat\rho=\hat\rho(\tau;q_p)$ and
$s_t:=(r\pi/2)/\norm{f(t)}\le1/((1-e_G)\sqrt{1-\nu})$.
\end{enumerate}
Moreover, put $K_G(t):=\E[k_p(t)^2;G]$ and $\Phi_G(t):=\E[u^2/\abs w;G]$, and assume $\beta'>0$ and $0\le e_0<1$. Then:
\begin{enumerate}[label=(\roman*)]
\item\emph{($\Phi$-growth.)} $\Phi_G(t+1)\ge\Phi_G(t)\big(1+2\beta'/(\kappa+t)\big)$ whenever $t+1\le\tau_1\kappa$.
\item\emph{(Ratio.)} $\Gamma_U(t)/\Gamma_\perp(t)\ge R(\tau)$, hence $\lambda_c(t)/\lambda_\perp(t)\ge R(\tau)^2$.
\item\emph{(Mean eigenvalue.)} $\displaystyle\frac{\lambda_1(t)}{\lambda_\perp(t)}\ge\frac\pi2\Big(1-\frac1r\Big)
\frac{\tau^2(D-\frac12)}{(\E[v]+\tau\sqrt\nu)^2}+\Big(\frac{(\mathcal G_1(\tau))_+}{\E[v]+\tau\sqrt\nu}\Big)^2.$
\end{enumerate}

\end{proposition}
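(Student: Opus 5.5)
The plan is a single induction on $t$ that proves \ref{mfl:mff-I1}--\ref{mfl:mff-I3} simultaneously for all $p\in G$, with (S-$\Omega$), (S-$A$), (S-$C$) closing each step. Items (i)--(iii) are then derived from the invariants and the facts \ref{mfl:mff-F1}--\ref{mfl:mff-F7}.

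\emph{Base case.} At $t=0$ we have $\tilde\omega_p=z$ with $\abs z\le\sqrt M\le\Omega$, and $\tilde\zeta_p=v\ge v_{\min}=\hat\theta(0;q)$. Also $\tilde\xi_p=z_c$.

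\emph{Inductive step, mean normalisation.} Assume the invariants up to time $t$. On $G$ we have $\abs{\rho_p}\le\rho_G$, since $\abs{\tilde\omega}\le\Omega$ and $\tilde\zeta\ge\theta$. By \ref{mfl:mff-F6}, $f\in[(1-e_G),(1+e_G)]\,r\pi/2$ on $G$, and $f>0$ everywhere. Hence $\norm f\ge(1-e_G)(r\pi/2)\sqrt{1-\nu}$, which gives the bound on $s_t$ and the two-sided bound on $\delta\omega_{1,p}$ in \ref{mfl:mff-I3}.

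\emph{Inductive step, the $\omega$ envelope.} The mean component grows by at most $1/c_G$ per step. The contrast component grows by $\abs{\delta\xi}\le\abs h/\sqrt a$, with $\abs h\le(1+\ell_G)k_p$ from \ref{mfl:mff-F5}. For the denominator we lower-bound $a=\E[\abs h^2]/(r-1)$ on $G$ using $\abs h\ge(1-\hat\rho^2-\hat\ell)k_p$. Writing $k_p^2\approx\abs{z_c}^2/(\hat\Omega^2+Dv^2)$ in tilde units gives $\kappa^{-2}\mathcal K_a$. By (S-$C$), the per-step relative growth of $\abs{\tilde\omega}$ is then at most $C'/(\kappa\tilde\zeta)$. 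Integrating this against the lower envelope of $\tilde\zeta$ produces $e^{C'I(\tau)}$, and hence $\hat\Omega$. (S-$\Omega$) ensures $\hat\Omega\le\Omega$, which recovers \ref{mfl:mff-I2}.

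\emph{Inductive step, the $\zeta$ envelope.} Here $\tilde\zeta^+=\tilde\zeta(1-b/N)$ with $b\le\abs{\omega}^2/\abs w^3$. We lower-bound $N^2=\E[s^2(1-\abs\rho^2)]/D$ by restricting to $G$ and using $s\ge\abs{\rho}^2/2\ge\abs{\Pc\rho}^2/2$. In tilde units this gives $N\ge\sqrt{\mathcal N}/(\kappa\cdot\text{const})$, with the constant $1+\sqrt{1-\rho_G^2}$ coming from the bound on $s$. (S-$A$) then gives $\delta\tilde\zeta\ge-A\hat\Omega^2/(\Omega^2\kappa\tilde\zeta^2)$. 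Summing the differential inequality $\frac{d}{d\tau}\tilde\zeta^3\ge-3A\hat\Omega^2/\Omega^2$ gives $\hat\theta$. The hypothesis $\kappa\ge A/\theta^3$ guarantees $b\le N$, so there is no overshoot, the sign is preserved, and $\tilde\zeta$ is monotone. This gives \ref{mfl:mff-I1}. The bound $\abs{\tilde\xi}\ge\abs{z_c}$ is \ref{mfl:mff-F1}.

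\emph{Item (i).} Restrict $dU+dM+dP$ to $G$. By \ref{mfl:mff-I1} there is no overshoot, so $dP\ge0$. For $dU$, Lemma~\ref{mfl:mff-lem:H} gives $dU\ge q_pk_p^2/\sqrt a$, and $2-q_p=3\hat\rho^2-\hat\rho^4$. By Cauchy--Schwarz, $\E[(2-q_p)k_p^2;G]\le Q_2\sqrt{K_G}$, while $\sqrt a\le\sqrt{K_G(1+\Psi)}$ using $H_2$ and $\abs h\le\pi/2+1$ on $G^c$. Since $K_G\ge\Phi_G/(\kappa+t)$ after relating $k_p^2$ to $u^2/\abs w$ through $\abs w\le\kappa+t$ in the appropriate sense, this yields $2\beta_G\Phi_G/(\kappa+t)$. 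The $dM$ loss is controlled by $k'^2\hat\delta_M/\kappa$, and Cauchy--Schwarz with $W_2$ converts it into the factor $1-\lambda_L$.

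\emph{Item (ii).} Lemma~\ref{mfl:mff-lem:D} turns (i) into $\Phi(t)\ge\Phi_G(0)\exp(2\beta'\ln(1+\tau)-2\beta'^2\tau/\kappa)$. Here $\Phi_G(0)\ge(1-e_0)\Phi(0)$, where $e_0$ bounds the contribution of $G^c$ via the $\chi^2_{r+2}$ and $\nu_v$ tails. For the denominator, \ref{mfl:mff-F3} gives $K_\nu\le\tau\sqrt\nu$, because only $G^c$ particles can overshoot. Combining these with \ref{mfl:mff-F2} gives the ratio bound.

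\emph{Item (iii).} This follows from \eqref{mfl:mff-eq:L1ratio} with $K_\nu\le\tau\sqrt\nu$, once we show $\Gamma_1\sqrt{D-\frac12}/\kappa\ge\mathcal G_1(\tau)$. To do this, write $\omega_1(t)=\kappa z_1+\sum_s\delta\omega_1$ and expand $\omega_1^2/\abs w$. The $z_1^2$ term with the $\abs w$ envelope gives $\mathrm{main}$, using $F_p$ and $m_h$ from Lemma~\ref{mfl:mff-lem:V}. The cross term $2\kappa z_1\sum\delta\omega_1$ is bounded below using the $\delta\omega_1$ bracket from \ref{mfl:mff-I3} and the symmetry $z_1\mapsto-z_1$ to pair particles; this gives $\mathrm{cross}$.

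\emph{Main obstacle.} The hardest step will be the simultaneous closure of the $\Omega$/$\zeta$ envelopes. Each per-step normaliser ($a$, $N$) is an average over the whole law, including $G^c$. Lower bounds must therefore use $G$-restricted slots evaluated at the final envelopes $\tau_1$, which requires monotonicity of the envelopes in $\tau$. The pairing argument for the cross term in (iii) also needs care.
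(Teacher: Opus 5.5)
Your overall architecture is the paper's: an induction on the envelopes closed by (S-$\Omega$), (S-$A$) and (S-$C$); then (ii) from Lemma~D, the $e_0$ tail bound and the kick budget; and (iii) from the main/cross split with mirror pairing in $z_1$.

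\textbf{The gap is in the last step of item (i).} You obtain a gain proportional to $\sqrt{K_G}$. You then convert it using $K_G\ge\Phi_G/(\kappa+t)$, which you justify by $\lvert w\rvert\le\kappa+t$.

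That bound is false. On $G$, (I1) gives $\lvert w\rvert\ge\sqrt D\,\lvert\zeta\rvert\ge\kappa\sqrt D\,\theta$. Hence $\Phi_G=\mathbb{E}[k_p^2\lvert w\rvert;G]\ge\kappa\sqrt D\,\theta\,K_G$. Your inequality is therefore off by a factor of order $\sqrt D\,\theta$; for the certified parameters ($D\ge400$, $\theta\ge0.3$) this factor is at least $6$, which exceeds $1+\tau_1$. There is a second problem: even a correct lower bound on $K_G$, inserted into a gain proportional to $\sqrt{K_G}$, would not give growth linear in $\Phi_G$.

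The missing idea is to factor $u^2/\lvert w\rvert=u\,k_p$ and apply Cauchy--Schwarz:
\[
\Phi_G\le(\mathbb{E}[u^2])^{1/2}K_G^{1/2}=\sqrt{r-1}\,\sigma_c\sqrt{K_G}\le\sqrt{r-1}\,(\kappa+t)\sqrt{K_G}.
\]
Here $\kappa+t$ bounds the RMS contrast $\sigma_c$ through the whitening identity (F1); it does not bound $\lvert w\rvert$.

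The order of operations also matters. Both terms must first be written in terms of $\sqrt{K_G}$:
\begin{itemize}
\item The gain is at least $2\beta_G\sqrt{r-1}\sqrt{K_G}$. This uses $(r-1)a\le K_G+H_2+(\pi/2+1)^2\nu$, $K_G\ge\underline K$, and monotonicity in $K_G$.
\item The $dM$ loss is at most $(1+C'/(\kappa\theta))^2W_2\sqrt{K_G}$. This uses $(W_m-W)_+\le\hat\delta_M$ (there is no extra $1/\kappa$) and $k'\le(1+C'/(\kappa\theta))k_p$.
\end{itemize}
You must combine them into $2\beta'\sqrt{r-1}\sqrt{K_G}$ \emph{before} the Cauchy--Schwarz step. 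Only a lower bound on $\sqrt{K_G}$ in terms of $\Phi_G$ is available, so the loss cannot be subtracted after the conversion. This is exactly where the hypothesis $\beta'>0$ is used.

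\textbf{Less seriously, the constants in your induction do not close as written.}
\begin{itemize}
\item $N$ and $a$ are dimensionless, so the correct bounds are $D^{3/2}N\ge\sqrt{\mathcal N}$ and $(r-1)a\ge\mathcal K_a/D$, with no powers of $\kappa$. The single factor $1/\kappa$ in both the erosion step and the contrast step comes from $\lvert w\rvert\ge\kappa\sqrt D\,\tilde\zeta$.
\item The factor $1+\sqrt{1-\rho_G^2}$ in (S-$A$) belongs to the \emph{upper} bound on $b$, not to the lower bound on $N$. It enters through $s=\sum_i\rho_i^2/(1+\sqrt{1-\rho_i^2})\le\lvert\rho\rvert^2/(1+\sqrt{1-\rho_G^2})$.
\item With your cruder bound $b\le\lvert\omega\rvert^2/\lvert w\rvert^3$, the erosion is larger by the factor $1+\sqrt{1-\rho_G^2}\ge1.78$. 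Then neither the $\hat\theta$ envelope nor the no-overshoot conclusion from $\kappa\ge A/\theta^3$ follows from (S-$A$).
\end{itemize}
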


\begin{proof}
We first prove \ref{mfl:mff-I1}, \ref{mfl:mff-I2} and \ref{mfl:mff-I3} of Proposition~\ref{mfl:mff-prop:cert} by induction. Let $\mathcal H(t)$ be the statement that, for every $p\in G$,
$\tilde\zeta_p$ is nonincreasing on $[0,t]$ and
\[
\tilde\zeta_p(t)\ge\hat\theta(t/\kappa;q_p),\qquad
\abs{\tilde\omega_p(t)}\le\hat\Omega(t/\kappa;q_p),\qquad
\abs{\tilde\xi_p(t)}\ge\abs{z_c}.
\]
The initial values establish $\mathcal H(0)$:
\[
\tilde\zeta_p(0)=v\ge v_{\min}=\hat\theta(0;q),\qquad
\abs{\tilde\omega_p(0)}=\sqrt q=\hat\Omega(0;q),\qquad
\tilde\xi_p(0)=z_c.
\]
Fix $t$ with $t+1\le\tau_1\kappa$ and assume
$\mathcal H(s)$ for all $s\le t$. We first prove four facts at every time $s\le t$, and then $\mathcal H(t+1)$.
Note that the envelopes are nondecreasing in $\tau$ ($\hat\Omega$, $\hat\rho$) or nonincreasing ($\hat\theta$), so every bound
below that uses envelopes at $\tau=\tau_1$ holds at $s/\kappa\le\tau_1$.

\step{Step 1 (Lower bound for $N$).} Since $1-\sqrt{1-x}\ge x/2$, we have $s\ge\abs\rho^2/2$ and therefore
\[
N^2=\frac{\E[s^2(1-\abs\rho^2)]}{D}
\ge\frac{\E[\abs\rho^4(1-\abs\rho^2);G]}{4D}.
\]
On $G$, the induction hypothesis gives the two bounds
\begin{equation}\label{mfl:read-l5-rho-bounds}
\begin{aligned}
\abs{\rho_p}^2&\ge\frac{\abs{\tilde\xi_p}^2}{\abs{\tilde w_p}^2}
\ge\frac{\abs{z_c}^2}{\hat\Omega^2+Dv^2},\\
\abs{\rho_p}^2&=\frac{\abs{\tilde\omega_p}^2}{\abs{\tilde\omega_p}^2+D\tilde\zeta_p^2}
\le\hat\rho^2.
\end{aligned}
\end{equation}
For the first line, $\abs{\tilde w_p}^2=\abs{\tilde\omega_p}^2+D\tilde\zeta_p^2\le\hat\Omega^2+Dv^2$, because
$\tilde\zeta_p$ is nonnegative and nonincreasing from $v$. The second line uses $\tilde\zeta_p\ge\hat\theta$.
Substituting \eqref{mfl:read-l5-rho-bounds} into the $N$-bound and using slot \ref{mfl:mff-C6} yields
\begin{equation}\label{mfl:read-l5-N-bound}
D^3N^2\ge\mathcal N,\qquad D^{3/2}N\ge\sqrt{\mathcal N}.
\end{equation}

\step{Step 2 (Erosion).} On $G$, $\abs{\rho_p}^2\le\hat\rho^2\le\rho_G^2$ (as $\hat\Omega\le\Omega$ and $\hat\theta\ge\theta$),
so $s=\sum_i\rho_i^2/(1+\sqrt{1-\rho_i^2})\le\abs\rho^2/(1+\sqrt{1-\rho_G^2})$; and $\abs w\ge\sqrt D\abs\zeta=\kappa\sqrt D\tilde\zeta$.
The step decreases $\tilde\zeta_p$ by $\delta\tilde\zeta_p:=\tilde\zeta_pb_p/N$, and
\begin{gather*}\delta\tilde\zeta_p=\frac{\tilde\zeta_p\,s}{\abs w\,N}\le\frac{\abs{\omega}^2}{(1+\sqrt{1-\rho_G^2})\abs w^3N}\,\tilde\zeta_p
\le\frac{A_p(s)}{\kappa\tilde\zeta_p^2},\\
A_p(s):=\frac{\abs{\tilde\omega_p(s)}^2}{(1+\sqrt{1-\rho_G^2})D^{3/2}N}
\le\frac{A\abs{\tilde\omega_p(s)}^2}{\Omega^2},\end{gather*}
by \eqref{mfl:read-l5-N-bound} and (S-$A$). Since $\abs{\tilde\omega_p}\le\Omega$, $\tilde\zeta_p\ge\theta$ and $\kappa\ge A/\theta^3$:
\[
\delta\tilde\zeta_p\le\frac{\tilde\zeta_pA}{\kappa\theta^3}\le\tilde\zeta_p.
\]
Thus $b_p/N\in[0,1]$: no flip and no overshoot at step $s$.
Moreover $(\tilde\zeta-\delta)^3\ge\tilde\zeta^3-3\tilde\zeta^2\delta$ for $0\le\delta\le\tilde\zeta$, so
\begin{equation}\label{mfl:mff-eq:cube}
\tilde\zeta_p(s+1)^3\ge\tilde\zeta_p(s)^3-\frac{3A\,\hat\Omega^2(s/\kappa;q_p)}{\kappa\,\Omega^2}.
\end{equation}

\step{Step 3 (Contrast).} By \ref{mfl:mff-F5}, $\abs{\delta\xi}=\abs h/\sqrt a\le(1+\ell)k_p/\sqrt a\le(1+\ell_G)\abs\xi/(\abs w\sqrt a)$
on $G$, since $\ell\le(1-\abs\rho^2)^{-1/2}-1\le\ell_G$. For the normalization, again by \ref{mfl:mff-F5} and \eqref{mfl:read-l5-rho-bounds}, $\abs h\ge(1-\abs\rho^2-\ell)k_p\ge(1-\hat\rho^2-\hat\ell)k_p$ and $k_p^2\ge\abs{z_c}^2/(\hat\Omega^2+Dv^2)$ on $G$, so
\[(r-1)a=\E[\abs h^2]\ge\E\big[(1-\hat\rho^2-\hat\ell)_+^2k_p^2;G\big]\ge\mathcal K_a/D.\]
With $\abs w\ge\kappa\sqrt D\tilde\zeta$ and (S-$C$), this gives
$\abs{\delta\tilde\xi_p}\le C'\abs{\tilde\xi_p}/(\kappa\tilde\zeta_p)$, hence
\begin{equation}\label{mfl:read-l5-contrast-step}
\abs{\tilde\xi_p(s+1)}\le\abs{\tilde\xi_p(s)}
\Big(1+\frac{C'}{\kappa\tilde\zeta_p(s)}\Big).
\end{equation}

\step{Step 4 (Mean mode).} On $G$, \ref{mfl:mff-F6} gives
\[
\frac{f_p}{r\pi/2}\in[1-e_f(\hat\rho),1+e_f(\hat\rho)]\subset[1-e_G,1+e_G],
\]
because $e_f$ is increasing and $\abs{\rho_p}\le\hat\rho\le\rho_G$. Consequently
\[
\norm f^2\ge\E[f^2;G]\ge(r\pi/2)^2(1-e_G)^2(1-\nu),\qquad
s_s=\frac{r\pi/2}{\norm{f(s)}}\le\frac1{(1-e_G)\sqrt{1-\nu}}.
\]
The resulting increment bound is
\begin{equation}\label{mfl:read-l5-mean-step}
\delta\omega_{1,p}=s_s\,\frac{f_p}{r\pi/2}
\in[(1-e_f(\hat\rho))s_s,(1+e_f(\hat\rho))s_s]\subset(0,1/c_G].
\end{equation}
This is the second part of \ref{mfl:mff-I3} at time $s$.

\step{Step 5 (Closing the induction).} By Step 2, $\tilde\zeta_p$ is nonincreasing up to $t+1$. Summing \eqref{mfl:mff-eq:cube} over
$s=0,\dots,t$ and using that $\hat\Omega$ is nondecreasing in $\tau$ and $v\ge v_{\min}$,
\[\tilde\zeta_p(t+1)^3\ge v^3-\frac{3(t+1)A\,\hat\Omega^2((t+1)/\kappa;q_p)}{\kappa\,\Omega^2}\ge\hat\theta\big(\tfrac{t+1}\kappa;q_p\big)^3,\]
and $\hat\theta\ge\theta$. With $\hat\Omega\le\Omega$ the same sum gives $\tilde\zeta_p(s)\ge\vartheta(s/\kappa):=
(v_{\min}^3-3As/\kappa)^{1/3}$ for $s\le t$. Since $1/\vartheta$ is increasing, its left Riemann sum is below its integral,
and iterating \eqref{mfl:read-l5-contrast-step} gives
\begin{equation}\label{mfl:read-l5-contrast-envelope}
\begin{aligned}
\abs{\tilde\xi_p(t+1)}
&\le\abs{z_c}\prod_{s\le t}\Big(1+\frac{C'}{\kappa\vartheta(s/\kappa)}\Big)\\
&\le\abs{z_c}\exp\Big(C'\sum_{s\le t}\frac1{\kappa\,\vartheta(s/\kappa)}\Big)
\le\abs{z_c}\,e^{C'I((t+1)/\kappa)}.
\end{aligned}
\end{equation}
By \eqref{mfl:read-l5-mean-step} and $\tilde\omega_1(0)=z_1$, $\abs{\tilde\omega_{1,p}(t+1)}\le\abs{z_1}+(t+1)/(\kappa c_G)$. With
$\tau'=(t+1)/\kappa$, $\abs{z_1}^2+\abs{z_c}^2=q$ and $e^{2C'I}\ge1$, combine this with
\eqref{mfl:read-l5-contrast-envelope} to obtain
\begin{align*}\abs{\tilde\omega_p(t+1)}^2&=\tilde\omega_{1,p}^2+\abs{\tilde\xi_p}^2\le\big(\abs{z_1}+\tau'/c_G\big)^2e^{2C'I(\tau')}
+\abs{z_c}^2e^{2C'I(\tau')}\\
&\le\big(\sqrt q+\tau'/c_G\big)^2e^{2C'I(\tau')}=\hat\Omega^2(\tau';q),\end{align*}
and $\hat\Omega(\tau';q)\le\hat\Omega(\tau_1;M)\le\Omega$ by (S-$\Omega$). Finally $\abs{\tilde\xi_p}$ is nondecreasing
\ref{mfl:mff-F1}, so $\abs{\tilde\xi_p(t+1)}\ge\abs{z_c}$. This proves $\mathcal H(t+1)$, hence \ref{mfl:mff-I1}--\ref{mfl:mff-I3}.

\step{Part (i): $\Phi$-growth.} Fix $t$ with $t+1\le\tau_1\kappa$; everything below is at time $t$. Since $G$ is a fixed set of
particles, $\Phi_G(t+1)-\Phi_G(t)=\E[dU+dM+dP;G]$ by \ref{mfl:mff-F5}. On $G$: $dP\ge0$ (no overshoot); $dU\ge q_pk_p^2/\sqrt a$
by \ref{mfl:mff-F5} and Lemma~\ref{mfl:mff-lem:H} ($\abs{\rho_p}^2\le\rho_G^2\le0.38$); and $dM\ge-k'^2(W_m-W)_+$. For the last
term, $\delta\omega_1\le1/c_G$, $\abs{\omega_1}\le\kappa\hat\Omega$ and $W\ge\sqrt D\abs\zeta\ge\kappa\sqrt D\hat\theta$ give
$(W_m-W)_+\le\hat\delta_M$; and $u^+\le u(1+C'/(\kappa\theta))$ (Step 3) with $u^+\ge u$ \ref{mfl:mff-F1} gives
$k'\le(1+C'/(\kappa\theta))k_p$. Hence
\begin{equation}\label{mfl:read-l5-gain-loss}
\Phi_G(t+1)-\Phi_G(t)\ge\frac{\E[q_pk_p^2;G]}{\sqrt a}
-\Big(1+\frac{C'}{\kappa\theta}\Big)^2\E[k_p^2\hat\delta_M;G].
\end{equation}
We bound the gain and loss in \eqref{mfl:read-l5-gain-loss} separately.

For the numerator, Cauchy--Schwarz gives
\begin{equation}\label{mfl:read-l5-numerator}
\E[q_pk_p^2;G]=2K_G-\E[(2-q_p)k_p\cdot k_p;G]\ge2K_G-Q_2\sqrt{K_G}.
\end{equation}
Indeed,
$(2-q_p)^2k_p^2=(3\abs\rho^2-\abs\rho^4)^2k_p^2\le(3\hat\rho^2-\hat\rho^4)^2\hat\rho^2$ ($x\mapsto3x-x^2$ increases on $[0,1]$,
and $k_p\le\abs\rho\le\hat\rho$). Also $\E[q_pk_p^2;G]\ge0$.
For the normalization budget,
\[
(r-1)a=\E[\abs h^2]\le\E[(1+\ell_p)^2k_p^2;G]+(\pi/2+1)^2\nu.
\]
Here $(1+\ell_p)^2-1=\rho_\infty^2/(1-\rho_\infty^2)\le\hat\rho^2/(1-\hat\rho^2)$ and $k_p^2\le\hat\rho^2$, so
\begin{equation}\label{mfl:read-l5-budget}
(r-1)a\le K_G+H_2+(\pi/2+1)^2\nu.
\end{equation}
Combining \eqref{mfl:read-l5-numerator} and \eqref{mfl:read-l5-budget}, and using the nonnegativity of the numerator,
the gain is at least
\begin{equation}\label{mfl:read-l5-gain}
\begin{aligned}
&2\sqrt{r-1}\sqrt{K_G}\,
\frac{(1-Q_2/(2\sqrt{K_G}))_+}{\sqrt{1+(H_2+(\pi/2+1)^2\nu)/K_G}}\\
&\hspace{2em}\ge2\beta_G\sqrt{r-1}\sqrt{K_G}.
\end{aligned}
\end{equation}
Each factor is
nondecreasing in $K_G$, and $K_G\ge\underline K$ because $k_p^2\ge\abs{z_c}^2/(\hat\Omega^2+Dv^2)$ on $G$.
For the mean-mode loss, Cauchy--Schwarz and $k_p\le\hat\rho$ give
\begin{equation}\label{mfl:read-l5-loss}
\E[k_p\cdot k_p\hat\delta_M;G]
\le\sqrt{K_G}\,(\E[\hat\rho^2\hat\delta_M^2;G])^{1/2}\le\sqrt{K_G}\,W_2.
\end{equation}
Substitute \eqref{mfl:read-l5-gain} and \eqref{mfl:read-l5-loss} into \eqref{mfl:read-l5-gain-loss}. By the definition of $\beta'$,
\[
\Phi_G(t+1)-\Phi_G(t)\ge2\beta'\sqrt{r-1}\sqrt{K_G}.
\]
Finally, \ref{mfl:mff-F1} bounds the current statistic by
\[
\Phi_G=\E[u\,k_p;G]\le(\E[u^2])^{1/2}K_G^{1/2}
=\sqrt{r-1}\,\sigma_c\sqrt{K_G}\le\sqrt{r-1}(\kappa+t)\sqrt{K_G}.
\]
Since $\beta'>0$, these two inequalities yield
\begin{equation}\label{mfl:read-l5-phi-growth}
\Phi_G(t+1)-\Phi_G(t)\ge\frac{2\beta'\Phi_G(t)}{\kappa+t}.
\end{equation}

\step{Part (ii): Ratio.} Iterating \eqref{mfl:read-l5-phi-growth} and applying Lemma~\ref{mfl:mff-lem:D} with $x=2\beta'$ gives
\begin{equation}\label{mfl:read-l5-phi-product}
\begin{aligned}
\Phi(t)\ge\Phi_G(t)
&\ge\Phi_G(0)\prod_{s<t}\Big(1+\frac{2\beta'}{\kappa+s}\Big)\\
&\ge\Phi_G(0)\exp\big(2\beta'\ln(1+\tau)-2\beta'^2\tau/\kappa\big).
\end{aligned}
\end{equation}
We next compare the initial good-set statistic with the full statistic:
\begin{equation}\label{mfl:read-l5-initial-loss}
\Phi_G(0)\ge\Phi(0)(1-e_0).
\end{equation}
To prove \eqref{mfl:read-l5-initial-loss}, at $t=0$ we have $u^2/\abs w=\kappa\abs{z_c}^2/\abs g$ with
$g=(z,\sqrt Dv\,n)\sim N(0,I_d)$, and $G^c\subset\{q>M\}\cup\{v<v_{\min}\}$. On $\{q>M\}$ use $\abs g\ge\sqrt Dv=\chi_D$ and the
independence of $z$ and $v$: $\E[\abs{z_c}^2;q>M]=(r-1)\Prob(\chi^2_{r+2}>M)$, since $\abs{z_c}^2/q$ has mean $(r-1)/r$
independently of $q$ and $\E[q;q>M]=r\Prob(\chi^2_{r+2}>M)$. On $\{v<v_{\min}\}$ use $\abs g\ge\abs z$ and
$\E[\abs{z_c}^2/\abs z]=(r-1)\E[\chi_r]/r$. Dividing by $\Phi(0)=\kappa(r-1)\E[\chi_d]/d$ \ref{mfl:mff-F7} proves \eqref{mfl:read-l5-initial-loss}.

For the denominator, particles of $G$ never overshoot by \ref{mfl:mff-I1}. Thus
\[
\nu_{\rm os}(s)\le\Prob(G^c)\le\nu,\qquad K_\nu(t)\le\tau\sqrt\nu,
\]
and \ref{mfl:mff-F3} yields
\begin{equation}\label{mfl:read-l5-perp-budget}
\Gamma_\perp(t)\le\Gamma_\perp(0)\Big(1+\frac{\tau\sqrt\nu}{\E[v]}\Big).
\end{equation}
Combine \eqref{mfl:read-l5-phi-product}, \eqref{mfl:read-l5-initial-loss} and \eqref{mfl:read-l5-perp-budget}.
Since $\Gamma_U=\Phi/(r-1)$,
$\Gamma_U(t)/\Gamma_\perp(t)\ge R(\tau)\ge0$ (since $e_0<1$), and \ref{mfl:mff-F2} gives $\lambda_c/\lambda_\perp\ge(\Gamma_U/\Gamma_\perp)^2\ge R(\tau)^2$.

\step{Part (iii): Mean eigenvalue.} We bound the terms of \eqref{mfl:mff-eq:L1ratio} with $K_\nu\le\tau\sqrt\nu$ (as in (ii)); it
remains to show $\sqrt{D-\frac12}\,\Gamma_1/\kappa\ge\mathcal G_1(\tau)$. First,
$\Gamma_1/\kappa=\E[\tilde\omega_1^2/\abs{\tilde w}]\ge\E[\tilde\omega_1^2/\abs{\tilde w};G]$. Write
$\tilde\omega_{1,p}(t)=z_1+m_p(t)$ with $m_p(t):=\kappa^{-1}\sum_{s<t}\delta\omega_{1,p}(s)$. By \ref{mfl:mff-I3} and the
monotonicity of $\hat\rho$ in $\tau$, $m_p\in[(1-e_f(\hat\rho))S_t,(1+e_f(\hat\rho))S_t]$, where $\hat\rho=\hat\rho(\tau;q_p)$ and
$S_t:=\kappa^{-1}\sum_{s<t}s_s\le\bar S:=\tau/((1-e_G)\sqrt{1-\nu})$. Then $\tilde\omega_1^2\ge z_1^2+2z_1m_p$.
We treat the main term and the cross term in this lower bound separately.

For the main term, put $\abs g^2:=q+Dv^2$ and $\Delta_p:=\hat\Omega^2(\tau;q)-q$. On $G$,
$\abs{\tilde w}^2\le\hat\Omega^2+Dv^2=\abs g^2+\Delta_p$, so
\begin{equation}\label{mfl:read-l5-inverse-radius}
\begin{aligned}
\frac1{\abs{\tilde w}}
&\ge\abs g^{-1}(1+\Delta_p/\abs g^2)^{-1/2}\\
&\ge\abs g^{-1}(1-\Delta_p/(2\abs g^2))_+\ge\frac{F_p(\tau)}{\abs g},
\end{aligned}
\end{equation}
since $\abs g^2\ge q+Dv_{\min}^2$ on $G$. Given $q$, $z_1^2$
has mean $q/r$ and $v$ is independent; Lemma~\ref{mfl:mff-lem:V}(a) with $\varphi(x)=(q+Dx)^{-1/2}$, $\bar\varphi(x)=(Dx)^{-1/2}$ gives
$\E[(q+Dv^2)^{-1/2};v\ge v_{\min}]\ge(q+D)^{-1/2}-m_h/\sqrt D$. Multiplying by $\sqrt{D-\frac12}$ and using
$\sqrt{(D-\frac12)/D}\le1$, we obtain
\begin{equation}\label{mfl:read-l5-main-term}
\sqrt{D-\frac12}\,\E[z_1^2/\abs{\tilde w};G]\ge\mathrm{main}(\tau).
\end{equation}

For the cross term, use pairing: $G$ and all envelopes depend on $(q,v)$ only, and the initial law is invariant under
$z_1\mapsto-z_1$. Pair each particle $p$ with $z_1>0$ with its mirror $p'$ (same $z_c$, $v$, $n$, and $-z_1$): both lie in
$G$ and have the same envelopes. Using \eqref{mfl:read-l5-inverse-radius} for $p$ and the lower bound
$\abs{\tilde w_{p'}}\ge\sqrt D\tilde\zeta_{p'}\ge\sqrt D\hat\theta$ for its mirror gives
\[
\frac{m_p}{\abs{\tilde w_p}}\ge(1-e_f)S_t\,\frac{F_p}{\abs g},\qquad
\frac{m_{p'}}{\abs{\tilde w_{p'}}}\le\frac{(1+e_f)S_t}{\sqrt D\hat\theta}.
\]
Thus the pair contributes
\begin{equation}\label{mfl:read-l5-pairing}
\begin{aligned}
2z_1\Big(\frac{m_p}{\abs{\tilde w_p}}-\frac{m_{p'}}{\abs{\tilde w_{p'}}}\Big)&\ge-2z_1S_t\,B_p,\\
B_p&:=\frac{1+e_f}{\sqrt D\hat\theta}-(1-e_f)\frac{F_p}{\abs g}.
\end{aligned}
\end{equation}
The bracket is pointwise nonnegative, because $F_p/\abs g\le1/\abs g\le1/(\sqrt Dv_{\min})\le1/(\sqrt D\hat\theta)$
($\hat\theta\le v_{\min}$). So $S_t$ in \eqref{mfl:read-l5-pairing} may be replaced by its upper bound $\bar S$. Averaging over $v$ with
Lemma~\ref{mfl:mff-lem:V}(a) as in the main term (which only lowers the subtracted part), and multiplying by
$\sqrt{D-\frac12}\le\sqrt D$, gives $\sqrt{D-\frac12}\,\E[B_p;v\ge v_{\min}\mid q]\le X_p(\tau)$. Finally
$\E[2z_1\ind{z_1>0}\mid q]=\E[\abs{z_1}\mid q]=\sqrt q\,\bar\vartheta_r$. Hence
\begin{equation}\label{mfl:read-l5-cross-term}
\sqrt{D-\frac12}\,\E[2z_1m_p/\abs{\tilde w};G]\ge-\mathrm{cross}(\tau).
\end{equation}
Adding \eqref{mfl:read-l5-main-term} and \eqref{mfl:read-l5-cross-term} gives $\sqrt{D-\frac12}\,\Gamma_1/\kappa\ge\mathcal G_1(\tau)$, and \eqref{mfl:mff-eq:L1ratio} gives (iii).
\end{proof}

\begin{remark}[Global and per-cell versions]\label{mfl:mff-rem:versions}
Proposition~\ref{mfl:mff-prop:cert} leaves the slots \ref{mfl:mff-C6}, \ref{mfl:mff-C8} free. The \emph{global} (single-cell) version
replaces every envelope by its worst case over $G$ and every expectation by exact $\chi^2$ moments; it is used in
Lemma~\ref{mfl:mff-lem:A}. The \emph{per-cell} version evaluates the expectations as rigorous Riemann sums over $q$-cells
(Lemma~\ref{mfl:mff-lem:QC}); it is used for region N. Both are instances of Proposition~\ref{mfl:mff-prop:cert}.

\end{remark}

\section{Evaluating the certificate}\label{mfl:sec:eval}
\begin{secsummary}
\emph{What this section proves.} The hypotheses of Proposition~\ref{mfl:mff-prop:cert} hold, with a margin $>1$, for every
$(r,\Dp)$ with $\Dp\ge\max(400,63r)$; this completes the proof of Theorem~\ref{mfl:mff-thm:MFF} (Section~\ref{mfl:mff-sec:proof}).
Section~\ref{mfl:mff-sec:eval} turns the certificate into finitely many checks: the expectations become Riemann sums over $q$-cells
(Lemma~\ref{mfl:mff-lem:QC}), one set of numbers certifies a whole $\Dp$-interval (Lemma~\ref{mfl:mff-lem:MON}), and one floor
$\kappa_0$ certifies all $\kappa\ge\kappa_0$ (Lemma~\ref{mfl:mff-lem:floor}). Region A (large $\Dp$) is handled in closed form
(Lemma~\ref{mfl:mff-lem:A}); region N by $80$ certificates checked in interval arithmetic (Proposition~\ref{mfl:mff-prop:regionN}).

\emph{What is checked numerically.} Proposition~\ref{mfl:mff-prop:cert} and Lemmas~\ref{mfl:mff-lem:L1split}--\ref{mfl:mff-lem:chi},
\ref{mfl:mff-lem:QC}, \ref{mfl:mff-lem:MON} and \ref{mfl:mff-lem:floor} are proved by hand. The finite Region A endpoint checks have explicit rational bounds in Section~\ref{mfl:mff-sec:R-endpoints}. The $80$ region-N certificates are verified with outward interval enclosures, using the exact inputs and formulas printed below.
\end{secsummary}

\subsection{Cells, \texorpdfstring{$D$}{D}-intervals and the floor}\label{mfl:mff-sec:eval}

\paragraph{Notation for the evaluation.} We retain the certificate definitions of Section~\ref{mfl:mff-sec:cert}:
$D=d-r$, $q=\abs z^2$ for $z\sim N(0,I_r)$, $z_c=\Pc z$, and the independent radial variable
$v=\chi_D/\sqrt D$. The good set is $G=\{q\le M,\ v\ge v_{\min}\}$; $M$ is its squared-radius cutoff.
The hatted quantities are the envelopes in \ref{mfl:mff-C5}, and $\mathcal N,\mathcal K_a,\underline K,Q_2,H_2,W_2$
are the moment bounds in \ref{mfl:mff-C6}. The notation $\E[Z;B]=\E[Z\ind{B}]$ retains its unnormalized meaning.
The constants $m_1,m_2,m_h$ bound the lower-tail inverse moments in Lemma~\ref{mfl:mff-lem:V}(d).

\begin{lemma}[Lemma QC: $q$-cells]\label{mfl:mff-lem:QC}
Split $[0,M]$ into cells $[q_a^{(i)},q_b^{(i)}]$, $i=1,\dots,n$, with weights $w_i:=\Prob(\chi^2_r\in[q_a^{(i)},q_b^{(i)}])$ (any split is valid; region N uses the deterministic lattice partition of Section~\ref{mfl:mff-sec:regionN}). Write $\hat\Omega_i,\hat\theta_i,\hat\rho_i,
\hat\ell_i,\hat\delta_{M,i}$ for the envelopes at $(\tau_1;q_b^{(i)})$, and put $b_1:=(r-1)/r$ and
$b_2:=(r^2-1)/(r(r+2))$. With $m_1,m_2,m_h$ from Lemma~\ref{mfl:mff-lem:V}(d), the following are admissible slots:
\begin{align*}
\mathcal N&:=\tfrac{b_2}4\textstyle\sum_iw_i\,(q_a^{(i)})^2(1-\hat\rho_i^2)\big(D^2/(\hat\Omega_i^2+D)^2-m_2\big)_+,\\
\mathcal K_a&:=b_1\textstyle\sum_iw_i(1-\hat\rho_i^2-\hat\ell_i)_+^2\,q_a^{(i)}\big(D/(\hat\Omega_i^2+D)-m_1\big)_+,\\
D\underline K&:=b_1\textstyle\sum_iw_i\,q_a^{(i)}\big(D/(\hat\Omega_i^2+D)-m_1\big)_+,\\
Q_2^2&:=\textstyle\sum_iw_i(3\hat\rho_i^2-\hat\rho_i^4)^2\hat\rho_i^2,\\
H_2&:=\textstyle\sum_iw_i\,\hat\rho_i^4/(1-\hat\rho_i^2),\\
W_2^2&:=\textstyle\sum_iw_i\,\hat\rho_i^2\hat\delta_{M,i}^2.
\end{align*}
Likewise, at time $\tau$ (envelopes at $(\tau;q_b^{(i)})$), with
$F_i:=\big(1-(\hat\Omega^2(\tau;q_b^{(i)})-q_a^{(i)})_+/(2(q_a^{(i)}+Dv_{\min}^2))\big)_+$ and
$L_i:=F_i\big(\sqrt{D-\frac12}/\sqrt{q_b^{(i)}+D}-m_h\big)_+$,
\begin{gather*}
\mathrm{main}(\tau)\ge\sum_iw_i\frac{q_a^{(i)}}r\,L_i,\qquad
\mathrm{cross}(\tau)\le\frac{\tau\bar\vartheta_r}{\sqrt{1-\nu}\,(1-e_G)}\sum_iw_i\sqrt{q_b^{(i)}}\,X_i,\\
X_i:=\frac{1+e_f(\hat\rho)}{\hat\theta}-(1-e_f(\hat\rho))L_i,
\end{gather*}
with $e_f(\hat\rho)$ and $\hat\theta$ at $(\tau;q_b^{(i)})$. These are rigorous lower (resp.\ upper) Riemann sums for every $n$.

\end{lemma}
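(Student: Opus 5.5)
The proof conditions on $q$ and uses three inputs: monotonicity of the envelopes of \ref{mfl:mff-C5} in $\bar q$, the independence of the direction of $z$ from $q$ and of $v$ from $z$, and Lemma~\ref{mfl:mff-lem:V}(a),(d) to average over $v$ on $\{v\ge v_{\min}\}$. Each slot of \ref{mfl:mff-C6} and \ref{mfl:mff-C8} is an expectation over $G=\{q\le M\}\times\{v\ge v_{\min}\}$. Writing $\E[\,\cdot\,;G]=\sum_i\E[\,\cdot\,;\,q\in[q_a^{(i)},q_b^{(i)}],\,v\ge v_{\min}]$, it suffices to bound each cell's contribution by $w_i$ times a worst-case value.

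\emph{Step 1: monotonicity.} I first record that $\hat\Omega(\tau;\bar q)$ is nondecreasing in $\bar q$, because $\sqrt{\bar q}$ enters increasingly. It follows that $\hat\theta$ is nonincreasing, $\hat\rho^2=\hat\Omega^2/(\hat\Omega^2+D\hat\theta^2)$ and $\hat\ell$ are nondecreasing, and $\hat\delta_M=c_G^{-1}\min(1,(\hat\Omega+1/(2c_G\kappa))/(\sqrt D\hat\theta))$ is nondecreasing. Consequently the integrands of $Q_2^2$, $H_2$ and $W_2^2$ are nondecreasing in $q$: $x\mapsto(3x-x^2)^2x$ is increasing on $[0,1]$ and $x^2/(1-x)$ is increasing. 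On a cell these integrands are therefore bounded by their value at $q_b^{(i)}$. Dropping the restriction $v\ge v_{\min}$ only enlarges the expectation, since every integrand is nonnegative. Together these give the upper slots $Q_2,H_2,W_2$.

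\emph{Step 2: the lower slots $\mathcal N,\mathcal K_a,\underline K$.} All factors are nonnegative, so I lower-bound each factor separately on a cell. The factors $(1-\hat\rho^2)$ and $(1-\hat\rho^2-\hat\ell)_+^2$ are nonincreasing in $q$ and are evaluated at $q_b^{(i)}$. The factor $1/(\hat\Omega^2+Dv^2)$ is likewise lower-bounded by using $\hat\Omega_i$ at $q_b^{(i)}$. Given $q$, the direction $z/\abs z$ is uniform and independent of $q$ and $v$. Hence
\[
\E[\abs{z_c}^2\mid q]=b_1q,\qquad \E[\abs{z_c}^4\mid q]=b_2q^2,
\]
where $b_2$ comes from the fourth moment of the squared norm of a rank-$(r-1)$ projection of a uniform unit vector. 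Both are lower-bounded by their values at $q_a^{(i)}$. For the $v$-average I apply Lemma~\ref{mfl:mff-lem:V}(a) with $\varphi(x)=D^k/(\hat\Omega_i^2+Dx)^k$ and $\bar\varphi(x)=x^{-k}$. This gives
\[
\E[\varphi(v^2);v\ge v_{\min}]\ge \frac{D^k}{(\hat\Omega_i^2+D)^k}-m_k,
\]
with $m_1,m_2$ from Lemma~\ref{mfl:mff-lem:V}(d), and then I take the positive part. For $\underline K$ the factor $D$ is moved to the left. Multiplying the factors and summing over cells with weights $w_i$ yields the displayed lower sums.

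\emph{Step 3: $\mathrm{main}$ and $\mathrm{cross}$.} I run the same argument on the definitions in \ref{mfl:mff-C8}. For $\mathrm{main}$ the argument uses that $F_p(\tau)$ is nonincreasing in $q$ through $\hat\Omega^2(\tau;q)-q$, dominated by $\hat\Omega^2(\tau;q_b)-q_a$, with the denominator at least $q_a+Dv_{\min}^2$. The term $\sqrt{D-\tfrac12}/\sqrt{q+D}$ is bounded below using $q_b$, the prefactor $q/r$ uses $q_a$, and $m_h$ already absorbs the $v$-tail; so $L_i$ lower-bounds the integrand. For $\mathrm{cross}$ I need an upper bound: $\sqrt q\le\sqrt{q_b}$, the term $(1+e_f(\hat\rho))/\hat\theta$ is nondecreasing in $q$ and is evaluated at $q_b$, and the subtracted term $(1-e_f)L_i$ is a lower bound for the corresponding quantity, so subtracting it gives an upper bound. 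I must keep $X_i\ge0$, which holds as in the pairing step of Proposition~\ref{mfl:mff-prop:cert}(iii). The cells are closed intervals sharing endpoints, but $\chi^2_r$ has no atoms, so $\sum_iw_i=\Prob(\chi^2_r\le M)$ and the cell decomposition is exact.

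\emph{Main obstacle.} I expect the delicate point to be the sign bookkeeping in products that mix increasing and decreasing factors, especially $F_p$ and $X_p$, where a lower bound is subtracted inside an upper bound. My first move is to verify, factor by factor, nonnegativity and the direction of monotonicity in $q$ of every envelope at fixed $\tau$. This check includes confirming that $\hat\theta$'s base stays positive by (S-$\Omega$).
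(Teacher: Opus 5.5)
Your proposal is correct and is essentially the paper's proof. It uses the same ingredients: monotonicity of the envelopes in $\bar q$; the Beta moments $b_1,b_2$ of $|z_c|^2/q$ with $v$ independent of $z$; Lemma~\ref{mfl:mff-lem:V}(a),(d) to remove the $v$-average; the worst-corner evaluation ($q_a$ in polynomial factors, envelopes at $q_b$); and nonnegativity of $X$ to justify $\sqrt q\le\sqrt{q_b}$ in $\mathrm{cross}$. Two harmless slips: $b_2$ is the second (not fourth) moment of $|P_c\hat z|^2$, and $F_p$ need not be monotone in $q$ because its denominator also grows. The corner bound you actually use (numerator at most $\hat\Omega^2(\tau;q_b)-q_a$, denominator at least $q_a+Dv_{\min}^2$) is exactly the paper's $F_i$, so no monotonicity is needed.
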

\begin{proof}
Every envelope is monotone in $\bar q$: $\hat\Omega$, $\hat\rho$ and $\hat\delta_M$ increase and $\hat\theta$ decreases. Given
$q$, $\abs{z_c}^2/q$ has the Beta$(\frac{r-1}2,\frac12)$ law, independent of $q$, with first two moments $b_1$ and $b_2$; and
$v$ is independent of $z$. Thus
\[
\E[\abs{z_c}^4\ind{q\in{\rm cell}_i}]
=b_2\E[q^2\ind{{\rm cell}_i}]\ge b_2(q_a^{(i)})^2w_i,
\]
with analogous bounds for $\abs{z_c}^2$ and $z_1^2$ ($\E[z_1^2\mid q]=q/r$).

In each cell, bound the integrand below at the worst corner: use $q_a$ in polynomial factors, and the envelopes at $q_b$
in denominators and in $1-\hat\rho^2$. Lemma~\ref{mfl:mff-lem:V}(a),(d) removes the $v$-average:
\begin{equation}\label{mfl:read-l6-cell-v-averages}
\begin{aligned}
\E[D/(\hat\Omega^2+Dv^2);v\ge v_{\min}]&\ge D/(\hat\Omega^2+D)-m_1,\\
\E[D^2/(\hat\Omega^2+Dv^2)^2;v\ge v_{\min}]&\ge D^2/(\hat\Omega^2+D)^2-m_2.
\end{aligned}
\end{equation}
These cellwise moment bounds and \eqref{mfl:read-l6-cell-v-averages} give the three lower slots.

For the upper slots, the integrands of $Q_2^2$, $H_2$ and $W_2^2$ are increasing in $\hat\rho^2$ and $\hat\delta_M$ and do not depend on $v$; bound them above by their values at $q_b$ and
drop the restriction $v\ge v_{\min}$. In $\mathrm{cross}$, $X_p\ge0$, so $\sqrt q\,X_p\le\sqrt{q_b}\,X_i$.
\end{proof}

In region N, $n=32000$ except in row $2b$, where $n=64000$; any partition gives a valid bound.

\begin{lemma}[Lemma MON: one certificate per $D$-interval]\label{mfl:mff-lem:MON}
Fix $r$, integers $D_{\rm hi}\ge D_{\rm lo}\ge5$ and the numbers $(M,\theta,\tau_1,\mathbf x)$ with
$D_{\rm lo}v_{\min}^2/(D_{\rm lo}-4)<1$, and use the $q$-cells of
Lemma~\ref{mfl:mff-lem:QC} (independent of $D$). Evaluate every slot of \ref{mfl:mff-C3}--\ref{mfl:mff-C8} at $D=D_{\rm lo}$, with
$\nu_v,m_1,m_2,m_h$ the bounds of Lemma~\ref{mfl:mff-lem:V}(d) at $D_{\rm lo}$, except in two places, which use $D_{\rm hi}$:
\begin{enumerate}[label=(\alph*)]
\item in $\Psi$, the budget term $(\pi/2+1)^2\nu/\underline K=(\pi/2+1)^2D\nu/(D\underline K)$ uses
$D\nu\le\nu_q D_{\rm hi}+\sup_{D'\ge D_{\rm lo}}D'\nu_v(D')$;
\item in the $\lambda_1$ bound (iii), $\E[v]$ is replaced by its upper bound $c_{D_{\rm hi}}\le1$;
\end{enumerate}
and $\E[v]$ in $R(\tau)$ is replaced by its lower bound $c_{D_{\rm lo}}$, and $R_0$ by the lower bound of
Lemma~\ref{mfl:mff-lem:chi}(c). If (S-$\Omega$), (S-$A$), (S-$C$) hold for these values, then for every integer
$D\in[D_{\rm lo},D_{\rm hi}]$ they hold at $D$. Subject to the certificate-data domains, conclusions \ref{mfl:mff-I1}, \ref{mfl:mff-I2} and \ref{mfl:mff-I3} of Proposition~\ref{mfl:mff-prop:cert} hold at $D$; conclusions (i)--(iii) additionally require $\beta'>0$ and $0\le e_0<1$ for the conservative slot values computed in this way.

\end{lemma}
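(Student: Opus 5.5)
The proof is a monotonicity argument: every hypothesis of Proposition~\ref{mfl:mff-prop:cert}, and every slot entering (S-$\Omega$), (S-$A$), (S-$C$), is either independent of $D$ or moves in the favourable direction as $D$ increases. Evaluating at the worst endpoint, $D_{\rm lo}$ or $D_{\rm hi}$, therefore yields admissible values for every integer $D$ in between. The steps are as follows.

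\emph{Step 1: $D$-free quantities.} The $q$-cells and their weights $w_i$ depend only on $r$. So do $\nu_q$ and $\bar\vartheta_r$. The constants $I(\tau)$ and $v_{\min}$ depend only on $(\theta,A,\tau_1)$. Hence (S-$\Omega$) and the envelope $\hat\Omega$ are free of $D$ once $c_G$ is known to be monotone.

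\emph{Step 2: monotonicity of the envelopes.} The envelopes $\hat\theta$ and $\hat\Omega$ do not depend on $D$. By contrast, $\rho_G^2=\Omega^2/(\Omega^2+D\theta^2)$, $\hat\rho^2$, $\hat\ell$, $e_f(\hat\rho)$ and $e_G$ all decrease in $D$, and $\hat\delta_M$ decreases because of its factor $1/\sqrt D$. By Lemma~\ref{mfl:mff-lem:V}(d), the tail bounds $\nu_v,m_1,m_2,m_h$ are also decreasing in $D$, so their values at $D_{\rm lo}$ bound those at every $D'\ge D_{\rm lo}$. It follows that $c_G$ increases in $D$, and (S-$\Omega$) continues to hold.

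\emph{Step 3: monotonicity of the slots.} In the cellwise formulas of Lemma~\ref{mfl:mff-lem:QC}, the lower slots $\mathcal N$, $\mathcal K_a$ and $D\underline K$ are increasing in $D$. The reason is that $D/(\hat\Omega^2+D)$ increases while $1-\hat\rho^2-\hat\ell$ and $1-\hat\rho^2$ increase and the $m$'s decrease. The upper slots $Q_2$, $H_2$ and $W_2$ are decreasing in $D$. The three self-consistency conditions then transfer. For (S-$A$), its right side $\Omega^2/((1+\sqrt{1-\rho_G^2})\sqrt{\mathcal N})$ decreases in $D$. For (S-$C$), $\ell_G$ decreases and $\mathcal K_a$ increases. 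This shows that (S-$\Omega$), (S-$A$) and (S-$C$) hold at every $D$ in the interval, so the induction of Proposition~\ref{mfl:mff-prop:cert} gives \ref{mfl:mff-I1}--\ref{mfl:mff-I3}. One must also check that the standing domain conditions (for example $\rho_G^2\le0.38$, $e_G<\tfrac12$, $v_{\min}<1$ and $Dy/(D-4)<1$) persist; each is monotone in the good direction.

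\emph{Step 4: the derived constants and conclusions (i)--(iii).} These are the delicate part. The ratio $\Psi$ contains $\nu/\underline K = D\nu/(D\underline K)$, and $D\nu=D\nu_q+D\nu_v$ grows in $D$ through its first term. This is exactly why item (a) replaces it by $\nu_qD_{\rm hi}+\sup_{D'}D'\nu_v(D')$, which is an upper bound uniformly on the interval. With that replacement, $\beta_G$, $\lambda_L$ and $\beta'$ are all conservative.

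The remaining constants each need their own endpoint. For $e_0$, the terms $\E[1/\chi_D]$ and $d/\E[\chi_d]$ are handled with Lemma~\ref{mfl:mff-lem:chi}; the point to check is that the product $\E[1/\chi_D]\,d/\E[\chi_d]$ is bounded by its value at $D_{\rm lo}$. In $R(\tau)$, the factor $\E[v]=c_D$ increases in $D$, so the lower bound $c_{D_{\rm lo}}$ in the denominator is conservative, and $R_0$ is handled by Lemma~\ref{mfl:mff-lem:chi}(c). In (iii), the bound needs an upper bound on $\E[v]$, which is $c_{D_{\rm hi}}$ (item (b)). The factor $(D-\tfrac12)$ in the first term is smallest at $D_{\rm lo}$. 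In $\mathrm{main}$, $\sqrt{D-\frac12}/\sqrt{q+D}$ increases in $D$, while $X_p$ and $\mathrm{cross}$ decrease. Hence $\mathcal G_1$ evaluated at $D_{\rm lo}$ is a valid lower bound.

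I expect the main obstacle to be exactly this bookkeeping. Each derived quantity must be checked to be monotone in the correct direction. The one piece that needs genuine care is the non-monotone budget term $D\nu$ together with the uniform-in-$D'$ bound on $D'\nu_v(D')$ from Lemma~\ref{mfl:mff-lem:V}(d). Once that term is controlled, conclusions (i)--(iii) follow from Proposition~\ref{mfl:mff-prop:cert}, provided $\beta'>0$ and $e_0<1$ hold at the conservative values.
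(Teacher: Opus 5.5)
Your strategy is the paper's: show that every slot moves in the favourable direction as $D$ grows, then evaluate at the worst endpoint. However, the sentence ``$\beta_G$, $\lambda_L$ and $\beta'$ are all conservative'' hides a genuine gap.

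The slot $\underline K$ is not $D$-scaled. It is of order $1/D$ and decreases in $D$; only $D\underline K$ increases, as you note. So its value at $D_{\rm lo}$ is not an admissible lower slot at larger $D$. It enters $\beta_G$ through $\nu/\underline K$, but also through $Q_2/(2\sqrt{\underline K})=\tfrac12\sqrt{DQ_2^2/(D\underline K)}$ and $H_2/\underline K=DH_2/(D\underline K)$. Your facts ($Q_2$ and $H_2$ decreasing, $D\underline K$ increasing) say nothing about $DQ_2^2$ or $DH_2$. There a growing factor $D$ competes with a shrinking one, which is exactly the competition you identified for $D\nu$ and then overlooked here.

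The missing step is to show that $DQ_2^2$ and $DH_2$ decrease in $D$. The paper proves this cellwise. Set $x=\hat\rho^2=a/(a+bD)$ with $a=\hat\Omega^2$ and $b=\hat\theta^2$, so that $dx/dD=-x(1-x)/D$. Then
\[
\frac{d}{dD}\ln\bigl(Dx^3(3-x)^2\bigr)=\frac1D\Bigl[1-\Bigl(3-\frac{2x}{3-x}\Bigr)(1-x)\Bigr]<0,
\qquad
\frac{d}{dD}\ln\frac{Dx^2}{1-x}=\frac{x-1}{D}<0 .
\]
The first inequality holds for $0\le x\le\frac12$, since the product is at least $1.3$ there and $x\le\rho_G^2\le0.38$. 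The implicit dependence of $a,b$ on $D$ only lowers $x$, and both functions increase in $x$ on $[0,\frac12]$. Only with this do $Q_2/\sqrt{\underline K}$ and $H_2/\underline K$ take their largest values at $D_{\rm lo}$. That is what the monotonicity of $\beta_G$, and hence of $\lambda_L$ and $\beta'$, requires, and conclusions (i) and (ii) rest on it.

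The same issue recurs in $e_0$. Its second summand $\frac{\E[\chi_r]}{r}\,\nu_v\,\frac{d}{\E[\chi_d]}$, which you do not mention, carries the growing factor $d/\E[\chi_d]\approx\sqrt d$. So its value at $D_{\rm lo}$ is not automatically an upper bound. You also leave the first summand as ``the point to check''. The paper replaces both summands by explicit bounds that decrease in $D$:
\begin{itemize}
\item $\Pr(\chi^2_{r+2}>M)\sqrt{d/(D-\frac32)}\sqrt{d/(d-\frac12)}$, from Lemma~\ref{mfl:mff-lem:chi}(b);
\item $e^{-D\iota(y)/2}\sqrt{1+r/D}\sqrt{d/(d-\frac12)}/(\sqrt\pi(1-y))$, where the factor $1/\sqrt{\pi D}$ of $\mathcal U(D,y)$ cancels the $\sqrt d$.
\end{itemize}

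Finally, your claim that $\hat\Omega$ and $\hat\theta$ do not depend on $D$ is wrong. They depend on it through $c_G$: $\hat\Omega$ decreases and $\hat\theta$ increases. This implicit dependence is favourable for every slot: it lowers $\hat\rho^2$, $\hat\ell$, $\hat\delta_M$ and $X_p$, and it raises $F_p$. So the error is harmless once corrected, but your Steps~1--2 should state this dependence rather than assert independence.
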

\begin{proof}
The parameters $(M,\theta,\tau_1,\mathbf x)$, $v_{\min}$, $G$ (as a set of $(q,v)$), $I(\tau)$ and the cell weights do not
depend on $D$. We check the five groups of slots in turn.

\step{Step 1 (Super-solution map).} $\rho_G^2=\Omega^2/(\Omega^2+D\theta^2)$ decreases in $D$, hence so do $\ell_G$ and
$e_G=e_f(\rho_G)$. The bound $\nu(D)=\nu_q+\mathcal U(D,v_{\min}^2)$ decreases, so $c_G(D)=(1-e_G)\sqrt{1-\nu}/(1+e_G)$
increases. Therefore $\hat\Omega^2=(\sqrt{\bar q}+\tau/c_G)^2e^{2C'I}$ decreases in $D$, and
$\hat\theta^3=v_{\min}^3-3\tau A\hat\Omega^2/\Omega^2$ increases (this is the only, implicit, way $\hat\theta$ depends on $D$:
through $c_G(D)$). So $\hat\rho^2$ and $\hat\ell$ decrease. The factors $D/(\hat\Omega^2+D)-m_1$ and
$D^2/(\hat\Omega^2+D)^2-m_2$ increase, where $m_1,m_2$ are the $D_{\rm lo}$ bounds, valid at every $D\ge D_{\rm lo}$ by
Lemma~\ref{mfl:mff-lem:V}(d). Hence $\mathcal N$ and $\mathcal K_a$ of Lemma~\ref{mfl:mff-lem:QC} increase, and the right-hand
sides of (S-$\Omega$), (S-$A$), (S-$C$) decrease in $D$: a super-solution at $D_{\rm lo}$ is one at every $D\ge D_{\rm lo}$.
The side conditions $\rho_G^2\le0.38$, $e_G<\frac12$, $v_{\min}<1$ are preserved.
\step{Step 2 (Budget ratios).} Write them with $D$-scaled quantities. $D\underline K$ increases. For the $Q_2$ slot, along
$x(D)=a/(a+bD)$ (the explicit $D$-dependence of $\hat\rho^2$ at fixed $a=\hat\Omega^2$, $b=\hat\theta^2$) one has
$dx/dD=-x(1-x)/D$ and
\begin{equation}\label{mfl:read-l6-budget-monotonicity}
\frac d{dD}\ln\big(Dx^3(3-x)^2\big)
=\frac1D\Big[1-\Big(3-\frac{2x}{3-x}\Big)(1-x)\Big]<0\quad(0\le x\le\tfrac12),
\end{equation}
because the bracketed product is at least $1.3$ there; and the implicit dependence ($\hat\Omega\downarrow$, $\hat\theta\uparrow$)
only lowers $x$, while $Dx^3(3-x)^2$ increases in $x$ on $[0,\frac12]$. Together with \eqref{mfl:read-l6-budget-monotonicity}, this shows that $DQ_2^2$ decreases. Likewise
$\frac d{dD}\ln(Dx^2/(1-x))=(x-1)/D<0$, so $DH_2$ decreases. Hence $Q_2/\sqrt{\underline K}$ and $H_2/\underline K$ are
largest at $D_{\rm lo}$. The only increasing term is $D\nu_q$ in $\Psi$, which is taken at $D_{\rm hi}$; $D\nu_v(D)$ is
bounded by its supremum over $D\ge D_{\rm lo}$.
\step{Step 3 (Loss).} $\hat\delta_M$ decreases in $D$ ($c_G\uparrow$, $\hat\Omega\downarrow$, $\hat\theta\uparrow$, $\sqrt D\uparrow$),
and so do $\hat\rho$ and $W_2$. Hence $\beta_G$ increases and $\lambda_L$ decreases, so $\beta'$ increases.
\step{Step 4 (Ratio).} $R_0$ is bounded below by Lemma~\ref{mfl:mff-lem:chi}(c). In $e_0$, Lemma~\ref{mfl:mff-lem:chi}(b) gives
\begin{gather*}
\Prob(\chi^2_{r+2}>M)\,\E[1/\chi_D]\frac d{\E[\chi_d]}\le\Prob(\chi^2_{r+2}>M)\sqrt{\frac d{D-\frac32}}\sqrt{\frac d{d-\frac12}},\\
\nu_v\frac d{\E[\chi_d]}\le\frac{e^{-D\iota(y)/2}\sqrt{1+r/D}\sqrt{d/(d-\frac12)}}{\sqrt\pi(1-y)},
\end{gather*}
and both right-hand sides decrease in $D$. Also $\nu$ decreases, $\E[v]\ge c_{D_{\rm lo}}$ in $R(\tau)$, and $\beta'$ increases.
\step{Step 5 (Mean eigenvalue).} $D-\frac12$ increases; $\E[v]\le c_{D_{\rm hi}}\le1$ in the denominators (this is the second slot that
uses $D_{\rm hi}$); $\sqrt{D-\frac12}/\sqrt{q+D}$ and $F_p$ increase in $D$, while $X_p$, $\nu$ and $e_G$ decrease. So
$\mathcal G_1$ increases in $D$.
\end{proof}

\begin{lemma}[The floor $\kappa_0$ and the certified margin]\label{mfl:mff-lem:floor}
Let a certificate on $[D_{\rm lo},D_{\rm hi}]$ satisfy the hypotheses of Lemma~\ref{mfl:mff-lem:MON}, set $\tau_F:=\tau_1$, let
$\kappa_0\ge\max(A/\theta^3,2/\tau_1,2)$ (the certificates use this maximum, rounded up), and put $[\tau_a,\tau_b]:=[\tau_F-1/\kappa_0,\tau_F]$. Evaluate $\beta'$ and
$\hat\delta_M$ at $\kappa=\kappa_0$, and suppose $\beta'>0$ and $0\le e_0<1$. Define
\begin{align}
R_{\rm low}&:=R_0(1-e_0)\frac{\exp\big(2\beta'\ln(1+\tau_a)-2\beta'^2\tau_b/\kappa_0\big)}{1+\tau_b\sqrt\nu/(\E[v])_{\rm lo}},
\label{mfl:read-l6-floor-ratio}\\
\Lambda_{\rm low}&:=\frac\pi2\Big(1-\frac1r\Big)\frac{\tau_a^2(D_{\rm lo}-\frac12)}{((\E[v])_{\rm hi}+\tau_a\sqrt\nu)^2}
+\Big(\frac{\mathcal G_1(\tau_b)_+}{(\E[v])_{\rm hi}+\tau_b\sqrt\nu}\Big)^2,
\label{mfl:read-l6-floor-mean}
\end{align}
with the slot values of Lemma~\ref{mfl:mff-lem:MON} ($(\E[v])_{\rm lo}=c_{D_{\rm lo}}$, $(\E[v])_{\rm hi}=c_{D_{\rm hi}}$ or $1$), and
${\rm margin}:=\min(R_{\rm low}^2,\Lambda_{\rm low})$. Then for every integer $D\in[D_{\rm lo},D_{\rm hi}]$ and every
$\kappa\ge\kappa_0$, at $t_F:=\lfloor\tau_F\kappa\rfloor\ge2$: $\lambda_c(t_F)/\lambda_\perp(t_F)\ge{\rm margin}$ and
$\lambda_1(t_F)/\lambda_\perp(t_F)\ge{\rm margin}$.

\end{lemma}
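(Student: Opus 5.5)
The proof reduces the statement to Proposition~\ref{mfl:mff-prop:cert}, applied at each fixed $D$ in the interval through Lemma~\ref{mfl:mff-lem:MON}, followed by monotonicity arguments in $\tau$ and $\kappa$ that replace the $\kappa$-dependent quantities by their values at $\kappa_0$ and at the window endpoints $\tau_a,\tau_b$.

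\textbf{Step 1: validity for every $D$ and every $\kappa\ge\kappa_0$.} Fix an integer $D\in[D_{\rm lo},D_{\rm hi}]$ and $\kappa\ge\kappa_0$. Lemma~\ref{mfl:mff-lem:MON} gives the super-solution inequalities at $D$, with conservative slot values. Since $\kappa\ge\kappa_0\ge A/\theta^3$, Proposition~\ref{mfl:mff-prop:cert} applies.

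We next show that $\beta'>0$ persists for all $\kappa\ge\kappa_0$. The only $\kappa$-dependence in $\beta'$ enters through $\hat\delta_M$ and the factor $(1+C'/(\kappa\theta))^2$ in $\lambda_L$. Both are nonincreasing in $\kappa$: $\hat\delta_M$ through the term $1/(2c_G\kappa)$, and the factor directly. Hence $W_2$ and $\lambda_L$ are nonincreasing in $\kappa$, so $\beta'(\kappa)\ge\beta'(\kappa_0)>0$. The quantity $e_0$ does not depend on $\kappa$.

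Since $\kappa\ge2/\tau_1$, we have $t_F=\lfloor\tau_F\kappa\rfloor\ge2$. Moreover $t_F\le\tau_1\kappa$, so conclusions (ii) and (iii) of Proposition~\ref{mfl:mff-prop:cert} hold at $t=t_F$ with $\tau=t_F/\kappa$. Because $\tau_F\kappa-1<t_F\le\tau_F\kappa$ and $\kappa\ge\kappa_0$, this $\tau$ lies in $[\tau_a,\tau_b]$.

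\textbf{Step 2: the contrast ratio.} Proposition~\ref{mfl:mff-prop:cert}(ii) gives $\lambda_c/\lambda_\perp\ge R(\tau)^2$, and we bound $R(\tau)$ below by $R_{\rm low}$ factor by factor.
\begin{itemize}
\item For the growth factor, we use $\ln(1+\tau)\ge\ln(1+\tau_a)$, and $2\beta'^2\tau/\kappa\le2\beta'^2\tau_b/\kappa_0$.
\item The last inequality needs care, since $\beta'$ grows with $\kappa$. The term is $\beta'^2\tau/\kappa$, and the upper bound $\beta'\le\beta_G$ is independent of $\kappa$. So either we bound $\beta'$ by its $\kappa_0$ value, or we note that $\exp(2\beta'\ln(1+\tau)-2\beta'^2\tau/\kappa)$ is increasing in $\beta'$ whenever $\beta'\le\kappa\ln(1+\tau)/(2\tau)$. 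The latter holds because $\beta'\le1$ and $\kappa\ge2$ give the margin. We monotonize by replacing $\beta'(\kappa)$ by the smaller $\beta'(\kappa_0)$, which is valid in this increasing regime.
\item For the denominator, we use $\tau\le\tau_b$ and $\E[v]\ge c_{D_{\rm lo}}$.
\item The lower bounds on $R_0$ and $1-e_0$ are supplied by Lemma~\ref{mfl:mff-lem:MON}.
\end{itemize}

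\textbf{Step 3: the mean ratio.} Proposition~\ref{mfl:mff-prop:cert}(iii) gives the lower bound on $\lambda_1/\lambda_\perp$, which has two terms.
\begin{itemize}
\item The first term is increasing in $\tau$ and in $D$, and decreasing in $\E[v]\le c_{D_{\rm hi}}$. We evaluate it at $\tau_a$ and $D_{\rm lo}$.
\item For the second term, the denominator is bounded using $\tau\le\tau_b$. The numerator $\mathcal G_1(\tau)$ must be shown to be at least $\mathcal G_1(\tau_b)$. The main term decreases in $\tau$ through $F_p$, since $\hat\Omega$ grows with $\tau$. The cross term increases in $\tau$: it carries an explicit factor $\tau$, $\hat\theta$ decreases in $\tau$, and $e_f(\hat\rho)$ increases in $\tau$. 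Hence $\mathcal G_1(\tau)\ge\mathcal G_1(\tau_b)$.
\item The $\kappa$-dependence of $\mathcal G_1$ enters only through $\nu$ and $e_G$, which are $\kappa$-free.
\end{itemize}
Combining the two steps gives both ratios $\ge{\rm margin}$.

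\textbf{Main obstacle.} The delicate point is the $\kappa$-uniformity of the exponential factor in $R$: $\beta'$ and the penalty $\beta'^2\tau/\kappa$ move in competing directions as $\kappa$ grows. I would settle it through the monotonicity in $\beta'$ described in Step 2, or, more simply, by bounding the penalty using $\beta'\le\beta_G$ and checking that the certificates' $R_{\rm low}$ still holds.
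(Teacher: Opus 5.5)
\textbf{There is a gap at the step you flagged as the main obstacle.} Your overall structure matches the paper's: apply Proposition~\ref{mfl:mff-prop:cert} at each $D$ via Lemma~\ref{mfl:mff-lem:MON}, note $\beta'(\kappa)\ge\beta'(\kappa_0)$, place $\tau=t_F/\kappa$ in $[\tau_a,\tau_b]$, and use that the mean part of (iii) increases in $\tau$ while $\mathcal G_1$ decreases.

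The problem is the exponential factor. You start from the closed form $R(\tau)$ of Proposition~\ref{mfl:mff-prop:cert}(ii). Its exponent $g(b)=2b\ln(1+\tau)-2b^2\tau/\kappa$ is concave in $b$, and you need $g(\beta'(\kappa))\ge g(\beta'(\kappa_0))$. That requires $\beta'(\kappa)\le\kappa\ln(1+\tau)/(2\tau)$. Your justification, ``$\beta'\le1$ and $\kappa\ge2$'', does not give this. With $\beta'\le1$ you need $\kappa\ge2\tau/\ln(1+\tau)$, and this threshold is strictly larger than $2$ for every $\tau>0$.

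You could rescue the step from $\kappa\ge2/\tau_1$, which yields the condition when $\ln(1+\tau_1)\ge\tau_1^2$. That holds for the actual certificates, where $\tau_1<0.09$, but it is an extra hypothesis not contained in the lemma. Your fallback, bounding the penalty through $\beta'\le\beta_G$, changes the quantity $R_{\rm low}$ defined in the statement. So it would prove a different bound, not this lemma.

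\textbf{The missing idea} is to avoid the exponentiated bound and return to the product in the proof of Proposition~\ref{mfl:mff-prop:cert}(ii). There,
$\Phi_G(t)\ge\Phi_G(0)\prod_{s<t}\bigl(1+2\beta'(\kappa)/(\kappa+s)\bigr)$.
Each factor is trivially increasing in $\beta'$, so the product is at least $\prod_{s<t}\bigl(1+2\beta'(\kappa_0)/(\kappa+s)\bigr)$. Now apply Lemma~\ref{mfl:mff-lem:D} with the fixed value $x=2\beta'(\kappa_0)$. This gives $\exp\bigl(2\beta'(\kappa_0)\ln(1+\tau)-2\beta'(\kappa_0)^2\tau/\kappa\bigr)$. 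The penalty coefficient no longer depends on $\kappa$, so $\tau\ge\tau_a$, $\tau\le\tau_b$ and $\kappa\ge\kappa_0$ give exactly the factor in $R_{\rm low}$, with no vertex condition. This is the paper's argument.

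The rest of your proposal is fine, with one small addition in Step~3. When you argue that $\mathrm{cross}(\tau)$ increases, also note two things:
\begin{itemize}
\item $F_p$ decreasing in $\tau$ increases $X_p$.
\item $X_p\ge0$, so the explicit factor $\tau$ can only increase the cross term.
\end{itemize}
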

\begin{proof}
$\kappa\ge\kappa_0\ge A/\theta^3$, so Proposition~\ref{mfl:mff-prop:cert} applies at every such $D$ (Lemma~\ref{mfl:mff-lem:MON}),
and $\tau_F\kappa\ge2$.

Every $\kappa$-dependent factor improves as $\kappa$ grows: $e^{-2\beta'^2\tau/\kappa}$,
$(1+C'/(\kappa\theta))^2$ in $\lambda_L$, and $1/(2c_G\kappa)$ in $\hat\delta_M$.
Thus $\beta'(\kappa)\ge\beta'(\kappa_0)$. At $t=t_F$, the floor is accounted for by
\[
\tau=\frac{t_F}{\kappa}\in(\tau_F-1/\kappa,\tau_F]\subset[\tau_a,\tau_b],\qquad t_F\le\tau_1\kappa.
\]
Applying Lemma~\ref{mfl:mff-lem:D} with $x=2\beta'(\kappa_0)$ in part (ii) of the proof of
Proposition~\ref{mfl:mff-prop:cert} gives $\Gamma_U/\Gamma_\perp\ge R_{\rm low}$, with $R_{\rm low}$ as in
\eqref{mfl:read-l6-floor-ratio}.

In part (iii), the mean part increases in $\tau$, while $\mathcal G_1(\tau)$ decreases in $\tau$:
$\hat\Omega$ and $\hat\rho$ increase and $\hat\theta$ decreases, so $F_p$ decreases, $X_p$ increases and
$\mathrm{cross}$ increases. Evaluate these two terms at $\tau_a$ and $\tau_b$, respectively, to obtain
$\Lambda_{\rm low}$ in \eqref{mfl:read-l6-floor-mean}. Conclude with \ref{mfl:mff-F2}.
\end{proof}

\subsection{Region A: the closed-form certificate}\label{mfl:mff-sec:lemmaA}

\paragraph{Parameter rule.} For $r\ge2$ and $D\ge63r$ put $\theta:=\frac12$, $\bar v:=\frac9{10}$,
\[L:=\ln\frac D{r-1}+5,\qquad M:=(r+4)+2\sqrt{(r+4)L}+2L,\qquad \bar\Omega^2:=2M,\qquad\varepsilon:=\frac{2M}{2M+D/4},\]
$\ell:=(1-\varepsilon)^{-1/2}-1$ and $\nu_0:=e^{-L}=e^{-5}(r-1)/D$. Let $\nu_v,m_1,m_2,m_h$ be the bounds of
Lemma~\ref{mfl:mff-lem:V}(d) at this $D$ and $y=\bar v^2=0.81$, and
\begin{gather*}
\bar C':=\frac{1+\ell}{(1-\varepsilon-\ell)\sqrt{(1-\nu_0)\big(\frac D{\bar\Omega^2+D}-m_1\big)}},\qquad
\bar A:=\frac{\bar\Omega^2}{(1+\sqrt{1-\varepsilon})\sqrt{\bar{\mathcal N}}},\\
\bar{\mathcal N}:=\frac{(1-\varepsilon)(r^2-1)(1-\nu_0)}4\Big(\frac{D^2}{(\bar\Omega^2+D)^2}-m_2\Big).
\end{gather*}
Finally $\tau_1:=(\bar v^3-\theta^3)/(3\bar A)=151/(750\bar A)=0.201333\ldots/\bar A$ (so that $v_{\min}=\bar v$),
$\tau_F:=\tau_1$ and $\kappa_A:=4/\tau_1$.

\paragraph{Conditions.}
\begin{itemize}
\item[(R1)] $D\ge1000$;
\item[(R2)] $D\ge32M$;
\item[(R3)] $0.06015\,\sqrt{r^2-1}\,\sqrt{\frac\pi2(1-\frac1r)(D-\frac12)}\big/M\ge1.0011$.
\end{itemize}

\begin{lemma}[Validity range of the rule]\label{mfl:mff-lem:R}
\begin{enumerate}[label=(\alph*)]
\item (R1)--(R3) are upward closed in $D$ (at fixed $r$).
\item For every $r\ge81$, (R1)--(R3) hold at $D=63r$; put $D_A(r):=63r$.
\item For $2\le r\le80$, (R1)--(R3) hold at $D=D_A(r)$ (Appendix~\ref{mfl:sec:app}).
\end{enumerate}
Hence (R1)--(R3) hold for all $D\ge D_A(r)$, for every $r\ge2$.

\end{lemma}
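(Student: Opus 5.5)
We prove Lemma~\ref{mfl:mff-lem:R} by treating (a) as a monotonicity statement, (b) as an asymptotic-plus-explicit check at $D=63r$, and (c) as a finite list of exact evaluations.

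For (a), we fix $r$ and study each condition as a function of $D$. (R1) is trivially upward closed. For (R2) we write $M=(\sqrt{r+4}+\sqrt{L})^2+L$ with $L=\ln(D/(r-1))+5$. Then $dM/dD=(1+\sqrt{(r+4)/L})/D$, so
\[
\frac{d}{dD}(D-32M)=1-32\bigl(1+\sqrt{(r+4)/L}\bigr)/D.
\]
This is positive once $D\ge 32(1+\sqrt{(r+4)/5})$, which is implied by $D\ge32M$ (since $M\ge r+4+2\sqrt{5(r+4)}+10$ dominates that quantity). Hence once (R2) holds at some $D$, the derivative is positive and the inequality persists for all larger $D$. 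For (R3) we show that $\ln\sqrt{D-\tfrac12}-\ln M$ is increasing. Its derivative is
\[
\frac{1}{2(D-\tfrac12)}-\frac{1+\sqrt{(r+4)/L}}{DM},
\]
which is positive because $M\ge 2(1+\sqrt{(r+4)/L})$, a consequence of $M\ge 2\sqrt{(r+4)L}+2L$ together with $L\ge5$. So (R3) is also upward closed.

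For (b), we substitute $D=63r$ with $r\ge81$. Then $L=\ln(63r/(r-1))+5\in[\ln63+5,\ \ln(63\cdot81/80)+5]\approx[9.14,9.16]$, so $M\le r+4+2\sqrt{9.16(r+4)}+18.32$. (R1) holds since $63\cdot81=5103\ge1000$. (R2) becomes $63r\ge32M$, i.e. roughly $31r\ge 32(22.3+6.06\sqrt{r+4})$, which holds for all $r\ge81$ and is checked at $r=81$ then by monotonicity in $r$ of the difference. For (R3), the left side behaves like $0.06015\,r\sqrt{\tfrac\pi2\cdot63r}/M\sim0.6\sqrt r$. We certify it at $r=81$ by explicit rational bounds and then show that the ratio $r^{3/2}/M$ increases in $r$, using $\sqrt{r^2-1}\,\sqrt{1-1/r}\ge r-1$ to simplify. \textbf{This is the step requiring the most care:} it needs careful but elementary bounds on $L$ uniform in $r$, and the margin at $r=81$ must be kept explicit.

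For (c), we point to the tabulated values $D_A(r)$ in Appendix~\ref{mfl:sec:app}. For each of the $79$ values of $r$, we evaluate $L$, $M$ and the left side of (R3) at $D=D_A(r)$ with outward-rounded interval arithmetic and confirm (R1)--(R3); this is a finite rational check. The final sentence then follows from (a)--(c): every $r\ge2$ has a point $D_A(r)$ where (R1)--(R3) hold, and upward closure extends them to all $D\ge D_A(r)$.
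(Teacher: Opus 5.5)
Your proposal is correct and follows the paper's proof in all three parts. In (a) your derivative conditions amount to the paper's $M>D\,dM/dD$ (so $D/M$ increases) and $M>2D\,dM/dD$ (so $D/M^2$ increases). In (b) you check $r=81$ explicitly and then use monotonicity in $r$, and in (c) you run the finite rational check at the $79$ tabulated endpoints, as the paper does.

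There is one computational slip. Because $M=(\sqrt{r+4}+\sqrt L)^2+L$, the trailing $+L$ also contributes to the derivative, so $D\,dM/dD=2+\sqrt{(r+4)/L}$, not $1+\sqrt{(r+4)/L}$. The inequalities you actually need therefore become $D>32\bigl(2+\sqrt{(r+4)/5}\bigr)$ and $M>4+2\sqrt{(r+4)/L}$. Both still follow at once from $D\ge32M$ and $L\ge5$, since $2L\ge10>4$ and $2\sqrt{(r+4)L}\ge2\sqrt{(r+4)/L}$, so the argument survives.

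For (R3) in (b), the paper avoids your monotonicity analysis of $r^{3/2}/M$ by reusing (R2), which has just been proved. The bound $M\le63r/32$ gives the lower bound $0.06015\cdot\tfrac{32}{63}\tfrac{\sqrt{r^2-1}}{r}\sqrt{\tfrac\pi2(1-\tfrac1r)(63r-\tfrac12)}$. This is visibly increasing in $r$ and is about $2.72$ at $r=81$.

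In (R2), keep the bound on $L$ explicit. The margin at $r=81$ is only about $11$ out of $2511$, and your ``$22.3$'' is slightly below $4+2L\approx22.31$. Use $L\le9.16$ (or the paper's $L\le9.1556$) explicitly instead of the rounded value.
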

\begin{proof}
\step{Part (a): Upward closure.} $L'(D)=1/D$, so $D\,dM/dD=\sqrt{(r+4)/L}+2$. Since $L\ge5$, $\sqrt{(r+4)/L}<r+4$ and $2<2L$, hence $M>D\,dM/dD$ and $D/M$
increases. Also $2(\sqrt{(r+4)/L}+2)<M$ (as $2\sqrt{(r+4)/5}<r+4$ and $4<2L$), so $D/M^2$ increases. (R1) is trivially upward
closed, (R2) is $D/M\ge32$, and the left side of (R3) is a constant times $\sqrt{(D-\frac12)/D}\sqrt{D/M^2}$.

\step{Part (b): Ranks $r\ge81$.} At $D=63r$ with $r\ge81$, $L=\ln(63r/(r-1))+5\le\ln(63\cdot81/80)+5\le9.1556$. (R2) reads
$31r\ge128+64L+64\sqrt{(r+4)L}$; since $32(4+2L)\le714$ and $64\sqrt L\le193.7$, it suffices that
$31r\ge714+193.7\sqrt{r+4}$. This holds at $r=81$, since $\sqrt{85}<9.22$ gives $714+193.7\sqrt{85}<2500<2511$, and the difference increases in $r$, since
$193.7/(2\sqrt{85})\le31$. (R1): $63r\ge5103$. (R3): by (R2), $M\le63r/32$, so its left side is at least
$0.06015\cdot\frac{32}{63}\frac{\sqrt{r^2-1}}r\sqrt{\frac\pi2(1-\frac1r)(63r-\frac12)}$, which is $\ge2.7$ at $r=81$ and
increasing in $r$.

\step{Part (c): Ranks $2\le r\le80$.} This follows from the finite rational endpoint certificate in Section~\ref{mfl:mff-sec:R-endpoints}. The numerical inequalities in (b) also follow from the elementary bounds specified there.
\end{proof}

\begin{lemma}[Lemma A: closed-form region A]\label{mfl:mff-lem:A}
Assume $d\ge64r$ and (R1)--(R3). Then for every $\kappa\ge\kappa_A$, at $t_F=\lfloor\tau_1\kappa\rfloor\ge4$:
\begin{align*}
\log\frac{\lambda_c}{\lambda_\perp}&\ge0.077216\,\frac{\sqrt{r^2-1}}M-\frac{1.1(r+1)}D\ \ge\ 0.0176\,\frac{\sqrt{r^2-1}}M>0,\\
\frac{\lambda_1}{\lambda_\perp}&\ge\Bigg(\frac{0.0601986\,\sqrt{r^2-1}\,\sqrt{\frac\pi2(1-\frac1r)(D-\frac12)}}{1.00106\,M}\Bigg)^2>1,
\end{align*}
and $\tau_1\in[0.0802648,\,0.100667]\cdot\sqrt{r^2-1}/M$.

\end{lemma}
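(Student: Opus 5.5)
The plan is to instantiate Proposition~\ref{mfl:mff-prop:cert} with the global (single-cell) version of Remark~\ref{mfl:mff-rem:versions}, using the parameter rule of Section~\ref{mfl:mff-sec:lemmaA}: $\theta=\frac12$, $v_{\min}=\bar v=0.9$, $\Omega^2=\bar\Omega^2=2M$, $A=\bar A$, $C'=\bar C'$, and $\tau_1=151/(750\bar A)$. Then I will read off the two eigenvalue ratios from parts (ii) and (iii), with the floor handled as in Lemma~\ref{mfl:mff-lem:floor} using $\kappa_A=4/\tau_1$, so that $t_F\ge4$.

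\textbf{Certificate side conditions.} First, $\rho_G^2=2M/(2M+D/4)=\varepsilon\le 2/(2+8)<0.38$ by (R2), and $e_G=e_f(\rho_G)<\frac12$. Next, $\nu=\nu_q+\nu_v$, where $\nu_q=\Prob(\chi^2_r>M)\le e^{-L}=\nu_0$ by the Laurent--Massart bound. This is exactly why $M=(r+4)+2\sqrt{(r+4)L}+2L$ is chosen; I would use $r+4$ degrees of freedom to also cover $\Prob(\chi^2_{r+2}>M)$ and $\Prob(\chi^2_{r+4}>M)$. The term $\nu_v\le\mathcal U(D,0.81)$ is exponentially small by (R1).

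\textbf{Super-solution inequalities.} For the global slots, replace $\hat\rho^2$ by $\varepsilon$ and $\hat\Omega^2$ by $\bar\Omega^2$, and use $\E[\abs{z_c}^4;q\le M]\ge(r^2-1)(1-\nu_0)$-type bounds from the Beta moments in Lemma~\ref{mfl:mff-lem:QC}, with Lemma~\ref{mfl:mff-lem:V}(a),(d) supplying $m_1,m_2$. This gives $\mathcal N\ge\bar{\mathcal N}$ and $\mathcal K_a\ge(r-1)(1-\varepsilon-\ell)^2(1-\nu_0)(D/(\bar\Omega^2+D)-m_1)$. Then (S-$A$) and (S-$C$) hold by the definitions of $\bar A$ and $\bar C'$. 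For (S-$\Omega$), I need $(\sqrt M+\tau_1/c_G)^2e^{2\bar C'I(\tau_1)}\le2M$. Since $\tau_1\lesssim 0.1\sqrt{r^2-1}/M$ and $\bar C'I(\tau_1)$ is small (as $\bar C'\approx1$ and $I(\tau_1)\le\tau_1/\theta$), this reduces to an elementary numerical check, which I expect to close with room via (R2).

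\textbf{Growth exponent and ratio bound.} The global $Q_2,H_2,W_2$ are $O(\varepsilon^{3/2})$, $O(\varepsilon^2)$, and $O(\sqrt\varepsilon/\sqrt D)$. Meanwhile $\underline K\approx(r-1)/D$, so $Q_2/\sqrt{\underline K}=O(M^{3/2}/\sqrt{rD})$ is small by (R2). This gives $\beta'\ge$ a constant close to $1$ (say $\ge0.96$), and $e_0=O(\nu_0+\nu_v)$. Part (ii) then yields
\[
\log(\lambda_c/\lambda_\perp)\ge2\log R_0+4\beta'\log(1+\tau_a)-O(\tau_1^2)-2\sqrt\nu\,\tau_1-2e_0 .
\]
Lemma~\ref{mfl:mff-lem:chi}(c) gives $2\log R_0\ge-(r+1)/D\cdot1.1$, and $4\beta'\log(1+\tau)\ge0.077216\sqrt{r^2-1}/M$ after inserting $\tau_1\ge0.0802648\sqrt{r^2-1}/M$. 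The bracket for $\tau_1$ comes from bounding $\bar A$ above and below: $\sqrt{\bar{\mathcal N}}\approx\frac12\sqrt{r^2-1}$ and $\bar\Omega^2=2M$, so $\bar A\approx 4M/((1+\sqrt{1-\varepsilon})\sqrt{r^2-1})$ up to factors controlled by $\varepsilon\le1/5$ and $m_2$. The second inequality, $0.077216\sqrt{r^2-1}/M-1.1(r+1)/D\ge0.0176\sqrt{r^2-1}/M$, follows from $D\ge32M$ and $(r+1)/\sqrt{r^2-1}\le\sqrt3$.

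\textbf{Mean eigenvalue and main obstacle.} For $\lambda_1$, I keep only the first term of part (iii), dropping $\mathcal G_1$. Using $\tau_a=\tau_1-1/\kappa\ge\frac34\tau_1$, the term $\E[v]+\tau\sqrt\nu\le1.00106$, and the lower bound on $\tau_1$ (yielding the $0.0601986$ constant), (R3) makes the bound exceed $1$. The main obstacle is the bookkeeping for the constants: making $\beta'$, $e_0$, and the $\bar A$ bracket tight enough to produce exactly $0.077216$ and $0.0601986$ uniformly over all $r\ge2$ and $D\ge\max(1000,32M)$. I would handle this by reducing everything to monotone functions of $\varepsilon\le1/5$ and $1/D\le10^{-3}$ and evaluating at the worst endpoints.
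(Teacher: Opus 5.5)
Your architecture matches the paper's. It is a single-cell instance of Proposition~\ref{mfl:mff-prop:cert} with the Region~A rule, Laurent--Massart at $r+4$ degrees of freedom, (S-$A$) and (S-$C$) with equality, $\kappa_A=4/\tau_1$ with $\tau_a\ge\frac34\tau_1$, and only the first term of part~(iii) for $\lambda_1$. The gap is in the growth-exponent step: it fails as you set it up.

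\textbf{Why $\beta'$ fails.} Suppose you fill the slots $Q_2,H_2$ of (C6) with $\hat\rho^2$ replaced by $\varepsilon$. Then $Q_2\approx(3\varepsilon-\varepsilon^2)\sqrt\varepsilon$ while $\underline K\approx(r-1)/(2M+D)$. Since $\varepsilon\approx8M/D$, the ratio $Q_2/\sqrt{\underline K}$ scales like $M^{3/2}/(D\sqrt{r-1})$, not $M^{3/2}/\sqrt{rD}$. Moreover (R2) only gives $D\ge32M$, and there this ratio is of order $\sqrt{M/(r-1)}\ge1$, not small.
\begin{itemize}
\item Region~A does reach $D/M\approx32$. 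At $r=25$, $D=D_A(25)=2622$, one has $M\approx81.9$ and $\varepsilon\approx0.2$, giving $Q_2\approx0.25$ but $2\sqrt{\underline K}\approx0.19$. Hence $(1-Q_2/(2\sqrt{\underline K}))_+=0$, so $\beta_G=0$, and the hypothesis $\beta'>0$ of the proposition fails.
\item $H_2/\underline K\approx\varepsilon^2D/((1-\varepsilon)(r-1))$ is of order $M/(r-1)$, so it is not small either.
\item $W_2$ is not $O(\sqrt{\varepsilon/D})$. Because $\hat\delta_M\approx2\sqrt{2M/D}/c_G$ carries $\hat\Omega\sim\sqrt M$, $W_2$ can be as large as about $1.06\sqrt\varepsilon$. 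So $\lambda_L$ is a non-negligible fraction; the uniform bound is $\lambda_L\sqrt{r-1}\le0.447$.
\end{itemize}
So $\beta'\ge0.96$ is not available. What the paper proves uniformly is $\beta'\ge0.344664$.

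\textbf{The missing idea.} Open the proof of part~(i) of Proposition~\ref{mfl:mff-prop:cert}, and before its Cauchy--Schwarz step use pointwise bounds on $G$.
\begin{itemize}
\item $|\rho_p|^2\le\varepsilon$ gives $q_p\ge2-3\varepsilon+\varepsilon^2$, hence $\E[q_pk_p^2;G]\ge(2-3\varepsilon+\varepsilon^2)K_G$.
\item $(1+\ell_p)^2\le1/(1-\varepsilon)$, hence $(r-1)a\le K_G/(1-\varepsilon)+(\pi/2+1)^2\nu$.
\end{itemize}
These corrections are relative to $K_G$, so they survive even though $\varepsilon\gg K_G$. They give $\beta_G\ge0.623$ and then $\beta'\ge0.3447$. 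The statement's constant is calibrated to exactly this value: the bracket $4\beta'(\frac34-\frac{9\tau_1}{32})-\beta'^2\tau_1-2\sqrt\nu/\E[v]$ is at least $0.962025$, and $0.962025\cdot0.0802648\ge0.077216$.

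With no slack left, $e_0$ must also go into the $1/D$ term. The budget $1.1(r+1)/D$ splits as $-\log R_0^2\le1.0169(r+1)/D$ plus $-2\log(1-e_0)\le(0.01361(r-1)+0.135)/D$.

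\textbf{Secondary issue: (S-$\Omega$).} Your sketch does not close with uniform worst cases. When $\varepsilon=0.2$, $\bar C'$ reaches about $1.69$ (not $\approx1$), and $I(\tau_1)\le\tau_1/\theta\le0.2013$ then gives a factor near $1.47>\sqrt2$. Use the exact value $I(\tau_1)=0.28/\bar A\le0.14\sqrt{r^2-1}/M$, which gives $1.3215<\sqrt2$.
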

\begin{proof}
We apply Proposition~\ref{mfl:mff-prop:cert} with one cell and worst-case envelopes (the global version of
Remark~\ref{mfl:mff-rem:versions}), with $M$, $\theta=\frac12$, $\tau_1$ and $\mathbf x=(\bar\Omega,\bar A,\bar C')$, at the given $D$
(no $D$-interval is needed). Every terminating decimal used below as an upper or lower bound denotes the corresponding exact rational number. The following inequalities establish the required directions of these bounds.

\step{Step 1 (A-1: Tails and $\varepsilon$).} Laurent--Massart gives $\Prob(\chi^2_k-k\ge2\sqrt{kx}+2x)\le e^{-x}$; with $k=r+4$,
$x=L$ this is $\Prob(\chi^2_{r+4}>M)\le e^{-L}=\nu_0$, and by stochastic order the same bound holds for $\chi^2_r$ and
$\chi^2_{r+2}$. Since $D\ge63r\ge63(r-1)$, $\nu_0\le e^{-5}/63\le1.06952\cdot10^{-4}$. By (R2), $\varepsilon\le2M/(2M+8M)=0.2$;
hence $\ell\le0.118034$, $1-\varepsilon-\ell\ge0.681965$, and $e_f(\rho_G)\sqrt r=\sqrt\varepsilon(1+2\varepsilon/\pi)\le0.504155$,
where $\rho_G^2=\bar\Omega^2/(\bar\Omega^2+D\theta^2)=\varepsilon$. Also $L\ge\ln63+5\ge9.14313$, so $M\ge r+22.2862$.

\step{Step 2 (A-2: $v$-tails).} By Lemma~\ref{mfl:mff-lem:V}(d) at $D=1000$, all bounds decrease in $D$ and
$1/\iota(0.81)\le48.2602<1000$. Thus, for every $D\ge1000$,
\[
\nu_v\le2.973\cdot10^{-6},\qquad m_1\le3.710\cdot10^{-6},\qquad
m_2\le4.629\cdot10^{-6},\qquad D\nu_v\le2.973\cdot10^{-3}.
\]
In particular,
\begin{equation}\label{mfl:read-l6-A-mass}
\nu\le\nu_0+\nu_v\le1.0993\cdot10^{-4},\qquad \sqrt\nu\le0.0104845.
\end{equation}

\step{Step 3 (A-3: Constants).} $c_G\ge0.474366$ (the worst case is $r=2$, as $e_G\le0.504155/\sqrt r$). (R2) gives
$D/(\bar\Omega^2+D)\ge16/17$, hence
\[
\bar C'\le1.68998,\qquad
\frac{2M}{\sqrt{r^2-1}}\le\bar A\le\frac{2.50837\,M}{\sqrt{r^2-1}}.
\]
The lower bound on $\bar A$ uses that the square root in its denominator is at most $1$ and
$1+\sqrt{1-\varepsilon}\le2$. Therefore
\begin{equation}\label{mfl:read-l6-A-time}
\tau_1\in[0.0802648,0.100667]\frac{\sqrt{r^2-1}}M,
\qquad \frac{\sqrt{r^2-1}}M<1.
\end{equation}

\step{Step 4 (A-4: Super-solution).} With one cell and $\hat\Omega\le\bar\Omega$, $\hat\rho^2\le\varepsilon$, choose
\begin{equation}\label{mfl:read-l6-A-slots}
\begin{aligned}
\mathcal N&:=\frac{(1-\varepsilon)(r^2-1)(1-\nu_0)}4
\Big(\frac{D^2}{(\bar\Omega^2+D)^2}-m_2\Big),\\
\mathcal K_a&:=(1-\varepsilon-\ell)^2(r-1)(1-\nu_0)
\Big(\frac D{\bar\Omega^2+D}-m_1\Big).
\end{aligned}
\end{equation}
These slots are admissible because
\[
\E[\abs{z_c}^4;q\le M]=(r^2-1)\Prob(\chi^2_{r+4}\le M),\qquad
\E[\abs{z_c}^2;q\le M]=(r-1)\Prob(\chi^2_{r+2}\le M),
\]
and both probabilities are at least $1-\nu_0$. With \eqref{mfl:read-l6-A-slots}, (S-$A$) and (S-$C$) hold with equality
by the definitions of $\bar A$ and $\bar C'$.

It remains to check (S-$\Omega$). We have
$I(\tau_1)=(\bar v^2-\theta^2)/(2\bar A)=0.28/\bar A\le0.14\sqrt{r^2-1}/M\le0.14$, so
\[\frac{(\sqrt M+\tau_1/c_G)e^{\bar C'I(\tau_1)}}{\sqrt M}\le\Big(1+\frac{0.100667}{0.474366\sqrt{24.2862}}\Big)e^{1.68998\cdot0.14}
\le1.32149<\sqrt2=\frac{\bar\Omega}{\sqrt M}.\]
Also $\rho_G^2=\varepsilon\le0.2<0.38$, $1-\varepsilon-\ell>0.05$, $e_G<\frac12$, $v_{\min}=0.9<1$, and
$\kappa_A=4/\tau_1=(3000/151)\bar A\ge8\bar A=\bar A/\theta^3$.

\step{Step 5 (A-5: Gain and loss).} Take
\[
\underline K:=(r-1)(1-\nu_0)\Big(\frac1{\bar\Omega^2+D}-\frac{m_1}D\Big)
\ge\frac{0.9410(r-1)}D.
\]
In the single-cell setting
we bound the two budget slots of part (i) of the proof of Proposition~\ref{mfl:mff-prop:cert} pointwise instead of through
$Q_2$ and $H_2$ (the pointwise bounds are used inside that proof, before its Cauchy--Schwarz step): on $G$, $\abs{\rho_p}^2\le\varepsilon$ gives $2-q_p\le3\varepsilon-\varepsilon^2$ and
$(1+\ell_p)^2-1\le\varepsilon/(1-\varepsilon)$. Consequently
\[
\E[q_pk_p^2;G]\ge(2-3\varepsilon+\varepsilon^2)K_G,\qquad
(r-1)a\le\frac{K_G}{1-\varepsilon}+(\pi/2+1)^2\nu.
\]
The gain is therefore at least $2\beta_G\sqrt{r-1}\sqrt{K_G}$ with
\begin{equation}\label{mfl:read-l6-A-gain}
\beta_G:=\frac{(1-1.5\varepsilon+0.5\varepsilon^2)\sqrt{1-\varepsilon}}{\sqrt{1+(1-\varepsilon)\Psi_0}}
\ \ge\ \frac{(1-1.5\varepsilon+0.5\varepsilon^2)\sqrt{1-\varepsilon}}{\sqrt{1+\Psi_0}},
\end{equation}
where $\Psi_0:=(\pi/2+1)^2\nu/\underline K$.
Now $(\pi/2+1)^2\le6.61$, and $D\nu\le D\nu_0+D\nu_v\le e^{-5}(r-1)+2.973\cdot10^{-3}$, so
$\Psi_0\le6.61(e^{-5}+D\nu_v)/((1-\nu_0)(16/17-m_1))\le0.068207$. Since
$(1-1.5\varepsilon+0.5\varepsilon^2)\sqrt{1-\varepsilon}$ decreases in $\varepsilon$,
\eqref{mfl:read-l6-A-gain} gives $\beta_G\ge0.643987/1.033541\ge0.623088$.

For the loss, $\kappa\ge\kappa_A$ gives $\kappa\theta\ge2/\tau_1\ge19.8675$, so
\[
\Big(1+\frac{\bar C'}{\kappa\theta}\Big)^2\le1.17736,\qquad
\hat\delta_M\le\frac{2\sqrt{\bar\Omega^2/D}+1/(c_G\kappa\sqrt D)}{c_G}
\le\frac{0.5+0.00168}{c_G}\le1.05758,
\]
because
$2\sqrt{\bar\Omega^2/D}=2\sqrt{2M/D}\le\frac12$ by (R2), $\kappa\ge4/0.100667$ and $D\ge1000$. With
$W_2:=\sqrt\varepsilon\cdot1.05758$, this yields
\begin{equation}\label{mfl:read-l6-A-net-growth}
\lambda_L\sqrt{r-1}\le0.446844,\qquad
\beta'=\beta_G(1-\lambda_L)\ge0.344664.
\end{equation}
The worst case is $r=2$.

\step{Step 6 (A-6: Ratio).} First control the initial ratio. Lemma~\ref{mfl:mff-lem:chi} gives
\[
R_0^2\ge\Big(1-\frac1{2d}\Big)\frac{D-\frac12}d
=\frac{(2d-1)(2D-1)}{4d^2}\ge\frac{D-1}{D+r}=1-x,
\]
where $x:=(r+1)/(D+r)$; the middle inequality is equivalent to $2r+1\ge0$.
By (R1) and $D\ge63r$,
\[
x\le\max(16/1015,17/1024)\le0.016602
\]
(the maximum is at $r=16$). Therefore
\begin{equation}\label{mfl:read-l6-A-initial-ratio}
-\log R_0^2\le\frac{x}{1-x}\le1.0169x\le\frac{1.0169(r+1)}D.
\end{equation}

Next control the initial good-set loss $e_0$. By Lemma~\ref{mfl:mff-lem:chi}(b), its first part is at most
\[
\nu_0\sqrt{\frac d{D-\frac32}}\sqrt{\frac d{d-\frac12}}\le1.0090\,\nu_0.
\]
Indeed,
\[
\frac d{D-\frac32}=1+\frac{r+\frac32}{D-\frac32}
\le1+\frac{D/63+\frac32}{D-\frac32}.
\]
The last expression decreases in $D$, so it is at most
$1+(1000/63+\frac32)/998.5\le1.017400$. Also $d/(d-\frac12)\le1000/999.5\le1.000501$.
Thus the product of the square roots is at most $(1.017400\cdot1.000501)^{1/2}\le1.00892$.

The second part satisfies
\[
e_0^v:=(\E[\chi_r]/r)\nu_vd/\E[\chi_d]
\le\frac{e^{-D\iota(0.81)/2}\sqrt{1+1/63}\sqrt{1000/999.5}}{\sqrt2\sqrt\pi\cdot0.19}.
\]
Here $De_0^v\le0.06702$ at $D=1000$ and decreases in $D$. Combining the two parts gives
\[
e_0\le1.0090\cdot1.06952\cdot10^{-4}+0.06702/1000\le4\cdot10^{-4},
\]
and hence
\begin{equation}\label{mfl:read-l6-A-initial-loss}
-2\log(1-e_0)\le2.001\,e_0\le\frac{0.01361(r-1)}D+\frac{0.135}D.
\end{equation}
Since $0.135+0.01361(r-1)\le0.0831(r+1)$ for $r\ge2$ and $1.0169+0.0831\le1.1$,
the three $D$-terms in \eqref{mfl:read-l6-A-initial-ratio} and \eqref{mfl:read-l6-A-initial-loss}
sum to at most $1.1(r+1)/D$.

We now include the discrete growth and the floor. With $\kappa\ge\kappa_A=4/\tau_1$,
\[
\tau_a:=\tau_1-1/\kappa\ge\frac34\tau_1,\qquad
\frac{4\beta'^2\tau_1}{\kappa}\le\beta'^2\tau_1^2.
\]
Apply Lemma~\ref{mfl:mff-lem:floor} with
$\kappa_0=\kappa_A\ge\max(\bar A/\theta^3,2/\tau_1,2)$ and use
\[
\ln(1+\tau_a)\ge\tau_a-\tau_a^2/2\ge\frac34\tau_1-\frac9{32}\tau_1^2.
\]
Together with the preceding initial-loss bounds, this yields
\begin{equation}\label{mfl:read-l6-A-log-growth}
2\log R\ge\tau_1\Big[4\beta'\Big(\frac34-\frac{9\tau_1}{32}\Big)
-\beta'^2\tau_1-\frac{2\sqrt\nu}{\E[v]}\Big]-\frac{1.1(r+1)}D.
\end{equation}
The bracket increases in $\beta'$ and decreases in $\tau_1$. At $\beta'=0.344664$, $\tau_1=0.100667$,
$\sqrt\nu\le0.0104845$ and $\E[v]\ge\sqrt{1-1/2000}\ge0.9997499$, it is $\ge0.962025$.
These values are supplied by \eqref{mfl:read-l6-A-net-growth}, \eqref{mfl:read-l6-A-time} and \eqref{mfl:read-l6-A-mass}.
Using the lower time bound in \eqref{mfl:read-l6-A-time} and
$0.962025\cdot0.0802648\ge0.077216$, \eqref{mfl:read-l6-A-log-growth} becomes
\begin{equation}\label{mfl:read-l6-A-contrast-ratio}
\log(\lambda_c/\lambda_\perp)\ge2\log R
\ge0.077216\frac{\sqrt{r^2-1}}M-\frac{1.1(r+1)}D.
\end{equation}
Finally, (R2) gives
\[
\frac{1.1(r+1)}D\le\frac{1.1}{32}\sqrt{\frac{r+1}{r-1}}\,\frac{\sqrt{r^2-1}}M,
\qquad \sqrt{\frac{r+1}{r-1}}\le\sqrt3.
\]
Thus the lower bound in \eqref{mfl:read-l6-A-contrast-ratio} is at least
\[
\Big(0.077216-\frac{1.1}{32}\sqrt3\Big)\frac{\sqrt{r^2-1}}M
\ge0.0176\frac{\sqrt{r^2-1}}M.
\]
In particular, this bound is positive if $D>14.25M\sqrt{(r+1)/(r-1)}$, which (R2) implies since $14.25\sqrt3<32$.

\step{Step 7 (A-7: Mean eigenvalue).} The mean part of Proposition~\ref{mfl:mff-prop:cert}(iii) suffices on its own.
The time and mass bounds \eqref{mfl:read-l6-A-time} and \eqref{mfl:read-l6-A-mass} give
\[
\tau_a\ge\frac34\tau_1\ge0.0601986\,\frac{\sqrt{r^2-1}}M,\qquad
\E[v]+\tau_a\sqrt\nu\le1+0.100667\cdot0.0104845\le1.00106,
\]
where $\frac34\cdot0.0802648=0.0601986$. These give the displayed eigenvalue bound.
Condition (R3), with $0.0601986\ge0.06015$ and $1.0011>1.00106$, makes it $>1$. Finally
$t_F\ge\lfloor4\rfloor=4$.
\end{proof}

\subsection{Region N: an explicit finite certificate}\label{mfl:mff-sec:regionN}

This part is a computer-assisted proof of finitely many inequalities. All of its real inputs, its finite sums and its target inequalities are specified here; the existence or output of a particular program is not a hypothesis.
Region N consists of $2\le r\le80$ and $\max(400,63r)\le D\le D_A(r)$, where $D_A(r)$ is the final endpoint for rank $r$ in Table~\ref{mfl:app-tab:regionN}. The two intervals for $r=2$ are indexed by $2a,2b$; the other rows are indexed by $r$.
Tables~\ref{mfl:app-tab:inputs-partition-1}--\ref{mfl:app-tab:inputs-supersolution-41} specify $(M,\theta,\tau_1,\Omega,A,C')$ as dyadic rationals. Set $\tau_F=\tau_1$ and take the integer $\kappa_0$ and the rational six-decimal margin $\mu$ from Table~\ref{mfl:app-tab:regionN}.

\paragraph{A deterministic partition.}
For a row with integer $K=2^{20}M$ and refinement index $j$, set $n=32000\,2^j$ and
\begin{equation}\label{mfl:read-l6-lattice}
 k_i=\left\lfloor\frac{iK}{n}\right\rfloor,\qquad q_i=2^{-20}k_i\quad(0\le i\le n).
\end{equation}
Thus $q_0=0$, $q_n=M$, and every boundary is specified by integer arithmetic. The exact weights are
\[
 w_i=F_r(q_i)-F_r(q_{i-1}),\qquad F_r(q)=\Pr(\chi_r^2\le q).
\]
Use the cells $[q_{i-1},q_i]$ from \eqref{mfl:read-l6-lattice} in Lemma~\ref{mfl:mff-lem:QC}. There are $64000$ cells only in row $2b$ and $32000$ in every other row. These uniform lattice cells replace the approximate equal-probability cells used to discover the original candidates; no quantile calculation is needed in this certificate.

\paragraph{Exact scalar choices.}
For each row let $D_-=D_{\rm lo}$, $D_+=D_{\rm hi}$, $d_-=D_-+r$, and $y=v_{\min}^2$ with $v_{\min}$ from (C2). Choose
\[
 \nu_q=1-F_r(M),\qquad \nu_v=\mathcal U(D_-,y),\qquad \nu=\nu_q+\nu_v.
\]
Use the explicit bounds $m_1,m_2,m_h$ of Lemma~\ref{mfl:mff-lem:V}(d), and use its displayed supremum for $\sup_{D\ge D_-}D\mathcal U(D,y)$. Evaluate all cell slots in Lemma~\ref{mfl:mff-lem:QC} at $D_-$, with the $D\nu$ replacement in Lemma~\ref{mfl:mff-lem:MON}. Use
\[
 (\E[v])_{\rm lo}=\sqrt{1-\frac1{2D_-}},\quad (\E[v])_{\rm hi}=1,\quad
 R_{0,\rm lo}=\sqrt{1-\frac1{2d_-}}\sqrt{\frac{D_--1/2}{d_-}}.
\]
In every lower bound involving $1-e_0$, replace $e_0$ by the following explicit upper bound:
\begin{equation}\label{mfl:read-l6-E0}
\begin{aligned}
 E_0={}&(1-F_{r+2}(M))\sqrt{\frac{d_-}{D_--3/2}}\sqrt{\frac{d_-}{d_--1/2}}\\
 &+\frac{e^{-D_-\iota(y)/2}}{\sqrt{\pi r}(1-y)}
 \sqrt{1+\frac r{D_-}}\sqrt{\frac{d_-}{d_--1/2}}.
\end{aligned}
\end{equation}
The bound \eqref{mfl:read-l6-E0} is proved in Lemma~\ref{mfl:mff-lem:MON}, using $\E[\chi_r]/r\le r^{-1/2}$. The row's integer $\kappa_0$ is used throughout the loss slot and the floor calculation, so $\tau_a=\tau_F-1/\kappa_0$ and $\tau_b=\tau_F$. All remaining quantities are now fixed by (C3)--(C8) and Lemmas~\ref{mfl:mff-lem:QC} and~\ref{mfl:mff-lem:floor}.

\paragraph{Explicit probability enclosures.}\label{mfl:mff-sec:IA}
There is no numerical quadrature over an unspecified probability law. For $a=r/2$, $x=q/2$, set
\[
 t_0=1,\qquad t_{m+1}=\frac{x}{a+m+1}t_m,\qquad
 S_N=\sum_{m=0}^N t_m,\qquad z_N=\frac{x}{a+N+1}.
\]
With any integer $N\ge\lceil2x\rceil+40$ (the computation uses the common row value $N=\lceil M\rceil+40$) one has $z_N<1/2$, and the positive series gives
\begin{equation}\label{mfl:read-l6-cdf-enclosure}
 \frac{e^{-x}x^a}{\Gamma(a+1)}S_N
 \ \le F_r(q)\le\
 \frac{e^{-x}x^a}{\Gamma(a+1)}
 \left(S_N+\frac{t_Nz_N}{1-z_N}\right).
\end{equation}
At $q=0$ use $F_r(0)=0$. For $r=2m$, $\Gamma(a+1)=m!$; for $r=2m+1$,
$\Gamma(a+1)=(2m+2)!\sqrt\pi/(4^{m+1}(m+1)!)$. Thus all cell masses are differences of the explicit positive-series enclosures \eqref{mfl:read-l6-cdf-enclosure}. Use any integer $N\ge\lceil2x\rceil+60$ at the single tail endpoint $q=M$ when evaluating $1-F_r(M)$ and $1-F_{r+2}(M)$.

The finite sums of Lemma~\ref{mfl:mff-lem:QC} are enclosed using these weights and outward interval arithmetic.
The computation uses binary64 intervals for cell arrays: after each correctly rounded elementary arithmetic operation or square root,
move each endpoint one representable number outward. Cubic-root endpoints are accepted only when outward cubing proves the requisite inequalities.
For a sum of $n$ nonnegative interval terms, inflate the rounded lower and upper sums by
\[
1-2n\,2^{-53}\qquad\text{and}\qquad1+2n\,2^{-53},
\]
respectively, and round outward; here $n\le64000$.

Scalar exponentials, logarithms, square roots and $\pi$ are enclosed at 120-bit precision: each interval contains the exact real value,
with its endpoints rounded outward, rather than merely giving a 120-bit point approximation.
The following rational formulas allow independent implementation of these enclosures.

For the exponential, take $z\ge0$ and $N+2>z$:
\[
 E_N(z):=\sum_{k=0}^N\frac{z^k}{k!}\le e^z\le E_N(z)+\frac{z^{N+1}}{(N+1)!}\frac1{1-z/(N+2)};
\]
Use reciprocals for negative arguments.

For the logarithm, take $x>0$, write $x=2^m u$ with $1\le u<2$ and set $s=(u-1)/(u+1)$. Then
\[
 0\le\log u-2\sum_{k=0}^{N-1}\frac{s^{2k+1}}{2k+1}
 \le\frac{2s^{2N+1}}{(2N+1)(1-s^2)},\qquad \log x=m\log2+\log u,
\]
where $\log2$ uses the same series at $s=1/3$.

For $\pi$, use that, for $0<z<1$, consecutive partial sums of $\sum_{k\ge0}(-1)^kz^{2k+1}/(2k+1)$ enclose $\arctan z$, and
$\pi=16\arctan(1/5)-4\arctan(1/239)$.

Positive square and cube roots are enclosed by rational bisection with the corresponding power comparisons. The factorial formulas above enclose every gamma value needed here; in particular
$\bar\vartheta_r=\Gamma(r/2)/(\sqrt\pi\,\Gamma((r+1)/2))$.

Increase truncation orders and bisection precision until the desired scalar interval width is reached. These are explicit alternatives for independently reproducing the scalar enclosures, not additional assumptions on a software library. When evaluating the growth exponential, fix a certified lower endpoint of $\beta'$ and apply Lemma~\ref{mfl:mff-lem:D} to that value, rather than independently optimizing its positive and negative occurrences.

\begin{proposition}[The finite region-N check]\label{mfl:mff-prop:regionN}
For every exact input row, the finite calculations just specified verify the certificate-data domains, (S-$\Omega$), (S-$A$), (S-$C$), and
\begin{equation}\label{mfl:read-l6-N-targets}
\begin{gathered}
 \kappa_0\ge\max(A/\theta^3,2/\tau_1,2),\qquad \beta'>0,\qquad 0\le E_0<1,\\
 R_{\rm low}^2\ge\mu>1,\qquad \Lambda_{\rm low}\ge\mu.
\end{gathered}
\end{equation}
The intervals in Table~\ref{mfl:app-tab:regionN} cover each required integer interval without gaps. Consequently the row proves both eigenvalue ratios are at least its displayed $\mu$ for every $D\in[D_{\rm lo},D_{\rm hi}]$ and every $\kappa\ge\kappa_0$.
\end{proposition}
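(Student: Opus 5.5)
The statement is Proposition~\ref{mfl:mff-prop:regionN}, a finite computer-assisted check. The proof reduces every claimed inequality to an explicit finite computation with rigorous enclosures, and then invokes Lemmas~\ref{mfl:mff-lem:QC}, \ref{mfl:mff-lem:MON} and~\ref{mfl:mff-lem:floor} to turn the verified numbers into the eigenvalue conclusion.

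For each row of Table~\ref{mfl:app-tab:regionN} we proceed in five steps.

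\emph{Data and partition.} Read the dyadic data $(M,\theta,\tau_1,\Omega,A,C')$, the integer $\kappa_0$ and the margin $\mu$. Build the lattice partition \eqref{mfl:read-l6-lattice} by integer arithmetic, and enclose each weight $w_i$ as a difference of the positive-series bounds \eqref{mfl:read-l6-cdf-enclosure}. We use lower endpoints for weights entering lower sums, upper endpoints for upper sums, and outward rounding with the stated $2n2^{-53}$ inflation on every summation.

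\emph{Scalar quantities.} Compute $v_{\min}$, $\nu_q$, $\nu_v=\mathcal U(D_-,y)$, $m_1,m_2,m_h$, the supremum bound for $D\nu_v$, and $E_0$ from \eqref{mfl:read-l6-E0}. All of these use the 120-bit scalar enclosures of $\exp$, $\log$, $\pi$ and the roots. Then check the domain conditions of \ref{mfl:mff-C2}--\ref{mfl:mff-C3}: $v_{\min}<1$, $D_-y/(D_--4)<1$, $\rho_G^2\le0.38$, $e_G<\tfrac12$ and $\nu<1$.

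\emph{Super-solution inequalities and slots.} Evaluate the envelopes \ref{mfl:mff-C5} cellwise at $q_b^{(i)}$. Form the slots of Lemma~\ref{mfl:mff-lem:QC} at $D=D_-$, applying the two $D_+$ replacements of Lemma~\ref{mfl:mff-lem:MON}. Verify (S-$\Omega$), (S-$A$) and (S-$C$) by comparing interval endpoints: the lower endpoint of each left side must exceed the upper endpoint of its right side.

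\emph{Floor targets.} Compute $\beta_G$, $\lambda_L$ and $\beta'$ at $\kappa=\kappa_0$, keeping a single certified lower endpoint of $\beta'$ as the paper prescribes. Then compute $R_{\rm low}$ and $\Lambda_{\rm low}$ from \eqref{mfl:read-l6-floor-ratio}--\eqref{mfl:read-l6-floor-mean}, with $\mathcal G_1(\tau_b)$ taken from the cell bounds on $\mathrm{main}$ and $\mathrm{cross}$. Check the remaining targets in \eqref{mfl:read-l6-N-targets}, including $\kappa_0\ge\max(A/\theta^3,2/\tau_1,2)$ by exact rational comparison.

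\emph{Coverage.} Check that the $D$-intervals listed for each $r$ are consecutive integers, start at $\max(400,63r)$ and end at $D_A(r)$. Once every row passes, Lemma~\ref{mfl:mff-lem:MON} extends the certificate to all $D$ in the row's interval, and Lemma~\ref{mfl:mff-lem:floor} yields both ratios $\ge\mathrm{margin}\ge\mu$ for all $\kappa\ge\kappa_0$.

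The main obstacle is soundness of directions. Every slot must be evaluated with the correct monotone corner: $q_a$ in polynomial factors and $q_b$ in the envelopes, as in Lemma~\ref{mfl:mff-lem:QC}. Every interval must be used on the side that makes the derived inequality conservative. This is delicate through nested nonmonotone expressions such as $\beta_G$ (which involves a positive part), $X_i$, and the cube root defining $\hat\theta$. I would handle it by writing each derived quantity explicitly as monotone in its enclosed inputs, citing the monotonicity facts already established in Lemmas~\ref{mfl:mff-lem:MON} and~\ref{mfl:mff-lem:floor}, so that plain interval evaluation gives a valid one-sided bound.

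A secondary risk is the tightest row, $2b$, where the margin is near $1.006$. The plan there is to keep the refined partition ($n=64000$) and raise the truncation orders until the interval widths fall well below $10^{-6}$.
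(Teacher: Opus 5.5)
Your plan matches the paper's proof. You enclose the lattice-cell masses and the slots of Lemma~\ref{mfl:mff-lem:QC} with outward interval arithmetic. You check the domains, (S-$\Omega$), (S-$A$), (S-$C$) and the targets row by row, using the conservative $D$ choices of Lemma~\ref{mfl:mff-lem:MON} and a single certified lower endpoint of $\beta'$. You verify that the $D$-intervals start at $\max(400,63r)$ and abut, and you conclude via Proposition~\ref{mfl:mff-prop:cert} and Lemmas~\ref{mfl:mff-lem:MON} and~\ref{mfl:mff-lem:floor}.

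One small slip: the tightest margin, $1.006223$, belongs to row $2a$ ($D\in[400,14930]$, $n=32000$ cells). Row $2b$ is the one refined to $n=64000$, and its margin is $1.048716$.
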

\begin{proof}[Computer-assisted verification and deduction]
\step{Step 1 (Enclose the cell slots).} Substitute the exact integer tables into the finite expressions above.
Enclose each cell mass with \eqref{mfl:read-l6-cdf-enclosure}, then form the lower slots
$\mathcal N,\mathcal K_a,\underline K$ and the upper slots $Q_2,H_2,W_2$ of Lemma~\ref{mfl:mff-lem:QC}.

\step{Step 2 (Verify each certificate).} Evaluate the three super-solution right sides and compare their upper endpoints
with the exact $(\Omega,A,C')$. Form the remaining scalar and mean-eigenvalue bounds with the conservative $D$ choices above.
The resulting outward enclosures certify all inequalities in \eqref{mfl:read-l6-N-targets}; the aggregate bounds are recorded
in the numerical summary below, and every row's lower margin is recorded in Table~\ref{mfl:app-tab:regionN}.
This is the finite computer-assisted step, not an analytic replacement for it.

\step{Step 3 (Cover the full parameter range).} For coverage, the first left endpoint for each $r$ equals $\max(400,63r)$, consecutive endpoints satisfy $D_{{\rm lo},i+1}=D_{{\rm hi},i}+1$, and the last right endpoint defines $D_A(r)$. Finally apply Proposition~\ref{mfl:mff-prop:cert} and Lemmas~\ref{mfl:mff-lem:MON} and~\ref{mfl:mff-lem:floor}. These analytic results turn the finite checks into the claimed continuum of $\kappa$ and integer intervals of $D$.
\end{proof}

\paragraph{Enclosure summary.}
The following bounds hold simultaneously over all $80$ rows. Each decimal bound is rounded outward. The three super-solution ratios mean the enclosed right side divided by its exact left side.
\[
\begin{array}{c|c@{\qquad}c|c}
\text{quantity}&\text{upper bound}&\text{quantity}&\text{lower bound}\\\hline
\max\mathrm{rhs}_{\Omega}/\Omega&0.999999903&\min\beta'&0.415294\\
\max\mathrm{rhs}_{A}/A&0.999997516&\min(1-\rho_G^2-\ell_G)&0.353108\\
\max\mathrm{rhs}_{C'}/C'&0.999990967&\min_i X_i&0.032598\\
\max v_{\min}&0.968938&\min\{R_{\rm low}^2,\Lambda_{\rm low}\}&1.006240\\
\max D_-v_{\min}^2/(D_--4)&0.939091&&\\
\max\rho_G^2&0.378463&&\\
\max e_G&0.472221&&\\
\max E_0&0.002185&&
\end{array}
\]
In addition $\nu\le0.000327<1/2$, all three lower slots are strictly positive, every published row margin is no larger than its individually checked bound, and every printed integer floor exceeds its required real floor. The stronger common lower bound in this display is not needed: the conservative row-specific margins in Table~\ref{mfl:app-tab:regionN} are retained.

\subsection{\texorpdfstring{Proof of Theorem~\ref{mfl:mff-thm:MFF}}{Proof of the alpha = 0 theorem}}\label{mfl:mff-sec:proof}

\begin{proof}[Proof of Theorem~\ref{mfl:mff-thm:MFF}]
Let $r\ge2$ and $D\ge\max(400,63r)$ (i.e.\ $d\ge64r$ and $D\ge400$).
\begin{itemize}
\item If $D\ge D_A(r)$ (which includes every such $D$ when $r\ge81$), then (R1)--(R3) hold by Lemma~\ref{mfl:mff-lem:R}, and
Lemma~\ref{mfl:mff-lem:A} applies with $\tau_F=\tau_1$ and $\kappa_{\min}=\kappa_A=4/\tau_1$. It gives $\log(\lambda_c/\lambda_\perp)
\ge0.0176\sqrt{r^2-1}/M>0$ and $\lambda_1/\lambda_\perp>1$ at $t_F$; set $1+\delta:=\min\{\exp(0.0176\sqrt{r^2-1}/M),B_1\}$, where $B_1>1$ is the displayed lower bound for $\lambda_1/\lambda_\perp$ in Lemma~\ref{mfl:mff-lem:A}.
\item Otherwise $2\le r\le80$ and $\max(400,63r)\le D\le D_A(r)$. By Proposition~\ref{mfl:mff-prop:regionN}, $D$ lies in the
$D$-interval of one of the $80$ rows. Proposition~\ref{mfl:mff-prop:cert}, Lemma~\ref{mfl:mff-lem:QC}, Lemma~\ref{mfl:mff-lem:MON} and
Lemma~\ref{mfl:mff-lem:floor} then give, for every $\kappa\ge\kappa_0$ of that row, $\lambda_c/\lambda_\perp\ge{\rm margin}$ and
$\lambda_1/\lambda_\perp\ge{\rm margin}$ at $t_F=\lfloor\tau_F\kappa\rfloor$, with ${\rm margin}>1$ the IA value of the row. Set
$\tau_F$, $\kappa_{\min}:=\kappa_0$ and $1+\delta:={\rm margin}$ from that row.
\end{itemize}
In both cases $\min(\lambda_1,\lambda_c)\ge(1+\delta)\lambda_\perp>\lambda_\perp$ at $t_F$, so by \ref{mfl:mff-F2}
(Proposition~\ref{mfl:mf-prop:agop}) the top-$r$ eigenspace of the MF AGOP is $U$, it is unique, $\Ar(t_F)=1$ for every leading
selector, and $\lambda_r/\lambda_{r+1}=\min(\lambda_1,\lambda_c)/\lambda_\perp\ge1+\delta$. The numerical ranges in (a) are
read off Table~\ref{mfl:app-tab:regionN}; those in (b) are Lemma~\ref{mfl:mff-lem:A}.
\end{proof}

\section{\texorpdfstring{Proof of Theorem~\ref{mfl:intro-thm:main}}{Proof of the main theorem}}\label{mfl:sec:main}
\begin{secsummary}
\emph{What this section proves.} Theorem~\ref{mfl:intro-thm:main}. Steps~1 and~5 use only $\alpha\le\astar(t_P)$. Steps~2--3
prove (F): first for the $\alpha=0$ dynamics, by the certificate, and then for $0<\alpha\etab\le a_0$, by continuity in the head
scale. Step~4 is the isotropy at $t=0$.
\end{secsummary}

\paragraph{Notation for the assembly.} We use $D=d-r$, $P_\perp=I_d-P_U$,
$\Pone=\eb\eb^\top$ and $\Pc=P_U-\Pone$ from Sections~\ref{mfl:sec:setting} and~\ref{mfl:sec:sym}.
The law-update map $\Phi_\alpha$, force covariance $\Sigma_\alpha$ and AGOP surrogate $\mathcal M$
are those of Section~\ref{mfl:pc-sec:def}; the actual AGOP is $\alpha^2\mathcal M$ in the original coordinates.
The times $t_P,t_F$, head threshold $\astar(t_P)$ and gap parameter $\delta$ are those of Theorem~\ref{mfl:intro-thm:main}.

\begin{proof}[Proof of Theorem~\ref{mfl:intro-thm:main}]
\step{Step 0 (The trajectory).} With $\sigma_0=\kappa\etab\sqrt d$, the law $\nu_0=N(0,\sigma_0^2I_d/d)$ is the initialization of
every result below, and $\nu_{t+1}=\Phi_\alpha(\nu_t)$ with $\Phi_\alpha$ as in Section~\ref{mfl:pc-sec:def}; this is \eqref{mfl:eq:mfmap}, with the
Moore--Penrose convention, so the trajectory is defined for all $t$ and every $\alpha\ge0$.

By Lemma~\ref{mfl:pc-lem:PC3}, every
$\nu_t$ is absolutely continuous and has finite second moment, so $\nu_t(\{0\})=0$; by Proposition~\ref{mfl:mf-prop:symmetry}(c),
every $\nu_t$ is $\mathcal G$-invariant. These are the standing hypotheses of Section~\ref{mfl:sec:tools} and of
Proposition~\ref{mfl:mf-prop:agop}.

\step{Step 1 (P).} Theorem~\ref{mfl:plat-thm:PB} assumes $r\ge2$, $d>r$, $\kappa>0$, $t_P\in\mathbb N$ and $0<\alpha\le\astar(t_P)$. It
gives $\max_{s,t\le t_P}L(t)/L(s)\le1.01$ and $L(t)\ge.99V$ for $t\le t_P$, which is (P).

\step{Step 2 ((F) for the $\alpha=0$ dynamics).} By Lemma~\ref{mfl:pc-lem:scaling}, the trajectory with head $\alpha$ and step
$\etab$ is the image under $w\mapsto\etab w$ of the trajectory $\tilde\nu_a(t)$ with head $a:=\alpha\etab$ and step $1$ from
$N(0,\kappa^2I_d)$; the two have the same whitening matrices, and their MF AGOPs differ by the factor $\etab^2$. So it suffices
to prove (F) for $\tilde\nu_a$.

For $a=0$, $\tilde\nu_0(t)$ is the trajectory of Definition~\ref{mfl:mf-def:mf}. Our hypotheses
$r\ge2$, $d\ge64r$, $\Dp\ge400$ and $\kappa\ge\kappa_{\min}$ are those of Theorem~\ref{mfl:mff-thm:MFF}, which gives, at
$t_F=\lfloor\tau_F\kappa\rfloor\ge2$,
\begin{equation}\label{mfl:read-l7-zero-head-gap}
\begin{gathered}
\mathcal M(\tilde\nu_0(t_F))=\lambda_1^0\Pone+\lambda_c^0\Pc+\lambda_\perp^0P_\perp,\\
\min(\lambda_1^0,\lambda_c^0)\ge(1+\delta)\,\lambda_\perp^0.
\end{gathered}
\end{equation}
Both teacher-space block eigenvalues are strictly positive. Indeed, Lemma~\ref{mfl:rb-lem:M} gives the first inequality below,
and Proposition~\ref{mfl:mf-prop:agop}(1) with Lemma~\ref{mfl:rb-lem:facts}(e) gives the second:
\[
\lambda_1^0\ge\frac{(\E[\omega_1(t_F)])^2}4\ge\frac{(1-1/r)\,t_F^2}4>0,
\qquad \lambda_c^0\ge\frac{\Gamma_U(t_F)^2}{2\pi}>0.
\]
Thus $\min(\lambda_1^0,\lambda_c^0)>0$.

\step{Step 3 (Transfer to $0<a\le a_0$).} For a symmetric $d\times d$ matrix $A$ put
\[\ell_1(A):=\eb^\top A\eb,\qquad\ell_c(A):=\frac{\tr(\Pc A)}{r-1},\qquad\ell_\perp(A):=\frac{\tr(P_\perp A)}{\Dp},\]
\begin{equation}\label{mfl:read-l7-gap-functional}
\mathsf g(A):=\min\big(\ell_1(A),\ell_c(A)\big)-\big(1+\tfrac\delta2\big)\,\ell_\perp(A).
\end{equation}
These functions are continuous, and on a matrix of the block form $\lambda_1\Pone+\lambda_c\Pc+\lambda_\perp P_\perp$ they return
$\lambda_1$, $\lambda_c$ and $\lambda_\perp$.
The gap \eqref{mfl:read-l7-zero-head-gap} and positivity from Step~2 give
\begin{equation}\label{mfl:read-l7-positive-gap}
\mathsf g\big(\mathcal M(\tilde\nu_0(t_F))\big)
\ge\min(\lambda_1^0,\lambda_c^0)\Big(1-\frac{1+\delta/2}{1+\delta}\Big)>0.
\end{equation}
Apply Proposition~\ref{mfl:pc-prop:limit}(a) in units $\etab=1$, with $\sigma_0=\kappa\sqrt d$ and $T=t_F$.
As $a\to0$,
\[
\mathcal M(\tilde\nu_a(t_F))\to\mathcal M(\tilde\nu_0(t_F)),
\]
and there is $a_T>0$ with $\Sigma_a(\tilde\nu_a(t))\succ0$ for all $t\le t_F$ and $a\le a_T$.
By continuity, the strict inequality \eqref{mfl:read-l7-positive-gap} persists: there is $a_0\in(0,a_T]$ such that
\begin{equation}\label{mfl:read-l7-transferred-gap}
\mathsf g(\mathcal M(\tilde\nu_a(t_F)))>0\qquad\text{for every }a\in(0,a_0].
\end{equation}
The
trajectories $\tilde\nu_a$ are determined by $(r,d,\kappa,a)$, and $\delta=\delta(r,\Dp)$, so $a_0$ can be chosen as a function
of $(r,d,\kappa)$ alone.

Fix $a=\alpha\etab\in(0,a_0]$. By Step~0 and Proposition~\ref{mfl:mf-prop:agop}(0), $\mathcal M(\tilde\nu_a(t_F))$ has the block form,
and its eigenvalues are $\lambda_j=\ell_j(\mathcal M(\tilde\nu_a(t_F)))$, $j\in\{1,c,\perp\}$.
Use \eqref{mfl:read-l7-transferred-gap} in the definition \eqref{mfl:read-l7-gap-functional}, together with
$\lambda_\perp\ge0$ ($\mathcal M$ is positive semidefinite), to obtain
\[\min(\lambda_1,\lambda_c)\ >\ \big(1+\tfrac\delta2\big)\lambda_\perp\ \ge\ \lambda_\perp .\]
By Proposition~\ref{mfl:mf-prop:agop}(0), the top-$r$ eigenspace of $\mathcal M(\tilde\nu_a(t_F))$ is $U$ and is unique, $\Ar(t_F)=1$
for every leading selector, and $\lambda_r=\min(\lambda_1,\lambda_c)\ge(1+\delta/2)\lambda_{r+1}$.

The MF AGOP of the original trajectory is
\[
\alpha^2\mathcal M(\nu_{t_F})=\alpha^2\etab^2\mathcal M(\tilde\nu_a(t_F)),
\]
which has the same eigenspaces and eigenvalue ratios. Also
$\Sigma_{\nu_t}=\Sigma_a(\tilde\nu_a(t))\succ0$ for $t\le t_F$. This is (F) at $t_F$.

\step{Step 4 ((F) at $t=0$).} $\nu_0$ is $O(d)$-invariant, so $\mathcal M(\nu_0)$ is a multiple of $I_d$ by
Proposition~\ref{mfl:mf-prop:agop}(4).

\step{Step 5 (R).} Corollary~\ref{mfl:loss-cor:MF} assumes $r\ge2$, $d\ge64r$, $\kappa\ge1$, $t_P\in\mathbb N$ and
$0<\alpha\le\astar(t_P)$. Our hypotheses include these: $\kappa\ge\kappa_{\min}\ge2/\tau_F>19$, because $\kappa_0\ge2/\tau_1$ in
region N (Lemma~\ref{mfl:mff-lem:floor}) and $\kappa_A=4/\tau_1$ in region A.
The corollary gives
\[
L(0)-L(t_2)\ge.03V,\qquad
t_2=\Big\lceil\frac{T_2^*\sqrt V}{\alpha\etab}\Big\rceil,\qquad T_2^*=.05086.
\]
At $\alpha=\astar(t_P)$ it also gives $t_2\le C_R(\kappa+t_P)+1$ with $C_R\le6.013$;
for $t_P\le.1007\kappa$ this is at most $6.62\kappa+1$.
\end{proof}

\begin{remark}[Finite width]\label{mfl:main-rem:width}
Under (F), $\Sigma_{\nu_t}\succ0$ for $t\le t_F$, so Proposition~\ref{mfl:pc-prop:limit}(b) applies with $T=t_F+1$: the width-$m$
trajectory at fixed $\etab$ converges to the MF one on $[0,t_F+1]$, and its top-$r$ AGOP space at $t_F$ converges to $U$ as
$m\to\infty$. Given $\nu_0$, the MF trajectory is deterministic, so (P), (F) and (R) hold surely.
\end{remark}

\section{Finite verification details}\label{mfl:sec:repro}
This section specifies the finite inequalities used above.

\subsection{The Region A endpoints}\label{mfl:mff-sec:R-endpoints}
To prove Lemma~\ref{mfl:mff-lem:R}(c), the following finite rational certificate applies to the right
endpoints $D_A(r)$ of Table~\ref{mfl:app-tab:regionN}. Write $H_r$ for the
left side of (R3) at $D=D_A(r)$. Each row gives lower bounds valid
for every integer rank in its first column:
\[
\begin{array}{c|r|c|c}
r & \text{lower bound on }D_A(r) & D_A(r)/M\ge & H_r^2\ge\\ \hline
2\!:\!10 &7917&125.18416249&1.00220175\\
11\!:\!20&2596&34.75697688&1.00224498\\
21\!:\!30&2429&32.00130160&1.00385938\\
31\!:\!40&2905&32.00055080&1.86188294\\
41\!:\!50&3363&32.00109878&2.83732405\\
51\!:\!60&3808&32.00078823&3.89645378\\
61\!:\!70&4243&32.00079990&5.01895909\\
71\!:\!80&4671&32.00051696&6.19274707
\end{array}
\]
In particular $D_A(r)\ge1000$, $D_A(r)/M\ge32$, and
$H_r^2\ge1.00220175>1.0011^2=1.00220121$, proving (R1)--(R3).
The entries follow from three rational enclosures, described below.

\paragraph{Logarithm enclosure.} For $0\le z\le1/3$, put
\[
S(z):=2\sum_{j=0}^{31}\frac{z^{2j+1}}{2j+1},\qquad
E(z):=\frac{2z^{65}}{65(1-z^2)}.
\]
The positive logarithm series and a geometric tail bound give
$S(z)\le\log((1+z)/(1-z))\le S(z)+E(z)$.
For rational $x\ge1$, write $x=2^k y$ with $1\le y<2$ and define
\begin{equation}\label{mfl:read-lb-log-enclosure}
\mathsf L(x):=k\bigl[S(1/3)+E(1/3)\bigr]
+S\left(\frac{y-1}{y+1}\right)+E\left(\frac{y-1}{y+1}\right).
\end{equation}
Thus $\log x\le\mathsf L(x)$, with a rational right side.

\paragraph{Square-root enclosure.} For rational $v\ge0$, choose the integer $q$ characterized by
\begin{equation}\label{mfl:read-lb-root-enclosure}
q^2\le10^{30}v<(q+1)^2,\qquad \mathsf S(v):=\frac{q+1}{10^{15}}.
\end{equation}
Then $\sqrt v<\mathsf S(v)$, certified by the two rational inequalities in \eqref{mfl:read-lb-root-enclosure}.

\paragraph{Endpoint comparison.} For each of the $79$ exact endpoints in the complete certificate table
(not the eight blockwise minima above), use \eqref{mfl:read-lb-log-enclosure} and \eqref{mfl:read-lb-root-enclosure} to set
\begin{equation}\label{mfl:read-lb-endpoint-radius}
\begin{aligned}
L_r^+&:=5+\mathsf L(D_A(r)/(r-1)),\\
M_r^+&:=r+4+2\mathsf S((r+4)L_r^+)+2L_r^+.
\end{aligned}
\end{equation}
With $\pi_-:=3.14159265358979323846264338327950288<\pi$,
the third and fourth columns of the table are lower bounds on
the following rational numbers, using \eqref{mfl:read-lb-endpoint-radius}, respectively:
\[
\frac{D_A(r)}{M_r^+},\qquad
\frac{(.06015)^2(r^2-1)(\pi_-/2)(1-1/r)(D_A(r)-1/2)}{(M_r^+)^2}.
\]
All table inequalities follow by integer cross-multiplication, with
the results rounded downward to eight decimal places.

The bound on $\pi$ can itself be checked from Machin's identity
$\pi=16\arctan(1/5)-4\arctan(1/239)$ using the first $64$ terms of
the alternating power series for each arctangent and its next-term
remainder bound. This proves (c) using only the printed endpoints
and finite rational inequalities. The numerical inequalities in
(b) follow from the same elementary bounds.

\section{Complete certificate data}\label{mfl:sec:app}
The exact dyadic input tables and Table~\ref{mfl:app-tab:regionN} together specify every parameter of the region-N certificate. The former determine $M,\theta,\tau_F,\Omega,A,C'$ and the deterministic cell partition; the latter give the integer $D$-interval, integer floor $\kappa_0$ and rational lower margin for each row. The six-decimal $\tau_F$ in the results table is a display only: its exact value is $T2^{-59}$ in the input table. No externally stored parameter values are needed.

The result row $(r,[D_{\rm lo},D_{\rm hi}])$ proves both eigenvalue ratios are at least its displayed margin at $t_F=\lfloor\tau_F\kappa\rfloor$, for every integer $D$ in that interval and every $\kappa\ge\kappa_0$. The margins retained here are conservative six-decimal lower bounds; the deterministic-cell verification can yield slightly stronger bounds.
For $2\le r\le80$, define $D_A(r)$ to be the last right endpoint for $r$ in this table. For $r\ge81$, define $D_A(r)=63r$. The region-A argument uses only the stated inequalities (R1)--(R3) at these endpoints and their upward closure, not the historical method used to select them.

\begin{table}[!htbp]
\caption{The 80 region-N certificates at $\alpha=0$: $D$-interval, $\tau_F$, floor $\kappa_0$ and certified margin
$\min(\lambda_1,\lambda_c)/\lambda_\perp$ at $t_F$.}\label{mfl:app-tab:regionN}
\centering\footnotesize
\setlength{\tabcolsep}{3.3pt}
\renewcommand{\arraystretch}{1.02}
\begin{tabular}{r c c r c c r c c r c}
\toprule
$r$ & $[D_{\rm lo},D_{\rm hi}]$ & $\tau_F$ & $\kappa_0$ & margin && $r$ & $[D_{\rm lo},D_{\rm hi}]$ & $\tau_F$ & $\kappa_0$ & margin\\
\midrule
2 & $[400,14930]$ & 0.013740 & 336 & 1.006223 && 41 & $[2583,3363]$ & 0.078835 & 126 & 1.228737\\
2 & $[14931,465350]$ & 0.014026 & 777 & 1.048716 && 42 & $[2646,3408]$ & 0.079339 & 125 & 1.230167\\
3 & $[400,119688]$ & 0.014191 & 286 & 1.012711 && 43 & $[2709,3453]$ & 0.079824 & 124 & 1.231501\\
4 & $[400,55355]$ & 0.017549 & 172 & 1.025338 && 44 & $[2772,3497]$ & 0.080300 & 124 & 1.232778\\
5 & $[400,32467]$ & 0.022016 & 136 & 1.029847 && 45 & $[2835,3542]$ & 0.080758 & 123 & 1.233975\\
6 & $[400,21679]$ & 0.025939 & 119 & 1.033781 && 46 & $[2898,3587]$ & 0.081197 & 122 & 1.235096\\
7 & $[441,15709]$ & 0.027378 & 110 & 1.039982 && 47 & $[2961,3631]$ & 0.081637 & 121 & 1.236213\\
8 & $[504,12040]$ & 0.030688 & 99 & 1.048280 && 48 & $[3024,3675]$ & 0.082058 & 121 & 1.237254\\
9 & $[567,9613]$ & 0.039394 & 125 & 1.057938 && 49 & $[3087,3720]$ & 0.080968 & 122 & 1.238515\\
10 & $[630,7917]$ & 0.039459 & 127 & 1.079513 && 50 & $[3150,3764]$ & 0.081371 & 122 & 1.239536\\
11 & $[693,6681]$ & 0.045293 & 120 & 1.089914 && 51 & $[3213,3808]$ & 0.081765 & 121 & 1.240522\\
12 & $[756,5748]$ & 0.046299 & 119 & 1.101188 && 52 & $[3276,3852]$ & 0.082149 & 121 & 1.241473\\
13 & $[819,5025]$ & 0.048771 & 113 & 1.109930 && 53 & $[3339,3896]$ & 0.082525 & 120 & 1.242394\\
14 & $[882,4452]$ & 0.051069 & 109 & 1.117599 && 54 & $[3402,3939]$ & 0.082900 & 119 & 1.243321\\
15 & $[945,3989]$ & 0.053212 & 104 & 1.124371 && 55 & $[3465,3983]$ & 0.083257 & 119 & 1.244175\\
16 & $[1008,3608]$ & 0.053451 & 104 & 1.130751 && 56 & $[3528,4027]$ & 0.083605 & 118 & 1.244999\\
17 & $[1071,3290]$ & 0.055296 & 101 & 1.136427 && 57 & $[3591,4070]$ & 0.083944 & 118 & 1.245786\\
18 & $[1134,3022]$ & 0.057027 & 98 & 1.141549 && 58 & $[3654,4113]$ & 0.084282 & 117 & 1.246585\\
19 & $[1197,2793]$ & 0.058666 & 96 & 1.146243 && 59 & $[3717,4157]$ & 0.084603 & 117 & 1.247310\\
20 & $[1260,2596]$ & 0.060199 & 93 & 1.150495 && 60 & $[3780,4200]$ & 0.084923 & 116 & 1.248040\\
21 & $[1323,2429]$ & 0.061591 & 91 & 1.154202 && 61 & $[3843,4243]$ & 0.085235 & 116 & 1.248739\\
22 & $[1386,2478]$ & 0.067373 & 140 & 1.164924 && 62 & $[3906,4286]$ & 0.085546 & 115 & 1.249446\\
23 & $[1449,2526]$ & 0.062740 & 141 & 1.162169 && 63 & $[3969,4329]$ & 0.085839 & 115 & 1.250079\\
24 & $[1512,2574]$ & 0.066769 & 140 & 1.176584 && 64 & $[4032,4372]$ & 0.086132 & 114 & 1.250719\\
25 & $[1575,2622]$ & 0.070623 & 139 & 1.179868 && 65 & $[4095,4415]$ & 0.086416 & 114 & 1.251325\\
26 & $[1638,2670]$ & 0.066155 & 140 & 1.182391 && 66 & $[4158,4458]$ & 0.086699 & 114 & 1.251938\\
27 & $[1701,2717]$ & 0.069717 & 139 & 1.193370 && 67 & $[4221,4500]$ & 0.086974 & 113 & 1.252516\\
28 & $[1764,2764]$ & 0.072271 & 138 & 1.188541 && 68 & $[4284,4543]$ & 0.087239 & 113 & 1.253060\\
29 & $[1827,2811]$ & 0.068865 & 139 & 1.197385 && 69 & $[4347,4586]$ & 0.087505 & 112 & 1.253610\\
30 & $[1890,2858]$ & 0.073818 & 135 & 1.200144 && 70 & $[4410,4628]$ & 0.086480 & 114 & 1.254297\\
31 & $[1953,2905]$ & 0.074541 & 134 & 1.204589 && 71 & $[4473,4671]$ & 0.086736 & 114 & 1.254845\\
32 & $[2016,2951]$ & 0.075237 & 133 & 1.208340 && 72 & $[4536,4713]$ & 0.086992 & 113 & 1.255396\\
33 & $[2079,2997]$ & 0.075905 & 131 & 1.211545 && 73 & $[4599,4755]$ & 0.087239 & 113 & 1.255911\\
34 & $[2142,3044]$ & 0.074843 & 133 & 1.214607 && 74 & $[4662,4797]$ & 0.087487 & 112 & 1.256433\\
35 & $[2205,3090]$ & 0.075475 & 132 & 1.217328 && 75 & $[4725,4840]$ & 0.087725 & 112 & 1.256919\\
36 & $[2268,3135]$ & 0.076079 & 131 & 1.219727 && 76 & $[4788,4882]$ & 0.087963 & 112 & 1.257411\\
37 & $[2331,3181]$ & 0.076674 & 130 & 1.221887 && 77 & $[4851,4924]$ & 0.088192 & 111 & 1.257865\\
38 & $[2394,3227]$ & 0.077242 & 129 & 1.223829 && 78 & $[4914,4966]$ & 0.088421 & 111 & 1.258326\\
39 & $[2457,3272]$ & 0.077791 & 128 & 1.225601 && 79 & $[4977,5008]$ & 0.088640 & 111 & 1.258751\\
40 & $[2520,3317]$ & 0.078322 & 127 & 1.227230 && 80 & $[5040,5050]$ & 0.088860 & 111 & 1.259180\\
\bottomrule
\end{tabular}
\end{table}

\FloatBarrier
\paragraph{Exact encoding.} For each row let $M=K2^{-20}$, $\tau_1=\tau_F=T2^{-59}$, $\Omega=O2^{-50}$, $A=B2^{-51}$, and $C'=C2^{-52}$. The index $h$ defines $\theta$ by
\[
\begin{aligned}
\theta(3)&=5404319552844595\,2^{-54},\\
\theta(35)&=3152519739159347\,2^{-53},\\
\theta(4)&=3602879701896397\,2^{-53}.
\end{aligned}
\]
The cell count is $n=32000\cdot2^j$. Rows $2a,2b$ refer to the two $r=2$ intervals in the results table; all other rows are indexed by $r$. Every integer below is exact.

\begin{table}[!htbp]
\caption{Exact partition inputs for region N (rows 1--40).}\label{mfl:app-tab:inputs-partition-1}
\centering\footnotesize
\setlength{\tabcolsep}{3pt}
\begin{tabular}{r r r r r}\toprule
row & $K$ & $h$ & $j$ & $T$\\\midrule
2a & 21600665 & 35 & 0 & 7920963464610295\\
2b & 36059480 & 3 & 1 & 8085720989539134\\
3 & 29042409 & 35 & 0 & 8180717210882235\\
4 & 35333865 & 4 & 0 & 10116431691206438\\
5 & 34521219 & 4 & 0 & 12691884803174992\\
6 & 35458646 & 4 & 0 & 14952843575107892\\
7 & 38181797 & 4 & 0 & 15782428537537256\\
8 & 39344668 & 4 & 0 & 17690797072963136\\
9 & 35233202 & 35 & 0 & 22709285047772204\\
10 & 39985348 & 35 & 0 & 22746804483491268\\
11 & 41193308 & 35 & 0 & 26110149771855272\\
12 & 44290801 & 35 & 0 & 26689626722048196\\
13 & 45557481 & 35 & 0 & 28114570042194732\\
14 & 46829404 & 35 & 0 & 29439767329931008\\
15 & 48102375 & 35 & 0 & 30674718207391344\\
16 & 51380224 & 35 & 0 & 30812462728338840\\
17 & 52688846 & 35 & 0 & 31876420407381584\\
18 & 53996421 & 35 & 0 & 32873880731484156\\
19 & 55301898 & 35 & 0 & 33819093133848028\\
20 & 56606326 & 35 & 0 & 34702557992338884\\
21 & 57912852 & 35 & 0 & 35505276062688100\\
22 & 55029268 & 3 & 0 & 38838108865442336\\
23 & 58798899 & 3 & 0 & 36167659532130680\\
24 & 60316188 & 3 & 0 & 38489789387184296\\
25 & 61824040 & 3 & 0 & 40711645453042408\\
26 & 65633517 & 3 & 0 & 38136192341073864\\
27 & 67151855 & 3 & 0 & 40189166235655360\\
28 & 68661805 & 3 & 0 & 41661607666473440\\
29 & 72502738 & 3 & 0 & 39698352425382656\\
30 & 71661780 & 3 & 0 & 42553516633528128\\
31 & 73151807 & 3 & 0 & 42970444493867288\\
32 & 74635542 & 3 & 0 & 43371539650649280\\
33 & 76111937 & 3 & 0 & 43756802103874080\\
34 & 80035708 & 3 & 0 & 43144604232996312\\
35 & 81522589 & 3 & 0 & 43508756414811544\\
36 & 83004227 & 3 & 0 & 43857075893069584\\
37 & 84480622 & 3 & 0 & 44200117803475232\\
38 & 85952823 & 3 & 0 & 44527327010323696\\
39 & 87418732 & 3 & 0 & 44843981081467368\\
40 & 88880447 & 3 & 0 & 45150080016906256\\
\bottomrule\end{tabular}
\end{table}
\begin{table}[!htbp]
\caption{Exact partition inputs for region N (rows 41--80).}\label{mfl:app-tab:inputs-partition-41}
\centering\footnotesize
\setlength{\tabcolsep}{3pt}
\begin{tabular}{r r r r r}\toprule
row & $K$ & $h$ & $j$ & $T$\\\midrule
41 & 90337968 & 3 & 0 & 45445623816640352\\
42 & 91790245 & 3 & 0 & 45735890048522056\\
43 & 93238329 & 3 & 0 & 46015601144698960\\
44 & 94680121 & 3 & 0 & 46290034673023472\\
45 & 96119816 & 3 & 0 & 46553913065643208\\
46 & 97556365 & 3 & 0 & 46807236322558152\\
47 & 98987671 & 3 & 0 & 47060559579473088\\
48 & 100414783 & 3 & 0 & 47303327700683232\\
49 & 104529395 & 3 & 0 & 46675297126248288\\
50 & 105968041 & 3 & 0 & 46907510111753648\\
51 & 107402493 & 3 & 0 & 47134445529406608\\
52 & 108832751 & 3 & 0 & 47356103379207184\\
53 & 110260912 & 3 & 0 & 47572483661155360\\
54 & 111684878 & 3 & 0 & 47788863943103536\\
55 & 113105698 & 3 & 0 & 47994689089346928\\
56 & 114524422 & 3 & 0 & 48195236667737920\\
57 & 115937902 & 3 & 0 & 48390506678276520\\
58 & 117350334 & 3 & 0 & 48585776688815120\\
59 & 118759620 & 3 & 0 & 48770491563648928\\
60 & 120164712 & 3 & 0 & 48955206438482736\\
61 & 121567707 & 3 & 0 & 49134643745464144\\
62 & 122968604 & 3 & 0 & 49314081052445568\\
63 & 124366356 & 3 & 0 & 49482963223722192\\
64 & 125762011 & 3 & 0 & 49651845394998824\\
65 & 127154520 & 3 & 0 & 49815449998423056\\
66 & 128544931 & 3 & 0 & 49979054601847280\\
67 & 129932197 & 3 & 0 & 50137381637419120\\
68 & 131317366 & 3 & 0 & 50290431105138560\\
69 & 132700438 & 3 & 0 & 50443480572858008\\
70 & 137042591 & 3 & 0 & 49852392973389808\\
71 & 138436149 & 3 & 0 & 50000164873256856\\
72 & 139826561 & 3 & 0 & 50147936773123904\\
73 & 141213827 & 3 & 0 & 50290431105138560\\
74 & 142600044 & 3 & 0 & 50432925437153216\\
75 & 143984164 & 3 & 0 & 50570142201315480\\
76 & 145366188 & 3 & 0 & 50707358965477736\\
77 & 146745065 & 3 & 0 & 50839298161787600\\
78 & 148122894 & 3 & 0 & 50971237358097464\\
79 & 149498626 & 3 & 0 & 51097898986554928\\
80 & 150872260 & 3 & 0 & 51224560615012400\\
\bottomrule\end{tabular}
\end{table}
\begin{table}[!htbp]
\caption{Exact supersolution inputs for region N (rows 1--40).}\label{mfl:app-tab:inputs-supersolution-1}
\centering\footnotesize
\setlength{\tabcolsep}{3pt}
\begin{tabular}{r r r r}\toprule
row & $O$ & $B$ & $C$\\\midrule
2a & 5292723980082718 & 32428288108055624 & 5608012396498452\\
2b & 6759749330873526 & 47234666444723024 & 4567102900135213\\
3 & 6150004491384073 & 27557958287837056 & 5960833993768631\\
4 & 6813178771523468 & 24649586303920624 & 5919624845377477\\
5 & 6805068924342156 & 19562686959486248 & 5979297203968494\\
6 & 6961059657208525 & 17116075937116040 & 6087232431130972\\
7 & 7244570957128253 & 15814519020401872 & 6084414177483384\\
8 & 7397342834379401 & 14265978139366600 & 5982018411495578\\
9 & 7165248123289302 & 11994103524559812 & 6108689384936562\\
10 & 7624330648560800 & 12188339050011954 & 6107855295699327\\
11 & 7816992396520535 & 11584092604917802 & 6047903646975958\\
12 & 8114724398401507 & 11408304546243588 & 6026939544538555\\
13 & 8265067162849759 & 10878388211123606 & 5983459685048755\\
14 & 8412778197264684 & 10428066953059118 & 5946617038309694\\
15 & 8557675136672041 & 10039938774911952 & 5915045146480910\\
16 & 8842706773114639 & 10037296873231022 & 5908968387516019\\
17 & 8982767256514036 & 9724656117379244 & 5883696393520844\\
18 & 9120312371788476 & 9447363298365294 & 5861474521720000\\
19 & 9255597473357322 & 9199936591472062 & 5841814212876274\\
20 & 9388473355124492 & 8977254943204165 & 5824337885408719\\
21 & 9518708692360278 & 8775033938432237 & 5808753889297893\\
22 & 9475197000037700 & 8488557454026792 & 6097795241830543\\
23 & 9713465021536114 & 8525516794517812 & 6100380809618886\\
24 & 9901178661896920 & 8482673800498109 & 6089131885914375\\
25 & 10084705286757180 & 8443389019409702 & 6079829751118036\\
26 & 10310197713544460 & 8480045370510340 & 6082520858566669\\
27 & 10486821438934066 & 8443337268778820 & 6073497625162175\\
28 & 10646271907328426 & 8386871754573909 & 6065406584806843\\
29 & 10874580845956476 & 8445379180561290 & 6068856206736862\\
30 & 10904259663039498 & 8197305009682678 & 6047007800350284\\
31 & 11030272289731130 & 8110909709240749 & 6038778496457271\\
32 & 11154461939110860 & 8029491184468583 & 6031111719018059\\
33 & 11276785963160284 & 7952426465393715 & 6023943385245597\\
34 & 11535056041251706 & 8077705013149340 & 6029545993731846\\
35 & 11653987293091250 & 8004452736735269 & 6022722784745882\\
36 & 11771298843906270 & 7934791483279786 & 6016304083349640\\
37 & 11887327152694930 & 7868826098726293 & 6010267694267958\\
38 & 12001848803664564 & 7805896294084367 & 6004573184671171\\
39 & 12114905764571344 & 7745765307089613 & 5999189004889925\\
40 & 12226654541612578 & 7688369272956011 & 5994099997093083\\
\bottomrule\end{tabular}
\end{table}
\begin{table}[!htbp]
\caption{Exact supersolution inputs for region N (rows 41--80).}\label{mfl:app-tab:inputs-supersolution-41}
\centering\footnotesize
\setlength{\tabcolsep}{3pt}
\begin{tabular}{r r r r}\toprule
row & $O$ & $B$ & $C$\\\midrule
41 & 12337113433222418 & 7633481250065179 & 5989283289725050\\
42 & 12446397115720632 & 7581022616188417 & 5984719452569983\\
43 & 12554428182284166 & 7530683027299399 & 5980389152653382\\
44 & 12661257956522734 & 7482341945772299 & 5976273220784853\\
45 & 12766998498356294 & 7435959038899650 & 5972366854639561\\
46 & 12871597276411164 & 7391309963979724 & 5968652097336570\\
47 & 12975276608248382 & 7348516067938118 & 5965118537347517\\
48 & 13077781077239216 & 7307133485710754 & 5961749265244648\\
49 & 13310520257288940 & 7416909742886614 & 5965930815396461\\
50 & 13410953891133806 & 7376288651918253 & 5962574449157443\\
51 & 13510434200087326 & 7337061796677928 & 5959366216036499\\
52 & 13608974335688814 & 7299138910873558 & 5956297449397509\\
53 & 13706707044802002 & 7262568106318256 & 5953365894217178\\
54 & 13803700236105944 & 7227330813318137 & 5950562068819066\\
55 & 13899670972862910 & 7193028672532432 & 5947873578315527\\
56 & 13994864905308404 & 7159850747766006 & 5945300518889238\\
57 & 14089114945790088 & 7127546338099018 & 5942829274854179\\
58 & 14182846904304308 & 7096492258904211 & 5940467714917040\\
59 & 14275590210831866 & 7066127608059542 & 5938198062975710\\
60 & 14367659848247368 & 7036721999401950 & 5936020736832794\\
61 & 14458998169125878 & 7008154097169425 & 5933931755850891\\
62 & 14549797373180440 & 6980562010179316 & 5931930015212254\\
63 & 14639638500452052 & 6953473216162533 & 5930002652690999\\
64 & 14728956487954490 & 6927273706497871 & 5928154447869270\\
65 & 14817516618845280 & 6901684965020106 & 5926376715333673\\
66 & 14905570296063234 & 6876907518315706 & 5924670920206436\\
67 & 14992881188422112 & 6852671138826332 & 5923029152365483\\
68 & 15079510739209174 & 6828996676976786 & 5921450034687528\\
69 & 15165655963302758 & 6806033357063836 & 5919933626041060\\
70 & 15376809956722276 & 6898256573472885 & 5922142225437706\\
71 & 15461648514619670 & 6875227199849992 & 5920568180005206\\
72 & 15545977330830428 & 6852809223160655 & 5919051220385149\\
73 & 15629607382902882 & 6830796914691879 & 5917586011992899\\
74 & 15712847109331618 & 6809441371700104 & 5916176202553731\\
75 & 15795451327347682 & 6788490936913212 & 5914815157153569\\
76 & 15877623668380466 & 6768102119292371 & 5913503719228300\\
77 & 15959118060443858 & 6748030252218483 & 5912235937372244\\
78 & 16040243809047156 & 6728523450096594 & 5911015169338573\\
79 & 16120752682242078 & 6709340075482683 & 5909835739195862\\
80 & 16200851077071876 & 6690638106199919 & 5908698653676628\\
\bottomrule\end{tabular}
\end{table}
\FloatBarrier

\endgroup

\end{document}